\renewcommand{\thesubsection}{\thesection.\arabic{subsection}}
\renewcommand{\thesubsubsection}{\thesubsection.\Alph{subsubsection}}
\titleformat{\section}[hang]{\normalfont\bfseries}{\thesection}{.5em}{}
\titlelabel{\subsubsection}{\empty}
\titleformat{\subsection}{\normalfont\bfseries}{\thesubsection.}{.5em}{}[]
\titleformat{\subsubsection}[runin]{\normalfont}{\thesubsubsection.}{0em}{}[]
\renewcommand{\abstract}[1]{{\gdef\thepoabstract{#1}}}
\declaretheoremstyle[spaceabove=1em,spacebelow=1em,headfont=\footnotesize\bfseries,bodyfont=\footnotesize]{problemstyle}
\renewcommand\maketitle
\ifdefined\@title{\noindent\Large\bfseries\centering \@title \par}\else\fi
\ifdefined\@author{\centering\normalfont \@author \par}
\ifdefined\thepoaffiliation{\noindent\small\thepoaffiliation\par}\fi\vspace{5em}\else\fi
\ifdefined\thepoabstract{\small\noindent{{\bfseries Abstract}.\;\;}\thepoabstract\par\vspace{3.5em}}\else\fi
\ifdefined\thepokeywords{{\noindent\bfseries Keywords\;\;}\thepokeywords\par\vspace{5em}}\else\fi
\ifdefined\theporuntitle{\fancyhead[L]{\footnotesize\theporuntitle}}\fi
\definecolor{RefColor}{rgb}{0,0,.85}
\definecolor{UrlColor}{rgb}{.5,.5,.5}%
\setlist[itemize]{leftmargin=1.5em}
\long\def\graybox#1{%
    \newbox\contentbox%
    \newbox\bkgdbox%
    \setbox\contentbox\hbox to \hsize{%
        \vtop{
            \kern\columnsep
            \hbox to \hsize{%
                \kern\columnsep%
                \advance\hsize by -2\columnsep%
                \setlength{\textwidth}{\hsize}%
                \vbox{
                    \parskip=\baselineskip
                    \parindent=0bp
                    #1
                }%
                \kern\columnsep%
            }%
            \kern\columnsep%
        }%
    }%
    \setbox\bkgdbox\vbox{
        \pdfliteral{0.95 0.95 0.95 rg}
        \hrule width  \wd\contentbox %
               height \ht\contentbox %
               depth  \dp\contentbox
        \pdfliteral{0 0 0 rg}
    }%
    \wd\bkgdbox=0bp%
    \vbox{\hbox to \hsize{\box\bkgdbox\box\contentbox}}%
    \vskip\baselineskip%
}
\definecolor{CommentColor}{rgb}{0,0,.50}
\newcounter{amargincounter}
\definecolor{CommentColor2}{rgb}{0.5,0,0}
\theoremstyle{plain}
\declaretheoremstyle[postheadspace=.4em,headfont=\bfseries,bodyfont=\itshape,spaceabove=8pt,
spacebelow=10pt]{basic}
\theoremstyle{basic}
\declaretheorem[style=basic,name={Theorem}]{theorem}
\declaretheorem[style=basic,sibling=theorem,name={Fact}]{fact}
\declaretheorem[style=basic,sibling=theorem,name={Lemma}]{lemma}
\declaretheorem[style=basic,sibling=theorem,name={Proposition}]{proposition}
\declaretheorem[style=basic,sibling=theorem,name={Corollary}]{corollary}
\theoremstyle{definition}
\newtheorem{definition}[theorem]{Definition}
\newtheorem{example}[theorem]{Example}
\declaretheorem[style=definition,name={Remark},sibling=theorem]{remark}
\declaretheoremstyle[postheadspace=1em,
  mdframed={backgroundcolor=gray!10!white,
    hidealllines=true,
    innertopmargin=4pt,
    innerbottommargin=4pt,
    innerleftmargin=7pt,
    skipabove=8pt,
    skipbelow=10pt,
  nobreak=false}
]{grayboxed}
\declaretheorem[style=plain,qed=$\triangleleft$]{auxtheorem}
\declaretheorem[style=grayboxed,sibling=auxtheorem]{algorithm}
\DeclareMathOperator{\tsum}{{\textstyle\sum}}
\DeclareMathOperator{\msum}{\medmath\sum}
\newcommand{\mint}{\medint\int}
\newcommand{\argdot}{{\,\vcenter{\hbox{\tiny$\bullet$}}\,}}
\newcommand{\mean}{\mathbb{E}}
\newcommand{\equdist}{\stackrel{\text{\rm\tiny d}}{=}}
\newcommand{\braces}[1]{{\lbrace #1 \rbrace}}
\newcommand{\trans}{\sf{T}}
\DeclareRobustCommand{\svdots}{
  \vbox{%
    \baselineskip=0.33333\normalbaselineskip
    \lineskiplimit=0pt
    \hbox{.}\hbox{.}\hbox{.}%
    \kern-0.2\baselineskip
  }%
}
\tikzstyle{mybraces}=[mirrorbrace/.style={
\DeclareMathOperator*{\medcup}{\raisebox{-.15em}{$\mathbin{\scalebox{1.5}{\ensuremath{\cup}}}$}}
\DeclareMathOperator*{\medcap}{\raisebox{-.15em}{$\mathbin{\scalebox{1.5}{\ensuremath{\cap}}}$}}
\newcommand{\step}{\refstepcounter{proofstep}$\theproofstep^\circ$\ }
\newcounter{proofstep}
\begin{document}

\newcommand{\kword}[1]{{\bf #1}\index{#1}}
\newcommand{\mkword}[2]{{\bf #1}\index{keywords}{#2}}
\newcommand{\xspace}{\mathbf{X}}
\newcommand{\yspace}{\mathbf{Y}}
\newcommand{\range}{\mathbf{T}}
\newcommand{\group}{\mathbb{G}}
\newcommand{\dTV}{d_{\text{\rm\tiny TV}}}
\renewcommand{\sp}[1]{\left<\mkern2mu\smash{#1}\mkern2mu\right>}
\newcommand{\todo}{[\textcolor{red}{todo}]}
\newcommand{\vol}{\text{\rm vol}}
\newcommand{\F}{\Pi}
\newcommand{\cF}{\F}
\newcommand{\intF}{\F^{\circ}}
\newcommand{\partF}{\tilde{\F}}
\newcommand{\face}{\textsc{Face}}
\newcommand{\n}{\mathbf{N}}
\newcommand{\Id}{\text{\rm id}}
\renewcommand{\Id}{\mathbf{1}}
\newcommand{\Fix}{\textsc{Fix}}
\newcommand{\C}{\mathbf{C}}
\renewcommand{\L}{\mathbf{L}}
\renewcommand{\H}{\mathbf{H}}
\newcommand{\gnorm}[1]{{\left\vert\kern-0.25ex\left\vert\kern-0.25ex\left\vert #1 \right\vert\kern-0.25ex\right\vert\kern-0.25ex\right\vert}}
\newcommand{\Dim}{\text{\rm Dim\,}}

\newcommand{\mysetminusD}{\hbox{\tikz{\draw[line width=0.6pt,line cap=round] (3pt,0) -- (0,6pt);}}}
\newcommand{\mysetminusT}{\mysetminusD}
\newcommand{\mysetminusS}{\hbox{\tikz{\draw[line width=0.45pt,line cap=round] (2pt,0) -- (0,4pt);}}}
\newcommand{\mysetminusSS}{\hbox{\tikz{\draw[line width=0.4pt,line cap=round] (1.5pt,0) -- (0,3pt);}}}
\newcommand{\mysetminus}{\mathbin{\mathchoice{\mysetminusD}{\mysetminusT}{\mysetminusS}{\mysetminusSS}}}
\renewcommand{\setminus}{\mysetminus}
\newcommand{\Stab}{\text{\rm Stab}}
\newcommand{\dpath}{d_{\text{\rm path}}}
\newcommand{\ef}{e}

\title{Representing and Learning \\ Functions Invariant Under Crystallographic Groups}
\author{Ryan P Adams and Peter Orbanz}

\begin{abstract}
  {
    Crystallographic groups describe the symmetries of
    crystals and other repetitive structures encountered in nature and the sciences.
    These groups include the wallpaper and space groups.
    We derive linear and nonlinear representations of functions that are (1) smooth and
    (2) invariant under such a group.
    The linear representation generalizes the Fourier basis to crystallographically invariant
    basis functions. We show that such a basis exists for each crystallographic group,
    that it is orthonormal in the relevant $\L_2$ space, and recover the standard Fourier
    basis as a special case for pure shift groups. The nonlinear representation
    embeds the orbit space of the group into a finite-dimensional Euclidean space.
    We show that such an embedding exists for every crystallographic group, and that it factors functions
    through a generalization of a manifold called an orbifold.
    We describe algorithms that, given a standardized description of the group,
    compute the Fourier basis and an embedding map. As examples,
    we construct crystallographically invariant neural networks, kernel machines,
    and Gaussian processes.
  }
\end{abstract}

\maketitle

\section{Introduction}

Among the many forms of symmetry observed in nature, those that arise from repetitive spatial patterns
are particularly important. These are described by sets of transformations
of Euclidean space called crystallographic groups \citep{Vinberg:Shvarsman:1993,Thurston:1997}.
For example, consider a problem in materials science, where atoms are arranged in a crystal lattice.
The symmetries of the lattice are then characterized by a crystallographic group $\group$.
Symmetry means that, if we apply one of the transformations in~$\group$
to move the lattice---say to rotate or shift it---the transformed lattice is indistinguishable from the
untransformed one.
In such a lattice, the Coulomb potential acting on any single electron due to a collection of fixed nuclei
does not change under any of the transformations in $\group$ \citep{bell1954group,lax2001symmetry,johnston1960group}.
If we think of the potential field as a function
on $\mathbb{R}^3$, this is an example of a $\group$-invariant function, i.e., a function whose values do
not change if its arguments are transformed by elements of the group.
When solving the resulting Schr\"odinger equation for single particle states, members of the group commute with the Hamiltonian, and quantum observables are again $\group$-invariant \citep{bell1954group,killingbeck1970group,johnston1960group,slater1965space,lax2001symmetry,dresselhaus2007group}.
A different example are ornamental tilings on the walls of the Alhambra, which, when regarded
as functions on $\mathbb{R}^2$, are invariant under two-dimensional crystallographic groups
\citep{Speiser:1956}.
The purpose of this work is to construct smooth invariant functions for any given crystallographic
group in any dimension.

For finite groups, invariant functions can be constructed easily by summing over all group elements;
for compact infinite groups, the sum can be replaced by an integral. This and related ideas have
received considerable attention in machine learning
\citep[e.g.,][]{kondor2008group,cohen2016group,Bloem-Reddy:Teh:2020}.
Such summations are not possible for crystallographic groups, which are neither finite nor compact,
but their specific algebraic and geometric properties
allow us to approach the problem in a different manner.
We postpone a detailed literature review to \cref{sec:related:work},
and use the remainder of this section to give a non-technical sketch of our results.

\subsection{A non-technical overview}

This section sketches our results in a purely heuristic way; proper
definitions follow in \cref{sec:definitions}.
\\[.5em]
{\noindent\bf Crystallographic symmetry}.
Crystallographic groups are groups that tile a Euclidean space $\mathbb{R}^n$ with a convex shape.
Suppose we place a convex polytope $\Pi$ in the space $\mathbb{R}^n$, say a square or a rectangle in the plane.
Now make a copy of $\Pi$, and use a transformation~${\phi:\mathbb{R}^2\rightarrow\mathbb{R}^2}$ to
move this copy to another location.
We require that $\phi$ is an isometry, which means it may shift, rotate or flip $\Pi$, but does not
change its shape or size.
Here are some examples, where the original shape $\Pi$ is marked in red:
\vspace{-.5em}
\begin{center}
  \begin{tikzpicture}
    \node at (0,0) {\includegraphics[width=2.5cm]{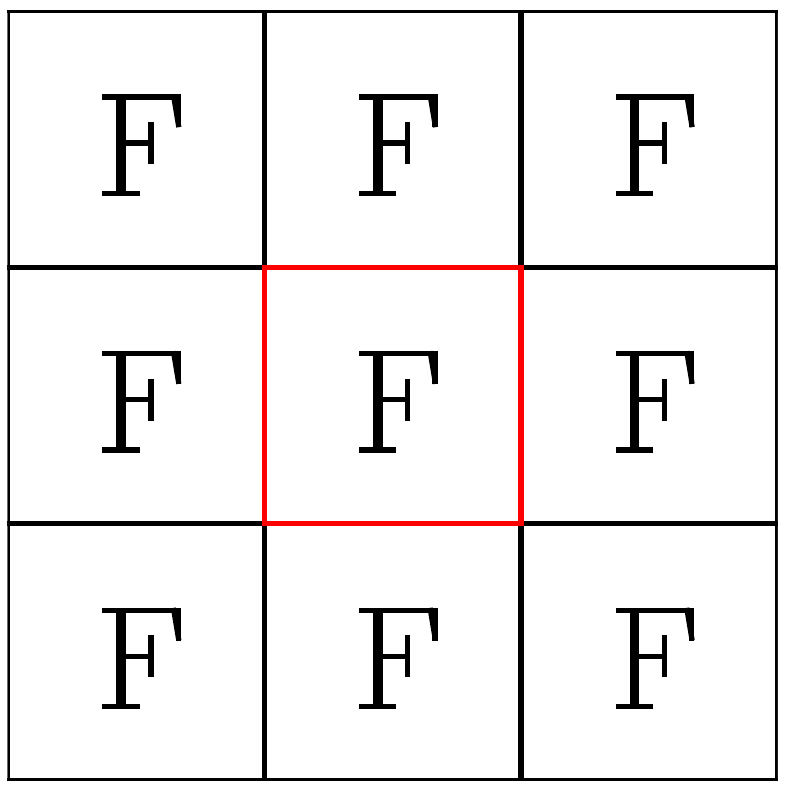}};
    \node at (4.5,0) {\includegraphics[width=2.5cm]{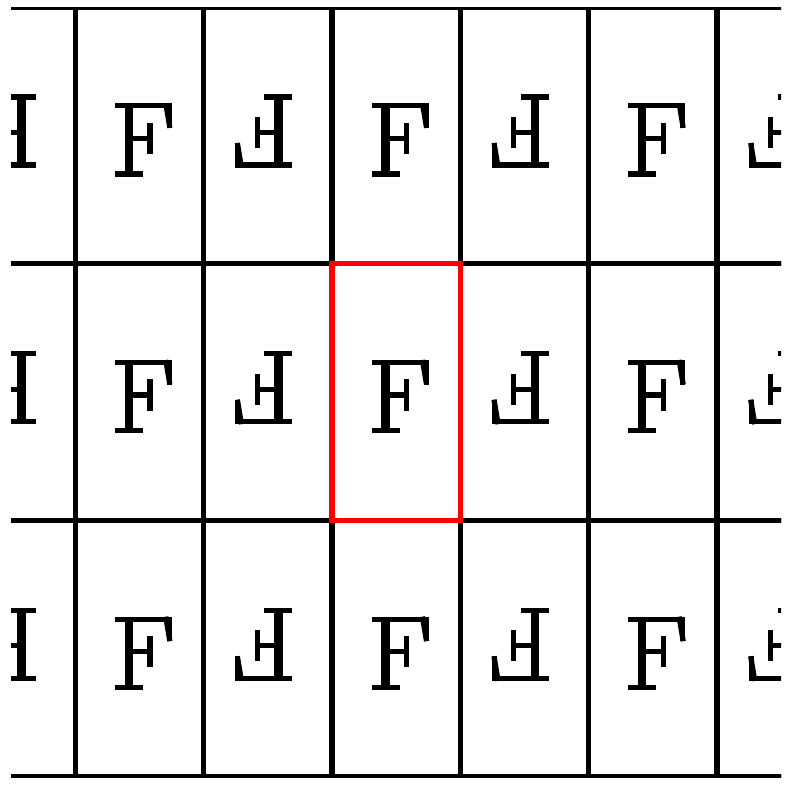}};
    \node at (9,0) {\includegraphics[width=2.5cm]{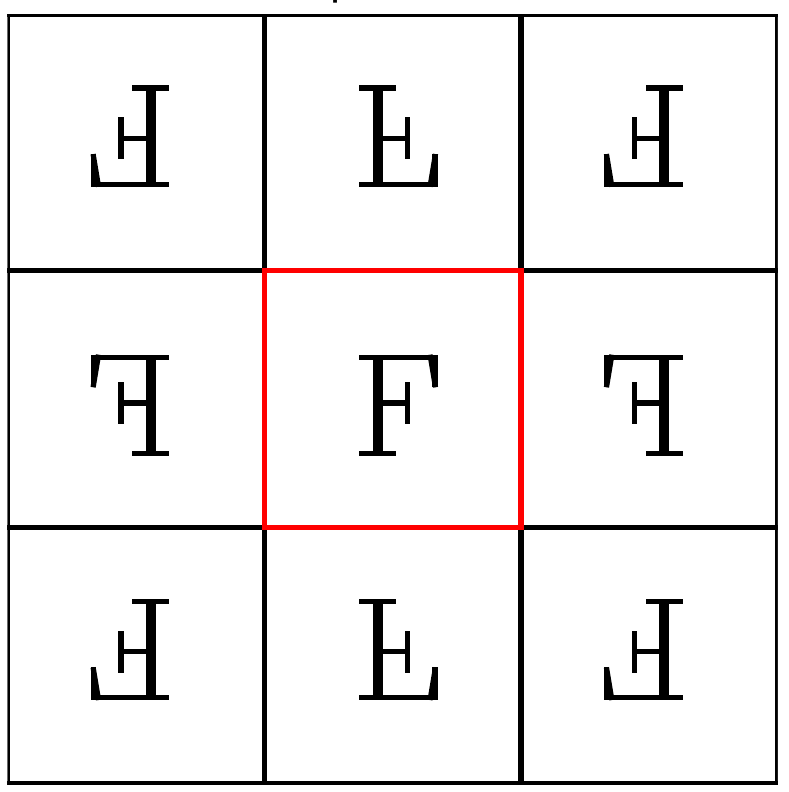}};
    \node at (0,-1.65) {\scriptsize \texttt{p1}};
    \node at (4.5,-1.65) {\scriptsize \texttt{p2}};
    \node at (9,-1.65) {\scriptsize \texttt{p2mm}};
  \end{tikzpicture}
\end{center}
\vspace{-1em}
The descriptors \texttt{p1}, \texttt{p2}, and \texttt{p2mm} follow the naming standard for groups
developed by crystallographers \citep{Hahn:2011}, and the symbol~``F'' is inscribed
to clarify which transformations are used.
The transformations in these examples are horizontal and vertical shifts (in~\texttt{p1}), rotations around the
corners of the rectangle in~(\texttt{p2}), and reflections about its edges~(\texttt{p2mm}).
Suppose we repeat one of these processes indefinitely so that the copies of $\Pi$ cover the entire plane
and overlap only on their boundaries. That requires a countably infinite number of
transformations, one per copy. Collect these into a set $\group$.
If this set forms a group, this group is called crystallographic.
Such groups describe all possible symmetries of crystals, and have been thoroughly studied in
crystallography. For each dimension $n$, there is---up to a natural notion of isomorphy that we explain in \cref{sec:definitions}---only
a finite number of crystallographic groups: Two on $\mathbb{R}$, 17 on $\mathbb{R}^2$,
230 on $\mathbb{R}^3$, and so forth.
Those on $\mathbb{R}^2$ are also known as wallpaper groups, and those on $\mathbb{R}^3$ as space groups.
\\[.5em]
{\noindent\bf The objects of interest}.
A function $f$ is invariant under  $\group$ if it satisfies
\begin{equation*}
  f(\phi x)\;=\;f(x)\qquad\text{ for all }\phi\in\group\text{ and all }x\in\mathbb{R}^n\;.
\end{equation*}
A simple way to construct such a function is to start with a tiling as above, define a function
on $\Pi$, and then replicate it on every copy of $\Pi$. Here are two examples on $\mathbb{R}^2$,
corresponding to (ii) and (iii) above, and an example on $\mathbb{R}^3$:
\vspace{-1em}
\begin{center}
  \begin{tikzpicture}
    \node at (0,0) {\includegraphics[width=3cm]{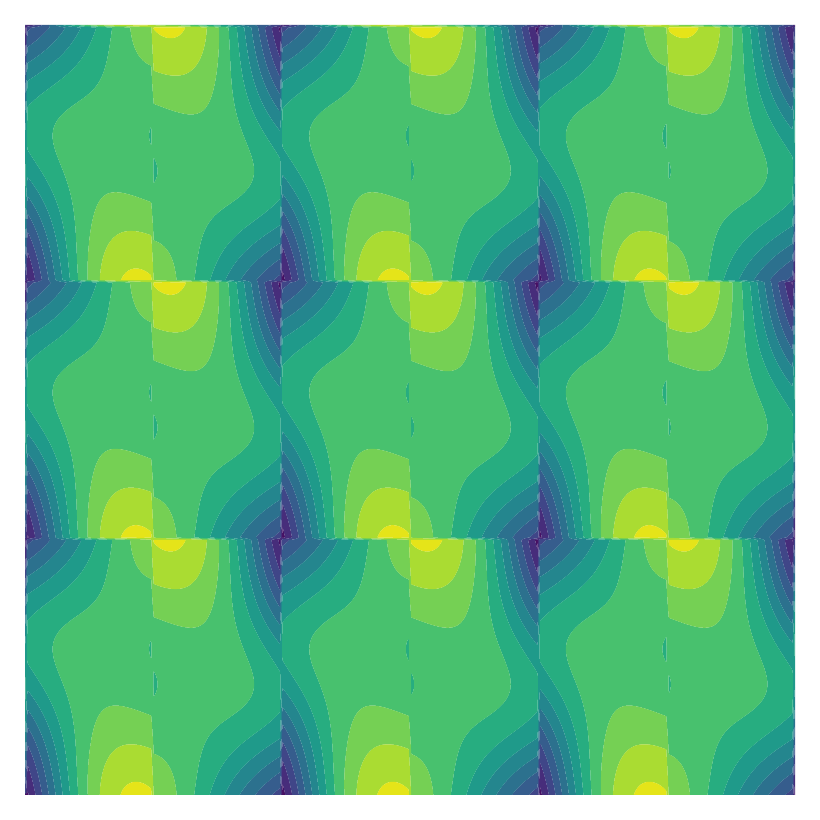}};
    \node at (5,0) {\includegraphics[width=3cm]{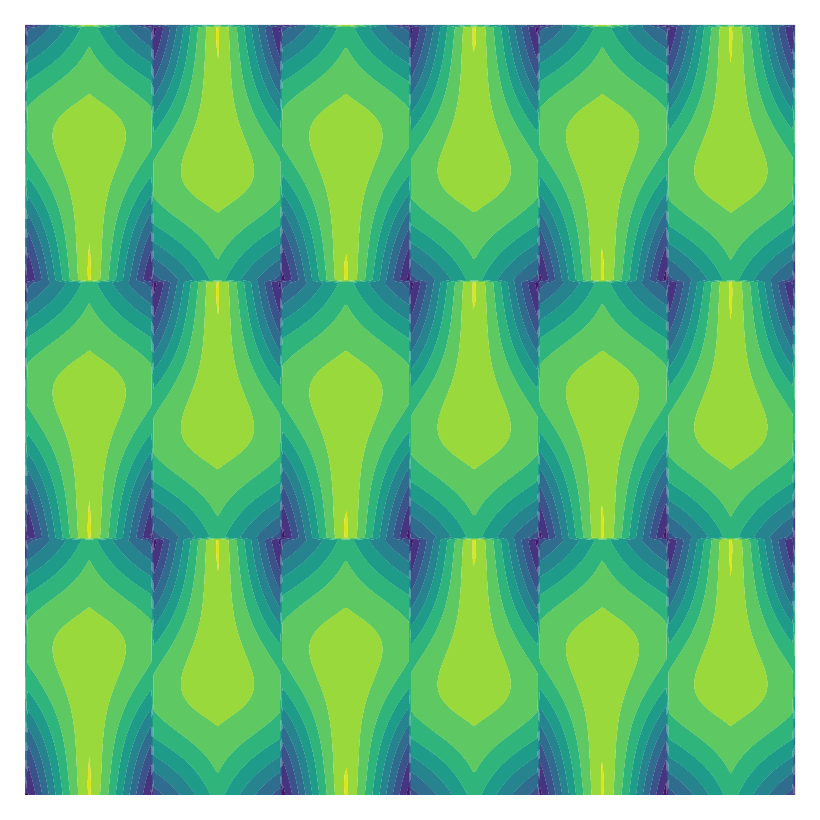}};
    \node at (10,0) {\includegraphics[width=3.5cm]{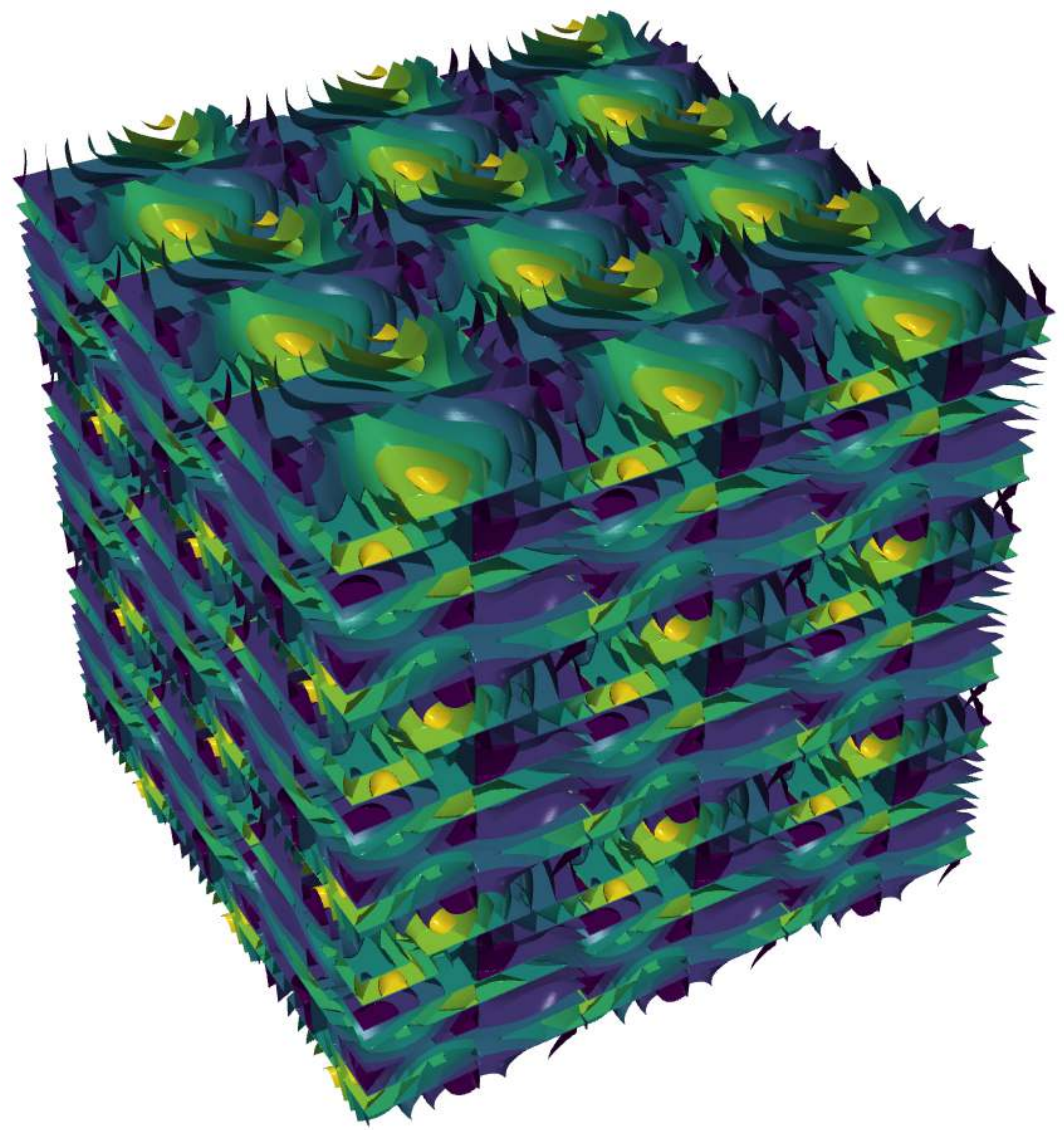}};
    \node at (0,-2) {\scriptsize \texttt{p2}};
    \node at (5,-2) {\scriptsize \texttt{p2mm}};
    \node at (11.5,-2) {\scriptsize \texttt{I4\textsubscript{1}22}};
  \end{tikzpicture}
\end{center}
\vspace{-1em}
However, as the examples illustrate,
functions obtained this way are typically not continuous.
Our goal is to construct smooth invariant functions, such as these:
\vspace{-1em}
\begin{center}
  \begin{tikzpicture}
    \node at (0,0) {\includegraphics[width=3cm]{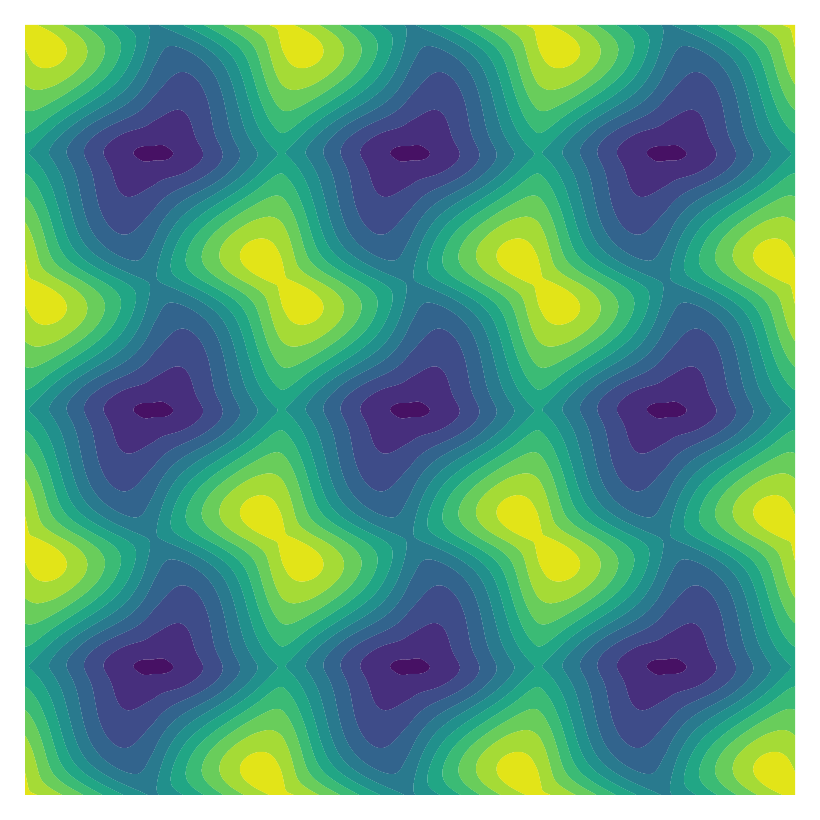}};
    \node at (5,0) {\includegraphics[width=3cm]{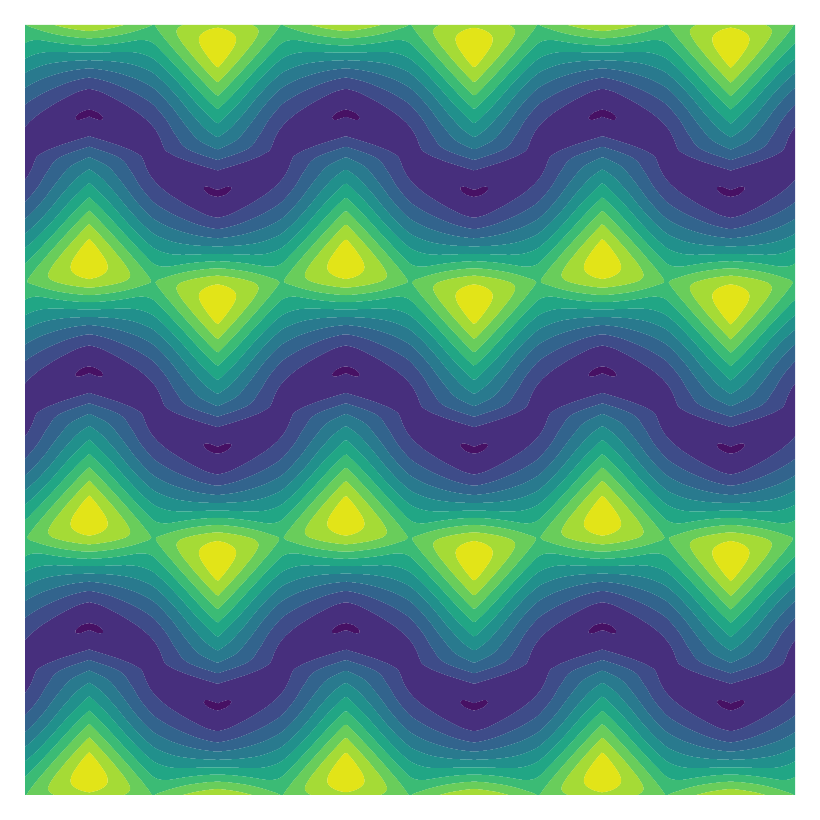}};
    \node at (10,0) {\includegraphics[width=3.5cm]{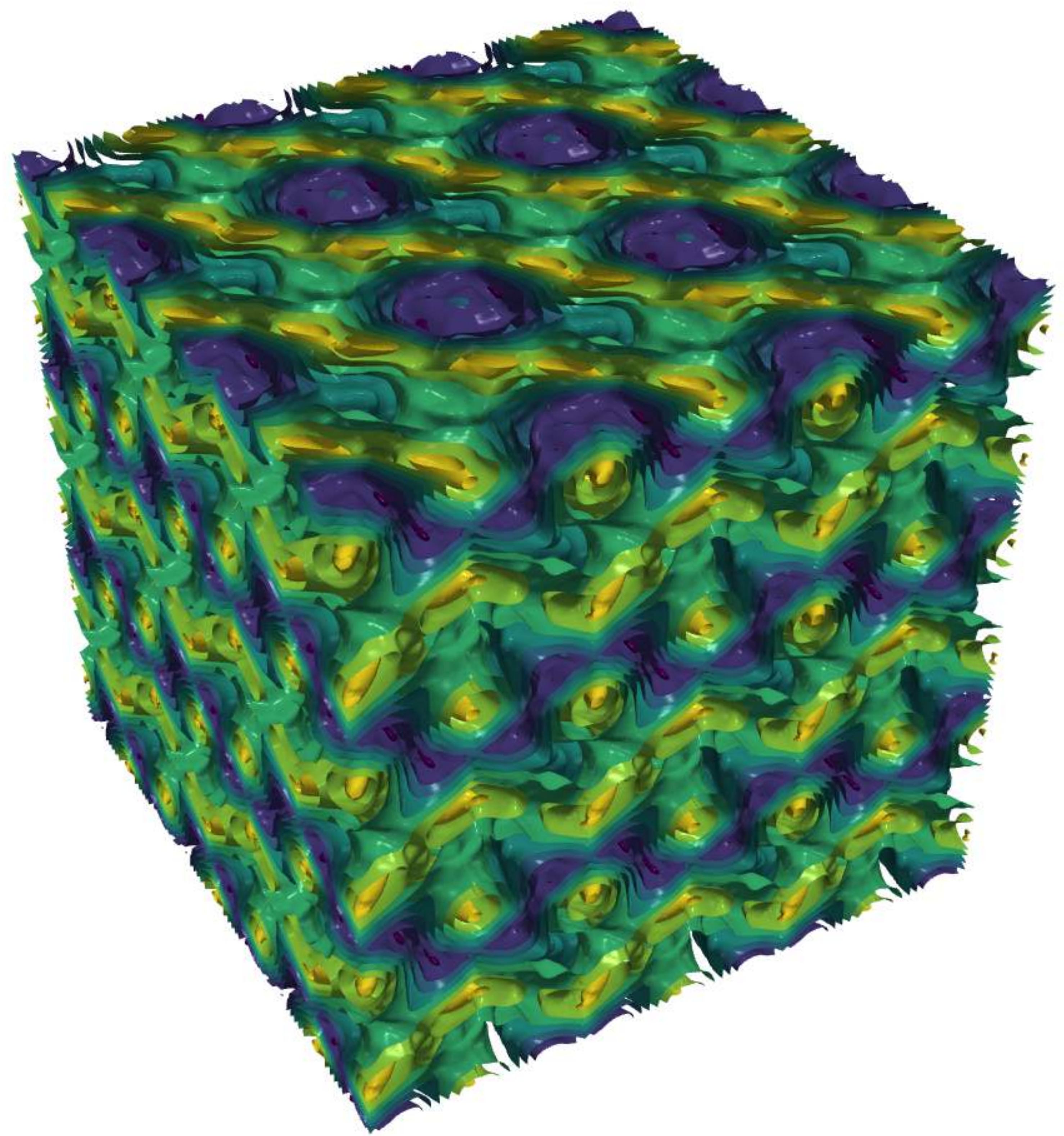}};
    \node at (0,-2) {\scriptsize \texttt{p2}};
    \node at (5,-2) {\scriptsize \texttt{p2mm}};
    \node at (11.5,-2) {\scriptsize \texttt{I4\textsubscript{1}22}};
  \end{tikzpicture}
\end{center}
\vspace{-1em}
We identify two representations of such functions, one linear and one nonlinear.
Working with either representation algorithmically requires a data structure representing the invariance
constraint. We construct such a structure, which we call an orbit graph,
in \cref{sec:orbit:graphs}. This graph is constructed from a description of the group
(which can be encoded as a finite set of matrices) and of $\Pi$ (a finite set of vectors).
\\[.5em]
{\noindent\bf Linear representations: Invariant Fourier transforms}.
We are primarily interested in two and three dimensions, but a one-dimensional example
is a good place to start: In one dimension, a convex polytope is always an interval, say ${\Pi=[0,1]}$. If we choose
$\group$ as the group $\mathbb{Z}$ of all shifts of integer length, it tiles the line $\mathbb{R}$ with $\Pi$.
In this case, an invariant function is simply a periodic function with period $1$. Smooth periodic
functions can be represented as a Fourier series,
\begin{equation*}
  f(x)\;=\;\msum_{i=0}^{\infty}a_i\cos\bigl(\tfrac{ix}{2\pi}\bigr)+b_i\sin\bigl(\tfrac{ix}{2\pi}\bigr)
\end{equation*}
for sequences of scalar coefficients $a_i$ and $b_i$.
Note each sine and cosine on the right is $\group$-invariant and infinitely often differentiable.
Now suppose we abstract from the specific form of these sines and cosines, and only
regard them as~$\group$-invariant functions that are very smooth.
The series representation above then has the general form
\begin{equation*}
  f(x)\;=\;\msum_{i=0}^{\infty}c_i\ef_i(x)\;,
\end{equation*}
where the $\ef_i$ are smooth, $\group$-invariant functions that depend only on $\group$ and $\Pi$,
and the~$c_i$ are scalar coefficients that depend on $f$. (In the Fourier series, $\ef_i$ is
a cosine for odd and a sine for even indices.) In \cref{sec:linear}, we obtain generalizations of
this representation to crystallographically invariant functions. To do so,
we observe that the Fourier basis can be derived as the set of eigenfunctions of the Laplace operator:
The sine and cosine functions above are precisely those functions ${\ef:\mathbb{R}\rightarrow\mathbb{R}}$ that solve
\begin{align*}
  -\Delta \ef&\;=\;\lambda\ef \\
  \text{ subject to }\quad\qquad\ef&\text{ is periodic with period }1
\end{align*}
for some ${\lambda\geq 0}$. (The negative sign is chosen to make the eigenvalues $\lambda$ non-negative.)
The periodicity constraint is equivalent to saying that $\ef$ is invariant
under the shift group~${\group=\mathbb{Z}}$. The corresponding problem for a general crystallographic group~$\group$ on~$\mathbb{R}^n$ is hence
\begin{equation}
  \label{eigenproblem:intro}
  \begin{split}
   -\Delta \ef\;&=\;\lambda\ef \\
  \text{ subject to }\quad\qquad\ef\;&=\;\ef\circ\phi \text{ for all }\phi\in\group\;.
  \end{split}
\end{equation}
\cref{theorem:spectral} shows that this problem has solutions for any dimension $n$, convex polytope~${\Pi\subset\mathbb{R}^n}$, and crystallographic
group $\group$ that tiles $\mathbb{R}^n$ with $\Pi$. As in the Fourier case, the solution functions ${\ef_1,\ef_2,\ldots}$ are very smooth.

If we choose ${\Pi\subset\mathbb{R}^2}$ as the square ${[0,1]^2}$ and~$\group$ as the group $\mathbb{Z}^2$ of discrete horizontal and vertical shifts---that is, the two-dimensional analogue of the example above---we recover the
two-dimensional Fourier transform. The function $\ef_0$ is constant; the functions ${\ef_1,\ldots,\ef_5}$ are shown in
\cref{fig:fourier:p1}.
If the group also contains other transformations, the basis looks less familiar.
These are the basis functions ${\ef_1,\ldots,\ef_5}$ for a group (\texttt{p3}) containing shifts and rotations of order three:
\begin{center}
  \includegraphics[width=.18\textwidth]{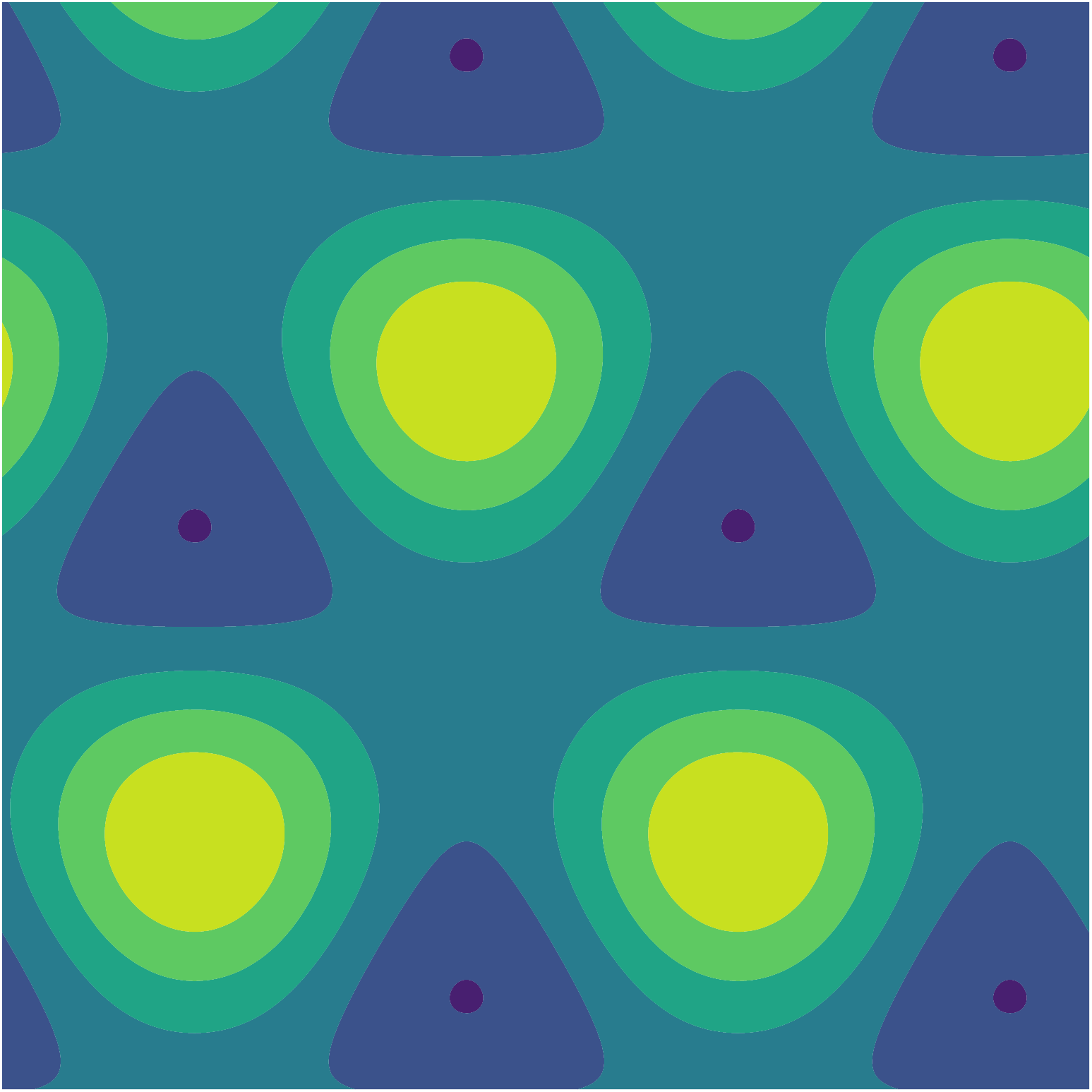}
  \hfill
  \includegraphics[width=.18\textwidth]{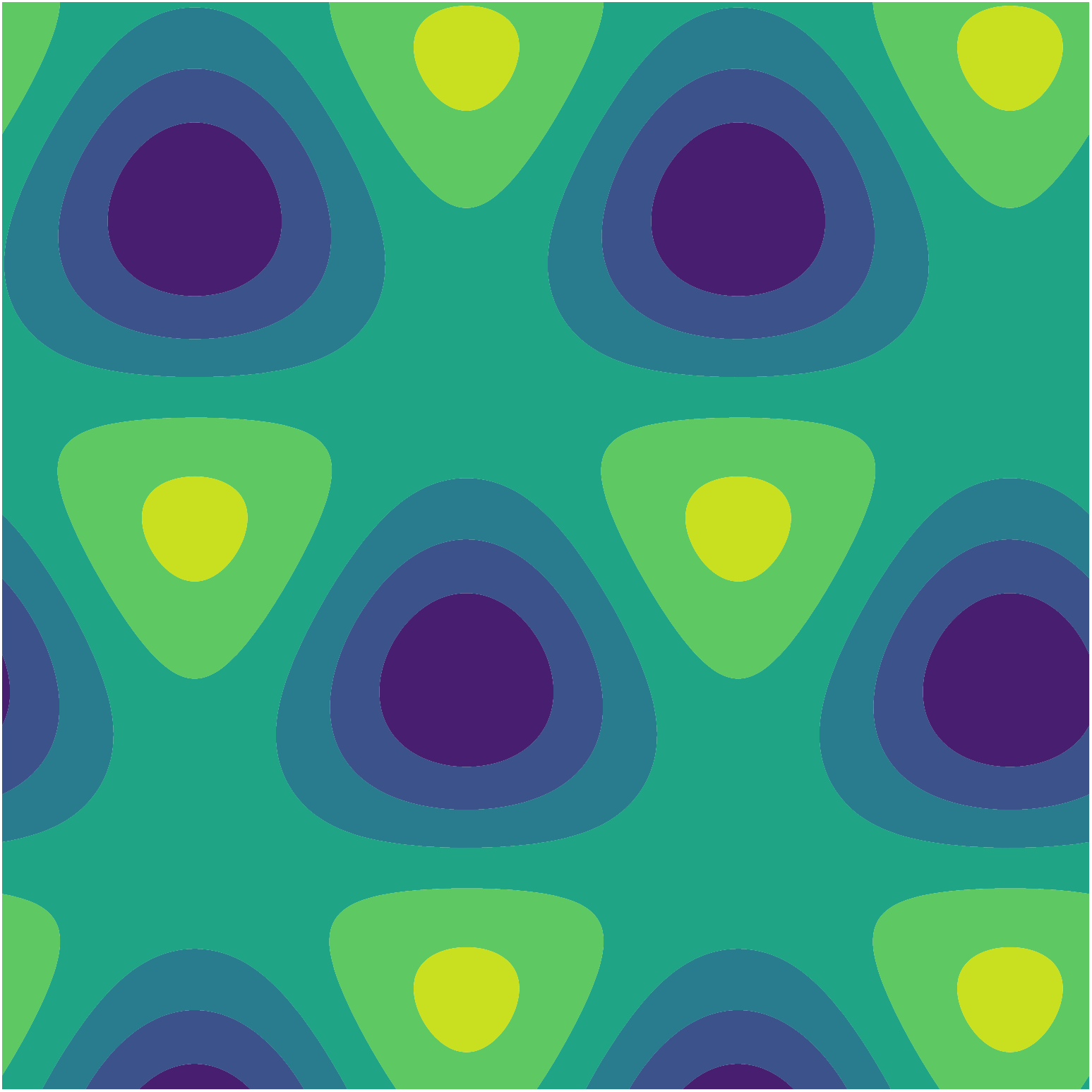}
  \hfill
  \includegraphics[width=.18\textwidth]{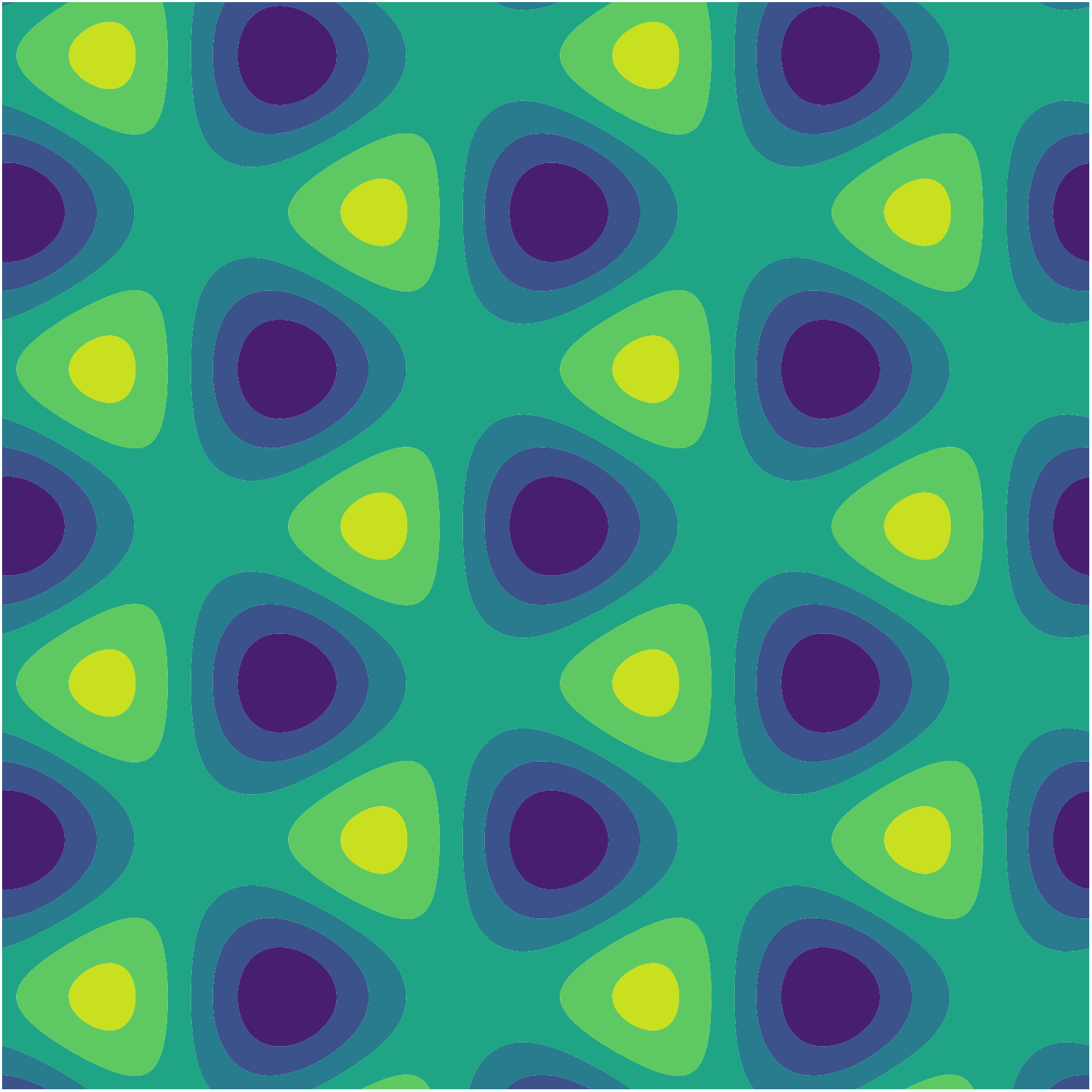}
  \hfill
  \includegraphics[width=.18\textwidth]{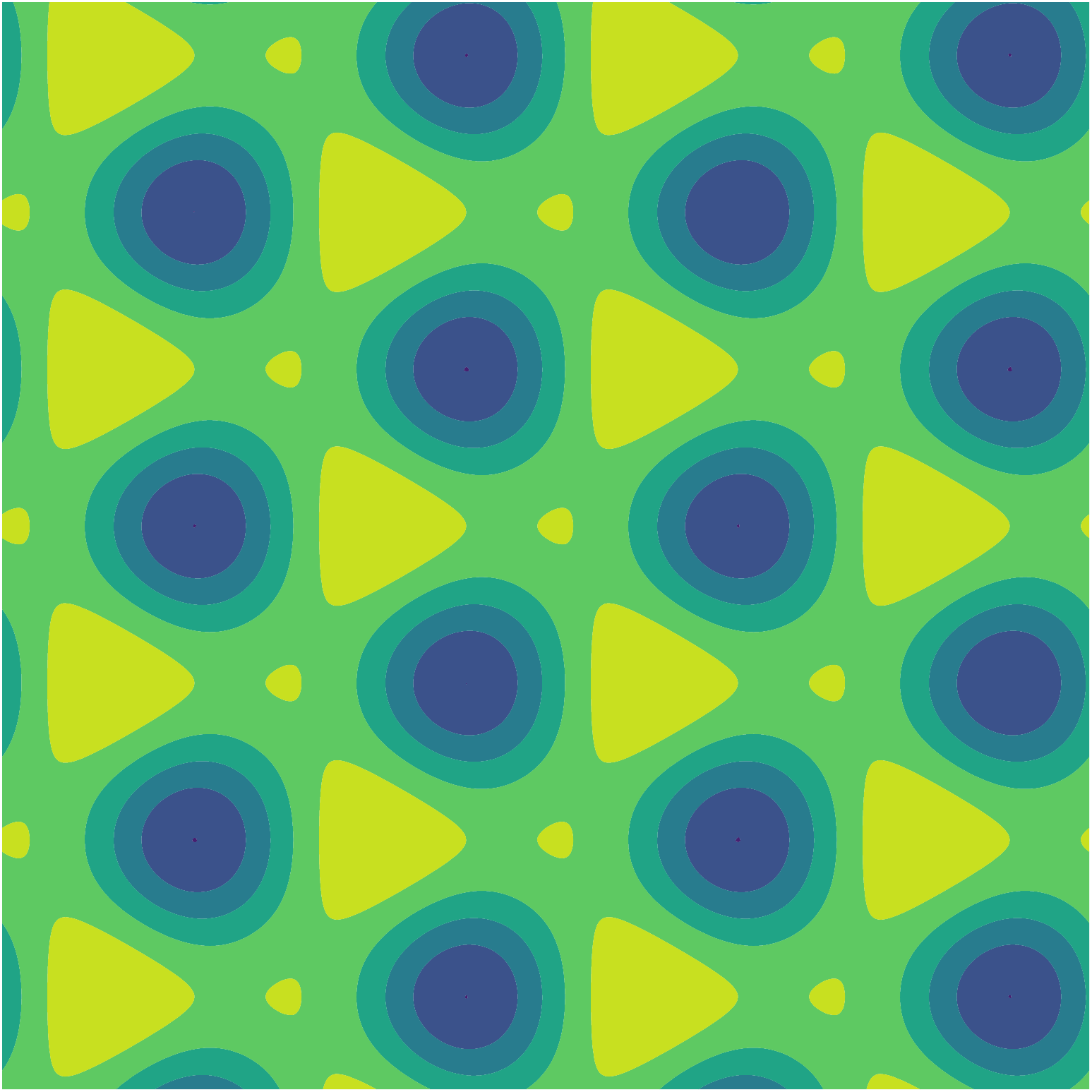}
  \hfill
  \includegraphics[width=.18\textwidth]{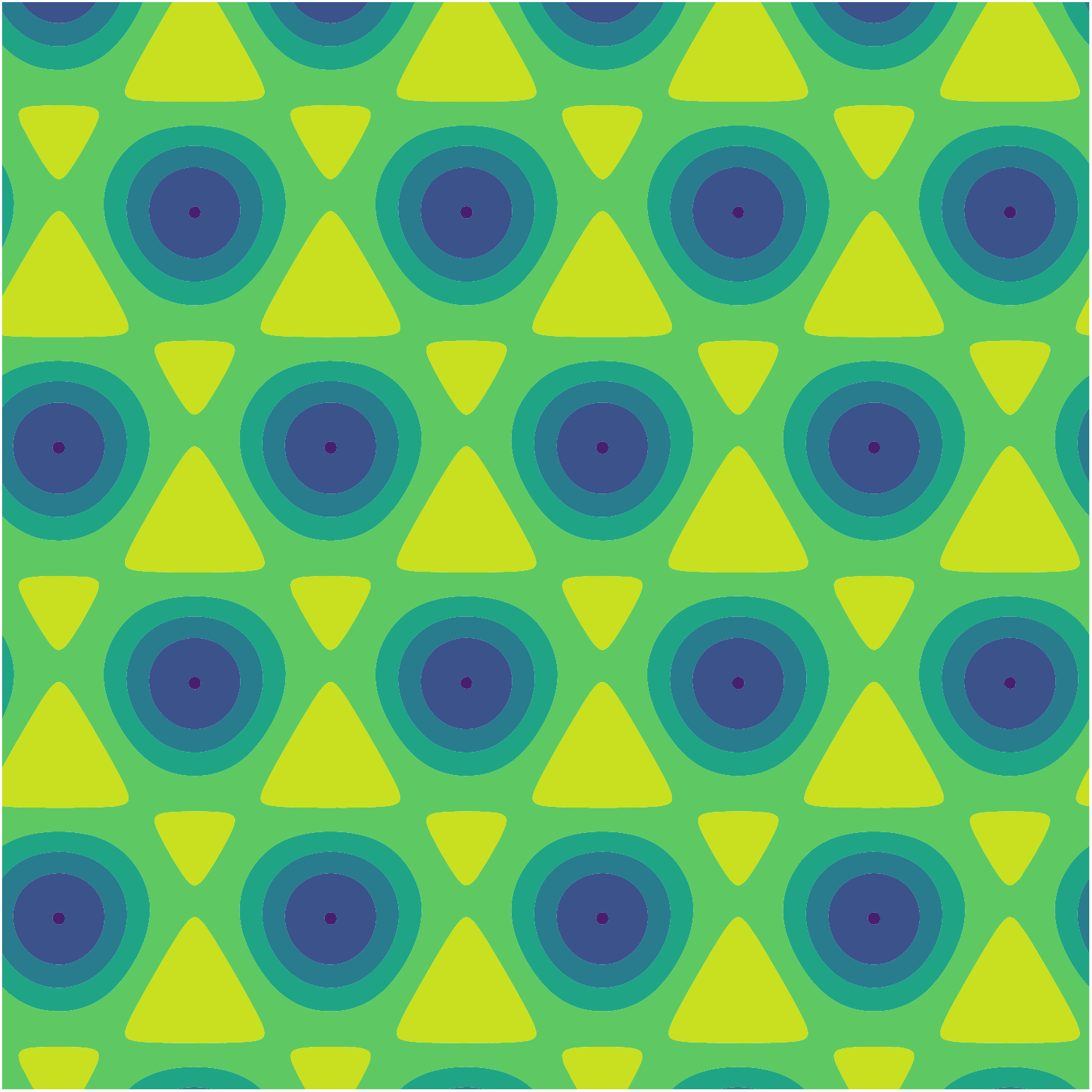}
\end{center}
The same idea applies in any finite dimension $n$. For ${n=3}$, the $\ef_i$ can be visualized
as contour plots. For instance, the first five non-constant basis elements for a specific three-dimensional
group, designated \texttt{I4\textsubscript{1}} by crystallographers, look like this:
\begin{center}
  \includegraphics[width=0.18\textwidth]{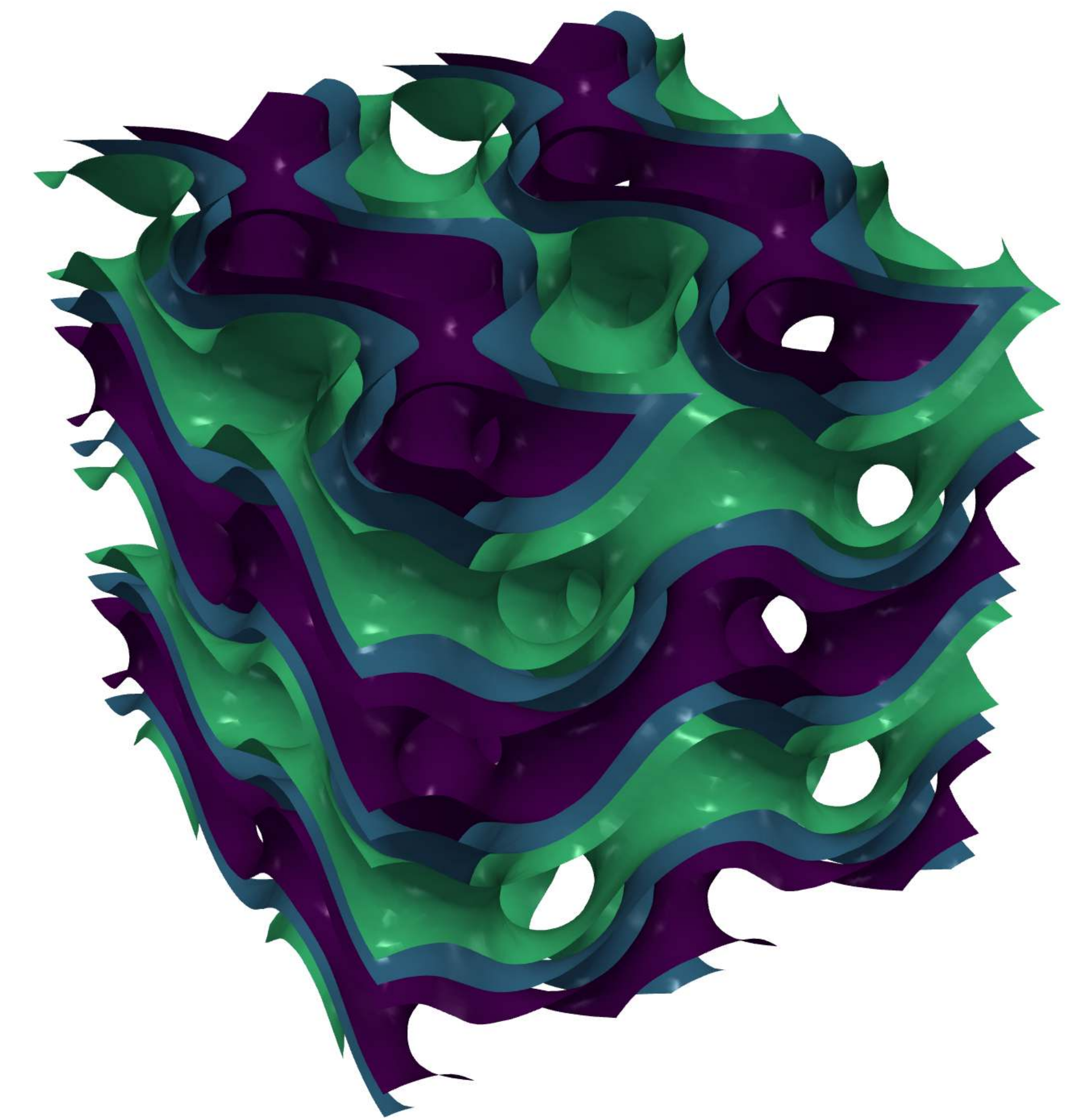}
  \hfill
  \includegraphics[width=0.18\textwidth]{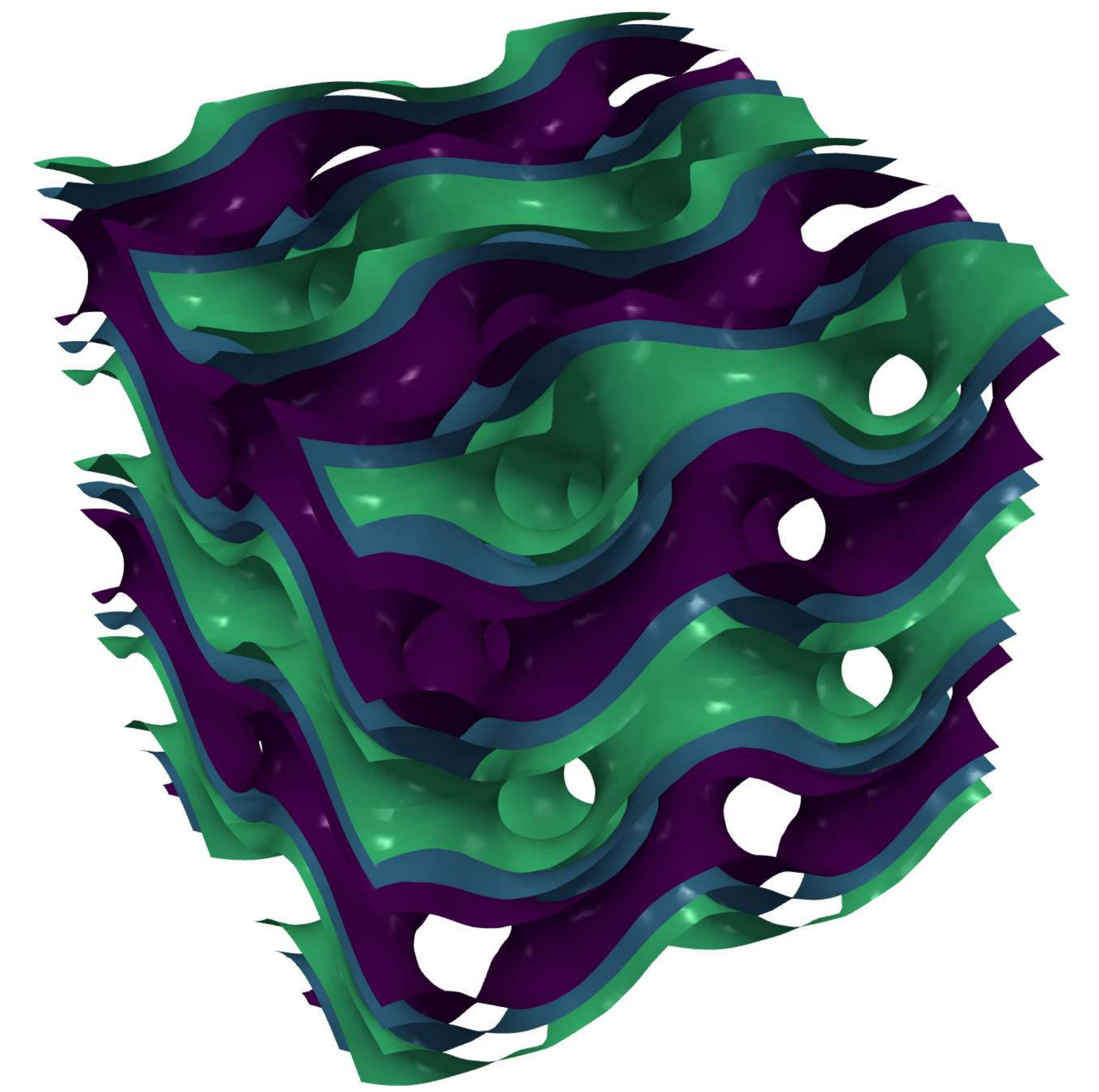}
  \hfill
  \includegraphics[width=0.18\textwidth]{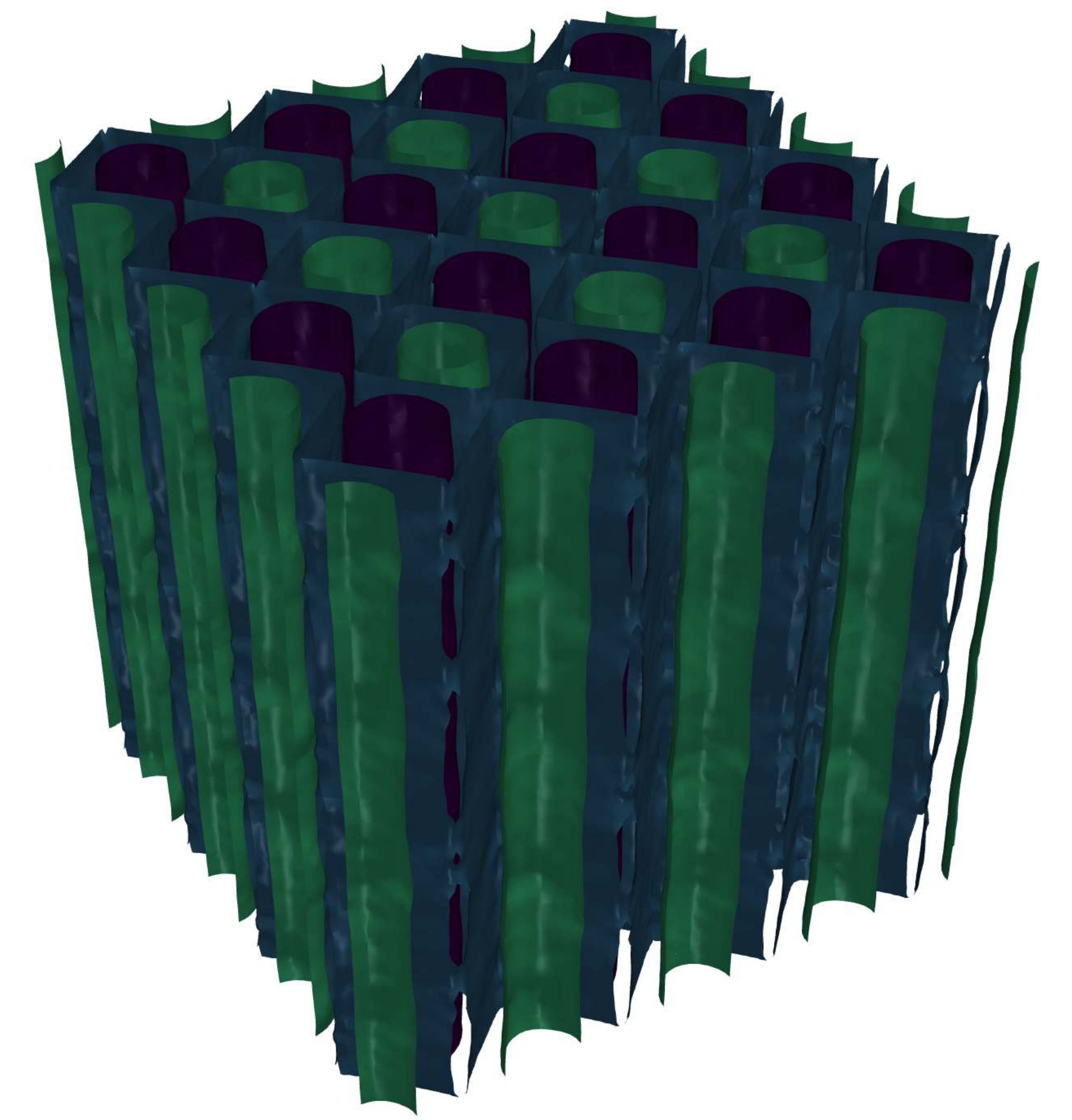}
  \hfill
  \includegraphics[width=0.18\textwidth]{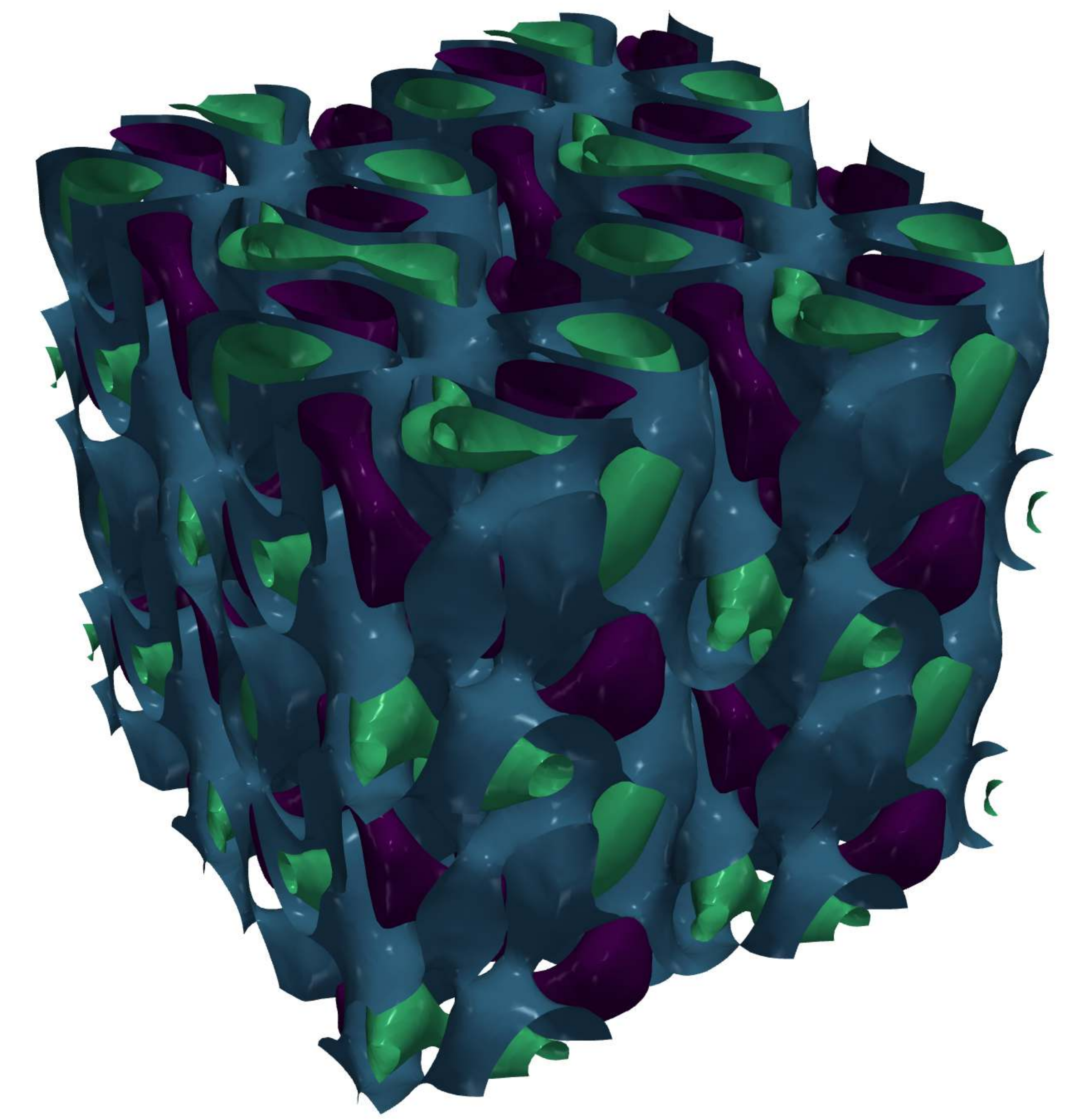}
  \hfill
  \includegraphics[width=0.18\textwidth]{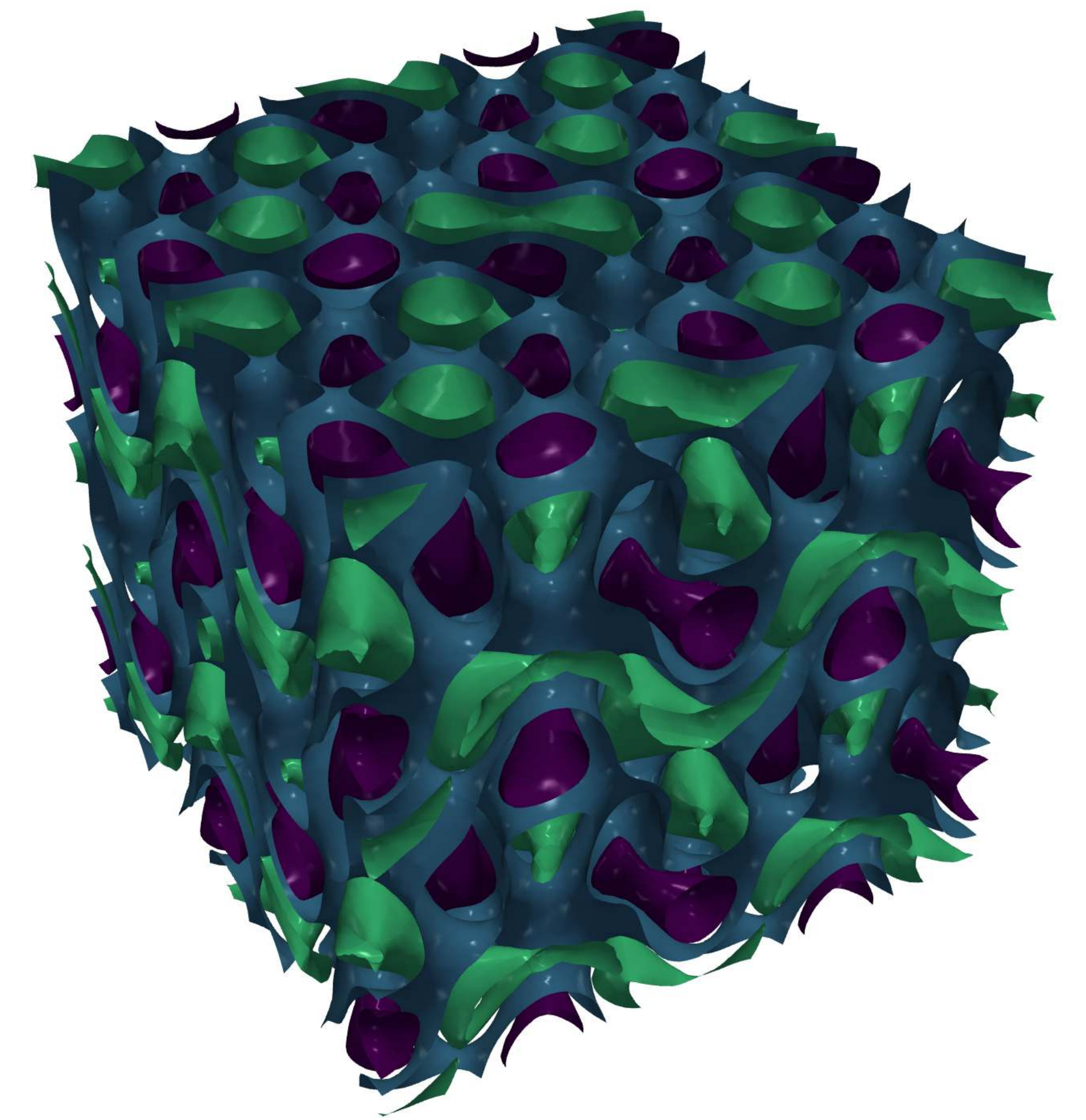}
\end{center}
Our results show that any continuous invariant function can be represented
by a series expansion in functions $\ef_i$. As for the Fourier transform, the functions form a orthonormal
basis of the relevant $\L_2$ space. The functions $\ef_i$ can hence be seen as a generalization of the
Fourier transform from pure shift groups to crystallographic groups. All of this is made precise in \cref{sec:linear}.
\\[.5em]
{\noindent\bf Nonlinear representations: Factoring through an orbifold}.
The second representation, in \cref{sec:nonlinear}, generalizes an idea of David MacKay \citep{mackay1998introduction}, who constructs periodic
functions on the line as follows: Start with a continuous function ${h:\mathbb{R}^2\rightarrow\mathbb{R}}$.
Choose a circle of circumference $1$ in $\mathbb{R}^2$, and restrict $h$ to the circle. The restriction is still
continuous. Now ``cut and unfold the circle with $h$ on it'' to obtain a function on the unit interval.
Since this function takes the same value at both interval boundaries, replicating it by shifts of integer length
defines a function on $\mathbb{R}$ that is periodic and continuous:
\begin{center}
  \begin{tikzpicture}
    \node at (0,0) {\includegraphics[height=3cm]{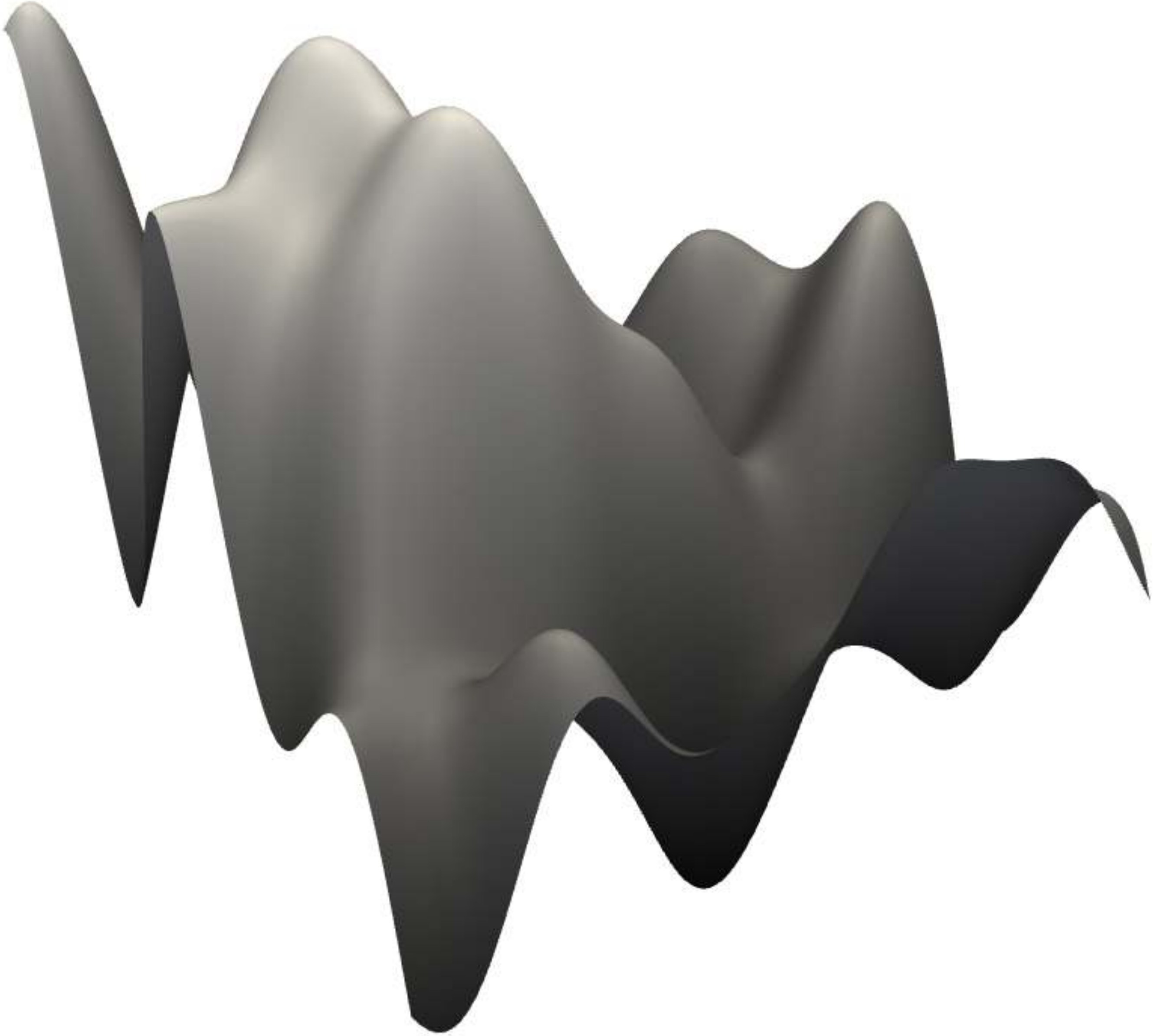}};
    \node at (3.5,0) {\includegraphics[height=3cm]{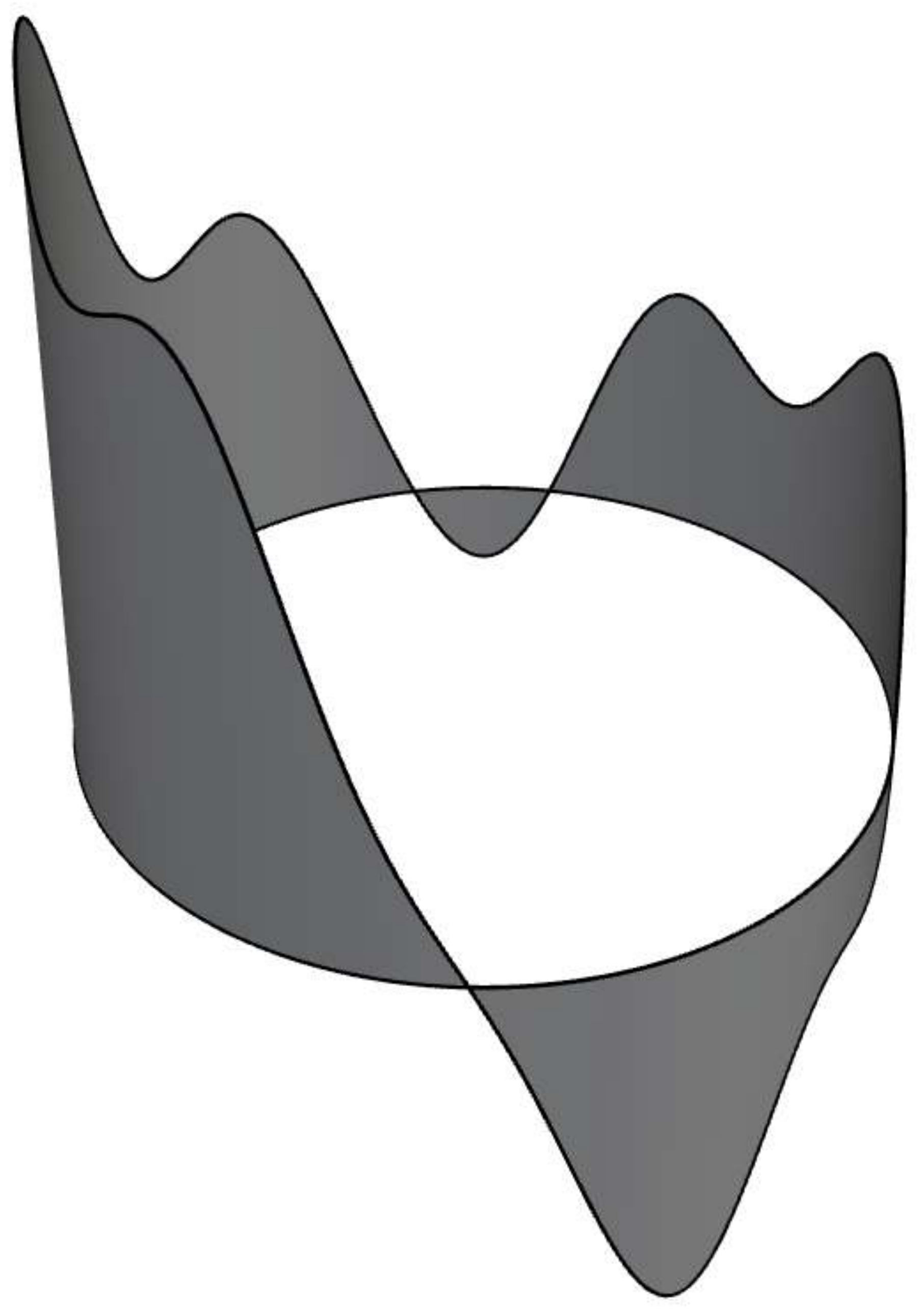}};
    \node at (9,0) {\includegraphics[height=2.8cm,width=6.5cm]{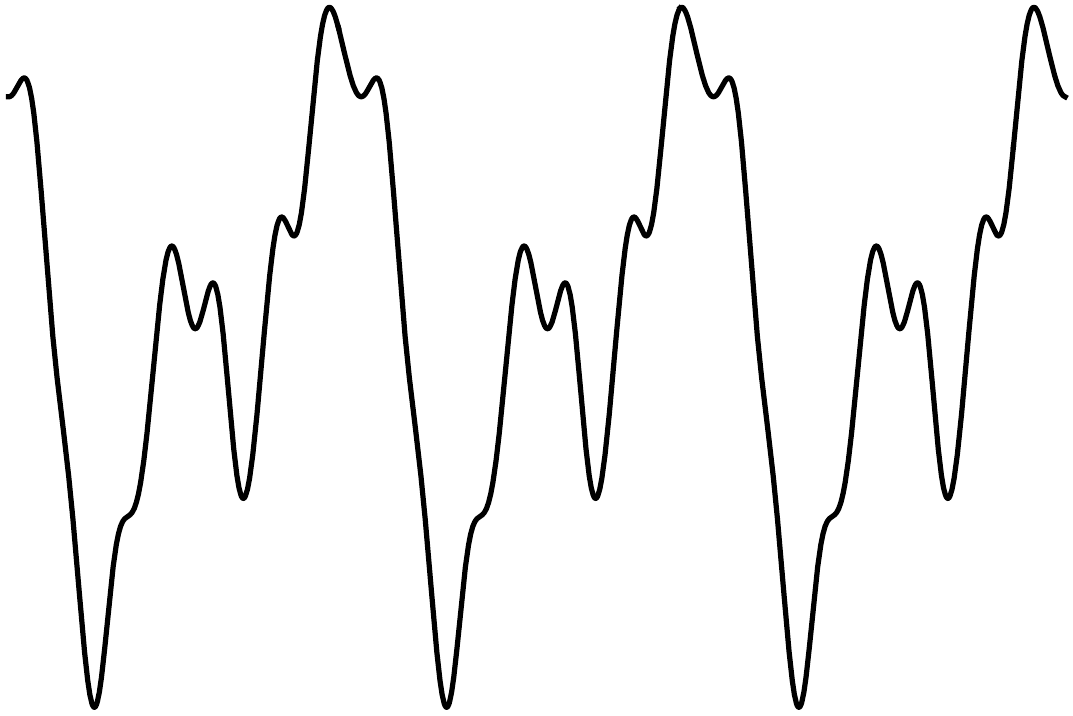}};
    \node at (0,-2) {\scriptsize function $h$ on $\mathbb{R}^2$};
    \node at (3.5,-2) {\scriptsize restrict $h$ to circle};
    \node at (9,-2) {\scriptsize unfold circle and replicate};
  \end{tikzpicture}
\end{center}
More formally, MacKay's approach constructs a function
${\rho:\mathbb{R}\rightarrow\text{circle}\subset\mathbb{R}^2}$ such that
\begin{equation*}
  f\text{ is continuous and periodic on $\mathbb{R}$ }
  \quad\Leftrightarrow\quad
  f\;=\;h\circ\rho\quad\text{ for some continuous }h:\mathbb{R}^2\rightarrow\mathbb{R}\;.
\end{equation*}
We show how to generalize this  construction to any finite dimension $n$, any crystallographic
group $\group$ on $\mathbb{R}^n$, and any convex polytope with which $\group$ tiles the space:
For each~$\group$ and~$\Pi$, there is a continuous, surjective map
\begin{equation}
  \label{intro:embedding:map}
  \rho:\mathbb{R}^n\rightarrow\Omega\qquad\text{ for some finite }N\geq n\text{ and a compact set }\Omega\subset\mathbb{R}^N
\end{equation}
such that
\begin{equation*}
  f\text{ is continuous and invariant }
  \quad\Leftrightarrow\quad
  f\;=\;h\circ\rho\quad\text{ for some continuous }h:\mathbb{R}^N\rightarrow\mathbb{R}\;.
\end{equation*}
This is \cref{theorem:embedding}.
\cref{sec:embedding:algorithm} shows how to compute a representation of $\rho$ using multidimensional scaling.

The set $\Omega$ can be thought of as an $n$-dimensional surface in a higher-dimensional space~$\mathbb{R}^N$.
If $\group$ contains only shifts, this surface is completely smooth, and hence a manifold. That is the
case in MacKay's construction, where $\Omega$ is the circle, and
the group~\texttt{p1} on $\mathbb{R}^2$, for which $\Omega$ is the torus shown on the left:
\vspace{-1em}
\begin{center}
  \begin{tikzpicture}
    \node at (0,0) {\includegraphics[height=2.6cm]{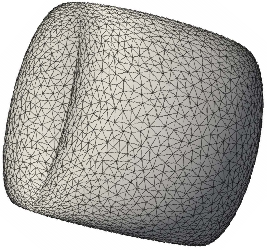}};
    \node at (6,0) {\includegraphics[height=3cm,angle=-20]{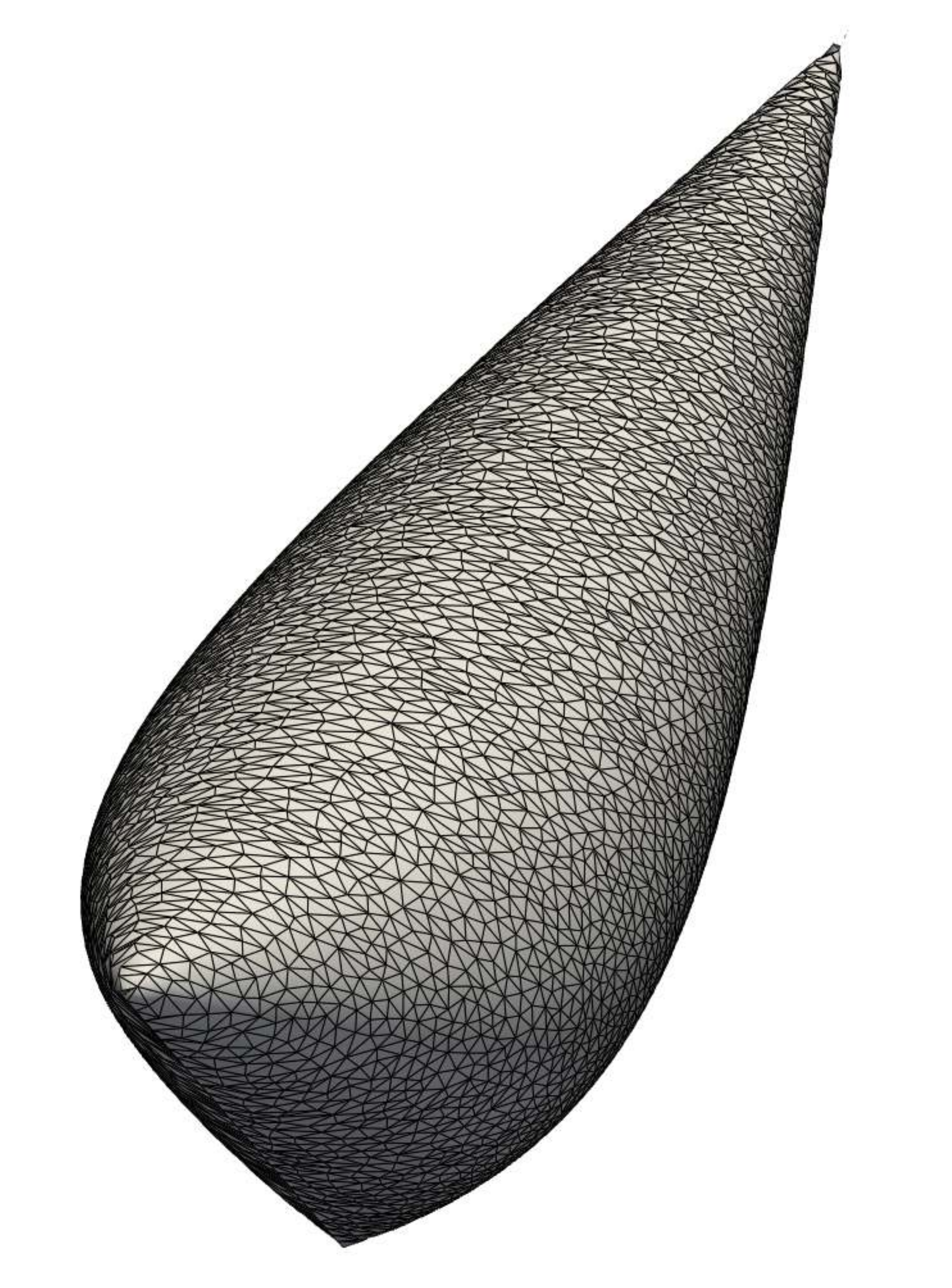}};
  \end{tikzpicture}
\end{center}
\vspace{-2em}
For most crystallographic groups, $\Omega$ is not a manifold, but
rather a more general object called an orbifold. The precise definition (see \cref{sec:orbifolds}) is somewhat technical, but
loosely speaking, an orbifold is a surface that resembles a manifold almost everywhere, except at a small number of
points at which it is not smooth.
That is illustrated by the orbifold on the right, which represents a group containing rotations, and has several ``sharp corners''.
\\[.5em]
{\noindent\bf Applications I: Neural networks}.
We can now define $\group$-invariant models by factoring through $\rho$.
To define an invariant
neural network, for example, start with a continuous neural network ${h_{\theta}:\mathbb{R}^N\rightarrow Y}$ with weight vector $\theta$ and some
output space $Y$. Then ${\rho\circ h_{\theta}}$ is a continuous and invariant neural network ${\mathbb{R}^n\rightarrow Y}$.
Here are examples for three groups (\texttt{cm}, \texttt{p4}, and \texttt{p4gm}) on $\mathbb{R}^2$, with three hidden layers and randomly generated weights:
\begin{center}
  \begin{tikzpicture}
    \node at (0,0) {\includegraphics[height=3cm]{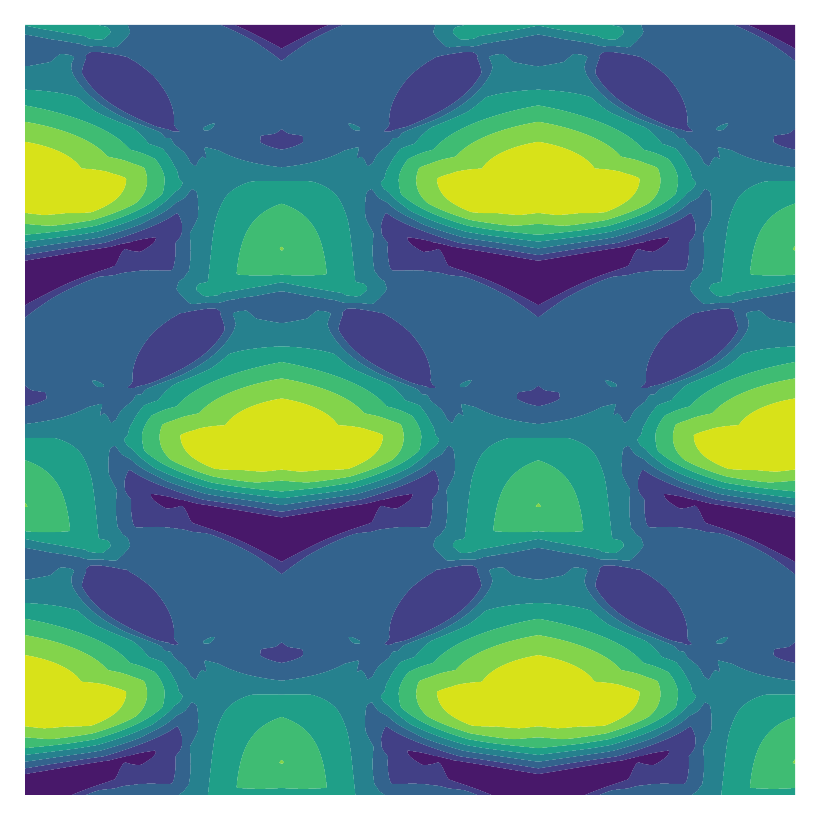}};
    \node at (5,0) {\includegraphics[height=3cm]{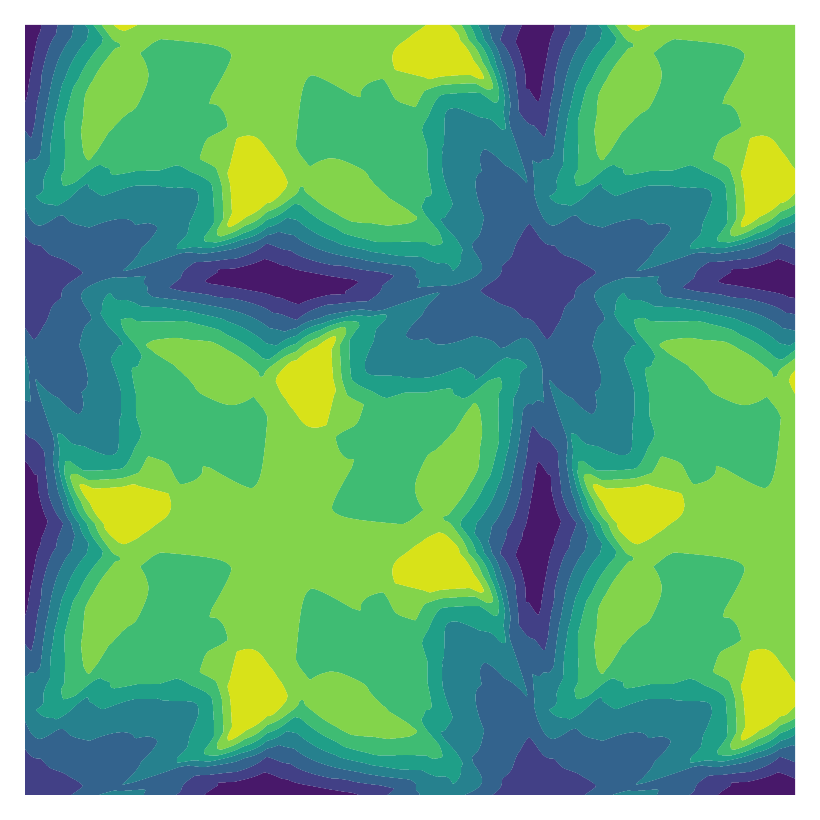}};
    \node at (10,0) {\includegraphics[height=3cm]{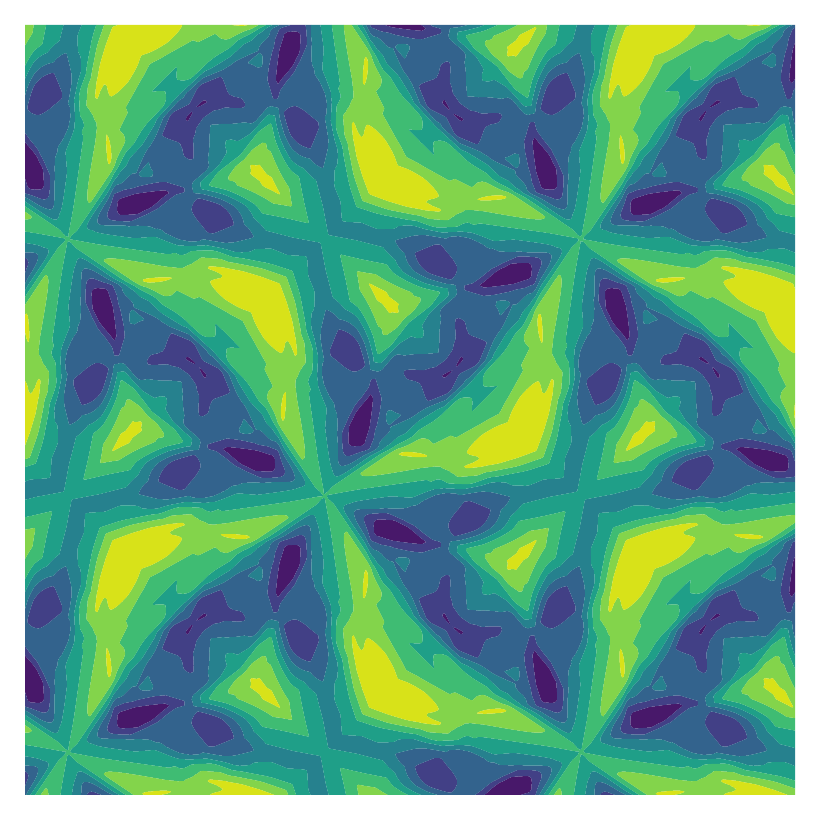}};
  \end{tikzpicture}
\end{center}
{\noindent\bf Applications II: Invariant kernels}.
We can similarly define $\group$-invariant reproducing kernels on $\mathbb{R}^n$, by starting with a
kernel $\hat{\kappa}$ on $\mathbb{R}^N$ and defining a function on $\mathbb{R}^n$ as
\begin{equation*}
    \kappa(x,y)\;=\;\hat{\kappa}\circ(\rho\otimes\rho)(x,y)\;=\;\hat{\kappa}(\rho(x),\rho(y))\;.
\end{equation*}
This function is again a kernel. In \cref{sec:kernels}, we
show that its reproducing kernel Hilbert space consists of continuous $\group$-invariant
functions on~$\mathbb{R}^n$.
We also show that, even though~$\mathbb{R}^n$ is not compact, $\kappa$ behaves essentially like a kernel on a
compact domain (\cref{result:kernel:compactness}). In particular,
it satisfies a Mercer representation and a compact embedding property, both of which usually require
compactness. This behavior is specific to kernels invariant under crystallographic groups, and
does not extend to more general groups of isometries on $\mathbb{R}^n$.
\\[.5em]
{\noindent\bf Applications III: Invariant Gaussian processes}.
There are two ways in which a Gaussian process (GP) can be invariant under a group: A GP is a distribution on
functions, and we can either ask for each function it generates to be invariant, or only require that its distribution is
invariant (see \cref{sec:gp} for definitions). The former implies the latter. Both types of processes can be constructed
by factoring through an orbifold:
Suppose we start with a kernel $\hat{\kappa}$ (a covariance function) and a real-valued function $\hat{\mu}$ (the mean function),
both defined on $\mathbb{R}^N$. If we then generate a random function $F$ on $\mathbb{R}^n$ as
\begin{equation*}
  F\;:=\;H\circ\rho\qquad\text{ where }\qquad H\,\sim\,\text{GP}(\hat{\mu},\hat{\kappa})\;,
\end{equation*}
the function $F$ is $\group$-invariant with probability 1.
The following are examples of such random functions, rendered as contour plots with non-smooth colormaps.
\begin{center}
  \begin{tikzpicture}
    \node at (0,0) {\includegraphics[width=3cm]{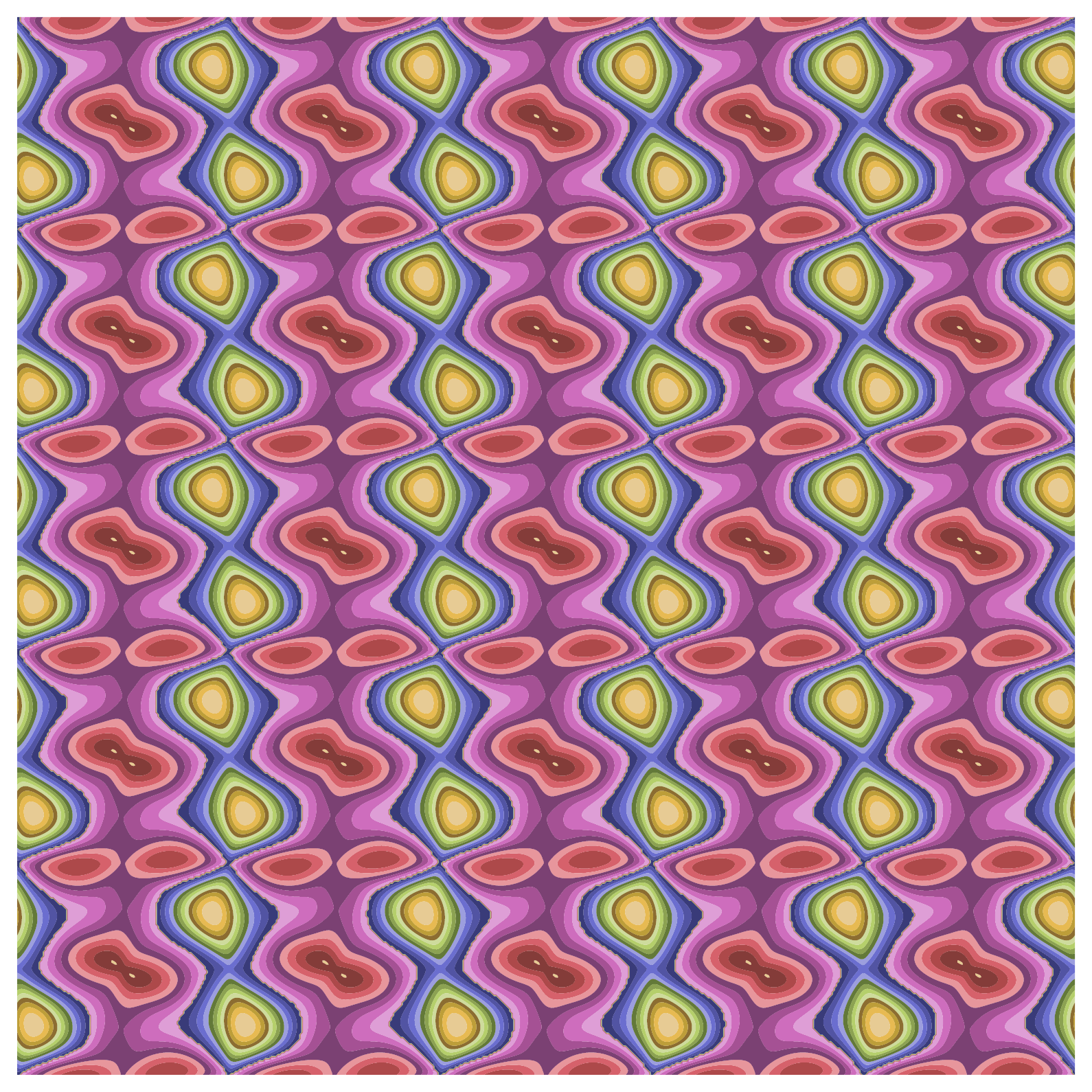}};
    \node at (3.5,0) {\includegraphics[width=3cm]{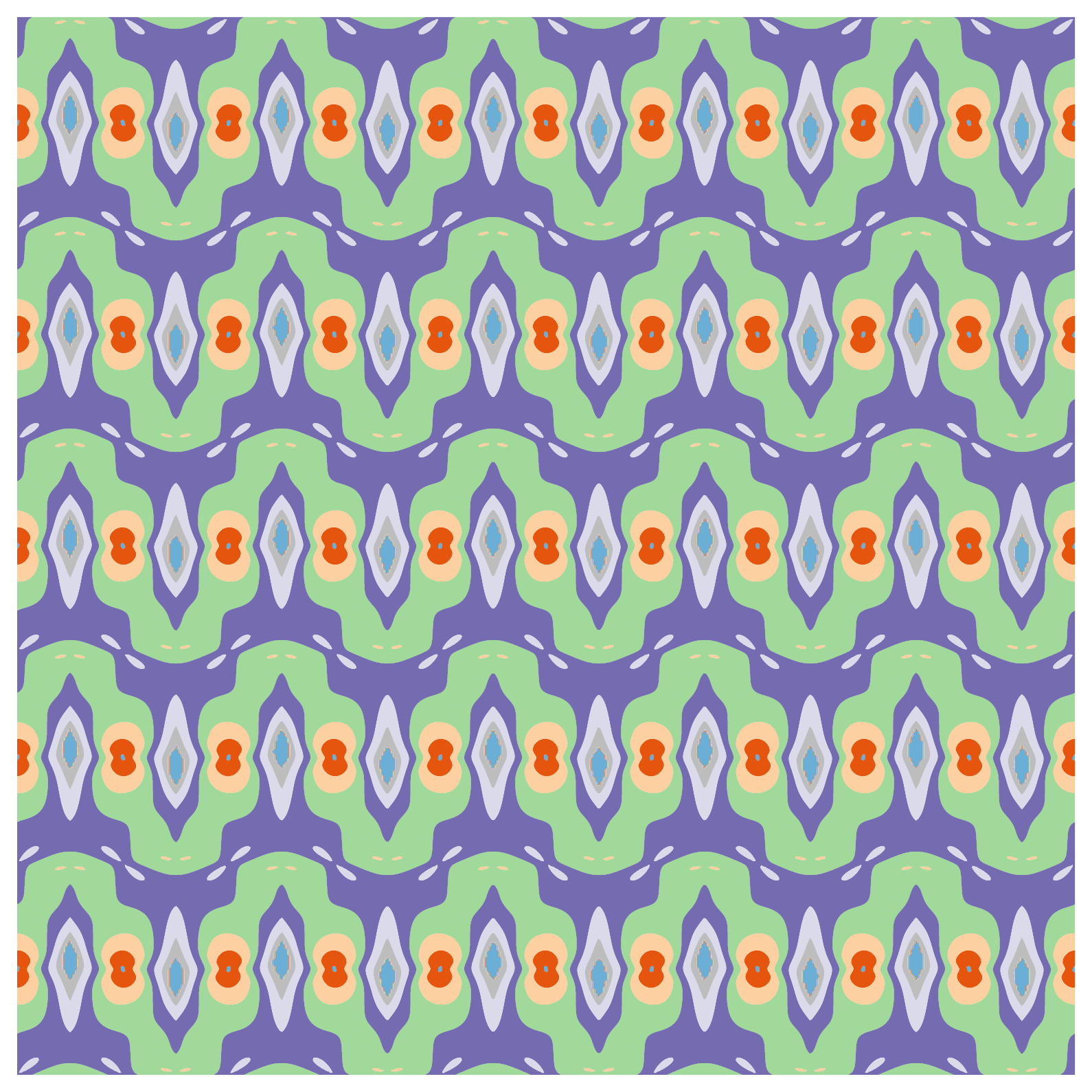}};
    \node at (7,0) {\includegraphics[width=3cm]{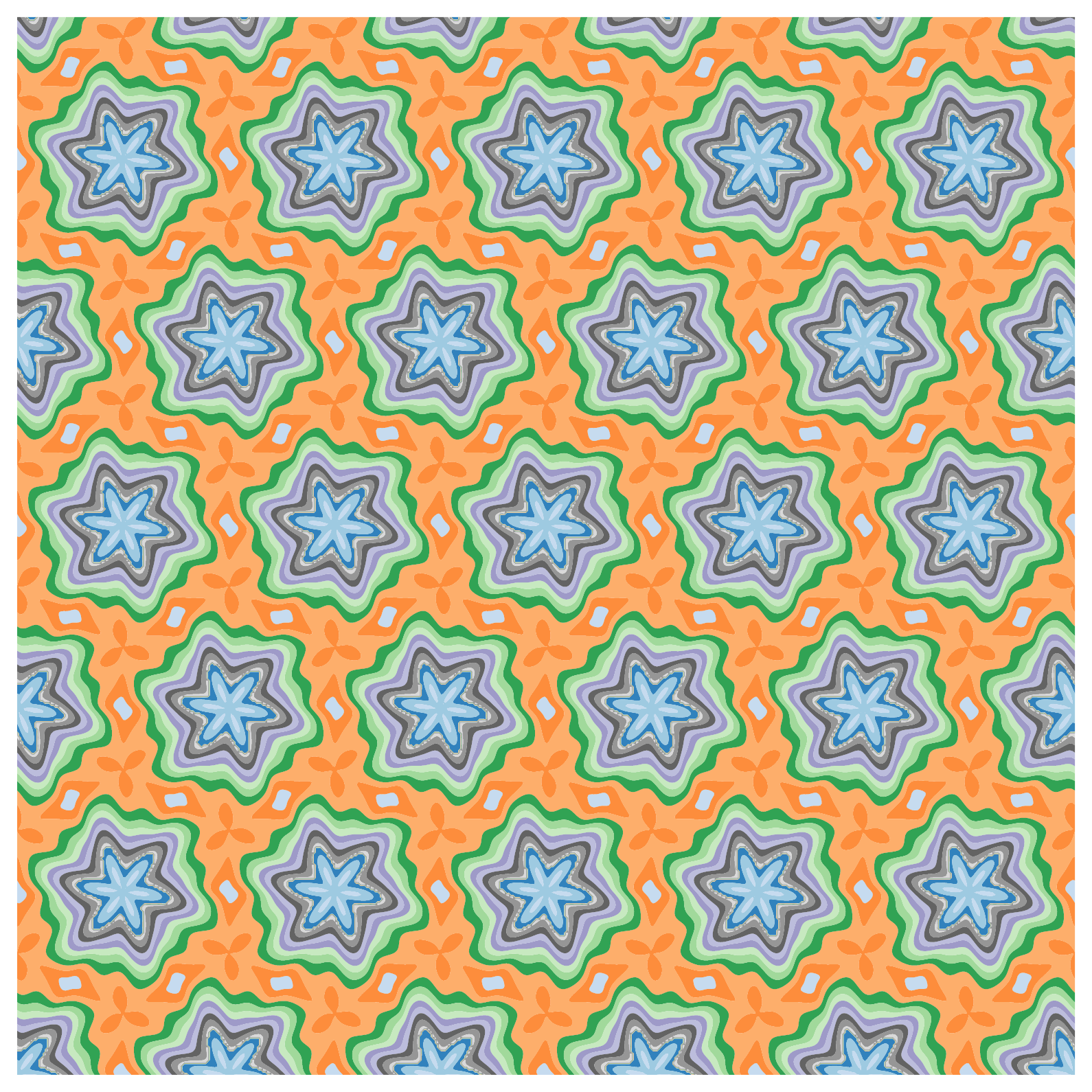}};
    \node at (10.5,0) {\includegraphics[width=3cm]{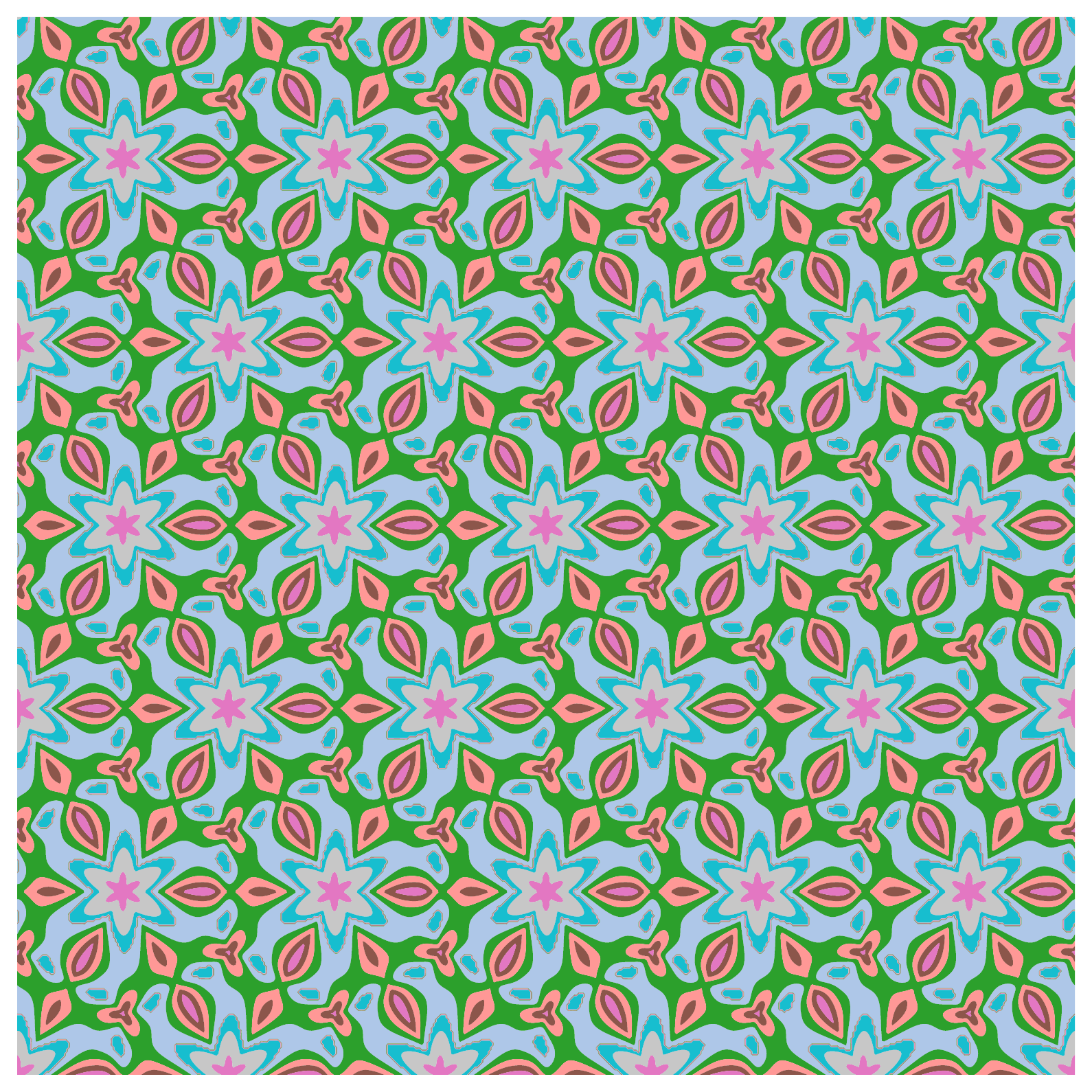}};
  \end{tikzpicture}
\end{center}
\noindent If we instead generate $F$ as
\begin{equation*}
  F\,\sim\,\text{GP}(\mu,\kappa)\qquad\text{ where }\qquad \mu\;:=\;\hat{\mu}\circ\rho\;\text{ and }\;\kappa\;:=\;\hat{\kappa}\circ(\rho\otimes\rho)\;,
\end{equation*}
the distribution of $F$ is $\group$-invariant. See \cref{sec:gp}.
\\[.5em]
{\noindent\bf Properties of the Laplace operator}.
\cref{sec:laplacian} studies differentials and Laplacians of crystallographically invariant functions
${f:\mathbb{R}^n\rightarrow\mathbb{R}}$.
The results are then used in the proof of the Fourier representation.
Consider a vector field $F$, i.e., a function ${F:\mathbb{R}^n\rightarrow\mathbb{R}^n}$. An example
of such a vector field is the gradient ${F=\nabla f}$. \cref{lemma:differentials} shows that
the gradient transforms under elements $\phi$ of $\mathbb{G}$ as
\begin{equation*}
  \nabla f(\phi x)\;=\;(\text{linear part of }\phi)\cdot\nabla f
  \quad\text{ or abstractly }\quad
  F\circ\phi\;=\;(\text{linear part of }\phi)\circ F\;.
\end{equation*}
\cref{lemma:gradient:field} shows that, for any vector field $F$ that transforms in this way,
the total flux through the boundary of the polytope $\Pi$ vanishes,
\begin{equation*}
  \mint_{\partial\Pi}F(x)^{\trans}(\text{normal vector of $\partial\Pi$ at $x$})dx\;=\;0\;.
\end{equation*}
We can combine this fact with a result from the theory of partial differential equations, the
so-called Green identity, which decomposes the Laplacian on functions on $\Pi$ as
\begin{equation*}
  -\Delta f\;=\;\text{self-adjoint component on interior of }\Pi\;-\;\text{correction term on }\partial\Pi\;.
\end{equation*}
\cref{green:identity} makes the statement precise. Using the fact that the flux vanishes,
we can show that the correction term on $\partial\Pi$ vanishes, and from that deduce that the
Laplace operator on invariant functions is self-adjoint (\cref{theorem:laplacian}).
That allows us to draw on
results from the spectral theory of self-adjoint operators to solve \eqref{eigenproblem:intro}.
\\[.5em]
{\noindent\bf Background and reference results}.
Since our methods draw on a number of different fields,
the appendix provides additional background on groups of isometries (App.~\ref{sec:isometries}),
functional analysis (App.~\ref{sec:function:spaces}), and orbifolds (App.~\ref{sec:orbifolds}),
and spectral theory (App.~\ref{sec:spectral}).

\newpage

\section{Preliminaries: Crystallographic groups}
\label{sec:definitions}

Throughout, we consider a Euclidean space $\mathbb{R}^n$, and write $d_n$ for Euclidean distance in~$n$ dimensions.
Euclidean volume (that is, Lebesgue measure on $\mathbb{R}^n$) is denoted $\vol_n$.
As we work with both sets and their boundaries, we must carefully distinguish
dimensions: The span of a set ${A\subset\mathbb{R}^n}$ is the smallest affine subspace that contains it.
We define the \kword{dimension} and \kword{relative interior} of $A$ as
\begin{equation*}
  \dim A\,:=\,\dim\text{span}\,A
  \quad\text{ and }\quad
  A^{\circ}\,:=\,\text{ largest subset of }A\text{ that is open in }\text{span}\,A\;.
\end{equation*}
The \kword{boundary} of $A$ is the set ${\partial A:=A\setminus A^{\circ}}$.
If $A$ has dimension ${k<n}$, then ${\vol_k(A)}$ denotes Euclidean volume in $\text{span}\,A$.
For example: If ${A\subset\mathbb{R}^3}$ is a closed line segment, then ${\dim A=1}$, and
${\vol_1(A)}$ is the length of the line segment, whereas ${\vol_3(A)=\vol_2(A)=0}$.
Taking the relative interior $A^\circ$ removes
the two endpoints, whereas interior of $A$ in $\mathbb{R}^3$ is the empty set.
(No such distinction is required for the closure $\bar{A}$,
since $A$ is closed in $\text{span}\,A$ if and only if it is closed in $\mathbb{R}^n$.)

\subsection{Defining crystallographic groups}
Consider a group $\group$
of isometries of $\mathbb{R}^n$. (See \cref{sec:isometries} for a
brief review of definitions.) Every isometry $\phi$ of $\mathbb{R}^n$ is of the form
\begin{equation}
  \label{eq:euclidean:isometry}
  \phi x\;=\;A_\phi x+b_\phi
  \qquad\text{ for some orthogonal $n\times n$ matrix }A_\phi\text{ and some }b_\phi\in\mathbb{R}^n\;.
\end{equation}
Let ${M\subset\mathbb{R}^n}$ be a set. We say that $\group$
\kword{tiles} the space $\mathbb{R}^n$ with $M$
if the image sets ${\phi M}$ completely cover the space so that only their boundaries overlap:
\begin{equation*}
  \medcup\nolimits_{\phi\in\group}\phi M\;=\;\mathbb{R}^n
  \qquad\text{ and }\qquad
  \phi M\cap\psi M\;\subset\;\partial(\phi M)
  \quad\text{ whenever }\phi\neq\psi\;.
\end{equation*}
Each set $\phi M$ is a \kword{tile},
and the collection ${\group M:=\braces{\phi M|\phi\in\group}}$ is a \kword{tiling} of $\mathbb{R}^n$.

By a \kword{convex polytope}, we mean the convex hull of a finite set of points
\citep{Ziegler:1995}. Let~${\cF\subset\mathbb{R}^n}$ be an $n$-dimensional convex polytope.
The boundary $\partial\cF$
consists of a finite number of ${(n-1)}$-dimensional convex polytopes, called the \kword{facets}
of $\cF$. Thus, if $\group$ tiles $\mathbb{R}^n$ with $\Pi$, only points on
facets are contained in more than one tile.
\begin{definition}
A \kword{crystallographic group} is a group of isometries that
tiles $\mathbb{R}^n$ with an $n$-dimensional convex polytope $\cF$. 
\end{definition}
\noindent
The polytope $\cF$ is then also called a \kword{fundamental region} (in geometry) or an
\kword{asymmetric unit} (in materials science) for $\group$.
This definition of crystallographic groups
differs from those given in the literature, but we clarify in
\cref{remark:definition} that it is equivalent.

\begin{figure}[t]
\centering
\begin{subfigure}[b]{0.16\textwidth}
\includegraphics[width=\textwidth]{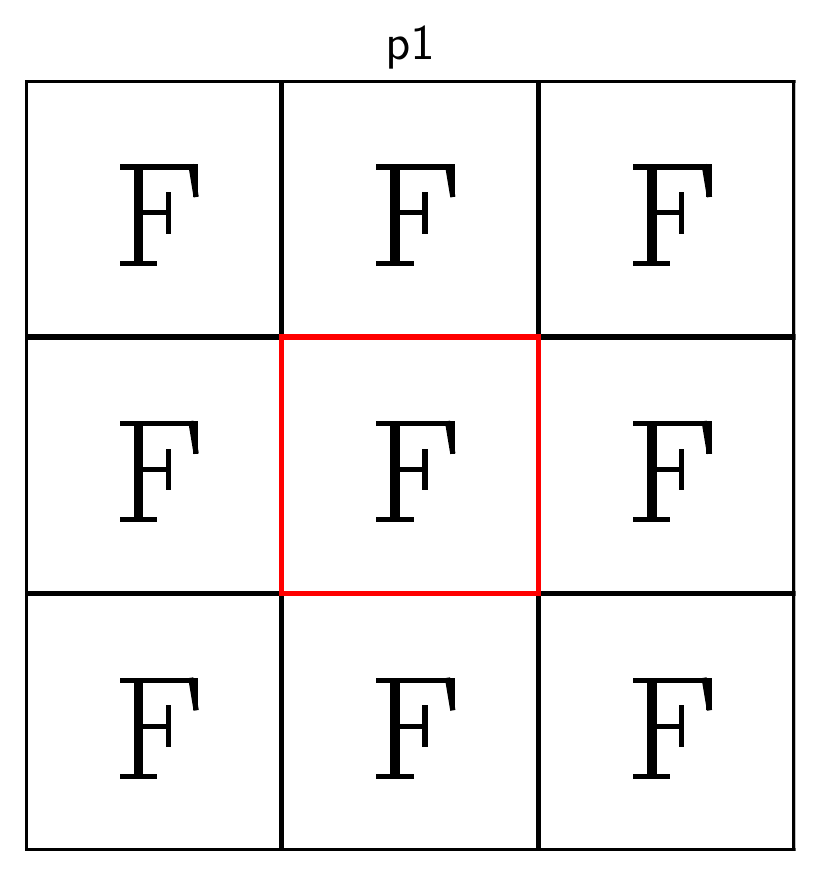}
\end{subfigure}\hfill
\begin{subfigure}[b]{0.16\textwidth}
\includegraphics[width=\textwidth]{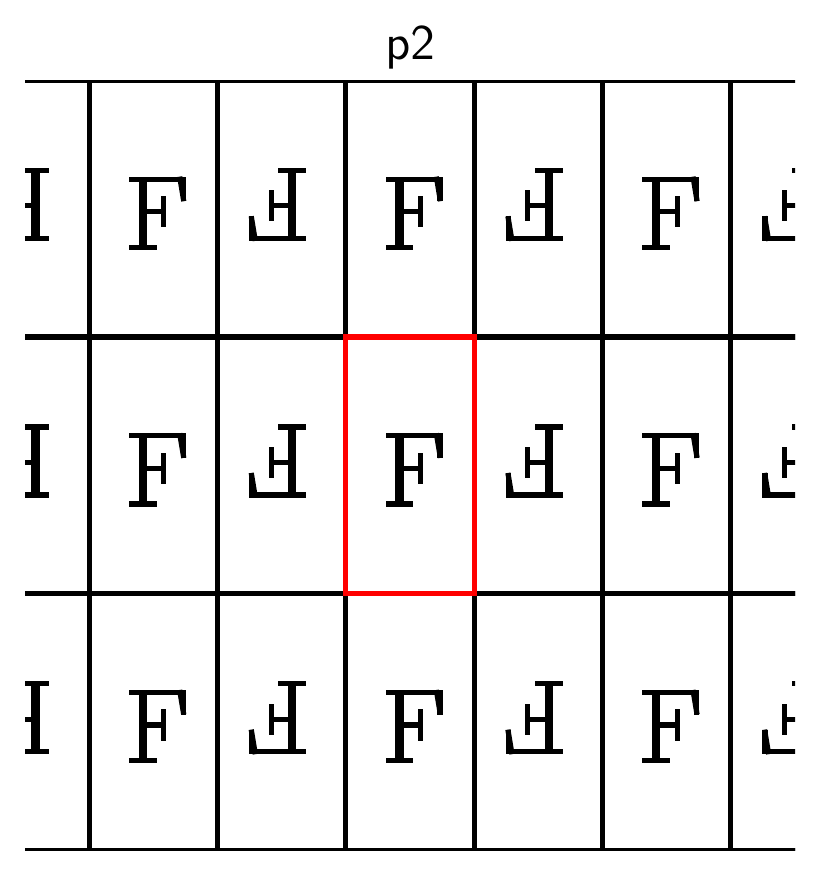}
\end{subfigure}\hfill
\begin{subfigure}[b]{0.16\textwidth}
\includegraphics[width=\textwidth]{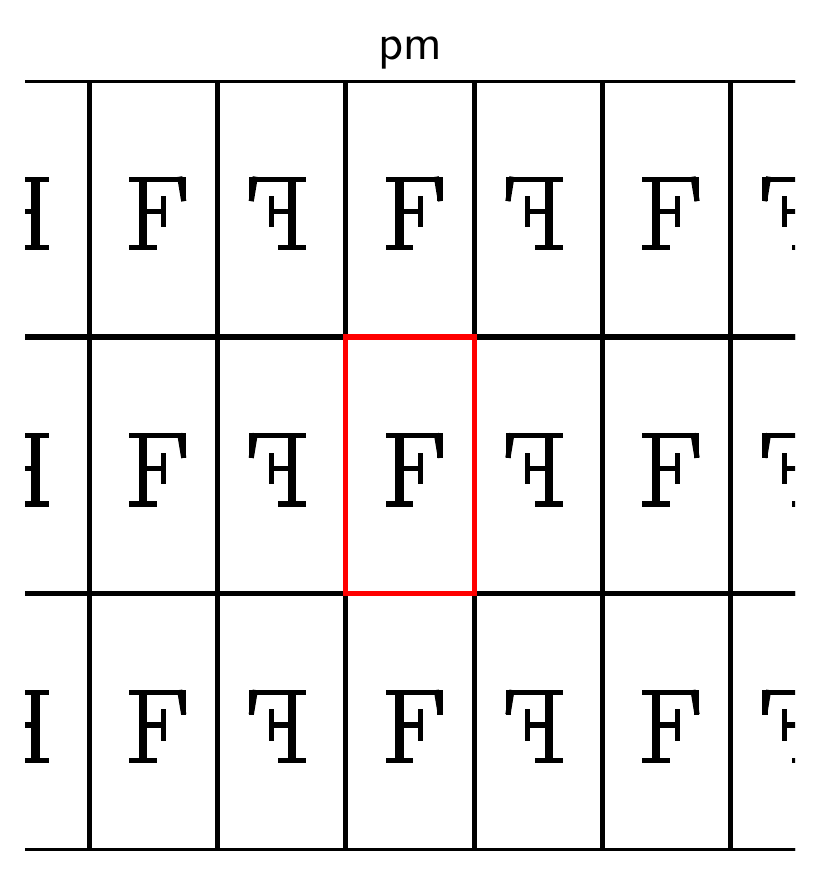}
\end{subfigure}\hfill
\begin{subfigure}[b]{0.16\textwidth}
\includegraphics[width=\textwidth]{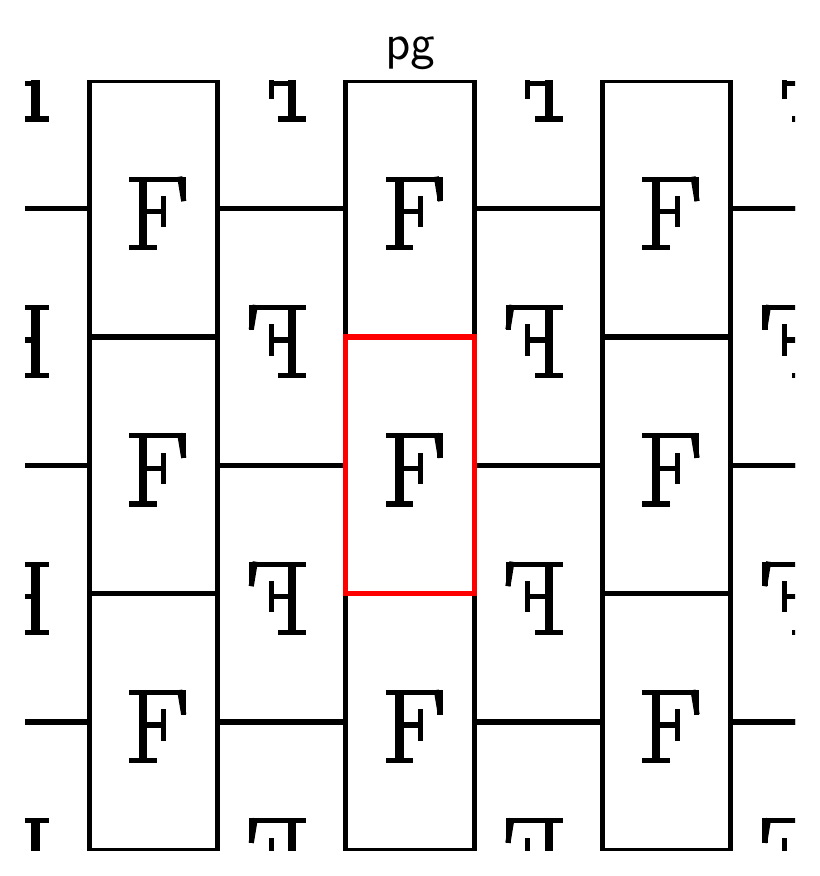}
\end{subfigure}\hfill
\begin{subfigure}[b]{0.16\textwidth}
\includegraphics[width=\textwidth]{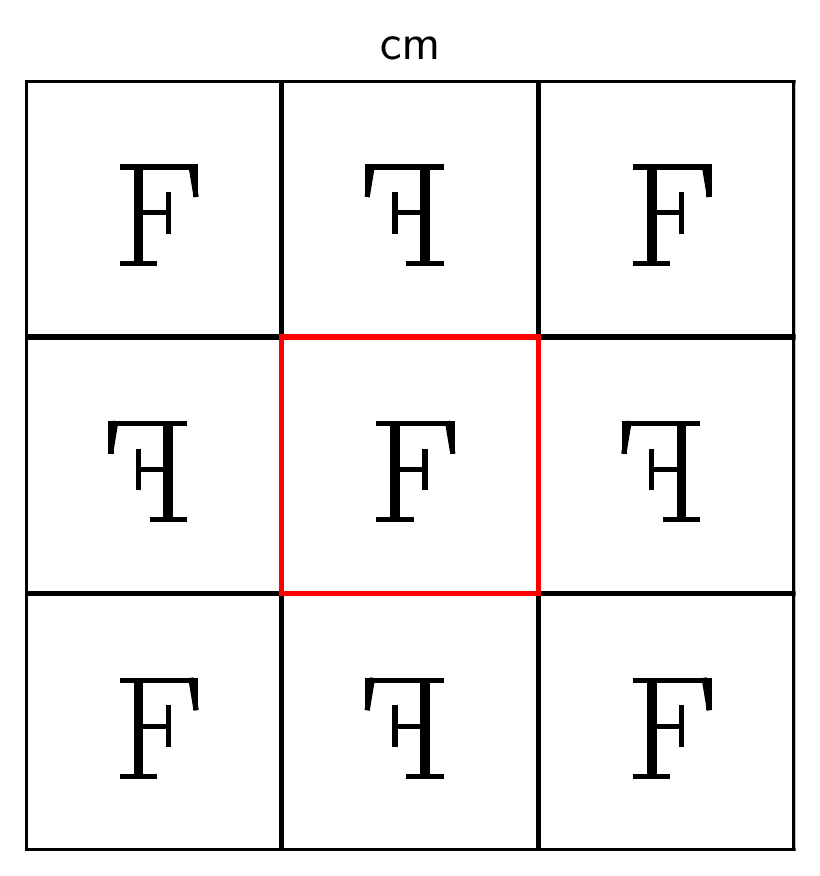}
\end{subfigure}\hfill
\begin{subfigure}[b]{0.16\textwidth}
\includegraphics[width=\textwidth]{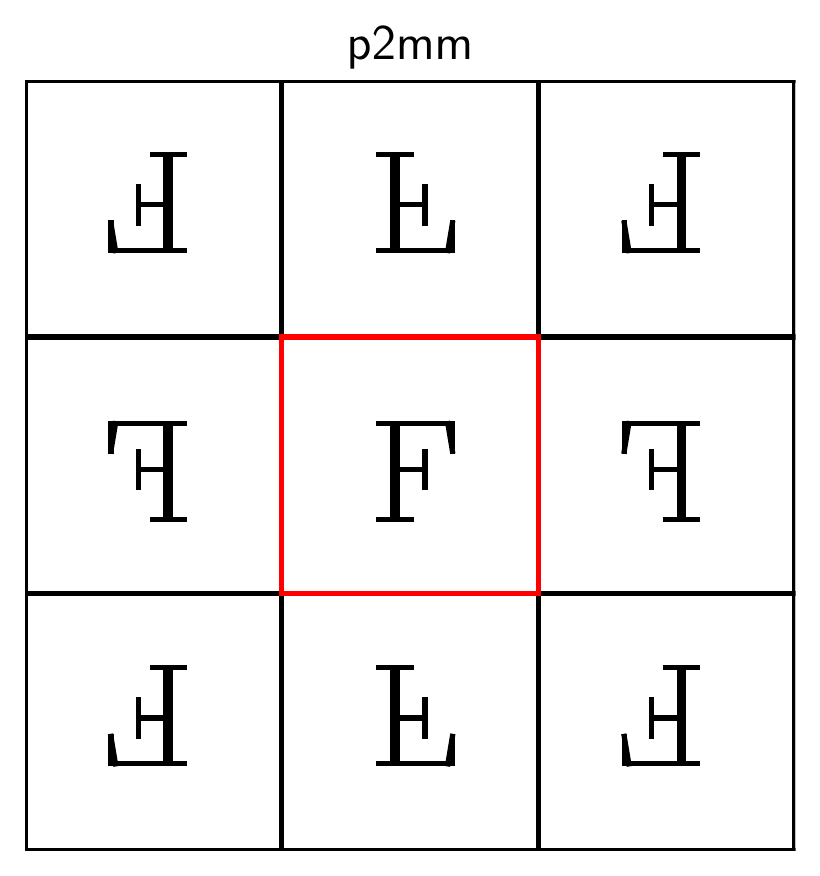}
\end{subfigure}
\\
\begin{subfigure}[b]{0.16\textwidth}
\includegraphics[width=\textwidth]{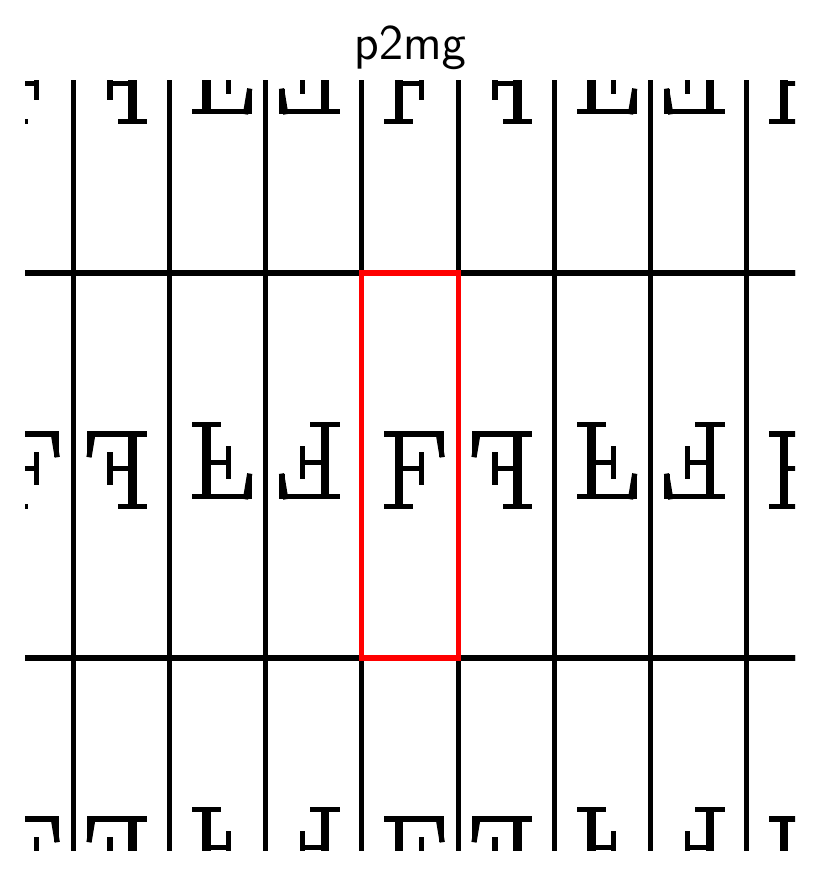}
\end{subfigure}\hfill
\begin{subfigure}[b]{0.16\textwidth}
\includegraphics[width=\textwidth]{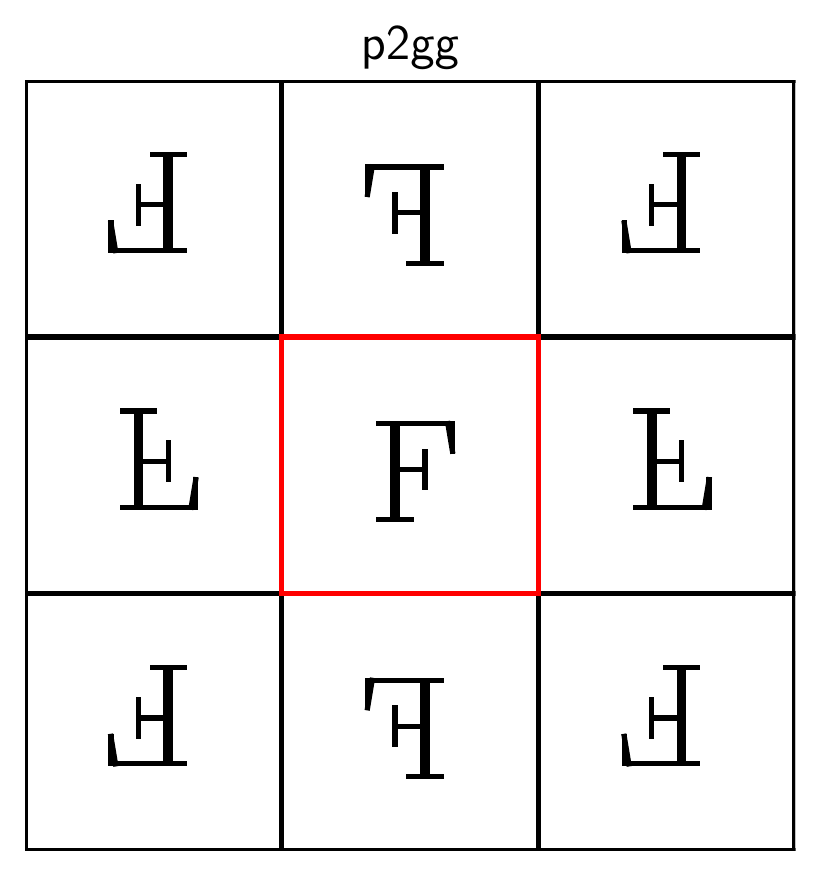}
\end{subfigure}\hfill
\begin{subfigure}[b]{0.16\textwidth}
\includegraphics[width=\textwidth]{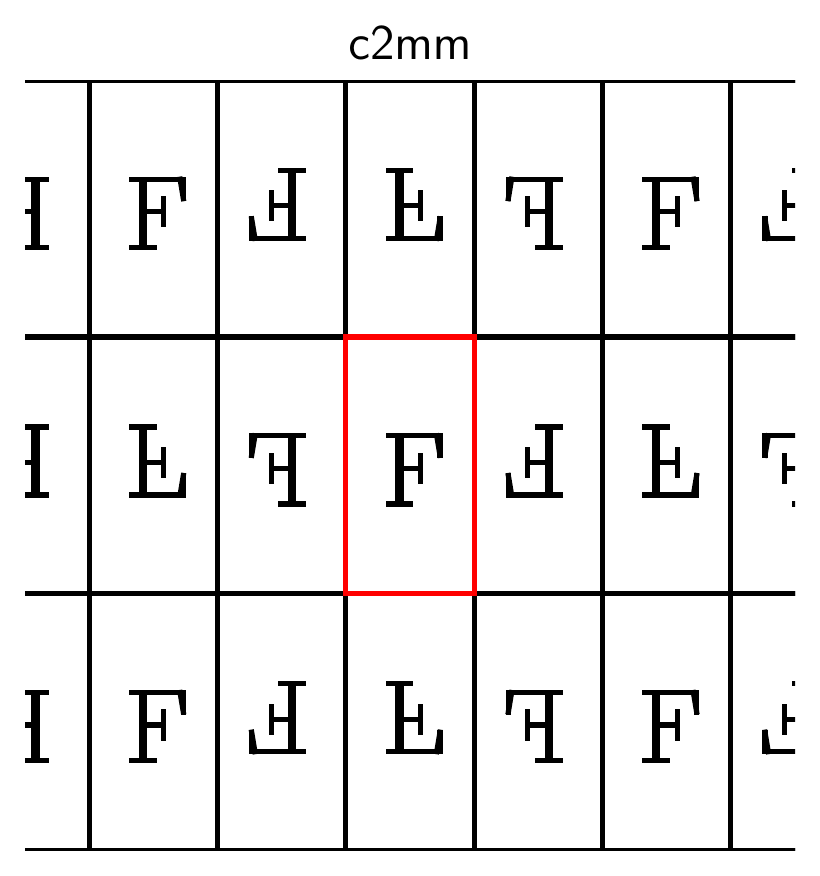}
\end{subfigure}\hfill
\begin{subfigure}[b]{0.16\textwidth}
\includegraphics[width=\textwidth]{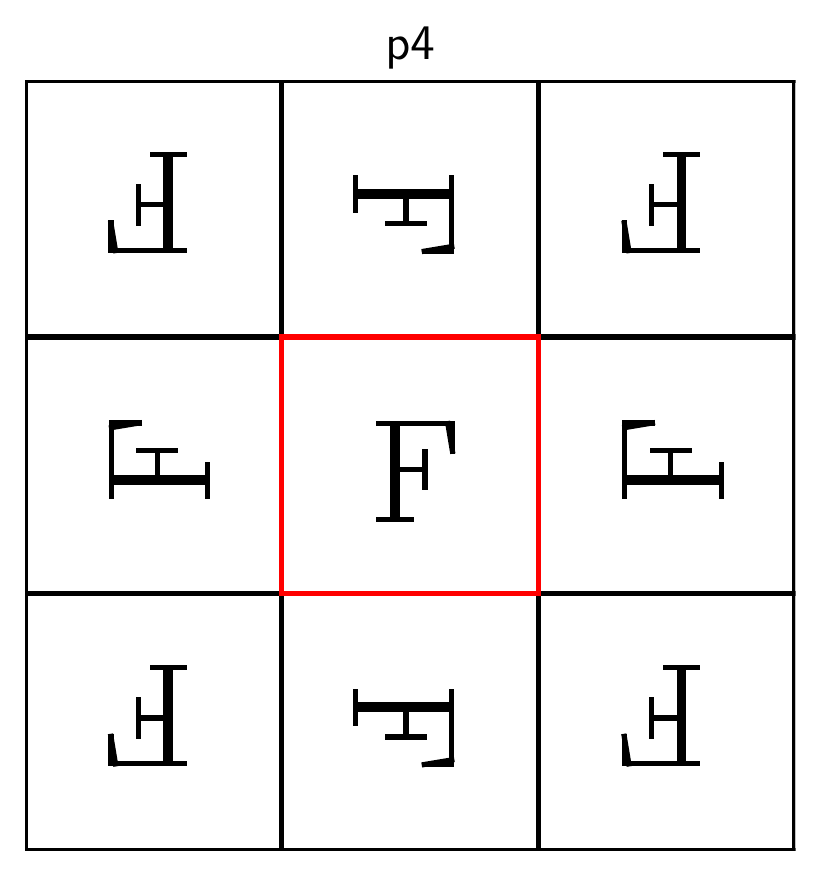}
\end{subfigure}\hfill
\begin{subfigure}[b]{0.16\textwidth}
\includegraphics[width=\textwidth]{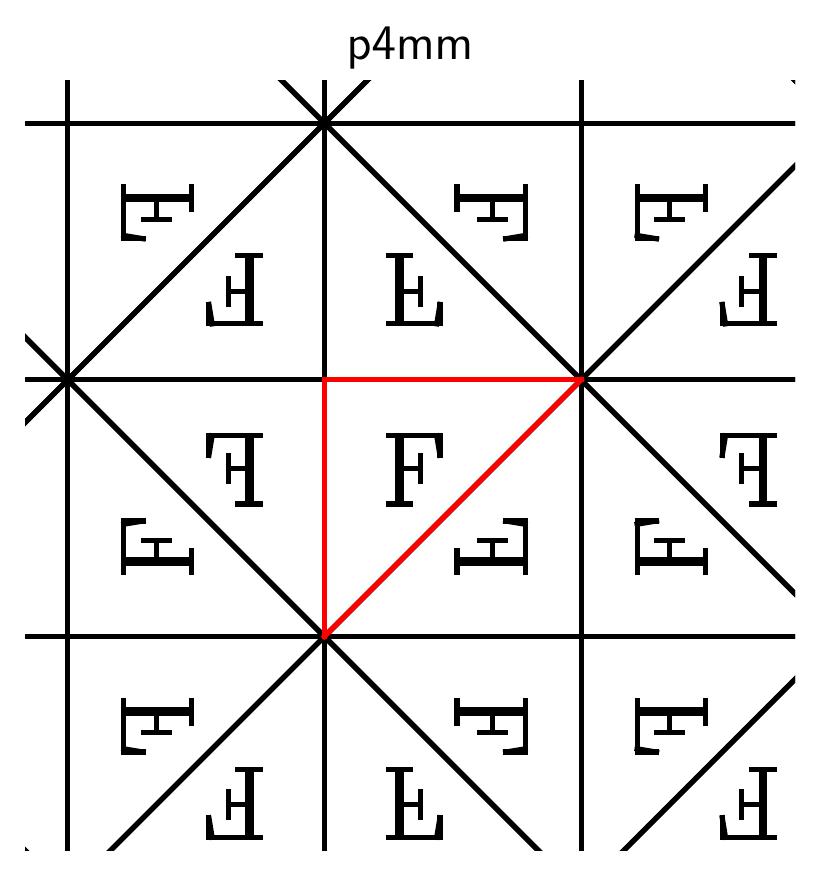}
\end{subfigure}\hfill
\begin{subfigure}[b]{0.16\textwidth}
\includegraphics[width=\textwidth]{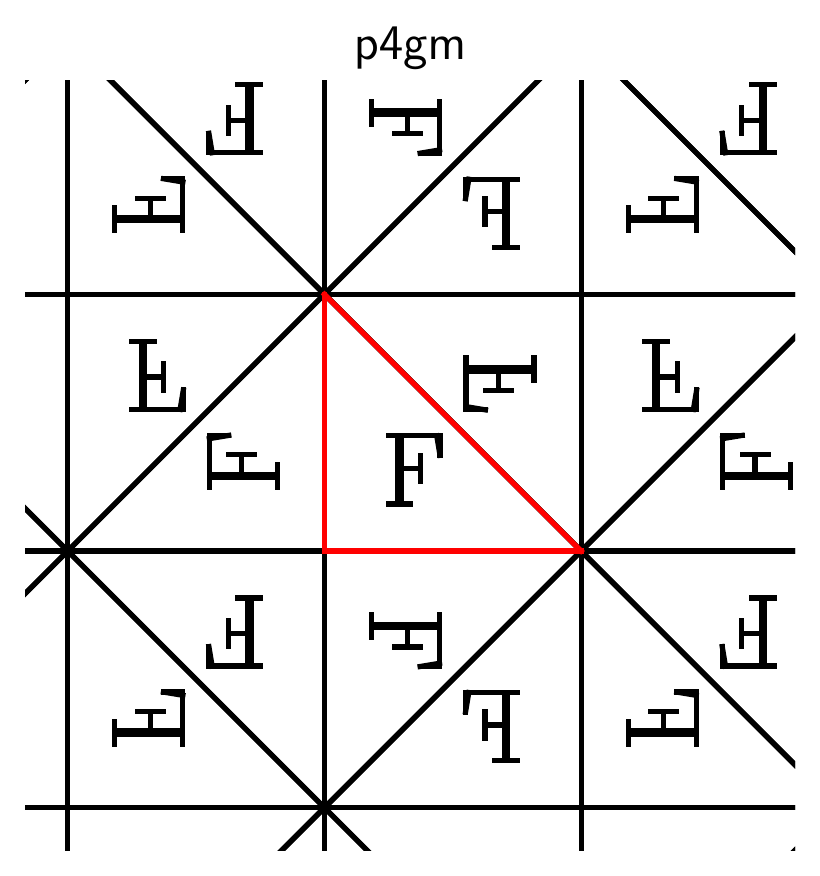}
\end{subfigure}
\\
\begin{subfigure}[b]{0.16\textwidth}
\includegraphics[width=\textwidth]{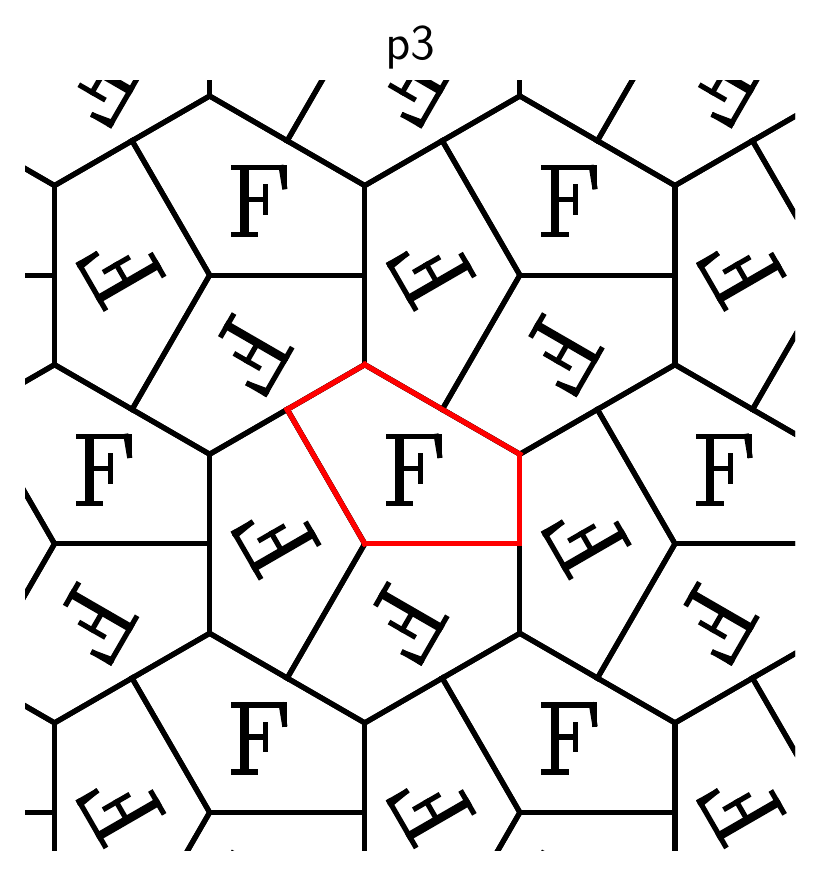}
\end{subfigure}\hfill
\begin{subfigure}[b]{0.16\textwidth}
\includegraphics[width=\textwidth]{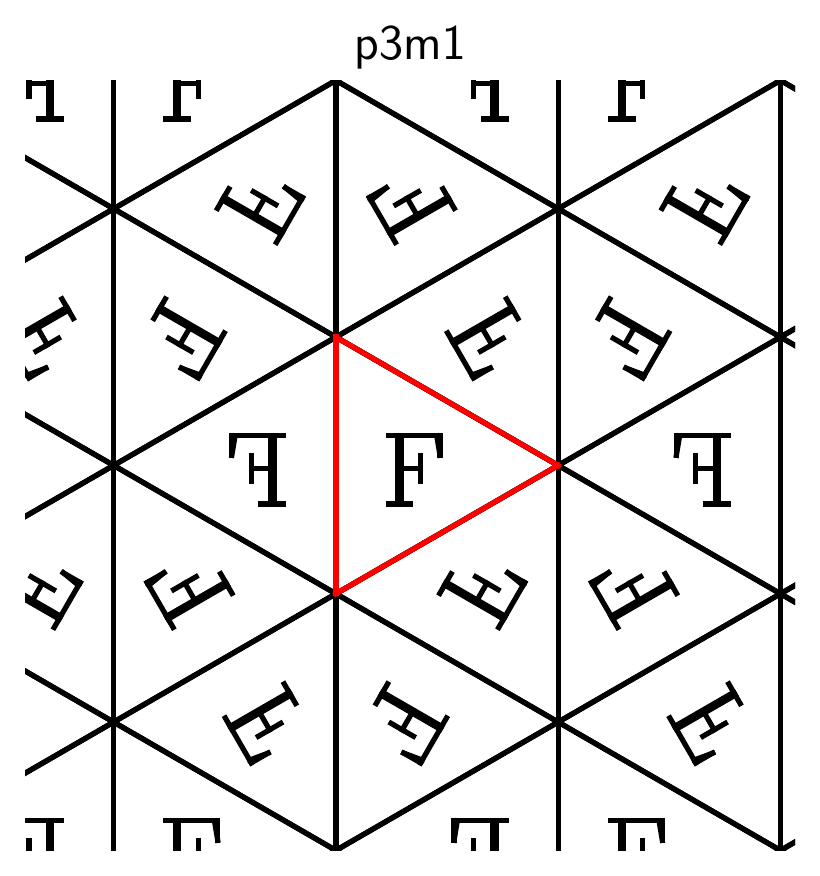}
\end{subfigure}\hfill
\begin{subfigure}[b]{0.16\textwidth}
\includegraphics[width=\textwidth]{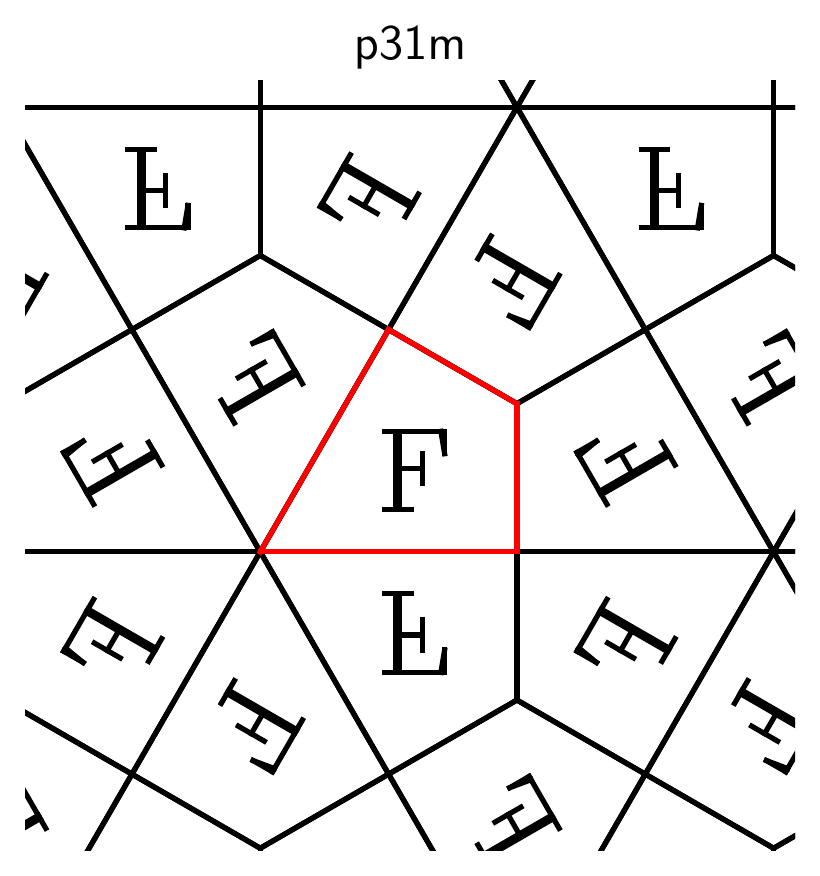}
\end{subfigure}\hfill
\begin{subfigure}[b]{0.16\textwidth}
\includegraphics[width=\textwidth]{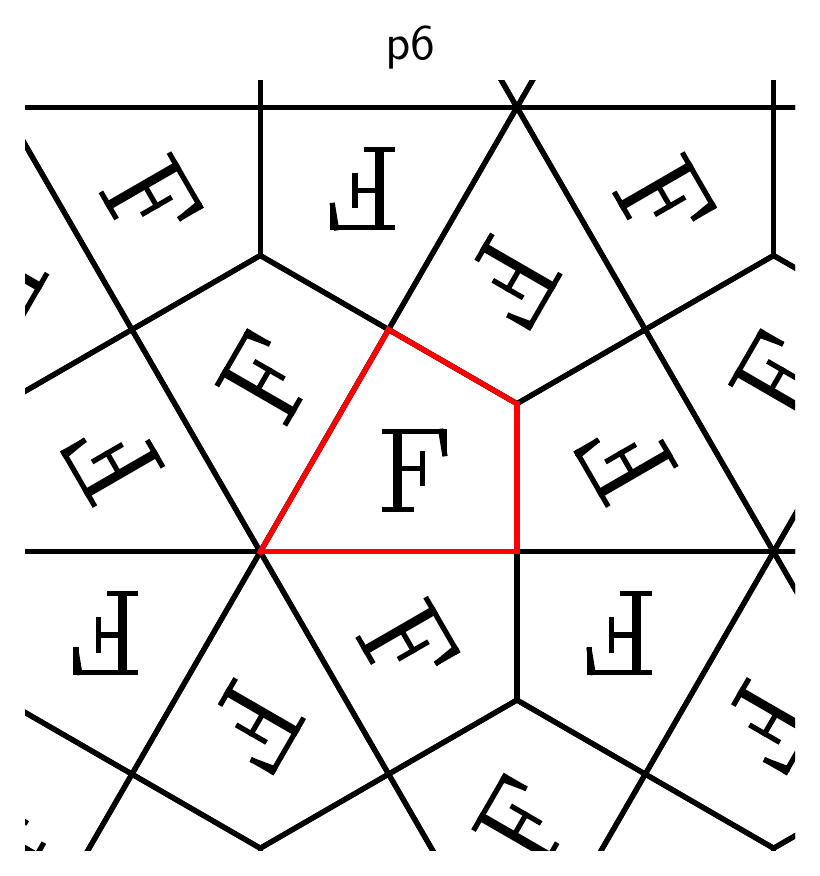}
\end{subfigure}\hfill
\begin{subfigure}[b]{0.16\textwidth}
\includegraphics[width=\textwidth]{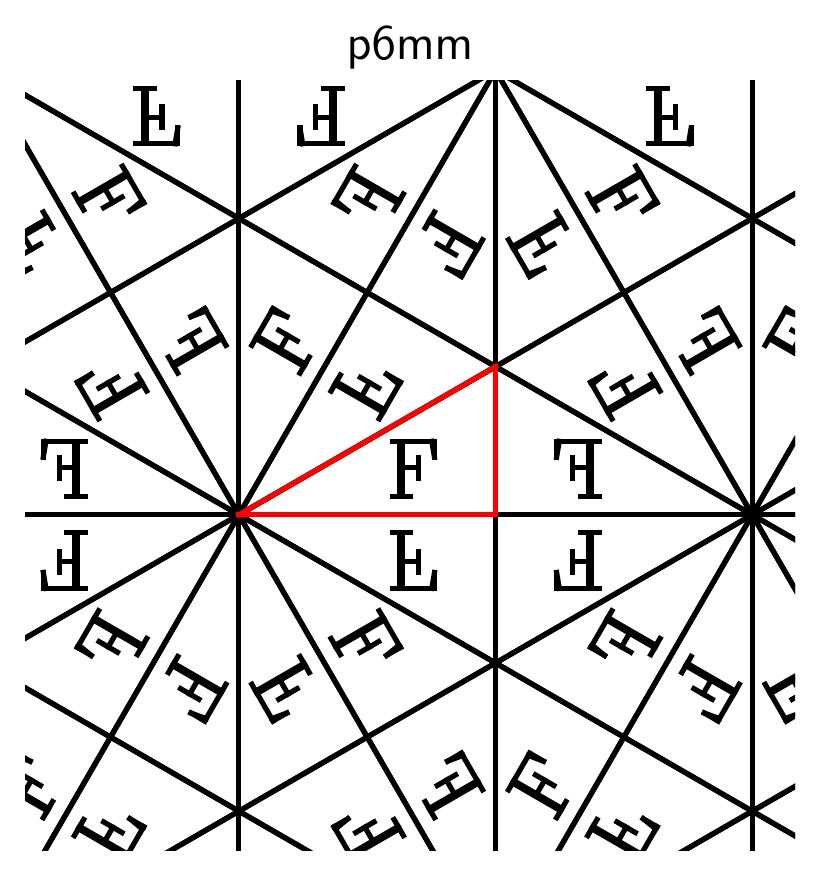}
\end{subfigure}
\caption{Tiling behavior of the 17 crystallographic groups on $\mathbb{R}^n$ (the wallpaper groups).}
\label{fig:wallpaper}
\end{figure}

\subsection{Basic properties}

Some properties of $\group$ can be read right off the definition:
Since $\group$ tiles the entire space with a set $\cF$ of finite diameter, we must have ${|\group|=\infty}$.
Since $\Pi$ is $n$-dimensional and convex, it contains an open metric ball of positive radius.
Each tile contains a copy of this ball, and these copies do not overlap. It follows that
\begin{equation}
  \label{discrete}
  d(\phi(x),\psi(x))\;>\;\varepsilon\qquad\text{ for all distinct }\phi,\psi\in\group\text{ and all }x\in\Pi^{\circ}\;.
\end{equation}
A group of isometries that satisfies \eqref{discrete}
for some ${\varepsilon>0}$ is called \kword{discrete}, in contrast to groups which contain, e.g., continuous rotations.
Discreteness implies $\group$ is countable, but not all countable groups of isometries are discrete
(the group $\mathbb{Q}^n$ of rational-valued shifts is a non-example).
In summary, every crystallographic group is
an infinite, discrete (and hence countable) subgroup of the Euclidean group on $\mathbb{R}^n$.

Suppose we choose one of the tilings in \cref{fig:wallpaper},
and rotate or shift the entire plane with the tiling on it.
Informally speaking, that changes the tiling, but not the tiling mechanism, and it is natural to consider
the two tilings isomorphic. More formally,
two crystallographic groups $\group$ and $\group'$ are \kword{isomorphic} if there is
an orientation-preserving, invertible, and affine (but not necessarily isometric) map
${\gamma:\mathbb{R}^n\rightarrow\mathbb{R}^n}$ such that
${\group'=\gamma\group}$, where ${\gamma\group:=\braces{\gamma\phi\gamma^{-1}\,|\,\phi\in\group}}$.
\begin{fact}[{\citep[][4.2.2]{Thurston:1997}}]
  Up to isomorphy, there are only finitely many crystallographic groups on $\mathbb{R}^n$ for each ${n\in\mathbb{N}}$.
  Specifically, there 17 such groups for ${n=2}$, and 230 for ${n=3}$.
\end{fact}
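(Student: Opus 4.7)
The plan is to derive finiteness from the three classical Bieberbach theorems, with the main non-trivial input being the Jordan--Zassenhaus finiteness theorem for finite subgroups of $\mathrm{GL}(n,\mathbb{Z})$.

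\textbf{Step 1 (Translation lattice).} I isolate the pure translations $T := \{\phi \in \group : A_\phi = \Id\}$, identifying each such $\phi$ with its shift vector $b_\phi \in \mathbb{R}^n$. Discreteness of $\group$ from \eqref{discrete} immediately gives that $T$, viewed as a subgroup of $(\mathbb{R}^n,+)$, is discrete. Because $\group$ tiles $\mathbb{R}^n$ with a bounded polytope $\Pi$ of finite diameter, the translation orbits of a point must escape to infinity in every direction, forcing $T$ to span $\mathbb{R}^n$. A discrete spanning subgroup of $\mathbb{R}^n$ is a lattice, so $T \cong \mathbb{Z}^n$ as an abstract group.

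\textbf{Step 2 (Finite point group).} Conjugation by any isometry preserves the property of being a translation, so $T \triangleleft \group$. The quotient $P := \group/T$, the \emph{point group}, injects into the orthogonal group $O(n)$ via $\phi T \mapsto A_\phi$. The image is discrete in $O(n)$ (inherited from discreteness of $\group$ together with the fact that $T$ absorbs the translational degrees of freedom), and $O(n)$ is compact, so $P$ is finite. Conjugation also gives a $P$-action on $T$ by lattice automorphisms; fixing a $\mathbb{Z}$-basis of $T$ yields a faithful representation $P \hookrightarrow \mathrm{GL}(n,\mathbb{Z})$.

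\textbf{Step 3 (Finiteness of lattice-plus-point-group data).} Here I invoke the Jordan--Zassenhaus theorem: up to conjugacy in $\mathrm{GL}(n,\mathbb{Z})$, there are only finitely many finite subgroups of $\mathrm{GL}(n,\mathbb{Z})$. This is the key arithmetic input and bounds the number of possibilities for the pair $(T, P\text{-action})$ up to a change of lattice basis, which is exactly a conjugation by an element of $\mathrm{GL}(n,\mathbb{Z})$, i.e., by an integral affine map. For fixed isomorphism class of $(T,P)$, the possible extensions $1 \to T \to \group \to P \to 1$ are classified by the cohomology $H^2(P,T)$. Since $P$ is finite and $T$ is finitely generated, $H^2(P,T)$ is a finite abelian group, so only finitely many extensions occur.

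\textbf{Step 4 (Translating to affine isomorphy).} Finally, the second Bieberbach theorem states that two crystallographic groups isomorphic as abstract groups are conjugate by an affine map of $\mathbb{R}^n$, matching exactly the notion of isomorphy used in the statement (up to a further factor of two for orientation, which does not affect finiteness). Combining Steps 1--4, the number of isomorphism classes in each dimension $n$ is finite. The specific counts $17$ and $230$ for $n=2,3$ are then obtained by an explicit enumeration, carried out historically by Fedorov, Schoenflies, and Barlow in dimension three. The principal obstacle is Step 3: the Jordan--Zassenhaus statement is genuinely nontrivial arithmetic, while the rest is group-theoretic bookkeeping once $T$ is identified as a normal lattice of finite index.
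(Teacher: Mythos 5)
The statement you are proving is presented in the paper as a \emph{Fact} cited from Thurston \citep[][4.2.2]{Thurston:1997}; the paper gives no proof of its own, so there is no internal argument to compare against. Your sketch is the standard modern route — the three Bieberbach theorems plus the Jordan--Zassenhaus finiteness theorem — and Steps 2--4 are a correct outline of that strategy, including the cohomological classification of extensions and the passage from abstract to affine isomorphism via the second Bieberbach theorem.

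However, Step 1 contains a genuine gap, and it is precisely the step that the classical literature regards as the hardest. You write that discreteness of $\group$ together with the fact that $\group$ tiles with a bounded polytope forces ``the translation orbits of a point [to] escape to infinity in every direction, forcing $T$ to span $\mathbb{R}^n$.'' This does not follow: the \emph{full} orbit $\group(x)$ escapes to infinity in every direction, but a priori the translation subgroup $T$ could be trivial or of deficient rank, with the escape produced entirely by rotations, screws, and glides. Showing that $T$ is a full-rank lattice is exactly the content of the first Bieberbach theorem, and its proof requires a nontrivial argument (bounding the orthogonal parts $A_\phi$ of nearby group elements and then manufacturing translations from commutators); it is not a consequence of discreteness and cocompactness alone in the direct way you suggest. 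You flag Step 3 (Jordan--Zassenhaus) as ``the principal obstacle,'' but Step 1 is at least as serious, and as written your argument for it is invalid. Once the first Bieberbach theorem is granted, the rest of your outline — discreteness of the point group in $O(n)$ hence finiteness, the faithful action $P \hookrightarrow \mathrm{GL}(n,\mathbb{Z})$, finitely many conjugacy classes of finite subgroups of $\mathrm{GL}(n,\mathbb{Z})$, finiteness of $H^2(P,T)$, and the second Bieberbach theorem to convert abstract isomorphism into affine conjugacy — is correct and gives the claimed finiteness, with the specific counts $17$ and $230$ obtained by explicit enumeration.
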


\section{Preliminaries: Invariant functions}
\label{sec:invariant:functions}

\newcommand{\FctG}{\mathbf{F}_{\group}}
\newcommand{\CG}{\mathbf{C}_{\group}}

A function ${f:\mathbb{R}^n\rightarrow\xspace}$, with values in some set $\xspace$, is
\kword{$\phi$-invariant} if it satisfies
\begin{equation*}
  f(\phi x)=f(x)\quad\text{ for all }x\in\mathbb{R}^n\qquad\text{ or in short }\quad f\circ\phi=f\;.
\end{equation*}
It is \kword{$\group$-invariant} if it is $\phi$-invariant
for all ${\phi\in\group}$.
We are specifically interested in $\group$-invariant functions that
are continuous, and write
\begin{equation*}
  \C(M):=\braces{f:\mathbb{R}^n\rightarrow\mathbb{R}\,|\,\text{$f$ continuous}}
  \quad\text{ and }\quad
  \CG:=\braces{f\in\C(\mathbb{R}^n)\,|\,
    f\text{ is $\group$-invariant}}\;.
\end{equation*}
More generally, a function
${f:(\mathbb{R}^n)^k\rightarrow\xspace}$ is \kword{$\group$-invariant in each argument} if
\begin{equation}
  \label{eq:invariant:k:arguments}
  f(\phi_1x_1,\ldots,\phi_kx_k)\;=\;f(x_1,\ldots,x_k)\quad\text{ for all }\phi_1,\ldots\phi_k\in\group\text{ and }x_1,\ldots,x_k\in\mathbb{R}^n\;.
\end{equation}

\subsection{Tiling with functions}

To construct a $\group$-invariant function, we may start with a function $h$ on $\Pi$
and ``replicate it by tiling''. For that to be possible,
$h$ must in turn be the restriction of a $\group$-invariant function to
$\Pi$. It must then satisfy ${h(\phi x)=h(x)}$ if both~$\phi x$ and~$x$ are in $\Pi$.
We hence define the relation
\begin{equation*}
  x\;\sim\;y
  \quad:\Longleftrightarrow\quad
  x,y\in\cF\text{ and }y=\phi(x)\text{ for some }\phi\in\group\setminus\braces{\Id}\;.
\end{equation*}
We note immediately that ${x\sim y}$ implies each point is also contained in
an adjacent tile, so both must be on the boundary $\partial\Pi$ of $\Pi$.
The requirement
\begin{equation}
  \label{pbc}
  h(x)\;=\;h(y) \qquad\text{ whenever }x\sim y
\end{equation}
is therefore a \kword{periodic boundary condition}.
If it holds, the function
\begin{equation}
  \label{eq:tiled:function}
  f(x)\;:=\;h(\phi^{-1} x)\qquad\text{ for }x\in\phi(\cF)\text{ and each }\phi\in\group
\end{equation}
is well-defined on $\mathbb{R}^n$, and is $\group$-invariant.
Conversely, every $\group$-invariant function $f$ can be obtained this way (by choosing $h$ as the
restriction $f|_{\Pi}$).
Informally, \eqref{eq:tiled:function} says that we stitch together function segments on tiles that are all copies of
$h$, and these segments overlap on the tile boundaries. The boundary condition ensures
that wherever such overlaps occur, the segments have the same value, so that
\eqref{eq:tiled:function} produces no ambiguities.
The special case of \eqref{pbc} for pure shift groups---where
$A_\phi$ is the identity matrix for all ${\phi\in\group}$---is known as a
\kword{Born-von Karman boundary condition} (e.g., \citet{Ashcroft:Mermin:1976}).

\subsection{Orbits and quotients}

An alternative way to express invariance is as follows: A function is $\group$-invariant
if and only if it is constant on each set of the form
\begin{equation*}
  \group(x)\;:=\;\braces{\phi x \,|\,\phi\in\group}
  \qquad\text{ for each }x\in\mathbb{R}^n\;.
\end{equation*}
The set $\group(x)$ is called the \kword{orbit} of $x$.
We see immediately that each orbit of a crystallographic group is countably infinite, but
locally finite: The definition of discreteness in~\eqref{discrete}
implies that every bounded subset of $\mathbb{R}^n$
contains only finitely many points of each orbit.
We also see that each point ${x\in\mathbb{R}^n}$ is in one and only one orbit, which means
the orbits form a partition of $\mathbb{R}^n$. The assignment ${x\mapsto\group(x)}$ is
hence a well-defined map
\begin{equation*}
  q(x)\;:=\;\group(x)
  \quad\text{ with image }\quad
  \mathbb{R}^n/\group\;:=\;q(\mathbb{R}^n)\;=\;\braces{\group(x)\,|\,x\in\mathbb{R}^n}\;.
\end{equation*}
The orbit set $\mathbb{R}^n/\group$ is also called the \kword{quotient set} or just the \kword{quotient} of
$\group$, and $q$ is called the \kword{quotient map} (e.g., \citet{Bonahon:2009}).
Since the orbits are mutually disjoint, we can informally think of $q$ as collapsing each
orbit into a single point, and $\mathbb{R}^n/\group$ is the set of such points.

\begin{figure}[b]
  \caption{\emph{Left}: A point $x$ and points on its orbit $\group(x)$ in a shift tiling of $\mathbb{R}^2$.
    \emph{Right}: The infimum in the definition of $d_\group(\group(x),\group(y))$ may be attained by the
    Euclidean distance $d_n(x,y)$ between a point $y$
    in $\Pi$ and a point $\phi x$ in a different tile~$\phi\Pi$.}
  \vspace{-.7cm}
  \begin{center}
    \begin{tikzpicture}
      \node at (0,0) {\includegraphics{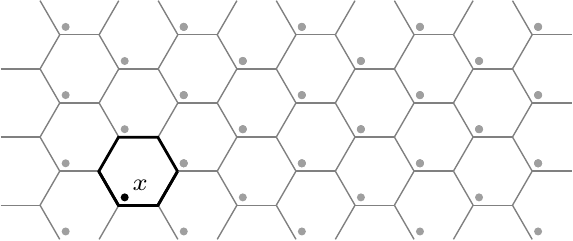}};
      \node at (7,0) {\includegraphics{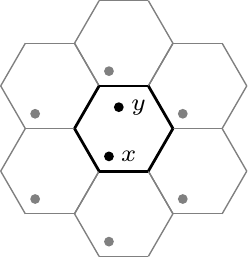}};
    \end{tikzpicture}
  \end{center}
  \label{fig:orbits}
\end{figure}

Quotient spaces are abstract but useful tools for expressing invariance properties:
For any function ${f:\mathbb{R}^n\rightarrow\mathbb{R}}$, we have
\begin{equation}
  \label{eq:bijection:f:quotient}
  f\text{ is $\group$-invariant }
  \quad\Longleftrightarrow\quad
  f=\hat{f}\circ q
  \quad\text{ for some function }\hat{f}:\mathbb{R}^n/\group\rightarrow\mathbb{R}\;,
\end{equation}
since each point of ${\mathbb{R}^n/\group}$ represents an orbit and $f$ is invariant iff it is constant on orbits.
We can also use the quotient to express continuity, by equipping it with a topology that satisfies
\begin{equation}
  \label{eq:bijection:f:quotient:continuous}
  f\in\C_{\group}
  \quad\Longleftrightarrow\quad
  f=\hat{h}\circ q
  \quad\text{ for some continuous }\hat{h}:\mathbb{R}^n/\group\rightarrow\mathbb{R}\;.
\end{equation}
There is exactly one such topology, called the \kword{quotient topology} in the literature. Its definition
can be made more concrete by metrizing it:
\begin{fact}[see \citet{Bonahon:2009}, Theorem 7.7]
  \label{fact:metric}
  If $\group$ is crystallographic, the function
  \begin{equation*}
    d_\group(\omega_1,\omega_2)\,:=\,\inf\braces{d(x,y)\,|\,{x\in \omega_1,y\in \omega_2}}
    \quad\text{ for }\omega_1,\omega_2\in\mathbb{R}^n/\group
  \end{equation*}
  is a valid metric on ${\mathbb{R}^n/\group}$, and it metrizes the quotient
  topology. A subset ${U\subset\mathbb{R}^n/\group}$ is open if and only
  if its preimage ${q^{-1}U}$ is open in $\mathbb{R}^n$.
\end{fact}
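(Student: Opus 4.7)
The plan is to prove three things in order: (i) that $d_\group$ satisfies the four metric axioms, (ii) that the metric topology coincides with the quotient topology, and (iii) observe that the open-set characterization at the end of the statement is then simply the defining property of the quotient topology expressed metrically. Two structural features of $\group$ will do the real work: each $\phi \in \group$ is an isometry (so distances are preserved), and every orbit is locally finite (a consequence of discreteness \eqref{discrete}).

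For the metric axioms, symmetry and non-negativity are immediate. For the \emph{triangle inequality}, fix $\epsilon > 0$ and choose representatives $x \in \omega_1,\ y_1 \in \omega_2$ with $d_n(x, y_1) < d_\group(\omega_1, \omega_2) + \epsilon/2$, and $y_2 \in \omega_2,\ z \in \omega_3$ with $d_n(y_2, z) < d_\group(\omega_2, \omega_3) + \epsilon/2$. Since $y_1, y_2$ lie in the same orbit, $y_2 = \phi y_1$ for some $\phi \in \group$; applying the isometry $\phi$ (so that $\phi x \in \omega_1$ and $d_n(\phi x, y_2) = d_n(x, y_1)$) yields $d_\group(\omega_1, \omega_3) \le d_n(\phi x, z) \le d_n(\phi x, y_2) + d_n(y_2, z)$, and letting $\epsilon \to 0$ gives the claim. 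For \emph{identity of indiscernibles}, the key observation is that for any fixed $x \in \omega_1$ and $y \in \omega_2$, isometry invariance collapses the double infimum to $d_\group(\omega_1, \omega_2) = \inf_{\phi \in \group} d_n(x, \phi y)$. If this equals zero, then a sequence $\phi_k y$ with $d_n(x, \phi_k y) \to 0$ eventually lies in a bounded neighbourhood of $x$; local finiteness of the orbit $\group(y)$ forces the sequence to take only finitely many values, and passing to a constant subsequence gives $\phi y = x$ for some $\phi$, hence $\omega_1 = \omega_2$.

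For the metrization step, the central computation is $B_\delta^{\group}(\omega) = q(B_\delta(x))$ for any representative $x \in \omega$, where $B_\delta(x)$ denotes the Euclidean open ball. One inclusion follows from $d_\group(\omega, q(z)) \le d_n(x, z)$; the reverse uses the infimum formula from the previous paragraph, since $\omega' \in B_\delta^{\group}(\omega)$ iff there exists $\phi$ with $\phi y \in B_\delta(x)$ for any fixed $y \in \omega'$. Both directions of topological equivalence then fall out. If $U$ is open in the quotient topology, then $q^{-1} U$ is open and $\group$-invariant, so for each $\omega \in U$ with representative $x$ we may pick $\delta$ with $B_\delta(x) \subset q^{-1} U$, giving $B_\delta^{\group}(\omega) = q(B_\delta(x)) \subset U$. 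Conversely, $q^{-1} B_\delta^{\group}(\omega) = \bigcup_{\phi \in \group} B_\delta(\phi x)$ is manifestly open in $\mathbb{R}^n$, so every $d_\group$-open set pulls back to an open set under $q$.

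The hard step is the identity of indiscernibles: this is precisely where an infimum that is \emph{a priori} only a limit must be shown to be attained, and it is the crystallographic structure (local finiteness of orbits) rather than mere countability of $\group$ that makes the argument go through. The pathological example of $\mathbb{Q}^n$ noted after \eqref{discrete} shows that the conclusion genuinely fails for a general countable isometry group, which is exactly why the proof relies on discreteness at this point.
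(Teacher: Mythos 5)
Your proof is correct. Note that the paper itself does not prove this statement — it is imported as a Fact with a citation to \citet{Bonahon:2009}, Theorem 7.7 — so there is no internal proof to compare against; your argument is essentially the standard one found there. All the key points are in place: the reduction of the double infimum to $\inf_{\phi\in\group} d_n(x,\phi y)$ via isometry invariance, the use of discreteness \eqref{discrete} (through local finiteness of orbits) to show the infimum is attained and hence that $d_\group(\omega_1,\omega_2)=0$ forces $\omega_1=\omega_2$ — which is indeed the only step where crystallographic structure, rather than mere countability, is needed — and the identity $q(B_\delta(x))=B^{\group}_\delta(q(x))$ together with $q^{-1}B^{\group}_\delta(q(x))=\bigcup_{\phi\in\group}B_\delta(\phi x)$, which gives both inclusions between the metric and quotient topologies; the final open-set characterization is then just the definition of the quotient topology.
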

Since $\group$ is discrete, the infimum in $d_\group$ is a minimum.
The distance of two orbits (considered as points in $\mathbb{R}^n/\group$)
is hence the shortest Euclidean distance between points in these orbits (considered as sets
in $\mathbb{R}^n$), see \cref{fig:orbits} (right).
If $x$ and $y$ are points in the polytope $\Pi$, we have
\begin{equation*}
  x\sim y\quad\Longleftrightarrow\quad d_{\group}(\group(x),\group(y))\;=\;0\;.
\end{equation*}
Informally speaking, $d_\group$ implements the periodic boundary condition \eqref{pbc}.
The metric space $(\mathbb{R}^n/\group,d_\group)$ is also called the \kword{quotient space} or
\kword{orbit space} of $\group$. A very important property of crystallographic groups is that they
have compact quotient spaces:\nolinebreak
\begin{fact}[{\citep[][Proposition 1.6]{Vinberg:Shvarsman:1993}}]
  \label{fact:compact:quotient}
  If a discrete group $\group$ of isometries tiles $\mathbb{R}^n$ with a set~$M$,
  the quotient space $(\mathbb{R}^n/\group,d_\group)$ is homeomorphic to the quotient space~$M/\group$.
  If $\group$ is crystallographic and tiles
  with a convex polytope, then $(\mathbb{R}^n/\group,d_\group)$ is compact.
\end{fact}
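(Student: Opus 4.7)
The plan is to construct the natural map
\[
\tilde\iota: M/\group \to \mathbb{R}^n/\group, \qquad [x]_M \mapsto \group(x),
\]
induced by the inclusion $\iota: M \hookrightarrow \mathbb{R}^n$, and to show it is a homeomorphism; the compactness assertion will then follow by transferring it from a compact model.

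First I would verify that $\tilde\iota$ is a continuous bijection. Well-definedness and injectivity follow directly from the definition of the equivalence relation on $M$, since two points of $M$ are identified in $M/\group$ exactly when they lie in a common $\group$-orbit. Surjectivity uses the tiling hypothesis $\bigcup_{\phi \in \group} \phi M = \mathbb{R}^n$, which guarantees that every orbit meets $M$. Continuity follows from the universal property of the quotient topology on $M/\group$, applied to the continuous map $q \circ \iota: M \to \mathbb{R}^n/\group$; here $q$ is itself $1$-Lipschitz, since $d_\group(q(x), q(y)) \leq d_n(x,y)$ is immediate from the infimum defining $d_\group$, and hence continuous.

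For the compactness claim, when $\group$ is crystallographic the set $M = \Pi$ is the convex hull of a finite point set, hence closed and bounded in $\mathbb{R}^n$ and therefore compact. The continuous image of $M$ under its own quotient map is $M/\group$, which is thus compact, and pushing it through the continuous surjection $\tilde\iota$ yields compactness of $\mathbb{R}^n/\group$. Since $d_\group$ is a metric (\cref{fact:metric}), $\mathbb{R}^n/\group$ is Hausdorff, and a continuous bijection from a compact space to a Hausdorff space is automatically a homeomorphism; this settles both parts of the statement in the crystallographic setting.

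For the general homeomorphism claim (arbitrary tiling set $M$, no compactness available), inverse continuity of $\tilde\iota$ must be established directly, which reduces to showing that $q: \mathbb{R}^n \to \mathbb{R}^n/\group$ is an open map. This in turn follows from the identity $q^{-1}(q(U)) = \bigcup_{\phi \in \group} \phi U$ for every open $U \subset \mathbb{R}^n$: each $\phi$ is a homeomorphism of $\mathbb{R}^n$, so the union on the right is open, whence $q(U)$ is open in the quotient topology. The main obstacle is essentially bookkeeping rather than substance --- one has to track how boundary points of $M$ are glued together by $\sim$ --- which is precisely what the polytope-plus-compactness route above sidesteps in the setting actually needed in the sequel.
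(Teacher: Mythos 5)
This statement is imported from the literature (Vinberg--Shvartsman, Prop.\ 1.6); the paper gives no proof of its own, so your argument can only be judged on its merits. The part the paper actually needs downstream is handled correctly and completely by you: $\tilde\iota$ is a well-defined continuous bijection (the universal-property argument with the $1$-Lipschitz quotient map $q$ is fine), $\Pi$ is compact, hence $M/\group$ and $\mathbb{R}^n/\group$ are compact, and since $d_\group$ metrizes the quotient topology (\cref{fact:metric}) the target is Hausdorff, so the continuous bijection from a compact space is a homeomorphism. That is a clean, self-contained proof of the second sentence and of the first sentence in the convex-polytope case.

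The gap is in your treatment of the general first sentence. The claimed reduction ``inverse continuity of $\tilde\iota$ reduces to $q:\mathbb{R}^n\to\mathbb{R}^n/\group$ being open'' does not work: to show $\tilde\iota$ is open you must show that $\tilde\iota(V)=q\bigl(q_M^{-1}(V)\bigr)$ is open for $V$ open in $M/\group$, and $q_M^{-1}(V)$ is only open \emph{relative to} $M$ (its points may lie on $\partial M$), so the openness of $q$ on open subsets of $\mathbb{R}^n$ says nothing about it; the issue is precisely whether the $\group$-saturation of a relatively open subset of $M$ is open in $\mathbb{R}^n$, which is the whole content of the claim, not bookkeeping. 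Moreover, some hypothesis on $M$ beyond the paper's literal wording is genuinely needed: $M=[0,1)$ with $\group=\mathbb{Z}$ acting by integer shifts tiles $\mathbb{R}$ in the sense of \cref{sec:definitions}, yet $M/\group\cong[0,1)$ is not homeomorphic to the circle $\mathbb{R}/\group$. The standard route (and in effect what the cited source does, with $M$ a closed fundamental domain) is dual to yours: use discreteness to get local finiteness of the family $\braces{\phi M}$, conclude that for closed $C\subset M$ the saturation $\bigcup_\phi\phi C$ is closed, hence $q|_M:M\to\mathbb{R}^n/\group$ is a closed continuous surjection and therefore a quotient map whose fibers are exactly the classes of $\sim$, which yields the homeomorphism without any compactness. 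If you only intend to cover the case used in the paper, your compactness argument suffices, but you should say so rather than present the general case as a routine extension.
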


\subsection{Transversals and projections}

Since orbit spaces are abstract objects, we can only work with them implicitly.
One way to do so is by representing each orbit by one of its points in $\mathbb{R}^n$.
A subset of $\mathbb{R}^n$ that contains exactly one point of each orbit is called a
\kword{transversal}. In general, transversals can be exceedingly complex sets \citep{Becker:Kechris:1996},
but crystallographic
groups always have simple transversals. \cref{algorithm:transversal} in the next section
constructs a transversal explicitly.
In the following, we will always write $\tilde{\Pi}$ to mean
\begin{equation*}
  \tilde{\Pi}\;:=\;
  \text{ a transversal contained in $\Pi$ computed by \cref{algorithm:transversal}.}
\end{equation*}
Given such a transversal, we can define the \kword{projector} ${p:\mathbb{R}^n\rightarrow\Pi}$ as
\begin{equation}
  \label{eq:projector}
  p(x)\;:=\;\text{ the unique element of }\group(x)\cap\widetilde{\Pi}\;.
\end{equation}
If we think of each point in $\tilde{\Pi}$ as a concrete representative of an element of ${\mathbb{R}^n/\group}$,
then~$p$ is similarly a concrete representation of the quotient map $q$, and we can translate
the identities above accordingly:
The projector is by definition $\group$-invariant,
since we can write~$f$ in \eqref{eq:tiled:function} as ${f=h\circ p}$. That shows
\begin{equation}
  \label{eq:bijection:Rn:Pi}
  f:\mathbb{R}^n\rightarrow\mathbb{R}\text{ is $\group$-invariant }
  \;\;\Longleftrightarrow\;\;
  f\;=\;h\circ p \text{ for some }h:\cF\rightarrow\mathbb{R}\text{ satisfying }\eqref{pbc}\;.
\end{equation}
Although $p$ is not continuous as a function ${\mathbb{R}^n\rightarrow\Pi}$, continuity only fails at the boundary, and
$p$ behaves like a continuous function when composed with $h$:
\begin{lemma}
  \label{lemma:projector}
  Let ${h:\Pi\rightarrow\xspace}$ be a continuous function with values in a topological space $\xspace$.
  If $h$ satisfies \eqref{pbc}, then ${h\circ p}$ is a continuous $\group$-invariant function
  ${\mathbb{R}^n\rightarrow\xspace}$. It follows that
  \begin{equation}
  \label{eq:projector:composition}
  f\in\C_{\group}
  \;\;\Longleftrightarrow\;\;
  f\;=\;h\circ p \quad\text{ for some continuous }h:\cF\rightarrow\mathbb{R}\text{ satisfying }\eqref{pbc}\;.
\end{equation}
\end{lemma}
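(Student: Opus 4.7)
I handle the two assertions in turn: first, that $h\circ p$ is continuous and $\group$-invariant when $h$ is continuous and satisfies \eqref{pbc}, and then the biconditional \eqref{eq:projector:composition}.

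The $\group$-invariance of $h\circ p$ is immediate from the definition \eqref{eq:projector}: since $\group(\phi x)=\group(x)$ for any $\phi\in\group$, the uniqueness of the element of $\group(x)\cap\tilde{\Pi}$ forces $p(\phi x)=p(x)$. Continuity is the substantive step, and I plan to argue it directly from the tiling. Since tiles have positive $n$-volume and overlap only on boundaries, they are locally finite, so every $x\in\mathbb{R}^n$ lies in finitely many tiles $\phi_1\Pi,\ldots,\phi_k\Pi$. On the relative interior of each $\phi_i\Pi$, the projector agrees with the continuous affine map $\phi_i^{-1}$ (interior points of a tile lie in $\tilde{\Pi}$, since their orbit meets $\Pi$ only once), so $h\circ p$ is continuous there. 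At a boundary point $x$, any convergent sequence $x_m\to x$ can be split according to which of the finitely many tiles contains $x_m$; on the subsequence drawn from $\phi_i\Pi$, $p(x_m)\to\phi_i^{-1}x$ and therefore $h(p(x_m))\to h(\phi_i^{-1}x)$ by continuity of $h$. The preimages $\phi_i^{-1}x$ and $\phi_j^{-1}x$ both lie in $\Pi$ and are related by $\phi_j^{-1}\phi_i\in\group$, so when they are distinct one has $\phi_i^{-1}x\sim\phi_j^{-1}x$; condition \eqref{pbc} then forces $h$ to take a common value on them, which, upon choosing the index $j$ with $\phi_j^{-1}x\in\tilde{\Pi}$, coincides with $h(p(x))$. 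Thus $h\circ p$ is continuous at $x$.

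For the biconditional, the $\Leftarrow$ direction is exactly what has just been established. For $\Rightarrow$, given $f\in\C_\group$ I set $h:=f|_\Pi$. Continuity of $h$ is automatic, and \eqref{pbc} follows from $\group$-invariance of $f$: if $x\sim y$ then $y=\phi x$ for some $\phi\in\group$, so $h(y)=f(\phi x)=f(x)=h(x)$. Finally, for every $x\in\mathbb{R}^n$, $p(x)$ lies in the orbit $\group(x)$, so $p(x)=\psi x$ for some $\psi\in\group$ and $h(p(x))=f(\psi x)=f(x)$, giving $f=h\circ p$.

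The main obstacle I expect is the continuity step at boundary points, specifically dealing carefully with possible coincidences $\phi_i^{-1}x=\phi_j^{-1}x$ (in which case no appeal to \eqref{pbc} is needed) and verifying that every tile-labelled subsequence yields the same limiting value. If the direct argument proves cumbersome, a cleaner alternative is to observe that \eqref{pbc} makes $h$ descend to a well-defined continuous map on the quotient $\Pi/\sim$, invoke the homeomorphism $\Pi/\sim\cong\mathbb{R}^n/\group$ provided by \cref{fact:compact:quotient}, and transport continuity back to $\mathbb{R}^n$ via \eqref{eq:bijection:f:quotient:continuous}.
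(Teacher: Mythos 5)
The paper states \cref{lemma:projector} without giving a proof, so there is no proof of record to compare against; your job was to supply one. Your primary argument is a direct, elementary one that works, and it is quite different in spirit from the quotient-topology route the paper implicitly suggests (and which you correctly flag as a cleaner alternative): factor $h$ through $\Pi/\sim$, invoke the homeomorphism $\Pi/\group \cong \mathbb{R}^n/\group$ of \cref{fact:compact:quotient}, and apply \eqref{eq:bijection:f:quotient:continuous}. The direct argument is more self-contained, at the cost of having to manage local finiteness and boundary matching by hand; the quotient route delegates those to cited facts.

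One imprecision in your direct argument is worth fixing. You write that on the subsequence drawn from $\phi_i\Pi$ one has ``$p(x_m)\to\phi_i^{-1}x$''. This is not true in general: if some $x_m$ lies on the boundary of several tiles, $p(x_m)$ may be a different orbit representative than $\phi_i^{-1}x_m$, and $p(x_m)$ can jump between facets of $\Pi$ without converging. What your argument actually needs — and supports — is only that $h(p(x_m))\to h(\phi_i^{-1}x)$, which follows because $p(x_m)$ and $\phi_i^{-1}x_m$ are both in $\Pi$ and on the same orbit, so \eqref{pbc} gives $h(p(x_m))=h(\phi_i^{-1}x_m)$, and then continuity of $h\circ\phi_i^{-1}$ finishes. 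Equivalently, refine subsequences not by which tile contains $x_m$ but by the group element $\psi_m$ with $p(x_m)=\psi_m^{-1}x_m$; this takes finitely many values for $m$ large (local finiteness again), and on a subsequence with $\psi_m=\phi$ constant you do get $p(x_m)=\phi^{-1}x_m\to\phi^{-1}x\in\Pi$ and then $h(\phi^{-1}x)=h(p(x))$ by \eqref{pbc}. Your treatment of the $\Rightarrow$ direction and of $\group$-invariance of $h\circ p$ is correct as written.
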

\noindent Since $p$ exists for any choice of $\group$ and $\Pi$, and since it can be evaluated algorithmically,
we have hence reduced the problem of constructing continuous invariant functions to the problem
of finding functions that satisfy the periodic boundary condition \eqref{pbc}.

\section{Taking quotients algorithmically: Orbit graphs}
\label{sec:orbit:graphs}

\begin{figure}[t]
  \begin{center}
    \begin{tikzpicture}
      \begin{scope}
      \node at (0,0) {\includegraphics[width=2.8cm]{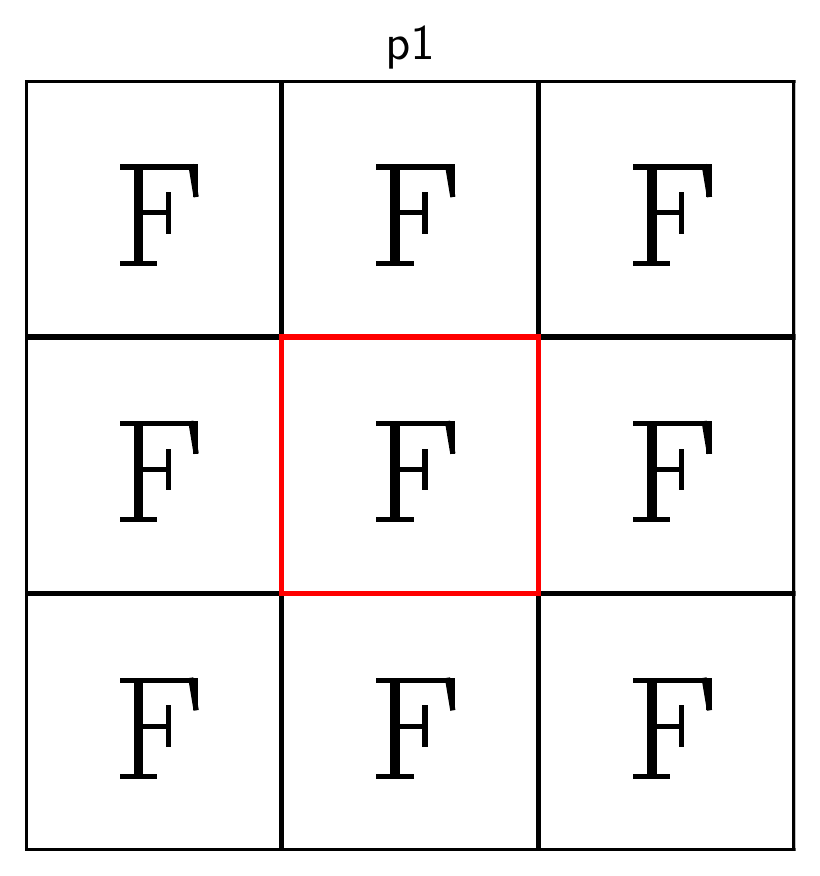}};
      \node at (4,0) {\includegraphics[width=2.8cm]{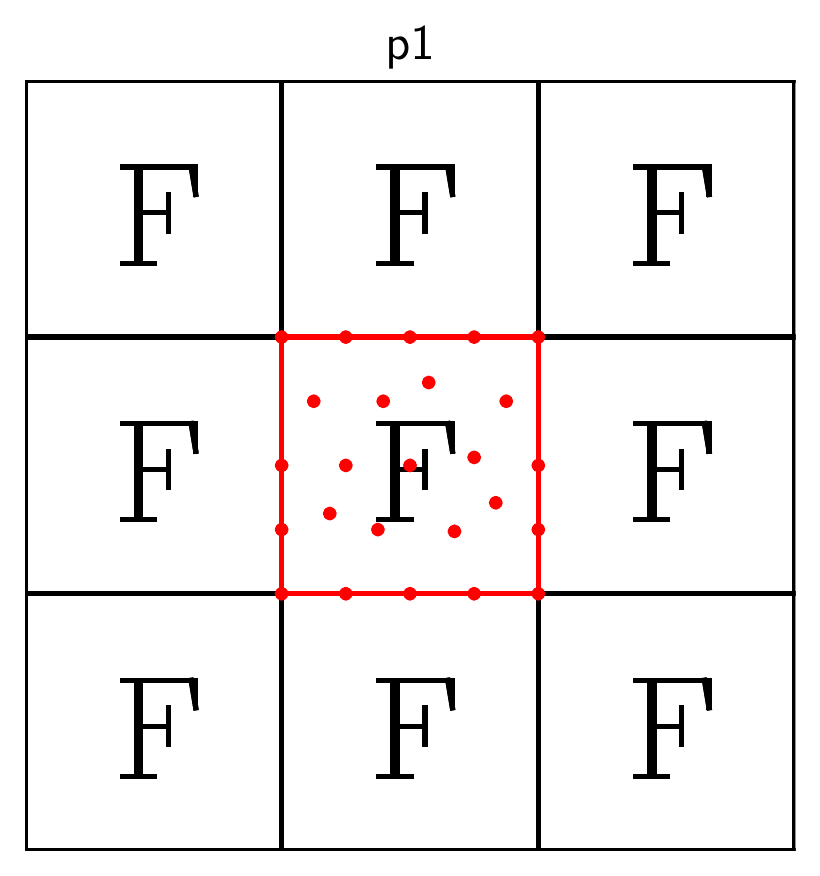}};
      \node at (8,0) {\includegraphics[width=2.8cm]{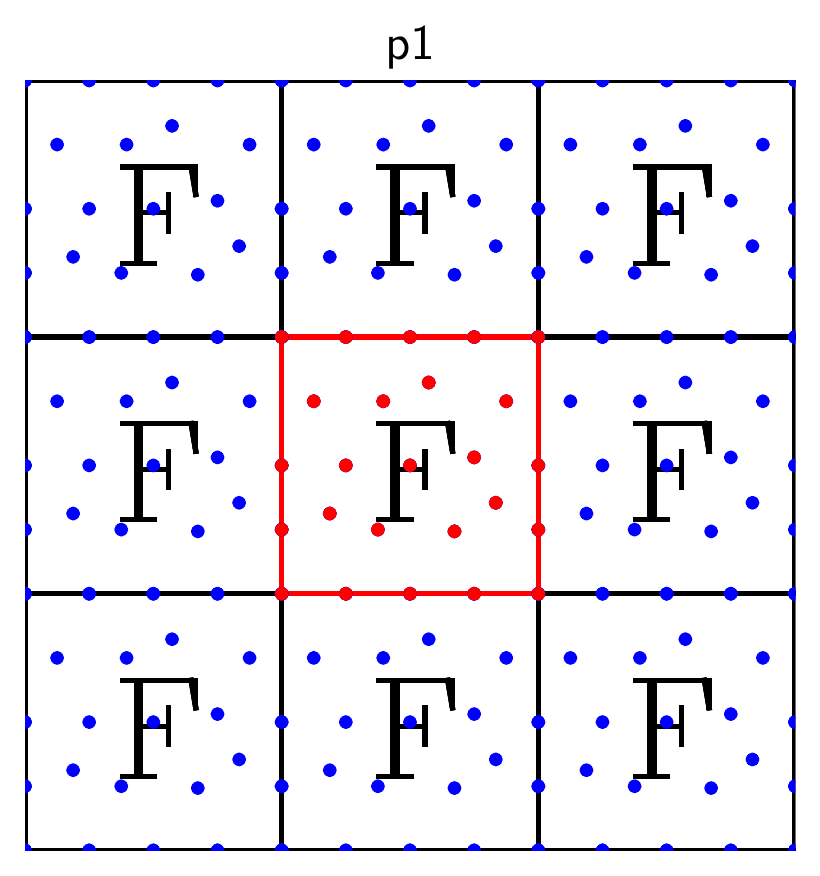}};
      \node at (12,0) {\includegraphics[width=2.8cm]{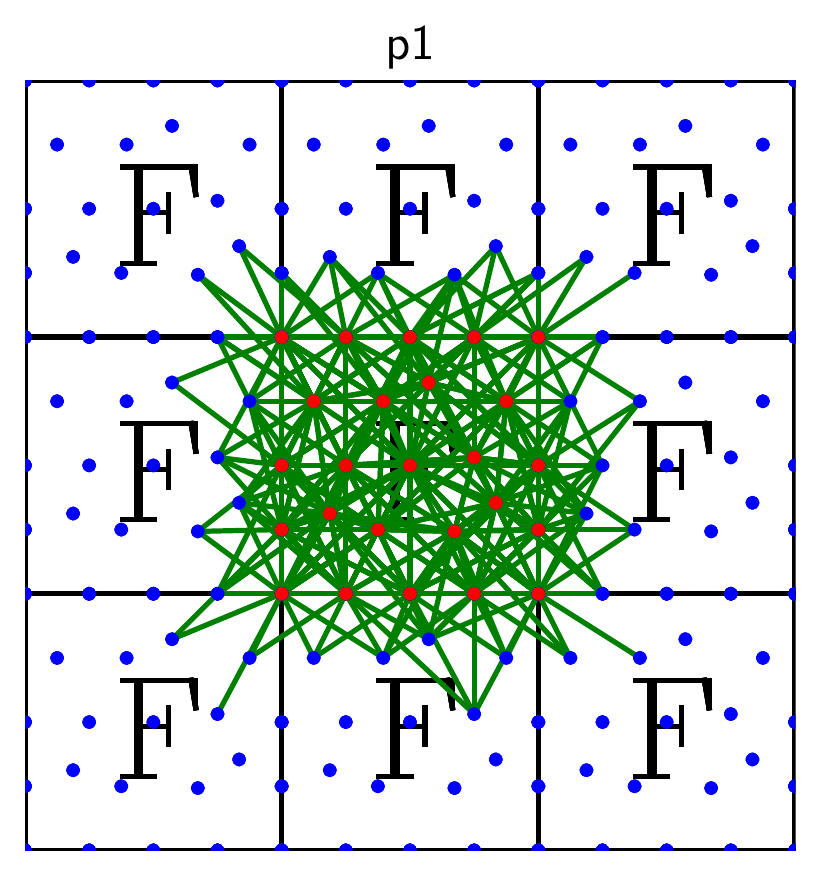}};
      \end{scope}
      \begin{scope}[yshift=-1.8cm]
        \node at (0,0) {\scriptsize $\Pi$ within the tiling};
        \node at (4,0) {\scriptsize construct net};
        \node at (8,0) {\scriptsize apply elements of $\mathcal{A}_\Pi$};
        \node at (12,0) {\scriptsize orbit graph};
      \end{scope}
      \begin{scope}[yshift=-3.6cm]
      \node at (0,0) {\includegraphics[width=2.8cm]{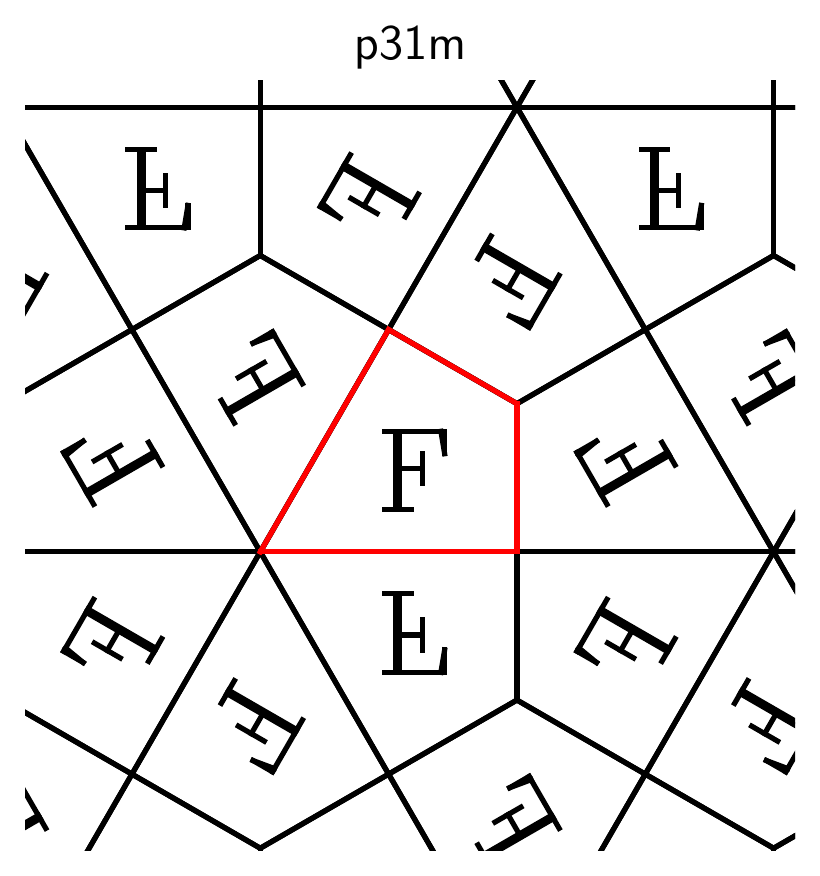}};
      \node at (4,0) {\includegraphics[width=2.8cm]{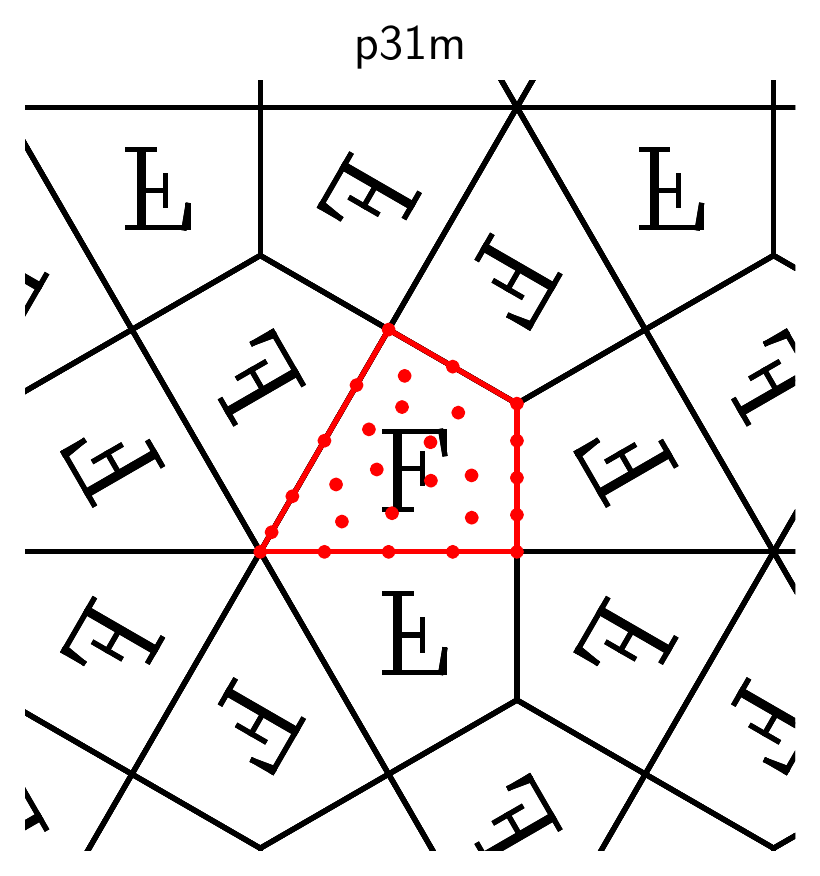}};
      \node at (8,0) {\includegraphics[width=2.8cm]{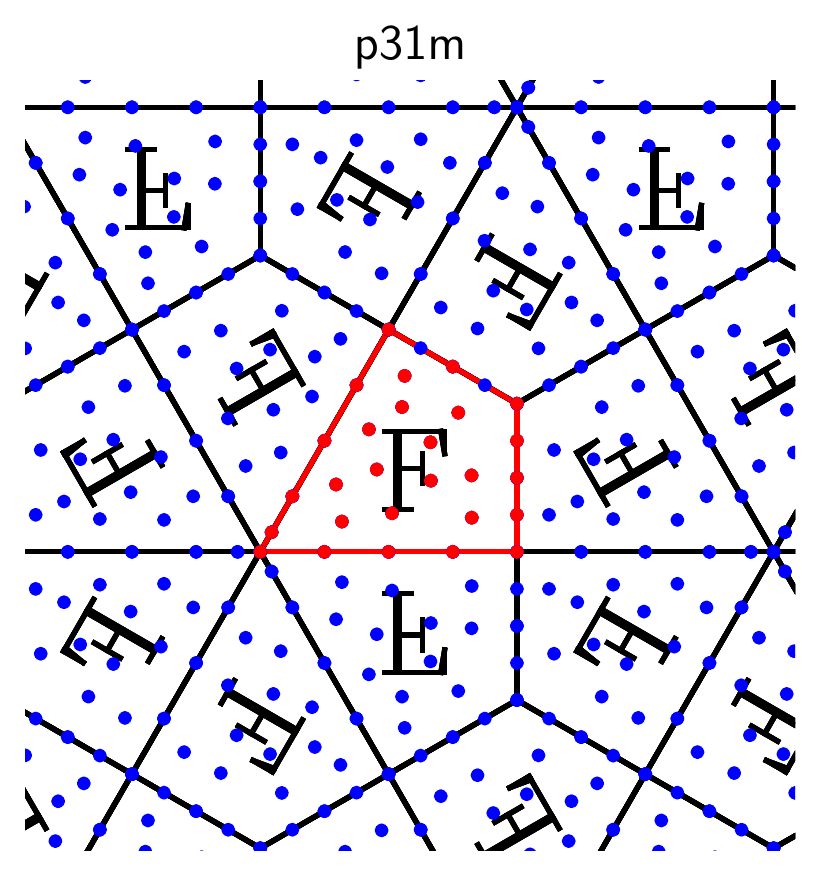}};
      \node at (12,0) {\includegraphics[width=2.8cm]{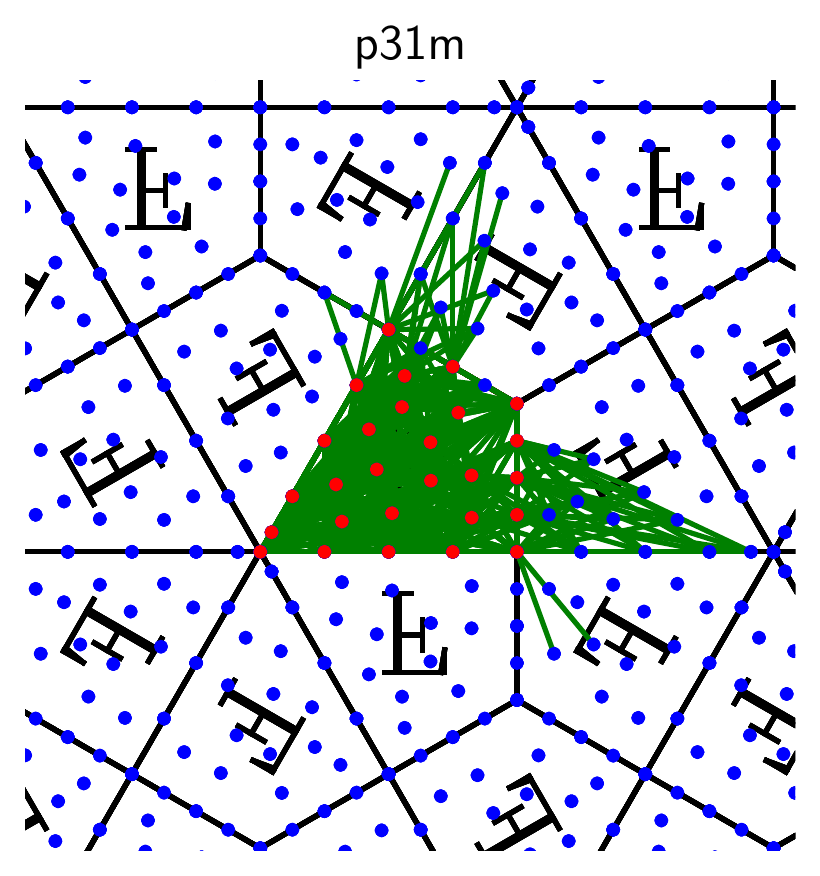}};
      \end{scope}
    \end{tikzpicture}
  \end{center}
  \vspace{-.5cm}
  \caption{Orbit graph construction for two plane groups: \texttt{p1} and \texttt{p31m}. Starting with the fundamental region (left), place points to construct an~$\epsilon$-net (middle left),
    apply local group operations to these points (middle right), then add edges which may include vertices outside the fundamental region (right). }
  \label{fig:orbit:graphs}
\end{figure}
To work with invariant functions computationally, we
must approximate the quotient metric. We do so using a data structure that we call an orbit graph,
in which two points are connected if their orbits are close to each other. More formally, any undirected graph
is a metric space when equipped with path length as distance. The metric space defined by the graph
$\mathcal{G}$ below discretizes the metric space ${(\mathbb{R}^n/\group,d_\group)}$.
To define $\mathcal{G}$, fix constants~${\varepsilon,\delta>0}$.
A finite set $\Gamma$ is an \kword{$\varepsilon$-net} in $\Pi$ if each point lies within distance~$\varepsilon$ of~$\Gamma$,
\begin{equation*}
  \Gamma\text{ is an $\varepsilon$-net }
  \quad:\Leftrightarrow\quad
  \text{ for each }x\in\Pi\text{ there exists }z\in\Gamma\text{ such that }d_n(x,z)\,<\,\varepsilon\;,
\end{equation*}
see e.g., \citet{Cooper:Hodgson:Kerckhoff:2000}.
If $\Gamma$ is an $\varepsilon$-net in $\Pi$, we call the graph
\begin{equation*}
  \mathcal{G}=\mathcal{G}(\varepsilon,\delta)=(\Gamma,E)
  \quad\text{ where }\quad E:=\braces{(x,y)\,|\,d_\group(\group(x),\group(y)<\delta}
\end{equation*}
an \kword{orbit graph} for $\group$ and $\Pi$.

\subsection{Computing orbit graphs}

Algorithmically, an orbit graph can be constructed as follows: Constructing an $\varepsilon$-net is a standard problem in
computational geometry and can be solved efficiently (e.g., \citet{Haussler:Welzl:1987}).
Having done so, the problem we have to solve is:
\begin{equation*}
  \text{Given points }x,y\in\Pi\text{, determine }d_\group(\group(x),\group(y))\;.
\end{equation*}
Since $\Pi$ is a polytope, its diameter
\begin{equation*}
  \text{diam}(\Pi)\;:=\;\max\braces{d_n(x,y)\,|\,x,y\in\Pi}\;<\;\infty
\end{equation*}
can also be evaluated computationally. By definition of $d_\group$, we have
\begin{equation*}
  d_\group(\group(x),\group(y))
  \;=\;
  \min_{\phi\in\group}d_n(x,\phi y)
  \;\leq\;
  d_n(x,y)
  \;\leq\;
  \text{diam}(\Pi)\;.
\end{equation*}
That shows the minimum is always attained for a point $\phi y$ on a tile $\phi\Pi$ that lies within
distance $\text{diam}(\Pi)$ of $x$. The set of transformations that specify these tiles is
\begin{equation*}
  \mathcal{A}_\Pi
  \;=\;
  \braces{\phi\in\group \,|\,d_n(x,\phi z)\leq\text{diam}(\Pi)\text{ for some }z\in\Pi}\;.
\end{equation*}
This set is always finite, since $\group$ is discrete and the ball of radius $\text{diam}(\Pi)$ is compact.
We can hence evaluate the quotient metric as
\begin{equation*}
  d_\group(\group(x),\group(y))
  \;=\;
  \min\braces{d_n(x,\phi y)\,|\,\phi\in\mathcal{A}_\Pi}\,,
\end{equation*}
which reduces the construction of $E$ to a finite search problem. In summary:
\begin{algorithm}[Constructing the orbit graph]{\ }\\[.5em]
  \begin{tabular}{cl}
  1.) & Construct the~$\varepsilon$-net~$\Gamma$.\\
  2.) & Find local group elements~$\mathcal{A}_\Pi$.\\
  3.) & For each pair~$x,y\in\Gamma$, find $d_\group(\group(x),\group(y))=\min\braces{d_n(x,\phi y)\,|\,\phi\in\mathcal{A}_\Pi}$.\\
  4.) & Add an edge between~$x$ and~$y$ if~$d_\group(\group(x),\group(y)) < \delta$.
  \end{tabular}
\end{algorithm}
The construction is illustrated in \cref{fig:orbit:graphs}.

\subsection{Computing a transversal}
\label{sec:proof:transversal}

Recall that the \kword{faces} of a polytope are its vertices, edges, and so forth; the
facets are the ${(n-1)}$-dimensional faces. The polytope itself is also a face, of dimension $n$.
See \citep{Ziegler:1995} for a precise definition. Given $\Pi$ and $\group$, we will call two
faces $S$ and $S'$ $\group$-equivalent if ${S'=\phi S}$ for some ${\phi\in\group}$.
Thus, if ${S=\Pi}$, its equivalence class is $\braces{\Pi}$.
If $S$ is a facet, it is equivalent to at most one distinct facet, so its
equivalence class has one or two elements. The equivalence classes
of lower-dimensional faces may be larger---if $\group$ is \texttt{p1} and $\Pi$ a square,
for example, all four vertices of $\Pi$ are $\group$-equivalent.
\begin{algorithm}[Constructing a transversal]{\ }\\[.5em]
  \label{algorithm:transversal}
  \begin{tabular}{rl}
    1) & Start with an exact tiling. Enumerate all faces of $\Pi$.\\
    2) & Sort faces into $\group$-equivalence classes.\\
    3) & Select one face from each class and take its relative interior.\\
    4) & Output the union $\tilde{\Pi}$ of these relative interiors.
  \end{tabular}
\end{algorithm}
\vspace{.5em}
\begin{lemma}
  The set $\tilde{\Pi}$ constructed by \cref{algorithm:transversal} is a transversal.
\end{lemma}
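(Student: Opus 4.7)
The plan is to show that $\tilde{\Pi}$ meets each $\group$-orbit in exactly one point, arguing surjectivity and injectivity separately. Two structural ingredients drive the argument. First, the relative interiors of the faces of a convex polytope form a partition of the polytope \citep{Ziegler:1995}, so $\tilde{\Pi}$ is the disjoint union $\bigsqcup_{[F]}(F^*)^\circ$ indexed by the $\group$-equivalence classes of faces, with $F^*$ the selected representative. Second, the tiling $\{\phi\Pi \,:\, \phi \in \group\}$ must be face-to-face: for every $\phi \in \group \setminus \{\Id\}$, the intersection $\phi\Pi \cap \Pi$ is either empty or a common face of both tiles. This property is not automatic for an arbitrary convex fundamental region---the half-shifted brick tiling of $\mathbb{R}^2$ is a counterexample---but it holds for the standard asymmetric units tabulated in \citep{Hahn:2011}, which are the intended input to the algorithm.

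For surjectivity, I would take $x \in \mathbb{R}^n$, use the tiling property to obtain $\phi \in \group$ with $\phi x \in \Pi$, and place $\phi x$ in a unique relative interior $F^\circ$ by the partition property. The class $[F]$ has a chosen representative $F^* = \psi F$ for some $\psi \in \group$; since isometries are homeomorphisms of the ambient affine span, they map relative interiors to relative interiors, so $\psi\phi x \in (F^*)^\circ \subseteq \tilde{\Pi}$ and lies in the orbit $\group(x)$.

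For injectivity, I would suppose $x_1, x_2 \in \tilde{\Pi}$ satisfy $x_2 = \phi x_1$ for some $\phi \in \group$, with $x_i \in F_i^\circ$ for the selected representatives. Then $\phi x_1 \in (\phi F_1)^\circ$ while $x_2 \in F_2^\circ$. The face-to-face property forces $\phi F_1 \cap \Pi$ to be a common face of $\Pi$ and $\phi\Pi$, and because its relative interior meets $F_2^\circ$, the partition property then forces $\phi F_1 = F_2$. Thus $F_1$ and $F_2$ are $\group$-equivalent, and the single-representative rule yields $F_1 = F_2 = F$, placing $\phi$ in the setwise stabilizer $\Stab_\group(F)$.

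The remaining step---concluding $\phi x_1 = x_1$ from $\phi \in \Stab_\group(F)$---is what I expect to be the real obstacle. For an arbitrary face, this stabilizer can act non-trivially on the relative interior: for instance, if a rotation centre of $\group$ happens to lie in the relative interior of an edge of $\Pi$, a two-fold rotation stabilises the edge setwise while permuting points within it, and the algorithm as stated would then emit two orbit representatives from that edge. The lemma therefore implicitly requires $\Pi$ to be an asymmetric unit in the crystallographic sense---one in which no non-identity element of $\group$ sends a point of any $F^\circ$ to another point of the same $F^\circ$. This is the convention of \citep{Hahn:2011}; alternatively, it can be enforced by a preliminary refinement step that subdivides each face into the orbits of its stabilizer before step 2 of the algorithm. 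Under this convention $\phi|_{F^\circ}$ is the identity, whence $x_1 = x_2$, completing the proof that $\tilde{\Pi}$ is a transversal.
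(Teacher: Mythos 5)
Your argument follows essentially the same route as the paper's own proof: partition $\Pi$ into relative interiors of faces, use exactness to show that the orbit of any $x \in S^\circ$ meets only face interiors $\group$-equivalent to $S$, and then pass to the selected representative. What you have added, correctly, is a diagnosis of the step that neither your argument nor the paper's justifies. The paper's proof ends with ``Since we select exactly one element of this class, exactly one point of $\group x$ is contained in $\tilde\Pi$,'' which tacitly assumes the selected face interior meets the orbit in a single point. That fails precisely when some $\phi \neq \Id$ stabilizes the selected face $S$ setwise while acting non-trivially on $S^\circ$. Your two-fold-rotation example is genuine and not contrived: for \texttt{p2} with translation lattice $2\mathbb{Z}\times\mathbb{Z}$ and the square $\Pi = [0,1]^2$, the rotation $r(x,y) = (2-x,\,1-y)$ about $(1,\tfrac12)$, the midpoint of the right edge $S$, satisfies $rS = S$ and $\Pi \cap r\Pi = S$, so the tiling is exact, yet $(1,\tfrac14)$ and $(1,\tfrac34)$ both lie in $S^\circ$ and on the same orbit, so \cref{algorithm:transversal} outputs a set that is not a transversal.

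The lemma as stated, under only the exactness hypothesis, is therefore false, and the paper's proof overlooks this case. Both repairs you propose are sound: either restrict the input to asymmetric units in the strict crystallographic sense (no face has a non-trivially acting setwise stabilizer), or extend the algorithm so that for each selected face $S$ it computes a further transversal for the action of the finite group $\braces{\phi\in\group \mid \phi S=S}$ on $S^\circ$ rather than keeping all of $S^\circ$. The latter is what you call the preliminary refinement, though you want a transversal of those stabilizer orbits, not the orbits themselves. The paper's own subfacet analysis (\cref{lemma:subfacets}), which shows that an element mapping a boundary piece to itself must fix a point of it, is the right ingredient for carrying out this refinement recursively. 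Either way, the hypothesis is currently missing from the statement, so your instinct that something is ``implicitly required'' is exactly right.
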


  \begin{proof}
    The relative interiors of the faces of a convex polytope are mutually disjoint and
    their union is $\Pi$, so each point ${x\in\Pi}$ is on exactly one such relative interior.
    Let $S$ be the face with ${x\in S^{\circ}}$, and consider any ${\phi\in\group}$.
    Since the tiling is exact, ${\phi S}$ is either a face of $\Pi$ or ${\phi S\cap\Pi=\varnothing}$.
    If ${\phi x\in\Pi}$, the intersection cannot be empty, so $\phi S$ is a face and hence
    $\group$-equivalent to $S$.
    It follows that the interior of a face of $\Pi$ intersect the orbit $\group x$ if and only
    if it is in the equivalence class of $S$.
    Since we select exactly one element of this class, exactly one point of $\group x$
    is contained in $\tilde{\Pi}$.
  \end{proof}

\subsection{Computing the projector}

Since $\group$ is crystallographic, it contains shifts in $n$ linearly independent directions, and these shifts hence specify
a coordinate system of $\mathbb{R}^n$. More precisely: There are $n$ elements~${\phi_1,\ldots,\phi_n}$ of
$\group$ that (1) are pure shifts (satisfy ${A_{\phi_i}=\mathbb{I}}$), (2) are linearly independent, and (3) are the shortest
such elements (in terms of the Euclidean norm of~$b_{\phi_i}$). Up to a sign, each of these elements is uniquely determined.
We refer to the vectors~${\phi_1,\ldots,\phi_n}$ as the \kword{shift coordinate} system of~$\group$.
\begin{algorithm}[Computing the projector]{\ }\\[.5em]
  \label{algorithm:projector}
  \begin{tabular}{cl}
  1.) & Perform a basis change from the shift coordinates to the standard basis of $\mathbb{R}^n$.\\
  2.) & Set ${\tilde{x}=(x_1\mod 1,\ldots,x_n\mod 1)}$.\\
  3.) & Find ${\phi\in\group}$ such that ${\phi\tilde{x}\in\tilde{\Pi}}$.\\
  4.) & Apply the reverse change of basis from standard to shift coordinates.
  \end{tabular}
\end{algorithm}

\section{Linear representation: Invariant Fourier transforms}
\label{sec:linear}

In this section, we obtain a basis representation for invariant functions: given a
crystallographic group~$\group$, we construct a sequence of $\group$-invariant functions
${\ef_1,\ef_2,\ldots}$ on $\mathbb{R}^n$ such that any $\group$-invariant continuous function
can be represented as a (possibly infinite) linear combination ${\sum_{i\in\mathbb{N}}c_i\ef_i}$.
If $\group$ is generated by $n$ orthogonal shifts, the functions $e_i$
are an $n$-dimensional Fourier basis.
\cref{theorem:spectral} below obtains an analogous basis for each crystallographic group $\group$.

\subsection{Representation theorem}

For any open set ${M\subseteq\mathbb{R}^n}$, we define the Laplace operator on twice differentiable
functions~${h:M\rightarrow\mathbb{R}}$ as
\begin{equation*}
  \Delta h
  \;:=\;
  \mfrac{\partial^2 h}{\partial x_1^2}+\ldots+\mfrac{\partial^2 h}{\partial x_n^2}
  \;=\;
  \nabla^{\trans}(\nabla h)\;.
\end{equation*}
Now consider specifically functions ${e:\mathbb{R}^n\rightarrow\mathbb{R}}$.
Fix some ${\lambda\in\mathbb{R}}$, and consider the constrained partial differential equation
\begin{equation}
  \label{sturm:liouville}
  \begin{aligned}
    -\Delta e&\;=\;\lambda e\quad&&\text{ on }\mathbb{R}^n\\
    \text{subject to }\qquad e&\;=\;e\circ\phi\quad&&\text{ for }\phi\in\group\;.
  \end{aligned}
\end{equation}
Clearly, there is always a trivial solution, namely the constant function ${e=0}$.
If \eqref{sturm:liouville} has a non-trivial solution $e$, we call this $e$ a \kword{$\group$-eigenfunction} and
$\lambda$ a \kword{$\group$-eigenvalue}
of the linear operator $-\Delta$. Denote the set of solutions by
\begin{equation*}
  \mathcal{V}(\lambda)
  \;:=\;
  \braces{e:\mathbb{R}^n\rightarrow\mathbb{R}\,|\,e\text{ satisfies }\eqref{sturm:liouville}}\;.
\end{equation*}
Since $0$ is a solution, and any linear combination of solutions is again a solution,
${\mathcal{V}(\lambda)}$ is a vector space,
called the \kword{eigenspace} of $\lambda$. Its dimension
\begin{equation*}
  k(\lambda)\;:=\;\dim\mathcal{V}(\lambda)
\end{equation*}
is the \kword{multiplicity} of $\lambda$.
\begin{theorem}[Crystallographically invariant Fourier basis]
  \label{theorem:spectral}
  Let $\group$ be a crystallographic group that tiles $\mathbb{R}^n$ with
  a convex polytope $\Pi$.
  Then the constrained problem \eqref{sturm:liouville} has solutions for countably many
  distinct values ${\lambda_1,\lambda_2,\ldots}$ of $\lambda$, and these values satisfy
  \begin{equation*}
    0\,=\,\lambda_1\,<\,\lambda_2\,<\,\lambda_3<\ldots
    \qquad\text{ and }\qquad
    \lambda_i\;\xrightarrow{i\rightarrow\infty}\;\infty\;.
  \end{equation*}
  Every solution function $e$ is infinitely often differentiable.
  There is a sequence ${e_1,e_2,\ldots}$ of solutions whose restrictions
  ${e_1|_\Pi,e_2|_\Pi,\ldots}$ to $\Pi$ form an orthonormal basis of the the space $\L_2(\Pi)$,
  and satisfy
  \begin{equation*}
    \big|\braces{\,j\in\mathbb{N}\,|\,e_j\in\mathcal{V}(\lambda_i)}\big|\;=\;k(\lambda_i)
    \qquad\text{ for each }i\in\mathbb{N}\;.
  \end{equation*}
  A function ${f:\mathbb{R}^n\rightarrow\mathbb{R}}$ is $\group$-invariant and continuous if and only if
  \begin{equation*}
    f\;=\;\msum_{i\in\mathbb{N}}c_ie_i
    \qquad\text{ for some sequence }c_1,c_2,\ldots\in\mathbb{R}\;,
  \end{equation*}
  where the series converges in the supremum norm.
\end{theorem}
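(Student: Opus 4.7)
The plan is to translate the constrained eigenvalue problem \eqref{sturm:liouville} into a standard problem for a self-adjoint elliptic operator on the compact quotient $\mathbb{R}^n/\group$, and then to invoke the spectral theorem for operators with compact resolvent. By \cref{lemma:projector}, $\group$-invariant continuous functions on $\mathbb{R}^n$ correspond bijectively to continuous functions on $\Pi$ satisfying the periodic boundary condition \eqref{pbc}, and the natural $\L_2$ structure matches (up to a volume factor) the one on the compact quotient supplied by \cref{fact:compact:quotient}. The problem thereby reduces to finding eigenfunctions of $-\Delta$ on a dense subspace of $\L_2(\Pi)$ consisting of twice-differentiable functions obeying \eqref{pbc}.

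The key analytical step is to upgrade this operator to a self-adjoint operator with compact resolvent. Self-adjointness is delivered by the results of \cref{sec:laplacian}: \cref{lemma:gradient:field} makes the boundary-flux correction term in the Green identity vanish for invariant functions, so that on the appropriate Sobolev domain $-\Delta$ is self-adjoint as in \cref{theorem:laplacian}. Compactness of the resolvent $(-\Delta+I)^{-1}$ follows from the Rellich--Kondrachov compact embedding $H^1(\mathbb{R}^n/\group)\hookrightarrow\L_2(\mathbb{R}^n/\group)$ on the compact quotient. The abstract spectral theorem (App.~\ref{sec:spectral}) then produces a nonnegative discrete spectrum tending to infinity (nonnegativity via $\langle -\Delta e,e\rangle=\int|\nabla e|^2$; collecting by distinct values yields the strict inequalities $\lambda_1<\lambda_2<\ldots$ and defines the multiplicities $k(\lambda_i)=\dim\mathcal{V}(\lambda_i)$), finite-dimensional eigenspaces, and an orthonormal eigenbasis of $\L_2(\Pi)$. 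The smallest eigenvalue is $\lambda_1=0$ realized by the constant function, since $\nabla e\equiv 0$ forces $e$ to be constant and constants are manifestly $\group$-invariant.

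Smoothness of every eigenfunction follows from elliptic regularity applied to $-\Delta e=\lambda e$: distributional solutions of this constant-coefficient elliptic PDE are $C^\infty$ on the relative interior of each tile, and the $\group$-invariance combined with the transformation rule of \cref{lemma:differentials} forces all derivatives to match across tile boundaries, extending the smoothness to all of $\mathbb{R}^n$. For the uniform-norm characterization of $\C_{\group}$, one direction is immediate since uniform limits preserve continuity and invariance. For the other, finite linear combinations of $\{e_i\}$ are uniformly dense in $\C_{\group}$ by Stone--Weierstrass on the compact space $\mathbb{R}^n/\group$ (the algebra generated by $\{e_i\}$ contains the constants via $e_1$ and separates points, since otherwise $\L_2$-completeness would fail), and from this density one obtains a single uniformly convergent series representation by a diagonal construction that exploits the rapid Fourier decay of smooth $\group$-invariant approximants.

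The main obstacle I expect is the self-adjointness-and-domain question: the Laplacian with the nonstandard boundary condition \eqref{pbc} on a general convex polytope is not a textbook operator, and genuine self-adjointness (as opposed to mere symmetry) of its $H^2$ extension requires a careful definition of the domain via traces, together with a version of \cref{lemma:gradient:field} interpreted in the weak sense along $\partial\Pi$. That is the step where the tiling geometry actually enters the spectral analysis; everything downstream is an invocation of standard functional-analytic machinery.
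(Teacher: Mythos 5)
Your overall strategy---translate to a self-adjoint operator on the compact quotient, get compactness of the resolvent from Rellich--Kondrachov, invoke the abstract spectral theorem, upgrade regularity by elliptic bootstrapping---is indeed the strategy of the paper's proof (\cref{appendix:proofs:spectral}, via \cref{result:spectral:restricted}). Two of your ingredients match the paper exactly: the flux property \cref{lemma:gradient:field} kills the boundary term in Green's identity, and iterated regularity via \cref{fact:regularity} plus the transformation law in \cref{lemma:differentials} extends smoothness across tile boundaries. There are, however, two substantive differences, one of which is a genuine gap.

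First, on the self-adjointness framework. You flag the ``$H^2$-domain plus boundary traces'' issue as the main obstacle, and you are right that it would be delicate. The paper avoids it entirely by not working with an unbounded operator on $\L_2$ at all: it defines $\H_\group$ as the $\H^1$-closure of smooth invariant restrictions and uses \cref{fact:extended:laplacian} to view $\Lambda$ as a \emph{bounded} operator $\H_\group\rightarrow\H_\group^*$, with self-adjointness in the $\H^1$ inner product established via the symmetric Green identities of \cref{lemma:symmetric:green:identities}. The spectral result then applied (\cref{spectral:lemma}, from McLean) is for coercive bounded bilinear forms in a Gelfand triple, not the compact-resolvent theorem for an unbounded self-adjoint operator. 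Both routes can be made to work, but the paper's choice is precisely what dissolves the domain-and-trace problem you anticipate, so your ``main obstacle'' is an artifact of the framing rather than an intrinsic difficulty.

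Second, and this is the real gap: your argument for the supremum-norm characterization of $\C_\group$ does not go through as stated. Stone--Weierstrass gives density in the sup norm of the \emph{algebra} generated by $\{e_i\}$, which contains all finite products $e_{i_1}\cdots e_{i_k}$. These products are in general not finite linear combinations of the $e_j$ (a product of Laplacian eigenfunctions is not an eigenfunction, and its expansion in the $e_j$ has infinitely many nonzero terms), so density of the algebra says nothing directly about density of the linear span. Your closing sentence---that ``a diagonal construction that exploits the rapid Fourier decay'' converts density into a single uniformly convergent series---is asserting exactly the step that needs proving, and it is not obvious. The paper's route is different: \cref{lemma:pivot} shows that any subset of $\H_\group\cap\C_{\text{\rm pbc}}(\Pi^\circ)$ that is dense in $\H_\group$ in the Sobolev norm is automatically dense in $\C_{\text{\rm pbc}}(\Pi^\circ)$ in the sup norm, using the continuity of the inclusion $\H^1\hookrightarrow\C$ on this domain, and separately establishes density of $\C^\infty\cap\C_{\text{\rm pbc}}$ in $\C_{\text{\rm pbc}}$. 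Since the restricted eigenfunctions $e_i|_{\Pi^\circ}$ are (a rescaling of) an orthonormal basis of $\H_\group$, their span is $\H^1$-dense, and hence sup-dense. That is the step your sketch is missing, and it cannot be replaced by a bare appeal to Stone--Weierstrass on the eigenfunctions.
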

\begin{proof}
  See \cref{appendix:proofs:spectral}.
\end{proof}
\begin{remark}
  The space ${\L_2(\mathbb{R}^n)}$ contains no non-trivial $\group$-invariant functions, since
  for every ${f\in\C_\group}$
  \begin{equation*}
    \|f\|_{\L_2(\mathbb{R}^n)}\;=\;\tsum_{\phi\in\group}\|\,f|_{\phi\Pi}\,\|_{\L_2(\phi\Pi)}\;=\;
    \begin{cases}
      0 & f=0\text{ almost everywhere}\\
      \infty &\text{otherwise}
    \end{cases}\;.
  \end{equation*}
  On the other hand, the restriction $f|_\Pi$ is in $\L_2(\Pi)$, and completely determines $f$.
  That makes $\L_2(\Pi)$ the natural $\L_2$-space in the context of crystallographic invariance,
  and is the reason why the restrictions $e_i|_\Pi$ are used in the theorem.
  Since~$\L_2(\phi\Pi)$ is isometric to $\L_2(\Pi)$ for all ${\phi\in\group}$, it does not matter which tile we restrict to.
\end{remark}
\begin{figure}
  \begin{tikzpicture}
    \begin{scope}
      \node at (0,0) {\includegraphics[height=2.7cm]{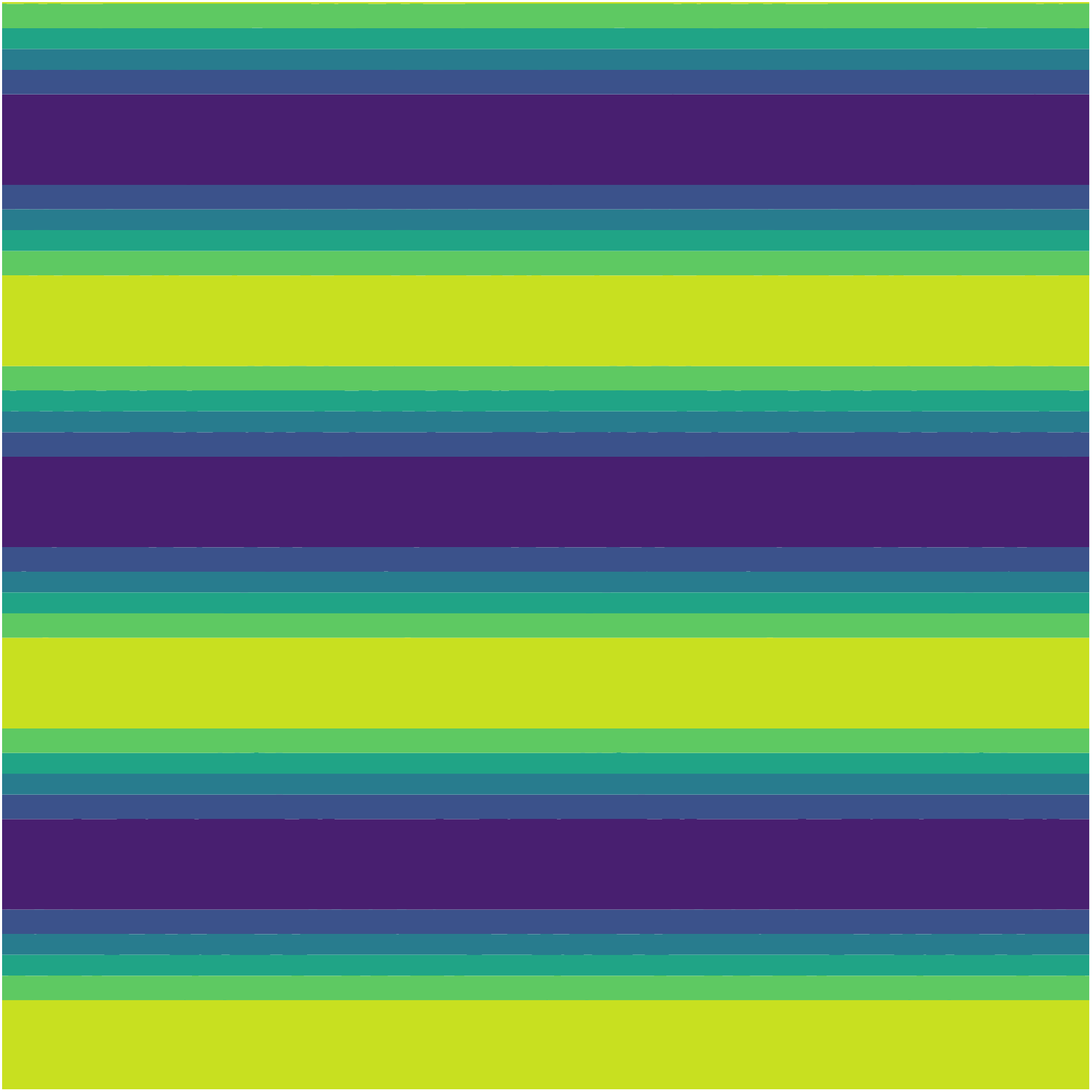}};
      \node at (3,0) {\includegraphics[height=2.7cm]{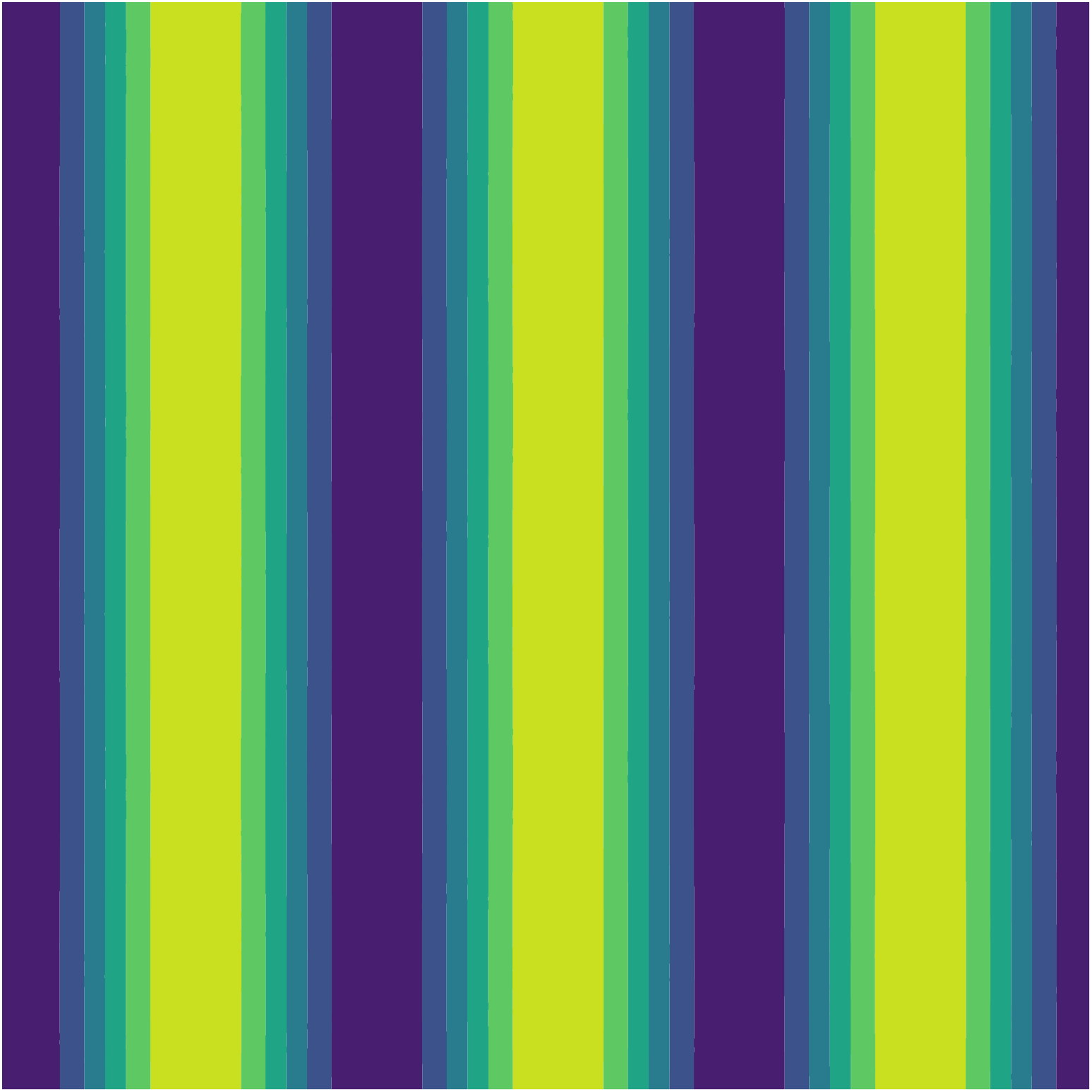}};
      \node at (6,0) {\includegraphics[height=2.7cm]{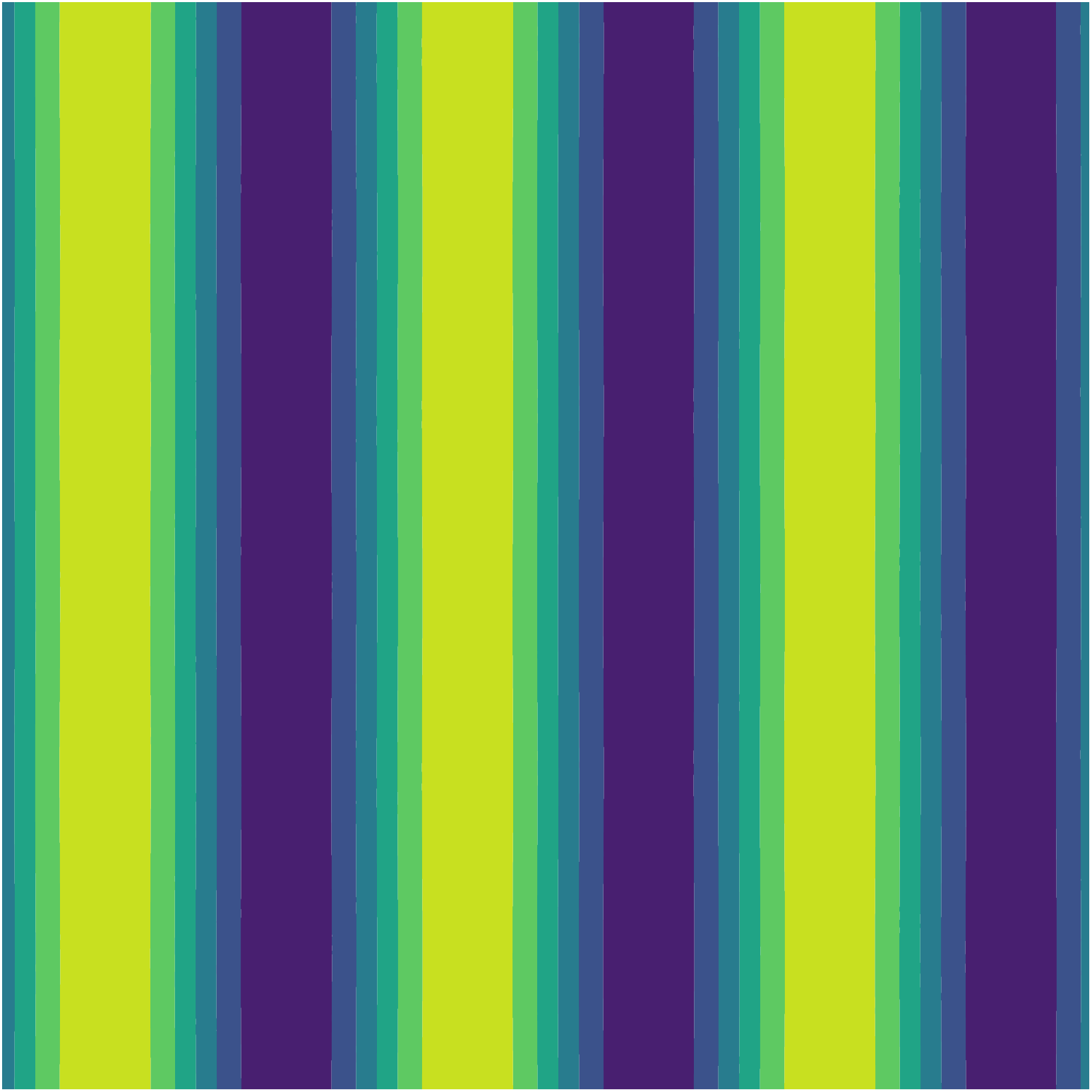}};
      \node at (9,0) {\includegraphics[height=2.7cm]{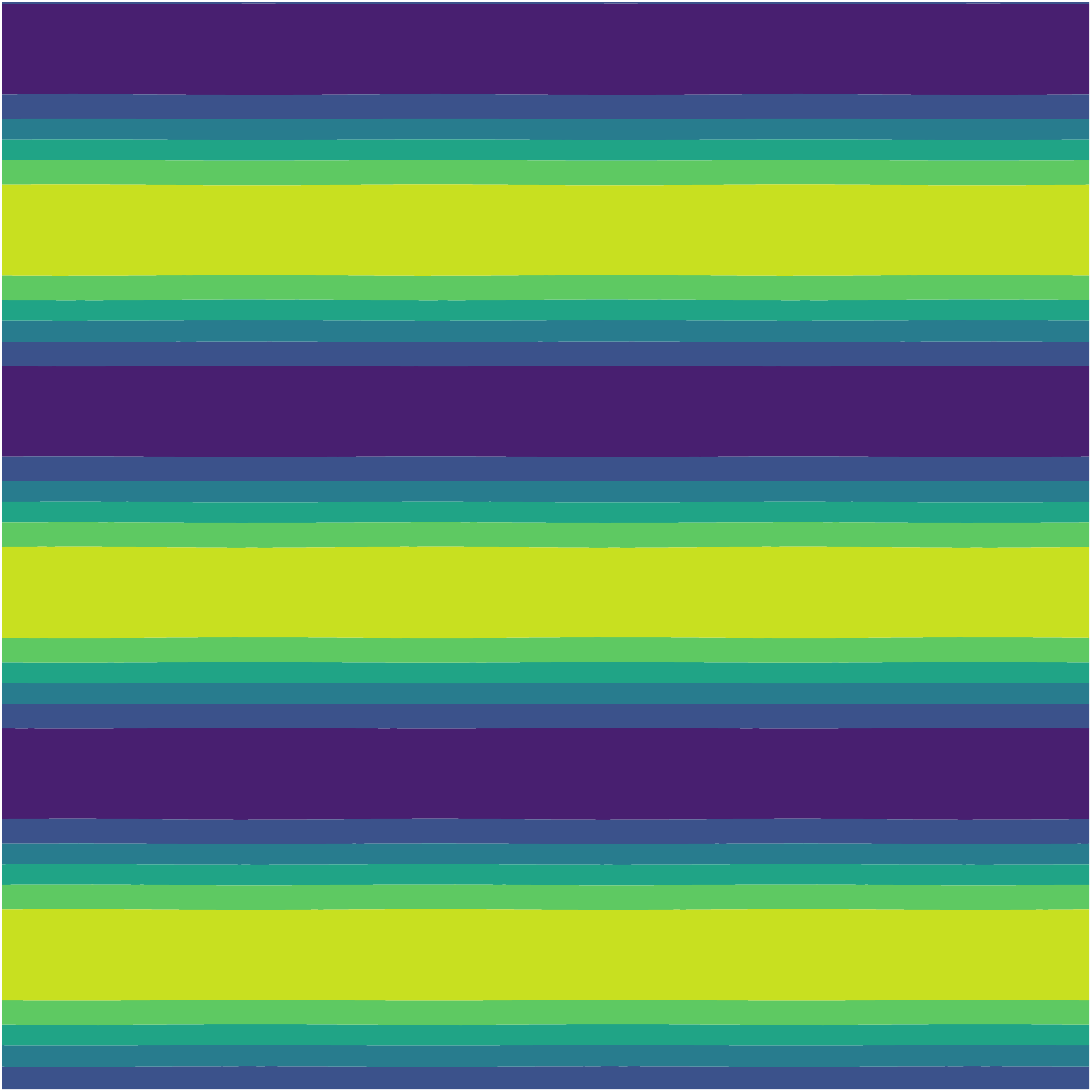}};
      \node at (12,0) {\includegraphics[height=2.7cm]{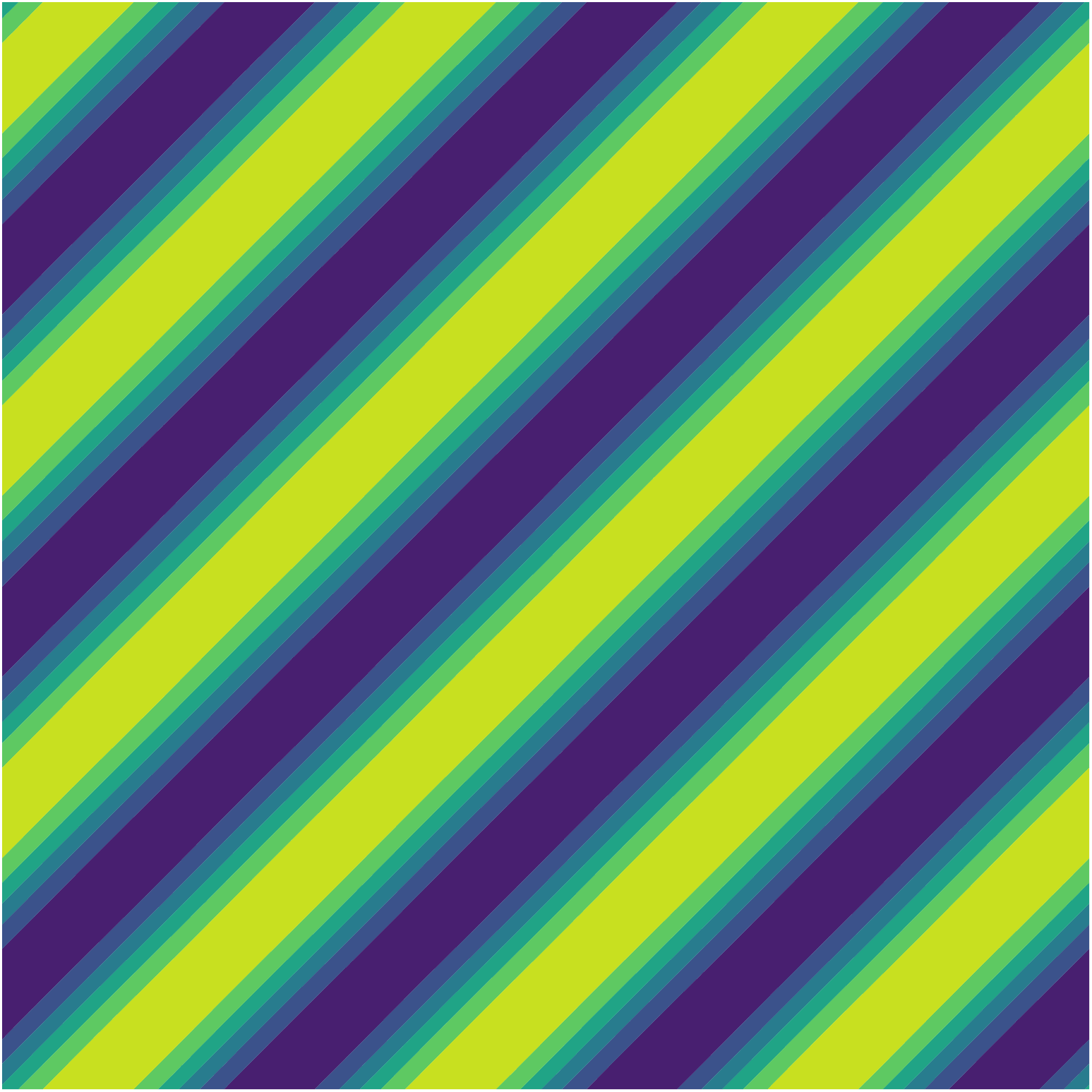}};
      \node at (0,-1.7) {\scriptsize 39.48};
      \node at (3,-1.7) {\scriptsize 39.48};
      \node at (6,-1.7) {\scriptsize 39.48};
      \node at (9,-1.7) {\scriptsize 39.48};
      \node at (12,-1.7) {\scriptsize 78.95};
    \end{scope}
  \end{tikzpicture}
  \caption{The first five non-constant basis functions ${e_2,\ldots,e_{6}}$
    in \cref{theorem:spectral}, with their eigenvalues $\lambda_i$, for the group \texttt{p1}.
    The eigenbasis in this case is precisely the standard Fourier basis on $\mathbb{R}^2$.}
  \label{fig:fourier:p1}
\end{figure}

\subsection{Relationship to Fourier series}

\begin{figure}
  \begin{tikzpicture}
    \begin{scope}
      \node at (0,0) {\includegraphics[height=2.7cm]{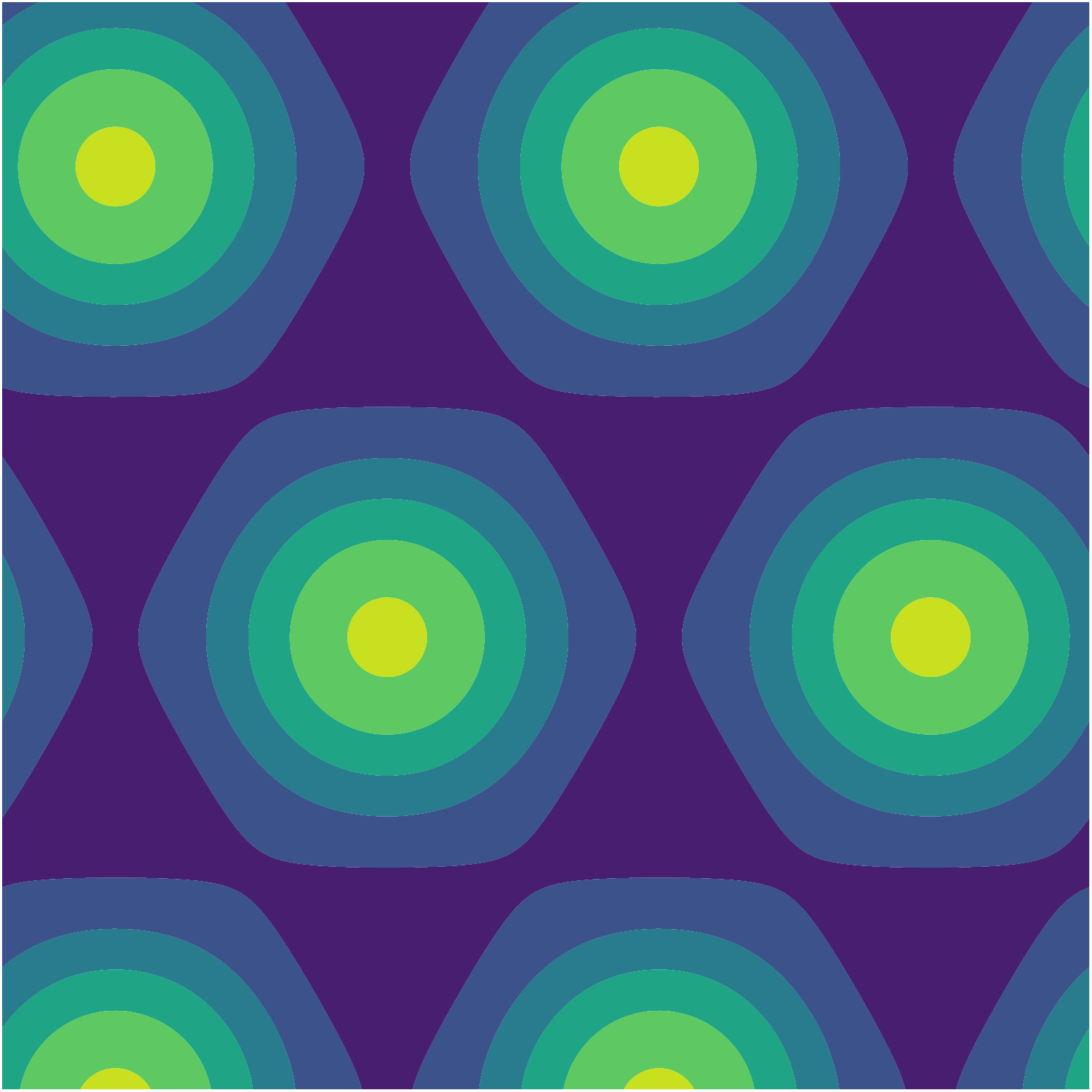}};
      \node at (3,0) {\includegraphics[height=2.7cm]{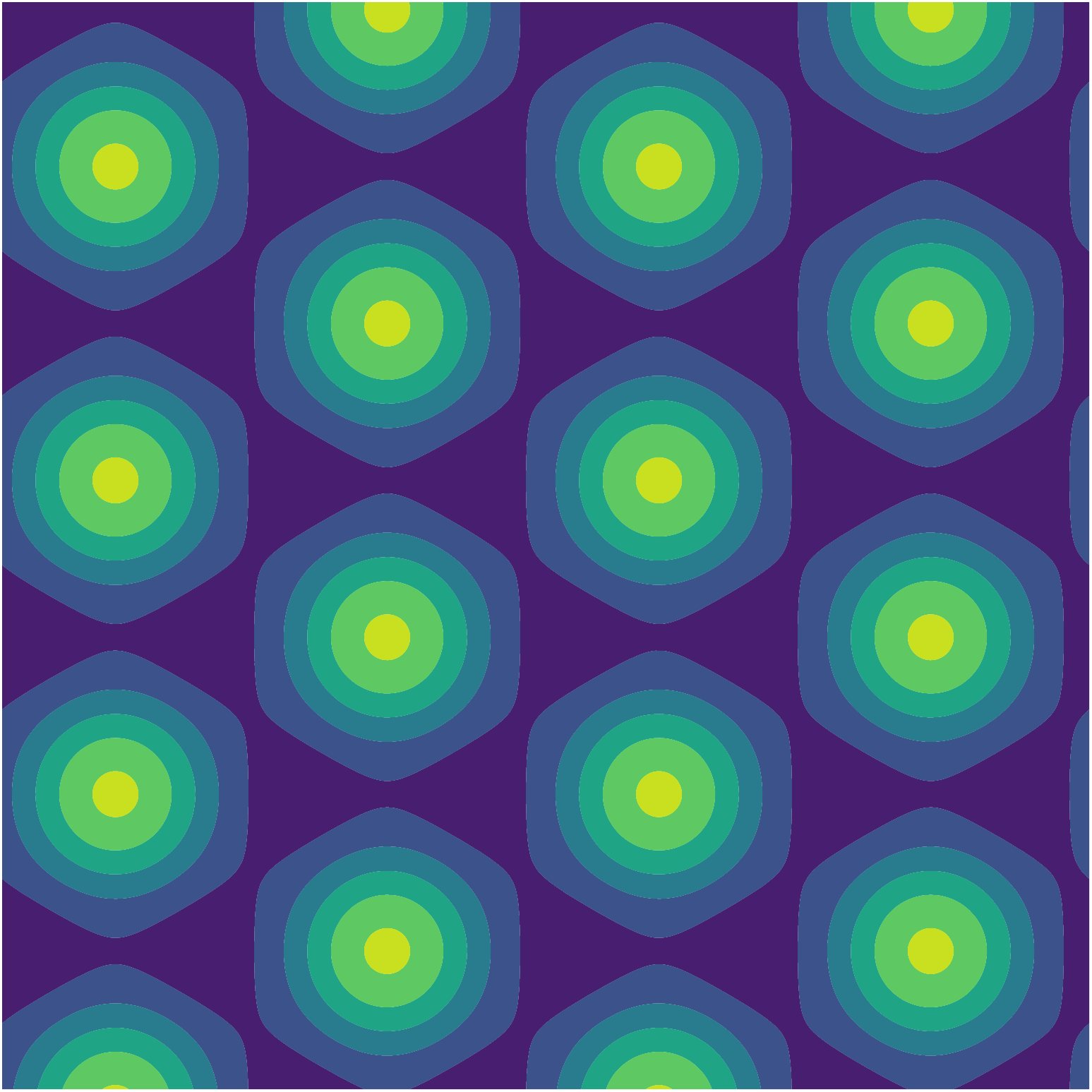}};
      \node at (6,0) {\includegraphics[height=2.7cm]{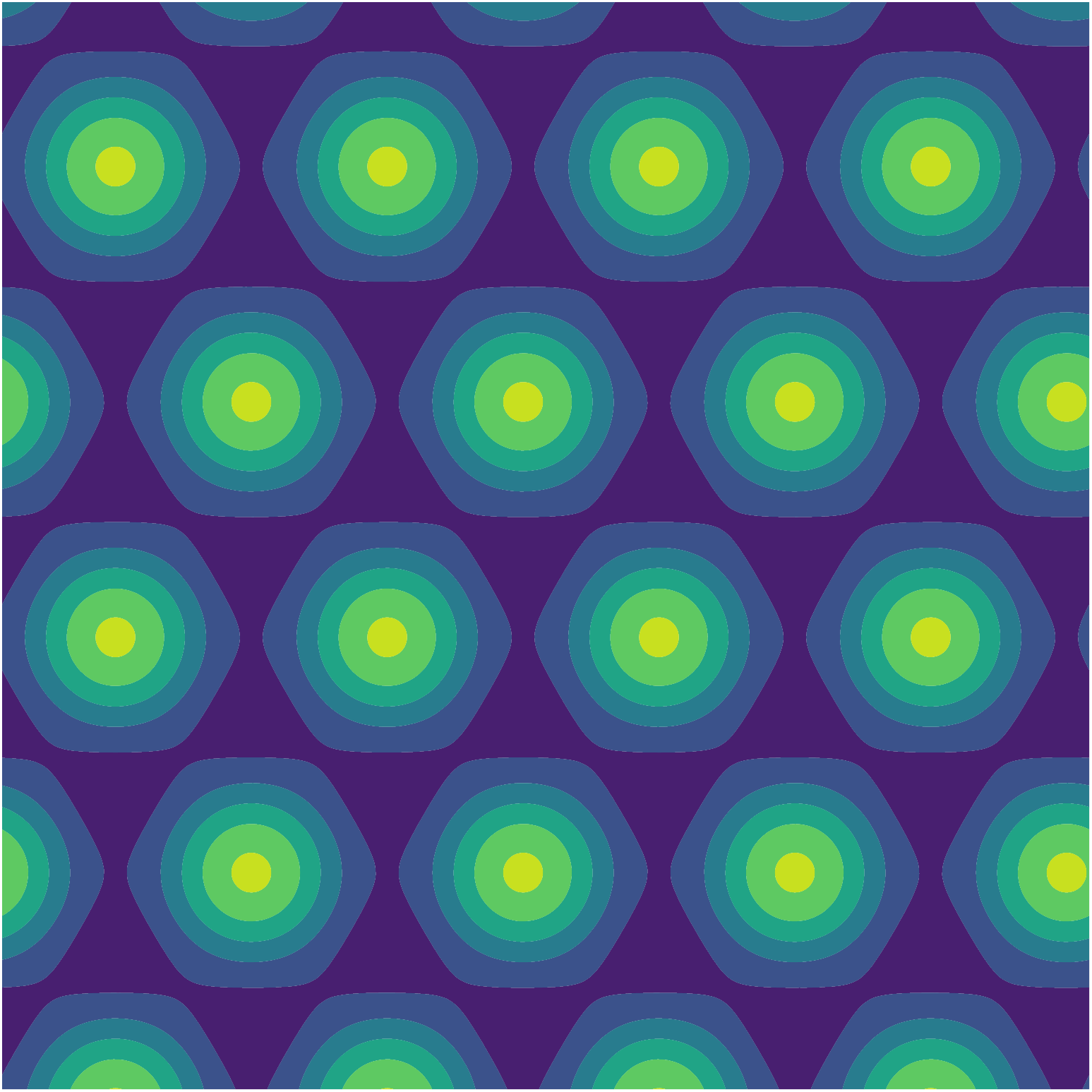}};
      \node at (9,0) {\includegraphics[height=2.7cm]{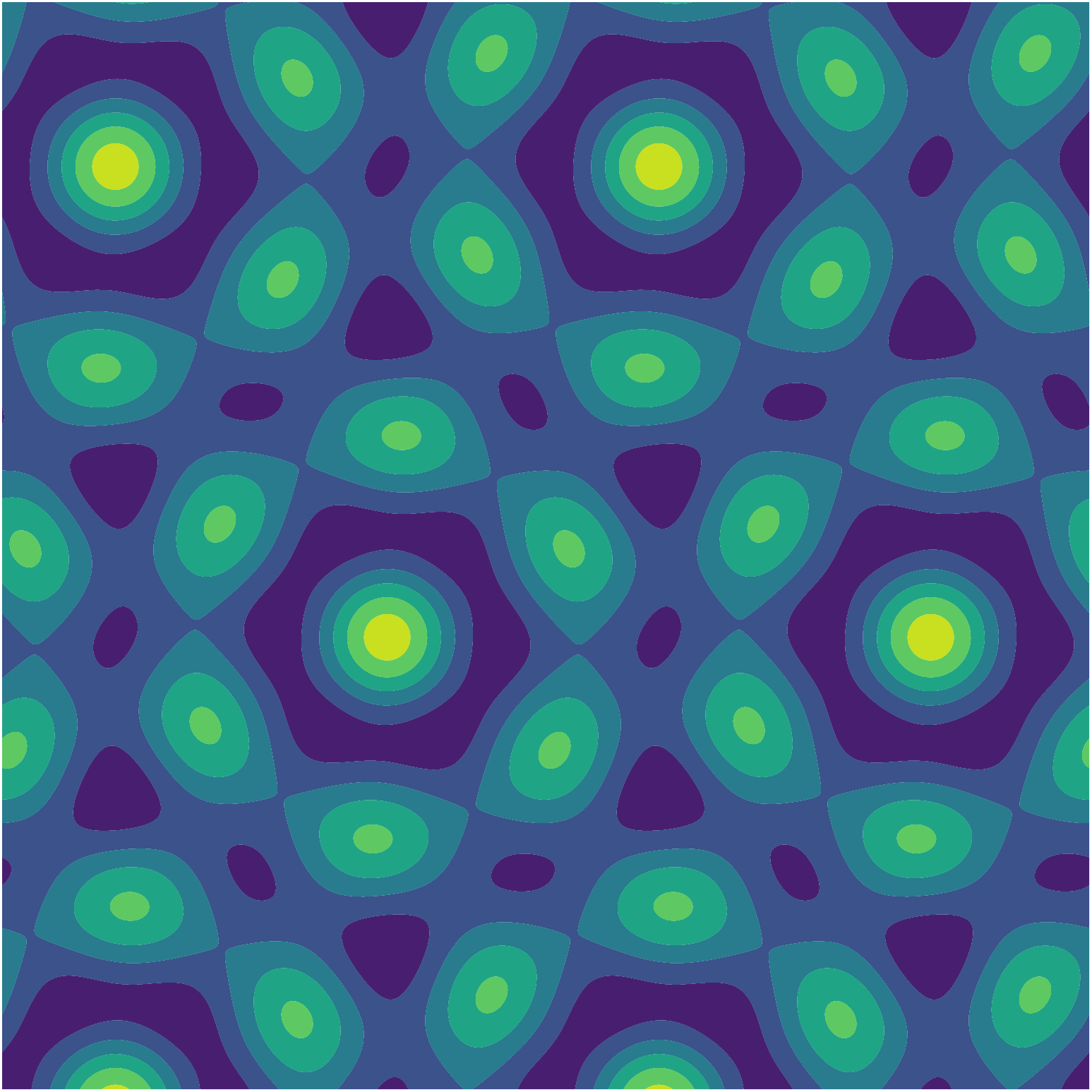}};
      \node at (12,0) {\includegraphics[height=2.7cm]{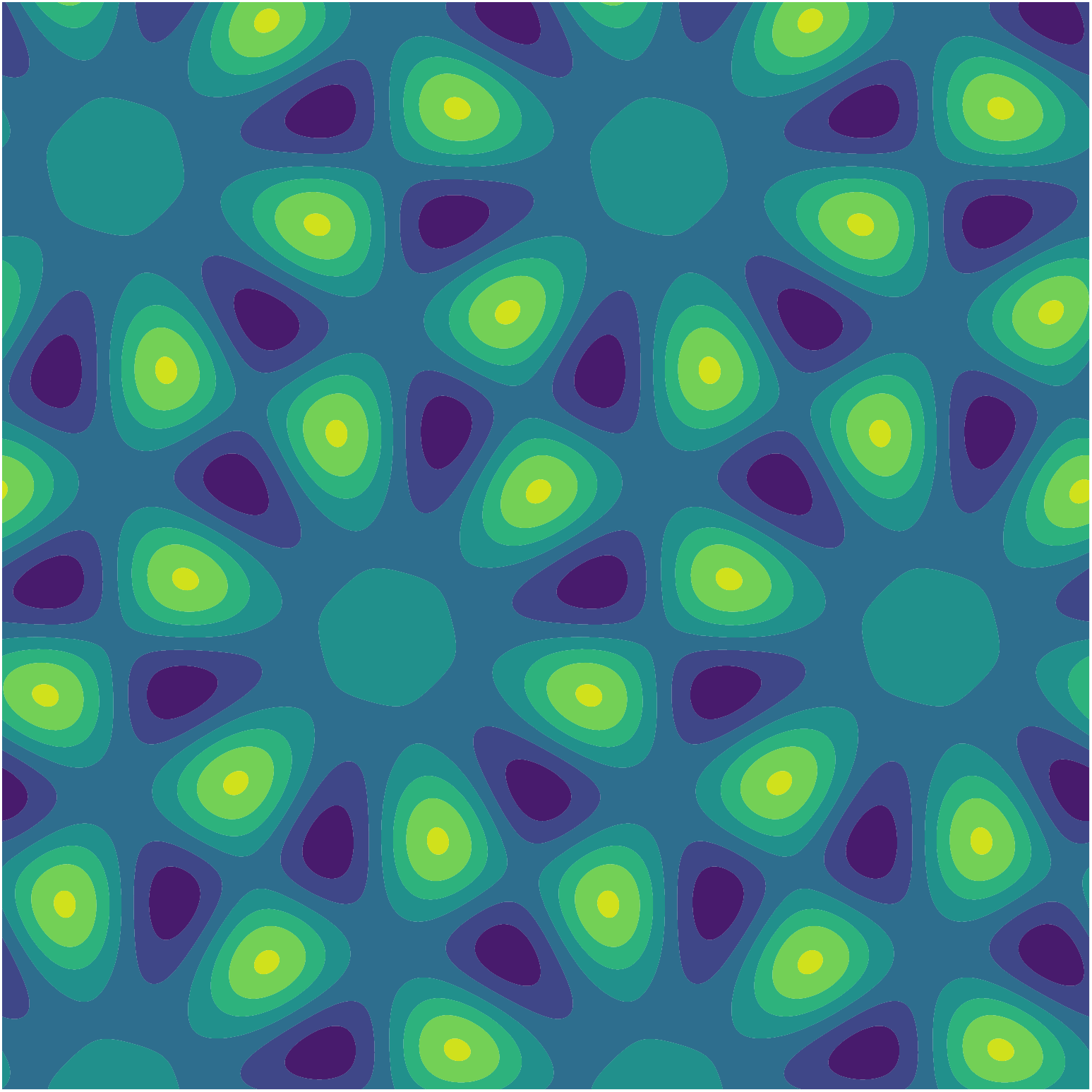}};
      \node at (0,-1.7) {\scriptsize 52.62};
      \node at (3,-1.7) {\scriptsize 157.87};
      \node at (6,-1.7) {\scriptsize 210.87};
      \node at (9,-1.7) {\scriptsize 368.02};
      \node at (12,-1.7) {\scriptsize 368.34};
    \end{scope}
    \begin{scope}[yshift=-3.5cm]
      \node at (0,0) {\includegraphics[height=2.7cm]{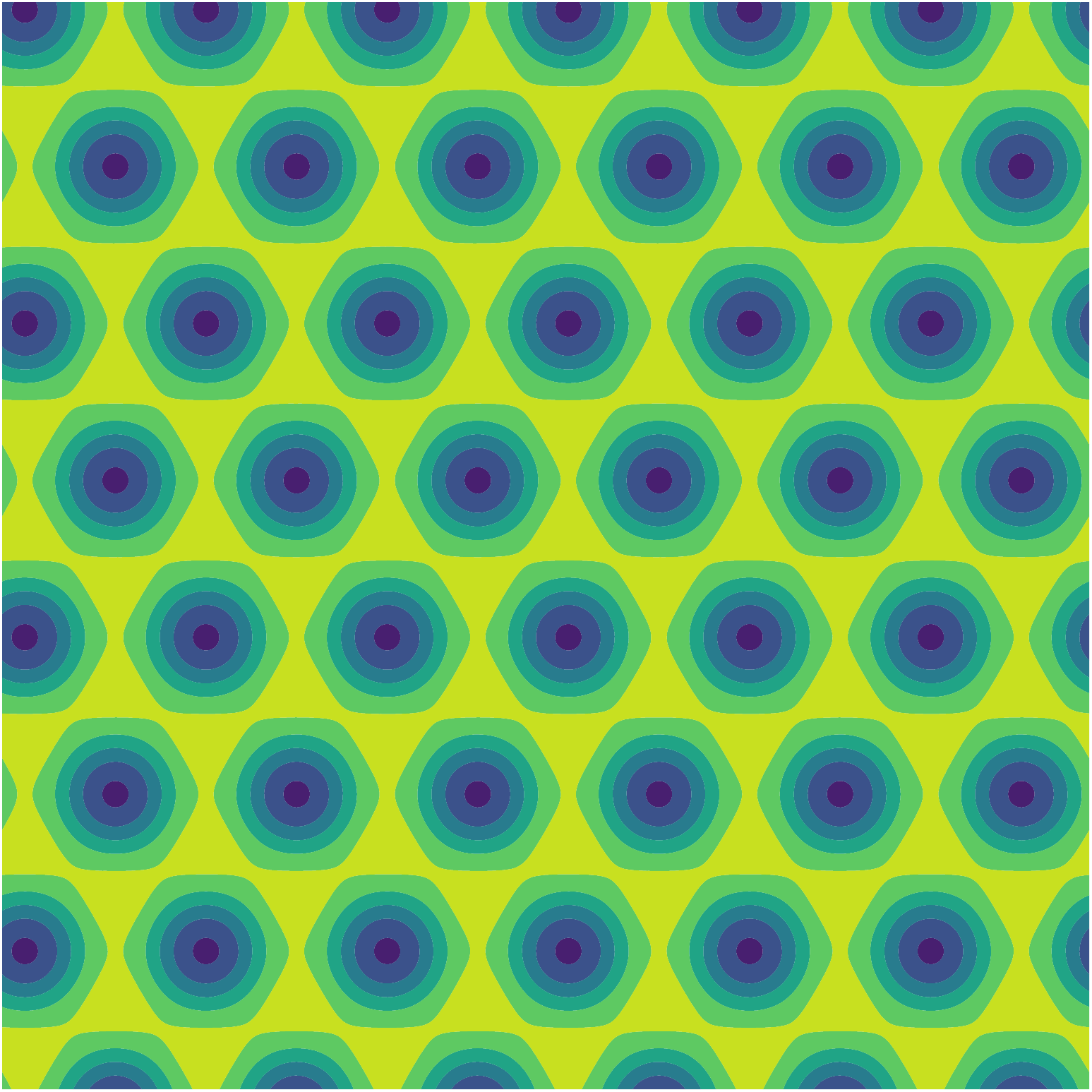}};
      \node at (3,0) {\includegraphics[height=2.7cm]{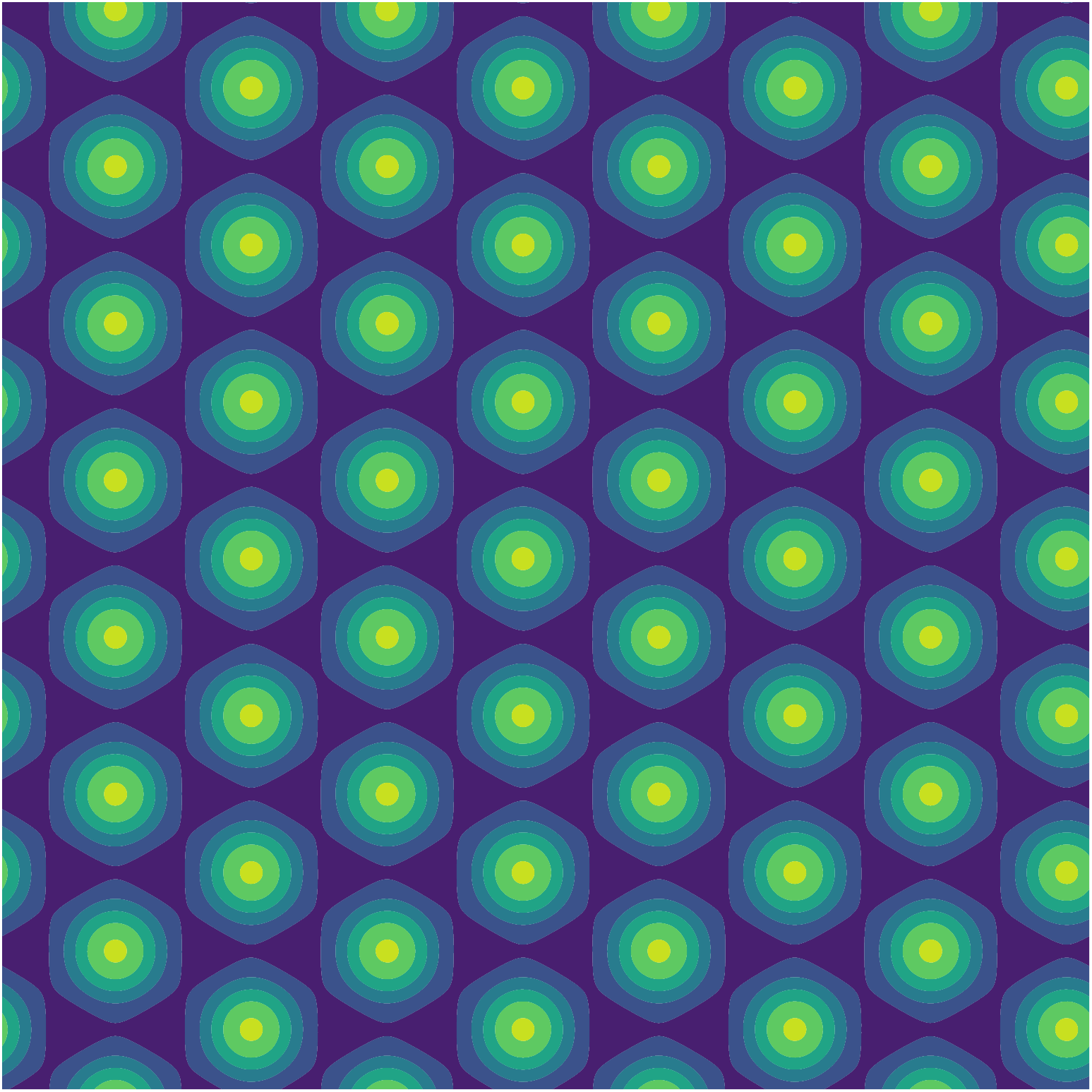}};
      \node at (6,0) {\includegraphics[height=2.7cm]{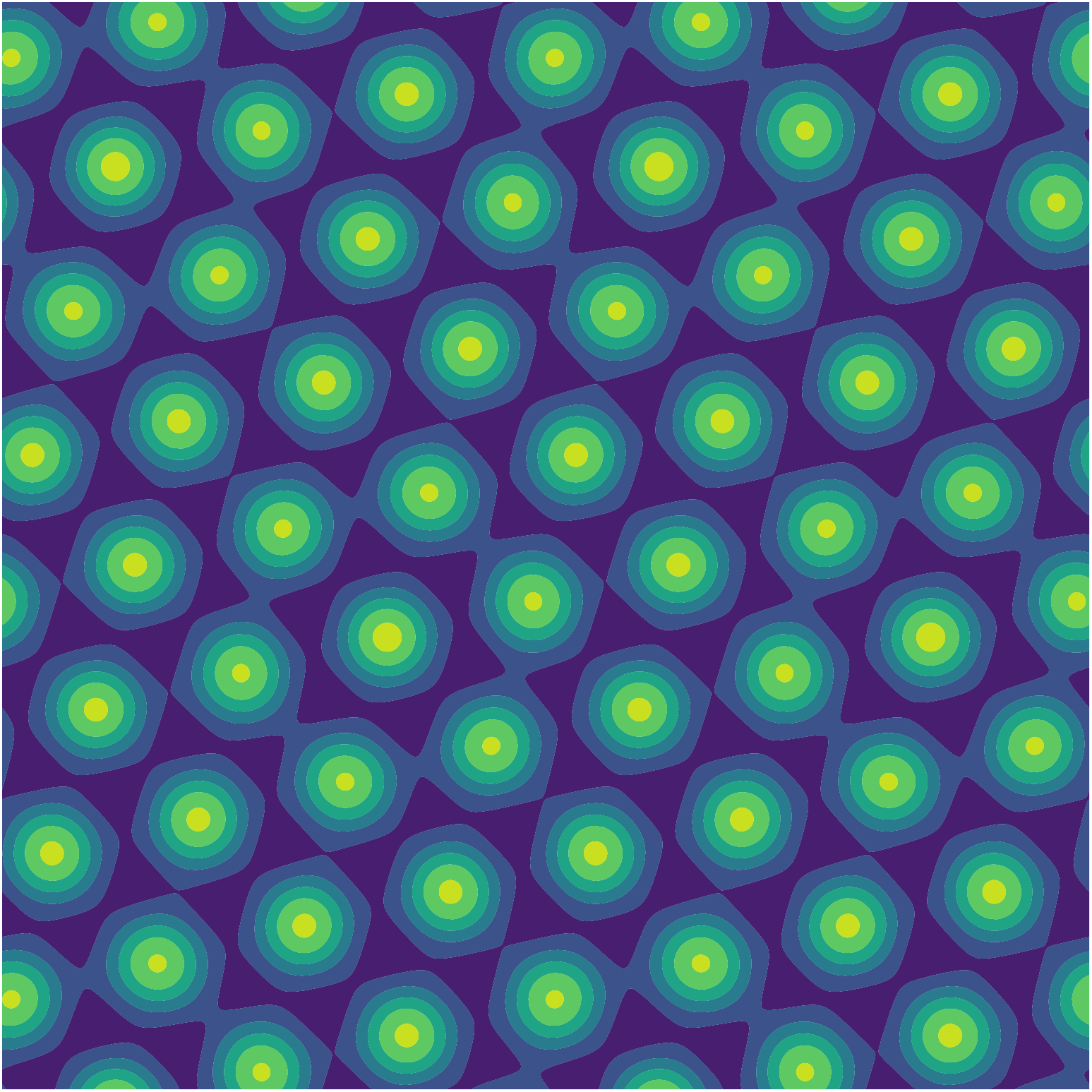}};
      \node at (9,0) {\includegraphics[height=2.7cm]{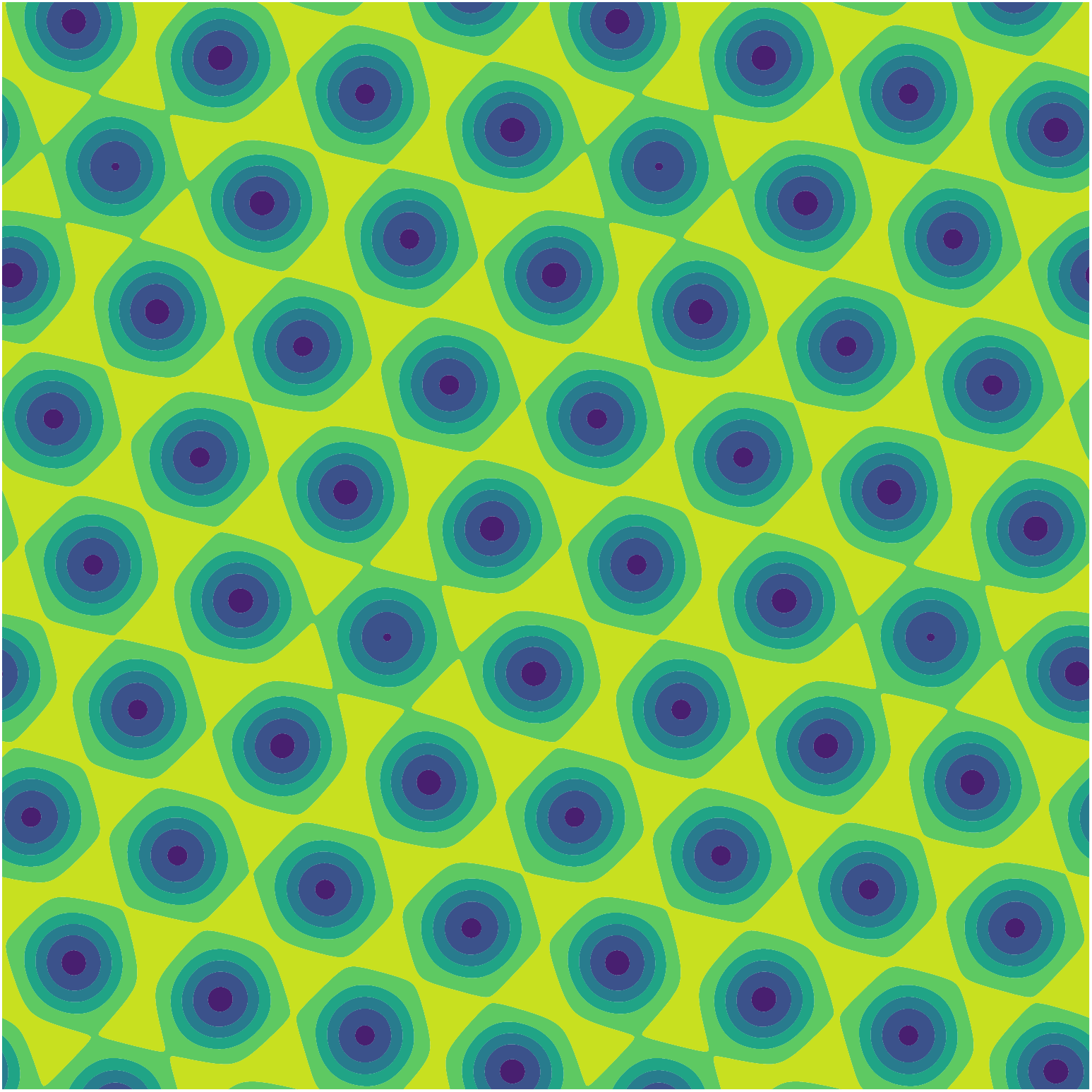}};
      \node at (12,0) {\includegraphics[height=2.7cm]{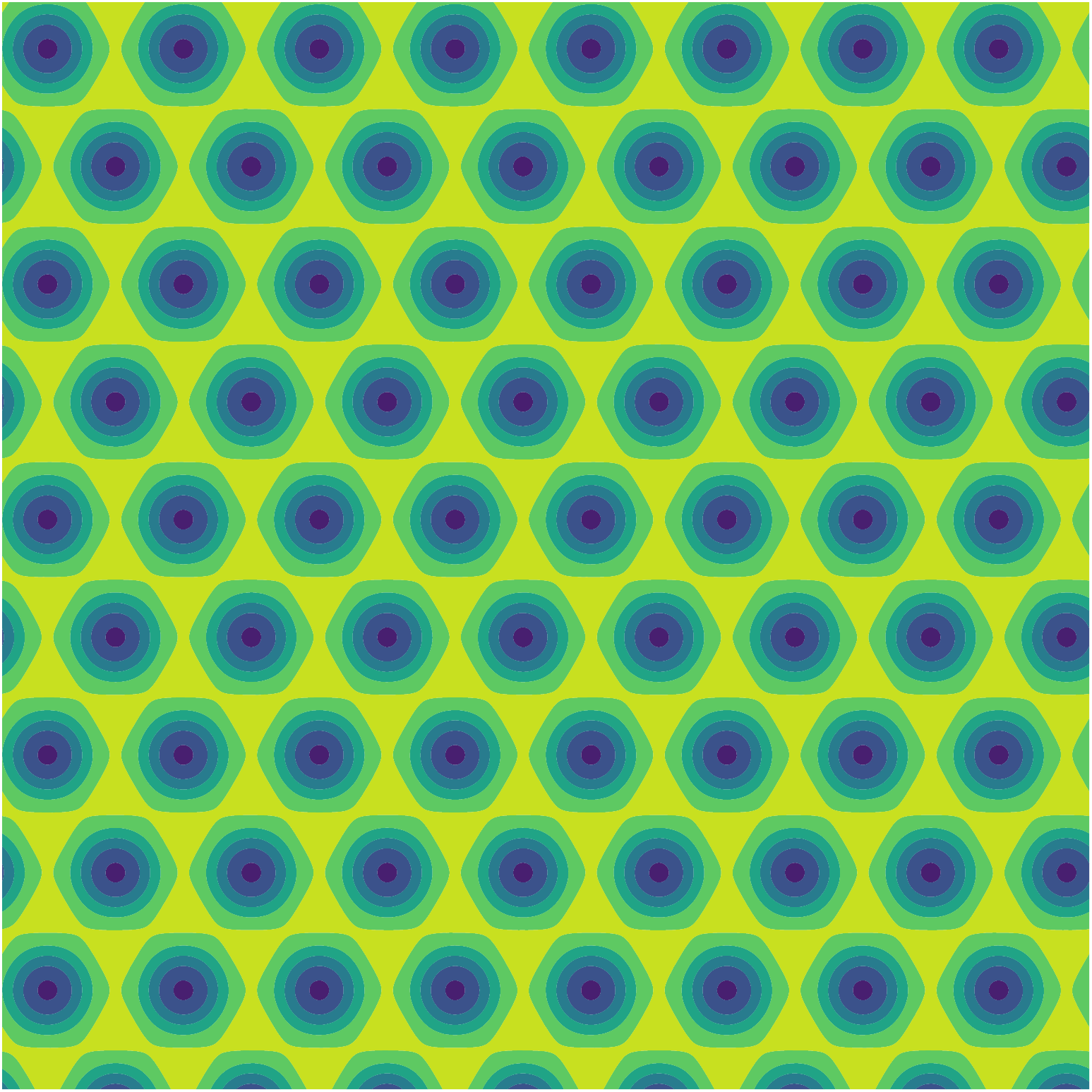}};
      \node at (0,-1.7) {\scriptsize 473.55};
      \node at (3,-1.7) {\scriptsize 631.41};
      \node at (6,-1.7) {\scriptsize 683.39};
      \node at (9,-1.7) {\scriptsize 684.06};
      \node at (12,-1.7) {\scriptsize 841.80};
    \end{scope}
  \end{tikzpicture}
  \caption{The first ten non-constant basis functions ${e_2,\ldots,e_{11}}$
    in \cref{theorem:spectral}, with their approximate eigenvalues $\lambda_i$, for the group \texttt{p6}.
  In this case, the mulitplicities $k(\lambda_i)$ are not $2n=4$ as for the standard Fourier transform.}
  \label{fig:fourier:p6}
\end{figure}

The standard Fourier bases for periodic functions on $\mathbb{R}^n$
can be obtained as the special cases of \cref{theorem:spectral}
for shift groups: Fix some edge width ${c>0}$, and choose $\Pi$ and $\mathbb{G}$ as
\begin{equation*}
  \Pi\;=\;[0,c]^n
  \qquad\text{ and }\qquad
  \group=\bigl\{x\mapsto x+c(i_1,\ldots,i_n)^{\trans}\,\big|\,i_1,\ldots,i_n\in\mathbb{Z}\bigr\}\;.
\end{equation*}
For these groups, all eigenvalue multiplicities are ${k(\lambda_i)=2n}$ for each ${i\in\mathbb{N}}$.
For ${n=2}$, the group $\group$ is \texttt{p1} (see \cref{fig:wallpaper}). Its eigenfunctions
are shown in \cref{fig:fourier:p1}.

To clarify the relationship in more detail, consider the case
${n=1}$: Since $\Delta$ is a second derivative, the functions
${e(x)=\cos(\nu x)}$ and ${e(x)=\sin(\nu x)}$ satisfy
\begin{equation*}
  \Delta e(x)\;=\;-\nu^2e(x)\qquad\text{ for each }\nu\geq 0\;,
\end{equation*}
and are hence eigenfunctions of ${-\Delta}$ with eigenvalue ${\lambda=\nu^2}$.
For this choice of $\Pi$ and $\group$, the invariance constraint
in \eqref{sturm:liouville} holds iff ${e(x)=e(x+c)}$ for every ${x\in\mathbb{R}}$.
That is true iff
\begin{equation*}
  \nu(x+c)\;=\;\nu x+2\pi(i-1)\quad\text{ for some $i\in\mathbb{N}$,
    and hence }\quad
  \lambda_i\;=\;\nu^2\;=\;\Bigl(\mfrac{2\pi(i-1)}{c}\Bigr)^2\;.
\end{equation*}
The eigenspaces are therefore the two-dimensional vector spaces
\begin{equation*}
    \mathcal{V}(\lambda_i)\;=\;\text{span}\braces{\sin(\sqrt{\lambda_i}\argdot),\cos\sqrt{\lambda_i}\argdot)}
    \quad\text{ with }\quad
    k(\lambda_i)\;=\;2
    \qquad\text{ for all }i\in\mathbb{N}\;.
\end{equation*}
Any continuous function $f$ that is $\group$-invariant (or, equivalently, $c$-periodic) can be expanded as
\begin{equation}
  \label{eq:fourier:series:1d}
    f(x)\;=\;\msum_{i\in\mathbb{N}}a_i\cos(\sqrt{\lambda_i}x)+b_i\sin(\sqrt{\lambda_i}x)\;.
\end{equation}
In the notation of \cref{theorem:spectral}, the coefficients are
${c_{2i}=a_i}$ and ${c_{2i+1}=b_i}$, and
\begin{equation*}
  e_{2i}(x)=\cos(\sqrt{\lambda_{2i}}x)
  \quad\text{ and }\quad
  e_{2i+1}(x)=\sin(\sqrt{\lambda_{2i+1}}x)
  \;.
\end{equation*}
Note that the unconstrained equation has solutions for all $\lambda$ in the uncountable set ${[0,\infty)}$.
The invariance constraint limits possible values to the countable set ${\lambda_1,\lambda_2,\ldots}$.
If $f$ was continuous but not invariant, the expansion \eqref{eq:fourier:series:1d} would hence require an integral
on the right. Since $f$ is invariant, a series suffices.

\begin{remark}[Multiplicities and real versus complex coefficients]
  Fourier series, in particular in one dimension, are often written using complex-valued functions as
    \begin{equation*}
      f(x)\;=\;\msum_{i\in\mathbb{N}}\gamma_i\exp(J\lambda_i x)\qquad\text{ where }
      \gamma_i\in\mathbb{C}\text{ and }J:=\sqrt{-1}\;.
    \end{equation*}
    Since Euler's formula ${\exp(Jx)=\cos(x)+J\sin(x)}$ shows
    \begin{equation*}
      \gamma_i\exp(J\sqrt{\lambda_i})
      \;=\;
      a_i\cos(\sqrt{\lambda_i}x)+b_i\sin(\sqrt{\lambda_i})
      \quad\text{ for }(a_i-Jb_i)=\gamma_i\;,
  \end{equation*}
    that is equivalent to \eqref{eq:fourier:series:1d}. The complex plane $\mathbb{C}$ is not inherent to the Fourier
    representation, but rather a convenient way
    to parameterize the two-dimensional eigenspace $\mathcal{V}(\lambda_i)$.
    For general crystallographic groups, the complex representation is less useful,
    since the multiplicities $k(\lambda_i)$ may not be even, as can be seen in \cref{fig:fourier:p6}.
\end{remark}

\begin{figure}
  \begin{tikzpicture}
    \begin{scope}
      \node at (0,0) {\includegraphics[height=2.7cm]{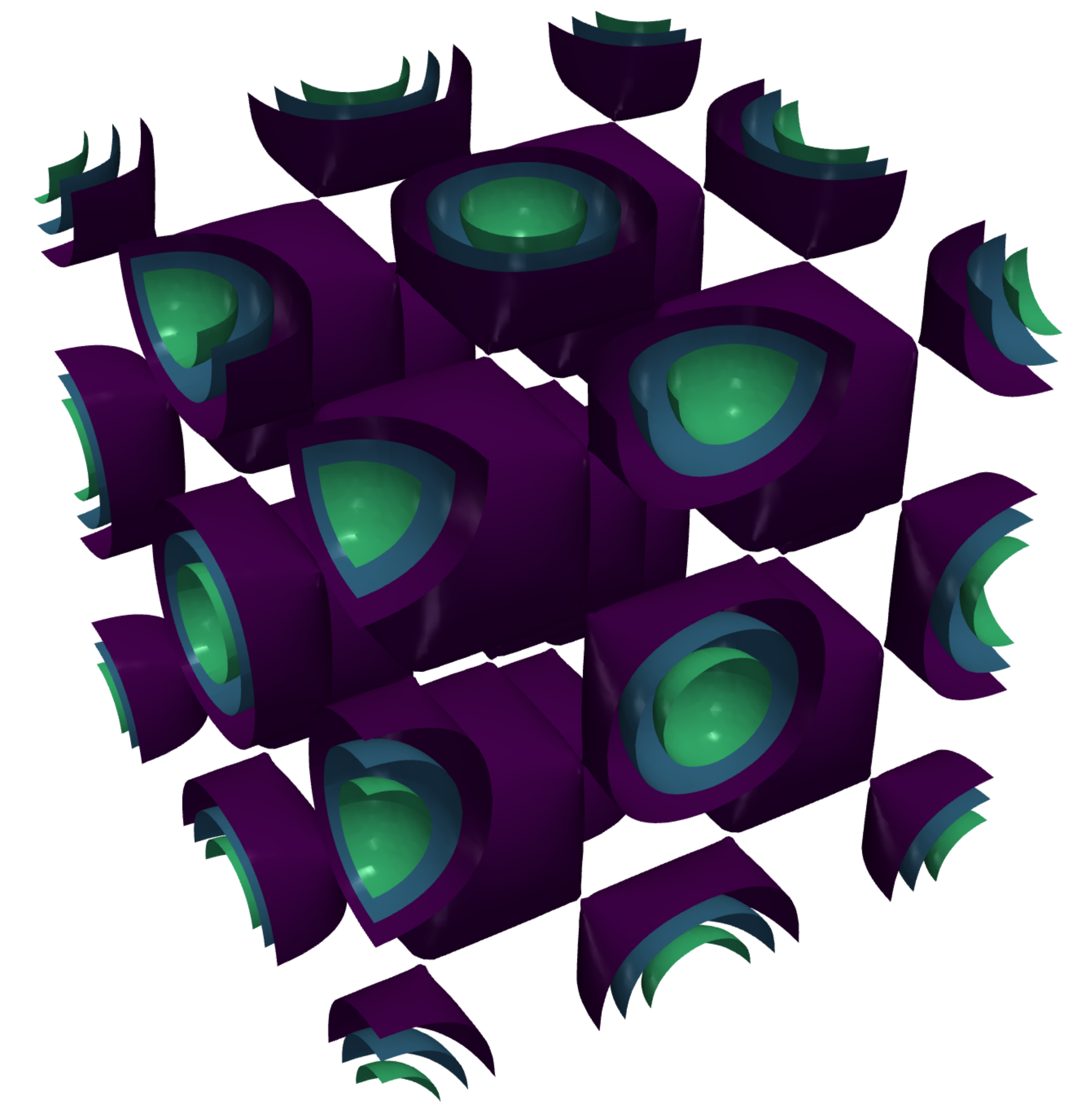}};
      \node at (3,0) {\includegraphics[height=2.7cm]{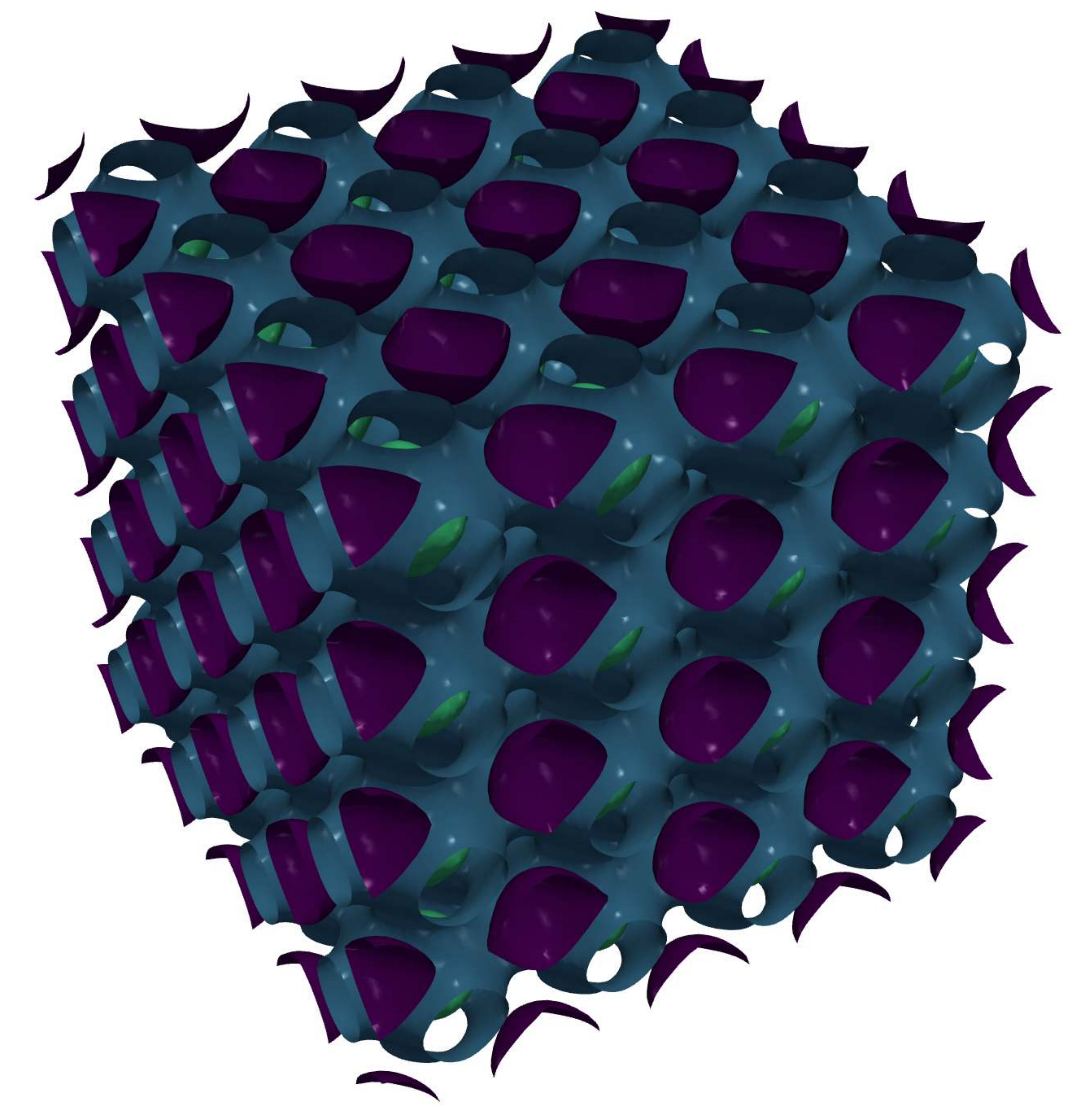}};
      \node at (6,0) {\includegraphics[height=2.7cm]{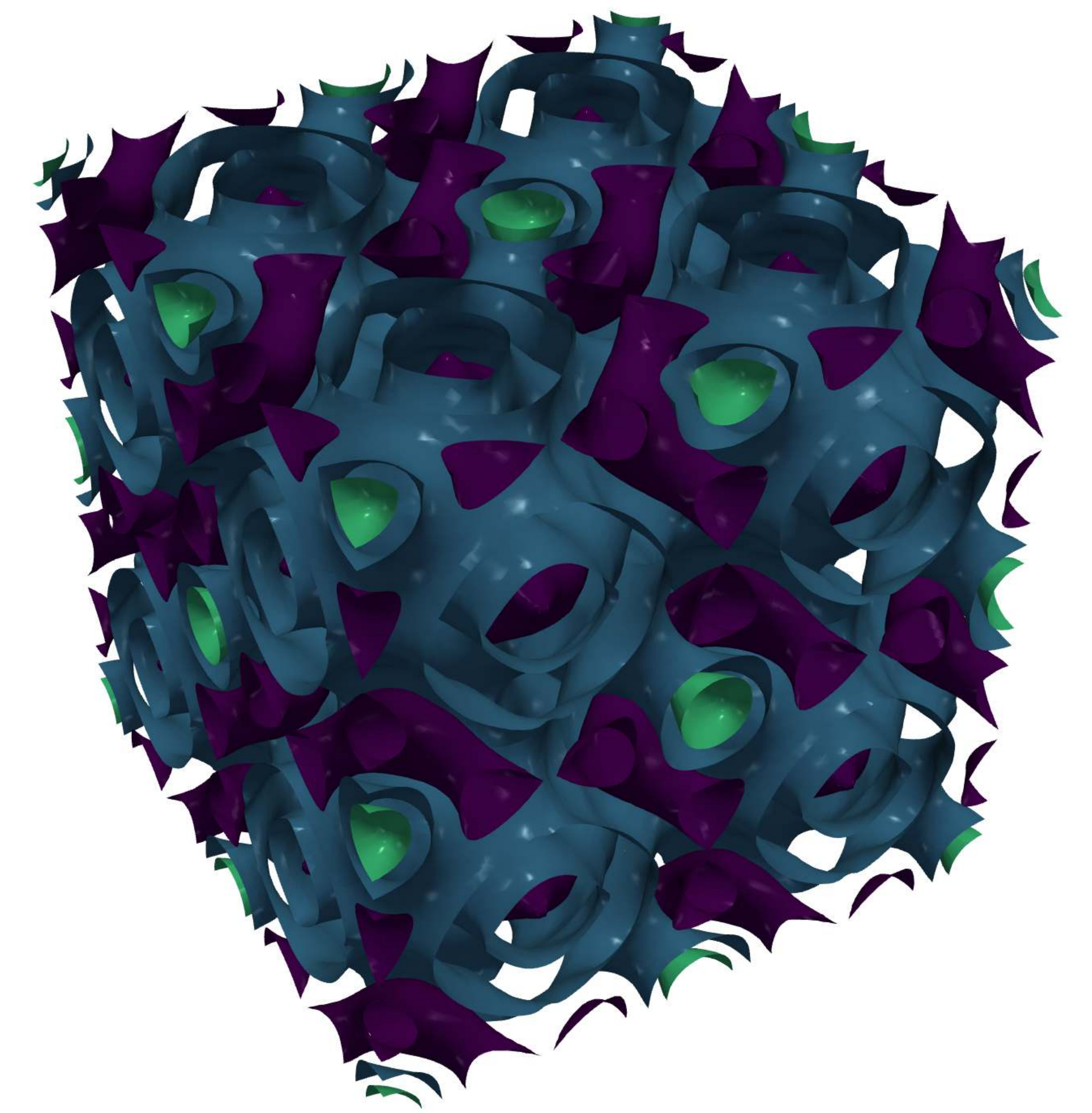}};
      \node at (9,0) {\includegraphics[height=2.7cm]{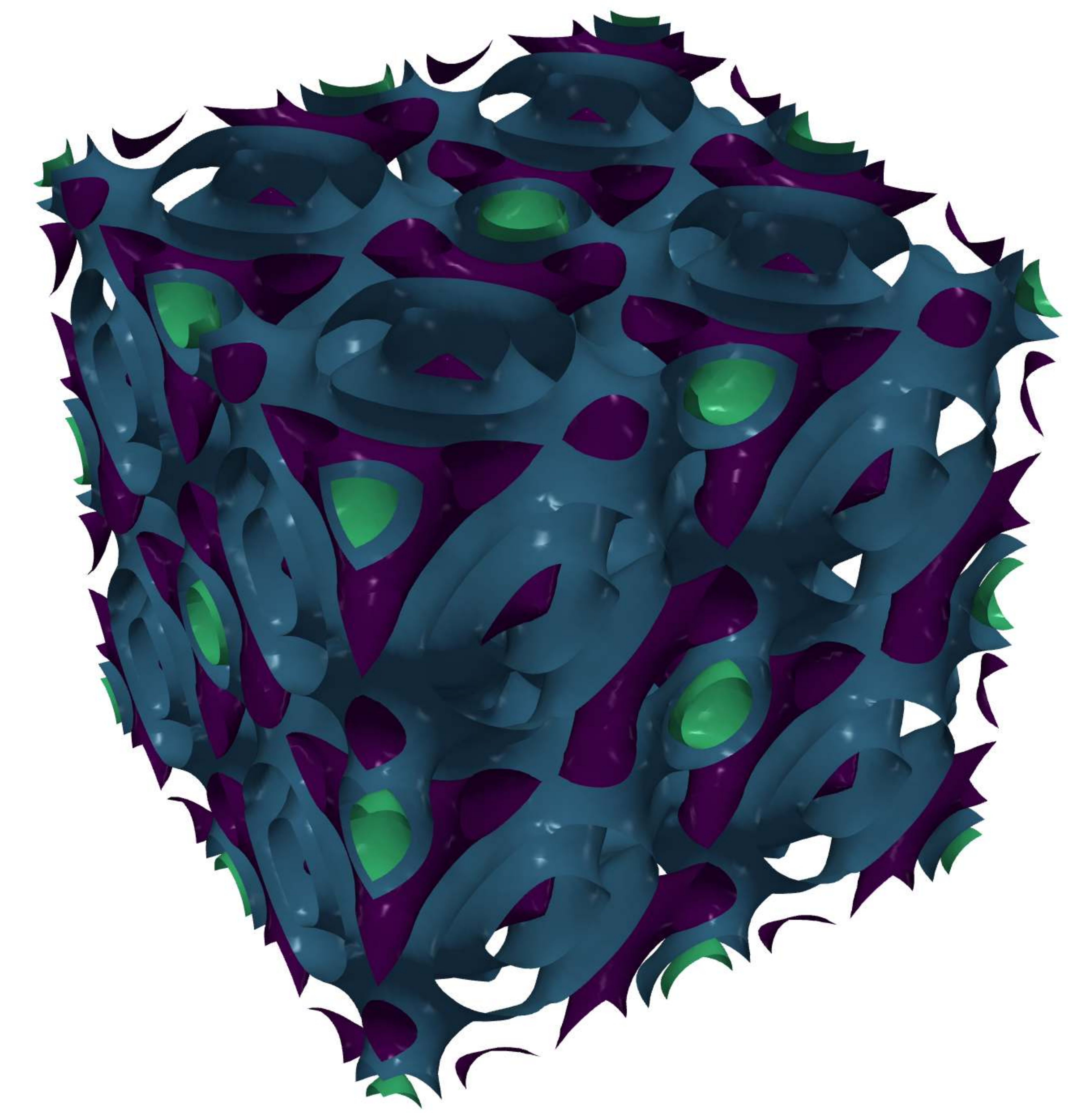}};
      \node at (12,0) {\includegraphics[height=2.7cm]{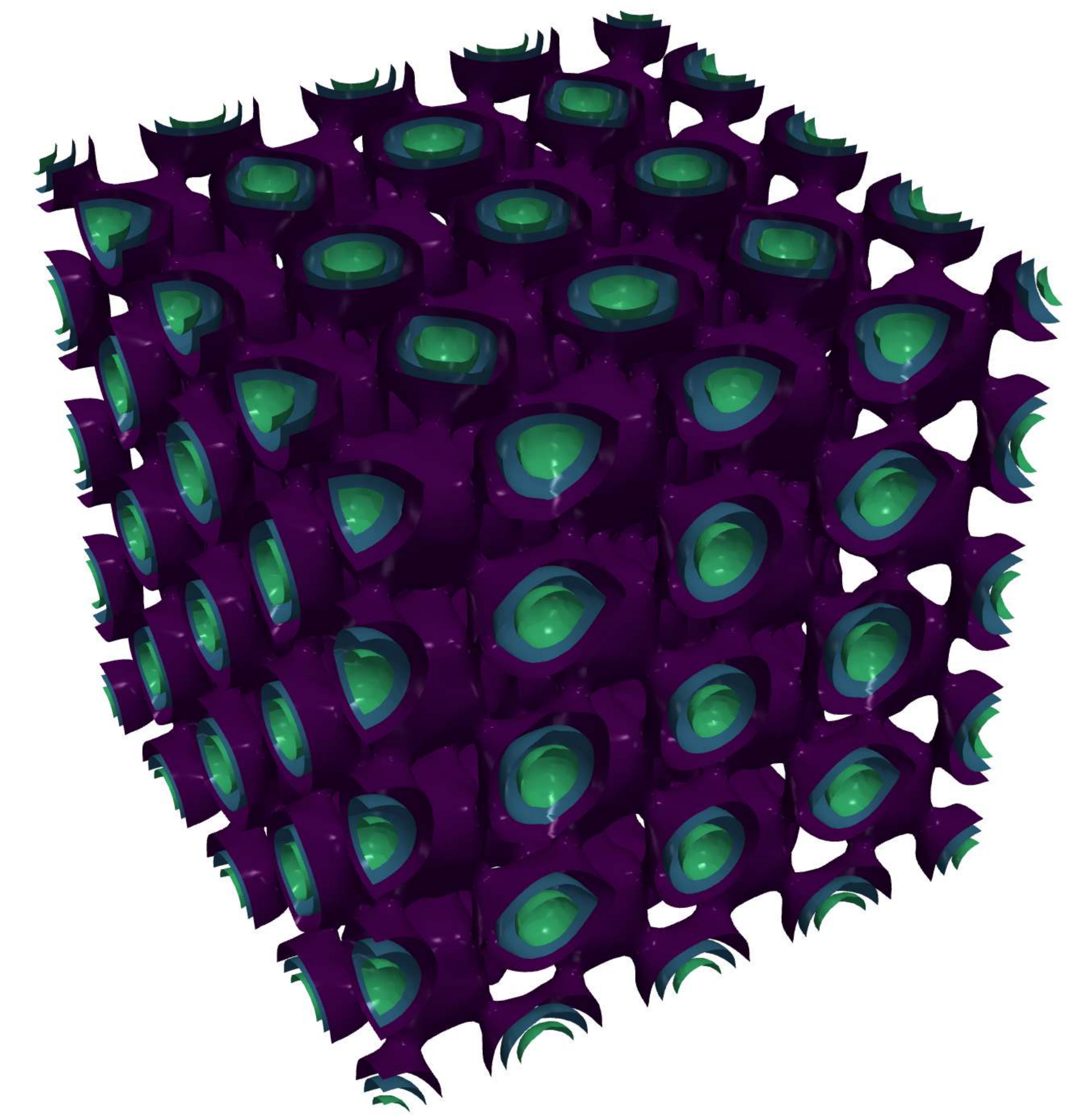}};
      \node at (0,-1.7) {\scriptsize 78.94};
      \node at (3,-1.7) {\scriptsize 158.46};
      \node at (6,-1.7) {\scriptsize 237.30};
      \node at (9,-1.7) {\scriptsize 238.46};
      \node at (12,-1.7) {\scriptsize 323.08};
    \end{scope}
    \begin{scope}[yshift=-3.5cm]
      \node at (0,0) {\includegraphics[height=2.7cm]{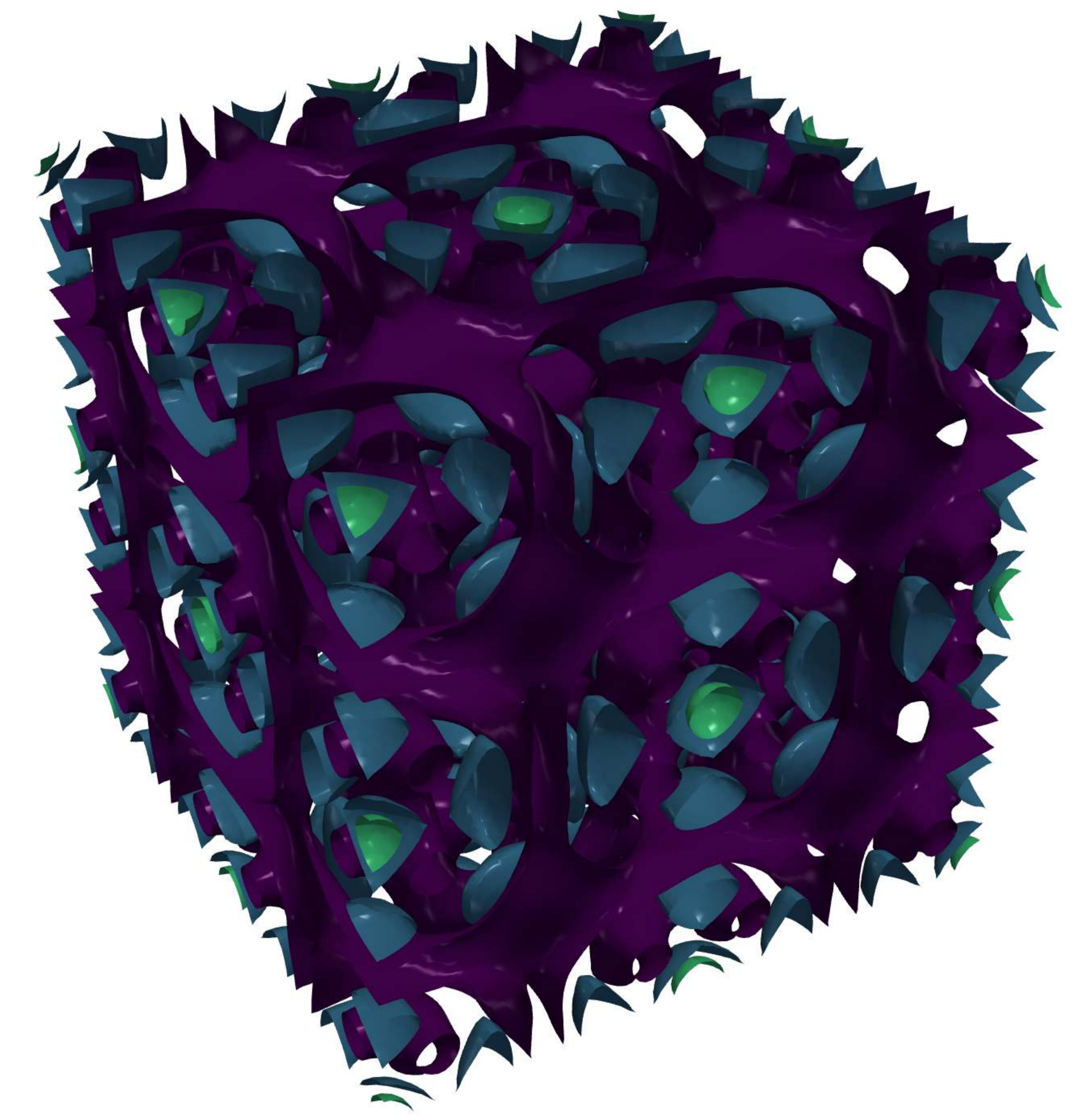}};
      \node at (3,0) {\includegraphics[height=2.7cm]{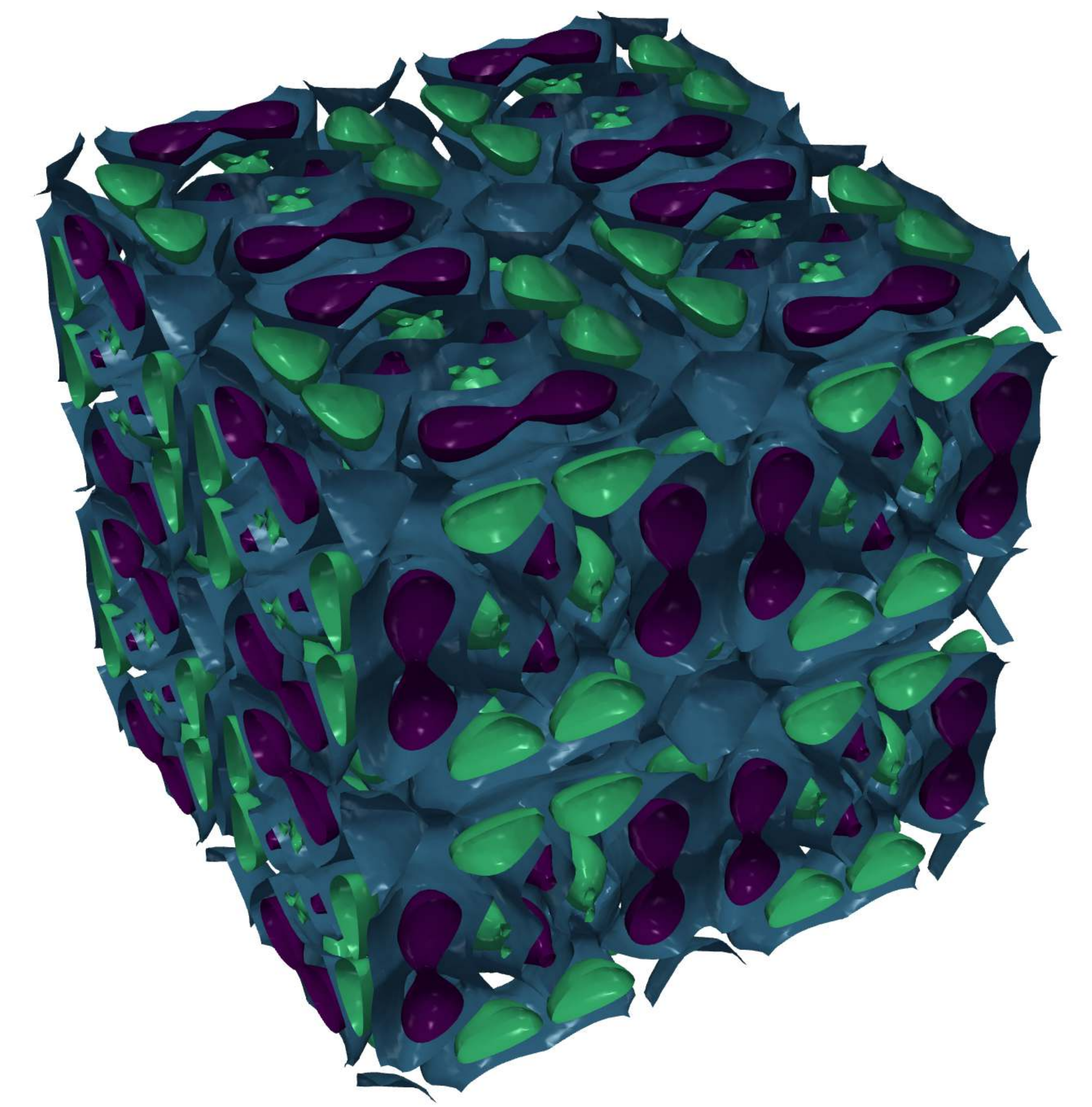}};
      \node at (6,0) {\includegraphics[height=2.7cm]{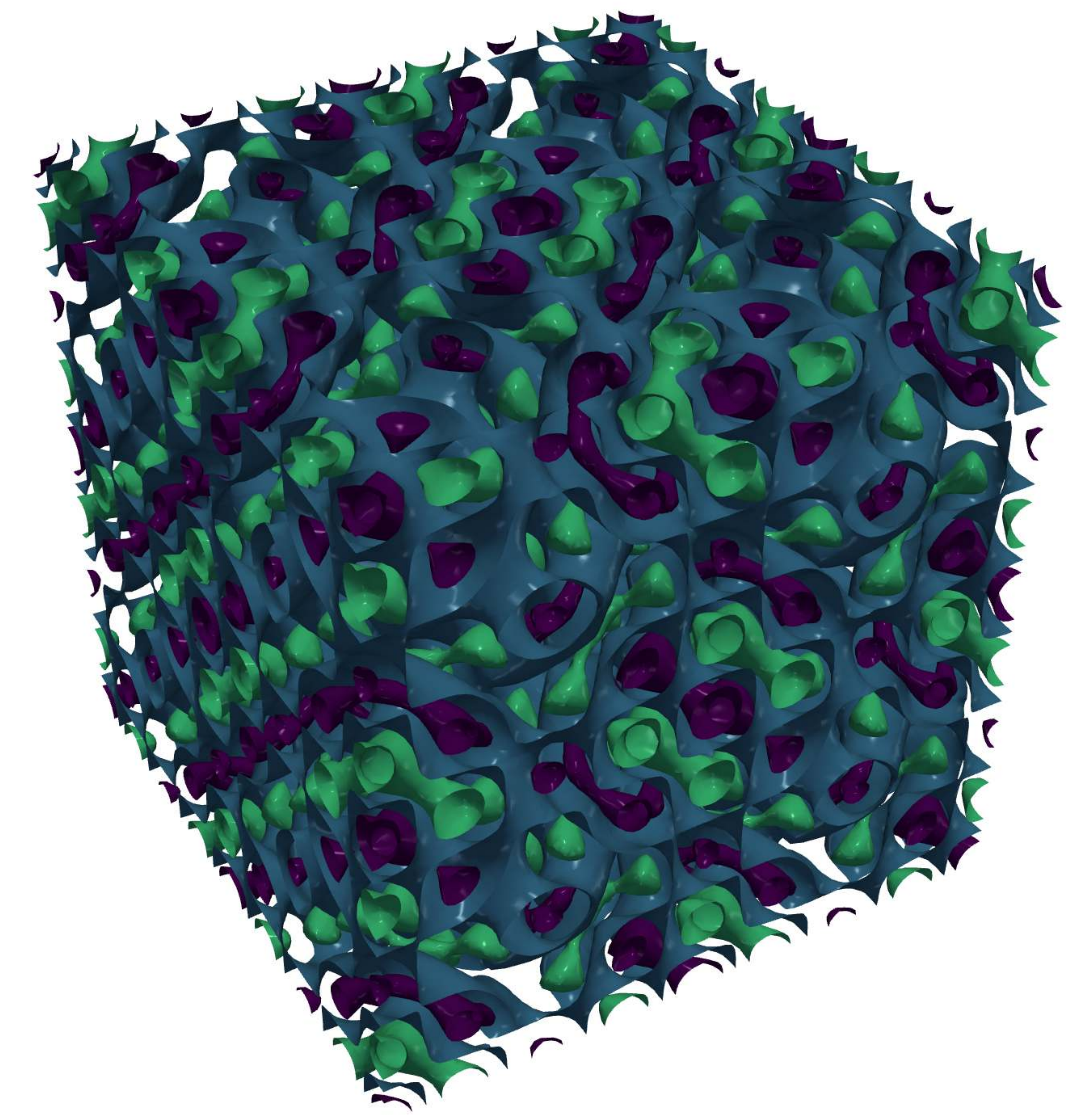}};
      \node at (9,0) {\includegraphics[height=2.7cm]{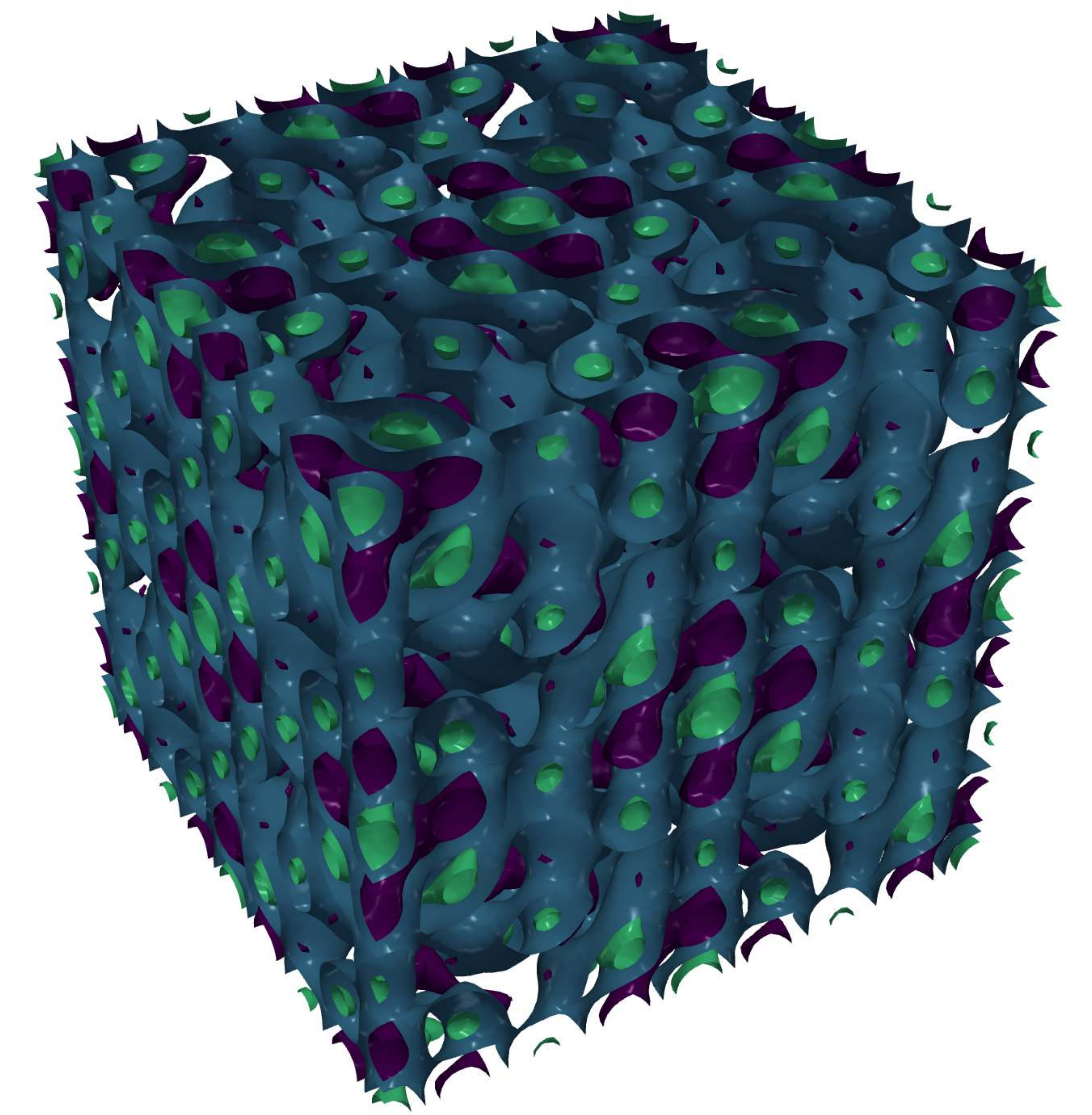}};
      \node at (12,0) {\includegraphics[height=2.7cm]{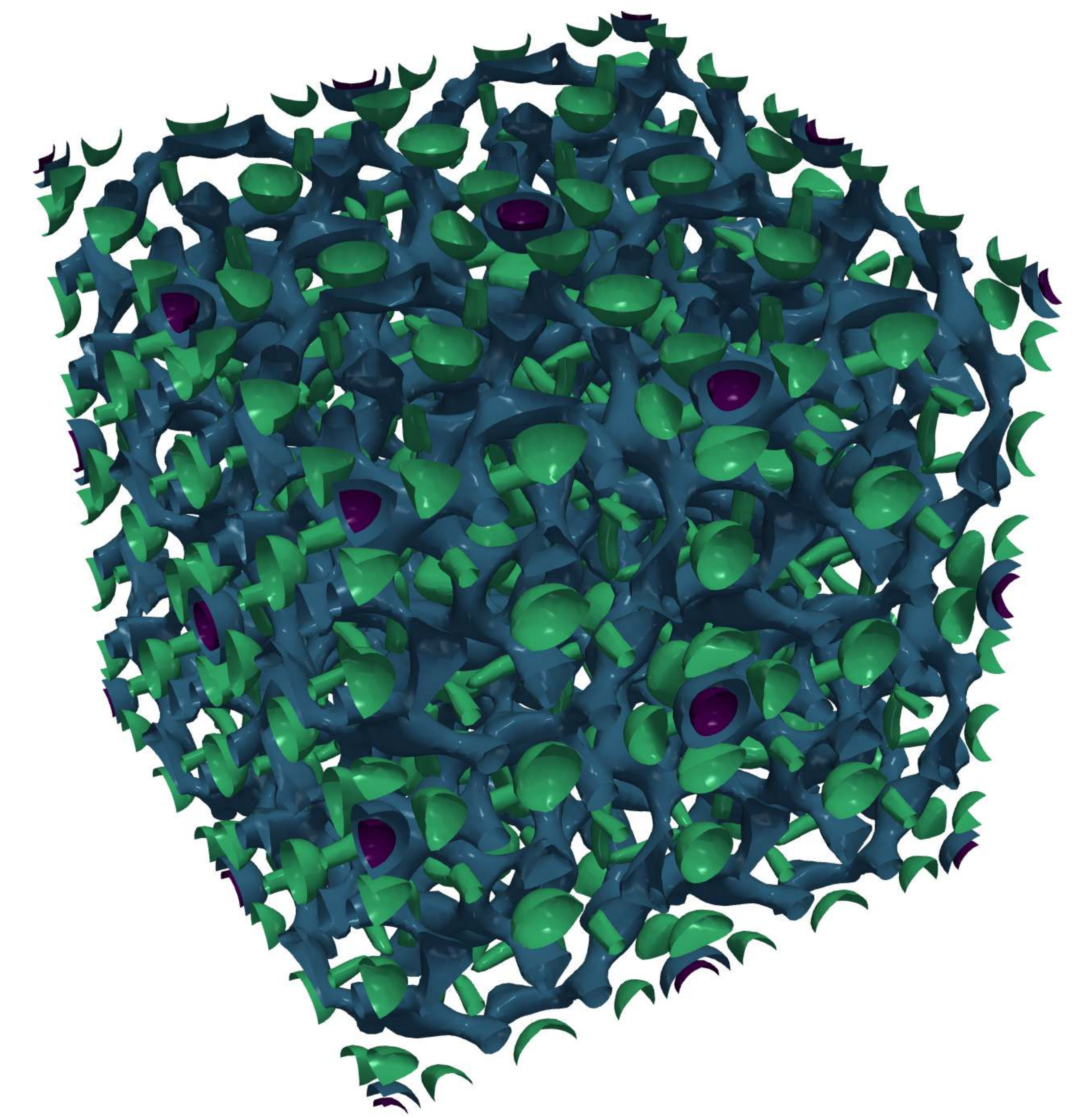}};
      \node at (0,-1.7) {\scriptsize 395.37};
      \node at (3,-1.7) {\scriptsize 438.02};
      \node at (6,-1.7) {\scriptsize 489.34};
      \node at (9,-1.7) {\scriptsize 490.42};
      \node at (12,-1.7) {\scriptsize 576.43};
    \end{scope}
  \end{tikzpicture}
    \caption{The first ten non-constant basis functions ${e_2,\ldots,e_{11}}$
    in \cref{theorem:spectral}, with their approximate eigenvalues $\lambda_i$, for the group \texttt{I23} on $\mathbb{R}^3$.
  }
  \label{fig:fourier:I23}
\end{figure}

\subsection{Spectral algorithms}
\label{sec:spectral:algorithms}

The eigenfunctions in \cref{theorem:spectral} can be
approximated by eigenvectors of a suitable graph Laplacian of the orbit graph as follows.
We first compute an orbit graph ${\mathcal{G}=(\Gamma,E)}$ as described in
\cref{sec:orbit:graphs}. We weight each edge $(x,y)$ of the graph by
\begin{align}
  w(x,y) &=
  \begin{cases}
    \exp\Bigl(-\frac{d^2_\group(\group(x), \group(y))}{2\epsilon^2}\Bigr) &
    \text{ if }(x,y)\in E\\
    0 &\text{ otherwise}
  \end{cases}
\;.
\label{eqn:edge-weights}
\end{align}
The normalized Laplacian of the weighted graph is
\begin{align}
L = \mathbb{I} - D^{-1}W \qquad\text{ where }W_{xy}=w(x,y)\,,
\label{eqn:graph-laplacian}
\end{align}
and ${D}$ is the diagonal matrix containing the sum of each row of~$W$.
See e.g., \citet{chung1997spectral} for more on the matrix $L$.
Our estimates of the eigenvalues and -functions of~$\Delta$ are the eigenvalues and eigenvectors of~$L$,
\begin{equation*}
  \hat{\lambda}_i\;:=\;\text{$i$th eigenvalue of }L
  \quad\text{ and }\quad
  \hat{e}_i\;:=\;\text{$i$th eigenvector of }L\,.
\end{equation*}
These approximate the spectrum of $\Delta$ in the sense that
\begin{equation*}
  \lambda_i
  \;\approx\;
  \mfrac{2}{\epsilon}\,\hat{\lambda}_i
  \qquad\text{ and }\qquad
  e_i(x)
  \;\approx\;
  \mfrac{2}{\epsilon}\,(\hat{e}_{i})_x
  \qquad\text{ for }x\in\Gamma\;,
\end{equation*}
see \citet{singer2006graph}. Once an eigenvector $\hat{e}_i$ is computed, values of
$\ef_i$ at points ${x\not\in\Gamma}$ can be estimated using standard interpolation methods.
\begin{algorithm}[Computing Fourier basis]{\ }\\[.5em]
  \label{algorithm:spectral}
  \begin{tabular}{cl}
  1.) & Construct the orbit graph~$(\Gamma,E)$.\\
  2.) & Compute the normalized Laplacian matrix~$L$ according to \eqref{eqn:graph-laplacian}.\\
  3.) & Compute eigenvectors~$\hat{\ef}_i$ and eigenvalues~$\hat{\lambda}_i$ of~$L$.\\
  4.) & Return eigenvalues and interpolated eigenfunctions.
  \end{tabular}
\end{algorithm}
Alternatively, the basis can be computed using a Galerkin approach, which
is described in \cref{sec:linear-reps-revisited}. The functions in Figures \ref{fig:fourier:p1}, \ref{fig:fourier:p6}
and \ref{fig:fourier:I23} are computed using the Galerkin method.

\begin{remark}[Reflections and Neumann boundary conditions]
  \label{remark:neumann}
  The orbit graph automatically enforces the boundary condition \eqref{pbc}, since it measures distance in terms of $d_\group$.
  The exception are group elements that are reflections, since these imply an additional property that the graph does not resolve:
  If $\phi$ is a reflection over a facet $S$, $x$ a point on $S$ (and hence $\phi x=x$), and $f$ a $\phi$-invariant smooth function,
  we must have ${\nabla f(x)=-\nabla f(\phi x)}$, and hence $\nabla f=0$ on $S$. In the parlance of PDEs, this is a \kword{Neumann boundary condition},
  and can be enforced in several ways:\\
  1) For each point ${x_j\in\Gamma}$ that is on $S$, add a point $x_{j'}$ to $\Gamma$ and
  the edge $(x_j,x_j')$ to $E$. Then constrain each eigenvector $e_i$ in \cref{algorithm:spectral} to satisfy ${e_{ij}=e_{ij'}}$.
  This approach is common in spectral graph theory (e.g., \citet{chung1997spectral}).
  \\
  2) Alternatively, one may symmetrize the orbit graph: For vertext $x_j$ that is close to $S$, add its reflection ${x_{j'}:=\phi(x_j)}$ to
  $\Gamma$. Now construct the edge set according to $d_\group$ using the augmented vertex set, and again constrain eigenvectors to
  satisfy ${e_{ij}=e_{ij'}}$.
  Either constrained eigenvalue problem can be solved using techniques of \citet{golub1973some}.
\end{remark}

\section{Nonlinear representation: Factoring through an orbifold}
\label{sec:nonlinear}

\begin{figure}[t]
  \begin{center}
    \begin{tikzpicture}
      \begin{scope}
      \node at (.1,0) {\includegraphics[height=2.794cm]{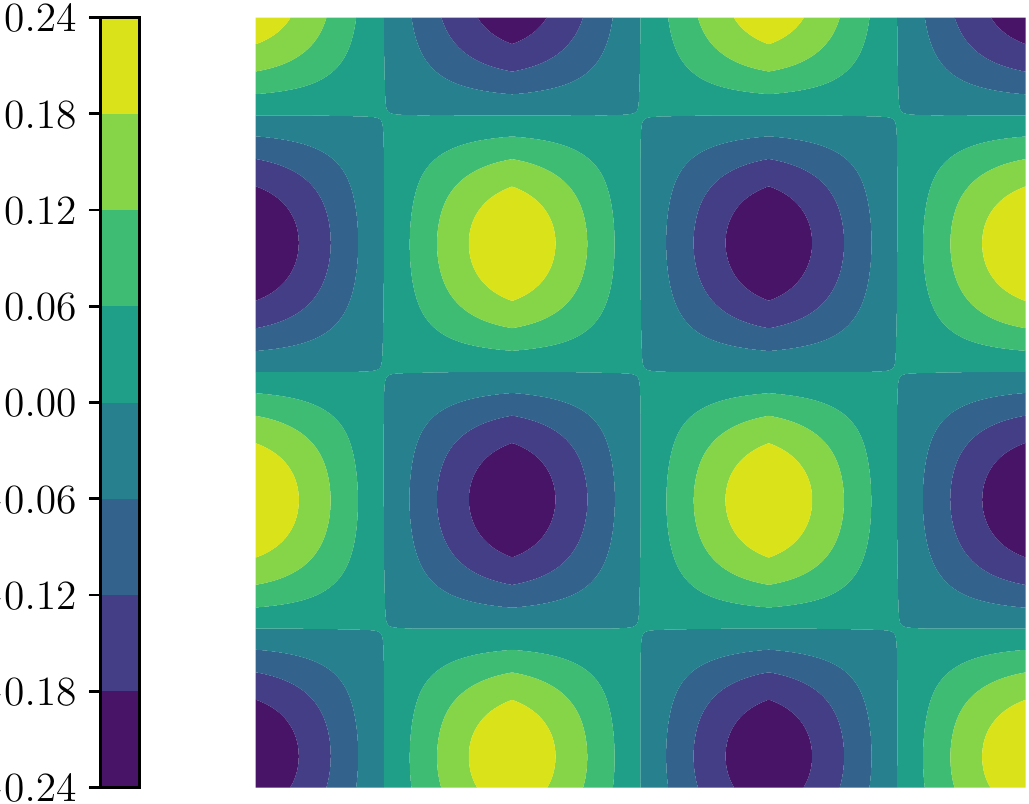}};
      \node at (3.5,0) {\includegraphics[height=2.7cm]{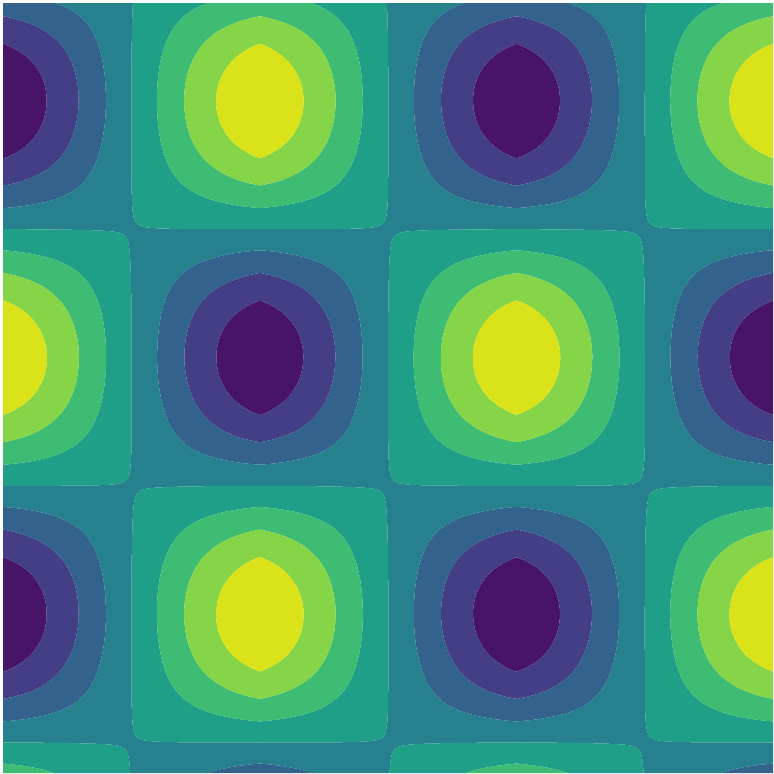}};
      \node at (6.5,0) {\includegraphics[height=2.7cm]{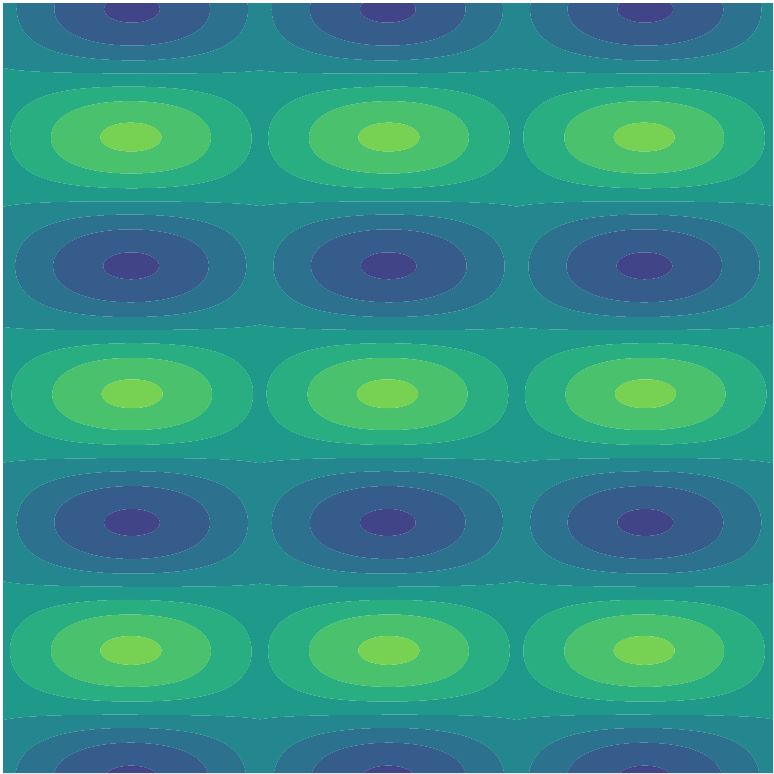}};
      \node at (9.5,0) {\includegraphics[height=2.7cm]{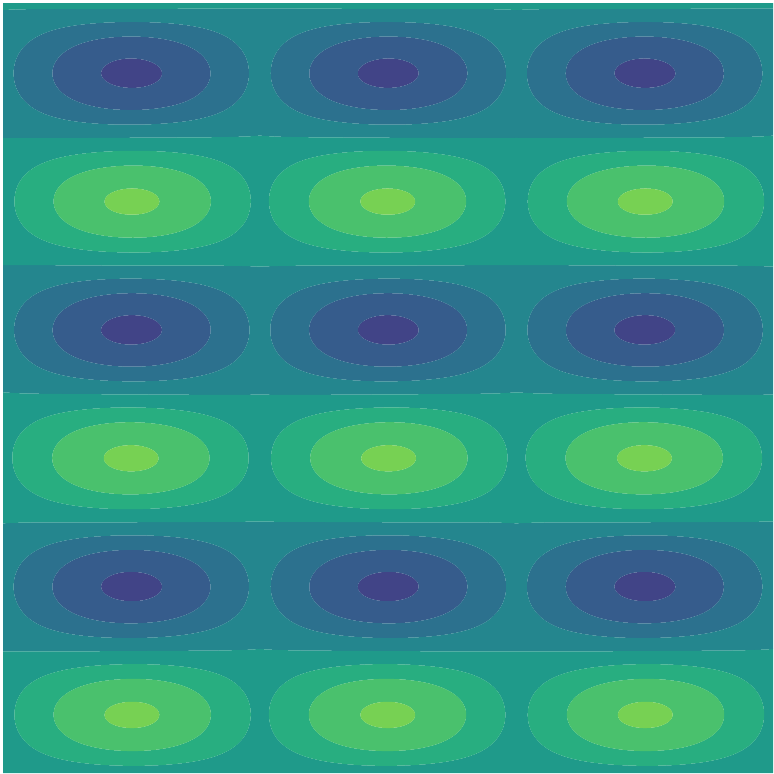}};
      \end{scope}
      \begin{scope}[yshift=-3cm]
      \node at (.1,0) {\includegraphics[height=2.794cm]{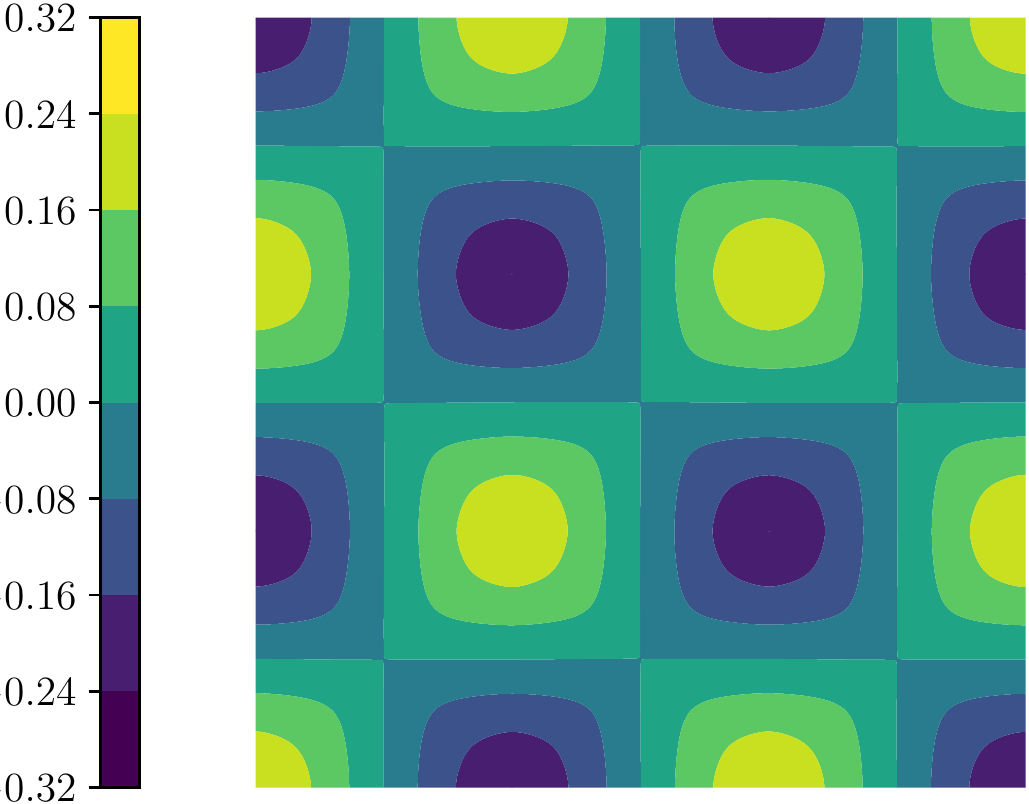}};
      \node at (3.5,0) {\includegraphics[height=2.7cm]{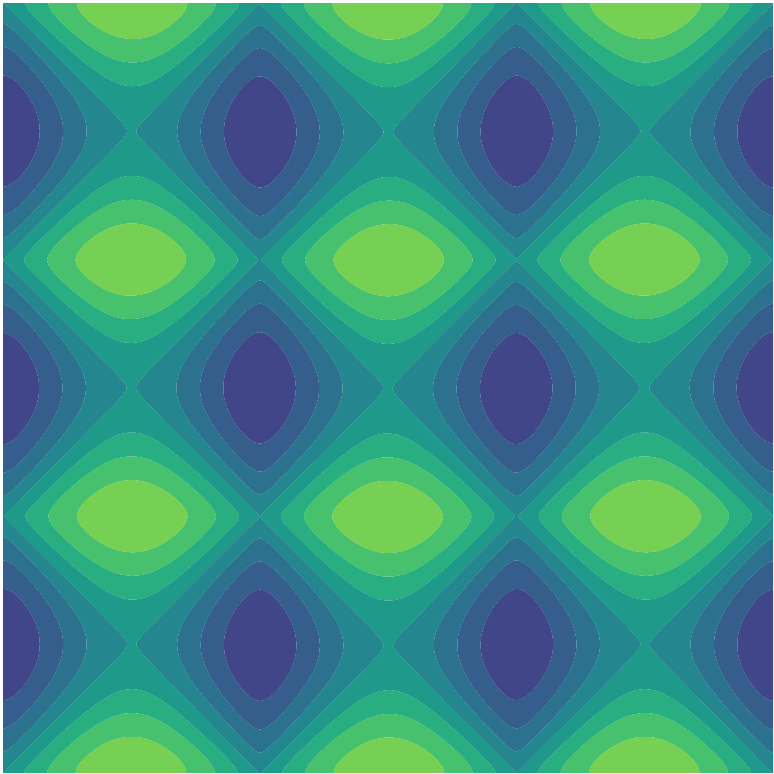}};
      \node at (6.5,0) {\includegraphics[height=2.7cm]{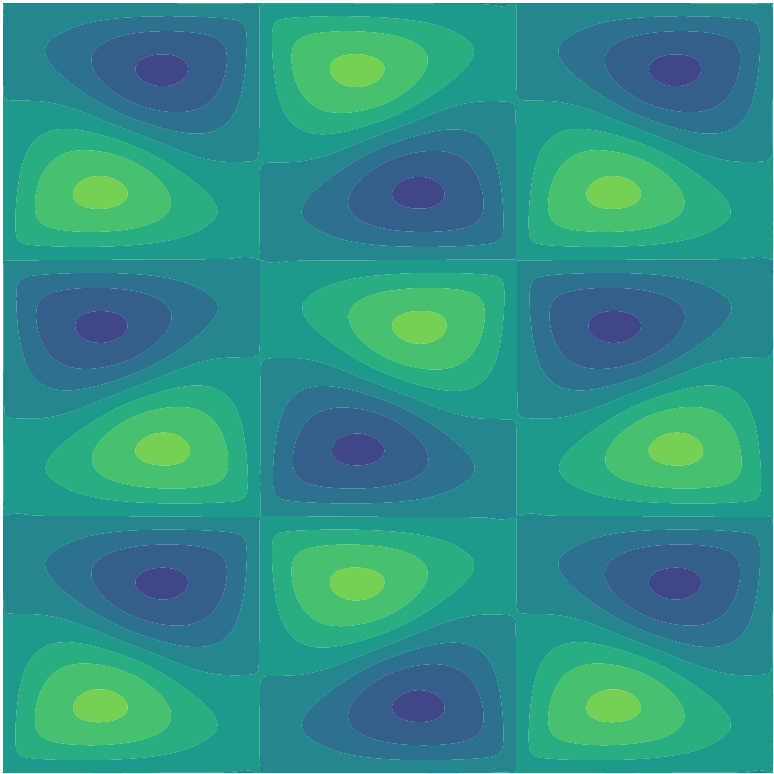}};
      \node at (9.5,0) {\includegraphics[height=2.7cm]{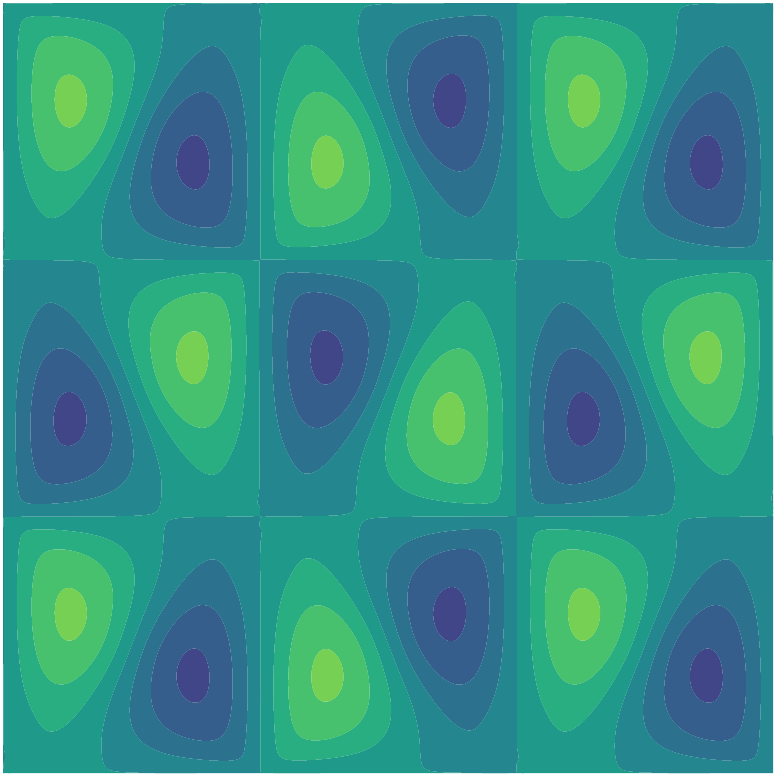}};
      \end{scope}
    \end{tikzpicture}
  \end{center}
  \vspace{-1em}
\caption{Visualizations of four-dimensional orbifold embeddings for wallpaper groups \texttt{cm} (top) and \texttt{p2gg} (bottom).  Compare the \texttt{p2gg} embedding to the three-dimensional visualization in Figure \ref{fig:orbifolds}.}
\label{fig:plane-embeddings}
\end{figure}

We now generalize MacKay's construction, as sketched in the introduction, from shifts to
crystallographic groups. The construction defines a map
\begin{equation*}
  \rho|_\Pi:\,\Pi\rightarrow\Omega\subset\mathbb{R}^N
  \quad\text{ which in turn defines }\quad
  \rho:=\rho_\Pi\circ p:\mathbb{R}^n\rightarrow\Omega\;.
\end{equation*}
In MacKay's case, $\Pi$ is an interval and ${\Omega\in\mathbb{R}^2}$ a circle.
The circle can be obtained from~$\Pi$ by ``gluing'' the ends of the interval to each other.
To generalize this idea, we proceed as follows: Starting with the polytope $\Pi$,
we find any pair of points $x$ and $y$ on the same orbit of $\group$, and ``bend'' $\Pi$
so that we can glue $x$ to $y$. That results in a surface $\Omega$ in $\mathbb{R}^N$, where ${N\geq n}$
since we have bent $\Pi$.
If we denote the point on $\Omega$ that corresponds to ${x\in\Pi}$ by
${\rho|_\Pi(x)}$, we obtain the maps above. We first show how to implement this construction numerically,
and then consider its mathematical properties.
In mathematical terms, the surface $\Omega$ is an \kword{orbifold}, a concept that generalizes
the notion of a manifold. The term~$\group$-orbifold is made precise in
in \cref{sec:orbifolds}, but can be read throughout this section as a surface in $\mathbb{R}^N$
that is ``smooth almost everywhere''.

\subsection{Gluing algorithms }
\label{sec:embedding:algorithm}

The gluing algorithm constructs numerical approximations $\widehat{\rho}$ of $\rho$ and
$\widehat{\Omega}$ of $\Omega$. Here, $\widehat{\Omega}$ is a surface in $\widehat{N}$ dimensions,
where (as we explain below) ${\widehat{N}}$ may be larger than $N$.
As in the linear formulation of \cref{sec:linear}, we start with the orbit graph ${\mathcal{G}=(\Gamma,E)}$,
but in this case weight the edges to obtain a weighted graph
\begin{equation*}
  \mathcal{G}_w\;=\;(\Gamma,E_w)
  \qquad\text{ with }\qquad
  \text{weight}(x,y):=\begin{cases}d_\group(x,y) & \text{if }(x,y)\in E\\
  0 & \text{if }(x,y)\not\in E
  \end{cases}\;.
\end{equation*}
The weighted graph provides approximate distances in quotient space.
The surface $\widehat{\Omega}$ is constructed from this graph by multidimensional scaling
(MDS) \cite{kruskal1964multidimensional}.
MDS proceeds as follows: Let $R$ be the matrix of squared geodesic distances, with entries
\begin{equation*}
  R_{ij}\;=\;(\text{weighted path length from $x_i$ to $x_j$ in $\mathcal{G}_w$})^2\;.
\end{equation*}
Let ${0<\delta_1\leq\ldots\leq\delta_{|\Gamma|}}$ be the eigenvalues and ${v_1,\ldots,v_{|\Gamma|}}$ the eigenvectors of the matrix
\begin{align*}
  \tilde{R} &= -\frac{1}{2}\Bigl(\mathbb{I} - \mfrac{1}{|\Gamma|}\mathbb{J}\Bigr)\,R\,\Bigl(\mathbb{I} - \mfrac{1}{|\Gamma|}\mathbb{J}\Bigr)
  \qquad\text{ where }
  \mathbb{I}=\text{diag}(1,\ldots,1)\text{ and }
  \mathbb{J}={\scriptsize
  \begin{pmatrix} 1 & \cdots & 1\\ \svdots & & \svdots\\ 1 & \cdots &1
  \end{pmatrix}}\;.
\end{align*}
The embedding of each point ${x_i}$ in the $\varepsilon$-net $\Gamma$ is
then given by
\begin{equation*}
  \widehat{\rho}(x_i)\;:=\;
\left(\begin{matrix}
	\sqrt{\delta_{|\Gamma|}}\,v_{|\Gamma|,i}\\[.1em]
	\sqrt{\delta_{|\Gamma|-1}}\,v_{|\Gamma|-1,i}\\
	\vdots\\
	\sqrt{\delta_{\smash{|\Gamma|-\widehat{N}}}}\,v_{|\Gamma|-\widehat{N},i}
\end{matrix}\right)\,.
\end{equation*}
The dimension $\widehat{N}$ is chosen to minimize error in the distances.
From ${\smash{\widehat{\rho}}(x_1),\ldots,\smash{\widehat{\rho}}(x_{|\Gamma|})}$, the surface $\smash{\widehat{\Omega}}$ and the map
$\smash{\widehat{\rho}|_\Pi}$ are obtained by interpolation.
\begin{figure}[t]
  \begin{center}
    \begin{tikzpicture}
      \begin{scope}
        \node at (0,0) {\includegraphics[width=3cm]{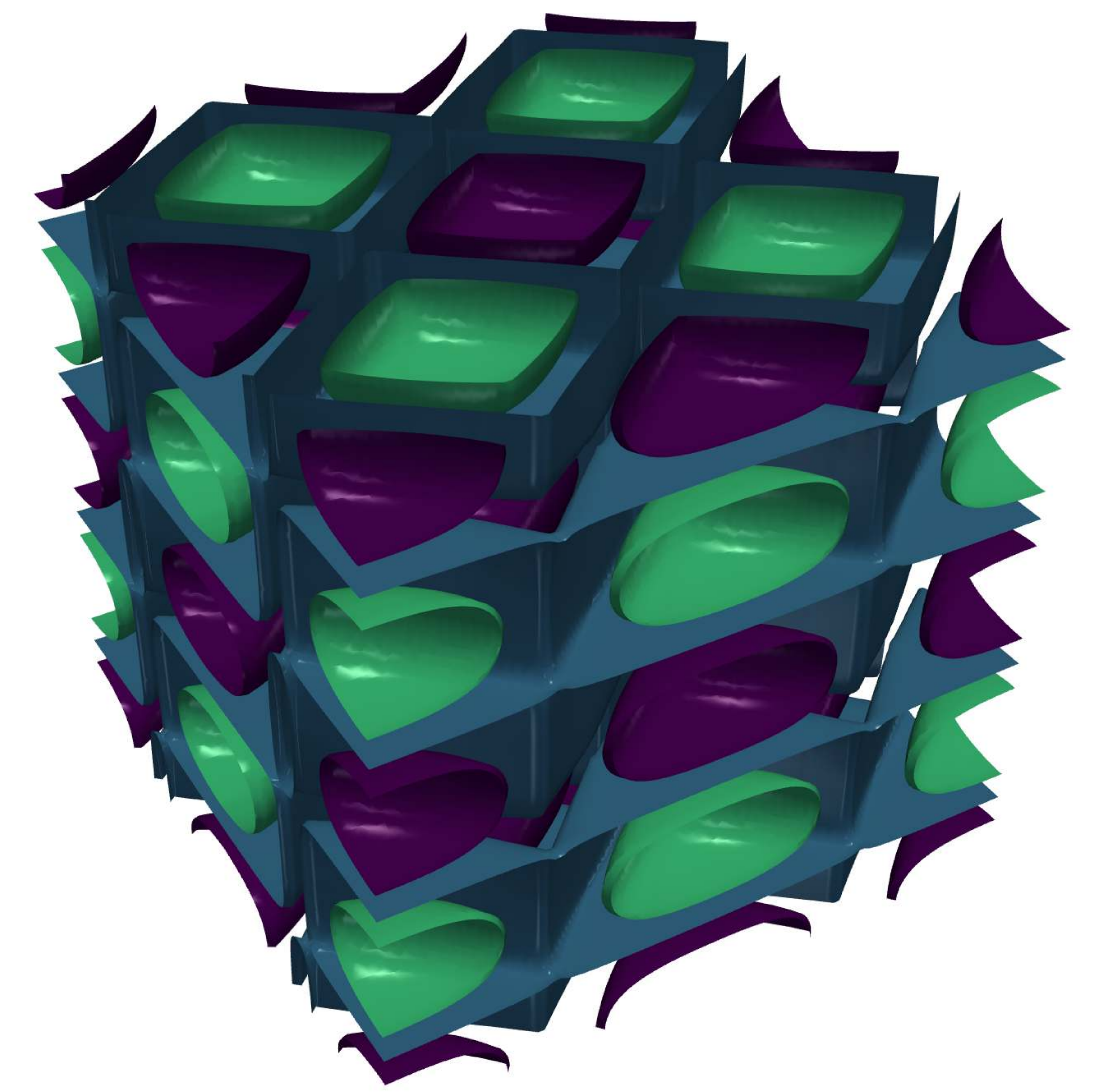}};
        \node at (3,0) {\includegraphics[width=3cm]{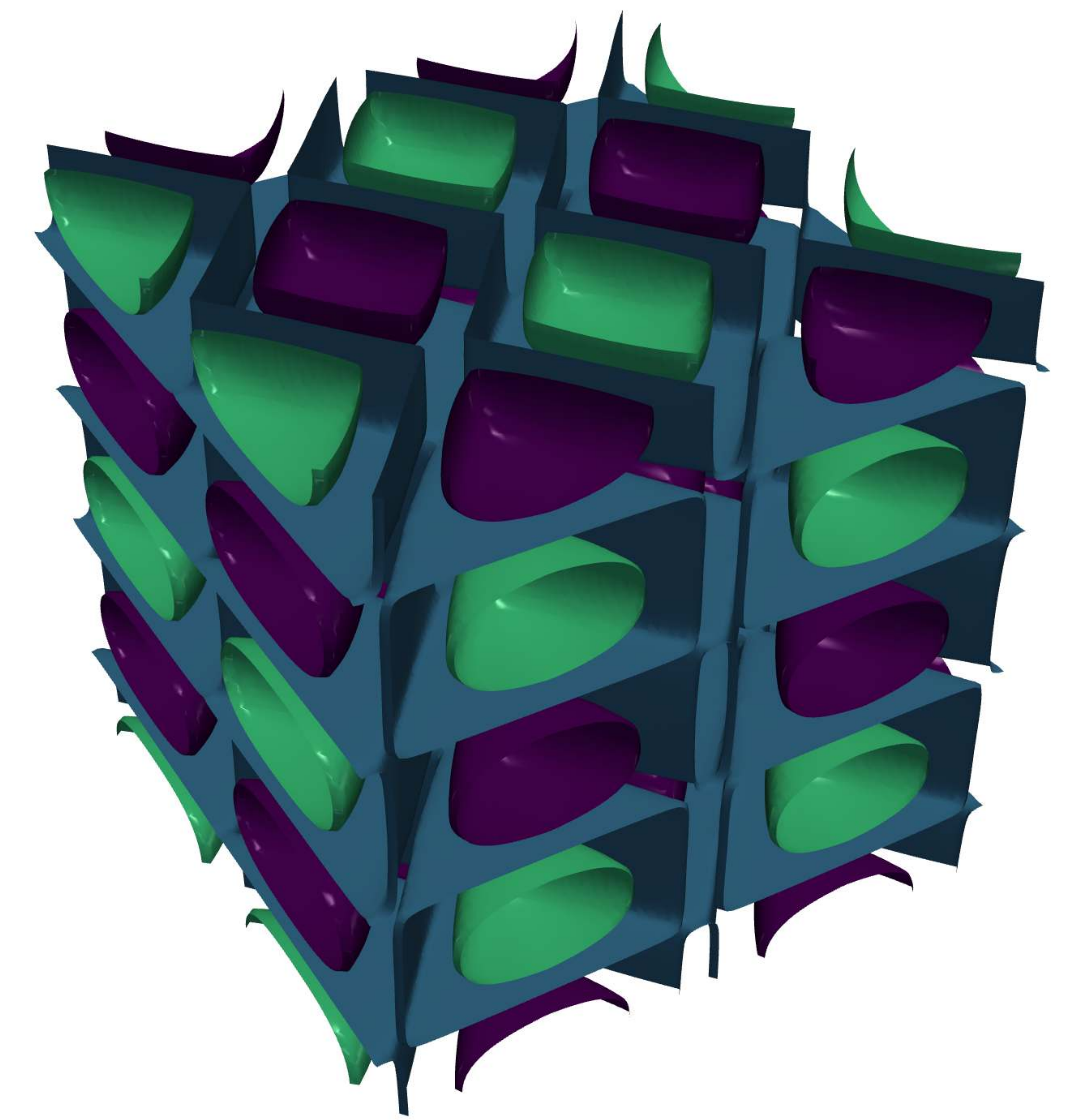}};
        \node at (6,0) {\includegraphics[width=3cm]{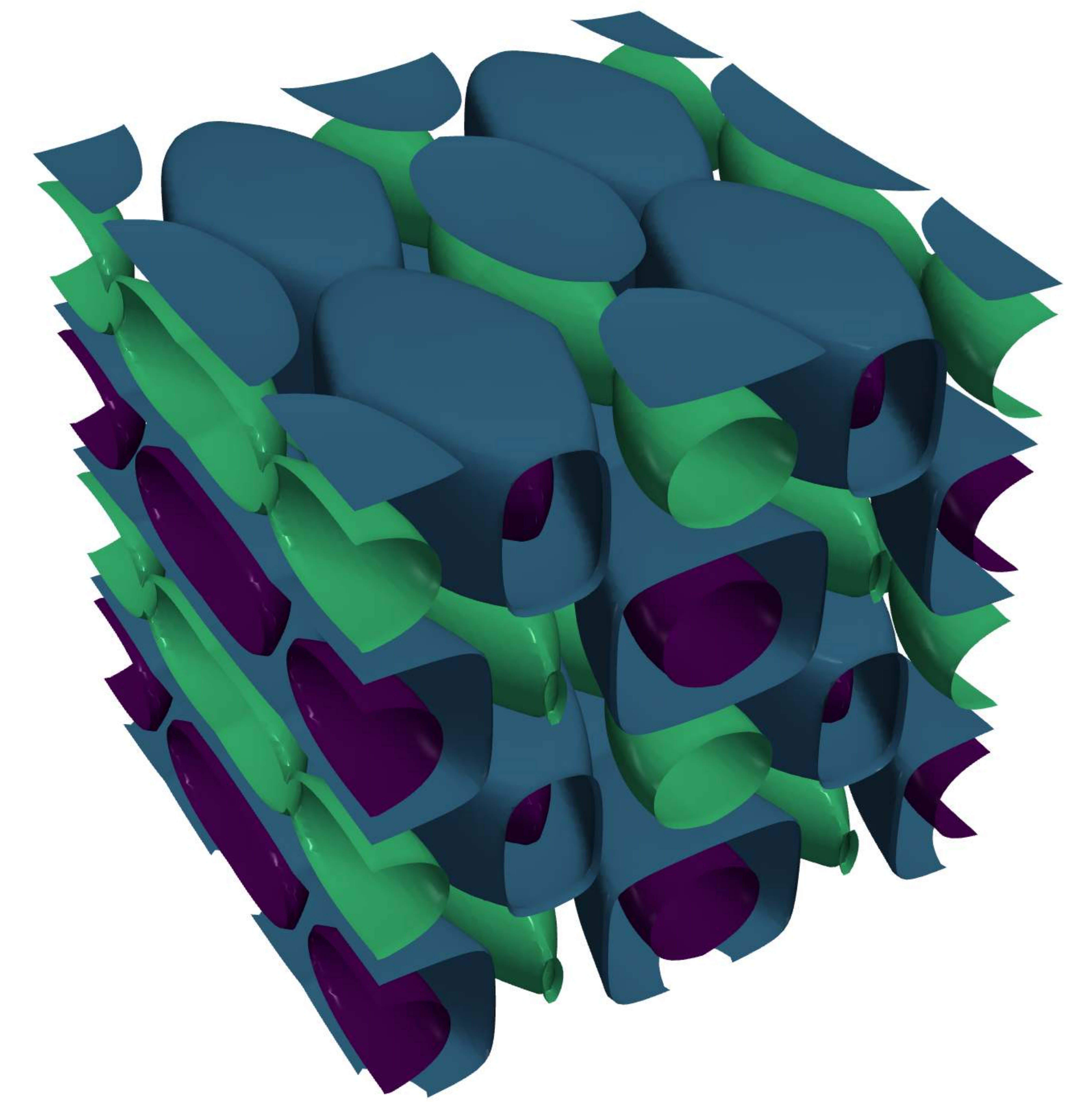}};
        \node at (9,0) {\includegraphics[width=3cm]{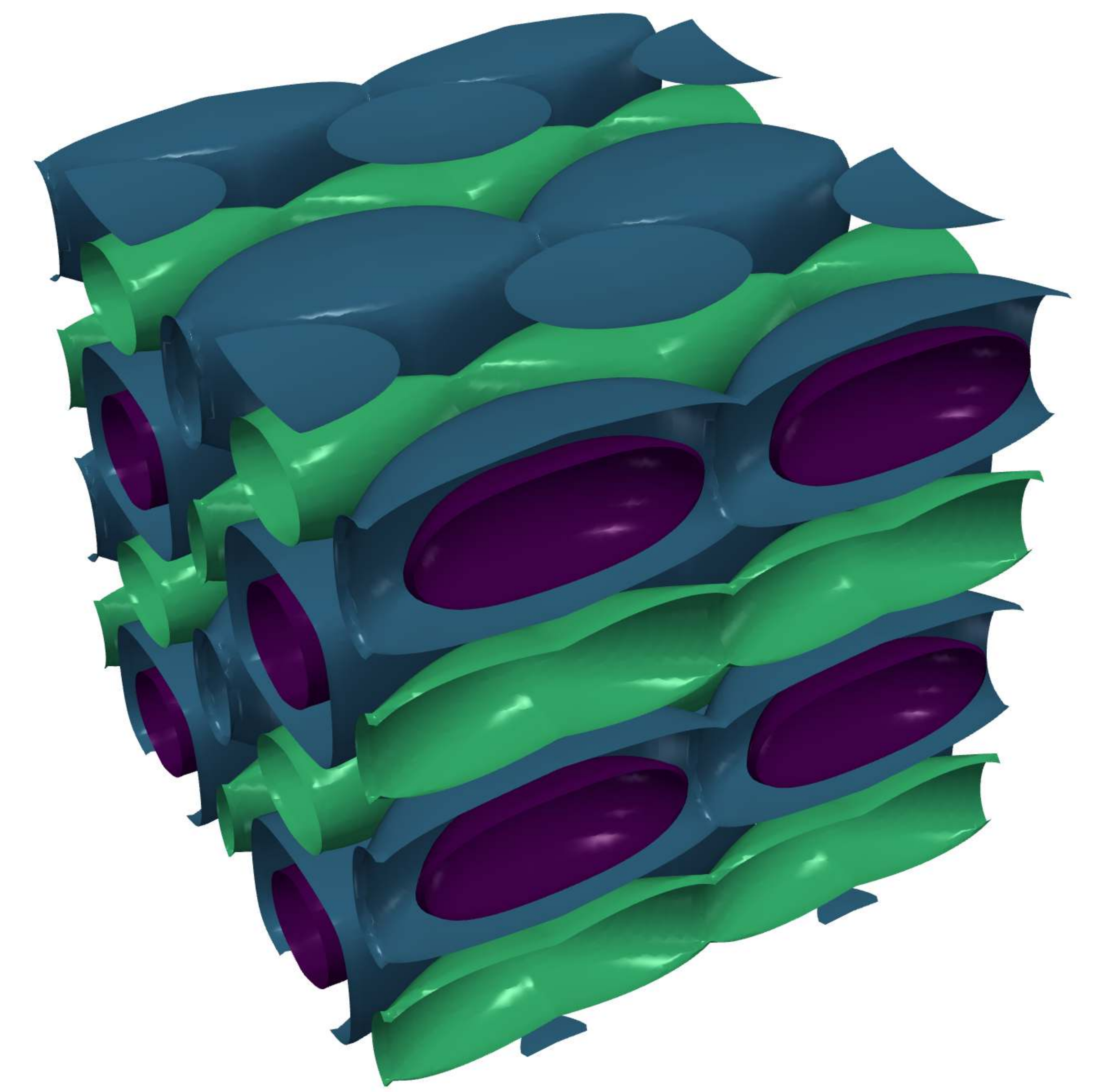}};
        \node at (12,0) {\includegraphics[width=3cm]{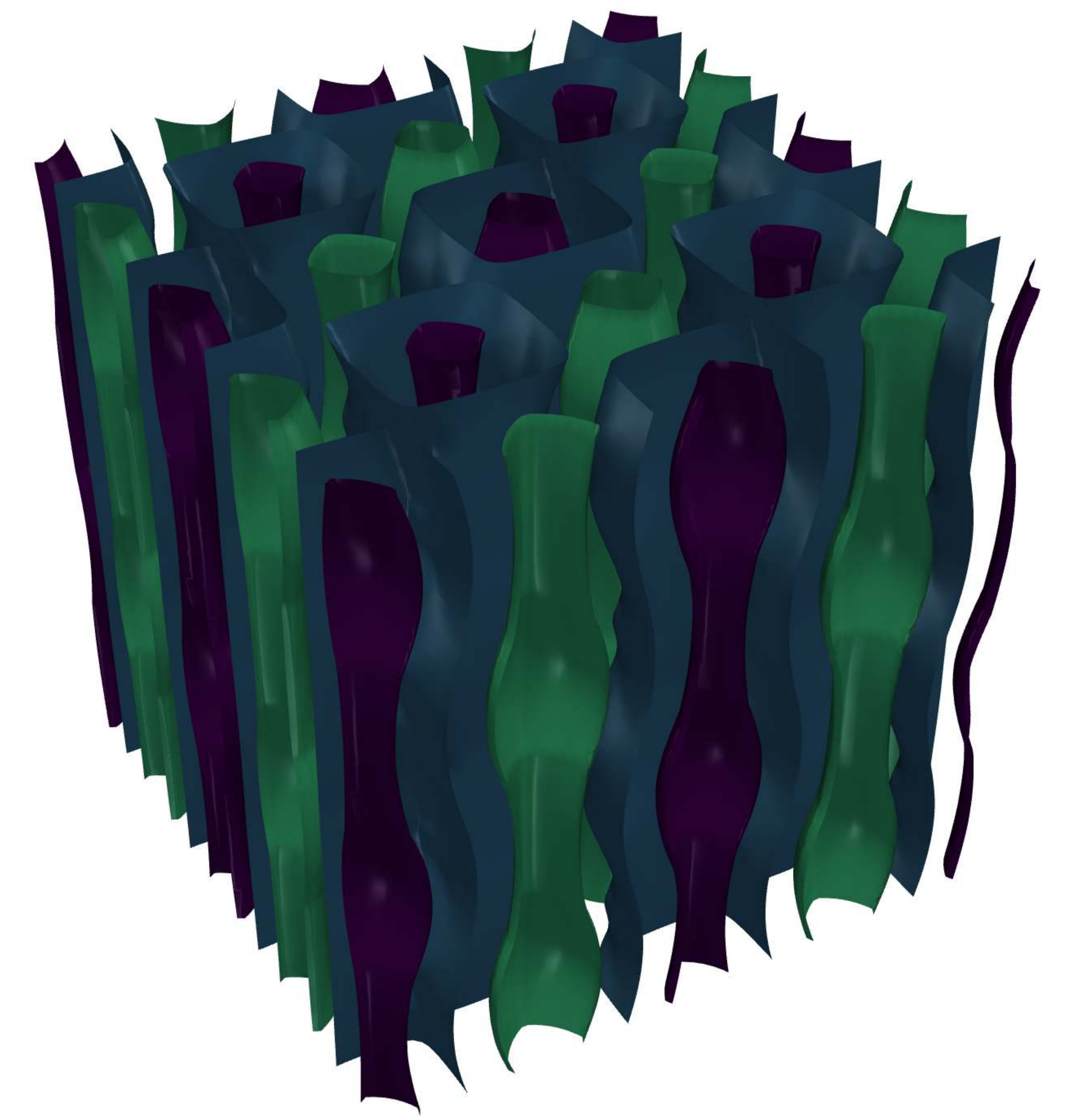}};
      \end{scope}
      \begin{scope}[yshift=-3.5cm]
        \node at (0,0) {\includegraphics[width=3cm]{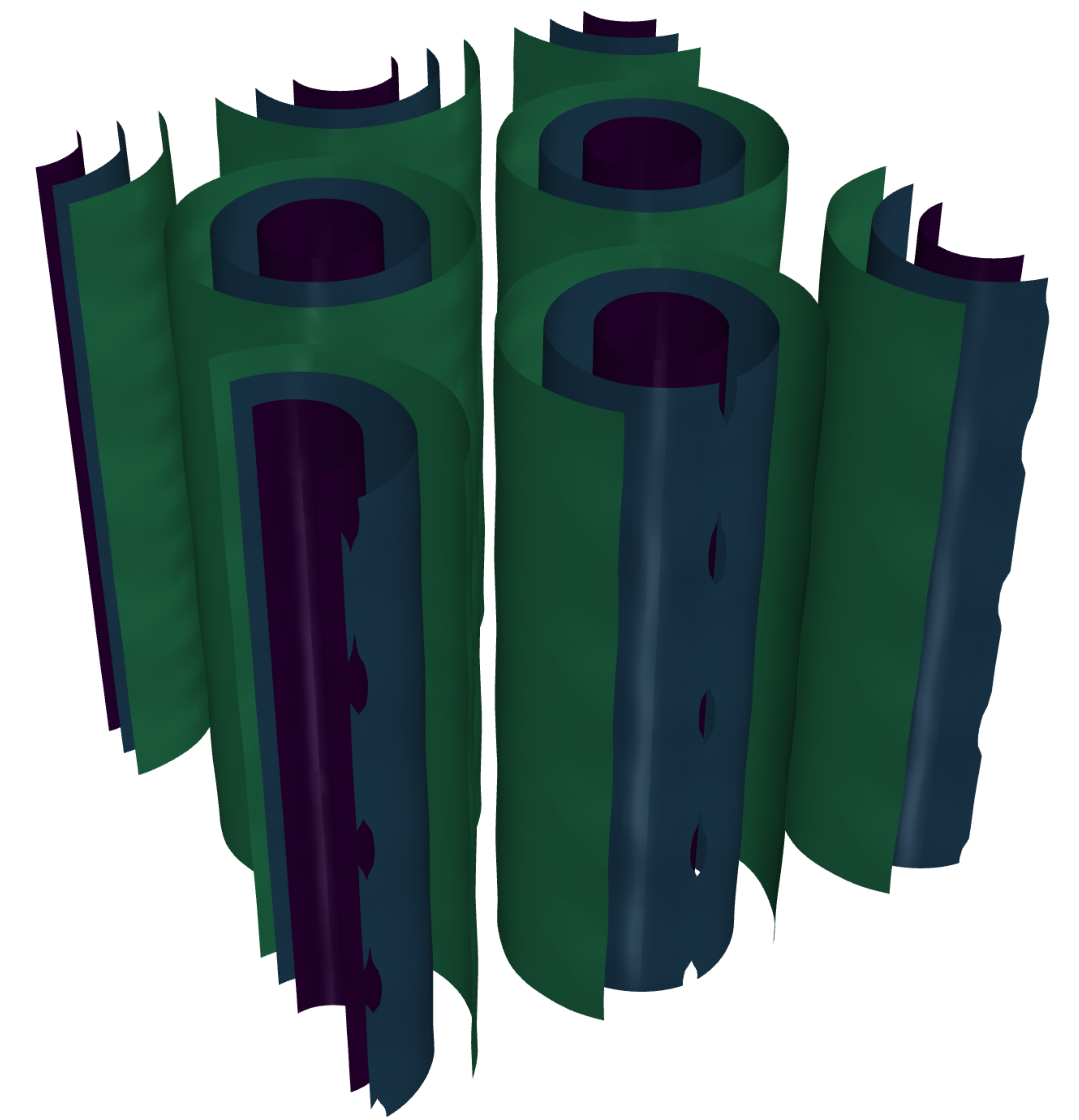}};
        \node at (3,0) {\includegraphics[width=3cm]{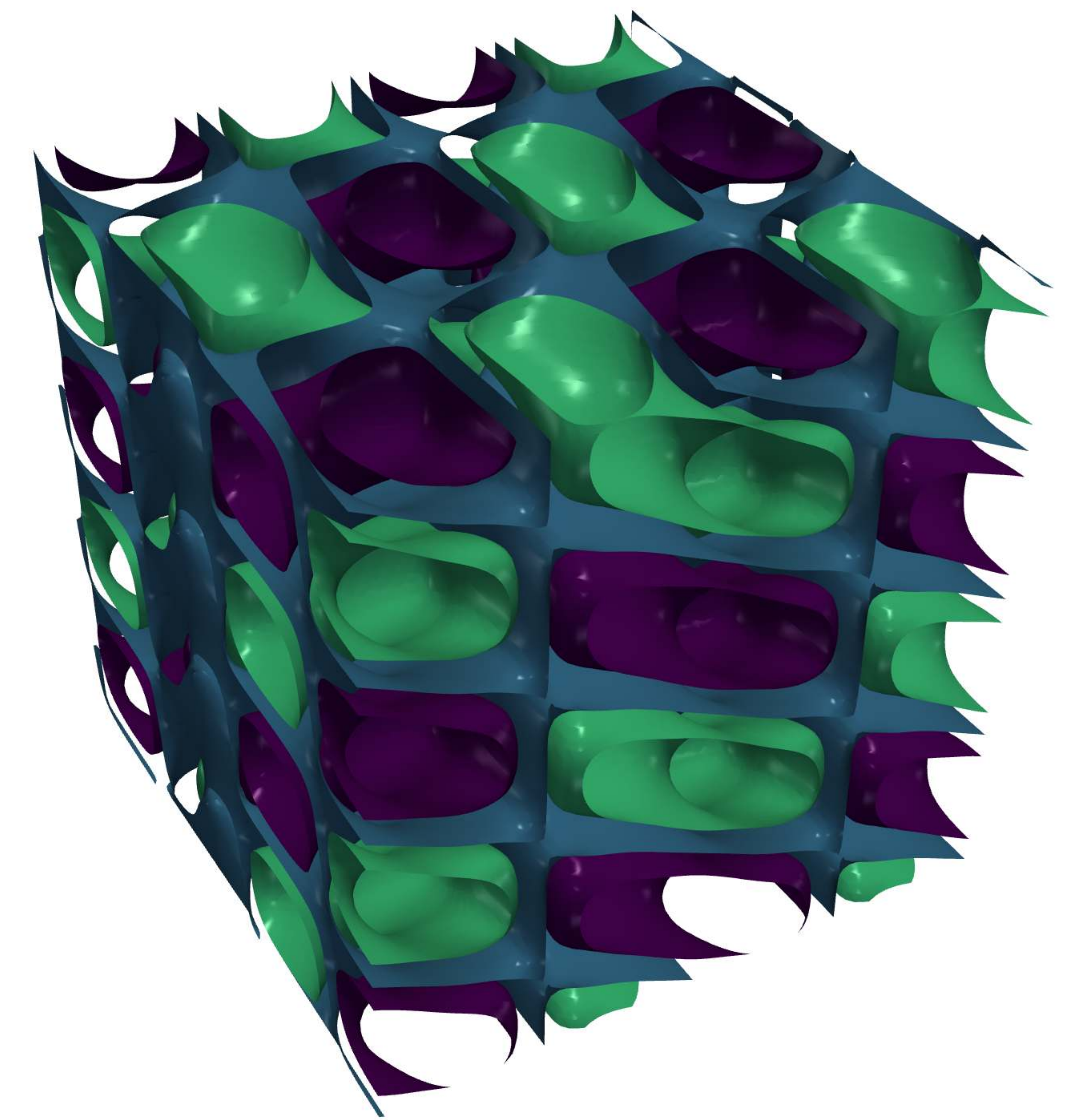}};
        \node at (6,0) {\includegraphics[width=3cm]{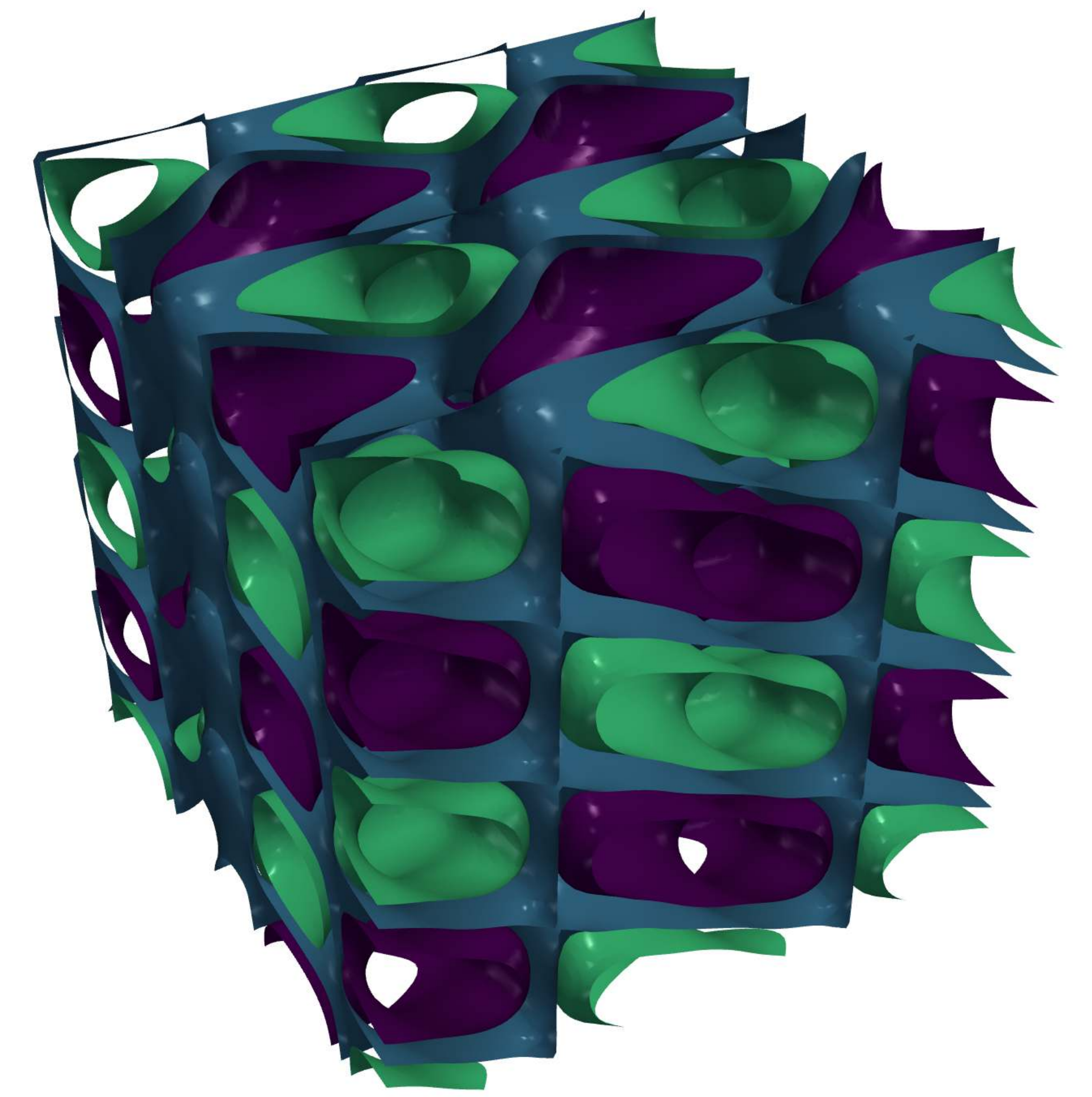}};
        \node at (9,0) {\includegraphics[width=3cm]{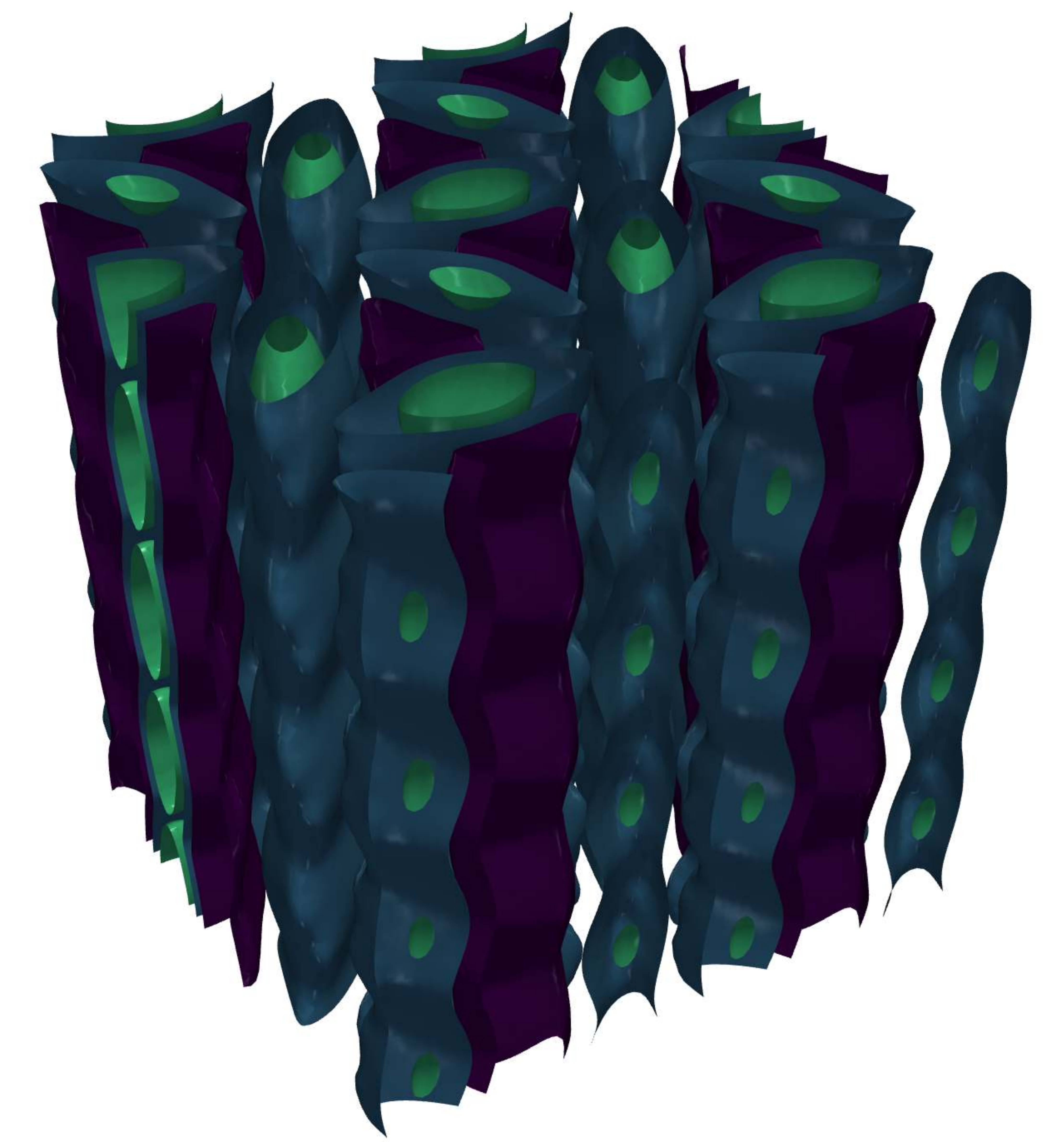}};
        \node at (12,0) {\includegraphics[width=3cm]{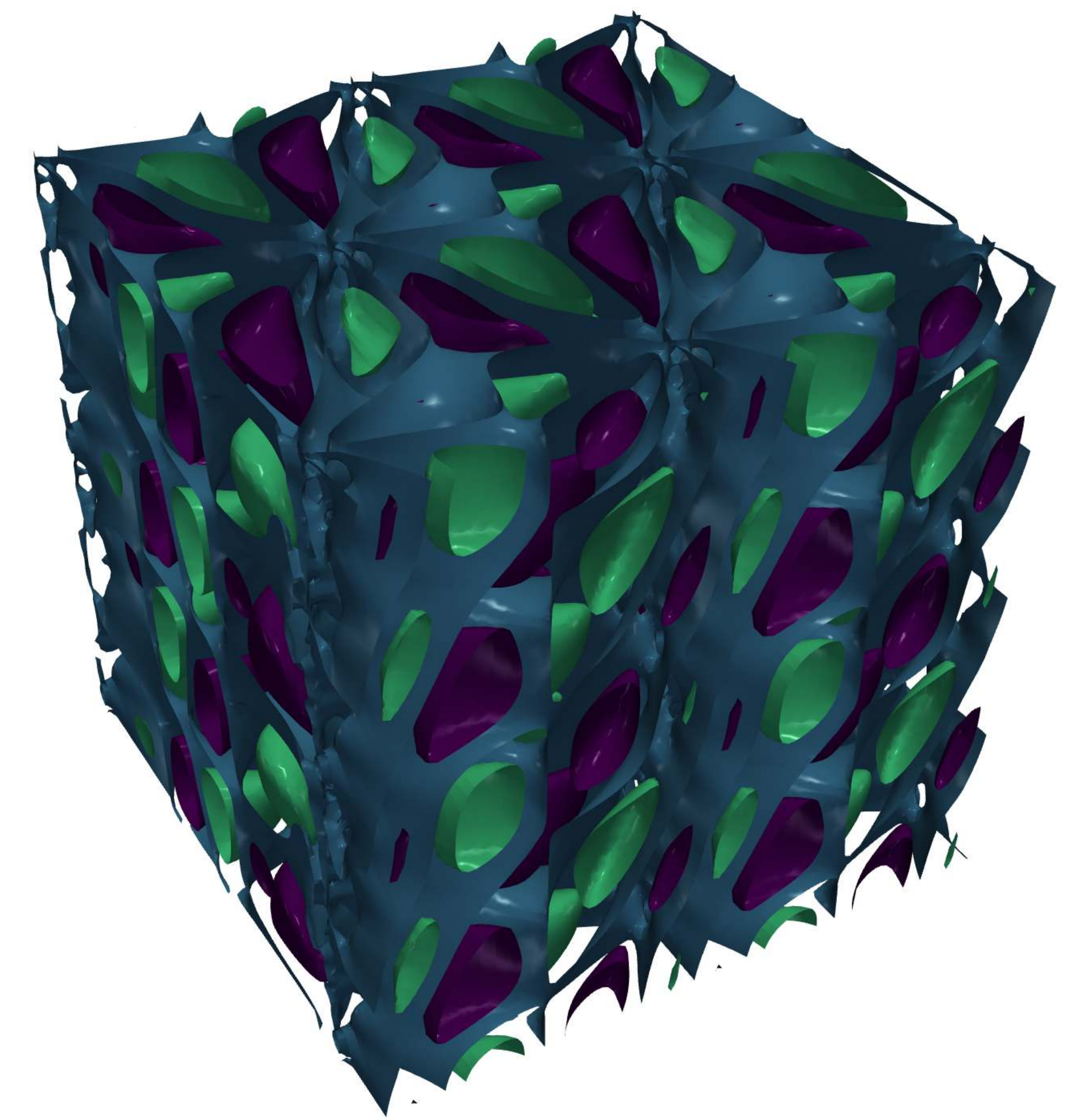}};
      \end{scope}
    \end{tikzpicture}
  \end{center}
    \vspace{-1em}
\caption{Visualizations of five-dimensional orbifold embeddings for \texttt{Imm2} (top) and \texttt{P6$_{\texttt{5}}$} (bottom). }
\label{fig:space-embeddings}
\end{figure}

\begin{figure}[t]
  \begin{center}
    \begin{tikzpicture}
      \begin{scope}
        \node at (0,0)   {\includegraphics[width=4cm]{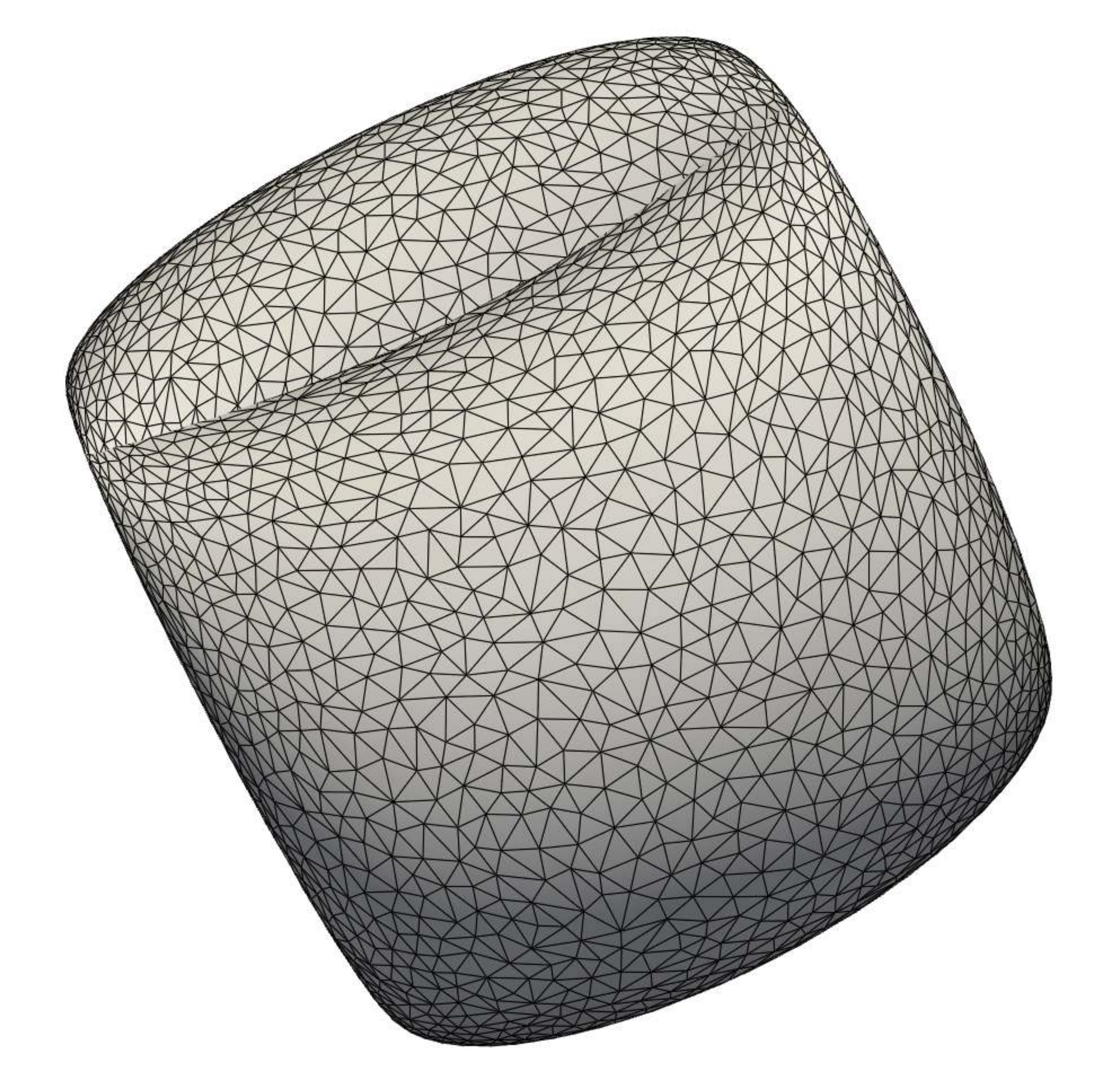}};
        \node at (5.5,0) {\includegraphics[width=4cm]{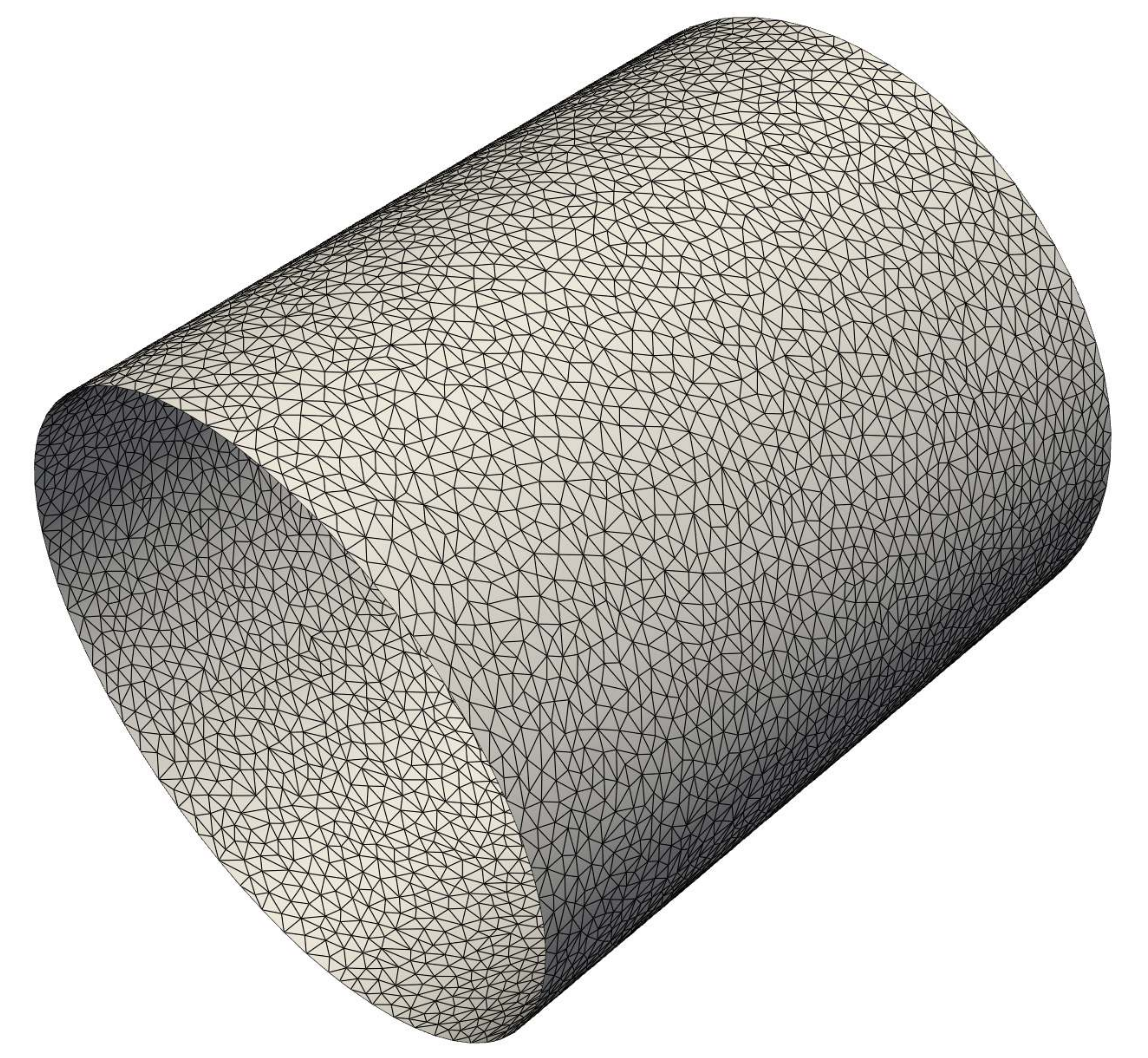}};
        \node at (11,0)  {\includegraphics[width=4cm]{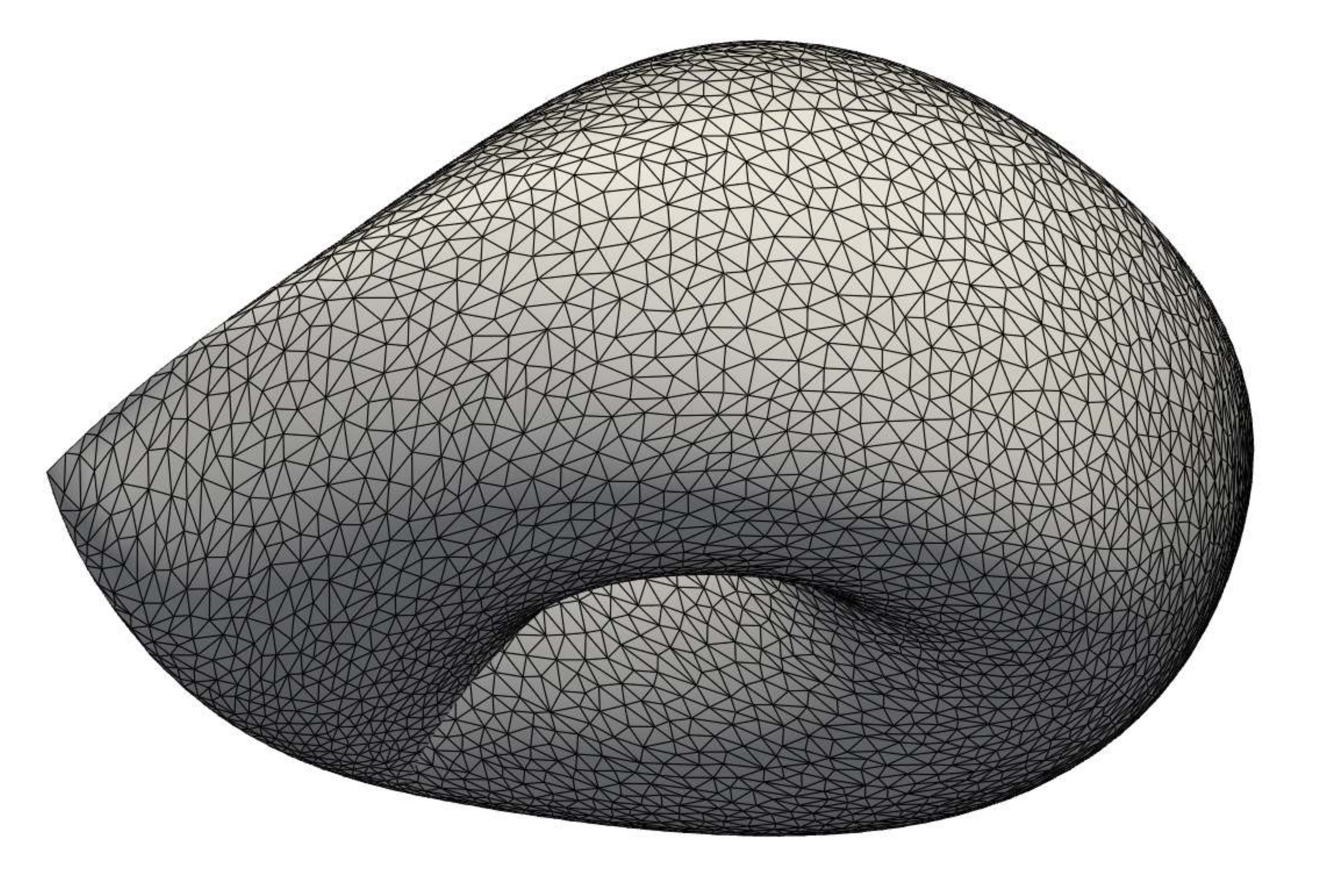}};
        \node at (0,2.2)  {\scriptsize p1};
        \node at (5.5,2.2){\scriptsize pm};
        \node at (11,2.2) {\scriptsize p2gg};
      \end{scope}
      \begin{scope}[yshift=-4.6cm]
        \node at (0,0)   {\includegraphics[height=4cm]{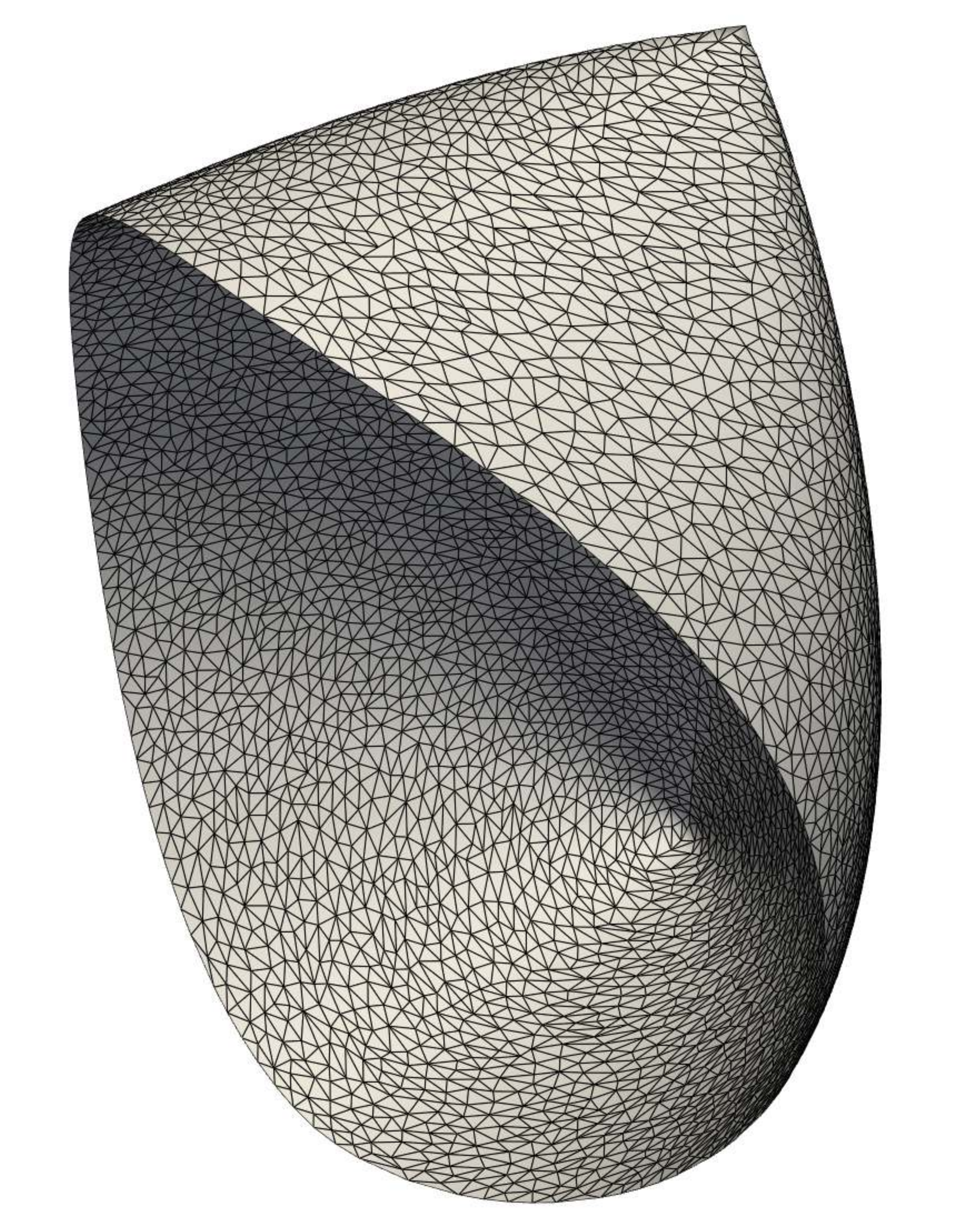}};
        \node at (5.5,0) {\includegraphics[height=4cm]{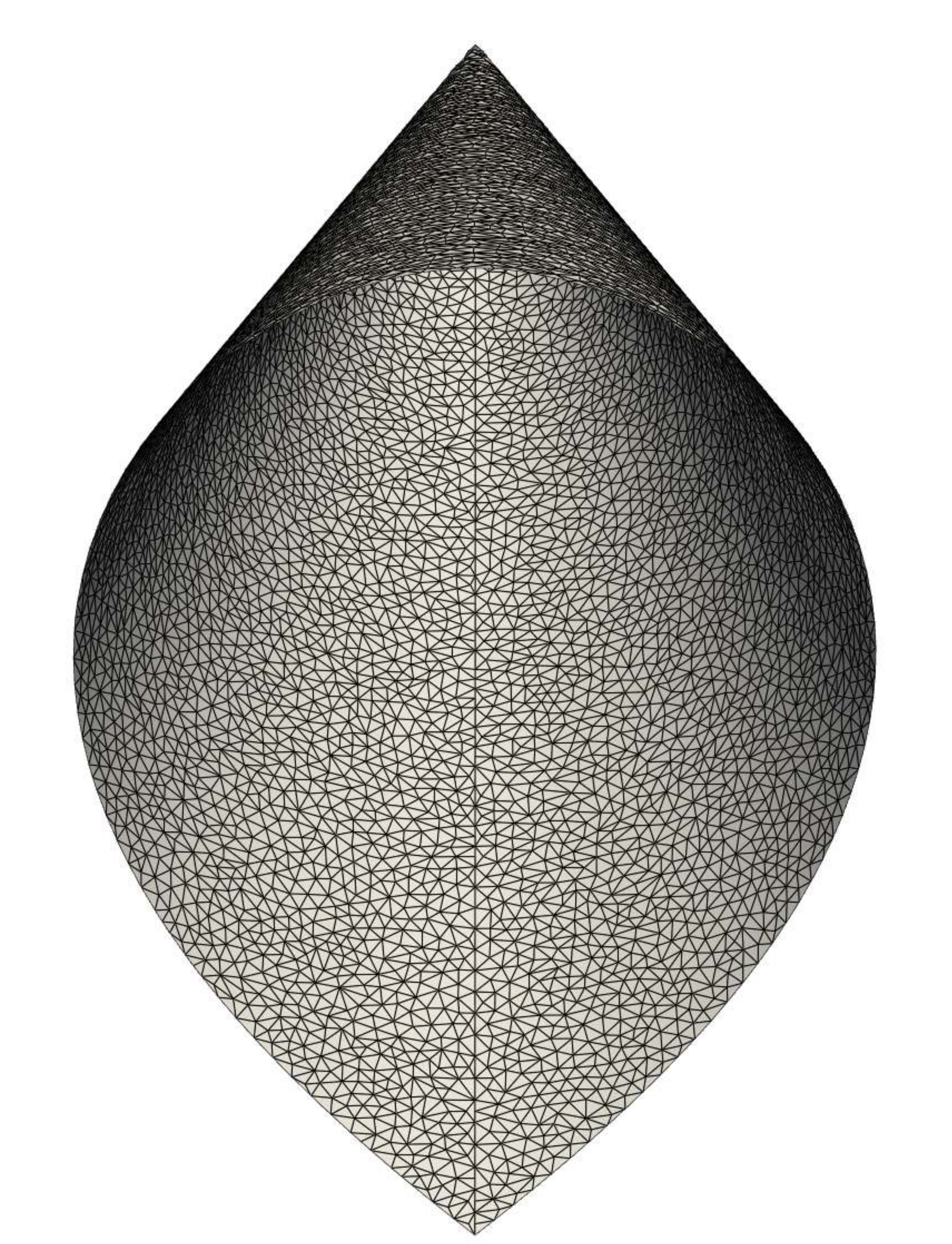}};
        \node at (11.5,-.7)  {\includegraphics[width=6cm,angle=225]{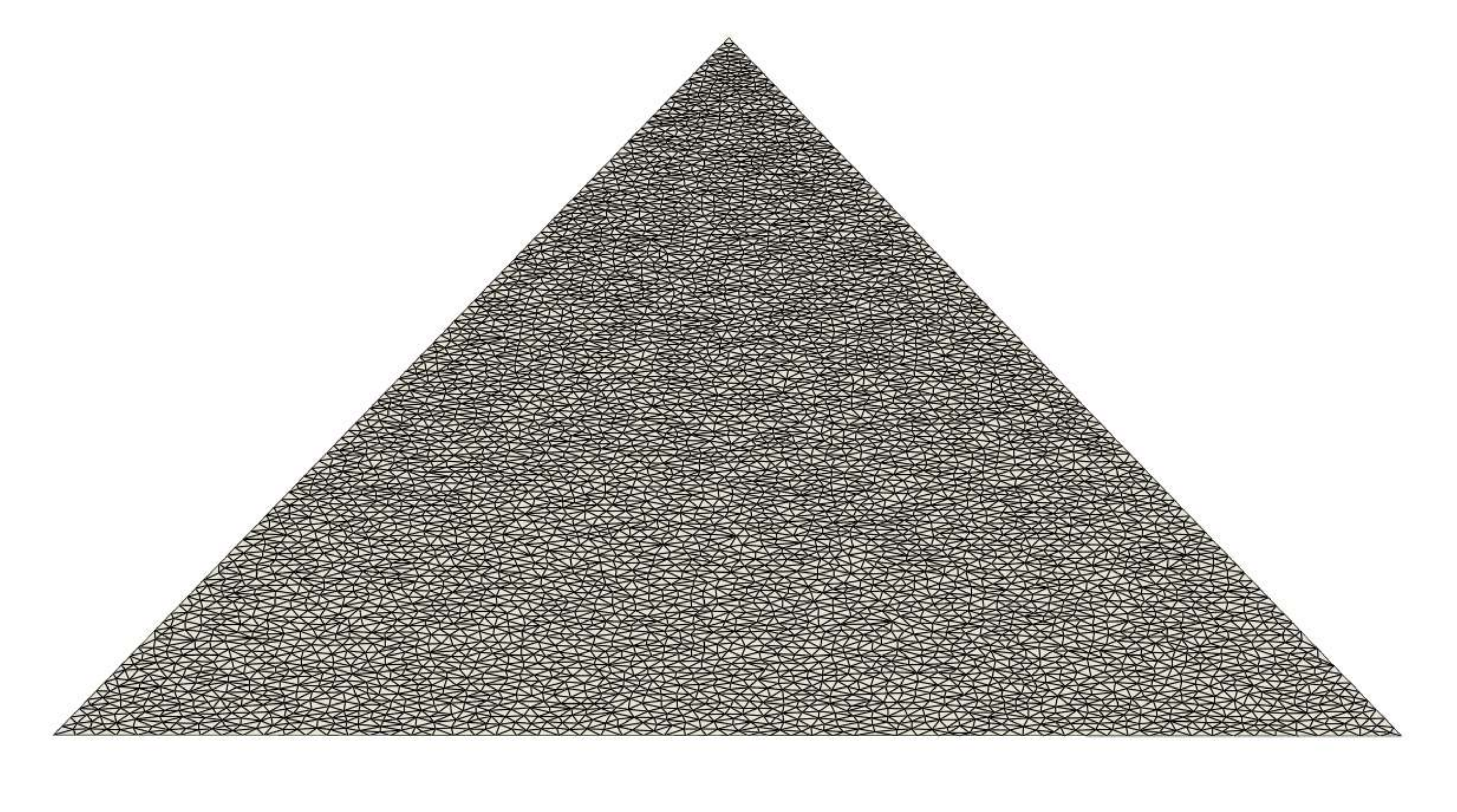}};
        \node at (0,-2.3)   {\scriptsize p2mg};
        \node at (5.5,-2.3) {\scriptsize p4gm};
        \node at (11,-2.3)  {\scriptsize p3m1};
      \end{scope}
    \end{tikzpicture}
  \end{center}
  \vspace{-1.8cm}
  \caption{Visualization of several plane group orbifolds in three dimensions, computed as
    embeddings of orbit graphs as described in \cref{sec:embedding:algorithm}. The mesh on each orbifold is the image of
    the orbit graph.
    Note that reflections result in boundaries for \texttt{pm}, \texttt{p2mg}, \texttt{p4mg}, and \texttt{p3m1}, and that in particular all of the
    edges for \texttt{p3m1} are boundaries.  The self-intersections visible for \texttt{p2gg} are an artifact of visualizing in three dimensions---for a sufficiently high dimension, the surface does not self-intersect.}
\label{fig:orbifolds}
\end{figure}

\begin{algorithm}[Gluing with multidimensional scaling]{\ }\\[.5em]
  \label{algorithm:MDS}
  \begin{tabular}{cl}
  1.) & Construct the weighted orbit graph $\mathcal{G}_w$.\\
  2.) & Compute the eigenvalues $\delta_i$ and eigenvectors~$v_i$ of~$\tilde{R}$.\\
  3.) & Compute vertex embeddings ${\smash{\widehat{\rho}}(x_1),\ldots,\smash{\widehat{\rho}}(x_{|\Gamma|})}$.\\
  4.) & Return interpolated vertex embeddings.
  \end{tabular}
\end{algorithm}
Once $\smash{\widehat{\rho}|_\Pi}$ can be computed, we can also compute
  $\widehat{\rho}:=\widehat{\rho}\,|_\Pi\circ p$,
since the projector $p$ can be evaluated using \cref{algorithm:projector}.
\begin{remark}
  The procedure satisfies two desiderata for constructing the orbifold map:
  1) facets to be glued will be brought together, and 2) distances between interior points in~$\Pi$ will be approximately preserved.
  The embedding is unique up to isometric transformations.
  The embedding step is similar to the Isomap \cite{tenenbaum2000global} algorithm, but
  unlike Isomap embeds into a higher-dimensional space rather than a lower-dimensional one.
\end{remark}

\subsection{Example: Invariant neural networks}

Given $\group$ and $\Pi$, compute $\widehat{\rho}$ and $\widehat{\Omega}$ using \cref{algorithm:MDS}.
Choose a neural network
\begin{equation*}
  h_\theta:\mathbb{R}^{\smash{\widehat{N}}}\rightarrow\mathbb{R}
  \quad\text{ with parameter vector }\theta\text{ and set }\quad
  f_\theta:=h_\theta\circ\smash{\widehat{\rho}}\;.
\end{equation*}
Then $f_\theta$ is a real-valued neural network on $\mathbb{R}^n$.
\cref{fig:space-nn} shows examples of $f_\theta$ for ${n=3}$, where
$h_{\theta}$ has three hidden layers of ten units each, with
rectified linear (relu) activations, although the input dimension $\widehat{N}$ may vary according to the choice of
$\group$ and $\Pi$. The parameter vector is generated at random.
\begin{remark}
  Since most ways of performing interpolation in the construction of $\smash{\widehat{\rho}}$ are
  amenable to automatic differentiation tools, this representation is easy to incorporate into machine learning pipelines.
  Moreover, universality results for neural networks (e.g., \citet{hornik1989multilayer}) carry over:
  If a class of neural networks $h_\theta$ approximates to arbitrary precision in~${\C(\mathbb{R}^N)}$, the
  the resulting functions $f_\theta$ approximate to arbitrary precision in~$\C_\group$ (though the approximation rate may change under composition with $\rho$). See \cref{corollary:C:isometries}.
\end{remark}

\begin{figure}[t]
  \begin{center}
    \begin{tikzpicture}
      \node at (0,0) {\includegraphics[width=3.7cm]{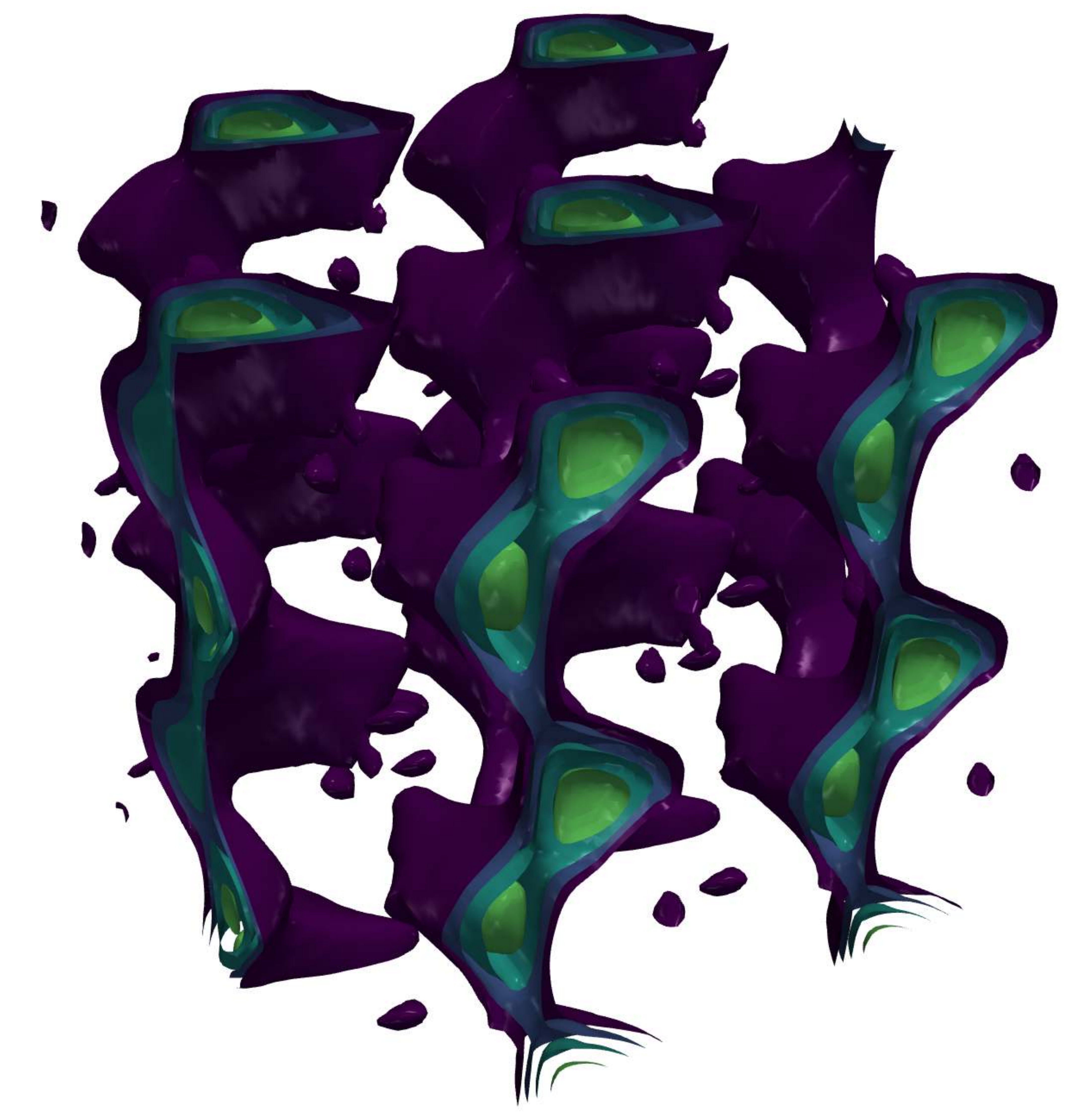}};
      \node at (3.8,0) {\includegraphics[width=3.7cm]{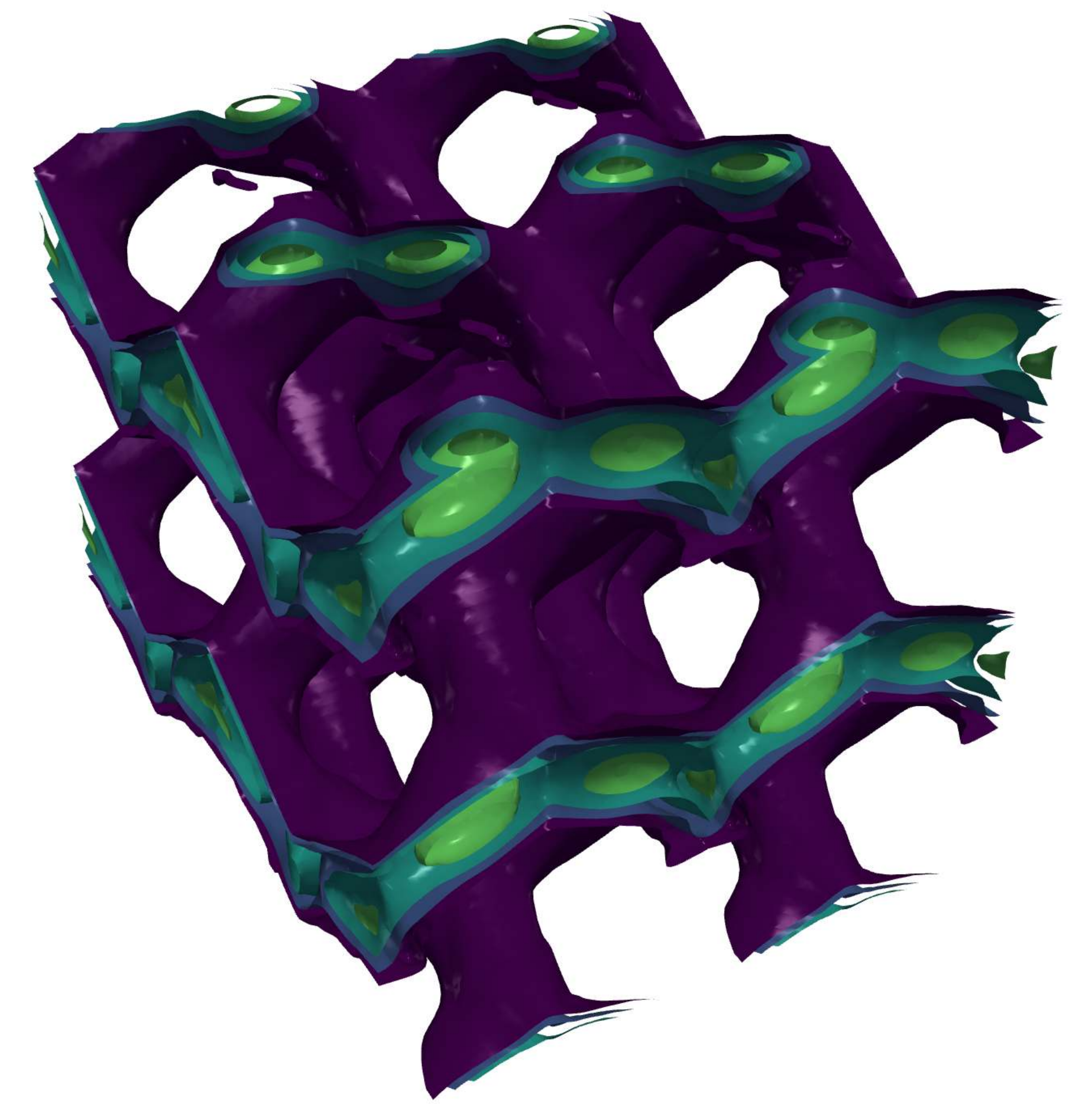}};
      \node at (7.7,0) {\includegraphics[width=3.7cm]{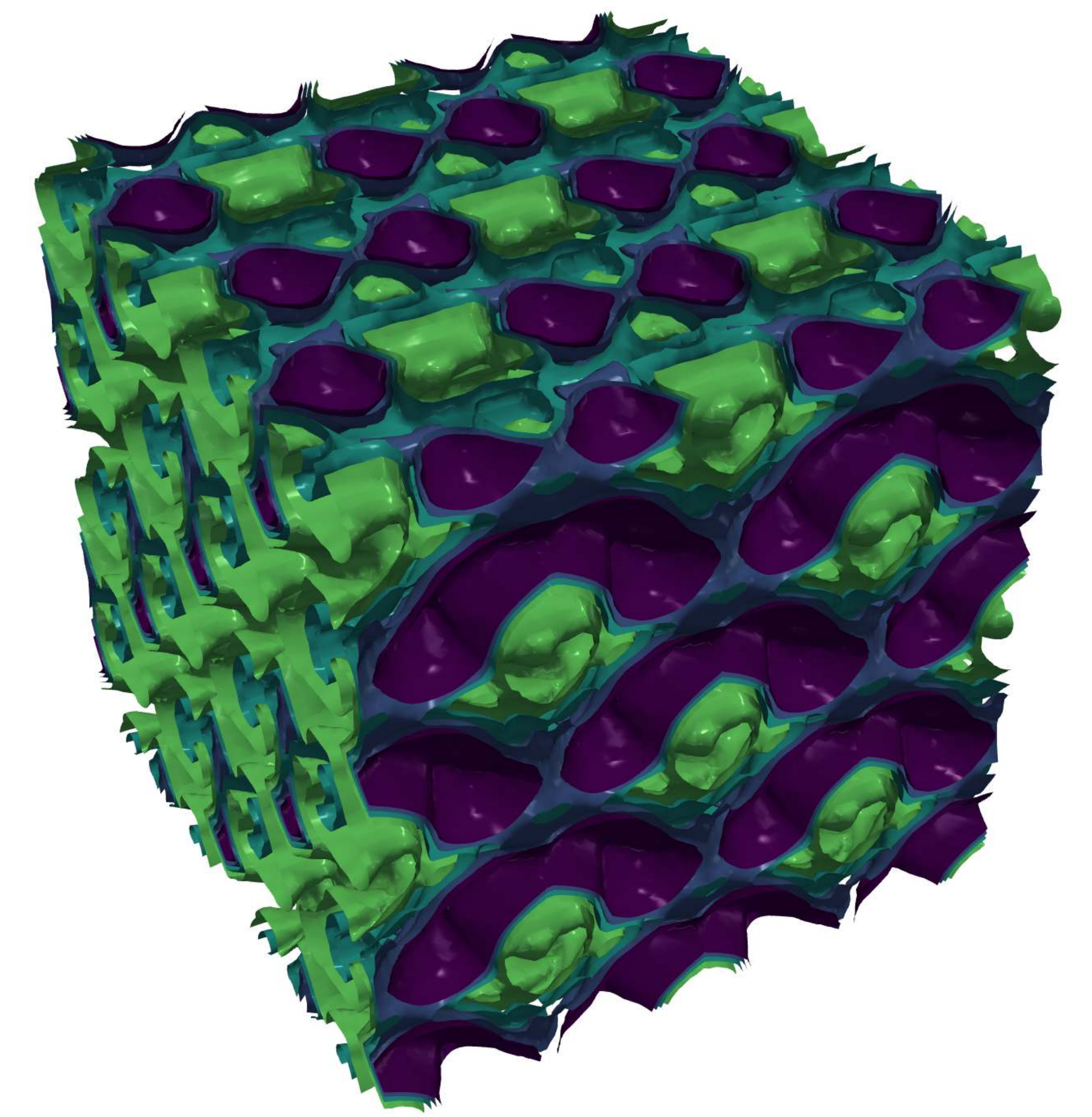}};
      \node at (11.6,0) {\includegraphics[width=3.7cm]{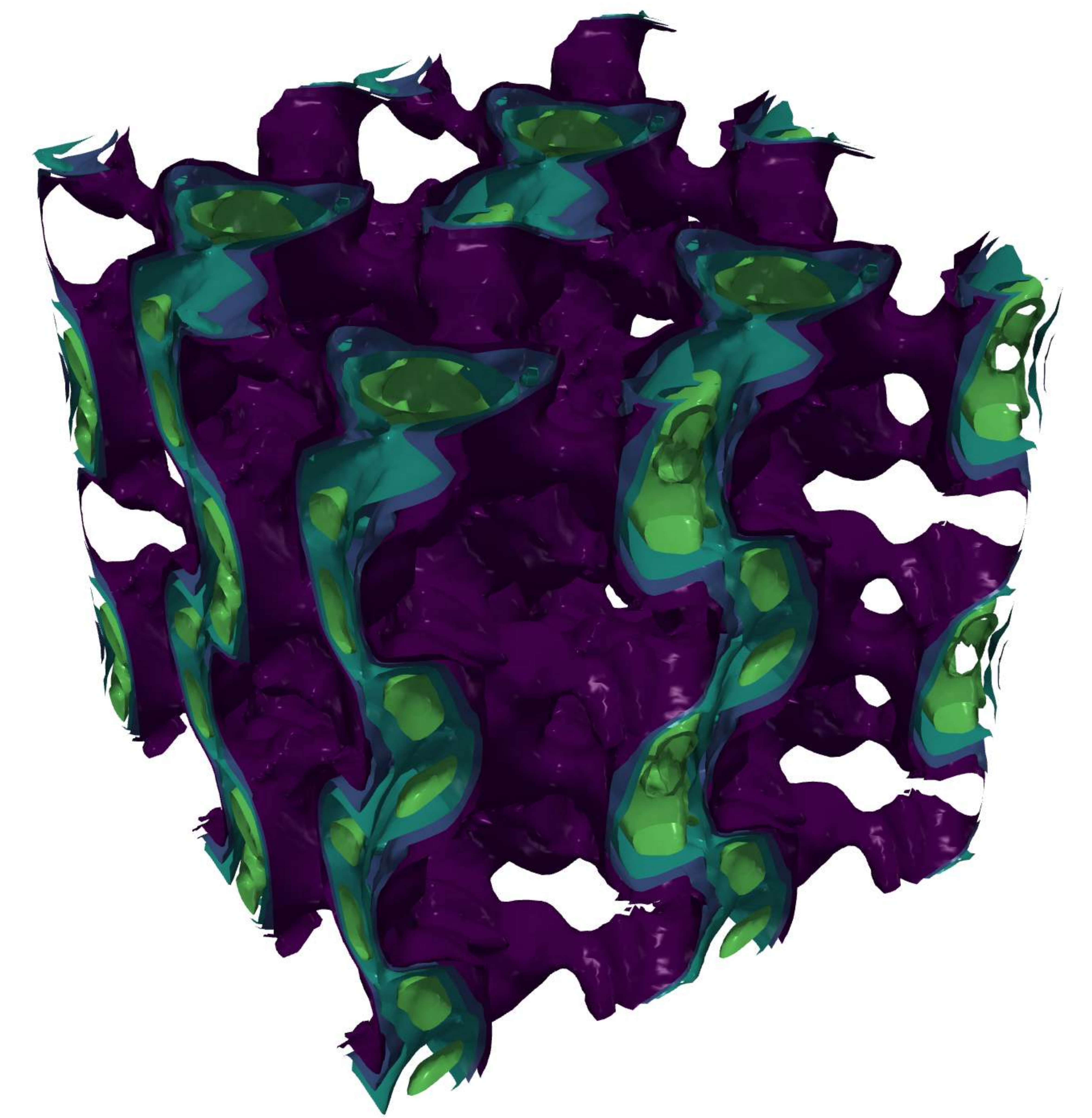}};
      \node at (0,-2.3)  {\scriptsize \texttt{P3\textsubscript{2}}};
      \node at (3.8,-2.3){\scriptsize \texttt{Pnn2}};
      \node at (7.7,-2.3) {\scriptsize \texttt{Aea2}};
      \node at (11.6,-2.3) {\scriptsize \texttt{P4\textsubscript{1}2\textsubscript{1}2}};
    \end{tikzpicture}
  \end{center}
  \vspace{-1.5em}
\caption{Four random neural networks for four different space groups, each with three hidden layers of ten units each and relu activations.}
\label{fig:space-nn}
\end{figure}

\begin{figure}[b]
    \caption{Dirichlet domains. {\em Left}: The set $R_{\phi}(x)$, shown in gray,
    is the closed half space of all points at least as
    close to $x$ as to $\phi x$. (In machine learning terms, the boundary of $R_\phi(x)$
    orthogonally intersects the connecting line between $x$ and $\phi x$ at its center, and is hence
    the support vector classifier with two support vectors $x$ and $\phi x$.) {\em Right}: A Dirichlet domain
    $\mathbf{D}(x)$ defined by the group generated by a vertical shift $\phi$ and a diagonal shift $\psi$ in the plane.
    }
    \vspace{-.9cm}
  \begin{center}
  \begin{tikzpicture}
    \begin{scope}
      \draw[gray!40!white,fill] (-2.5,-1.5) rectangle (2.5,0);
    \node[circle,scale=.3,fill,label=right:{\scriptsize $x$}] (a) at (0,-1) {};
    \node[circle,scale=.3,fill,label=right:{\scriptsize $\phi x$}] (b) at (0,1) {};
    \draw[dotted] (a)--(b);
    \draw[thick] (-2.5,0)--(2.5,0);
    \node at (-1.8,-.8) {\scriptsize $R_\phi(x)$};
    \end{scope}
    \begin{scope}[xshift=7cm]
      \newdimen\R
      \R=.75cm
      \node[circle,scale=.3,fill,label=below:{\scriptsize $x$}] (a) at (0,0) {};
      \draw (0:\R) \foreach \x in {60,120,...,360} {  -- (\x:\R) };
      \foreach \x/\l/\p in {
        60/{}/above,
        120/{}/above,
        180/{}/left,
        240/{}/below,
        300/{}/below,
        360/{}/right
      }
      \node[inner sep=1pt,circle] at (\x:\R) {};
      \node[circle,scale=.3,fill,label=right:{\scriptsize $\phi x$}] at ($(a.30)+(30:1.732*\R)$) {};
      \node[circle,scale=.3,fill,label=right:{\scriptsize $\psi x$}] at ($(a.90)+(90:1.732*\R)$) {};
      \node[circle,scale=.3,fill,label=left:{\scriptsize $\phi^{-1}\psi x$}] at ($(a.150)+(150:1.732*\R)$) {};
      \node[circle,scale=.3,fill,label=left:{\scriptsize $\phi^{-1} x$}] at ($(a.210)+(210:1.732*\R)$) {};
      \node[circle,scale=.3,fill,label=left:{\scriptsize $\psi^{-1} x$}] at ($(a.270)+(270:1.732*\R)$) {};
      \node[circle,scale=.3,fill,label=right:{\scriptsize $\phi\psi^{-1} x$}] at ($(a.330)+(330:1.732*\R)$) {};
    \end{scope}
  \end{tikzpicture}
  \end{center}
  \label{fig:dirichlet:domain}
\end{figure}
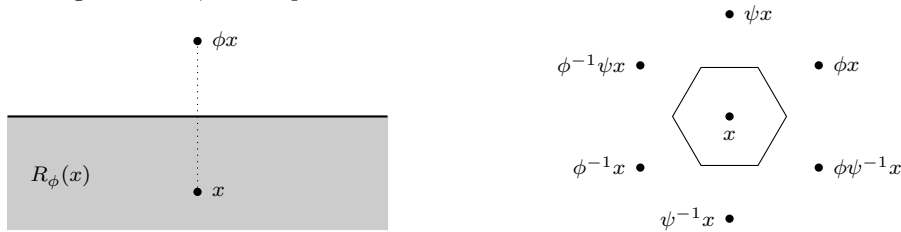

\subsection{Exact tilings}

Although the properties of general orbifolds constitute one of the more demanding problems
of modern mathematics, orbifolds of crystallographic groups are particularly well-behaved, and are well-understood.
That we can draw directly on this theory is due to the fact that it uses a notion of gluing very similar to
that employed by our algorithms as a proof technique \citep{Bonahon:2009,Ratcliffe:2006}.
The two notions align under an additional condition:
A convex polytope $\Pi$ is \kword{exact} for $\group$ if $\group$ tiles with $\Pi$, and if each
face $S$ of $\Pi$ can be represented as
\begin{equation*}
  S\;=\;\Pi\cap\phi\Pi\qquad\text{ for some }\phi\in\group\;.
\end{equation*}
Not every $\Pi$ with which $\group$ tiles is exact---in \cref{fig:wallpaper}, for example, the polytopes shown
for \texttt{pg} and \texttt{p3} are not exact, though all others are. However, given $\Pi$ and $\group$, we can always
construct an exact surrogate as follows:
Choose any point ${x\in\mathbb{R}}$ that is not a fixed point for any ${\phi\in\group\setminus\braces{\Id}}$.
If $\group$ is crystallographic, that is true for every point in the interior of~$\Pi$. For each~${\phi\in\group}$, the set
\begin{equation*}
  R_{\phi}(x)
  \;:=\;
  \braces{y\in\mathbb{R}^n\,|\,d_n(y,x)\leq d_n(y,\phi x)}\;,
\end{equation*}
is a half-space in $\mathbb{R}^n$ (see \cref{fig:dirichlet:domain}/left). The intersection
\begin{equation*}
  \mathbf{D}(x)
  \;:=\;
  \medcap\nolimits_{\phi\in\group} R_\phi(x)
  \;=\;
  \medcap\nolimits_{\phi\,|\,\phi\Pi\cap\Pi\neq\varnothing} R_\phi(x)
\end{equation*}
of these half-spaces is called a \kword{Dirichlet domain} for $\group$ (\cref{fig:dirichlet:domain}/right).
\begin{fact}[{\citep[][6.7.4]{Ratcliffe:2006}}]
  If $\group$ is crystallographic,
  $\mathbf{D}(x)$ is an exact convex polytope for $\group$.
\end{fact}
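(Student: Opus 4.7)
The plan is to verify three claims in turn: (a) $\mathbf{D}(x)$ is a convex polytope; (b) $\braces{\phi\mathbf{D}(x)\,|\,\phi\in\group}$ is a tiling of $\mathbb{R}^n$; and (c) each facet $S$ of $\mathbf{D}(x)$ equals $\mathbf{D}(x)\cap\phi\mathbf{D}(x)$ for some $\phi\in\group$.

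For~(a), I would first reduce the defining infinite intersection $\medcap_{\phi\in\group}R_\phi(x)$ to a finite one. Since $\group$ tiles $\mathbb{R}^n$ with $\Pi$ and $x\in\Pi$, every $y\in\mathbb{R}^n$ lies in some tile $\phi\Pi$, and hence satisfies $d_n(y,\phi x)\le\operatorname{diam}(\Pi)$. Consequently, if $d_n(y,x)>2\operatorname{diam}(\Pi)$, then $d_n(y,\phi x)<d_n(y,x)$ and $y\notin R_\phi(x)$. Therefore $\mathbf{D}(x)\subseteq\bar B(x,2\operatorname{diam}(\Pi))$, a compact set which by the discreteness condition~\eqref{discrete} contains only finitely many orbit points. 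Only these finitely many $\phi$ yield binding constraints, so $\mathbf{D}(x)$ is a bounded intersection of finitely many closed half-spaces, hence a convex polytope. It is $n$-dimensional because the hypothesis that $x$ has trivial stabilizer places $x$ in the strict interior of every $R_\phi(x)$ with $\phi\neq\Id$.

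For~(b), coverage follows by choosing, for each $y\in\mathbb{R}^n$, some $\phi\in\group$ minimizing $d_n(y,\phi x)$ (available by local finiteness); then $\phi^{-1}y\in\mathbf{D}(x)$, so $y\in\phi\mathbf{D}(x)$. Non-overlap: $\mathbf{D}(x)^{\circ}$ is the set of points strictly closest to $x$ in the orbit, a property that cannot simultaneously hold for two distinct tiles, so $\phi\mathbf{D}(x)\cap\psi\mathbf{D}(x)\subseteq\partial(\phi\mathbf{D}(x))$ whenever $\phi\neq\psi$.

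The main obstacle is~(c). Each facet $S$ of $\mathbf{D}(x)$ lies on a supporting hyperplane that must coincide with a bisector $H_\phi:=\braces{y\,|\,d_n(y,x)=d_n(y,\phi x)}$ for some $\phi\in\group$, since $\mathbf{D}(x)$ is cut out by exactly such bisector half-spaces. The inclusion $S\subseteq\mathbf{D}(x)\cap\phi\mathbf{D}(x)$ is immediate, as points of $S\subset H_\phi$ satisfy the distance inequalities defining both tiles. The reverse inclusion is the delicate step: any $y\in\mathbf{D}(x)\cap\phi\mathbf{D}(x)$ satisfies $d_n(y,x)\le d_n(y,\psi x)$ for all $\psi\in\group$, and also $d_n(y,\phi x)\le d_n(y,\psi x)$ for all $\psi$ (the latter from $\phi^{-1}y\in\mathbf{D}(x)$ after reindexing by $\phi$); taking $\psi=\phi$ in the first and $\psi=\Id$ in the second yields $d_n(y,x)=d_n(y,\phi x)$, so $y\in H_\phi\cap\mathbf{D}(x)=S$. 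This is the side-pairing construction of Ratcliffe, Section~6.7. The subtlety is ensuring the chosen $\phi$ makes the bisector hit $\mathbf{D}(x)$ in exactly the facet $S$, which is guaranteed by picking $\phi$ so that $R_\phi(x)$ is the binding constraint producing $S$. Extending the argument to lower-dimensional faces requires iterating this pairing across the face lattice using the combinatorial cycle structure induced at each face, which is the most delicate bookkeeping in the full proof.
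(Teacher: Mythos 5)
The paper offers no proof of this Fact --- it is imported verbatim from Ratcliffe (Theorem 6.7.4) --- and your argument is essentially the standard one behind that citation: bound $\mathbf{D}(x)$ inside a ball around $x$ by using the tile containing $y$, reduce to finitely many bisector half-spaces via discreteness (plus triviality of $\Stab(x)$, which makes $\phi\mapsto\phi x$ injective), cover $\mathbb{R}^n$ by sending each $y$ to the cell of a nearest orbit point, separate tiles by the nearest-point characterization of the interior, and obtain $S=\mathbf{D}(x)\cap\phi\mathbf{D}(x)$ for a facet $S$ by playing the two systems of inequalities against each other (taking $\psi=\phi$ in one and $\psi=\Id$ in the other). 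Two steps are stated faster than they are proved, though both are routine to repair: the claim that ``only these finitely many $\phi$ yield binding constraints'' needs the explicit observation that for $d_n(x,\phi x)$ large (say $>4\,\mathrm{diam}(\Pi)$) the inequality defining $R_\phi(x)$ holds automatically on the ball containing $\mathbf{D}(x)$, together with an argument that a finite subfamily already forces boundedness; and for non-overlap, disjointness of interiors alone does not give $\phi\mathbf{D}(x)\cap\psi\mathbf{D}(x)\subset\partial(\phi\mathbf{D}(x))$ --- you want that an interior point of $\phi\mathbf{D}(x)$ is \emph{strictly} closer to $\phi x$ than to every other orbit point, which contradicts the weak reverse inequality coming from membership in $\psi\mathbf{D}(x)$.

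The one substantive misstep is your closing sentence. Exactness is a condition on facets only (Ratcliffe's ``sides''; the paper's phrase ``each face'' should be read this way, consistent with how exactness is used for side pairings). The extension to lower-dimensional faces that you defer as ``delicate bookkeeping'' is not part of the statement, and the stronger face-by-face claim is false for crystallographic groups in general: let $\group$ be the translation group of a hexagonal lattice in the plane, so that $\mathbf{D}(x)$ is a hexagon. A vertex $v$ of $\mathbf{D}(x)$ lies in exactly three tiles, and the intersection of $\mathbf{D}(x)$ with either of the two neighbouring tiles containing $v$ is a whole edge, so $\{v\}$ is not of the form $\mathbf{D}(x)\cap\phi\mathbf{D}(x)$ for any $\phi\in\group$. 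Your facet argument therefore already completes the proof; attempting the promised extension would fail.
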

\begin{example}
For illustration, consider the group \texttt{pg}: We start with a rectangle $\Pi$.
The group is generated by two glide reflections $\phi$ and $\psi$, each of which shifts
$\Pi$ horizontally and then reflects it about one of its long edges (\cref{fig:dirichlet:pg}/left).
Exactness fails because the set ${\Pi\cap\phi\Pi}$, marked in black, is not a complete edge of $\Pi$.
A Dirichlet domain for this tiling differs significantly from $\Pi$ (\cref{fig:dirichlet:pg}/right).
Although substituting $\mathbf{D}(x)$ for $\Pi$ changes the look of the tiling,
it does not change the group---that is, we still work with the same set of transformations (rather than
another group in the same isomorphism class), and the axes of reflections are still
defined by the faces of $\Pi$ rather than those of $\mathbf{D}(x)$.
\end{example}
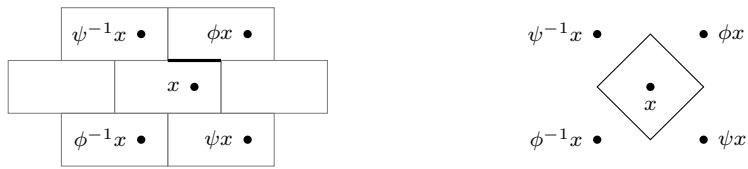
\begin{figure}[b]
  \caption{The Dirichlet domain of a tiling may differ from the convex polytope defining the tiling.
    \emph{Left}: A group generated by two glide reflections $\phi$ and $\psi$ that
    tiles with a rectangle $\Pi$. Both glides shift the rectangle
    and then reflect it about one of its long edges. The set ${\Pi\cap\phi\Pi}$, marked black,
    is not a face of $\Pi$. \emph{Right}: The Dirichlet domain $\mathbf{D}(x)$. When $\group$ tiles with
    $\mathbf{D}(x)$, the axes of reflection are still defined by the edges of $\Pi$.
  }
  \vspace{-.5cm}
  \begin{center}
    \begin{tikzpicture}
      \begin{scope}[scale=.7]
        \draw[gray] (0,1.5) rectangle (2,.5);
        \draw[gray] (0,-1.5) rectangle (2,-.5);
        \draw[gray] (0,-1.5) rectangle (-2,-.5);
        \draw[gray] (0,1.5) rectangle (-2,.5);
        \draw[gray] (-1,-.5) rectangle (-3,.5);
        \draw[gray] (1,-.5) rectangle (3,.5);
        \draw[gray] (1,-.5) rectangle (-1,.5);
        \draw[very thick] (0,.5)--(1,.5);
        \node[circle,scale=.3,fill,label=left:{\scriptsize $x$}] (a) at (.5,0) {};
        \node[circle,scale=.3,fill,label=left:{\scriptsize $\phi x$}] at ($(a)+(1,1)$) {};
        \node[circle,scale=.3,fill,label=left:{\scriptsize $\psi x$}] at ($(a)+(1,-1)$) {};
        \node[circle,scale=.3,fill,label=left:{\scriptsize $\psi^{-1} x$}] at ($(a)+(-1,1)$) {};
        \node[circle,scale=.3,fill,label=left:{\scriptsize $\phi^{-1} x$}] at ($(a)+(-1,-1)$) {};
      \end{scope}
      \begin{scope}[xshift=6cm,scale=.7]
        \node[circle,scale=.3,fill,label=below:{\scriptsize $x$}] (a) at (.5,0) {};
        \node[circle,scale=.3,fill,label=right:{\scriptsize $\phi x$}] at ($(a)+(1,1)$) {};
        \node[circle,scale=.3,fill,label=right:{\scriptsize $\psi x$}] at ($(a)+(1,-1)$) {};
        \node[circle,scale=.3,fill,label=left:{\scriptsize $\psi^{-1} x$}] at ($(a)+(-1,1)$) {};
        \node[circle,scale=.3,fill,label=left:{\scriptsize $\phi^{-1} x$}] at ($(a)+(-1,-1)$) {};
        \draw ($(a)+(-1,0)$)--($(a)+(0,1)$)--($(a)+(1,0)$)--($(a)+(0,-1)$)--cycle;
      \end{scope}
    \end{tikzpicture}
  \end{center}
  \label{fig:dirichlet:pg}
\end{figure}

\subsection{Properties of embeddings}

\cref{algorithm:MDS} can be interpreted as computing a numerical approximation $\widehat{\rho}$
to a ``true'' embedding map $\rho$, namely the map in \eqref{intro:embedding:map} in the introduction.
Our main result on the nonlinear representation, \cref{theorem:embedding} below,
shows that this map indeed exists for every crystallographic group, and describes some of
its properties. The proof of the theorem shows that $\rho$ and the set $\Omega$ can be constructed
by the following abstract gluing algorithm.
\graybox{
  {\bf Abstract gluing construction.}\\[.2em]
  \begin{tabular}{cl}
    1.) & Glue: Identify each ${x\in\partial\Pi}$ with the unique point ${y\in\partial\Pi}$ satisfying ${x\sim y}$.
    \\
    2.) & Equip the glued set $M$ with metric $d_\group$.\\
    3.) & Embed the metric space $(M,d_\group)$ as a subset ${\Omega\subset\mathbb{R}^N}$ for some ${N\in\mathbb{N}}$.\\
    4.) & For each ${x\in\Pi}$, define $\rho|_{\Pi}(x)$ as the representative of $x$ on $\Omega$.\\
    5.) & Set ${\rho:=\rho|_\Pi\circ p}$.
  \end{tabular}
}
Since $\Pi$ contains at least one point of each orbit, and the gluing step identifies all points identifies all points
on the same orbit with each other, the glued set $M$ can be regarded as the quotient set ${\mathbb{R}^n/\group}$.
Recall that an \kword{embedding} is a map ${M\rightarrow\Omega\subset\mathbb{R}^N}$ that is a homeomorphism
(a continuous bijection with continuous inverse) of the metric spaces $(M,d_\group)$ and $(\Omega,d_N)$.

The state the theorem, we need one additional bit of terminology: The \kword{stabilizer} of $x$ in $\group$ is the set of all
$\phi$ that leave~$x$ invariant,
\begin{equation*}
  \Stab(x)\;:=\;\braces{\phi\in\group\,|\,\phi x=x}\;
\end{equation*}
see \citet{Vinberg:Shvarsman:1993,Ratcliffe:2006,Bonahon:2009}.
We explain the role of the stabilizer in more detail in the next subsection.
\begin{theorem}
  \label{theorem:embedding}
  Let $\group$ be a crystallographic group that tiles $\mathbb{R}^n$ with an exact convex
  polytope $\Pi$. Then the set $M$ constructed by
  gluing is a compact $\group$-orbifold that
  is isometric to $\mathbb{R}^n/\group$. This orbifold can be
  embedded into $\mathbb{R}^N$ for some
  \begin{equation*}
    n\;\leq\;N\;<\;2(n+\max_{x\in\Pi}|\Stab(x)|)\;<\;\infty\;,
  \end{equation*}
  that is, there is compact subset ${\Omega\subset\mathbb{R}^N}$ such that the
  metric space ${(\Omega,d_N)}$ is homeomorphic to ${(\mathbb{R}^n/\group,d_\group)}$.
  In particular, every point ${x\in\Pi}$ is represented by one and only one point $\rho_{\Pi}(x)$.
  We can hence define a map
  \begin{equation*}
    \rho:\mathbb{R}^n\rightarrow\Omega\subset\mathbb{R}^N
    \qquad
    \text{ as }
    \qquad
    \rho(x)\;:=\;\rho_{\Pi}(p(x))\;.
  \end{equation*}
  The map $\rho$ is continuous, surjective, and $\group$-invariant.
  A function ${f:\mathbb{R}^n\rightarrow Y}$, with values
  in some topological space $Y$, is $\group$-invariant and continuous if and only if
  \begin{equation*}
    f=h\circ\rho\qquad\text{ for some continuous }h:\mathbb{R}^N\rightarrow Y\;.
  \end{equation*}
  $\Omega$ is smooth almost everywhere, in the sense that
  \begin{equation*}
    \vol_n\braces{x\in\Pi\,|\,\Omega\cap B_{\varepsilon}(\rho(x))\text{ is not a manifold for any }\varepsilon>0}\;=\;0
  \end{equation*}
  where $B_\varepsilon(z)$ denotes the open Euclidean metric ball of radius $\varepsilon$ centered at ${z\in\mathbb{R}^N}$.
\end{theorem}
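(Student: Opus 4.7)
My strategy proceeds in three stages: first identify the glued set $M$ with the metric quotient $(\mathbb{R}^n/\group, d_\group)$ and exhibit its orbifold structure; second construct the Euclidean embedding subject to the stated dimension bound; third derive the remaining properties of $\rho$, the factorization through $h$, and the smooth-almost-everywhere statement.

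For the first stage, I would use exactness of $\Pi$ to verify that the boundary gluing relation $\sim$ is precisely the restriction of orbit equivalence to $\partial\Pi$: since every face of $\Pi$ equals ${\Pi\cap\phi\Pi}$ for some ${\phi\in\group}$, two boundary points are glued iff they lie on a common orbit. Together with \cref{fact:compact:quotient}, which gives the homeomorphism $\mathbb{R}^n/\group \cong \Pi/\group$, and the fact that $d_\group$ is the natural quotient metric on $M$, this yields the desired isometry. The orbifold structure on $M$ is inherited from the canonical atlas on $\mathbb{R}^n/\group$ (App.~\ref{sec:orbifolds}): around each class $[x]$ a chart is given by the quotient of a small $\Stab(x)$-invariant ball in $\mathbb{R}^n$, which is a standard construction since $\group$ acts discretely and properly on $\mathbb{R}^n$. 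Compactness of $M$ follows directly from \cref{fact:compact:quotient}.

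The second stage is the main obstacle, and I plan a two-substep construction. Locally, for each ${x\in\Pi}$, the stabilizer $\Stab(x)$ is a finite subgroup of $O(n)$, and a finite set of $\Stab(x)$-invariant polynomials separates orbits on a small invariant ball; this yields a topological embedding of each orbifold chart into $\mathbb{R}^{n+c(x)}$ for some $c(x)$ controlled by $|\Stab(x)|$ through classical invariant-theoretic bounds of Noether type. Since $M$ is compact, a finite subcollection of such charts covers it, and a partition-of-unity argument glues the local embeddings into a single continuous injection ${\iota:M\to\mathbb{R}^L}$ for some large $L$. A Whitney-style general-position reduction---repeatedly projecting $\iota$ along generic linear directions of the difference map ${M\times M\setminus\Delta \to \mathbb{R}^L}$---then decreases the ambient dimension while preserving injectivity, until $L$ falls strictly below $2(n+\max_x|\Stab(x)|)$. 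Since the source is compact, the resulting continuous injection is automatically a homeomorphism onto its image $\Omega$. The delicate point is making the local dimension bound ${c(x)\lesssim|\Stab(x)|}$ sharp and controlling how invariants on overlapping charts combine, so that after gluing the effective dimension of $\iota(M)\times\iota(M)$ stays bounded by $2(n+\max_x|\Stab(x)|)-1$.

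For the third stage, continuity and well-definedness of ${\rho=\rho|_\Pi\circ p}$ follow from \cref{lemma:projector} once one observes that $\rho|_\Pi$ is continuous and satisfies the periodic boundary condition \eqref{pbc} by construction of the gluing. Surjectivity onto $\Omega$ is immediate, and $\group$-invariance follows from $\group$-invariance of $p$. The forward direction of the factorization is the trivial composition. For the converse, given ${f\in\C_\group}$ with values in $Y$, \eqref{eq:bijection:f:quotient:continuous} yields a continuous $\hat{f}:\mathbb{R}^n/\group\to Y$; composing with the inverse of the homeomorphism $\Omega\cong\mathbb{R}^n/\group$ produces a continuous $\tilde{h}:\Omega\to Y$, which extends to a continuous ${h:\mathbb{R}^N\to Y}$ by Tietze's extension theorem (valid when $Y$ is Euclidean, or more generally an absolute extensor), satisfying ${f=h\circ\rho}$. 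The smooth-almost-everywhere statement reduces to showing that the non-trivial stabilizer locus $\{x\in\Pi\,|\,\Stab(x)\neq\{\Id\}\}$ has $\vol_n$-measure zero: each ${\phi\neq\Id}$ in $\group$ has fixed set contained in a proper affine subspace of $\mathbb{R}^n$, and $\group$ is countable, so the total fixed locus is a countable union of null sets, off which $\Omega$ is locally a smooth manifold embedded in $\mathbb{R}^N$.
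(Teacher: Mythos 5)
Your stages 1 and 3 are essentially sound and align with the paper's proof in substance: the identification of the glued set with $\mathbb{R}^n/\group$ via exactness (the paper routes this through Ratcliffe's side-pairing machinery and Poincar\'e's theorem, \cref{fact:abstract:gluing}, \cref{fact:poincare}, \cref{fact:generator}; you route it through \cref{fact:compact:quotient} and the quotient topology, which is heuristically the same), and the factorization argument in stage 3 is the same as the paper's (your Tietze caveat about $Y$ needing to be an absolute extensor is a correct observation about a hidden hypothesis in the statement).

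Stage 2 is where you take a genuinely different route, and it has a real gap. The paper's argument is entirely in topological dimension theory: \cref{lemma:ball:quotient:dimension} uses the Hurewicz-type fiber theorem (\cref{fact:dimensions}(ii)) --- if $q:B\to B/G$ is a closed continuous surjection with $|q^{-1}(\omega)|\leq|G|$ then $\Dim(B/G)\leq\Dim B+|G|-1$ --- to get $\Dim(B/G)<n+|G|$, then a finite cover of the compact $M$ and \cref{fact:dimensions}(i) give $\Dim M<n+\max_x|\Stab(x)|$, and the classical Menger--N\"obeling embedding theorem (Munkres 50.5) gives $N\leq 2\Dim M+1<2(n+\max|\Stab|)$. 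You instead attempt an explicit Hilbert-map embedding of each local chart $B/\Stab(x)$ using generating invariants and claim the local embedding dimension is controlled by $|\Stab(x)|$ ``through classical invariant-theoretic bounds of Noether type''. This is false: Noether's bound controls the \emph{degree} of a generating set, not its \emph{cardinality}. For $\Stab(x)=\mathbb{Z}/2$ acting by $-\Id$ on $\mathbb{R}^n$, the ring of invariants needs all $\binom{n+1}{2}$ products $x_ix_j$ as generators (each of degree $2=|G|$, consistent with Noether, but far more than $|G|=2$ many), so your local embedding dimension is $\binom{n+1}{2}$, which exceeds $n+|G|$ for $n\geq 4$ and exceeds $2(n+|G|)$ for $n\geq 9$. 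Moreover, your partition-of-unity gluing followed by ``Whitney-style general-position reduction'' is, for a non-manifold compact space, exactly the content of the Menger--N\"obeling theorem, and to know where the projection procedure stops you still need a bound on $\Dim M$ (or $\Dim(M\times M)$), which you have not established. The missing ingredient is precisely the fiber-cardinality theorem for closed maps, which replaces the invariant-theoretic detour entirely and yields the dimension bound directly.
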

\begin{proof}
  See \cref{appendix:proofs:embedding}.
\end{proof}
\begin{remark}
  (a) Note carefully what the theorem does and does not show about the
  embedding algorithm in \cref{sec:embedding:algorithm}: It does say that the
  glued set constructed by the algorithm discretizes an orbifold, and that an $N$-dimensional
  embedding of this orbifold exists. It does
  \emph{not} show that the embedding computed by MDS matches this dimension---indeed, since
  MDS attempts to construct an embedding that is also isometric (rather than just homeomorphic), we
  must in general expect the MDS embedding dimension to be larger, and we have at present no proof
  that an isometric embedding always exists.
  \\[.2em]
  (b) If the tiling defined by $\group$ and $\Pi$ is not exact,
  we can nonetheless define an embedding~$\rho$ that represents continuous functions that are invariant
  functions with respect to this tiling: Construct a Dirichlet domain $\mathbf{D}$, and then construct $\rho$ by applying the gluing
  algorithm to $\mathbf{D}$. Functions constructed as ${h\circ\rho}$ are then invariant for the tiling $(\group,\Pi)$.
\end{remark}
We have now seen different representations of continuous $\group$-invariant functions on $\mathbb{R}^n$, respectively
by continuous functions on $\Pi$, on the abstract space $\mathbb{R}^n/\group$, and on $\Omega$. On $\Pi$, we must explicitly
impose the periodic boundary condition, so we are using the set
\begin{equation*}
  \C_{\text{\rm pbc}}(\Pi)\;:=\;\braces{\hat{f}\in\C(\Pi)\,|\,\hat{f}\text{ satisfies \eqref{pbc}}}\;.
\end{equation*}
In these representations, the projector $p$, the quotient map $q$,
and the embedding map $\rho$ play very similar roles.
We can make that observation more rigorous:
\begin{corollary}
  \label{corollary:C:isometries}
  Given a crystallographic group $\group$ that tiles with a convex polytope $\Pi$, consider the maps
  \begin{align*}
    I_\Pi:\C_{\text{\rm pbc}}(\Pi)&\rightarrow\C_\group
    &\text{ and }&&
    I_{\mathbb{R}^n/\group}:{\C(\mathbb{R}^n/\group)}&\rightarrow\C_\group
    &\text{ and }&&
    I_\Omega:\C(\Omega)&\rightarrow\C_\group\\
    \hat{f}&\mapsto\hat{f}\circ p
    &&&
    \hat{g}&\mapsto\hat{g}\circ q
    &&&
    \hat{h}&\mapsto\hat{h}\circ\rho
  \end{align*}
  where $I_\Omega$ is only defined if $\Pi$ is exact. Equip all spaces with the supremum norm.
  Then~$I_\Pi$ and~$I_{\mathbb{R}^n/\group}$ are isometric isomorphisms, and if $\Pi$
  is exact, so is $I_\Omega$. In particular, $\C_\group$ is always a separable Banach space.
\end{corollary}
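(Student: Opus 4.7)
The plan is to assemble the corollary from three ingredients: the set-theoretic bijections already established in \cref{lemma:projector}, \eqref{eq:bijection:f:quotient:continuous}, and \cref{theorem:embedding}; the isometry of the sup-norm under composition with a surjection having dense image; and the fact that $\mathbb{R}^n/\group$ is a compact metric space. Linearity of each of the three maps is immediate from the fact that they are precompositions with a fixed function $p$, $q$, or $\rho$.

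First I would handle $I_{\mathbb{R}^n/\group}$, which is the cleanest case. The bijection is exactly \eqref{eq:bijection:f:quotient:continuous}. Since $q$ is surjective onto $\mathbb{R}^n/\group$, we have
\begin{equation*}
  \|\hat{g}\circ q\|_\infty \;=\; \sup_{x\in\mathbb{R}^n}|\hat{g}(q(x))| \;=\; \sup_{\omega\in\mathbb{R}^n/\group}|\hat{g}(\omega)| \;=\; \|\hat{g}\|_\infty,
\end{equation*}
so $I_{\mathbb{R}^n/\group}$ is an isometric isomorphism. For $I_\Pi$, \cref{lemma:projector} provides the bijection via \eqref{eq:projector:composition}. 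The image $p(\mathbb{R}^n)=\tilde{\Pi}$ contains the relative interior $\Pi^\circ$ (the top-dimensional face is always selected by \cref{algorithm:transversal}), hence $\tilde{\Pi}$ is dense in $\Pi$. Since $\hat{f}\in\C_{\text{\rm pbc}}(\Pi)$ is continuous, $\sup_{x\in\tilde{\Pi}}|\hat{f}(x)|=\sup_{x\in\Pi}|\hat{f}(x)|$, and the same sup-norm identity as above yields isometry. For $I_\Omega$, \cref{theorem:embedding} gives both the bijection and the surjectivity of $\rho$ onto the compact set $\Omega$; the identical sup-norm calculation gives isometry.

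For separability and completeness, I would appeal to \cref{fact:compact:quotient}: $(\mathbb{R}^n/\group,d_\group)$ is a compact metric space. Therefore $\C(\mathbb{R}^n/\group)$, equipped with the sup-norm, is a separable Banach space by standard facts about $C(K)$ on compact metric spaces (e.g., Stone--Weierstrass yields a countable dense subalgebra). Since $I_{\mathbb{R}^n/\group}$ is an isometric isomorphism, $\C_\group$ inherits both completeness and separability, and via the other two isomorphisms the same holds for $\C_{\text{\rm pbc}}(\Pi)$ and (when $\Pi$ is exact) $\C(\Omega)$.

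The only step requiring any care is the density claim $\overline{\tilde{\Pi}}=\Pi$ underlying the $I_\Pi$ isometry: one must verify that \cref{algorithm:transversal}, which selects the relative interiors of one face per $\group$-equivalence class, always includes the top-dimensional face $\Pi$ itself (whose equivalence class is $\{\Pi\}$), so that $\Pi^\circ\subseteq\tilde{\Pi}$. Everything else is a direct invocation of results already in hand, so I expect no substantive obstacle beyond recording these observations cleanly.
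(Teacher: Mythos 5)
Your proposal is correct and takes essentially the same route as the paper: bijectivity of all three maps from \cref{lemma:projector}, \eqref{eq:bijection:f:quotient:continuous} and \cref{theorem:embedding}, isometry by the sup-norm identity, and separability/completeness from compactness. The only (harmless) deviations are that you justify the $I_\Pi$ isometry via density of $\tilde{\Pi}\supseteq\Pi^\circ$ in $\Pi$ — the paper simply asserts $\sup_{\Pi}|\hat{f}|=\sup_{\mathbb{R}^n}|\hat{f}\circ p|$, which also follows from the periodic boundary condition since $\hat{f}(x)=\hat{f}(p(x))$ for $x\in\Pi$ — and that you obtain separability of $\C_\group$ through $\C(\mathbb{R}^n/\group)$ on the compact quotient, whereas the paper passes through the closed subspace $\C_{\text{\rm pbc}}(\Pi)\subset\C(\Pi)$.
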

\begin{proof}
  By \cref{lemma:projector}, \eqref{eq:bijection:f:quotient} and \cref{theorem:embedding}, all three maps are bijections.
  We also have
    \begin{equation*}
    \|\hat{f}\|_{\sup}
    \;=\;
    \sup\nolimits_{x\in\Pi}|\hat{f}(x)|
    \;=\;
    \sup\nolimits_{x\in\mathbb{R}^n}|\hat{f}(p(x))|
    \;=\;
    \|\hat{f}\circ p\|_{\sup}
    \quad\text{ for }\hat{f}\in\C_{\text{\rm pbc}}(\Pi)\;,
    \end{equation*}
    and the same holds mutatis mutandis on $\mathbb{R}^n/\group$ and $\Omega$, so all
    maps are isometries. Since~$\Pi$ is compact, $\C(\Pi)$ is separable
    \citep[][3.99]{Aliprantis:Border:2006}. The same hence holds for the closed subspace~$\C_{\text{\rm pbc}}(\Pi)$,
    and by isometry for $\C_\group$.
\end{proof}

\subsection{Why the glued surface may not be smooth}

\begin{figure}[b]
  \caption{Fixed points become non-smooth under gluing. \emph{Left}: A triangular polytope $\Pi$ is transformed by a rotation $\phi$, which rotates by $120^\circ$ around the vertex
    $x$ of $\Pi$. Note that ${\phi^2=\phi^{-1}}$ and ${\phi^3=\Id}$.
    \emph{Middle}: The gluing defined by $\phi$ identifies the edge $(x,z)$ and the edge $(x,y)$.
    \emph{Right}: The resulting glued surface is a cone, with $x$ mapped to the tip. Locally around $x$, the cone is not a manifold.
  }
  \vspace{-1cm}
  \begin{center}
    \includegraphics{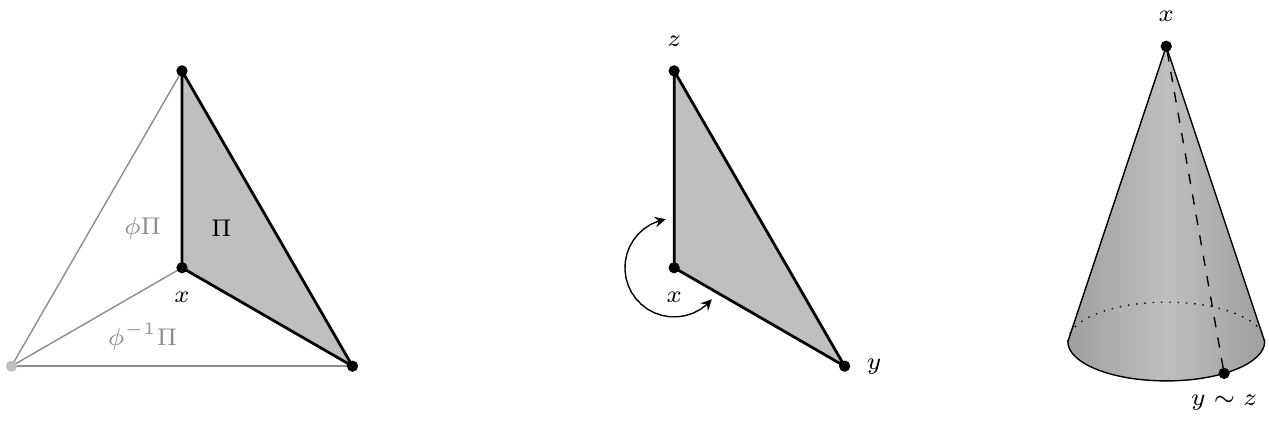}
  \end{center}
  \label{fig:cone}
\end{figure}

Whether or not the glued surface is smooth depends on whether the transformations in~$\group$ leave any points invariant. It is a known fact in geometry (and made precise in the proof of \cref{theorem:embedding}) that
\begin{align*}
  \text{ glued surface is a manifold }
  \qquad&\Leftrightarrow\qquad
  \phi x\;\neq\;x\quad\text{for all }\phi\in\group\setminus\braces{\Id}\text{ and }x\in\mathbb{R}^n\;.\\
\intertext{That can be phrased in terms of the stabilizer as}
  \text{ glued surface is a manifold }
  \qquad&\Leftrightarrow\qquad
  \Stab(x)\;=\;\braces{\Id}\qquad\text{ for all }x\in\mathbb{R}^n\;.
\end{align*}
It is straightforward to check that $\Stab(x)$ is a group \citep{Vinberg:Shvarsman:1993}.
Since each $\phi$ is an isometry,
and shifts of $\mathbb{R}^n$ have no fixed points, $\phi(x)=x$ can only hold if ${b_\phi=0}$.
Thus, $\Stab(x)$ is always a subset of the point group $\group_o$ (in the terminology of \cref{sec:isometries}),
which means it is finite. To illustrate its effect on the surface, consider the following examples.
\begin{example}
(a) Recall that MacKay's construction \citep{mackay1998introduction}, as sketched in the introduction,
can be translated to crystallographic groups by setting ${\Pi=[0,1]}$ and choosing $\group$ as shifts. In this case,
${\Stab(x)=\braces{\Id}}$ for each ${x\in\mathbb{R}}$, and the glued surface is a circle, which is indeed a manifold.
The two-dimensional analogue is to choose ${\Pi=[0,1]^2}$ and $\group$ as the group \texttt{p1} in \cref{fig:wallpaper},
in which case the glued surface is a torus as shown in \cref{fig:orbifolds}, and hence again a manifold.\\[.2em]
(b) Now suppose $\Pi$ is a triangle, $x$ one of its corners, and $\phi$ a $120^\circ$ rotation
around $x$, as illustrated in \cref{fig:cone}. Then ${\Stab(x)=\braces{\Id,\phi,\phi^2=\phi^{-1}}}$, and the glued surface~${\Omega=\rho(\Pi)}$ is a cone with $\rho(x)$ as its tip.
That means $\Omega$ is not a manifold, because no neighborhood of the tip can be mapped isometrically to a neighborhood in $\mathbb{R}^2$.
\end{example}

\section{Invariant kernels}
\label{sec:kernels}

\begin{figure}
  \begin{center}
    \begin{tikzpicture}
      \node at (-.5,0) {\includegraphics[height=3cm]{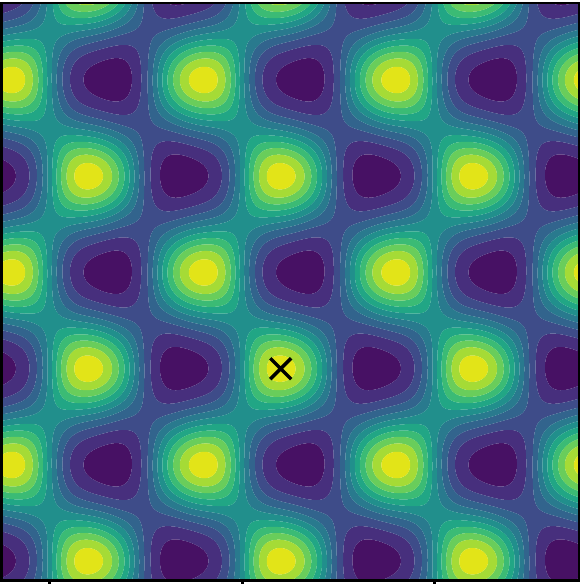}};
      \node at (3.2,0) {\includegraphics[height=3cm]{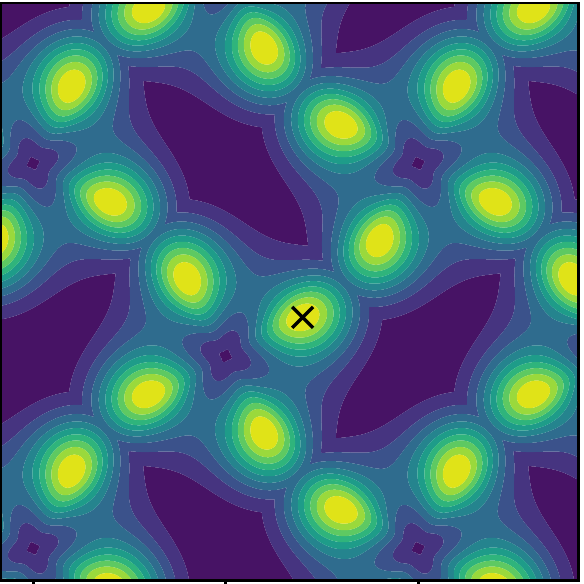}};
      \node at (7,0) {\includegraphics[height=3.5cm]{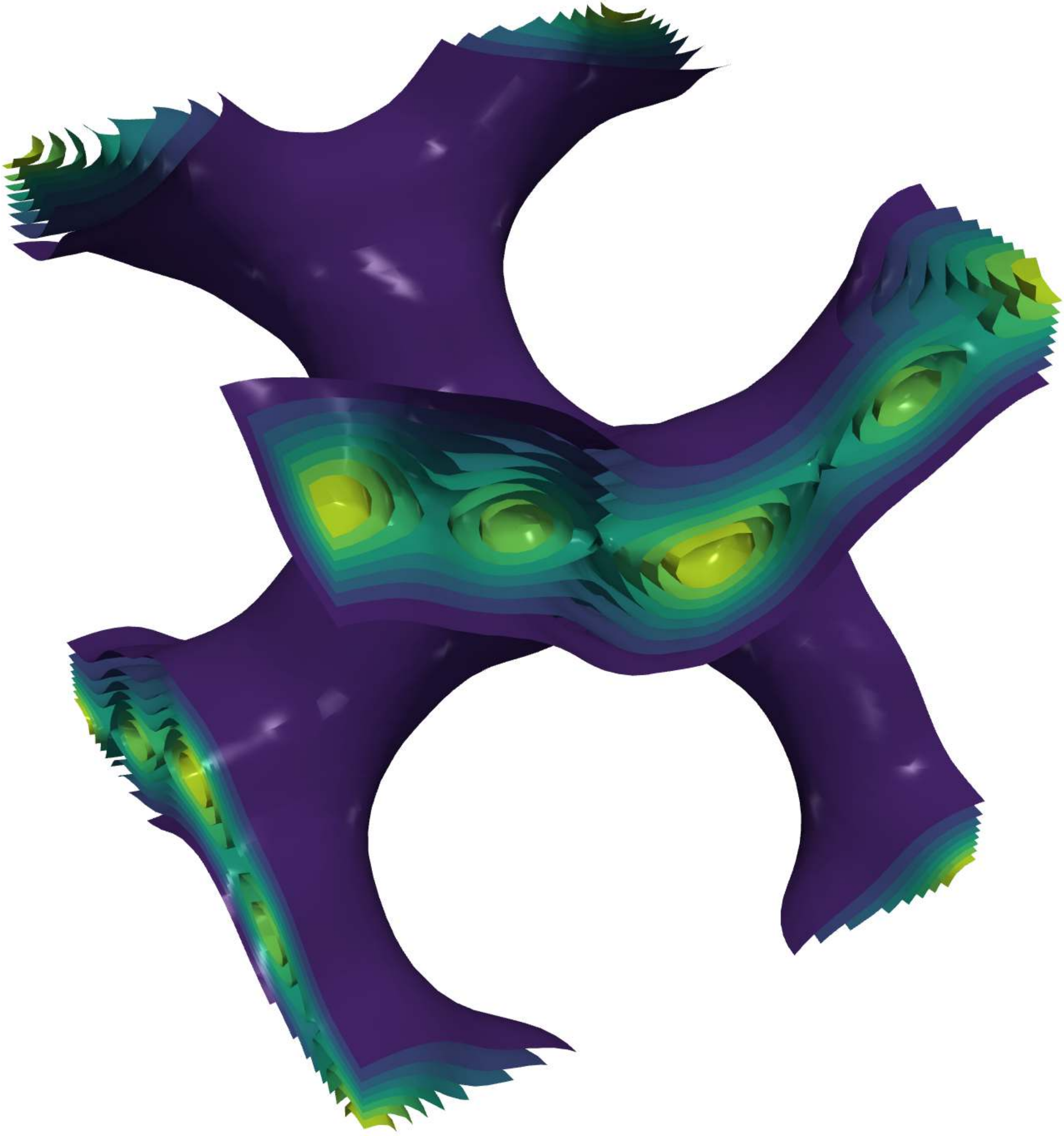}};
      \node at (10.5,0) {\includegraphics[height=3.5cm]{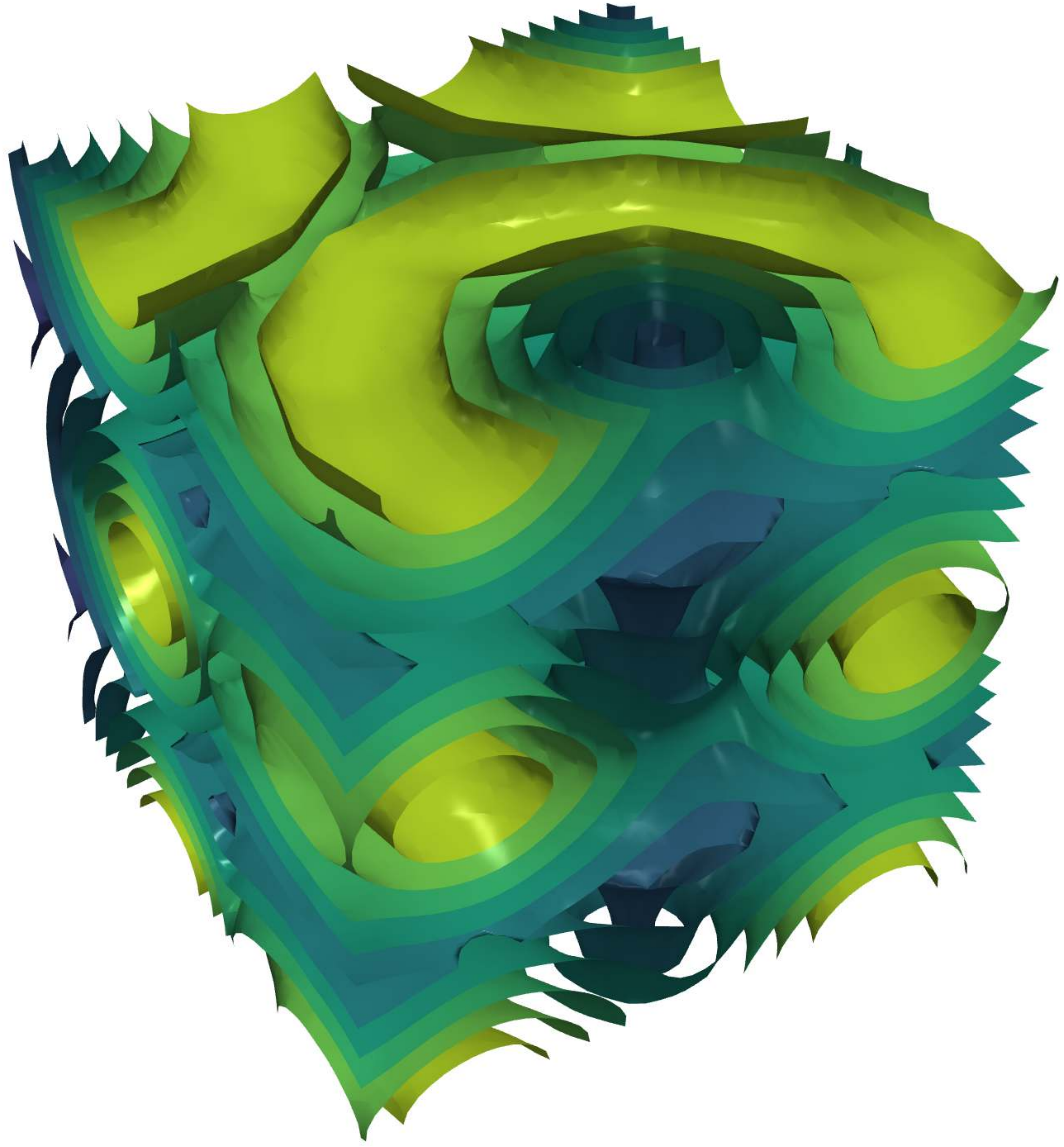}};
    \end{tikzpicture}
    \vspace{-1em}
  \end{center}
  \caption{Invariant kernel functions using the kernel in \eqref{eqn:covariance-kernel},
    for groups on $\mathbb{R}^2$ and $\mathbb{R}^3$: \texttt{pg} with $\ell=0.1$, \texttt{p4gm} with $\ell=0.01$,
    \texttt{I4\textsubscript{1}} with $\ell=0.01$, and \texttt{P6\textsubscript{3}/m} with $\ell=0.05$ (from left to right).}
  \label{fig:kernel}
\end{figure}

Throughout this section, ${\kappa:\mathbb{R}^n\times\mathbb{R}^n\rightarrow\mathbb{R}}$ is a kernel, i.e., a
positive definite function, and~$\mathbb{H}$ is its reproducing kernel Hilbert space, or RKHS.
\cref{sec:function:spaces} reviews definitions.
We consider kernels that are $\group$-invariant in both arguments in the sense of \eqref{eq:invariant:k:arguments},
that is,
\begin{equation*}
  \label{invariant:kernel}
  \kappa(\phi x,\psi y)\;=\;\kappa(x,y)\qquad\text{ for all }\phi,\psi\in\group\text{ and all }x,y\in\mathbb{R}^n\;.
\end{equation*}
That is the natural notion of invariance for most purposes, since
such kernels are precisely those that define spaces of $\group$-invariant
functions:
\begin{proposition}
  \label{result:RKHS:invariant:elements}
  If and only if $\kappa$ is $\group$-invariant in each argument,
  all functions ${f\in\mathbb{H}}$ are $\group$-invariant.
  If $\kappa$ is also continuous, all ${f\in\mathbb{H}}$ are continuous, and hence
  ${\mathbb{H}\subset\C_{\group}}$.
\end{proposition}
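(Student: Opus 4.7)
The plan is to exploit the reproducing property together with the standard description of $\mathbb{H}$ as the closed span of the sections $\kappa(\argdot,y)$, ${y\in\mathbb{R}^n}$. Throughout I will use that $\kappa$ is symmetric (since it is positive definite), so invariance in one argument is equivalent to invariance in the other.

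For the forward direction of the equivalence, I would fix ${f\in\mathbb{H}}$ and ${\phi\in\group}$, and use the reproducing property to write
\begin{equation*}
  f(\phi x)\;=\;\sp{f,\kappa(\argdot,\phi x)}_{\mathbb{H}}\;=\;\sp{f,\kappa(\argdot,x)}_{\mathbb{H}}\;=\;f(x)\;,
\end{equation*}
where the middle equality uses invariance of $\kappa$ in its second argument (which says ${\kappa(\argdot,\phi x)=\kappa(\argdot,x)}$ as functions). For the converse, if every ${f\in\mathbb{H}}$ is $\group$-invariant, then in particular each section ${\kappa(\argdot,y)\in\mathbb{H}}$ is $\group$-invariant, so ${\kappa(\phi x,y)=\kappa(x,y)}$ for all ${\phi\in\group}$ and all ${x,y}$; invariance in the second argument then follows from symmetry of $\kappa$.

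For the continuity claim, I would use the standard Cauchy--Schwarz estimate in $\mathbb{H}$: for any ${f\in\mathbb{H}}$ and ${x,x'\in\mathbb{R}^n}$,
\begin{equation*}
  |f(x)-f(x')|^2
  \;=\;\bigl|\sp{f,\kappa(\argdot,x)-\kappa(\argdot,x')}_{\mathbb{H}}\bigr|^2
  \;\leq\;\|f\|_{\mathbb{H}}^2\bigl(\kappa(x,x)-2\kappa(x,x')+\kappa(x',x')\bigr)\;.
\end{equation*}
Continuity of $\kappa$ makes the right-hand side vanish as ${x'\to x}$, so $f$ is continuous. Combined with the invariance established above, this gives ${\mathbb{H}\subset\CG}$.

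There is no real obstacle here; the only subtlety is making sure the forward direction is phrased so that the invariance of $f$ genuinely transfers through the inner product, i.e., that invariance of $\kappa$ is applied in the \emph{second} slot (the one that encodes the evaluation point via the reproducing property), which is why symmetry of $\kappa$ is what lets the two notions of invariance coincide.
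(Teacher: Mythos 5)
Your proof is correct, and it differs from the paper's in two genuine ways that are worth noting. For the converse direction, the paper picks a complete orthonormal system $(f_i)$ of $\mathbb{H}$ and uses the expansion $\kappa(x,y)=\sum_i f_i(x)f_i(y)$ to transfer invariance from the $f_i$ to $\kappa$; you instead observe that the sections $\kappa(\argdot,y)$ lie in $\mathbb{H}$ and are therefore invariant in their free argument, then use symmetry to get invariance in both slots (explicitly: $\kappa(\phi x,\psi y)=\kappa(\psi y,\phi x)=\kappa(y,\phi x)=\kappa(\phi x,y)=\kappa(x,y)$). Your route is more elementary since it avoids choosing an ONB and invoking the Mercer-type series. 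For continuity, the paper uses $\group$-invariance plus compactness of $\Pi$ to argue that $\kappa$ is bounded, and then cites a result of Steinwart and Christmann that a bounded continuous kernel has a continuous RKHS; you give a direct Cauchy--Schwarz estimate
\begin{equation*}
  |f(x)-f(x')|\;\leq\;\|f\|_{\mathbb{H}}\,\bigl(\kappa(x,x)-2\kappa(x,x')+\kappa(x',x')\bigr)^{1/2}\;,
\end{equation*}
which shows continuity of each $f$ from continuity of $\kappa$ alone, without the boundedness step or the external citation. Both arguments are valid; yours is more self-contained, while the paper's phrasing additionally surfaces boundedness of the invariant kernel, a fact it reuses later.
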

\cref{theorem:embedding} implies that, to define an invariant kernel, we can start with any
kernel~$\hat{\kappa}$ on the embedding space $\mathbb{R}^N$, and compose it with the embedding
map $\rho$:
\begin{corollary}
  \label{result:constructing:invariant:kernel}
  Let $\hat{\kappa}$ be a kernel on ${\mathbb{R}^N}$. Then the function
  \begin{equation*}
    \kappa(x,y):=\hat{\kappa}(\rho(x),\rho(y))
    \quad\text{ or in short }\quad
    \kappa=\hat{\kappa}\circ(\rho\otimes\rho)
  \end{equation*}
  is a kernel on ${\mathbb{R}^n}$ that is $\group$-invariant
  in both arguments. If $\hat{\kappa}$ is continuous, so is $\kappa$.
\end{corollary}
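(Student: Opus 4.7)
The plan is to reduce all three claims (positive definiteness, bi-invariance, and continuity) to properties of $\rho$ already granted by \cref{theorem:embedding}, so that no new geometric input is needed. The key observation is that $\kappa = \hat{\kappa} \circ (\rho \otimes \rho)$ is a pullback of $\hat{\kappa}$ along the map $\rho \otimes \rho : \mathbb{R}^n \times \mathbb{R}^n \to \mathbb{R}^N \times \mathbb{R}^N$, and each of the three properties is preserved by such pullbacks given the relevant property of $\rho$.

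First, I would verify positive definiteness directly from the definition: for any finite collection of points $x_1,\ldots,x_m \in \mathbb{R}^n$ and coefficients $c_1,\ldots,c_m \in \mathbb{R}$, the points $z_i := \rho(x_i)$ lie in $\mathbb{R}^N$, and
\begin{equation*}
  \tsum_{i,j} c_i c_j \kappa(x_i,x_j)
  \;=\;
  \tsum_{i,j} c_i c_j \hat{\kappa}(z_i,z_j)
  \;\geq\;0,
\end{equation*}
since $\hat{\kappa}$ is a kernel on $\mathbb{R}^N$. Symmetry of $\kappa$ follows from symmetry of $\hat{\kappa}$, so $\kappa$ is a kernel on $\mathbb{R}^n$.

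Second, for $\group$-invariance in each argument, I would invoke the $\group$-invariance of $\rho$ stated in \cref{theorem:embedding}: for every $\phi,\psi \in \group$ and $x,y \in \mathbb{R}^n$, we have $\rho(\phi x) = \rho(x)$ and $\rho(\psi y) = \rho(y)$, hence
\begin{equation*}
  \kappa(\phi x,\psi y)
  \;=\;
  \hat{\kappa}(\rho(\phi x),\rho(\psi y))
  \;=\;
  \hat{\kappa}(\rho(x),\rho(y))
  \;=\;
  \kappa(x,y).
\end{equation*}
This is exactly the invariance condition \eqref{eq:invariant:k:arguments} in two arguments.

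Third, if $\hat{\kappa}$ is continuous, then since $\rho$ is continuous (again by \cref{theorem:embedding}), the product map $\rho \otimes \rho$ is continuous on $\mathbb{R}^n \times \mathbb{R}^n$ with the product topology, and $\kappa$ is the composition of two continuous maps. There is no real obstacle here; the entire statement is a direct corollary of the properties of $\rho$ established in \cref{theorem:embedding} combined with the definition of a positive definite function, and the only care needed is to confirm that the existence of $\rho$ (which requires $\Pi$ to be exact, per the theorem) is either assumed or can be arranged via a Dirichlet domain as noted in the remark following \cref{theorem:embedding}.
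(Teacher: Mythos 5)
Your proposal is correct and follows essentially the same route as the paper, which simply declares the corollary immediate from \cref{theorem:embedding} (continuity, surjectivity onto the compact set $\Omega$, and $\group$-invariance of $\rho$) together with the fact that pulling back a kernel preserves positive definiteness; you merely spell out the Gram-matrix, invariance, and composition-of-continuous-maps checks explicitly. Your closing caveat about exactness of $\Pi$ (or substituting a Dirichlet domain) matches the paper's remark as well.
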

That follows immediately from \cref{theorem:embedding} and the fact that
the restriction of a kernel to a subset is again a kernel \citep{Steinwart:Christmann:2008}.
\begin{example}
  Suppose $\hat{\kappa}$ is an radial basis function (RBF) kernel with length scale $\ell$ on~$\mathbb{R}^N$,
  and hence of the form ${\hat{\kappa}(z,z')=\exp(-\|z-z'\|^2/\ell^2)}$. Then $\kappa$ is simply
\begin{equation*}
  \label{eqn:covariance-kernel}
  \kappa(x,y)
  \;=\;
  \hat{\kappa}(\rho(x),\rho(z))
  \;=\;
  \exp\Bigl(-\frac{||\rho(x)-\rho(y)||^2}{2\ell^2}\Bigr)\;.
\end{equation*}
\cref{fig:kernel} illustrates this kernel the two-dimensional groups
\texttt{pg} and \texttt{p4gm} and the three-dimensional groups \texttt{I4$_\texttt{1}$} and \texttt{P6$_\texttt{3}$/m}.
\end{example}
Once we have constructed an invariant kernel, its application to machine learning problems is
straightforward. That becomes obvious if we define
${\Phi(x)\;=\;\kappa(x,\argdot)}$, often called the \kword{feature map} of $\kappa$
\citep{Steinwart:Christmann:2008}.
Using the definition of the scalar product on $\mathbb{H}$ and the reproducing property (see \cref{sec:RKHS}),
we then have
\begin{equation*}
  \Phi:\mathbb{R}^n\rightarrow\mathbb{H}
  \qquad\text{ and }\qquad
  \kappa(x,y)\;=\;\sp{\Phi(x),\Phi(y)}_{\mathbb{H}}\;.
\end{equation*}
If $\kappa$ is $\group$-invariant, then $\Phi$ is also $\group$-invariant by construction.
Recall that most kernel methods in machine learning are derived by substituting
a Euclidean scalar product by~$\sp{\Phi(x),\Phi(y)}_{\mathbb{H}}$, thereby making a linear
method nonlinear. Using a $\group$-invariant kernel results in a $\group$-invariant
method.
\begin{example}[Invariant SVM]
  A support vector machine (SVM) with kernel $\kappa$ is determined by two finite sets of points $\mathcal{X}$ and
  $\mathcal{Y}$ in $\mathbb{R}^n$. To train the SVM, one maps these points into $\mathbb{H}$ via $\Phi$, finds the shortest
  connecting line between the convex hulls of $\Phi(\mathcal{X})$ and $\Phi(\mathcal{Y})$, and determines
  a hyperplane $F$ that is orthogonal to this line and intersects its center---equivalently, in dual formulation, the
  unique hyperplane that separates the convex hulls of $\Phi(\mathcal{X})$ and $\Phi(\mathcal{Y})$
  and maximizes the $\mathbb{H}$-norm distance to both. The set of points $x$ in $\mathbb{R}^n$ whose
  image $\Phi(x)$ lies on $F$ is the \kword{decision surface} of the SVM in $\mathbb{R}^n$.
  The hyperplane can be specified by two functions $g$ (an offset vector) and $h$ (a normal vector) in $\mathbb{H}$:
  A function ${f\in\mathbb{H}}$ lies on $F$ if and only if
  \begin{equation*}
    \sp{f-g,h}_{\mathbb{H}}
    \;=\;
    0
    \qquad\text{ or equivalently }\qquad
    \sp{f,h}_{\mathbb{H}}
    \;=\;
    \sp{g,h}_{\mathbb{H}}\;.
  \end{equation*}
  Let $x$ be a point in $\mathbb{R}^n$. If $y$ and $z$ are points with ${g=\Phi(y)}$ and ${h=\Phi(z)}$, then
  \begin{equation*}
    x\text{ is on decision surface }\qquad\Longleftrightarrow\qquad\kappa(x,z)\;=\;\kappa(y,z)\;.
  \end{equation*}
  Since invariance of $\kappa$ implies ${\kappa(\phi x,z)=\kappa(x,z)}$, that shows the decision surface is $\group$-invariant.
  \cref{fig:svm} shows examples.
  In these figures the data were randomly generated with regions assigned labels using a random function generated as in \cref{sec:gp}.
  The support vectors are highlighted and illustrate the effects of symmetry constraints: the decision surface can be determined by data observed far away.
\end{example}
Two of the most important results on kernels are Mercer's theorem
and the compact inclusion theorem \citep[][Chapter 4]{Steinwart:Christmann:2008}. The latter shows the inclusion map
${\mathbb{H}\hookrightarrow\C}$ is compact, and is used in turn to establish good
statistical properties of kernel methods, such as oracle inequalities and finite covering numbers
\citep{Steinwart:Christmann:2008}. Both results assume that $\kappa$ has
compact support. If $\kappa$ is invariant under a crystallographic group,
its support is necessarily non-compact, but the next result shows that
versions of both theorems hold nonetheless:
\begin{proposition}
  \label{result:kernel:compactness}
  If $\kappa$ is continuous and $\group$-invariant in both arguments, the
  inclusion map~${\mathbb{H}\hookrightarrow\C_{\group}}$ is compact.
  There exist functions ${f_1,f_2,\ldots\in\mathbb{H}}$ and
  scalars ${c_1\geq c_2\geq\ldots>0}$ such that
  \begin{equation*}
    \kappa(x,y)\;=\;\tsum_{i\in\mathbb{N}}c_if_i(x)f_i(y)\qquad\text{ for all }x,y\in\mathbb{R}^n\;,
  \end{equation*}
  and the scaled sequence ${(\sqrt{c_i}f_i)}$ is an orthonormal basis of $\mathbb{H}$.
  With this basis,
  \begin{equation*}
    \mathbb{H}\;=\;
    \braces{\;f\!=\!\tsum_{i\in\mathbb{N}}a_i\sqrt{c_i}f_i\,|\,
      a_1,a_2,\ldots\in\mathbb{R}\text{ with }\tsum_{i}|a_i|^2<\infty}\;,
  \end{equation*}
  where each series converges in $\mathbb{H}$ and hence (by compactness of inclusion)
  also uniformly.
\end{proposition}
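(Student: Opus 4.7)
The plan is to reduce the whole proposition to the classical Mercer theorem and compact inclusion theorem on the compact set $\Omega$ via the orbifold embedding map $\rho$ from \cref{theorem:embedding}. The entire difficulty lies in the noncompactness of $\mathbb{R}^n$, and invariance under $\group$ is exactly what lets us push everything onto the compact $\Omega$.

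First I would establish a factorization $\kappa = \hat{\kappa}\circ(\rho\otimes\rho)$ for a continuous positive definite kernel $\hat{\kappa}$ on $\Omega\times\Omega$. Since $\kappa$ is $\group$-invariant in each argument, its value $\kappa(x,y)$ depends only on $(\group(x),\group(y))$ and therefore only on $(\rho(x),\rho(y))$, so a function $\hat{\kappa}$ exists. Continuity of $\hat{\kappa}$ follows from the fact that $\rho\otimes\rho:\mathbb{R}^n\times\mathbb{R}^n\to\Omega\times\Omega$ is a continuous surjection and a quotient map (this is the product version of the homeomorphism in \cref{corollary:C:isometries}); applied coordinatewise this is routine. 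Positive-definiteness is immediate by surjectivity of $\rho$: given $v_1,\dots,v_k\in\Omega$, choose preimages $x_i\in\rho^{-1}(v_i)$ and observe that the Gram matrix $(\hat{\kappa}(v_i,v_j))$ coincides with $(\kappa(x_i,x_j))$, which is positive semidefinite.

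Next, because $\Omega\subset\mathbb{R}^N$ is compact (again \cref{theorem:embedding}), classical Mercer theory applies to $\hat{\kappa}$ on its own: there exist continuous functions $\hat{f}_1,\hat{f}_2,\ldots$ on $\Omega$ and scalars $c_1\ge c_2\ge\cdots>0$ with $\hat{\kappa}(u,v)=\sum_i c_i\hat{f}_i(u)\hat{f}_i(v)$ uniformly on $\Omega\times\Omega$, and such that $(\sqrt{c_i}\hat{f}_i)$ is an orthonormal basis of the RKHS $\hat{\mathbb{H}}$ of $\hat{\kappa}$; the same classical theory gives that $\hat{\mathbb{H}}\hookrightarrow\C(\Omega)$ is compact. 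I then pull all of this back by $\rho$. Define $f_i:=\hat{f}_i\circ\rho$, which lie in $\C_\group$ by \cref{corollary:C:isometries}. The pullback $T:\hat{\mathbb{H}}\to\mathbb{H}$, $\hat{f}\mapsto\hat{f}\circ\rho$, sends the kernel section $\hat{\kappa}(\argdot,\rho(y))$ to $\kappa(\argdot,y)$, so on finite linear combinations of such sections $T$ is norm-preserving because both inner products reduce to the same linear combinations of $\hat{\kappa}$-values. Density of kernel sections and Hilbert-space completion extend $T$ to an isometric isomorphism $\hat{\mathbb{H}}\cong\mathbb{H}$; under $T$ the basis, Mercer expansion, and $\ell^2$-parametrization transfer verbatim to the statements of the proposition. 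Finally, combining $T^{-1}:\mathbb{H}\to\hat{\mathbb{H}}$, the classical compact inclusion $\hat{\mathbb{H}}\hookrightarrow\C(\Omega)$, and the isometric isomorphism $I_\Omega:\C(\Omega)\to\C_\group$ of \cref{corollary:C:isometries}, the inclusion $\mathbb{H}\hookrightarrow\C_\group$ factors as the composition $I_\Omega\circ(\cdot)\circ T^{-1}$, so compactness of the middle map yields compactness of the composition.

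The main obstacle I expect is the verification that $T$ is a well-defined \emph{isometric isomorphism} of RKHSs even though $\rho$ is highly non-injective (it collapses every $\group$-orbit to a point). The cleanest route is to work on the pre-Hilbert space $\mathrm{span}\{\hat{\kappa}(\argdot,v):v\in\Omega\}$ first, where inner products are tautologically expressible in terms of $\hat{\kappa}$-values and therefore match the corresponding $\kappa$-inner products on $\mathrm{span}\{\kappa(\argdot,y):y\in\mathbb{R}^n\}$; surjectivity of $T$ at the end follows because every generator $\kappa(\argdot,y)$ of $\mathbb{H}$ is the image under $T$ of the generator $\hat{\kappa}(\argdot,\rho(y))$ of $\hat{\mathbb{H}}$. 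Alternatively one can cite a general ``pullback RKHS'' lemma for continuous surjections, which handles this situation directly. Once $T$ is in hand, all other assertions of the proposition are formal consequences of Mercer plus classical compact inclusion on the compact set $\Omega$.
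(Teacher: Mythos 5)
Your proposal is correct and follows essentially the same route as the paper: you factor $\kappa$ through a kernel $\hat{\kappa}$ on the compact orbifold $\Omega$, show that pullback by $\rho$ is an isometric isomorphism of RKHSs (the paper isolates this as its Lemma~\ref{lemma:RKHS:isometry}, proved exactly via density of kernel sections), then apply classical Mercer and compact-inclusion results on $\Omega$ and transport them back. The only cosmetic difference is that you spell out the positive-definiteness of $\hat{\kappa}$ explicitly, which the paper leaves implicit.
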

Intuitively, that is the case because every $\group$-invariant kernel is the pullback
of a kernel on $\Omega$, and $\Omega$ is compact.
\cref{fig:svm} shows an application of such a kernel to generate a two-class classifier
with an $\group$-invariant decision surface.
\begin{figure}
  \begin{center}
    \begin{tikzpicture}
      \node at (0,0) {\includegraphics[width=4cm]{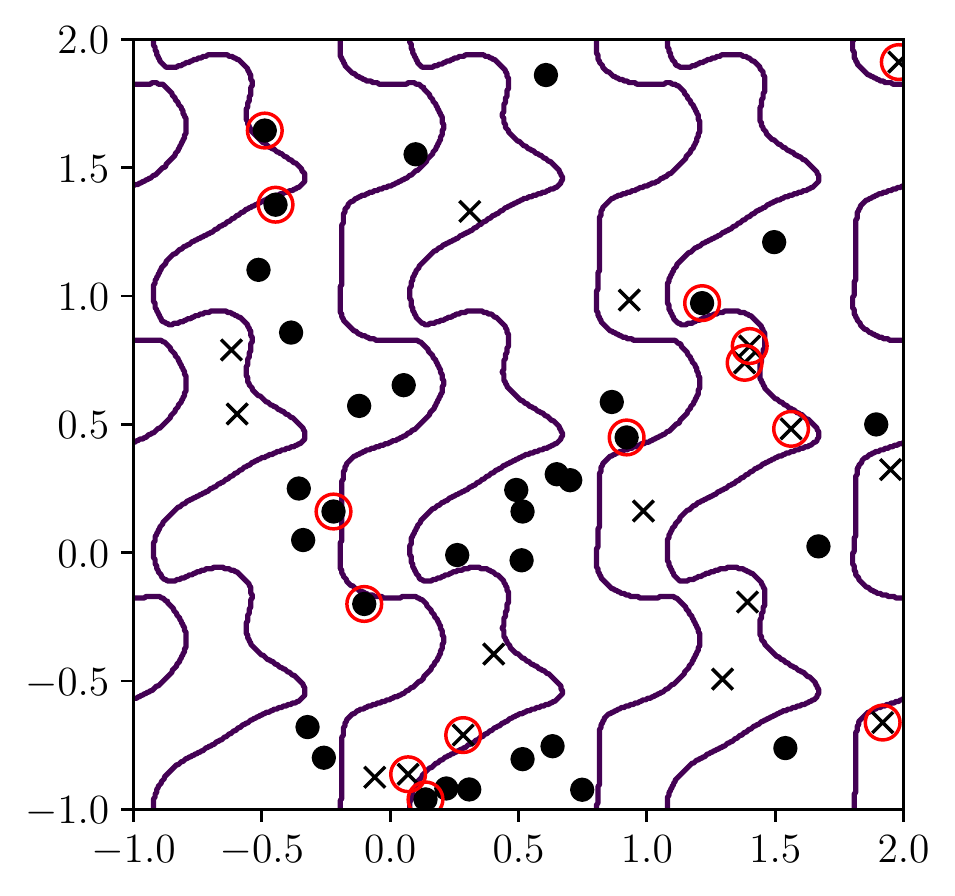}};
      \node at (5,0) {\includegraphics[width=4cm]{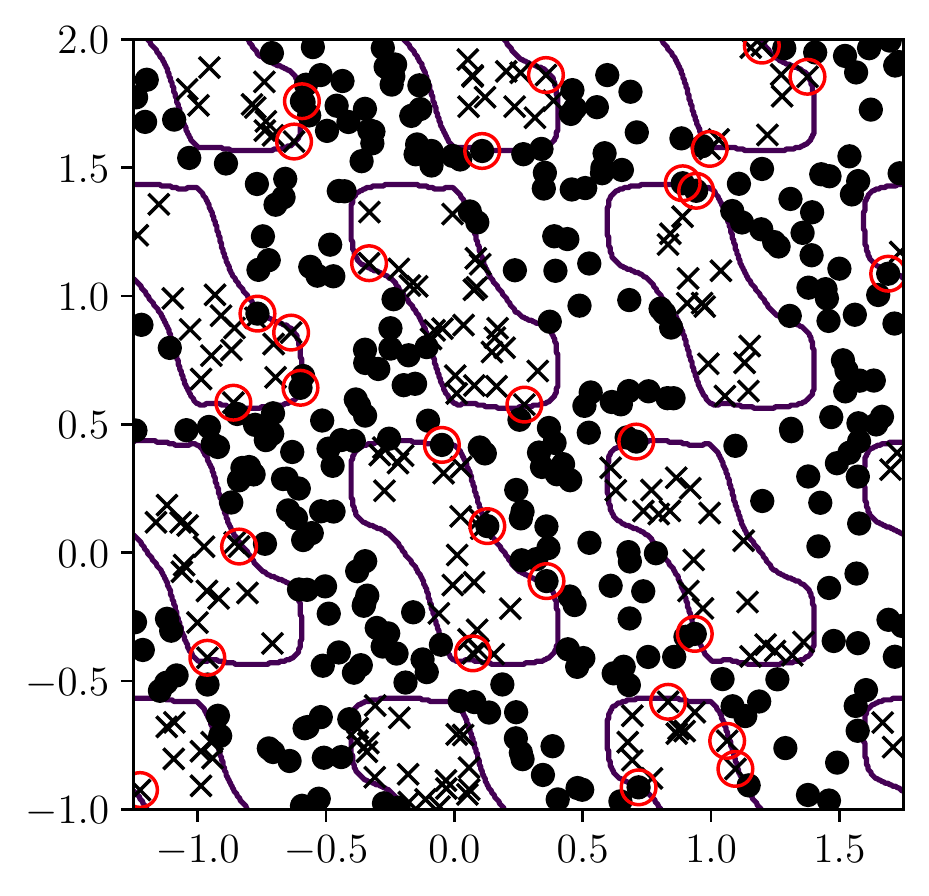}};
      \node at (10,0) {\includegraphics[width=4cm]{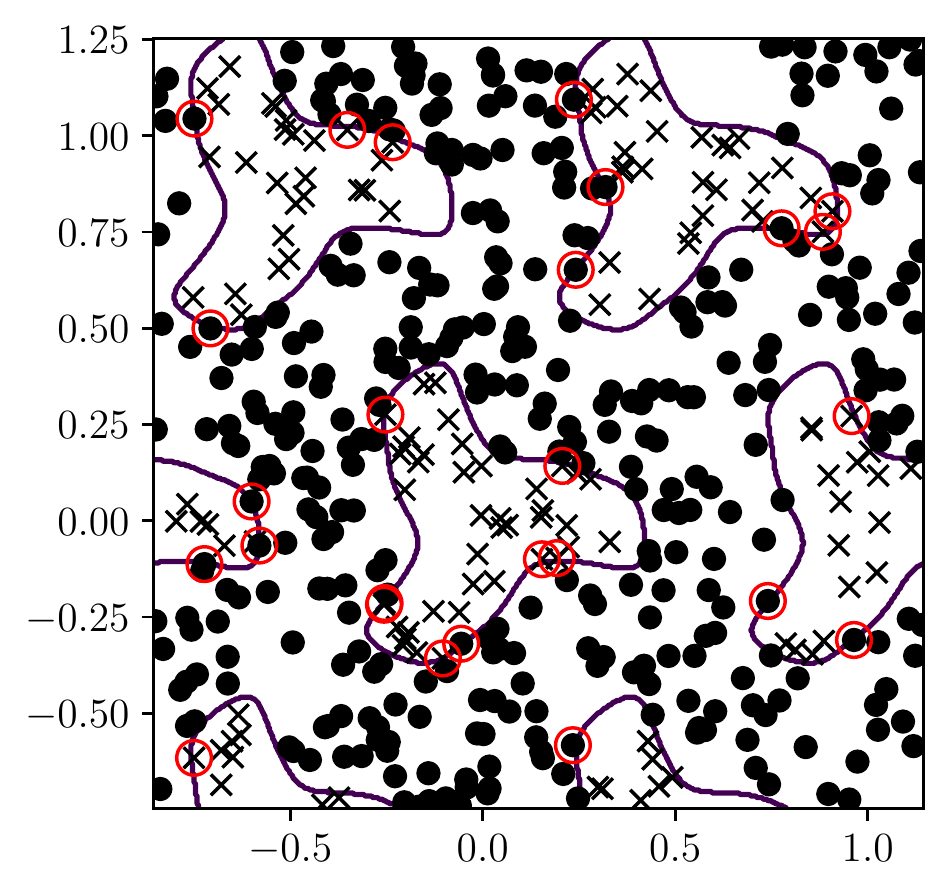}};
    \end{tikzpicture}
    \vspace{-.5cm}
  \end{center}
  \caption{Support vector machine decision surfaces for $\group$-invariant kernels constructed as in
    \cref{{result:constructing:invariant:kernel}}. The groups used are \texttt{p1} (left), \texttt{p2} (middle),
    and \texttt{p3} (right). Support vectors are highlighted with red circles.}
  \label{fig:svm}
\end{figure}

\section{Invariant Gaussian processes}
\label{sec:gp}

We now consider the problem of generating random functions
${F:\mathbb{R}^n\rightarrow\mathbb{R}}$ such that each instance of $F$ is
continuous and $\group$-invariant with probability 1. That can be done linearly
using the generalized Fourier representation, by generating the coefficients $c_i$
in \cref{theorem:spectral} at random. Here, we consider the nonlinear representation
instead: If we set
\begin{equation*}
  F\;:=\;H\circ\rho\qquad\text{ for a random continuous function }H:\mathbb{R}^N\rightarrow\mathbb{R}\;,
\end{equation*}
\cref{theorem:embedding} implies that $F$ is indeed continuous and $\group$-invariant with probability $1$,
and hence a random element of
$\C_{\group}$. Conversely, the result also implies that every random
element of $\C_{\group}$ is of this form, for some random element $H$ of $\C(\mathbb{R}^N)$.

\subsection{Almost surely invariant processes}

Recall that a random function ${F:M\subseteq\mathbb{R}^n\rightarrow\mathbb{R}}$ is a
\kword{Gaussian process} if
the joint distribution of the random vector ${(F(x_1),\ldots,F(x_k))}$ is
Gaussian for any finite set of points ${x_1,\ldots,x_k\in M}$.
The \kword{mean} and \kword{covariance function} of a Gaussian process are defined as
\begin{equation*}
  \mu(x)\;:=\;\mean[F(x)]
  \quad\text{ and }\quad
  \kappa(x,y)\;:=\;\mean[(F(x)-\mu(x))(F(y)-\mu(y)]
  \quad\text{ for }x,y\in M\;.
\end{equation*}
The covariance function is always positive definite, and hence a kernel on
$M$. The distribution of a Gaussian process is completely determined by $\mu$ and $\kappa$,
and conditions for~$F$ to satisfy continuity or stronger regularity conditions can
be formulated in terms of $\kappa$. See e.g., \citet{Marcus:Rosen:2006} for more
background.
\begin{proposition}
  \label{result:invariant:GP}
  Let $H$ be a continuous Gaussian process on $\mathbb{R}^N$, with mean $\mu$ and covariance function $\kappa$.
  Then ${F:=H\circ\rho}$ is a continuous random function on $\mathbb{R}^n$, and is $\group$-invariant
  with probability 1. Consider any finite set of points
  \begin{equation*}
    x_1,\ldots,x_k\in\mathbb{R}^n\quad\text{ such that }
    x_i\not\sim x_j\text{ for all distinct }i,j\leq k\;.
  \end{equation*}
  Then ${(F(x_1),\ldots,F(x_k))}$ is a Gaussian random vector, with mean and covariance
  \begin{equation*}
    \mean[F(x_i)]=\mu(\rho(x_i)) \quad\text{ and }\quad
    \text{\rm Cov}[F(x_i),F(x_j)]=\kappa(\rho(x_i),\rho(x_j))
    \quad\text{ for }i,j\leq k\;.
  \end{equation*}
\end{proposition}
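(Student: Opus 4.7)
The plan is to derive both the pathwise and distributional statements as direct consequences of \cref{theorem:embedding} combined with the definition of a Gaussian process. First I would note that $\rho:\mathbb{R}^n\to\mathbb{R}^N$ is a continuous, $\group$-invariant, deterministic map. Since $H$ has continuous sample paths with probability 1, pointwise composition with $\rho$ yields $F=H\circ\rho$ whose paths are continuous with probability 1. Invariance is even easier: because $\rho\circ\phi=\rho$ for every $\phi\in\group$, we have $F\circ\phi=H\circ\rho\circ\phi=H\circ\rho=F$ on every sample path of $H$, not only with probability 1.

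Next I would address the finite-dimensional distributions. For any points $y_1,\ldots,y_k\in\mathbb{R}^N$, the vector $(H(y_1),\ldots,H(y_k))$ is Gaussian by definition of a Gaussian process, with mean $(\mu(y_1),\ldots,\mu(y_k))$ and covariance matrix $[\kappa(y_i,y_j)]$. Setting $y_i=\rho(x_i)$ gives the stated formulas for the mean and covariance of $(F(x_1),\ldots,F(x_k))$ by direct substitution.

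The role of the non-equivalence hypothesis $x_i\not\sim x_j$ deserves a brief comment. By \cref{theorem:embedding}, $\rho$ represents each $\group$-orbit by exactly one point of $\Omega$, so distinct orbits map to distinct images in $\mathbb{R}^N$. The assumption $x_i\not\sim x_j$ therefore ensures that the $\rho(x_i)$ are genuinely distinct points, so that $[\kappa(\rho(x_i),\rho(x_j))]$ is the honest $k$-point covariance of $H$ at $k$ different locations rather than a degenerate block. Without this hypothesis, the mean and covariance formulas still hold (indices with $x_i\sim x_j$ simply produce equal rows and columns), but the resulting Gaussian vector is degenerate along the identities $F(x_i)=F(x_j)$ forced by invariance.

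I expect essentially no real obstacle; the proposition is a corollary of \cref{theorem:embedding}. The one technical check is to treat $F$ as a random function in a measurable sense: since $H$ is a continuous Gaussian process, I would view it as a random element of $\C(\mathbb{R}^N)$ with the topology of uniform convergence on compacta, note that $h\mapsto h\circ\rho$ is continuous (hence measurable) from $\C(\mathbb{R}^N)$ to $\C(\mathbb{R}^n)$, and conclude that $F$ is a bona fide random element of $\C(\mathbb{R}^n)$ whose law is determined by the finite-dimensional distributions computed above.
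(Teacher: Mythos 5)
Your proof is correct, but it takes a genuinely more elementary route than the paper's. You establish the finite-dimensional claim by direct substitution: since ${(F(x_1),\ldots,F(x_k))=(H(\rho(x_1)),\ldots,H(\rho(x_k)))}$, joint Gaussianity and the stated mean and covariance follow immediately from the definition of a Gaussian process on $\mathbb{R}^N$, and your observation that the hypothesis ${x_i\not\sim x_j}$ is not needed for the formulas (it only rules out the deterministic degeneracies ${F(x_i)=F(x_j)}$ forced by invariance) is accurate. The paper instead argues at the level of random elements of Banach spaces: it restricts to a transversal $\tilde{\Pi}$, shows that ${\tau:h\mapsto(h\circ\rho)|_{\tilde{\Pi}}}$ is a bounded linear operator from $\C(\Omega)$ to $\C(\tilde{\Pi})$, and invokes the fact (van der Vaart and van Zanten, Lemma 7.1) that the image of a Gaussian random element of a separable Banach space under a bounded linear map is again Gaussian, with $\mu$ and $\kappa$ transforming accordingly. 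The paper's argument buys a slightly stronger, process-level conclusion---the law of $F|_{\tilde{\Pi}}$ is a Gaussian measure on the Banach space $\C(\tilde{\Pi})$---in one stroke, while yours is self-contained, avoids the Banach-space Gaussian machinery entirely, and suffices for the proposition as stated; your closing remark about viewing ${h\mapsto h\circ\rho}$ as a continuous map between spaces of continuous functions supplies the measurability of $F$ as a random continuous function, playing the role that boundedness of $\tau$ plays in the paper.
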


Clearly, $F$ cannot be a Gaussian process on $\mathbb{R}^n$: Since $F$ is invariant,
$F(x)$ completely determines $F(\phi(x))$, so ${(F(x),F(\phi x))}$ cannot be jointly Gaussian.
Put differently, conditioning $F$ on its values on $\Pi$ renders $F$ non-random.
Loosely speaking, the proposition hence says that $F$ is ``as Gaussian'' as a $\group$-invariant
random function can be. \cref{fig:gp} illustrates random functions generated by such a process.

\begin{example}
  \label{example:gp}
  The construction of \citet{mackay1998introduction} described in the introduction
  was designed specifically for Gaussian processes, to generate periodic functions at random.
  We can now generalize these processes from periodicity to crystallographic invariance:
  Given~$\group$ and~$\Pi$, construct the embedding map ${\rho:\mathbb{R}^n\rightarrow\mathbb{R}^N}$.
  Choose $\hat{\kappa}$ as the RBF kernel~\eqref{eqn:covariance-kernel} on~$\mathbb{R}^N$, and $\hat{\mu}$ as the constant function $0$ on $\mathbb{R}^N$.
  Then generate $F$ as
  \begin{equation*}
    H\sim\text{GP}(\hat{\mu},\hat{\kappa})
    \qquad\text{ and }\qquad
    F\;:=\;H\circ\rho\;.
  \end{equation*}
  For visualization, draws can be approximated by the randomized feature scheme of
  \citet{rahimi2007random}. \cref{fig:gp} shows examples for $\group$ chosen as
  \texttt{p2} and \texttt{p31m} on $\mathbb{R}^2$, and for \texttt{P-6} and \texttt{P422} on $\mathbb{R}^3$.
\end{example}

\begin{figure}
  \begin{center}
    \begin{tikzpicture}
      \node at (0,0) {\includegraphics[width=3cm]{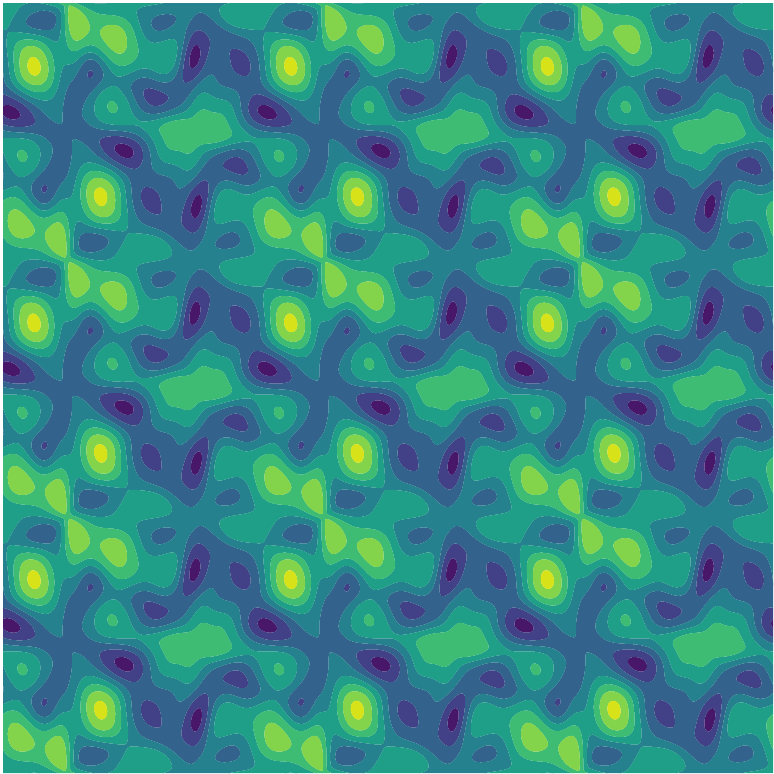}};
      \node at (3.7,0) {\includegraphics[width=3cm]{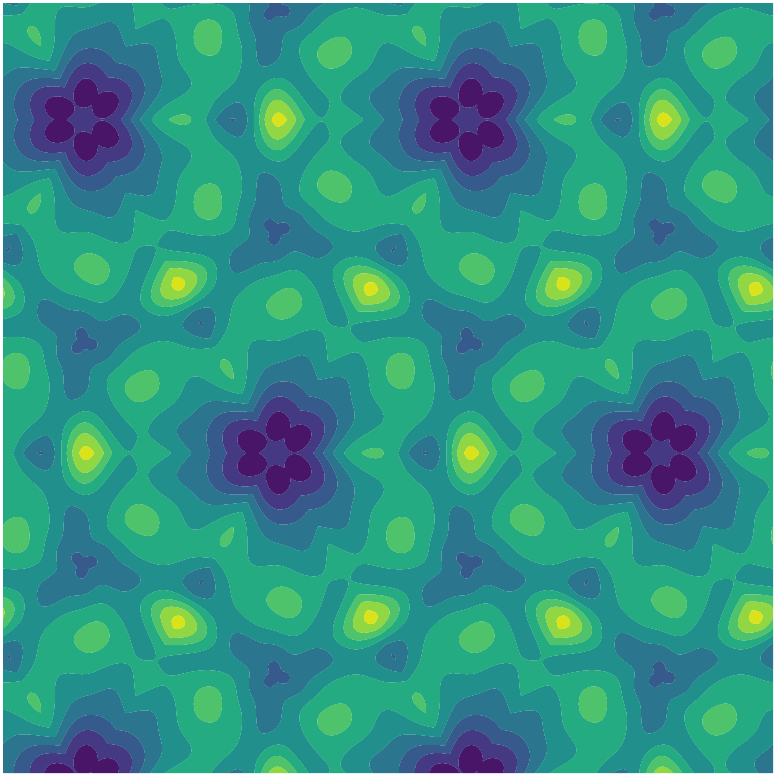}};
      \node at (7.4,0) {\includegraphics[width=3.5cm]{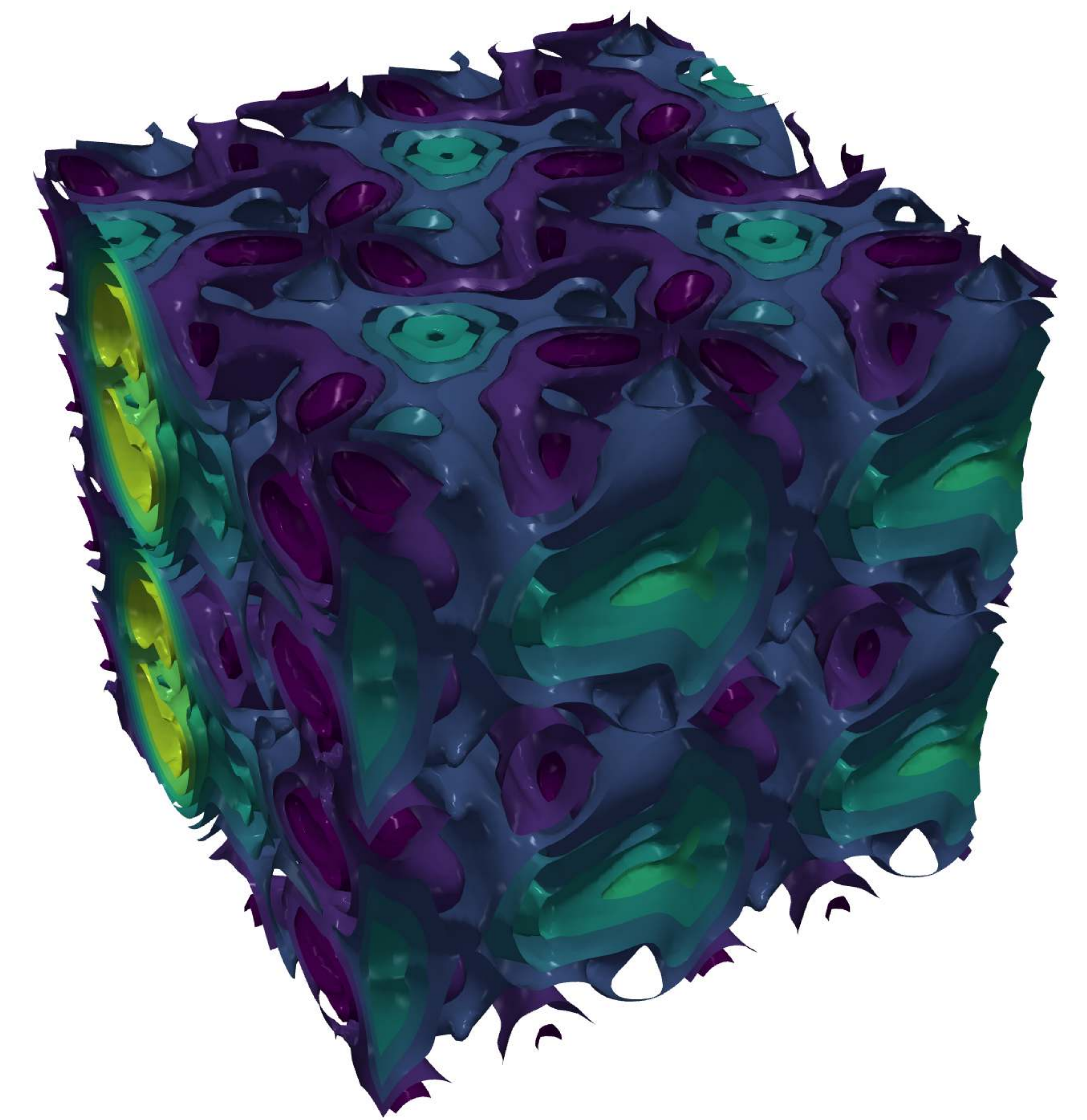}};
      \node at (11.1,0) {\includegraphics[width=3.5cm]{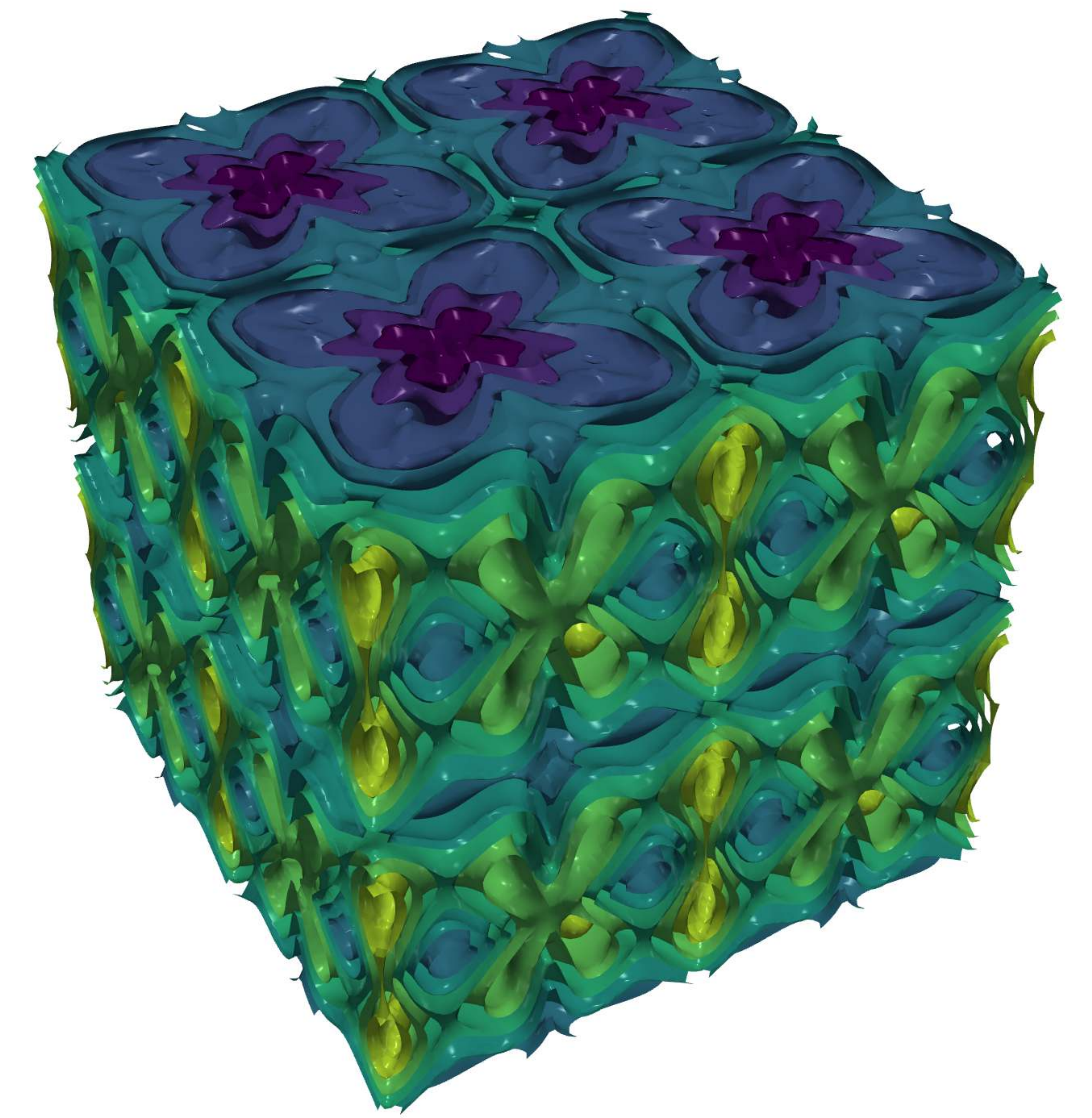}};
    \end{tikzpicture}
    \vspace{-.7cm}
  \end{center}
  \caption{Random invariant functions on $\mathbb{R}^2$ and $\mathbb{R}^3$,
    generated by Gaussian processes as described in \cref{example:gp}.
    The groups are, from left to right, \texttt{p2}, \texttt{p31m}, \texttt{P-6}, and \texttt{P422}.
  }
  \label{fig:gp}
\end{figure}

\subsection{Distributionally invariant processes}

Another type of invariance that random functions can satisfy is \kword{distributional $\group$-invariance},
which holds if
\begin{equation*}
  F\;\equdist\;F\circ\phi\qquad\text{ for all }\phi\in\group\;.
\end{equation*}
Here, $\smash{\equdist}$ denotes equality in distribution.
That is equivalent to requiring that
the distribution $P$ of $F$ satisfies ${P(\phi A)=P(A)}$ for every measurable set $A$.
For crystallographic groups, distributionally invariant Gaussian processes can
be constructed by factoring the parameters, rather than
the random function $F$, through the embedding in \cref{theorem:embedding}:\nolinebreak
\begin{corollary}
  Let $\mu$ be a real-valued function and $\kappa$ a kernel on $\mathbb{R}^N$.
  If $F$ is the Gaussian process on $\mathbb{R}^n$ with mean ${\mu\circ\rho}$
  and covariance function ${\kappa\circ(\rho\otimes\rho)}$, then
  $F$ is distributionally $\group$-invariant, i.e.\ ${F\circ\phi\equdist F}$ for all ${\phi\in\group}$.
\end{corollary}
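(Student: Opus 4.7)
The plan is to exploit the fact that a Gaussian process is completely characterized (in distribution) by its mean function and covariance function. So to show $F\circ\phi\equdist F$, it suffices to verify that $F\circ\phi$ is a Gaussian process with the same mean and covariance as $F$.

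First I would observe that $F\circ\phi$ is again a Gaussian process on $\mathbb{R}^n$: for any finite collection $x_1,\ldots,x_k\in\mathbb{R}^n$, the vector $(F(\phi x_1),\ldots,F(\phi x_k))$ is Gaussian because $(\phi x_1,\ldots,\phi x_k)$ is another finite collection of points in $\mathbb{R}^n$ and $F$ is Gaussian by assumption. Next I would compute its mean function as
\begin{equation*}
  \mean[F(\phi x)]\;=\;\mu(\rho(\phi x))\;=\;\mu(\rho(x))\;,
\end{equation*}
where the second equality uses that $\rho$ is $\group$-invariant by \cref{theorem:embedding}. Analogously, the covariance function is
\begin{equation*}
  \text{\rm Cov}[F(\phi x),F(\phi y)]\;=\;\kappa(\rho(\phi x),\rho(\phi y))\;=\;\kappa(\rho(x),\rho(y))\;,
\end{equation*}
again using $\group$-invariance of $\rho$ in each argument. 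Thus $F\circ\phi$ is a Gaussian process on $\mathbb{R}^n$ with mean $\mu\circ\rho$ and covariance $\kappa\circ(\rho\otimes\rho)$, which is precisely the law of $F$.

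There is no serious obstacle here; the only point to be careful about is that equality of finite-dimensional distributions implies equality of the process laws on $\mathbb{R}^{\mathbb{R}^n}$ via Kolmogorov's extension theorem, which is standard. The whole argument is essentially a one-line consequence of \cref{theorem:embedding}: invariance of the embedding map $\rho$ is inherited by every function of $\rho$, and the Gaussian law is pinned down by its first two moments.
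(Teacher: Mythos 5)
Your argument is correct: since $\rho$ is $\group$-invariant by \cref{theorem:embedding}, the mean $\mu\circ\rho$ and covariance $\kappa\circ(\rho\otimes\rho)$ are unchanged under precomposition with any $\phi\in\group$, and a Gaussian process law is determined by these two functions via its finite-dimensional distributions. The paper leaves this corollary without an explicit proof precisely because this is the intended one-line argument, so your proposal matches the paper's reasoning.
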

Almost sure invariance implies distributional invariance; distributional invariance is
typically a much weaker property.
Frequently encountered examples of distributional invariance are all forms of
stationarity (distributional invariance under shift groups) and of
exchangeability  (permutation groups).

\section{The Laplace operator on invariant functions}
\label{sec:laplacian}

The results in this section describe the behavior of the Laplace operator on
$\group$-invariant functions. All of these are ingredients in the proof of the Fourier representation.
We first describe the transformation behavior of differentials of invariant
functions, in \cref{sec:differentials}. Gradients turn out to be invariant under shifts and
equivariant under orthogonal transformations.
Gradient vector fields, and more generally vector fields
with the same transformation behavior as gradients, have a cancellation property---their integral
orthogonal to the tile boundary vanishes (\cref{sec:flux}).
We then define the relevant solution space for the spectral problem, which has Hilbert space
structure (so that we can define orthogonality and self-adjointness) but has smoother elements than
$\L_2$, in \cref{sec:HG}. Once the Laplacian has been properly defined on this space, we can
use the cancellation property to show it is self-adjoint.

\subsection{Differentials and gradients of invariant functions}
\label{sec:differentials}
Given a differentiable function ${f:\mathbb{R}^n\rightarrow\mathbb{R}^m}$, denote the differential at $x$ as
\begin{equation*}
  Df(x)
  \;=\;
  \left(\mfrac{\partial f_j}{\partial x_i}\right)_{i\leq n,j\leq m}
  \;\in\mathbb{R}^{n\times m}\;.
\end{equation*}
The next result summarizes how invariance of $f$ under a transformation $\phi$ affects $Df$.
Note the order of operations matters: $D(f\circ\phi)$ is the differential of the function $f\circ\phi$, whereas
${(Df)\circ\phi}$ transforms the differential $Df$ of $f$ by $\phi$.
\begin{lemma}
  \label{lemma:differentials}
  If ${f:\mathbb{R}^n\rightarrow\mathbb{R}^m}$ is
  invariant under an isometry ${\phi x=A_\phi x+b_\phi}$ and differentiable, then
  \begin{equation}
    \label{eq:equivariant:differential}
    (Df)(\phi x)\;=\;Df(x)\cdot A_{\phi}^{\trans}\;.
  \end{equation}
  If in particular ${f:\mathbb{R}^n\rightarrow\mathbb{R}}$, the gradient satisfies
  \begin{equation}
    \label{eq:equivariant:gradient}
    \nabla f(\phi x)\;=\;A_{\phi}\cdot\nabla f(x) \qquad\text{ for all }x\in\mathbb{R}^n\;.
  \end{equation}
  The Hessian matrix $H_f$ and the Laplacian satisfy
  \begin{equation}
    \label{eq:invariant:Laplacian}
    H_f(\phi x)\;=\;A_{\phi}H_f(x)A_{\phi}^{\trans}
    \quad\text{ and }\quad
    \Delta f(\phi x)\;=\;\Delta f(x)
    \qquad\text{ for all }x\in\mathbb{R}^n\;.
  \end{equation}
\end{lemma}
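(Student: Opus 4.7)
The plan is to derive all three identities by differentiating the invariance relation $f\circ\phi = f$ and exploiting the fact that $\phi$ is affine with orthogonal linear part $A_\phi$, so that $D\phi$ is the constant matrix $A_\phi$ and $A_\phi A_\phi^{\trans} = I$.

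First I would apply the chain rule to the scalar invariance equation $f(\phi x) = f(x)$. Since $\phi x = A_\phi x + b_\phi$ is affine, its differential at every point is simply $A_\phi$, so the chain rule gives
\begin{equation*}
(Df)(\phi x)\cdot A_\phi \;=\; D(f\circ\phi)(x) \;=\; Df(x)\,.
\end{equation*}
Right-multiplying by $A_\phi^{\trans}$ and using orthogonality yields \eqref{eq:equivariant:differential}. The gradient identity \eqref{eq:equivariant:gradient} in the scalar case $m=1$ is then obtained by taking transposes, since $\nabla f = (Df)^{\trans}$ and $(A_\phi^{\trans})^{\trans} = A_\phi$.

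For the Hessian rule in \eqref{eq:invariant:Laplacian}, I would apply the same chain-rule mechanism to the now vector-valued identity $\nabla f(\phi x) = A_\phi \nabla f(x)$, assuming $f$ is twice differentiable. Differentiating the left-hand side componentwise and using the symmetry of the Hessian produces $H_f(\phi x)\cdot A_\phi$; differentiating the right-hand side, with $A_\phi$ constant, produces $A_\phi \cdot H_f(x)$. Equating and right-multiplying by $A_\phi^{\trans}$ gives $H_f(\phi x) = A_\phi H_f(x) A_\phi^{\trans}$. The Laplacian invariance then follows immediately from the cyclic property of the trace:
\begin{equation*}
\Delta f(\phi x) \;=\; \mathrm{tr}\,H_f(\phi x) \;=\; \mathrm{tr}\bigl(A_\phi H_f(x) A_\phi^{\trans}\bigr) \;=\; \mathrm{tr}\bigl(A_\phi^{\trans}A_\phi H_f(x)\bigr) \;=\; \Delta f(x)\,.
\end{equation*}

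No step is genuinely hard — the content is entirely the chain rule together with orthogonality of $A_\phi$. The only care needed is bookkeeping the Jacobian/transpose conventions so that \eqref{eq:equivariant:differential} and \eqref{eq:equivariant:gradient} match up consistently, and observing at each stage that the orthogonality $A_\phi A_\phi^{\trans} = I$ (which is exactly what membership in the Euclidean isometry group provides) is what lets us invert $A_\phi$ and rearrange factors under the trace.
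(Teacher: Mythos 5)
Your proof is correct and follows essentially the same route as the paper: differentiate the invariance identity via the chain rule, use orthogonality of $A_\phi$ to rearrange, and take the trace for the Laplacian. The only cosmetic difference is in the Hessian step, where you differentiate $\nabla f(\phi x) = A_\phi\nabla f(x)$ directly and solve, while the paper rewrites $\nabla f = A_\phi(\nabla f)\circ\phi^{-1}$ before differentiating — both are equivalent chain-rule applications, and your explicit remark that twice differentiability is needed here is a useful clarification the paper leaves implicit.
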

\begin{proof}
  Since $\phi$ is affine, its differential ${(D\phi)(x)=A_\phi}$ is constant. The chain rule
  shows
  \begin{equation*}
    D(f\circ\phi)
    \;=\;
    (Df)\circ\phi\cdot (D\phi)
    \;=\;
    ((Df)\circ\phi)\cdot A_\phi\;.
  \end{equation*}
  By invariance, ${f\circ\phi}$ and $f$ are the same function, and hence ${D(f\circ\phi)=Df}$.
  Substituting into the identity above shows \eqref{eq:equivariant:differential}, since ${A_\phi^{\trans}=A_\phi^{-1}}$.
  For ${m=1}$, the transpose ${D^{\trans}=\nabla}$ is the gradient, and \eqref{eq:equivariant:differential} becomes \eqref{eq:equivariant:gradient}. Using \eqref{eq:equivariant:gradient}, the Hessian can be written as
  \begin{equation*}
    H_f\;=\;D(\nabla f)\;=\;D(A_{\phi}\nabla f\circ\phi^{-1})\;.
  \end{equation*}
  Another application of the chain rule then shows
  \begin{align*}
    H_f
    \;=\;
    D(A_{\phi}\nabla f)\circ\phi^{-1}\cdot D\phi^{-1}
    \;=\;
    A_{\phi}(D\nabla f)\circ\phi^{-1}\cdot A_{\phi^{-1}}
    \;=\;
    A_{\phi}(H_f\circ \phi^{-1})A_{\phi}^{\trans}\;,
  \end{align*}
  which is the first statement in \eqref{eq:invariant:Laplacian}. Since the Laplacian is the trace of $H_f$, and
  the trace in invariant under change of basis, that implies
  \begin{equation*}
    \Delta f(\phi x)
    \;=\;
    \text{tr}(H_f(\phi x))
    \;=\;
    \text{tr}(A_{\phi}H_f(x)A_{\phi}^{\trans})
    \;=\;
    \text{tr}(H_f(x))
    \;=\;
    \Delta f(x)\;.\qedhere
  \end{equation*}
\end{proof}

\subsection{Flux through the tile boundary}
\label{sec:flux}

The next result is the key tool we use to prove self-adjointness of the Laplacian.
We have seen above that the gradient of a $\group$-invariant function transforms
under $\group$ according to \eqref{eq:equivariant:gradient}.
We now abstract from the specific function $\nabla f$,
and consider any vector field~${F:\Pi\rightarrow\mathbb{R}^n}$
that transforms like the gradient on the tile boundary, i.e.
\begin{equation}
  \label{periodic:condition:field}
  F(y)\;=\;A_{\phi}F(x)\qquad\text{ whenever }y=\phi x\;.
\end{equation}
For a polytope $\Pi$ with facets ${S_1,\ldots,S_k}$, we define the
normal field on the boundary as
\begin{equation*}
  \n_{\Pi}:\partial\Pi\rightarrow\mathbb{R}^n
  \qquad\text{ given by }\qquad
  \n_{\cF}(x)\;:=\;\begin{cases}
  \n_i & \text{ if }x\in S_i^{\circ}\\
  0 & \text{ otherwise }
  \end{cases}
\end{equation*}
where $\n_i$ is the unit normal vector of the facet $S_i$, directed outward with respect to $\Pi$.
In vector analysis, the projection~${F^{\trans}\n_{\Pi}}$ of a vector field onto the direction orthogonal
to~$\partial\Pi$ is known as the \kword{flux} of~$F$ through the boundary.
\begin{proposition}[Flux]
  \label{lemma:gradient:field}
  Let $\group$ be a crystallographic group that tiles $\mathbb{R}^n$ with a convex polytope $\Pi$.
  If a vector field ${F:\Pi\rightarrow\mathbb{R}^n}$ is integrable on $\partial\Pi$
  and satisfies \eqref{periodic:condition:field}, then
  \begin{equation*}
    \mint_{\partial\cF}F(x)^{\trans}\n_{\cF}(x)\vol_{n-1}(dx)\;=\;0    \;.
  \end{equation*}
\end{proposition}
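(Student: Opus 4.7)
The plan is to partition $\partial\Pi$ into finitely many convex polytopal pieces that are identified in pairs by elements of $\group$, and then show that each such pair contributes zero net flux. For each $\phi\in\group\setminus\braces{\Id}$, let $T_\phi := \Pi\cap\phi\Pi$. By discreteness of $\group$ and compactness of $\Pi$, only finitely many $T_\phi$ have positive $\vol_{n-1}$-measure, and these cover $\partial\Pi$ up to a $\vol_{n-1}$-null set, meeting only on lower-dimensional boundaries. The key observation is that $\phi^{-1}T_\phi = T_{\phi^{-1}}$, so $\phi\mapsto\phi^{-1}$ defines an involution that pairs each $T_\phi$ either with a distinct partner $T_{\phi^{-1}}$ or with itself. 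On the relative interior of each $T_\phi$, the outward normal field $\n_\Pi$ takes the single constant value $\n_\phi$, the outward unit normal of $\Pi$ at the supporting hyperplane of $T_\phi$.

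Next, I would relate the normals of paired pieces. Because $\phi^{-1}$ is an isometry that carries $\phi\Pi$ to $\Pi$ and $\Pi$ to $\phi^{-1}\Pi$, the normal $\n_\phi$ at $T_\phi$ (pointing from $\Pi$ into $\phi\Pi$) is transported by the linear part $A_{\phi^{-1}}=A_\phi^{\trans}$ to a vector at $T_{\phi^{-1}}$ pointing into $\Pi$, i.e., the \emph{inward} normal of $\Pi$ there. Hence $\n_{\phi^{-1}}=-A_\phi^{\trans}\n_\phi$. For $x$ in the relative interior of $T_{\phi^{-1}}$, the point $y:=\phi x$ lies in the relative interior of $T_\phi$ with $y\sim x$, so \eqref{periodic:condition:field} gives $F(y)=A_\phi F(x)$. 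Since $\phi$ restricts to an isometry between the hyperplanes supporting $T_{\phi^{-1}}$ and $T_\phi$, it preserves $\vol_{n-1}$, and a change of variables yields
\begin{equation*}
  \mint_{T_\phi} F(y)^{\trans}\n_\phi\,\vol_{n-1}(dy)
  \;=\;
  \mint_{T_{\phi^{-1}}} F(x)^{\trans}A_\phi^{\trans}\n_\phi\,\vol_{n-1}(dx)
  \;=\;
  -\mint_{T_{\phi^{-1}}} F(x)^{\trans}\n_{\phi^{-1}}\,\vol_{n-1}(dx)\;.
\end{equation*}
When $\phi\neq\phi^{-1}$, summing this identity over the two members of the pair $\braces{\phi,\phi^{-1}}$ produces exact cancellation.

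The remaining case $T_\phi=T_{\phi^{-1}}$ forces $\phi^2=\Id$ together with $\phi$ swapping the two half-spaces cut out by the supporting hyperplane of $T_\phi$ while fixing that hyperplane setwise; this is possible only if $\phi$ is the pure reflection about that hyperplane (any tangential translation component would obstruct $\phi^2=\Id$), and hence $\phi$ fixes $T_\phi$ pointwise. Then \eqref{periodic:condition:field} evaluated at $x\in T_\phi$ reads $F(x)=A_\phi F(x)$, so $F(x)$ lies in the $(+1)$-eigenspace of $A_\phi$---the tangent hyperplane to $T_\phi$---giving $F(x)^{\trans}\n_\phi=0$ and a vanishing flux over $T_\phi$. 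Adding all paired and self-paired contributions yields $\int_{\partial\Pi}F^{\trans}\n_\Pi\,\vol_{n-1}=0$.

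The main obstacle is the self-paired case: one must identify which group elements can swap $\Pi$ with an adjacent tile across a shared facet while fixing that facet setwise (necessarily a reflection), and then invoke the transformation law \eqref{periodic:condition:field} to conclude tangentiality of $F$. Everything else---the finite partition of $\partial\Pi$, the involutive pairing, and the change of variables---is bookkeeping that follows from discreteness of $\group$, the fact that $A_\phi$ is orthogonal, and convexity of $\Pi$.
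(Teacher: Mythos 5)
Your overall strategy---decomposing $\partial\Pi$ into pieces $T_\phi = \Pi\cap\phi\Pi$, pairing them via $\phi\mapsto\phi^{-1}$, and cancelling via the transformation law for $F$ together with the relation $\n_{\phi^{-1}}=-A_\phi^{\trans}\n_\phi$ between outward normals---is essentially the same as the paper's, which works with the relative interiors $(\Pi\cap\phi\Pi)^\circ$ (the subfacets of \cref{lemma:subfacets}). Your change-of-variables identity is correct and handles all distinct pairs.

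The treatment of the self-paired case $T_\phi=T_{\phi^{-1}}$, however, contains a genuine error. You correctly deduce $\phi^2=\Id$, but the claim that this ``is possible only if $\phi$ is the pure reflection about that hyperplane'' and hence ``$\phi$ fixes $T_\phi$ pointwise'' is false. Counterexample: take $n=2$, $\Pi$ a rectangle, and $\phi$ the $180^\circ$ rotation about the midpoint of one edge (see the subfacet example in \cref{appendix:proofs:flux}). Then $\phi^2=\Id$, $T_\phi=T_{\phi^{-1}}$ is the entire edge, and $\phi$ stabilizes the supporting line setwise and swaps the half-planes, but fixes only the midpoint. More generally, an involutive isometry swapping the two sides of a hyperplane $H$ need only restrict to \emph{some} involution of $H$, not the identity: in $\mathbb{R}^3$ with $H=\braces{z=0}$, the map $(x,y,z)\mapsto(-x,y,-z)$ has no translation part, satisfies $\phi^2=\Id$, swaps half-spaces, yet fixes only the $y$-axis. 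So you cannot conclude $F(x)=A_\phi F(x)$ on $T_\phi$, and the tangentiality argument collapses.

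The fix is already in your hands: the same change-of-variables identity you established for distinct pairs, applied with $T_\phi=T_{\phi^{-1}}$ and $\n_\phi=\n_{\phi^{-1}}$, reads
\begin{equation*}
\mint_{T_\phi}F(y)^{\trans}\n_\phi\,\vol_{n-1}(dy)\;=\;-\mint_{T_\phi}F(x)^{\trans}\n_\phi\,\vol_{n-1}(dx)\;,
\end{equation*}
which forces the flux over $T_\phi$ to vanish with no appeal to fixed points or to the specific type of involution $\phi$. This purely algebraic cancellation is precisely how the paper handles the self-paired case, and replacing your reflection argument with it makes your proof correct.
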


\begin{proof}
  See \cref{appendix:proofs:flux}.
\end{proof}

\subsection{The Sobolev space of invariant functions}
\label{sec:HG}

The proof of \cref{theorem:spectral} follows a well-established strategy in spectral theory:
The relevant spectral results hold for self-adjoint operators, and self-adjointness can only
be defined with respect to an inner product. Since the space $\C^2$ on which the Laplace
operator is defined is a Banach space, but has no inner product, one must hence first embed the problem into
a suitable Hilbert space. For the Laplacian, this is generally a first-order Sobolev space;
see \cref{sec:function:spaces} for a review of definitions, and \citet{Brezis:2011,Mazya:2011,McLean:2000} for more
on spectral theory and the general approach.

In our case, we proceed as follows:
Since invariant functions are completely determined by their values on $\Pi$, we can equivalently
solve the problem on the bounded domain $\Pi$ rather than the unbounded domain $\mathbb{R}^n$.
That gives us access to a number of results specific to bounded domains.
We also observe that the invariance constraint ${e=e\circ\phi}$ is a linear constraint---if two functions
satisfy it, so do their linear combinations---so the feasible set of this constraint is a vector space,
and we can encode the constraint by restriction to a suitable subspace.
We start with the vector space
\begin{equation}
  \label{definition:space:H}
  \mathcal{H}
  \;:=\;
  \braces{f|_{\Pi^{\circ}}\,|\,f:\mathbb{R}^n\rightarrow\mathbb{R}\text{ infinitely often differentiable and $\group$-invariant}}\;.
\end{equation}
The elements of $\mathcal{H}$ are hence infinitely often differentiable on $\Pi^{\circ}$, and their
continuous extensions to the closure $\Pi$ satisfy the periodic boundary condition \eqref{pbc}.
We then define the Sobolev space of candidate solutions as
\begin{equation*}
  \H_{\group}\;:=\;\text{ closure of $\mathcal{H}$ in }\H^1(\Pi^{\circ})\;,
\end{equation*}
equipped with the norm and inner product of ${\H^1(\Pi^\circ)}$.
As a closed subspace of a Hilbert space, it is a Hilbert space.

\subsection{The Laplace operator on $\H_{\group}$}

We now have to extend $\Delta$ to all elements of $\H_\group$. In general,
a linear operator $\Lambda$ on a closed subspace ${V\subset\H^1(\Pi^\circ)}$ is an
\kword{extension} of $\Delta$ to $V$ if it satisfies
\begin{equation}
  \label{extends}
  \Lambda f\;=\;\Delta f\qquad\text{ for all }f\in V\cap\C^2(\Pi^\circ)\;.
\end{equation}
The extended operator is \kword{self-adjoint} on $V$ if
\begin{equation*}
  \sp{\Lambda f,h}_{\H^1}\;=\;\sp{f,\Lambda h}_{\H^1}\qquad\text{ for all }f,h\in V\;.
\end{equation*}
To prove self-adjointness, one decomposes $\Lambda$ as
\begin{equation*}
  \sp{-\Delta f,h}_{\L_2}
  \,=\,
  (\text{integral over $\Gamma^{\circ}$ that is symmetric in $f$ and $h$})
  \;-\;
  (\text{integral over }\partial\Gamma)\;.
\end{equation*}
This is the Green identity alluded to in the introduction.
To make it precise, we need two quantities: One is the \kword{energy form}
or \kword{energy product}
\begin{equation}
\label{energy:form}
  a(f,h)\;:=\;\mint_{\Gamma}\nabla f(x)^{\trans}\nabla h(x)\,\vol_n(dx)\;.
\end{equation}
Since it only involves first derivatives, and both appear under the integral, it is
well-defined for any ${f,h\in\H^1(\Pi^{\circ})}$, and is hence a symmetric bilinear form
${a:\H^1\times\H^1\rightarrow\mathbb{R}}$. It is positive definite, since
\begin{equation}
  \label{energy:positive:semidefinite}
  a(f,h)\;=\;\msum_{i\leq n}\sp{\partial_i f,\partial_i h}_{\L_2}
  \quad\text{ and hence }\quad
  a(f,f)\;=\;\msum_{i\leq n}\|\partial_if\|_{\L_2}\;\geq\;0\;.
\end{equation}
Substituting the definition of $a$ into that of the $\H^1$ scalar product
in \eqref{eq:sobolev:sp} shows that
\begin{equation}
  \label{eq:H1:L2:a}
  \sp{f,h}_{\H^1}\;=\;\sp{f,h}_{\L_2}\,+\,a(f,h)
  \qquad\text{ for all }f,h\in\H^1(\Gamma^\circ)\;.
\end{equation}
The second quantity is the \kword{conormal derivative}
\begin{equation*}
  \partial_{\n}f(x)\;:=\;\nabla f(x)^{\trans}\n_\Pi(x)\;.
\end{equation*}
The precise statement of the decomposition above is then as follows.
\begin{fact}[Green's identity]
  \label{green:identity}
  If the domain $\Pi$ is sufficiently regular---in particular, if $\Pi$ is a convex polytope---then
  \begin{equation*}
  \sp{-\Lambda f,h}_{\L_2}
  \;=\;
  a(f,h)
  \,-\,
  \mint_{\partial\Gamma}\partial_{\n}f(x)h(x)\vol_{n-1}(dx)
  \qquad\text{ for }f,h\in\H^1(\intF)\;.
\end{equation*}
\end{fact}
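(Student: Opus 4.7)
The plan is to reduce Green's identity to the divergence theorem, first for smooth test functions and then extended by a density argument. Because $\Pi$ is a convex polytope its boundary is Lipschitz, which is the standard regularity needed to invoke the Sobolev-space machinery.

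First I would establish the identity for ${f,h\in\C^\infty(\overline{\Pi})}$. The product rule gives ${\nabla\cdot(h\nabla f)=h\Delta f+\nabla h^{\trans}\nabla f}$, and the divergence theorem on the Lipschitz domain $\Pi$ yields
\[
\mint_{\Pi}\nabla\cdot(h\nabla f)\,\vol_n(dx)\;=\;\mint_{\partial\Pi}h(x)\,\partial_{\n}f(x)\,\vol_{n-1}(dx)\;.
\]
Subtracting the volume integral of $\nabla h^{\trans}\nabla f$ from both sides and recognising $a(f,h)$ rearranges this immediately into the claimed identity, at the $\C^\infty$ level.

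Next I would extend to ${f,h\in\H^1(\intF)}$ by density. On Lipschitz domains $\C^\infty(\overline{\Pi})$ is dense in $\H^1(\intF)$; the trace map ${\tau:\H^1(\intF)\to\H^{1/2}(\partial\Pi)\hookrightarrow\L_2(\partial\Pi)}$ is bounded; and both the $\L_2$ inner product and the bilinear form $a$ are continuous in the $\H^1$ norm by Cauchy--Schwarz. Picking smooth approximants $f_k\to f$ and $h_k\to h$ in $\H^1$ and passing to the limit term by term carries the identity to all of $\H^1(\intF)$.

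The main obstacle is making sense of $\sp{-\Lambda f,h}_{\L_2}$ and of $\partial_{\n}f$ for a general ${f\in\H^1(\intF)}$, since neither $\Delta f$ nor the normal trace of $\nabla f$ is a priori an $\L_2$ object. The standard resolution is to first prove the classical identity under the stronger assumption ${f\in\H^2(\intF)}$, where $-\Delta f\in\L_2$ and ${\partial_{\n}f\in\L_2(\partial\Pi)}$ are bona fide, and then interpret the statement on all of $\H^1$ either as defining the extension $\Lambda$ via the bilinear form $a$ in the sense of Lax--Milgram, or by reading the boundary integral through the duality pairing between $\H^{1/2}(\partial\Pi)$ and $\H^{-1/2}(\partial\Pi)$. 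The convexity of $\Pi$ enters only in guaranteeing a Lipschitz boundary, which is exactly what validates the trace theorem and density statements used above; standard references are \citet{Mazya:2011,McLean:2000,Brezis:2011}.
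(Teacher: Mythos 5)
The paper does not supply a proof of this statement: it is labeled a \emph{Fact} and deferred to the PDE literature, effectively to \citet{McLean:2000} (Lemma 4.3 and Theorem 4.4) and \citet{Brezis:2011}. So there is no internal argument to compare against; what you have written is a reconstruction of the standard textbook proof.

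Your sketch is broadly on the right track, with one important caveat. The derivation on $\C^\infty(\overline{\Pi})$ via product rule and divergence theorem is correct, and the signs check out. But the intermediate claim that ``both the $\L_2$ inner product and the bilinear form $a$ are continuous in the $\H^1$ norm by Cauchy--Schwarz'' is wrong for the term $\sp{-\Lambda f,h}_{\L_2}$: Cauchy--Schwarz controls it by $\|\Delta f\|_{\L_2}\|h\|_{\L_2}$, and $\|\Delta f\|_{\L_2}$ is not bounded by $\|f\|_{\H^1}$. A naive density limit over smooth $f_k\to f$ in $\H^1$ therefore does not close; the same defect affects the boundary integral, since $\partial_\n f_k$ need not converge in $\L_2(\partial\Pi)$ when $f_k$ converges only in $\H^1$. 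Your final paragraph, however, already names the correct fix and should be read as \emph{replacing} the density step rather than supplementing it: establish the identity on $\H^2(\intF)$, where $-\Delta f\in\L_2(\intF)$ and $\partial_\n f\in\L_2(\partial\Pi)$ are genuine, and then extend to $f\in\H^1(\intF)$ by reading $\sp{-\Lambda f,h}$ as the $\H^1/(\H^1)^*$ duality pairing --- consistent with how $\Lambda$ is defined as an operator $\H^1(\intF)\to\H^1(\intF)^*$ in the paper --- and $\partial_\n f$ as an element of $\H^{-1/2}(\partial\Pi)$ paired with the trace of $h$ in $\H^{1/2}(\partial\Pi)$. In McLean's treatment the first Green identity is in fact used to \emph{define} the conormal derivative in this weak sense, so the identity and the definition arrive together. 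With the last paragraph understood this way, your argument is sound and matches the standard treatment the paper cites.
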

Informally, this shows that $\Delta$ ``behaves self-adjointly'' in the interior
of $\Pi$, where derivatives can be computed in all directions around a point.
At points on $\partial\Pi$, the boundary truncates derivatives in some direction,
and that requires a correction term~$\partial_{\n}f$.\nolinebreak
\begin{theorem}[Properties of the Laplacian]
  \label{theorem:laplacian}
  Let $\group$ be a crystallographic group that tiles~$\mathbb{R}^n$ with a convex polytope $\Pi$.
  Then $\Delta$ has a unique extension to a linear operator $\Lambda$ on~$\H_\group$.
  This operator is self-adjoint and continuous on $\H_\group$, and satisfies
  \begin{equation}
    \label{eq:theorem:laplacian}
    \text{\rm(i) }\;\sp{-\Lambda f,h}_{\L_2}\;=\;a(f,h)
    \qquad\text{ and }\qquad
    \text{\rm(ii) }
    \sp{-\Lambda f,f}_{\H^1}\;\geq\;\|f\|^2_{\H^1}-\|f\|^2_{\L_2}
\end{equation}
  for all ${f,h\in\H_\group}$.
\end{theorem}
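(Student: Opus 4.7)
The plan is to verify identity (i) on the smooth core $\mathcal{H}$ using Green's identity together with the flux cancellation, then extend $\Lambda$ to all of $\H_\group$ via the form $a$, and finally deduce self-adjointness and (ii) algebraically from (i).

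First I would establish (i) on $\mathcal{H}$. For $f,h\in\mathcal{H}$ both are smooth on $\Pi^\circ$, so \cref{green:identity} gives
\[
\sp{-\Delta f, h}_{\L_2} \;=\; a(f,h) \;-\; \mint_{\partial\Pi}\partial_{\n}f(x)\,h(x)\,\vol_{n-1}(dx)\;.
\]
The key step is to show that the boundary integral vanishes. Set $F:=h\,\nabla f$ as a vector field on $\Pi$. Invariance of $h$ gives $h\circ\phi=h$, and \cref{lemma:differentials} gives $\nabla f\circ\phi = A_\phi\nabla f$. Multiplying,
\[
F(\phi x) \;=\; h(\phi x)\,\nabla f(\phi x) \;=\; h(x)\,A_\phi \nabla f(x) \;=\; A_\phi F(x)\;,
\]
so $F$ satisfies the transformation rule \eqref{periodic:condition:field}. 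The flux result \cref{lemma:gradient:field} then forces $\int_{\partial\Pi} F^{\trans}\n_\Pi\,\vol_{n-1}(dx)=0$, which is precisely the boundary term. Hence (i) holds on $\mathcal{H}$.

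Next I would extend $\Lambda$ to $\H_\group$. The form $a$ is symmetric, non-negative by \eqref{energy:positive:semidefinite}, and bounded on $\H^1(\Pi^\circ)\times\H^1(\Pi^\circ)$ since $|a(f,h)|\le\|f\|_{\H^1}\|h\|_{\H^1}$. Restricted to $\H_\group\times\H_\group$, it is also closed: the identity \eqref{eq:H1:L2:a} shows that the form norm of $a$ plus $\|\cdot\|_{\L_2}$ equals the $\H^1$ norm, so closedness follows from completeness of $\H_\group$. By the Friedrichs representation theorem (\cref{sec:spectral}), there is a unique self-adjoint operator $\Lambda$ on $\L_2(\Pi^\circ)$ with form domain $\H_\group$ such that $\sp{-\Lambda f,h}_{\L_2}=a(f,h)$ for all $h\in\H_\group$, valid for every $f$ in the operator domain. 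Since $\mathcal{H}\subset\H_\group$ is dense (by definition of $\H_\group$) and Part 1 shows that $-\Delta$ already represents $a$ on $\mathcal{H}$, this $\Lambda$ is the unique extension of $\Delta$ consistent with $a$. Continuity of $\Lambda$ on $\H_\group$ reduces to the bound on $a$, which makes $(f,h)\mapsto\sp{-\Lambda f,h}_{\L_2}$ continuous on $\H_\group\times\H_\group$.

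Finally, self-adjointness is immediate from symmetry of $a$: for $f,h\in\H_\group$,
\[
\sp{-\Lambda f, h}_{\L_2} \;=\; a(f,h) \;=\; a(h,f) \;=\; \sp{f,-\Lambda h}_{\L_2}\;.
\]
For (ii), I expand with \eqref{eq:H1:L2:a} and apply (i) twice, first to $f$ and then with $f$ replaced by $-\Lambda f$ (which lies in the form domain):
\[
\sp{-\Lambda f,f}_{\H^1} \;=\; \sp{-\Lambda f,f}_{\L_2} + a(-\Lambda f,f) \;=\; a(f,f) + \|\Lambda f\|_{\L_2}^2\;.
\]
Since $a(f,f)=\|f\|_{\H^1}^2-\|f\|_{\L_2}^2$ by \eqref{eq:H1:L2:a} with $u=v=f$ and $\|\Lambda f\|_{\L_2}^2\geq 0$, inequality (ii) follows. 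The main obstacle is the extension step: packaging the form identity into a bona fide self-adjoint operator requires the Friedrichs framework. Crucially, that machinery hinges entirely on the boundary cancellation in Part 1 — without the transformation identity $F\circ\phi=A_\phi F$ and the flux lemma, the representing operator of $a$ would not be $\Delta$, and the whole scheme would fail.
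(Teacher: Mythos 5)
Your Part 1 is exactly the paper's argument for identity (i) on the smooth core: apply Green's identity and kill the boundary term by applying the flux lemma (\cref{lemma:gradient:field}) to the field ${F=h\nabla f}$, which transforms as \eqref{periodic:condition:field}; this is the $\L_2$ identity \eqref{L2:energy} of \cref{lemma:symmetric:green:identities}. The gaps are in Parts 2 and 3. The paper does not use a Friedrichs construction: since $\Pi$ is a convex polytope, hence a Lipschitz domain, $\Delta$ already extends uniquely to a \emph{bounded} operator ${\Lambda:\H^1(\Pi^\circ)\rightarrow\H^1(\Pi^\circ)^*}$ (\cref{fact:extended:laplacian}), and restricting to the closed subspace $\H_\group$ gives the continuous extension; all identities then transfer from the dense core $\mathcal{H}$ by continuity. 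Your Friedrichs operator is instead an unbounded operator in $\L_2(\Pi^\circ)$ whose operator domain is in general a proper subset of $\H_\group$; as you concede yourself, (i) is then only obtained ``for $f$ in the operator domain'', which is weaker than the theorem's claim (all ${f,h\in\H_\group}$), and ``continuity of $\Lambda$ on $\H_\group$'' does not follow from the boundedness of the form $a$ --- the Friedrichs extension is not a bounded operator defined on all of $\H_\group$. (Also, no Friedrichs theorem appears in the paper's toolbox; \cref{sec:spectral} supplies \cref{fact:extended:laplacian} precisely to avoid it.)

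Second, the paper's self-adjointness and coercivity are with respect to the $\H^1$ inner product (see the definition preceding the theorem), and the paper derives them from a second symmetric Green identity, ${\sp{-\Delta f,h}_{\H^1}=a(f,h)+\sum_i a(\partial_if,\partial_ih)}$ in \eqref{H1:energy}, which needs the commutation ${\Delta\partial_j=\partial_j\Delta}$ and a \emph{second} application of the flux lemma, this time to ${F=(H_f)\nabla h}$ using the Hessian transformation law of \cref{lemma:differentials}. Your substitute --- $\L_2$-symmetry of $\Lambda$ plus the computation ${\sp{-\Lambda f,f}_{\H^1}=a(f,f)+\|\Lambda f\|^2_{\L_2}}$ --- breaks down as written: the term $a(-\Lambda f,f)$ only makes sense if $\Lambda f$ lies in $\H^1$, and your parenthetical claim that $-\Lambda f$ ``lies in the form domain'' is unjustified and generally false for the Friedrichs domain. (On $\mathcal{H}$ your identity is in fact correct, because $\Delta$ maps $\mathcal{H}$ into itself, and there it is consistent with \eqref{H1:energy} since the boundary cancellation forces ${\sum_i a(\partial_if,\partial_if)=\|\Delta f\|^2_{\L_2}}$; but to pass from $\mathcal{H}$ to all of $\H_\group$ you need precisely the boundedness of $\Lambda$ on $\H_\group$ that your construction does not provide.) So the missing ingredients are the bounded extension on the Lipschitz domain and the $\H^1$-level Green identity; with those in place the argument collapses to the paper's proof.
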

The proof uses the flux property to show that crystallographic symmetry
makes the boundary term cancel. Since $a$ is symmetric, that makes $\Lambda$ self-adjoint.
In the parlance of elliptic differential equations, (\ref{eq:theorem:laplacian}ii) says
that $\Lambda$ is \kword{coercive} on $\H_\group$ (see \citep{McLean:2000}).
\begin{proof} See \cref{appendix:proofs:laplacian}.
\end{proof}

\subsection{Linear representations from a nonlinear ansatz}
\label{sec:linear-reps-revisited}

The properties of Laplace operators lead naturally to a class of numerical approximations
known as Galerkin methods (e.g., \citet{Braess:2007}). Using the embedding map
$\rho$, we can derive a Galerkin method that can be used to compute the Fourier basis functions
in \cref{theorem:spectral}---that is, we can use the nonlinear representation approach in the
numerical approximation of the linear representation. The Galerkin method
can be more accurate than the spectral approach in \cref{algorithm:spectral},
and was used to render Figures \ref{fig:fourier:p1}, \ref{fig:fourier:p6}
and \ref{fig:fourier:I23}.

Galerkin methods posit basis functions
${\chi_1,\ldots,\chi_m}$ and approximate an infinite dimensional function space by the finite-dimensional subspace
${\text{span}\braces{\chi_1,\ldots,\chi_m}}$.
In our case, we approximate solutions $e$ of \eqref{sturm:liouville} by approximating their
restrictions $e|_\Pi$. We hence need functions ${\chi_i:\Pi\rightarrow\mathbb{R}}$.
W we start with functions ${\tilde{\chi}_i:\mathbb{R}^N\rightarrow\mathbb{R}}$, and set
${\chi_i:=\tilde{\chi}_i\circ\rho}$. We then assume ${e|_{\Pi}}$ of \eqref{sturm:liouville} is in the
span, and hence of the form
\begin{equation}
  \label{galerkin:ansatz}
  e|_{\Pi}\;=\;\msum_{i\leq m}c_i\chi_i\;.
\end{equation}
If $e$ solves the eigenvalue problem \eqref{sturm:liouville}, $e|_{\Pi}$ satisfies
\begin{align*}
  \sp{-\Delta e|_{\Pi},\chi_j}_{\L_2}\;=\;\lambda\sp{e|_{\Pi},\chi_j}_{\L_2}\qquad\text{ for all }j\leq m\;.
\end{align*}
Applying \eqref{eq:theorem:laplacian} and substituting in \eqref{galerkin:ansatz} shows
\begin{equation*}
  a(e|_{\Pi},\chi_j)\,=\,\lambda\sp{e|_{\Pi},\chi_j}_{\L_2}
  \quad\text{ and }\quad
  \msum_{i}c_ia(\chi_i,\chi_j)\,=\,\lambda\msum_ic_i\sp{\chi_i,\chi_j}_{\L_2}\,.
\end{equation*}
If we define matrices with entries ${A_{ij}:=a(\chi_i,\chi_j)}$
and ${B_{ij}:=\sp{\chi_i,\chi_j}_{\L_2}}$, that becomes
\begin{equation*}
  Ac\;=\;\lambda Bc\qquad\text{ where }c=(c_1,\ldots,c_m)^{\trans}\;.
\end{equation*}
The entries of $A$ and $B$ can be computed with off-the-shelf cubature
methods, and we can then solve for the pair $(\lambda,c)$.
\begin{remark}
  (a) If $\rho$ and the basis functions are implemented with JAX \citep{jax2018github} or a similar
  automatic differentiation tool, the gradients in \eqref{energy:form} are available, which avoids finite
  difference approximation and explicit computation of second derivatives.\\[.2em]
  (b) Neumann boundary conditions for reflections (see \cref{remark:neumann}) can be
  enforced using the methods of \citet{golub1973some}.\\[.2em]
  (c) The basis functions $\tilde{\chi}_i$ can be almost any basis on $\mathbb{R}^N$.
  Figures \ref{fig:fourier:p1}--\ref{fig:fourier:I23} were rendered by placing points ${x_1,x_2,\ldots}$ uniformly on $\Pi$,
  and centering radial basis functions at the points ${\rho(x_i)}$ in $\mathbb{R}^N$.
\end{remark}

\section{Related work and additional references}
\label{sec:related:work}

{\noindent\bf In machine learning}.
There has been substantial work on group invariance and equivariance in machine learning, with a focus on finite and compact groups.
Most salient has been work on approximate translation invariance and equivariance in convolutional neural networks for images \cite{lecun1989handwritten,krizhevsky2017imagenet} and speech \cite{abdel2014convolutional}, although this work has not been framed in a group-theoretic way.
To our knowledge the earliest explicit consideration of compact and finite group structure in machine learning was from a Fourier perspective by \citet{kondor2008group}; this was primarily in the context of Hilbert-space formalisms of learning.
The current perspective on compact and finite group equivariance in deep learning arose largely from \citet{cohen2016group}.
There has been widespread application of machine learning models when group invariance or equivariance is desired, e.g., permutation invariance for sets \cite{zaheer2017deep} and equivariance for neural auction design \cite{rahme2021permutation}.
In the natural sciences, rotation invariance has been used for astronomy \cite{dieleman2015rotation} and $E(3)$ equivariance has proved important for molecular applications \cite{batzner20223}.
Permutation equivariance of transformer architectures plays a crucial role in large language model \cite{vaswani2017attention}.
\\[.5em]
{\noindent\bf In crystallography}.
  Crystallographers have completely described the 17 two-dimensional and 230 three-dimensional
  crystallgraphic groups and various tilings they describe,
  and tabulated many of their properties \citep{Hahn:2011}.
  The emphasis in this work differs somewhat from that in mathematics---in particular,
  work in crystallography emphasizes polytopes $\Pi$ that occur in crystal structures
  (and which are not necessarily exact in the terminology used in \cref{theorem:embedding}),
  whereas more abstract work in geometry tends to work with Dirichlet domains or other exact tilings.
  A long line of work in the context of X-ray crystallography modifies the matrices
  that occur in fast Fourier transforms (FFTs) to speed up computation if a crystallographic symmetry
  is present in the data. This starts with the work of
  \citet{Bienenstock:Ewald:1962} and \citet{TenEyck:1973}, see also \citet{An:Cooley:Tolimieri:1990}.
  The introduction of \citet{Seguel:Burbano:2003} gives an overview.
  This work does not attempt to derive invariant Fourier bases.
\\[.5em]
{\noindent\bf In Fourier and PDE analysis}.
As we have already explained in some detail, the special case of \cref{theorem:spectral}
for ${\Pi=[0,1]^n}$ and ${\group=\mathbb{Z}^n}$ yields
the Fourier transform. For this problem, the periodic boundary condition can be
replaced by a Neumann condition, and spectral problems with Neumann conditions
are standard material in textbooks \citep{Brezis:2011,Lax:2002}.
For shifts that are not axis-parallel, the
periodic boundary condition is known
as a \kword{Born-von Karman boundary condition} \citep{Ashcroft:Mermin:1976}. We are not aware of
extensions to crystallographic groups.
An introduction to the PDE techniques used in our proofs
can be found in \citet{Brezis:2011}. The conditions imposed there are too restrictive
for our problems, however; a treatment general enough to cover all results we use
is given by \citet{McLean:2000}.
\\[.5em]
{\noindent\bf In geometry}.
\citet[e.g.,][]{Thurston:1997} coined the term orbifold in the
1970s. Commonly cited references include \citet{Scott:1983,Bonahon:Siebenmann:1985,Thurston:1997};
\citet{Apanasov:2000} has a detailed bibliography.
These all focus on general groups, however, for which the theory is much harder than in our case.
The quotient space structure of crystallographic groups was already understood much earlier
by the G\"{o}ttingen and Moscow schools \citep{Vinberg:Shvarsman:1993}.
A readable introduction to isometry groups and their quotients is given by \citet{Bonahon:2009}.
The comprehensive account of \citet{Ratcliffe:2006} is
more demanding, but covers all results needed in our proofs.
\citet{Vinberg:Shvarsman:1993} cover the
geometric aspects of crystallographic groups. \citet*{conway2016symmetries}
explain the geometry of orbifolds heuristically, with many illustrations.

\section{Some open problems}
\label{sec:open:problems}

Our approach raises a range of further questions well beyond the scope of the present paper,
including in particular those concerning numerical and statistical accuracy. We briefly
discuss some aspects of this problem.
\\[.5em]
{\bf Linear representation}.
Suppose we represent a $\group$-invariant continuous function $f$
by evaluating the generalized Fourier basis in \cref{theorem:spectral}
using the spectral algorithm in \cref{sec:spectral:algorithms}.
The algorithm returns numerical approximations ${\hat{e}_1,\hat{e}_2,\ldots}$ of the
basis functions. We may then expand $f$ as
\begin{equation*}
  f\;\approx\;\msum_{i=1}^m c_i\hat{e}_i\;.
\end{equation*}
There are three principal sources of error in this representation:
\begin{enumerate}[itemsep=-.2em]
\item The truncation error, since $m$ is finite.
\item Any error incurred in computation of the coefficients $c_i$.
\item The error incurred by approximating the actual basis functions $\ef_i$ by $\hat{e}_i$.
\end{enumerate}
The truncation error (1) concerns the question how well the vector space ${\text{span}\braces{\ef_1,\ldots,\ef_m}}$
approximates the space $\C_\group$ or $\L_2(\Pi)$. This problem is
studied in approximation theory. Depending on the context, one may choose the first $m$ basis vectors
(a strategy called ``linear approximation'' in approximation theory), or greedily choose those $m$ basis
vectors that minimize some error measure (``nonlinear approximation''), see \citet{DeVore:1998}.
Problem (2) depends on the function $f$, and on how it is represented computationally. If $f$ must
itself be reconstructed from samples, the coefficients are themselves estimators and incur statistical
errors.

The error immediately related to our method is (3), and
for the method of \Cref{sec:linear} depends on how well the graph Laplacian used
in \cref{sec:spectral:algorithms} approximates the Laplacian~$\Delta$.
This problem has been studied in a number of fields, including machine learning in the context of dimensionality
reduction \cite{belkin2003laplacian} and numerical mathematics in the context of
homogenous Helmholtz equations \citep{harari1995accurate}, and is the subject of a rich literature
\cite{lafon2004diffusion, belkin2008towards, singer2006graph, hein2005graphs, dunson2021spectral, garcia2020error}.
Available results show that, as ${\varepsilon\rightarrow 0}$ in the $\varepsilon$-net, the matrix $L$ converges to $\Delta$,
where the approximation can be measures in different notions of convergence, in particular pointwise and spectral convergence.
The cited results all concern the manifold case. We are not aware of similar results for orbifolds.

For the method of \Cref{sec:linear-reps-revisited}, the error depends largely on the choice of basis in~$\mathbb{R}^N$ and the accuracy of the numerical integrals, as well as the orbifold map approximation itself (see below).
Error analysis of the Rayleigh-Ritz method has a long history, see, e.g., \citet{weinberger1952error, wendroff1965bounds, weinberger1974variational}.
\\[.5em]
{\bf Nonlinear representation}.
If we define a $\group$-invariant statistical or machine learning model on $\mathbb{R}^n$ by factoring it through an
orbifold, one may ask approximation questions of a more statistical flavor:
Suppose we define a class~${\mathcal{H}=\braces{h_\theta|\theta\in T}}$ of functions~${h_\theta:\mathbb{R}^N\rightarrow\mathbb{R}}$
on the embedding space $\mathbb{R}^N$, with some parameter space $T$.
We then define a class $\mathcal{F}$ of $\group$-invariant functions on $\mathbb{R}^n$ as
\begin{equation*}
  \mathcal{F}\;:=\;\braces{f_\theta|\theta\in T}
  \quad\text{ where }\quad
  f_\theta\;:=\;h_\theta\circ\rho\;.
\end{equation*}
Depending on the context, we may think of the functions $f_\theta$ e.g., as neural networks or regressors.
The task is then to conduct inference, i.e., to compute a point estimate $\hat{\theta}$ of $\theta$
(say by maximum likelihood estimation or empirical risk minimization), or to compute a posterior
on $T$ in a Bayesian setup.
Since $\mathcal{H}$ and $\mathcal{F}$ share the same parameter space, any such inference task
can be ``pushed forward'' forward to the embedding space, that is,
\begin{equation*}
  \text{inference under }\mathcal{F}\text{ given }x_1,\ldots,x_n
  \;=\;
  \text{inference under }\mathcal{H}\text{ given }\rho(x_1),\ldots,\rho(x_n)
\end{equation*}
The error can again be separated into components:
\begin{enumerate}[itemsep=-.2em]
\item The statistical error associated with fitting $\braces{h_\theta|\theta\in T}$.
\item The ``forward distortion'' introduced by the map ${x_j\mapsto\rho(x_j)}$.
\item The ``backward distortion'' introduced by the map $h_\theta\mapsto h_\theta\circ\rho$.
\end{enumerate}
Problem (1) reduces to the statistical properties of $\mathcal{H}$, and depends on both the
model and the chosen inference method. Problem (2) and (3), however, raise a number of new questions:
The map $\rho$ is, by \cref{theorem:embedding}, bijective (which means it does not introduce
identifiability problems) and continuous. As the proof of \cref{result:kernel:compactness} shows,
it also preserves density properties of certain function spaces, which can be thought of
as a qualitative approximation result. Quantitative results are different matter:
To bound the effect of transformations on
statistical errors typically requires a stronger property than continuity, such as differentiability
or at least a Lipschitz property. In results on manifold learning, the curvature of
$\Omega$ often plays an explicit role. Orbifolds introduce a further challenge,
since smoothness properties fail at the
tips and edges introduced by points with non-trivial stabilizers.
On the other hand, non-differentiabilities of crystallographic orbifolds
have lower-bounded opening angles \citep{Thurston:1997}---note the tip of the cone in
\cref{fig:cone}, for example, is not a cusp---so it may be possible to mitigate these problems.

\subsection*{Acknowledgements}
The authors would like to thank Elif Ertekin and Eric Toberer for valuable discussions.
RPA is supported in part by NSF grants IIS-2007278 and OAC-2118201. PO is supported by
the Gatsby Charitable Foundation.

\bibliography{references}

{\small
\vspace{8em}
{\noindent\sc Department of Computer Science\\[-.1em]
  Princeton University}\\[-.1em]
\url{https://www.cs.princeton.edu/~rpa}
\\[1em]
{\noindent\sc Gatsby Computational Neuroscience Unit\\[-.1em]
  University College London}\\[-.1em]
\url{https://www.gatsby.ucl.ac.uk/~porbanz}
}

\newpage
\appendix

{\centering \bf Appendix \par}
\vspace{1em}
The first three sections of this appendix provide mathematical background on isometries
(\cref{sec:isometries}), function spaces and smoothness (\cref{sec:function:spaces}), orbifolds
(\cref{sec:orbifolds}), and spectral theory (\cref{sec:spectral}). The proof of the Fourier representation is subdivided into three parts:
We first prove of the flux property, \cref{lemma:gradient:field}, in \cref{appendix:proofs:flux}, and
\cref{theorem:laplacian} on self-adjointness of the Laplacian in \cref{appendix:proofs:laplacian}.
Using these results, we then prove the Fourier representation in \cref{appendix:proofs:spectral}.
The proof of the embedding theorem (\cref{theorem:embedding}) follows in \cref{appendix:proofs:embedding}.
\cref{appendix:proofs:kernels} collects all proofs on kernels and Gaussian processes.

\section{Background I: Isometries of Euclidean space}
\label{sec:isometries}

Isometries are invertible functions that preserve distance. To define
an isometry between two sets $V$ and $W$, both must be equipped with metrics,
say $d_V$ and $d_W$.
A map~${\phi:\xspace\rightarrow\yspace}$ is then an \kword{isometry} if it is one-to-one
and satisfies
\begin{equation*}
  d_W(\phi(v_1),\phi(v_2))\;=\;d_V(v_1,v_2)
  \qquad\text{ for all }v_1,v_2\in V\;.
\end{equation*}
Since this implies $\phi$ is Lipschitz, isometries are always continuous.
If ${W=V}$, then $\phi$ is necessarily bijective.
An isometry of $\mathbb{R}^n$ is a bijection ${\phi:\mathbb{R}^n\rightarrow\mathbb{R}^n}$ that satisfies
\begin{equation*}
  d_n(\phi x,\phi y)\;=\;d_n(x,y)\qquad\text{ for all }x,y\in\mathbb{R}^n\;.
\end{equation*}
Identity \eqref{eq:euclidean:isometry} shows that every isometry can be uniquely
represented as an orthogonal transformation followed by a shift.
Loosely speaking, an isometry may shift, rotate, or flip $M$, but cannot change its shape or volume.
Recall that a set $\group$ of functions ${\mathbb{R}^n\rightarrow\mathbb{R}^n}$
is a \kword{group} if
it contains the identity map $\Id$, and if ${\phi,\psi\in\group}$ implies ${\phi\circ\psi\in\group}$
and ${\phi^{-1}\in\group}$.
The set of all isometries of $\mathbb{R}^n$ forms a group, called the
\kword{Euclidean group} of order $n$.

\subsection{More on crystallographic groups}

{\noindent\bf Representation by shifts and orthogonal transformations}.
Since every isometry can be decomposed into an orthogonal transformation
and a shift according to \eqref{eq:euclidean:isometry}, every crystallographic group $\group$ has two natural subgroups:
One is the group
\begin{equation*}
  \group_o\;:=\;\braces{\phi\in\group\,|\,\phi(x)=Ax\text{ for some }A\in\mathbb{O}_n}\;=\;\group\cap\mathbb{O}_n
\end{equation*}
of purely orthogonal transformations. This is an example of a \kword{point group},
since all its elements have a common fixed point (namely the origin).
It is always finite: Fix any $x$ on the unit sphere in $\mathbb{R}^n$. Then ${\phi(x)}$ is also on the sphere for every ${\phi\in\group_o}$, since ${A_\phi}$ is orthogonal. However, discreteness requires there can only be finitely many such points~$\phi(x)$ on the sphere.
The other is the group of pure shifts,
\begin{equation*}
  \group_t\;:=\;\braces{\phi\in\group\,|\,\phi(x)=x+b\text{ for some }b\in\mathbb{R}^n}\;.
\end{equation*}
One can show there are
linearly independent vectors ${b_1,\ldots,b_n}$ such that
\begin{equation*}
  \group_t\;:=\;\braces{x\mapsto x+b\,|\,b=a_1b_1+\ldots+a_nb_n\text{ for }a_1,\ldots,a_n\in\mathbb{Z}}\;.
\end{equation*}
Thus, the generating set for a crystallographic group on $\mathbb{R}^n$ always includes $n$ linearly independent shifts.

\subsection{Equivalence to definitions in the literature}
\label{remark:definition}

Our definition of a crystallographic group in \cref{sec:definitions} differs
from those in the literature---we have chosen it for simplicity, but must verify it is equivalent.
There are two standard definitions of crystallographic groups: Perhaps the most common one, used for example by
\citet{Thurston:1997}, is as a discrete group of isometries for which~${\mathbb{R}^n/\group}$ is compact in the quotient topology.
Another is as a group of isometries of $\mathbb{R}^n$ such that~${\mathbb{R}^n/\group}$ has finite volume when identified with a subset of Euclidean space
\citep{Vinberg:Shvarsman:1993}. These are known to be equivalent \citep[][Corollary of Theorem 1.11]{Vinberg:Shvarsman:1993}.
Our definition is equivalent to both:
\begin{lemma}
  A group $\group$ is crystallographic in the sense of \cref{sec:definitions}
  if and only if it is a discrete group of isometries of $\mathbb{R}^n$ such
  that ${\mathbb{R}^n/\group}$ is compact.
\end{lemma}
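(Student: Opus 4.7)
The plan is to handle the two directions separately. The forward direction is essentially bookkeeping from facts already established in the excerpt, while the backward direction requires explicitly constructing a fundamental polytope via a Dirichlet domain.

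For $(\Rightarrow)$, assume $\group$ tiles $\mathbb{R}^n$ with an $n$-dimensional convex polytope $\Pi$. By definition $\group$ is a group of isometries. Discreteness is the calculation already performed in the ``Basic properties'' subsection: since $\Pi$ is $n$-dimensional and convex, $\Pi^\circ$ contains a Euclidean ball of some radius $\varepsilon>0$, and the non-overlap condition on tiles forces $d_n(\phi x,\psi x)>\varepsilon$ for distinct $\phi,\psi\in\group$ and $x\in\Pi^\circ$, which is \eqref{discrete}. Compactness of $\mathbb{R}^n/\group$ is then the second statement of Fact~\ref{fact:compact:quotient}.

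For $(\Leftarrow)$, assume $\group$ is a discrete group of isometries with compact quotient, and construct a Dirichlet domain as in Section~\ref{sec:nonlinear}. The plan is to pick a basepoint $x\in\mathbb{R}^n$ not fixed by any non-identity element of $\group$ (such points exist because each $\Fix(\phi)$ for $\phi\neq\Id$ is a proper affine subspace, hence has measure zero, and $\group$ is countable by discreteness), form $\mathbf{D}(x):=\bigcap_{\phi\in\group\setminus\{\Id\}} R_\phi(x)$, and verify four things: $\mathbf{D}(x)$ is bounded, only finitely many half-spaces $R_\phi(x)$ are needed to define it, it is $n$-dimensional, and $\group$ tiles $\mathbb{R}^n$ with it in the sense of our definition. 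The third claim is easy: since $x$ is not a fixed point, discreteness gives $\inf_{\phi\neq\Id}d_n(x,\phi x)>0$, so a small Euclidean ball around $x$ lies in every $R_\phi(x)$ and hence in $\mathbf{D}(x)^\circ$. The fourth is the standard Dirichlet-domain argument: every $y\in\mathbb{R}^n$ achieves $\min_{\psi\in\group} d_n(y,\psi x)$ for some $\psi$ (by discreteness), and $\psi^{-1}y\in\mathbf{D}(x)$, while distinct open translates cannot overlap by strict inequality in the defining half-space conditions.

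The main obstacle is the first two claims, where compactness of $\mathbb{R}^n/\group$ enters decisively. For boundedness, note that $\mathbf{D}(x)$ meets every $\group$-orbit, so its image in $\mathbb{R}^n/\group$ is all of $\mathbb{R}^n/\group$ and therefore has finite $d_\group$-diameter, say $D$; since $d_\group(q(y),q(x))\leq d_n(y,x)$, it follows that every $y\in\mathbf{D}(x)$ satisfies $d_n(y,\psi x)\leq D$ for some $\psi\in\group$, and combined with $y\in R_\Id\cap R_\psi$ this yields a global Euclidean bound $\mathbf{D}(x)\subseteq\bar B_{2D}(x)$. For finiteness of the relevant half-spaces, any $\phi$ whose bounding hyperplane actually meets $\mathbf{D}(x)$ must satisfy $d_n(x,\phi x)\leq 4D$, and discreteness allows only finitely many such $\phi$; the remaining $R_\phi(x)$ already contain $\mathbf{D}(x)$ and can be dropped from the intersection. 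Thus $\mathbf{D}(x)$ is the intersection of finitely many closed half-spaces and is bounded, so it is a convex polytope, and one may now invoke Ratcliffe's 6.7.4 (or verify directly by the argument sketched above) to conclude that $\group$ tiles $\mathbb{R}^n$ with $\mathbf{D}(x)$, completing the equivalence.
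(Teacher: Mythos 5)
Your forward direction is exactly the paper's: discreteness from the ball-packing observation in \cref{sec:definitions} and compactness from \cref{fact:compact:quotient}. The backward direction, however, is genuinely different. The paper disposes of it in one line by citing \citet[Theorem 2.5]{Vinberg:Shvarsman:1993}, which asserts that a fundamental region for a discrete cocompact group of isometries can always be chosen to be a convex polytope; you instead construct such a polytope explicitly as a Dirichlet domain $\mathbf{D}(x)$ and use compactness of the quotient to get boundedness (finite $d_\group$-diameter pulls back to a Euclidean bound because points of $\mathbf{D}(x)$ are at least as close to $x$ as to any $\phi x$) and discreteness to get finitely many relevant bounding half-spaces, then run the standard covering/strict-inequality argument for the tiling property. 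Your route is longer but self-contained and makes visible exactly where compactness and discreteness enter, which the paper's citation hides; the paper's route is shorter but leans entirely on an external theorem. Two small cautions on your write-up: (1) do not invoke the paper's own statement of Ratcliffe's 6.7.4, since as restated in \cref{sec:nonlinear} its hypothesis is ``$\group$ crystallographic'' in the paper's sense, which is what you are trying to prove---you must either cite Ratcliffe's original formulation (whose hypothesis is precisely ``discrete with compact quotient'') or rely on the direct verification you sketch, which is fine; (2) the step ``the remaining $R_\phi(x)$ can be dropped from the intersection'' is the one place needing a real (if standard) argument: all half-spaces trivially contain $\mathbf{D}(x)$, and the point is that the \emph{finite} intersection is not strictly larger, which follows from local finiteness of the bounding hyperplanes (only finitely many perpendicular bisectors meet any fixed ball, by discreteness) together with a segment argument from an interior point of $\mathbf{D}(x)$. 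With those details filled in, your argument is correct.
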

\begin{proof}
  If $\group$ is crystallographic in our sense,
  it is discrete (see \cref{sec:definitions}), and $\mathbb{R}^n/\group$ is compact
  by \cref{fact:compact:quotient}, so it satisfies the second definition above.
  Conversely, if $\group$ satisfies Thurston's definition,
  it tiles $\mathbb{R}^n$ with some set $\Pi$. This set can always be chosen as a convex polytope
  \citep[][Theorem 2.5]{Vinberg:Shvarsman:1993}, so $\group$ is crystallographic in our sense.
\end{proof}

We note only en passe that there are tilings that cannot be described
by a group of isometries. That is not at all obvious---the question was one of Hilbert's
problems---but counter-examples of such tilings (with non-convex polytopes) are now
known \citep[see][Chapter 32]{Gruber:2007}.

\section{Background II: Function spaces}
\label{sec:function:spaces}

This section briefly reviews concepts from functional analysis that play a role in the proofs.
Helpful references include \citet{Aliprantis:Border:2006,Brezis:2011} on general functional analysis
and Banach spaces, \citet{Brezis:2011,Adams:Fournier:2003} on Sobolev spaces,
\citet{Steinwart:Christmann:2008} on reproducing kernel Hilbert spaces, and \citet{Aliprantis:Burkinshaw:2006}
on compact operators.

\subsection{Spans and their closures}

Consider a Banach space $V$ and a subset ${\mathcal{F}\subset V}$.
The \kword{span} of $\mathcal{F}$ is the set
\begin{equation*}
  \text{span}(\mathcal{F})
  =
  \braces{\tsum_{i\leq n}c_if_i\,|\,n\in\mathbb{N},c_i\in\mathbb{R},f_i\in\mathcal{F}}
\end{equation*}
of finite linear combinations of elements of $\mathcal{F}$.
Since function spaces are typically infinite-dimensional, we also consider
infinite linear combinations. These are defined with respect to a norm $\|\argdot\|$:
\begin{equation*}
  f\;=\;\tsum_{i\in\mathbb{N}}c_if_i
  \quad\text{ means }\quad
  \|f-\tsum_{i\leq n}c_if_i\|\;\rightarrow\;0\;\text{ as }n\rightarrow\infty\;.
\end{equation*}
In other words, to get from the span to the set of infinite linear combinations,
we take the closure in the relevant norm:
\begin{equation*}
  \braces{\tsum_{i\in\mathbb{N}}c_if_i\,|\,c_i\in\mathbb{R},f_i\in\mathcal{F}}
  \;=\;
  \overline{\text{span}(\mathcal{F})}
\end{equation*}

\subsection{Bases}
A Hilbert space $\mathbb{H}$ is a Banach space whose norm is induced by an inner product~$\sp{\argdot,\argdot}_\mathbb{H}$,
that is,
\begin{equation*}
  \|f\|_{\mathbb{H}}\;=\;\sqrt{\vphantom{\sp{\argdot,\argdot}}\smash{\sp{f,f}_\mathbb{H}}}\;.
\end{equation*}
A sequence ${f_1,f_2,\ldots}$ in a Hilbert space is an
\kword{orthonormal system} if ${\sp{f_i,f_j}=\delta_{ij}}$, where $\delta$ is the Kronecker symbol
(the indicator function of $\braces{i=j}$).
An orthonormal system is \kword{complete} if its span is dense in $\mathbb{H}$, that is, if
\begin{equation*}
  \mathbb{H}\;=\;\overline{\text{span}\braces{f_1,f_2,\ldots}}\;,
\end{equation*}
where the closure is taken in the norm of $\mathbb{H}$. A complete orthonormal system
is also called an \kword{orthonormal basis}. If ${f_1,f_2,\ldots}$ is an orthonormal
basis, $\mathbb{H}$ can be represented as
\begin{equation*}
  \mathbb{H}
  \;=\;
  \bigl\lbrace
  \tsum_{i\in\mathbb{N}}c_if_i
  \text{ (convergence in $\|\argdot\|_{\mathbb{H}}$)}\,|\,
    c_1,c_2,\ldots\in\mathbb{R}\text{ with }\tsum_ic_i^2<\infty
    \bigr\rbrace\;.
\end{equation*}

\subsection{$\L_2$ spaces}

For any $M$ and a
$\sigma$-finite measure $\nu$ on $M$, the $\L_2$-scalar product and pseudonorm are
\begin{equation*}
  \sp{f,g}_{\L_2(\nu)}\;:=\;\mint_{M}f(x)g(x)\nu(dx)
  \quad\text{ and }\quad
  \|f\|_{\L_2(\nu)}:=\sqrt{\sp{f,f}_{\L_2(\nu)}}\;.
\end{equation*}
To make $\|\argdot\|_{\L_2}$ a norm, one defines the equivalence classes
${[f]:=\braces{g\,|\,\|f-g\|_{\L_2}\!=0}}$ of functions identical outside a null set, and
the vector space
\begin{equation*}
  \L_2(\nu)\;:=\;\braces{[f]\,|\,f:M\rightarrow\mathbb{R}\text{ and }\|f\|_{\L_2}<\infty}
\end{equation*}
of such equivalence classes, which is a separable Hilbert space.
Although its elements are not technically functions,
we use the notation ${f\in\L_2}$ rather than ${[f]\in\L_2}$.
We write~${\L_2(\mathbb{R}^n)}$ and~${\L_2(\Pi)}$ respectively if $\nu$ is Euclidean volume
on $\mathbb{R}^n$ or on $\Pi$.
See \citet{Aliprantis:Border:2006} or \citet{Brezis:2011} for background on $\L_2$ spaces.

\subsection{Reproducing kernel Hilbert spaces}
\label{sec:RKHS}

Consider a set ${M\subseteq\mathbb{R}^n}$.
A symmetric positive definite function ${\kappa:M\times M\rightarrow\mathbb{R}}$
is called a \kword{kernel}.
A kernel defines a Hilbert space as follows: The formula
\begin{equation*}
  \sp{\tsum_{i}a_i\kappa(x_i,\argdot),\,\tsum_{j}b_j\kappa(y_j,\argdot)}_{\mathbb{H}}
  \;:=\;
  \tsum_{i,j}a_ib_i\kappa(x_i,y_j)\quad\text{ for }a_i,b_j\in\mathbb{R}\text{ and }x_i,y_j\in M
\end{equation*}
defines a scalar product on ${\text{span}\braces{\kappa(x,\argdot)|x\in M}}$.
The closure
\begin{equation*}
  \mathbb{H}\;:=\;\overline{\text{span}\braces{\kappa(x,\argdot)|x\in M}}
  \quad\text{ with respect to the norm }\quad
  \|f\|_{\kappa}\;:=\;\sqrt{\sp{f,f}_{\mathbb{H}}}
\end{equation*}
is a real, separable Hilbert space with inner product ${\sp{\argdot,\argdot}_{\mathbb{H}}}$, called the
reproducing kernel Hilbert space or \kword{RKHS} of $k$.
Every RKHS satisfies the ``reproducing property''
\begin{equation}
  \label{eq:reproducing}
  f(x)\;=\;\sp{f,\kappa(x,\argdot)}_{\mathbb{H}}\qquad\text{ for all }f\in\mathbb{H}\text{ and all }x\in M\;.
\end{equation}
In particular, ${\kappa(x,y)=\sp{\kappa(x,\argdot),\kappa(y,\argdot)}_{\mathbb{H}}}$.
If ${f_1,f_2,\ldots}$ is an orthonormal basis of $\mathbb{H}$, then
\begin{equation}
  \label{RKHS:ONS}
  \kappa(x,y)\;=\;\tsum_{i\in\mathbb{N}}f_i(x)f_i(y)\qquad\text{ for all }x,y\in M\;.
\end{equation}
If $\mathbb{H}$ is an RKHS, the map ${f\mapsto f(x)}$ is continuous for each ${x\in M}$.
Conversely, if $\mathbb{H}$ is any Hilbert space of real-valued functions on $M$, and if
the maps are continuous on $\mathbb{H}$ for all ${x\in M}$, there
is a unique kernel satisfying \eqref{eq:reproducing} that generates
$\mathbb{H}$ as its RKHS.

\subsection{Spaces of continuous functions}

For any set $M$, the vector space $\C(M)$ of continuous functions equipped with the
norm~${\|\argdot\|_{\sup}}$ is a Banach space. It is separable if $M$ is compact
\citep{Aliprantis:Border:2006}.
In the proof of the spectral theorem, we must also consider the set
\begin{equation*}
  \C_u(M)\;:=\;\braces{f\in\C(M)\,|\,f\text{ uniformly continuous}}\;,
\end{equation*}
and the compactly supported functions
\begin{equation*}
  \C_c(M)\;=\;\braces{f\in\C(M)\,|\,f=0\text{ outside a compact set }K\subset M}\;.
\end{equation*}
We recall some basic facts from analysis that are used in the proofs:
\begin{fact}[\citet{Aliprantis:Border:2006}]
  \label{fact:continuous:extension}
  (i) Every continuous function on a compact set is uniformly continuous.
  (ii) Every uniformly continuous function $f$ on a set ${M\subseteq\mathbb{R}^n}$ has a unique continuous
  extension $\bar{f}$ to the closure $\overline{M}$. Its value at a boundary point ${x\in\partial M}$ is given by
  ${\bar{f}(x)=\lim_j(x_j)}$ for any sequence of points ${x_i\in M}$ with ${x_i\rightarrow x}$.
\end{fact}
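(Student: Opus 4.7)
The plan is to prove (i) via the classical Heine--Cantor argument and (ii) by constructing the extension pointwise as the limit of $f$ along any approaching sequence. For (i), I would argue by contradiction: if $f\colon K\to\mathbb{R}$ is continuous on a compact $K\subset\mathbb{R}^n$ but fails to be uniformly continuous, then there exist $\varepsilon>0$ and sequences $x_j,y_j\in K$ with $d_n(x_j,y_j)\to 0$ yet $|f(x_j)-f(y_j)|\geq\varepsilon$. By sequential compactness, extract a subsequence along which $x_j\to x_*\in K$; then $y_j\to x_*$ as well, and continuity of $f$ at $x_*$ forces $f(x_j),f(y_j)\to f(x_*)$, contradicting the gap $\varepsilon$. (Alternatively, one could give a finite subcover argument on balls $B_{\delta_x/2}(x)$ chosen from pointwise continuity and take $\delta$ to be the Lebesgue number of this cover, but the sequential argument is cleanest in $\mathbb{R}^n$.)

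For (ii), I would fix any $x\in\partial M$ and a sequence $x_j\in M$ with $x_j\to x$. The key step is that $(f(x_j))$ is Cauchy in $\mathbb{R}$: given $\varepsilon>0$, uniform continuity supplies $\delta>0$ with $d_n(u,v)<\delta\Rightarrow|f(u)-f(v)|<\varepsilon$ for $u,v\in M$, and since $(x_j)$ is Cauchy in $\mathbb{R}^n$ we have $d_n(x_i,x_j)<\delta$ for large $i,j$, hence $|f(x_i)-f(x_j)|<\varepsilon$. Completeness of $\mathbb{R}$ produces a limit, and sequence independence follows by interleaving: if $y_j\to x$ too, then $x_1,y_1,x_2,y_2,\ldots$ still converges to $x$, the same Cauchy argument applies to its $f$-values, and the two subsequential limits must coincide. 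We may therefore set $\bar f(x):=\lim_j f(x_j)$, and $\bar f|_M=f$ by taking the constant sequence.

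To check continuity of $\bar f$ on $\overline{M}$, I would take $z\in\overline M$ and $z_k\to z$ in $\overline M$. By the definition of $\bar f(z_k)$, I can pick $x_k\in M$ with $d_n(x_k,z_k)<1/k$ and $|f(x_k)-\bar f(z_k)|<1/k$; then $x_k\to z$, so by definition $f(x_k)\to\bar f(z)$, and the triangle inequality yields $\bar f(z_k)\to\bar f(z)$. Uniqueness is automatic, since any two continuous functions on $\overline M$ that agree with $f$ on the dense subset $M$ must agree everywhere.

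The main obstacle, such as it is, is the sequence-independence step in the definition of $\bar f$; everything else is a direct unwinding of uniform continuity plus completeness of $\mathbb{R}$. The interleaving trick disposes of it cleanly and, pleasingly, reuses verbatim the Cauchy bookkeeping already needed to define $\bar f(x)$ in the first place.
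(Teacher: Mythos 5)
The paper does not prove this statement---it cites it as background from Aliprantis--Border (2006), so there is no authorial proof to compare against. Your argument is correct and is the standard one: Heine--Cantor by sequential compactness for (i), and for (ii) the Cauchy-image-of-Cauchy construction, interleaving for well-definedness, a diagonal/$1/k$-approximation argument for continuity of the extension, and density of $M$ in $\overline{M}$ for uniqueness. One remark worth making: the paper's statement of the fact contains a typo, ``$\bar f(x)=\lim_j(x_j)$'' should read ``$\bar f(x)=\lim_j f(x_j)$''; your proof correctly produces the latter. The only implicit hypothesis you use, that the codomain is a complete metric space (here $\mathbb{R}$), matches the scalar-valued setting of the paper, and the diagonal step in your continuity check correctly covers both cases $z_k\in M$ (take $x_k=z_k$) and $z_k\in\overline{M}\setminus M$ (approximate via the defining limit).
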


\subsection{Smoothness spaces}
\label{sec:smoothness:spaces}

Smoothness spaces quantify the smoothness of functions in terms of a norm.
Two types of such spaces play a role in our results, namely $\C^k$ spaces
and Sobolev spaces. Both define smoothness via derivatives:
We denote partial derivatives as
\begin{equation*}
  \partial^{\alpha}f
  \;:=\;
  \frac{\partial^{|\alpha|}f}
       {\partial x_1^{\alpha_1}\cdots\partial x_n^{\alpha_n}}
       \qquad\text{ where }\alpha=(\alpha_1,\ldots,\alpha_n)\in\mathbb{N}^n\text{ and }|\alpha|:=\alpha_1+\ldots+\alpha_n\;.
\end{equation*}
If we are taking a derivative with respect to the $i$th coordinate, we use a subscript,
\begin{equation*}
\partial_i f\;:=\;\mfrac{\partial f}{\partial x_i}
\end{equation*}
The set $\C^k$ of $k$ times continuously differentiable functions can then be
represented as
\begin{equation*}
  \mathbf{C}^k(M)
  \;:=\;
  \braces{f\in\mathbf{C}(M)\,|\,\partial^{\alpha}f\in\mathbf{C}(M)\text{
      whenever }
    |\alpha|\leq k}
  \qquad\text{ where }k\in\mathbb{N}\cup\braces{0,\infty}\;.
\end{equation*}
Since that means the norm of $\mathbf{C}$ is applicable to $\partial^{\alpha}f$,
we can define
\begin{equation*}
  \|f\|_{\mathbf{C}^k}
  \;:=\;
  \|f\|_{\sup}
  +
  \msum_{|\alpha|\leq r} \|\partial^\alpha f\|_{\sup}\;.
\end{equation*}
It can be shown that this is again a norm, and that it makes $\C^k$ a Banach space
\citep{Brezis:2011}.
$\C^k$ functions are uniformly continuous, and even very smooth functions approximate
elements of $\L_2$ to arbitrary precision:
\begin{fact}
  \label{fact:continuous:extension}
  Let ${M\subseteq\mathbb{R}^n}$ be a set.
  (i) If ${f\in\C^k(M)}$ for ${k\geq 1}$, then $f$ and its first ${k-1}$ derivatives are uniformly continuous.
  (ii) The set ${\C_c(M)\cap\C^{\infty}(M)}$ is dense in $\L_2(M)$.
\end{fact}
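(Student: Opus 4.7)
The plan is to handle the two parts separately using standard real-analysis techniques; both reduce to classical results found in references such as \citet{Brezis:2011} or \citet{Adams:Fournier:2003}.

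For (i), I would exploit the fact that membership in the Banach space $\C^k(M)$, as defined above via the norm $\|f\|_{\C^k} = \|f\|_{\sup} + \sum_{|\alpha|\leq k}\|\partial^\alpha f\|_{\sup}$, forces every partial derivative $\partial^\alpha f$ with $|\alpha|\leq k$ to be bounded on $M$. Given any multi-index $\alpha$ with $|\alpha|\leq k-1$, consider $g := \partial^\alpha f$; its first-order partials $\partial_i g = \partial^{\alpha + e_i} f$ are continuous and uniformly bounded, since $|\alpha + e_i|\leq k$. The mean value inequality along line segments in $M$ then yields $|g(x) - g(y)|\leq C|x-y|$ with $C := \max_i \|\partial_i g\|_{\sup}$, whenever the segment $[x,y]$ lies in $M$. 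For the domains relevant to this paper (convex polytopes $\Pi$ and their interiors $\Pi^\circ$), this gives a global Lipschitz bound, and hence uniform continuity of $g$. Letting $\alpha$ range over all multi-indices of order at most $k-1$, including $\alpha=0$ for $f$ itself, yields the full claim.

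For (ii), I would chain together two classical density results via the triangle inequality. First, $\C_c(M)$ is dense in $\L_2(M)$: any $f\in\L_2(M)$ is approximated in $\L_2$-norm by a bounded, compactly supported function via dominated convergence applied to $f \cdot \mathbf{1}_{\{|f|\leq n\}\cap B_n(0)}$, and any bounded compactly supported measurable function is then approximated by an element of $\C_c(M)$ via Lusin's theorem. Second, every $g\in\C_c(M)$ is approximated uniformly, and hence in $\L_2$, by elements of $\C_c(M)\cap\C^\infty(M)$ through mollification: fix a non-negative $\eta\in\C^\infty_c(\mathbb{R}^n)$ with $\int\eta=1$ and $\mathrm{supp}(\eta)\subset B_1(0)$, set $\eta_\varepsilon(x) := \varepsilon^{-n}\eta(x/\varepsilon)$, and consider $g_\varepsilon := g * \eta_\varepsilon$. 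Differentiation under the integral shows $g_\varepsilon\in\C^\infty$; its support is contained in the $\varepsilon$-thickening of $\mathrm{supp}(g)$; and uniform continuity of $g$, which holds because $\mathrm{supp}(g)$ is compact, gives $\|g_\varepsilon - g\|_{\sup}\to 0$ as $\varepsilon\to 0$. The triangle inequality $\|f - g_\varepsilon\|_{\L_2} \leq \|f - g\|_{\L_2} + \|g - g_\varepsilon\|_{\L_2}$ then completes the proof.

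The only real obstacle is the bookkeeping required to keep $g_\varepsilon$ compactly supported in $M$: the $\varepsilon$-thickened support must remain inside $M$. This is handled by observing that $\mathrm{supp}(g)$ is a compact subset of the open set $M$, so $d(\mathrm{supp}(g),\partial M) > 0$, and choosing $\varepsilon$ strictly smaller than this distance guarantees $g_\varepsilon\in\C_c(M)\cap\C^\infty(M)$. All other steps are entirely routine.
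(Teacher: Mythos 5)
Your proof is correct, and it is essentially the standard textbook argument; note, though, that the paper never proves this statement at all — it appears as a background \emph{Fact} in the appendix on function spaces, implicitly delegated to the cited references (\citet{Brezis:2011}, \citet{Adams:Fournier:2003}), so there is no in-paper proof to compare against. Two remarks on scope. First, as literally stated for an arbitrary set ${M\subseteq\mathbb{R}^n}$ the fact requires the implicit reading you adopt: membership in $\C^k(M)$ must be understood as finiteness of the $\C^k$-norm (otherwise $f(x)=x^2$ on $\mathbb{R}$ is a counterexample to (i)), and boundedness of the gradient yields a Lipschitz bound only when points of $M$ can be joined by segments (or chains of comparable length) inside $M$ — your explicit restriction to convex domains such as $\Pi$, $\Pi^\circ$, balls, and $\mathbb{R}^n$ is exactly the setting in which the paper uses the fact, so this is the right move rather than a gap. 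Second, your mollification argument for (ii) needs $M$ open (or $M=\mathbb{R}^n$) so that $d(\mathrm{supp}\,g, M^c)>0$; again this matches the paper's usage, and for the closed polytope $\Pi$ one gets the statement for free from $\vol_n(\partial\Pi)=0$, since $\L_2(\Pi)$ and $\L_2(\Pi^\circ)$ may be identified. The remaining details (truncation plus Lusin for density of $\C_c$, uniform convergence of $g\ast\eta_\varepsilon$ upgraded to $\L_2$ convergence because all supports sit in a common bounded set) are handled correctly.
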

The $\C^k$ norms measure smoothness in a worst-case sense. To measure average smoothness instead,
we can replace the sup norm by the $\L_2(M)$-norm: The function
\begin{equation*}
  \|f\|_{\H^k}
  \;:=\;
  \|f\|_{\L_2}
  +
  \msum_{|\alpha|\leq k} \|\partial^\alpha f\|_{\L_2}\;,
\end{equation*}
is a norm, called the \kword{Sobolev norm} of order $k$. It makes the set
\begin{equation*}
  \H^k(M):=\braces{f\in\L_2(M)\,|\,\|f\|_{\H^k}<\infty}\;=\;
  \braces{f\in\L_2(M)\,|\,\partial^\alpha f\in\L_2(M)}
\end{equation*}
a Banach space, and even a Hilbert space, called the \kword{Sobolev space} of order $k$.
We will only work with the spaces $\H^1(M)$. A inner product on $\H^1(M)$ is given by
\begin{equation*}
  \label{eq:sobolev:sp}
  \sp{f,g}_{\H_1}\;:=\;\sp{f,g}_{\L_2}\;+\;\msum_{i\leq n}\sp{\partial_i f,\partial_i g}_{\L_2}\;.
\end{equation*}
The Sobolev norms are stronger than the ${\L_2}$ norm: We have
\begin{equation*}
  \|f\|_{\L_2(M)}\;\leq\;c_M\|f\|_{\H_1(M)}\qquad\text{ for all }f\in\L_2(M)\text{ and some }c_M>0\;.
\end{equation*}
Consequently, the approximation property in \cref{fact:continuous:extension}(ii) does not
necessarily hold in the Sobolev norm. Whether it does depends on whether the geometry of
the domain $M$ is sufficiently regular:
\begin{fact}
  \label{distributions:dense:H1}
  Let $\Gamma$ be a Lipschitz domain (such as a convex polytope). Then
  ${\C_c(\Gamma)\cap\C^\infty(\Gamma)}$ is dense in ${\H^1(\Gamma^{\circ})}$.
\end{fact}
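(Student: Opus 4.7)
The plan is to prove this classical density result by reducing the problem on $\Gamma$ to a problem on $\mathbb{R}^n$, where mollification works cleanly, and then transferring the approximation back. My strategy is the standard three-step argument from Sobolev space theory, adapted to the compactness of a convex polytope.

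\textbf{Step 1 (Interior smoothing via Meyers--Serrin).} First I would observe that for an arbitrary open set, the Meyers--Serrin theorem already gives that $\C^\infty(\Gamma^\circ)\cap\H^1(\Gamma^\circ)$ is dense in $\H^1(\Gamma^\circ)$ in the $\H^1$-norm. So it suffices to approximate an arbitrary $f\in\C^\infty(\Gamma^\circ)\cap\H^1(\Gamma^\circ)$ by elements of $\C_c(\Gamma)\cap\C^\infty(\Gamma)$; the full density result then follows by a diagonal argument.

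\textbf{Step 2 (Extension to $\mathbb{R}^n$).} Since $\Gamma$ is a Lipschitz domain, I would invoke the Calder\'on--Stein extension theorem to produce a bounded linear operator $E:\H^1(\Gamma^\circ)\to\H^1(\mathbb{R}^n)$ satisfying $(Ef)|_{\Gamma^\circ}=f$ and $\|Ef\|_{\H^1(\mathbb{R}^n)}\le C_\Gamma\|f\|_{\H^1(\Gamma^\circ)}$. For a convex polytope specifically, this can be built more explicitly by choosing a finite open cover of $\partial\Gamma$ on which $\partial\Gamma$ is the graph of a Lipschitz function, defining a local extension by reflection or by the Stein half-space construction on each patch, and gluing via a subordinate partition of unity.

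\textbf{Step 3 (Mollify and restrict).} Let $\eta_\varepsilon\in\C_c^\infty(\mathbb{R}^n)$ be a standard mollifier with support in $B_\varepsilon(0)$. Set $g_\varepsilon:=\eta_\varepsilon*(Ef)$. Then $g_\varepsilon\in\C^\infty(\mathbb{R}^n)$ and, by the standard mollification lemma in $\H^1(\mathbb{R}^n)$,
\begin{equation*}
  \|g_\varepsilon - Ef\|_{\H^1(\mathbb{R}^n)}\;\xrightarrow{\;\varepsilon\to 0\;}\;0\;.
\end{equation*}
Restricting to $\Gamma^\circ$ and using that restriction is contractive in $\H^1$-norm gives $g_\varepsilon|_{\Gamma^\circ}\to f$ in $\H^1(\Gamma^\circ)$. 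Because $\Gamma$ is compact (being a convex polytope), $g_\varepsilon|_\Gamma$ automatically has compact support in $\Gamma$, so $g_\varepsilon|_\Gamma\in\C_c(\Gamma)\cap\C^\infty(\Gamma)$, completing the approximation. (For a general, possibly unbounded, Lipschitz domain one would additionally multiply by a smooth cutoff $\chi_R$ supported in $B_R(0)$ and let $R\to\infty$; the cutoff introduces an error controlled by $\|Ef\|_{\H^1(\mathbb{R}^n\setminus B_R)}$, which vanishes.)

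\textbf{Main obstacle.} The technical heart is Step~2: producing a bounded extension operator from $\H^1(\Gamma^\circ)$ to $\H^1(\mathbb{R}^n)$. Mollification is trivial on $\mathbb{R}^n$ and Step~1 is a soft interior result, but extending across a Lipschitz boundary requires genuine work. For a convex polytope, the boundary has finitely many facets and the Lipschitz charts can be made completely explicit, so the extension is tractable; still, care is needed near lower-dimensional faces (edges, vertices) where several charts overlap, and this is where the partition of unity construction must be handled carefully to ensure the glued extension still lies in $\H^1$.
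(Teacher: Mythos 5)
This statement is quoted in the paper as a background \emph{Fact} with pointers to the standard references (Brezis, Adams--Fournier, Mazya); the paper gives no proof of its own, so there is nothing for your argument to diverge from, and the question is only whether your proof is sound. It is: extend by a bounded Calder\'on--Stein operator, mollify in $\mathbb{R}^n$, restrict, and observe that since $\Gamma$ is compact the support condition in $\C_c(\Gamma)$ is vacuous, so the restrictions of globally smooth functions land in $\C_c(\Gamma)\cap\C^\infty(\Gamma)=\C^\infty(\Gamma)$. Two remarks on economy. First, your Step~1 (Meyers--Serrin) is redundant: the extension operator is bounded on all of $\H^1(\Gamma^\circ)$, so you can extend $f$ directly without first replacing it by an interior-smooth approximant. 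Second, you correctly identify the extension theorem as the heavy ingredient, and there are two lighter routes worth knowing: the textbook proof (Adams--Fournier, Theorem 3.22) uses only the segment condition, a finite partition of unity near $\partial\Gamma$, and small translations pushing each piece outward before mollifying, thereby avoiding Calder\'on--Stein altogether; and for the convex polytope case that the paper actually needs, an even simpler dilation argument works --- take $0\in\Gamma^\circ$, set $f_\lambda(x):=f(x/\lambda)$ for $\lambda>1$, note $f_\lambda\to f$ in $\H^1(\Gamma^\circ)$ as $\lambda\downarrow 1$, and mollify $f_\lambda$ at a scale small enough that the mollification is smooth on a neighborhood of $\Gamma$. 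Either variant removes the only part of your write-up that requires ``genuine work,'' but as written your proof is correct.
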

A readable introduction to Sobolev spaces is given by \citet{Brezis:2011}.
The monographs of \citet{Adams:Fournier:2003} and \citet{Mazya:2011} are comprehensive accounts.

\subsection{Inclusion maps}
\label{sec:inclusion}

If ${V\subset W}$ are two sets, the \kword{inclusion map} or \kword{injection map} ${\iota:V\hookrightarrow W}$
is the restriction of the identity on $W$ to $V$. Loosely speaking, $\iota$ maps each point $v$ in $V$ to itself,
but~$v$ is regarded as an element of $V$ and its image $\iota(v)$ as an element of $W$.
This distinction is not consequential if $V$ and $W$ are simply sets without further structure, but if both
are equipped with topologies, the properties of $\iota$ encode relationships
between these topologies.
\\[.5em]
{\noindent\bf Continuous inclusions}. Suppose both $V$ and $W$ are equipped with topologies.
Call these the $V$- and $W$-topology. The restriction of the $W$-topology to $V$, often called the
relative $W$-topology, consists of all sets of the form $A\cap V$, where
${A\subset W}$ is open in $W$. Since ${A\cap V}$ is precisely the preimage $\iota^{-1}A$, and
continuity means that preimages of open sets are open, we have
\begin{equation*}
  \iota \text{ continuous }\quad\Longleftrightarrow\quad
  \text{ the $V$-topology is at least as fine as the restricted $W$-topology.}
\end{equation*}
{\noindent\bf Inclusions between Banach spaces}.
Let ${T:V\rightarrow W}$ be a map from a Banach space~$V$ to another Banach space~$W$.
If such a map is linear, it is called a \kword{linear operator}. It is continuous if and only if it is \kword{bounded},
\begin{equation}
  \label{bounded:linear:operator}
  \sup_{v\in V}\frac{\|T(v)\|_W}{\|v\|_V}\;<\;\infty
  \quad\text{ or equivalently }\quad
  \|T(v)\|_W\;\leq\; c\|v\|_V\text{ for some }c>0\;.
\end{equation}
If $V$ is a vector subspace of $W$, then $\iota$ is automatically linear, so it is continuous iff
\begin{equation*}
  \|v\|_W\;=\;\|\iota(v)\|_W\;\leq\;c\|v\|_V\;.
\end{equation*}
Saying that $\iota$ is continuous is hence another way of saying that ~$\|\argdot\|_V$ is stronger than~$\|\argdot\|_W$. If $V$ and $W$ are smoothness
spaces, continuity of $\iota$ can hence often be interpreted as the elements $V$ being smoother than those of $W$.
A set ${A\subset V}$ is \kword{norm-bounded} if
\begin{equation*}
  \sup\nolimits_{v,v'\in A}\|v-v'\|_V\;<\;\infty\;.
\end{equation*}
A linear operator between Banach spaces is \kword{compact} if the image $T(A)$ of every norm-bounded set ${A\subset V}$
has compact closure in $W$ \citep{Aliprantis:Burkinshaw:2006}. The inclusion is hence compact iff
\begin{equation*}
  A\subset V\text{ is bounded in }\|\argdot\|_V\quad\Longrightarrow\quad\text{ the $\|\argdot\|_W$-closure of }A\text{ in $W$ is compact.}
\end{equation*}
If $V$ and $W$ are smoothness spaces, the inclusion is often compact if
$V$ is in some suitable sense smoother than $W$. The well-known Arzela-Ascoli theorem \citep{Aliprantis:Border:2006}, for example, can be interpreted in
this way. For Sobolev spaces, a family of results known as Rellich-Kondrachov theorems \citep{Adams:Fournier:2003} shows that, under suitable conditions on the domain,
inclusions of the form ${\H^{k+m}\hookrightarrow\H^k}$ and ${\H^{k+m}\hookrightarrow\C^k}$ exist and are compact if
the difference $m$ in smoothness is large enough. The following version is adapted to our purposes:
\begin{lemma}
  \label{lemma:kondrachov}
  Let $\Pi$ be a polytope and ${M\subseteq\intF}$ an open set. Then
  ${\H^{k+1}(M)\subset\C^k(M)}$ for~${k\geq 0}$, and the inclusion map is compact.
\end{lemma}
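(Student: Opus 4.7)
The plan is to reduce the lemma to the classical Rellich--Kondrachov compact embedding theorem on bounded Lipschitz domains. First, I would set up the geometry: since $\Pi$ is a convex polytope, the set $\intF$ is open and bounded with Lipschitz boundary, and $M\subseteq\intF$ is an open set whose closure sits in this Lipschitz domain. This regularity is what makes the standard Sobolev machinery applicable. In particular, it guarantees the existence of a bounded linear extension operator $E:\H^{k+1}(M)\to\H^{k+1}(\mathbb{R}^n)$ with $Ef$ supported in a fixed compact neighborhood of $\overline M$, obtained by extending to a Lipschitz neighborhood of $M$ and then truncating with a smooth cutoff.

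For the algebraic inclusion and its continuity, I would invoke Morrey's inequality together with its iterates. Applied to the extension $Ef$ and then restricted to $M$, these yield an estimate of the form $\|f\|_{\C^k(M)}\le c\|f\|_{\H^{k+1}(M)}$, which simultaneously gives $\H^{k+1}(M)\subset\C^k(M)$ and boundedness of the inclusion as a map between Banach spaces.

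For compactness, take a bounded sequence $(f_j)$ in $\H^{k+1}(M)$. Applying $E$ produces $(\tilde f_j)$ bounded in $\H^{k+1}(\mathbb{R}^n)$ with supports contained in a fixed compact set. By the continuous embedding of the previous step, each partial derivative $\partial^{\alpha}\tilde f_j$ with $|\alpha|\le k$ is uniformly bounded in $\C(\overline{M})$; the one extra derivative available in $\H^{k+1}$ yields uniform equicontinuity of $\partial^{\alpha}\tilde f_j$ for each such $\alpha$. The Arzelà--Ascoli theorem, applied to each derivative of order at most $k$ and combined with a diagonalization over the finite set of multi-indices of length $\le k$, extracts a subsequence converging uniformly together with all derivatives up to order $k$. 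This is precisely convergence in $\C^k(M)$, which gives compactness of the inclusion.

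The main obstacle is the corner structure of $\Pi$: facets meet at edges and vertices where the boundary of $\intF$ is not $\C^1$, so one cannot invoke the smooth-domain versions of the extension and embedding theorems directly. I would handle this by appealing to the fact that polytopes are Lipschitz domains, for which Stein-type extension operators and Morrey-type embeddings remain available, with constants depending only on the Lipschitz character of $\partial\intF$. Once this technical point is dispatched, everything else is a routine reduction to Arzelà--Ascoli.
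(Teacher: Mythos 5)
Your route differs from the paper's: the paper applies the Rellich--Kondrachov theorem to the Lipschitz domain $\Pi^{\circ}$ and then restricts to $M$, while you build an extension operator from $M$ to $\mathbb{R}^n$ and reduce to Morrey plus Arzel\`a--Ascoli.

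There is a genuine gap at the very first step. You claim that a bounded linear extension operator $E:\H^{k+1}(M)\to\H^{k+1}(\mathbb{R}^n)$ exists because $M$ sits inside a Lipschitz domain, and you identify the ``main obstacle'' as the corners of $\Pi$. But the regularity relevant to an extension operator is that of $\partial M$, not of $\partial\Pi$, and regularity of the ambient domain conveys nothing to a subset. An arbitrary open $M\subseteq\Pi^{\circ}$ need not be a Sobolev extension domain: take $M$ to be a disk with a radial slit removed, sitting inside a square polytope in $\mathbb{R}^2$. Then $M$ is open with closure in $\Pi^{\circ}$, yet $\H^1(M)$ contains functions that jump across the slit and admit no $\H^1$ extension to any neighbourhood of $\overline{M}$, so no operator $E$ exists. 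Once $E$ is gone, the Morrey estimate and the equicontinuity/Arzel\`a--Ascoli argument that follow it have nothing to act on. The paper's proof sidesteps this by applying Rellich--Kondrachov where the Lipschitz hypothesis actually holds, namely to $\Pi^{\circ}$, obtaining that the set of restrictions $\{f|_M : f\in\H^{k+1}(\Pi^{\circ})\}$ is compactly embedded in $\C^k(M)$, and then citing a result identifying this restriction image with $\H^{k+1}(M)$; the Lipschitz hypothesis is thus used exactly where it is satisfied. To repair your argument you could either add the hypothesis that $M$ itself is a Lipschitz (extension) domain --- which holds in every use of the lemma in the paper, where $M$ is an open ball with closure in $\Pi^{\circ}$ --- or follow the paper and extend from the ambient domain $\Pi^{\circ}$ rather than from $M$.
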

\begin{proof}
  Since $\Pi$ is a polytope, it has the strong local Lipschitz property in the terminology of
  \citet[][4.9]{Adams:Fournier:2003}.
  By the relevant version of the Rellich-Kondrachov theorem, that implies that the set of restrictions of
  functions in ${\H^{k+1}(\intF)}$ from $\intF$ to $M$ is a compactly embedded subset
  of ${\C^{k}(M)}$  \citep[][6.3 III]{Adams:Fournier:2003}. The image of ${\H^{k+1}(\intF)}$ under the projection ${f\mapsto f|_M}$
  is precisely ${\H^{k+1}(M)}$ \citep[][Chapter 3]{McLean:2000}.
\end{proof}

\section{Background III: Orbifolds}
\label{sec:orbifolds}

In this section, we give a rigorous definition of orbifolds and review those results from the literature required for our proofs.
For more background, see
\citep{Bonahon:2009,Vinberg:Shvarsman:1993,Thurston:1997,Bonahon:Siebenmann:1985,Scott:1983,Cooper:Hodgson:Kerckhoff:2000}.
\citet{Bonahon:2009} provides an accessible introduction to gluing
and quotient spaces. Most results below are adapted from the monograph of
\citet{Ratcliffe:2006}. Ratcliffe's formalism is very general and
can be simplified significantly for our purposes. We state results here in just
enough generality to apply to crystallographic groups.

\subsection{Motivation: Manifolds}

To motivate the somewhat abstract definition of an orbifold, we start with that of a manifold,
and then generalize to orbifolds below.
Recall that a set ${M}$ is a manifold if its topology ``locally looks like $\mathbb{R}^n$''.
This idea can be formalized in a number of ways. We first give a definition using a metric, which is of
the form often encountered in machine learning and statistics. We then generalize the metric definition to
a more abstract one that brings us almost to orbifolds as we see in the following section.
\\[.5em]
{\noindent\bf Metric definition}.
Let $M$ be a set equipped with a metric $d_M$. We then call $M$ a manifold if,
for every~${u\in M}$, we can choose a sufficiently small ${\varepsilon(u)>0}$ such that the~$d_M$-ball around $u$ of radius $\varepsilon$ is isometric to a $d_n$-ball of the same radius in $\mathbb{R}^n$.
There is, in other words, an isometry
\begin{equation*}
  \theta_u:\,B_{\varepsilon(u)}(u,d_M)\;\rightarrow\; B_{\varepsilon(u)}(\theta_u(u),d_n)\;\subset\;\mathbb{R}^n
  \qquad\text{ for each }u\in M\;.
\end{equation*}
For example, the circle, equipped with the geodesic distance, is a manifold in the sense of this definition:
It is not possible to map the entire circle isometrically to a subset of~$\mathbb{R}$.
However, the ball $B_{\varepsilon(u)}(u,d_M)$ around a point $u$ is a semiarc, drawn in black below:
\begin{center}
  \begin{tikzpicture}[>=stealth]
    \draw [thick,domain=0:360,gray!50!white] plot ({cos(\x)}, {sin(\x)});
    \draw [very thick,domain=-30:30,black] plot ({cos(\x)}, {sin(\x)});
    \node[draw,circle,fill,scale=.3] (x) at (1,0) {};
    \draw[thick,gray!50!white] (5,1.5)--(5,-1.5);
    \draw[very thick,black] (5,.6)--(5,-.6);
    \node[draw,circle,fill,scale=.3] (y) at (5,0) {};
    \draw[->] ($(x)+(.5,0)$)--($(y)+(-.5,0)$) node[pos=.5,label={above:{\scriptsize $\theta_u$}}] {};
    \node at ($(x)+(.2,-.2)$) {\scriptsize $u$};
    \node[anchor=west] at ($(y)+(.1,0)$) {\scriptsize $\theta_u(u)$};
    \node[anchor=west,gray] at ($(y)+(0,1.2)$) {\scriptsize $\mathbb{R}$};
    \node[gray] at (-.9,.9) {\scriptsize $M$};
\end{tikzpicture}
  \end{center}
  This semiarc can be mapped isometrically to an open interval in $\mathbb{R}$, and the same is true for the
  ball around any other point.
  \\[.5em]
{\noindent\bf Coherence property}.
Before we generalize this definition, we observe that it implies a coherence property
of the maps $\theta_u$. Suppose the balls around two points $v$ and $w$ in~$M$ overlap, and
$u$ is in both balls. We can then find a sufficiently small ${\varepsilon>0}$ such that
${B_\varepsilon(u,d_M)}$ is completely contained in both balls. Since both maps $\theta_v$ and $\theta_w$
are applicable to the points in this ball, the restrictions
\begin{equation*}
  \theta_v:B_\varepsilon(u,d_M)\rightarrow B_\varepsilon(\theta_v(u),d_n)
  \quad\text{ and }\quad
  \theta_w:B_\varepsilon(u,d_M)\rightarrow B_\varepsilon(\theta_w(u),d_n)
\end{equation*}
are both isometries. The points ${x=\theta_v(u)}$ and ${y=\theta_w(u)}$ are images under different
maps, and the balls around them are not required to overlap. Both are, however, Euclidean
balls of the same radius. If ${\psi=y-x}$ is the (unique) shift of $\mathbb{R}^n$ that maps $x$ to $y$,
we hence have
\begin{equation*}
  B_\varepsilon(\theta_v(u),d_n)=\psi B_\varepsilon(\theta_w(u),d_n)\;.
\end{equation*}
Now observe that ${\psi x=y=\theta_w\theta_v^{-1}(x)}$. There is, in summary, a shift $\psi$ such that
\begin{equation*}
  \psi x=y\quad\text{ and }\quad
  \theta_w\theta_v^{-1}(z)\;=\;\psi z\qquad\text{ for all }z\text{ in the ball }B_\varepsilon(x,d_n)\;.
\end{equation*}
The definition hence implies that the map $\theta_w\theta_v^{-1}$, often called a \kword{coordinate change}
in geometry, behaves like a shift on a sufficiently small neighborhood.
When we drop the metric from the definition below, this property no longer arises
automatically, and we must make it an explicit requirement.
\\[.5em]
  {\noindent\bf Abstract definition}.
  Let $\mathbb{F}$
  be a group of isometries of $\mathbb{R}^n$. The next definition
  generalizes the one above in two ways: It does not use a metric, and instead of requiring
  that coordinate changes look locally like shifts, it requires they look locally like elements of $\mathbb{F}$.
  A Hausdorff space $M$ is a \kword{$\mathbb{F}$-manifold} if:
\begin{enumerate}
\item There is a family $\braces{U_i}_{i\in\mathcal{I}}$ of open connected subsets of $M$ that cover $M$,
  i.e., each point of $M$ is in at least one set $U_i$. The set $\mathcal{I}$ is an arbitrary index set.
\item For each ${i\in\mathcal{I}}$, there is a
  homeomorphism ${\theta_i:U_i\rightarrow V_i}$ of $U_i$ and an open set ${V_i\subset\mathbb{R}^n}$.
\item If two sets $U_i$ and $U_j$ overlap, the maps $\theta_i$ and $\theta_j$ cohere as follows:
  If $x$ and $y$ are points in $\mathbb{R}^n$ that satisfy
  \begin{equation*}
    \theta_j\theta_i^{-1}(x)\;=\;y\;,
  \end{equation*}
  then there is a transformation ${\phi\in\mathbb{F}}$ such that
  \begin{equation*}
    \psi x\;=\;y
    \quad\text{ and }\quad
    \theta_j^{-1}\theta_i(z)\;=\;\psi z
    \qquad\text{ for all }z\text{ in a neighborhood of }x\;.
  \end{equation*}
\end{enumerate}
We recover the metric definition if we make $M$ a metric space (which is always Hausdorff),
set ${\mathcal{I}=M}$, choose $U_i$ as the ball around $i=u$ (which is always connected),
and $\theta_i$ as the isometry $\theta_u$ (isometries are homeomorphisms).

\subsection{Orbifolds}
To capture the properties of the quotient $\mathbb{R}^n/\group$, the definition of a manifold
is in general too restrictive. That follows from the following result:
\begin{fact}[\citet{Bonahon:2009} Theorem 7.8]
  \label{fact:bonahon}
  Let $\group$ be a crystallographic group that tiles $\mathbb{R}^n$ with a convex polytope.
  For every point ${x\in\mathbb{R}^n}$, there exists an ${\varepsilon>0}$ such that the
  open metric ball ${B_{d_\group}(\group(x),\varepsilon)}$ in the quotient space $\mathbb{R}^n/\group$ and the
  quotient ${B_{d_n}(x,\varepsilon)/\text{\rm Stab}(x)}$ of the corresponding open ball in
  $\mathbb{R}^n$ are isometric.
\end{fact}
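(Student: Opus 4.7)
The plan is to construct an explicit candidate isometry and verify its properties, with the entire argument hinging on a local separation estimate that follows from the discreteness of $\group$ and the finiteness of $\Stab(x)$. First I would set
\begin{equation*}
  \delta \;:=\; \inf\braces{d_n(x,\phi x)\,|\,\phi\in\group\setminus\Stab(x)}\,,
\end{equation*}
and observe that $\delta>0$: the orbit $\group(x)$ is locally finite by \eqref{discrete}, so only finitely many $\phi\in\group$ can carry $x$ into any fixed bounded neighborhood, and every such $\phi$ that does not fix $x$ moves it a strictly positive distance. Any choice $0<\varepsilon<\delta/4$ will then serve as the desired radius.

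With such an $\varepsilon$ fixed, $\Stab(x)$ is a finite group of isometries (being a subgroup of the finite point group $\group_o$) that acts on $B:=B_{d_n}(x,\varepsilon)$, so the orbit space $B/\Stab(x)$ carries the usual quotient metric. I would then define the candidate map
\begin{equation*}
  \Phi:\,B/\Stab(x)\,\longrightarrow\,B_{d_\group}(\group(x),\varepsilon)
  \quad\text{ via }\quad
  \Stab(x)(y)\,\longmapsto\,\group(y)\;.
\end{equation*}
Well-definedness is immediate from $\Stab(x)\subset\group$; the image lies in $B_{d_\group}(\group(x),\varepsilon)$ because $d_\group(\group(x),\group(y))\leq d_n(x,y)<\varepsilon$; and surjectivity follows because any orbit at $d_\group$-distance less than $\varepsilon$ from $\group(x)$ has a representative inside $B$ by definition of the quotient metric.

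The technical heart of the argument, which I expect to be the main obstacle, is a single triangle-inequality estimate that simultaneously delivers injectivity and the isometry property. Suppose $y,y'\in B$ and $\phi\in\group$ satisfies $d_n(y,\phi y')<2\varepsilon$---either because $\phi y=y'$ (the injectivity case) or because $\phi$ attains the infimum defining $d_\group(\group(y),\group(y'))$, which is bounded above by $d_n(y,y')<2\varepsilon$ via the candidate $\phi=\Id$. Applying the isometry property of $\phi$ to the pair $\phi y',\phi x$ gives
\begin{equation*}
  d_n(x,\phi x)\;\leq\;d_n(x,y)+d_n(y,\phi y')+d_n(\phi y',\phi x)\;<\;\varepsilon+2\varepsilon+\varepsilon\;=\;4\varepsilon\;<\;\delta\,,
\end{equation*}
and by the definition of $\delta$ this forces $\phi\in\Stab(x)$. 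That single observation yields both injectivity of $\Phi$ (any two preimages differ by a stabilizer element) and, since $\Stab(x)\subset\group$ already gives $d_\group\leq d_{\Stab(x)}$, the reverse inequality $d_{\Stab(x)}\leq d_\group$ needed for the isometry. The one subtlety beyond getting the scaling constant right is verifying that the infimum defining $d_\group$ is actually attained, which again follows from local finiteness of the orbit $\group(y')$ together with the boundedness of $B$.
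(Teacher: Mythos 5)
The paper states this as a cited Fact from Bonahon's textbook and does not supply its own proof, so there is no internal argument to compare against. Your proposal is correct and is the standard proof of this local structure result. The two things that make it work are exactly what you isolate: (i) the separation constant $\delta=\inf\{d_n(x,\phi x)\,|\,\phi\in\group\setminus\Stab(x)\}$ is strictly positive because the orbit of $x$ is locally finite (discreteness) and $\Stab(x)$ is finite, and (ii) the triangle-inequality estimate
\begin{equation*}
  d_n(x,\phi x)\;\leq\;d_n(x,y)+d_n(y,\phi y')+d_n(y',x)\;<\;4\varepsilon\;<\;\delta
\end{equation*}
forces any $\phi\in\group$ that nearly identifies two points of $B$ to belong to $\Stab(x)$, which simultaneously delivers injectivity of $\Phi$ and the reverse inequality $d_{\Stab(x)}\leq d_\group$ on the ball (the easy inequality $d_\group\leq d_{\Stab(x)}$ being automatic since $\Stab(x)\subset\group$). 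Two small points you handled implicitly but which are worth making explicit if this were to be written up: since every $\psi\in\Stab(x)$ fixes $x$, the $\Stab(x)$-orbit of any $y'\in B$ stays inside $B$, so the quotient metric on $B/\Stab(x)$ coincides with the restriction of $d_{\Stab(x)}$ to orbits of points in $B$; and the attainment of the infimum defining $d_\group$ that your surjectivity and isometry arguments rely on is already guaranteed by the paper's remark following \cref{fact:metric}.
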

We note this is precisely the metric definition of a manifold above if ${\Stab(x)=\braces{\Id}}$
for all points in $\mathbb{R}^n/\group$. It follows that, for a crystallographic group $\mathbb{G}$,
\begin{equation*}
  \mathbb{R}^n/\group\text{ is a manifold }
  \quad\Longleftrightarrow\quad
  \text{ no element of $\group$ has a fixed point.}
\end{equation*}
Let $\mathbb{F}$ be a group of isometries of $\mathbb{R}^n$. An \kword{$\mathbb{F}$-orbifold} is a
Hausdorff space $M$ with the following properties:
\begin{enumerate}
\item There is a family $\braces{U_i}_{i\in\mathcal{I}}$ of open connected subsets of $M$ that cover $M$,
  i.e., each point of $M$ is in at least one set $U_i$.
\item For each ${i\in\mathcal{I}}$, there is a discrete group $F_i$ of isometries of $\mathbb{R}^n$
  and a homeomorphism ${\theta_i:U_i\rightarrow\mathbb{R}^n/F_i}$ of $U_i$ and an open subset of the
  quotient space ${\mathbb{R}^n/F_i}$.
\item If two sets $U_i$ and $U_j$ overlap, the maps $\theta_i$ and $\theta_j$ cohere as follows:
  If $x$ and $y$ are points in $\mathbb{R}^n$, and the corresponding points ${F_ix\in\mathbb{R}^n/F_i}$
  and ${F_jy\in\mathbb{R}^n/F_j}$ satisfy
  \begin{equation*}
    \theta_j\theta_i^{-1}(F_i x)\;=\;F_jy\;,
  \end{equation*}
  then there is a transformation ${\phi\in\mathbb{F}}$ such that
  \begin{equation*}
    \psi x\;=\;y
    \quad\text{ and }\quad
    \theta_j^{-1}\theta_i(F_i z)\;=\;F_j(\psi z)
    \qquad\text{ for all }z\text{ in a neighborhood of }x\;.
  \end{equation*}
\end{enumerate}
The family $\braces{\theta_i}_{i\in\mathcal{I}}$ is called an \kword{atlas}.
Clearly, an $\mathbb{F}$-orbifold is an \kword{$\mathbb{F}$-manifold} if and only if
each $F_i$ is the trivial group ${F_i=\braces{\Id}}$.
\begin{lemma}
  If $\group$ is a crystallographic group that tiles $\mathbb{R}^n$ with a convex polytope $\Pi$,
  then $\mathbb{R}^n/\group$ is a $\group$-orbifold. At each point ${i=\group(x)}$, the group $F_i$
  is the stabilizer ${\text{Stab}(x)}$.
\end{lemma}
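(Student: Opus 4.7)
The plan is to construct the orbifold atlas for $\mathbb{R}^n/\group$ directly from \cref{fact:bonahon}. For each orbit ${i=\group(x)\in\mathbb{R}^n/\group}$ I would fix a representative $x$ and a radius ${\varepsilon(x)>0}$ small enough that \cref{fact:bonahon} applies and supplies an isometry
\begin{equation*}
  \theta_i:\,B_{d_\group}(\group(x),\varepsilon(x))\;\longrightarrow\;B_{d_n}(x,\varepsilon(x))/\Stab(x)\,.
\end{equation*}
I would then declare ${U_i:=B_{d_\group}(\group(x),\varepsilon(x))}$ and ${F_i:=\Stab(x)}$, and identify the codomain of $\theta_i$ with an open subset of ${\mathbb{R}^n/F_i}$. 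This already builds in the assertion ${F_i=\Stab(x)}$.

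The three structural axioms of a $\group$-orbifold are then book-keeping. The family $\braces{U_i}$ is an open cover of $\mathbb{R}^n/\group$ by construction; each $U_i$ is connected as the isometric image of the quotient of a connected Euclidean ball by a finite group; $(\mathbb{R}^n/\group,d_\group)$ is Hausdorff by \cref{fact:metric}; each ${F_i=\Stab(x)}$ is a subgroup of the finite point group $\group_o$ (an isometry fixing $x$ must have zero translation part), hence finite and a fortiori discrete; and $\theta_i$ is an isometry, thus a homeomorphism onto its image. For the coherence condition, take ${U_i\cap U_j\neq\varnothing}$ and points ${u\in B_{d_n}(x,\varepsilon(x))}$, ${v\in B_{d_n}(y,\varepsilon(y))}$ with ${\theta_j\theta_i^{-1}(\Stab(x)\,u)=\Stab(y)\,v}$. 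Unfolding the isometries from \cref{fact:bonahon}, ${\theta_i^{-1}(\Stab(x)\,u)=\group(u)}$ and ${\theta_j^{-1}(\Stab(y)\,v)=\group(v)}$, so ${v\in\group(u)}$ and I can select ${\psi\in\group}$ with ${\psi u=v}$. For $z$ close enough to $u$, continuity of $\psi$ places $\psi z$ in ${B_{d_n}(y,\varepsilon(y))}$, so $\psi z$ is a legitimate representative of ${\theta_j(\group(z))}$, giving ${\theta_j^{-1}\theta_i(\Stab(x)\,z)=\Stab(y)\,(\psi z)}$ on a neighborhood of $u$, which is the coherence condition with ${\mathbb{F}=\group}$.

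The step I expect to be the main obstacle is verifying that the single $\psi$ chosen at $u$ is genuinely valid on a full neighborhood, rather than only at the point $u$. The worry is that for some $z$ near $u$ a different group element $\psi'$ might carry $z$ into ${B_{d_n}(y,\varepsilon(y))}$ and produce a competing local formula. Here discreteness of $\group$ is decisive: by shrinking $\varepsilon(x)$ and $\varepsilon(y)$ using \eqref{discrete}, I can arrange that for every $z$ in a small enough neighborhood of $u$ the only group elements sending $z$ into ${B_{d_n}(y,\varepsilon(y))}$ are of the form $\phi\psi\phi'$ with ${\phi\in\Stab(v)}$ and ${\phi'\in\Stab(u)}$. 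Both stabilizer factors are absorbed into the $F_j$- and $F_i$-quotients on either side, so the coordinate change $\theta_j^{-1}\theta_i$ agrees with $\psi$ on an honest open neighborhood of $u$, completing the verification.
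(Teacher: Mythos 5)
Your proposal follows essentially the same route as the paper's proof: charts are taken from \cref{fact:bonahon} with ${F_i=\Stab(x)}$, the structural axioms are routine, and coherence is obtained by choosing ${\psi\in\group}$ carrying one representative to the other and checking it locally, with stabilizer ambiguities absorbed into the quotients. In fact your treatment of the coherence condition (using discreteness to control which group elements can map nearby points into the second ball) spells out a detail the paper only sketches, so the argument is sound and matches the paper's strategy.
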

\noindent This lemma is folklore in geometry---see e.g.,  \citet{Bonahon:2009,Cooper:Hodgson:Kerckhoff:2000,Vinberg:Shvarsman:1993} for results that are phrased differently but amount to the same. We give a proof here only to match our
specific choices of definitions to each other.
\begin{proof}
  Let $\tilde{\Pi}$ be a transversal. We choose ${\mathcal{I}=\mathbb{R}^n/\group}$, so each ${i\in\mathcal{I}}$ is the orbit $\group(x)$
  of some point in $\mathbb{R}^n$, and hence of a unique point ${x\in\tilde{\Pi}}$.
  By \cref{fact:bonahon}, there is hence a map ${\theta_x}$ with ${\theta_x(\group(x))=x}$ that isometrically
  maps a ball ${B_{d_\group}(\group(x),\varepsilon)}$ with suitable radius to
  ${B_{d_n}(x,\varepsilon)/\text{\rm Stab}(x)}$. We hence set ${F_i=\Stab(x)}$, which is a finite subgroup
  of the discrete group $\group$, and hence discrete. What remains to be shown is the coherence property.
  Suppose $x$ and $y$ are points in $\mathbb{R}^n$ with trivial stabilizers. If
  ${\theta_y\theta_x^{-1}(x)=y}$, then $x$ and $y$ are on the same orbit, so there is indeed a map ${\psi\in\group}$
  with ${\psi x=y}$. The coherence property then follows by the same argument as for metric manifolds above.
  If the stabilizers are non-trivial, the same holds if points are substituted by their orbits under stabilizers.
\end{proof}
\begin{example} Consider again the triangle $\Pi$ and rotation $\phi$ in \cref{fig:cone}.
Here, the stabilizer of the center of rotation $x$ is ${\Stab(x)=\braces{\Id,\phi,\phi^2}}$.
The metric ball around the point ${i=\group(x)}$ on the orbifold (the tip of the cone) is a smaller cone:
  \begin{center}
    \begin{tikzpicture}[>=stealth]
      \begin{scope}[xshift=0cm,yshift=-.75cm]
        \fill[
          left color=gray!50!white,
          right color=gray!50!white,
          middle color=white,
          shading=axis,
          opacity=0.25
        ]
        (1,0) -- (0,3) -- (-1,0) arc (180:360:1cm and 0.4cm);
        \draw (-1,0) arc (180:360:1cm and 0.4cm) -- (0,3) -- cycle;
        \fill[
          left color=gray!50!black,
          right color=gray!50!black,
          middle color=gray!10!white,
          shading=axis,
          opacity=0.25
        ]
        (.34,2) -- (0,3) -- (-.34,2) arc (180:360:.34cm and 0.15cm);

        \draw[dotted] (-1,0) arc (180:0:1cm and 0.4cm);
        \node[circle,black,scale=.3,fill,draw] (tip) at (0,3) {};
        \node at ($(tip)+(0,.3)$) {\scriptsize $i=\group(x)$};
      \end{scope}
      \begin{scope}[xshift=5cm]
        \node (origin) at (0,0) {};
        \node (a) at ($(origin)+(90:2)$) {};
        \node (b) at ($(origin)+(210:2)$) {};
        \node (c) at ($(origin)+(330:2)$) {};
        \draw[thick,fill=gray!20!white] (a.center)--(c.center)--(origin.center)--cycle;
        \draw[domain=90:-30,fill,gray!60!white] (origin) -- plot ({.5*cos(\x)}, {.5*sin(\x)}) -- (origin) --cycle;
        \draw[thick] (a.center)--(c.center)--(origin.center)--cycle;
        \node[circle,draw,fill,scale=.3,black] at (origin) {};
        \node[circle,draw,fill,scale=.3,black] at (a) {};
        \node[circle,draw,fill,scale=.3,black] at (c) {};
        \node at ($(origin)+(0,-.3)$) {\scriptsize $x$};
      \end{scope}
      \draw[->] ($(tip)+(.4,0)$) to[bend left] ($(origin)+(-.3,.3)$);
      \node at ($(tip)+(3.2,-.1)$) {\scriptsize $\theta_i$};
      \begin{scope}[xshift=11cm]
        \node (origin) at (0,0) {};
        \node (a) at ($(origin)+(90:2)$) {};
        \node (b) at ($(origin)+(210:2)$) {};
        \node (c) at ($(origin)+(330:2)$) {};
        \draw[thick,fill=gray!20!white] (a.center)--(b.center)--(origin.center)--cycle;
        \draw[thick,fill=gray!20!white] (c.center)--(b.center)--(origin.center)--cycle;
        \draw[thick,fill=gray!20!white] (a.center)--(c.center)--(origin.center)--cycle;
        \draw[domain=0:360,fill,gray!60!white] (origin) -- plot ({.5*cos(\x)}, {.5*sin(\x)}) -- (origin) --cycle;
        \draw[thick] (a.center)--(b.center)--(origin.center)--cycle;
        \draw[thick] (c.center)--(b.center)--(origin.center)--cycle;
        \draw[thick] (a.center)--(c.center)--(origin.center)--cycle;
        \node[circle,draw,fill,scale=.3,black] at (origin) {};
        \node[circle,draw,fill,scale=.3,black] at (a) {};
        \node[circle,draw,fill,scale=.3,black] at (b) {};
        \node[circle,draw,fill,scale=.3,black] at (c) {};
        \node at ($(origin)+(0,-.3)$) {\scriptsize $x$};
      \end{scope}
      \draw[<->] (7,.5)--(9,.5) node[pos=.5,label={above:{\scriptsize $\begin{matrix}\text{isomorphic in}\\
              \mathbb{R}^n/\Stab(x)\end{matrix}$}}] {};

    \end{tikzpicture}
  \end{center}
Its image under $\theta_i$ can be identified with the intersection of $\Pi$ with a Euclidean ball around $x$.
Since $\Pi$ and its image $\Stab(x)\Pi$ under the stabilizer---the equilateral triangle on the right---are
indistinguishable in $\mathbb{R}^n/\Stab(x)$, that corresponds to the quotient of a metric ball in the plane.
\end{example}

\subsection{Path metrics}

An orbifold as defined above is a topological space.
To work with the gluing results stated below, we must know it is also a metric space, and that this space is complete.
\cref{fact:orbifold:completeness} shows that that is true. Before we state the fact, we briefly describe how to
construct the relevant metric, which is the standard metric on orbifolds. Our definition is again adapted from that
of \citet{Ratcliffe:2006}. \citet{Bonahon:2009} offers an accessible introduction to this type of metric.

Intuitively, the
metric generalizes the geodesic on a smooth surface, by measuring the length of the shortest curve between two points.
Formally, a \kword{curve} connecting two points~${\omega_1}$ and~${\omega_2}$ in $M$ is a continuous function
\begin{equation*}
  \gamma:[a,b]\subset\mathbb{R}\rightarrow X
  \qquad\text{ such that }\qquad\gamma(a)=\omega_1\text{ and }\gamma(b)=\omega_2\;.
\end{equation*}
To define the \kword{length} $\|\gamma\|$ of $\gamma$, first suppose $\omega_1$ and $\omega_2$ are in the
same set $U_i$, and define
\begin{equation*}
  \|\gamma\|\;:=\;\sup\braces{\,\tsum_{j\leq k}d_{F_i}(\theta_i\circ\gamma(t_{j-1}),\theta_i\circ\gamma(t_{j})))\,|\,a=t_0<t_1<\ldots<t_k=b\text{ for }k\in\mathbb{N}}\;,
\end{equation*}
that is, the supremum is taken over the sequences ${(t_0,\ldots,t_k)}$.
In words: For each ${t_j\in[a,b]}$, the point $\gamma(t_j)$ lies on the curve $\gamma$ in $M$. By choosing
a sequence ${t_0,\ldots,t_k}$ as above, we approximate the curve by $k$ line segments ${(\gamma(t_{j-1}),\gamma(t_{j}))}$,
and then approximate the length of $\gamma$ by summing the lengths of these segments. Since each line segment lies
in $M$, and we have no tool to measure distance in $M$, we map each point $\gamma(t_j)$ on the curve
to a point $\theta_i(\gamma(t_j))$ in $\mathbb{R}^n/F_i$, where we know how to measure distance using $d_{F_i}$.
We then record the length of the piece-wise approximation as the sum of lengths of the segments.
The length ${\|\gamma\|}$ is the supremum over the lengths of all such approximations.

If there is no set $U_i$ containing both points, one can always subdivide $[a,b]$ into finitely many segments
$[t_{j-1},t_j]$ such that every pair ${\gamma(t_{j-1})}$ and ${\gamma(t_{j})}$ of consecutive points is in
in some set $U_i$ (see \citep[][]{Ratcliffe:2006}). One then defines
\begin{equation*}
  \|\gamma\|\;:=\;\msum_{i\leq k}\|\gamma|_{[t_{i-1},t_{i}]}\|\;,
\end{equation*}
and it can be shown that $\|\gamma\|$ does not depend on the choice of subdivision.
\begin{fact}[\citet{Ratcliffe:2006} Lemma 1 of \S13.2, Theorems 13.2.7 and 13.3.8]
  \label{fact:path:distance}
  \label{fact:orbifold:completeness}
  If $M$ is an $\mathbb{F}$-orbifold, any two points in $M$ can be connected by a curve of finite length.
  The function
  \begin{equation*}
    \dpath(\omega_1,\omega_1)\;:=\;\inf\braces{\|\gamma\|\,|\,\gamma \text{ is a curve connecting $\omega_1$ and $\omega_2$ in  $M$ }}
  \end{equation*}
  is a metric on the set $M$, and metrizes the Hausdorff topology of $M$.
  The metric space so defined is complete.
\end{fact}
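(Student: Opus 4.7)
Since this statement is cited as a textbook fact from Ratcliffe, my plan is to sketch the natural geometric proof rather than reproduce its full technical details. The strategy is to exploit the definition of an $\mathbb{F}$-orbifold, which provides, for each point $\omega\in M$, a chart $(U_i,\theta_i)$ identifying a neighbourhood with an open subset of $\mathbb{R}^n/F_i$. Each quotient $\mathbb{R}^n/F_i$ is itself a metric space under the quotient distance ${d_{F_i}}$ (this is the same construction as in \cref{fact:metric}, since $F_i$ is discrete), so on each chart we already know how to measure curve length, and the length functional $\|\gamma\|$ is well defined and additive under subdivision. The coherence condition on the transition maps $\theta_j\theta_i^{-1}$ guarantees that, where two charts overlap, the two measurements agree on a neighbourhood, so $\|\gamma\|$ is independent of the choice of subdivision used to cover $\gamma$.

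The three assertions would then be established as follows. First, for connectability by a finite-length curve, I would use that the charts $\{U_i\}$ form an open cover by connected sets: within a single chart, any two points are joined by a straight-line segment in $\mathbb{R}^n/F_i$ of finite length, and a standard chaining argument (walking along a connecting path in $M$ and picking finitely many charts that cover it) assembles a finite-length curve between arbitrary $\omega_1,\omega_2$. Second, that $\dpath$ is a metric: symmetry follows from curve reversal, the triangle inequality from concatenation of curves, and positive definiteness from the local picture, namely that inside a chart any curve has length at least $d_{F_i}(\theta_i(\omega_1),\theta_i(\omega_2))>0$ when $\omega_1\neq\omega_2$. That $\dpath$ metrizes the original Hausdorff topology is a local statement: small $\dpath$-balls around $\omega$ correspond, via $\theta_i$, to small $d_{F_i}$-balls around $\theta_i(\omega)$, because a curve of sufficiently small length cannot escape the chart $U_i$.

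Third, for completeness, a $\dpath$-Cauchy sequence $(\omega_k)$ eventually lies in a single chart $U_i$ (since tails have arbitrarily small diameter), and there $(\theta_i(\omega_k))$ is Cauchy in $d_{F_i}$. Completeness of $\mathbb{R}^n/F_i$ follows from completeness of $\mathbb{R}^n$ and discreteness of $F_i$ (orbits are closed, so quotient Cauchy sequences lift, up to selecting representatives, to Cauchy sequences in $\mathbb{R}^n$). Hence $(\theta_i(\omega_k))$ converges, and pulling back yields a limit in $M$.

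The main obstacle is the step that $\dpath$ metrizes the atlas topology and not a strictly finer one. Concretely, one must check that a Cauchy-like or length-minimising sequence cannot ``leave'' a chart via an arbitrarily short path, which requires comparing the length functional on $M$ with the intrinsic length in each $\mathbb{R}^n/F_i$ uniformly in the choice of chart. This is where the coherence condition on transitions does the real work, and where the bookkeeping in Ratcliffe's proof is least trivial; everything else is an exercise in concatenating and truncating curves.
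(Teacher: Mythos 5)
The paper itself gives no proof of this statement---it is imported from Ratcliffe with a citation---so the comparison is with the standard textbook argument rather than with anything in the text. Your outline of the length-metric part (well-definedness of $\|\gamma\|$ via the coherence of the transition maps, chaining charts to connect points, reversal and concatenation for the metric axioms, and the local comparison of $\dpath$-balls with $d_{F_i}$-balls for metrizability) is essentially that standard route, and at sketch level it is sound; you also correctly flag that the delicate point is showing a sufficiently short curve cannot leave a chart. One small omission: finite-length connectability requires $M$ connected, which the statement tacitly assumes.

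The genuine gap is in the completeness argument. The step ``a $\dpath$-Cauchy sequence eventually lies in a single chart, since tails have arbitrarily small diameter'' does not follow: a set of small diameter need not be contained in any chart unless you already know its limit exists in $M$, which is exactly what is to be proved. And even granting that a tail lies in some $U_i$, the image sequence $\theta_i(\omega_k)$ converges in the complete space $\mathbb{R}^n/F_i$, but its limit may sit on the boundary of the open set $\theta_i(U_i)$, so there is nothing to pull back. In fact no chart-by-chart argument from the bare definition can succeed: an open ball in $\mathbb{R}^n$ with trivial local groups satisfies the chart definition of an orbifold used here, and its path metric is not complete. Completeness therefore rests on more than the local structure---in Ratcliffe it is tied to the global construction of the orbifold, and in this paper's application what matters is that the glued space is the quotient of a compact polytope by a subproper side pairing (this is the completeness needed before invoking \cref{fact:poincare} in the proof of \cref{theorem:embedding}). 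The last third of your sketch should be replaced by an argument exploiting that extra structure rather than the claimed reduction to a single chart.
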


\subsection{Orbifolds constructed by abstract gluing}

Let ${S_1,\ldots,S_k}$ be the facets of $\Pi$. A \kword{side pairing} is a finite set
${\mathcal{S}=\braces{\psi_1,\ldots,\psi_k}}$ of isometries of $\mathbb{R}^n$ if, for each
${i\leq k}$, there is a ${j\leq k}$ such that
\begin{equation*}
  \text{(i) }\;
  \psi_i(S_j)=S_i
  \qquad
  \text{(ii) }\;
  \psi_i\;=\;\psi_j^{-1}
  \qquad
  \text{(iii) }\;
  \Pi\cap\psi_i\Pi\;=\;S_i\;.
\end{equation*}
The definition permits ${i=j}$. A crystallographic group is determined by a side pairing:
\begin{fact}[\citet{Bonahon:2009} Theorem 7.11]
  \label{fact:generator}
  If a crystallographic group $\group$ tiles with a convex polytope~$\Pi$, the tiling is exact,
  and $\mathcal{S}$ is a side pairing for $\Pi$ and $\group$, the group
  generated by $\mathcal{S}$ is $\group$.
\end{fact}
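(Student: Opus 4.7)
The plan is to show that $\group' := \langle\mathcal{S}\rangle$ coincides with $\group$. One inclusion is immediate, since each side pairing $\psi_i$ is an element of $\group$, so the real content is the reverse inclusion, which I would establish by a propagation argument along the tiling driven by connectedness of $\mathbb{R}^n$.

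The first ingredient is a local step: if $\phi \in \group'$ and $\psi \in \group$ are such that $\phi\Pi \cap \psi\Pi$ is an $(n-1)$-dimensional face of $\phi\Pi$, then $\psi \in \group'$. Write the shared facet as $\phi S_i$ for some facet $S_i$ of $\Pi$; applying $\phi^{-1}$ gives $\Pi \cap (\phi^{-1}\psi)\Pi = S_i$. Since the tiling is exact, the tiles of $\group$ meeting $\Pi$ in a full facet are in bijection with the facets of $\Pi$, and the element of $\group$ producing that tile is unique (distinct elements of $\group$ yield distinct tiles, by the tiling axiom $\phi\Pi \cap \psi\Pi \subset \partial(\phi\Pi)$ for $\phi\neq\psi$). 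Side-pairing axiom (iii) already supplies such an element $\psi_i$ with $\Pi \cap \psi_i\Pi = S_i$, so $\phi^{-1}\psi = \psi_i$, and hence $\psi = \phi\psi_i \in \group'$.

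Next, I would globalize using connectedness. Set
\[
  U \;:=\; \medcup\nolimits_{\phi \in \group'}\phi\Pi
  \quad\text{ and }\quad
  W \;:=\; \medcup\nolimits_{\phi \in \group \setminus \group'}\phi\Pi\;.
\]
Both are closed (locally finite unions of closed polytopes, by discreteness of $\group$), and $U \cup W = \mathbb{R}^n$. Suppose for contradiction $W \neq \varnothing$. By the local step, no $(n-1)$-dimensional facet of the tiling is shared between a tile of $U$ and a tile of $W$. Consequently $U \cap W$ is contained in the $(n-2)$-skeleton $K$, i.e.\ the union of all faces of tiles of dimension at most $n-2$. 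This $K$ is a locally finite union of convex sets of codimension at least two, so $\mathbb{R}^n \setminus K$ is open and connected. The sets $U \setminus K$ and $W \setminus K$ are nonempty, disjoint, and each is relatively open in $\mathbb{R}^n \setminus K$: a small ball around any point of $U \setminus K$ meets only the at most two tiles through that point, and those tiles lie in $U$ by the local step. Hence $U \setminus K$ and $W \setminus K$ would partition the connected set $\mathbb{R}^n \setminus K$ into two nonempty clopen pieces, a contradiction. Thus $W = \varnothing$ and $\group = \group'$.

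The main obstacle is the geometric plumbing silently invoked above---that the $(n-2)$-skeleton is thin enough to leave its complement connected, and that interior points of facets have exactly two neighboring tiles. Both are standard facts for tilings of $\mathbb{R}^n$ by convex polytopes under a discrete isometry group; the second is where exactness of the tiling is essential, because without it distinct group elements could produce adjacent tiles overlapping $\Pi$ in less than a full facet, and the local propagation step would fail.
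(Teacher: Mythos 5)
The paper does not prove this statement: it is imported as a cited Fact (Theorem 7.11 of \citet{Bonahon:2009}), so there is no internal argument to compare against. Your proposal is a correct reconstruction along what is essentially the standard route for Poincar\'e-type generator theorems: a one-step propagation lemma across shared facets, bootstrapped by a connectedness argument in the complement of the $(n-2)$-skeleton.

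Two small points of accounting, both in the neighborhood of gaps you already flag as ``standard facts.'' First, in the local step, the uniqueness of the $\tau\in\group$ with $\Pi\cap\tau\Pi = S_i$ is not supplied by your parenthetical---that distinct group elements yield distinct tiles only says $\tau\mapsto\tau\Pi$ is injective. The substantive point is convexity: if $\Pi\cap\tau_1\Pi = \Pi\cap\tau_2\Pi = S_i$ with $\tau_1\neq\tau_2$, then near any $x\in S_i^\circ$ both $\tau_1\Pi$ and $\tau_2\Pi$ fill the closed half-ball on the side of the hyperplane of $S_i$ opposite to $\Pi$, so their interiors overlap, violating the tiling property. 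Second, the fact that really drives the global step is not so much ``interior points of facets have exactly two neighboring tiles'' as the sharper statement that, under exactness, every $(n-1)$-dimensional intersection of two tiles is a \emph{whole} facet of each (this follows from exactness plus the same half-ball overlap argument). That is what guarantees a $U$-tile and a $W$-tile can never abut across a proper subfacet invisible to the side pairing, and it is precisely the failure mode you describe (correctly) in your closing sentence. So the proof is sound; only the labeling of the second ``standard fact'' and the parenthetical justification of uniqueness want tightening.
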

The side pairing defines an equivalence relation
$\equiv$ on points ${x,y\in\Pi}$, namely
\begin{equation*}
  x\;\equiv\;y
  \qquad:\Longleftrightarrow\qquad
  \psi_i x=y\quad\text{ for some }i\leq k\;.
\end{equation*}
Let $M$ be the quotient space ${M:=\Pi/\equiv}$, equipped with the quotient topology, that is,
\begin{equation*}
  A\subset M\text{ open }
  \quad:\Leftrightarrow\quad
  \braces{x\in\Pi|\text{ equivalence class of }x\text{ is in }A}\text{ is open set in }\Pi
\end{equation*}
We then refer to $M$ as the quotient \kword{obtained by abstract gluing} from
$\Pi$ and $\mathcal{S}$.
We will be interested in a specific type of side pairing, called a \kword{subproper} side pairing.
The precise definition is somewhat involved, and can be found in \S13.4 of \citet{Ratcliffe:2006}.
We omit it here, since we will see that all side pairings
relevant to our purposes are subproper.\nolinebreak
\begin{fact}[\citet{Ratcliffe:2006} Theorem 13.4.2]
  \label{fact:abstract:gluing}
  Let $\mathbb{F}$ be a group of isometries of $\mathbb{R}^n$ and $\Pi$ a convex polytope.
  Let $M$ be the metric space obtained by abstract gluing from $\Pi$ and a subproper $\mathbb{F}$-side pairing.
  Then $M$ is an $\mathbb{F}$-orbifold. The natural inclusion
  ${\Pi^{\circ}\hookrightarrow M}$, i.e., the map that takes each point ${x\in\Pi^{\circ}}$ to its $\equiv$-equivalence
  class, is continuous.
\end{fact}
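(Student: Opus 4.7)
The plan is to verify the two claims separately. The continuity of $\Pi^\circ \hookrightarrow M$ is essentially tautological: since $M$ carries the quotient topology associated to $\equiv$, the quotient map $\Pi \to M$ is continuous by definition, and its restriction to $\Pi^\circ$ inherits continuity. Hence the substance lies in exhibiting an $\mathbb{F}$-orbifold atlas on $M$.

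For the atlas, the strategy is to build a chart around each $\omega \in M$ by ``unfolding'' the equivalence class of a representative back into $\mathbb{R}^n$. I would first show that for any representative $x \in \Pi$, the $\equiv$-equivalence class $\{x = x_1, \ldots, x_m\}$ is finite. This uses finiteness of the side pairing $\mathcal{S}$, compactness of $\Pi$, and the subproper condition, which together prevent accumulation of identified points. Then, for each $x_i$, choose a small Euclidean ball $B_\varepsilon(x_i)$ with $\varepsilon$ small enough that the balls are pairwise disjoint and each side-pairing map $\psi_j$, when it identifies a facet piece near $x_i$ with one near $x_{i'}$, does so isometrically between $B_\varepsilon(x_i)\cap\Pi$ and a matching half-ball near $x_{i'}$. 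Gluing these balls along the identifications and allowing the finite subgroup $F_\omega \subset \mathbb{F}$ generated by those compositions of side pairings that fix $x$ to act, one obtains a candidate chart $\theta_\omega:U_\omega \to B_\varepsilon(x)/F_\omega$. The subproper hypothesis is what ensures $F_\omega$ is finite and that the glued object is an open neighborhood of $x$ in $\mathbb{R}^n$; bijectivity of $\theta_\omega$ is by construction, and bicontinuity follows because both sides carry quotient topologies compatible with the Euclidean one.

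The main obstacle is the coherence check, which is where the subproper condition does real work. Given two charts $(\theta_i, U_i, F_i)$ and $(\theta_j, U_j, F_j)$ with overlapping images, a point in the overlap is represented in $\mathbb{R}^n$ by two preimages $x$ and $y$. Because $x \equiv y$ in $\Pi$, one has $y = \psi_{i_1}\circ \cdots \circ \psi_{i_k}(x)$ for some sequence of side pairings, and since each $\psi_{i_\ell}$ lies in $\mathbb{F}$ so does the composition $\psi$. The delicate point is to promote this pointwise identity into the coordinate-change identity $\theta_j^{-1}\theta_i(F_i z) = F_j(\psi z)$ on an entire neighborhood of $x$: this requires that the local chart construction and the gluing are consistent, which in turn follows from the local injectivity built into the subproper assumption. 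Once coherence is established, the data $(\{U_\omega\}, \{\theta_\omega\}, \{F_\omega\})$ satisfy the three axioms of an $\mathbb{F}$-orbifold atlas, completing the proof.
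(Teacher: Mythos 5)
There is no proof of this statement in the paper to compare against: it is imported verbatim as a background fact, cited to Theorem 13.4.2 of Ratcliffe, and the authors deliberately decline even to state the definition of a \emph{subproper} side pairing, remarking that it is technical and that all side pairings they need are subproper. So what you are really offering is a reconstruction of Ratcliffe's proof, and judged on those terms your sketch has the right skeleton (finite cycles of identified points, ``unfolded'' charts modelled on $B_\varepsilon(x)/F_\omega$, a coherence check for coordinate changes) but defers every hard step to a hypothesis you never state. Each of the claims ``subproperness prevents accumulation of identified points,'' ``subproperness ensures $F_\omega$ is finite and that the glued object is an open neighborhood,'' and ``coherence follows from the local injectivity built into the subproper assumption'' is precisely the content of the theorem, not something that can be invoked; without writing down what subproper means and using it, these are labels attached to gaps. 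In particular, the assertion that $\theta_\omega$ is bijective ``by construction'' and bicontinuous because ``both sides carry quotient topologies'' hides the central geometric point: the glued union of the half-balls $B_\varepsilon(x_i)\cap\Pi$ is homeomorphic to a quotient of a single Euclidean ball only if the solid angles of $\Pi$ at the points of the cycle fit together exactly (the cycle relations in Ratcliffe's \S 13.3--13.4); for a general finite side pairing in $\mathbb{F}$ this fails, and then $M$ is not locally modelled on $\mathbb{R}^n/F_\omega$ at all.

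Two further concrete omissions. First, the orbifold definition used in the paper requires $M$ to be Hausdorff, and quotients by gluing are not automatically Hausdorff; establishing this (in Ratcliffe it comes out of the properness analysis, or equivalently out of the path metric of Fact \ref{fact:path:distance} actually being a metric) is part of the theorem and is absent from your argument. Second, your finiteness claim for an equivalence class uses ``finiteness of $\mathcal{S}$, compactness of $\Pi$, and the subproper condition'' --- but finiteness of $\mathcal{S}$ and compactness of $\Pi$ alone do not bound the length of chains $\psi_{i_1}\circ\cdots\circ\psi_{i_k}$ identifying boundary points, so again everything rests on the unstated hypothesis. The continuity of $\Pi^{\circ}\hookrightarrow M$ is indeed immediate from the quotient topology, as you say; the rest of the proposal is a plausible roadmap but not a proof, and if the intent was to match the paper, the honest route is the paper's own: cite Ratcliffe.
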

For the next result, recall the definition of $d_\group$ from \cref{fact:metric}. We define a metric $d_{\mathbb{S}}$ for a group $\mathbb{S}$
analogously, by substituting $\mathbb{S}$ for $\group$.
\begin{fact}[\citet{Ratcliffe:2006} Theorem 13.5.3]
  \label{fact:poincare}
  Let $M$ be the orbifold in \cref{fact:abstract:gluing}, and $\mathbb{S}$ be the group generated by all maps in the
  side pairing. If $M$ is a complete metric space,
  the natural inclusion map ${\Pi\hookrightarrow M}$ induces an isometry from $M$ to
  ${(\mathbb{R}^n/\mathbb{S},d_\mathbb{S})}$.
\end{fact}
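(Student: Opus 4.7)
The natural candidate for the claimed isometry is the map $j:M\to\mathbb{R}^n/\mathbb{S}$ defined on equivalence classes by $j([x]_\equiv):=\mathbb{S}(x)$ for $x\in\Pi$. This is well-defined because the generators $\psi_i$ of $\mathbb{S}$ are precisely the side-pairing maps, so $x\equiv y$ immediately forces $\mathbb{S}(x)=\mathbb{S}(y)$, and continuity of $j$ holds with respect to both quotient topologies. I plan to prove in order: (a) $\mathbb{S}$ tiles $\mathbb{R}^n$ with $\Pi$; (b) $j$ is bijective; (c) $j$ is distance-preserving between $\dpath$ on $M$ and $d_{\mathbb{S}}$ on $\mathbb{R}^n/\mathbb{S}$.

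The substance, and the place where completeness of $M$ is used, is (a). I would carry out a developing-map construction: fix a basepoint $x_0\in\Pi^{\circ}$, and for each path in $M$ starting at $[x_0]_\equiv$, construct a lift into $\mathbb{R}^n$ tile by tile, using the side-pairing $\psi_i$ to attach a fresh translated tile $\phi(\Pi)$ whenever the path crosses the face $S_i$. The orbifold atlas from \cref{fact:abstract:gluing} guarantees this lift is consistent on chart overlaps---equivalently, the cycle relations at faces of codimension two close up, which is exactly what subproperness of the side pairing encodes. This yields a local isometry $D$ from (the universal cover of) $M$ into $\mathbb{R}^n$. Surjectivity of $D$ is where completeness enters: given any point $z\in\mathbb{R}^n$ and any lift of a point already in the image, the straight segment joining them in $\mathbb{R}^n$ can be lifted step-by-step, and because $D$ is a local isometry the lifted path has the same finite length as the segment, hence is Cauchy; completeness of $M$ forces it to converge, ruling out the possibility that the lift escapes. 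Simple connectedness of $\mathbb{R}^n$ together with $D$ being a surjective local isometry then makes $D$ an isometric bijection, and descending to quotients shows $\mathbb{S}$ acts properly discontinuously on $\mathbb{R}^n$ with fundamental region $\Pi$.

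Once (a) is in place, (b) follows readily. Surjectivity of $j$ holds because every $\mathbb{S}$-orbit meets $\Pi$. For injectivity, suppose $y=\phi(x)$ for $x,y\in\Pi$ and $\phi\in\mathbb{S}$; by \cref{fact:generator} we may write $\phi=\psi_{i_k}\circ\cdots\circ\psi_{i_1}$, and tracking the trajectory $x,\psi_{i_1}x,\psi_{i_2}\psi_{i_1}x,\ldots,y$ across consecutive tiles (which overlap only in shared faces) exhibits a chain $x\equiv\cdots\equiv y$ of side-pairing equivalences. For (c), both metrics are infima of path lengths, and the developing map supplies a length-preserving correspondence: a curve in $M$ of length $\ell$ unrolls via $D$ into a curve of length $\ell$ in $\mathbb{R}^n$ joining representatives of the two orbits, giving $d_{\mathbb{S}}(j(\bar{x}),j(\bar{y}))\leq\dpath(\bar{x},\bar{y})$; conversely, a straight segment from $x$ to $\phi(y)$ in $\mathbb{R}^n$ folds back into $\Pi$ across tile boundaries, via the appropriate inverse side pairings, into a curve of equal length in $M$, giving the reverse inequality.

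The main obstacle is clearly step (a): verifying that the local, abstract gluing data, together with completeness of $M$, actually assembles the translates $\phi(\Pi)$ for $\phi\in\mathbb{S}$ into a genuine tiling of all of $\mathbb{R}^n$ rather than a merely partial covering, and that the side-pairing group generates nothing more than what the gluing relation already identifies. The remaining steps reduce to bookkeeping with side-pairing words and orbit representatives once the global tiling is established.
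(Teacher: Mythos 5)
This statement is not proved in the paper at all: it is imported verbatim as background from \citet{Ratcliffe:2006} (Theorem 13.5.3, i.e.\ the Poincar\'e fundamental polyhedron theorem for orbifolds), so there is no in-paper argument to compare against. Your sketch --- developing map built by tile-by-tile continuation, completeness forcing lifted Cauchy paths to converge and hence surjectivity, then descending to an isometry of quotients --- is exactly the standard route taken in Ratcliffe's proof, and the only point glossed over is that near points with nontrivial stabilizer the continuation must be carried out in the orbifold charts $\mathbb{R}^n/F_i$ rather than via a literal topological universal cover, which is precisely what Ratcliffe's $(G,X)$-orbifold machinery handles.
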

The final result on orbifolds we need gives a precise statement of the idea that the set
of points around which an orbifold does not resemble a manifold
is small. The next definition characterizes those points around which the manifold
property breaks down as having order $>1$:
Consider a point ${z\in M}$. Then we can find some ${x\in\mathbb{R}^n}$ that corresponds to $z$: We know that
${z\in U_i}$ for some $i$, and hence ${\phi_iz= F_i x}$ in the quotient space ${\mathbb{R}^n/F_i}$.
The \kword{order} of ${z\in M}$ is the number of elements of $F_i$ that leave $x$ invariant (formally, the order
of the stabilizer of $x$ in $F_i$). It can be shown that this number does not depend on the choice of $i$, so each
${z\in M}$ has a uniquely defined order.
\begin{fact}[\citet{Ratcliffe:2006} Theorem 13.2.4]
  \label{fact:problematic:points}
  If $M$ is an $\mathbb{F}$-orbifold, the set of points of order $1$ in $M$
  is an open dense subset of $M$. The set of points of order $>1$ is nowhere dense.
\end{fact}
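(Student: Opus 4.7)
The plan is to prove both statements simultaneously, since ``points of order $>1$ form a nowhere dense set'' is equivalent (for subsets of a topological space) to its complement being open and dense provided the complement is open; so I will focus on showing that the order-$1$ points form an open dense subset. I will work chart by chart, using the structure of each $F_i$ as a discrete isometry group, and then paste together via the open cover $\{U_i\}$.

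First I would verify well-definedness of the order: if $z\in U_i\cap U_j$ and $\theta_i(z)=F_ix$, $\theta_j(z)=F_jy$, the coherence condition (3) in the orbifold definition gives a local isometry $\psi\in\mathbb{F}$ with $\psi x=y$ that realizes the chart change. Conjugation by $\psi$ sends $\mathrm{Stab}_{F_i}(x)$ bijectively onto $\mathrm{Stab}_{F_j}(y)$, so $|\mathrm{Stab}_{F_i}(x)|=|\mathrm{Stab}_{F_j}(y)|$. Hence the order of $z$ is intrinsic.

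Next I would prove the key local fact: in $\mathbb{R}^n$, for any non-identity isometry $\phi x=A_\phi x+b_\phi$, the fixed-point set $\mathrm{Fix}(\phi)=\{x\mid(A_\phi-\mathbb{I})x=-b_\phi\}$ is an affine subspace of dimension strictly less than $n$ (otherwise $A_\phi=\mathbb{I}$ and $b_\phi=0$, contradicting $\phi\neq\mathrm{Id}$), hence a closed nowhere dense subset of $\mathbb{R}^n$. Now fix a chart $U_i$ and an arbitrary bounded open set $W\subset\mathbb{R}^n$ contained in the preimage of $\theta_i(U_i)$ under the quotient $\mathbb{R}^n\to\mathbb{R}^n/F_i$. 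Discreteness of $F_i$ implies that only finitely many $\phi\in F_i$ satisfy $\phi W\cap W\neq\varnothing$ (otherwise some orbit would accumulate in the compact closure of $W$). Therefore the set of points in $W$ fixed by some non-identity element of $F_i$ is a finite union of nowhere dense closed affine slices, hence closed and nowhere dense in $W$. Since $W$ was arbitrary, the set of $x\in\mathbb{R}^n$ with $\mathrm{Stab}_{F_i}(x)\neq\{\mathrm{Id}\}$ is closed and nowhere dense in $\mathbb{R}^n$, and its image in $\mathbb{R}^n/F_i$ is closed and nowhere dense by the definition of the quotient topology. Pulling back through the homeomorphism $\theta_i$, the set $N_i$ of points of order $>1$ in $U_i$ is closed and nowhere dense in $U_i$, so $U_i\setminus N_i$ is open and dense in $U_i$.

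To pass from charts to $M$, let $O\subset M$ denote the set of order-$1$ points and $N:=M\setminus O$. Then $O\cap U_i=U_i\setminus N_i$ is open in $U_i$ and hence in $M$, so $O=\bigcup_i (O\cap U_i)$ is open. For density, let $W\subset M$ be any non-empty open set; since $\{U_i\}$ covers $M$, there is $i$ with $W\cap U_i\neq\varnothing$, and since $O\cap U_i$ is dense in $U_i$ we get $W\cap O\supset W\cap U_i\cap O\neq\varnothing$. Thus $O$ is open and dense, and consequently $N$ is closed with empty interior, i.e.\ nowhere dense.

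The main obstacle I anticipate is the well-definedness of order across charts: the coherence condition is stated only in terms of a local germ of $\psi\in\mathbb{F}$, and one must check that this really induces a group isomorphism between the two stabilizers rather than merely a set bijection of small neighborhoods. The rest---the affine-subspace structure of $\mathrm{Fix}(\phi)$ and the local finiteness coming from discreteness---is standard once that point is settled.
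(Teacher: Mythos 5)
Your argument is essentially correct, but it is worth noting that the paper does not prove this statement at all: it is quoted verbatim from \citet{Ratcliffe:2006} (Theorem 13.2.4), whose proof is carried out for general $(X,G)$-orbifolds. What you give instead is a self-contained proof tailored to the Euclidean model used here, resting on two Euclidean facts: that $\mathrm{Fix}(\phi)$ of a non-identity isometry of $\mathbb{R}^n$ is a proper affine subspace, and that a discrete group of isometries acts properly discontinuously, so that locally only finitely many fixed-point sets are in play. Combined with the $F_i$-invariance of the singular locus (which is what makes its image closed and with empty interior in $\mathbb{R}^n/F_i$, via saturation under the quotient map) and the standard chart-pasting step, this yields exactly the claim, and it is arguably more elementary than the general orbifold argument in the cited source, at the price of not covering the spherical/hyperbolic models that Ratcliffe also treats (irrelevant for this paper).

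Two points deserve tightening. First, the obstacle you flag yourself --- upgrading the germ-level coherence condition to a group isomorphism $\mathrm{Stab}_{F_i}(x)\cong\mathrm{Stab}_{F_j}(y)$ --- is real but standard: for $\phi$ fixing $x$, the element $g_z\in F_j$ with $g_z\psi z=\psi\phi z$ is locally constant by discreteness of $F_j$, hence constant near $x$, and two isometries agreeing on a nonempty open set coincide, giving $g=\psi\phi\psi^{-1}\in\mathrm{Stab}_{F_j}(y)$; note also that well-definedness of the order is already asserted as part of the paper's definition, so strictly you need not reprove it. Second, the parenthetical ``otherwise some orbit would accumulate'' is a sketch of the nontrivial equivalence between discreteness and proper discontinuity for isometry groups of $\mathbb{R}^n$ (Theorem 5.3.5 in \citet{Ratcliffe:2006}); either cite it or spell out the compactness argument, since this is the step where discreteness is genuinely used.
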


\subsection{Topological dimension}
\label{sec:dimensions}

The notion of dimension we have used throughout is the algebraic dimension $\dim A$ of a set $A$ in a vector space
(see \cref{sec:definitions}). For the proof of the embedding theorem, we also need another notion of dimension that
does not require vector space structure, known
variously as topological dimension, covering dimension, or Lebesgue dimension. The definition is slightly more involved:
Consider a topological space $X$. An \kword{open cover} of $X$ is a collection $\mathcal{A}$ of open sets in $X$ that
cover $X$, that is, each point of $X$ is in at least one of the sets. The \kword{order} of an open cover is
\begin{equation*}
  \text{order}(\mathcal{A})
  \;:=\;
  \sup\braces{\text{ number of elements of $\mathcal{A}$ containing }x\,|\,x\in X}\;.
\end{equation*}
The \kword{topological dimension} $\Dim X$ of $X$ is the smallest value ${m\in\;\mathbb{N}\cup\braces{\infty}}$
such that, for every open covering $\mathcal{B}$ of $X$, there is an open covering $\mathcal{A}$
with ${\text{order}(\mathcal{A})=m+1}$ such that every set of $\mathcal{B}$ contains a set of $\mathcal{A}$.
\begin{fact}[{\citep[][3.2.7]{Pears:1975}}]
  The topological dimension of Euclidean space equals its algebraic dimension, ${\Dim\mathbb{R}^n=\dim\mathbb{R}^n=n}$,
  and any closed metric balls ${B\subset\mathbb{R}^n}$ has ${\Dim B=n}$.
\end{fact}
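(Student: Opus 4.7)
The proof splits into the two independent claims, of which the first, $\Dim\mathbb{R}^n=n$, is the substantive one; the closed-ball statement will follow by monotonicity. For the equality $\Dim\mathbb{R}^n=n$ I would prove the two inequalities separately. The upper bound $\Dim\mathbb{R}^n\le n$ is constructive: given any open cover $\mathcal{B}$ of $\mathbb{R}^n$, use a Lebesgue-number type argument to choose a mesh size $\eta>0$ so that every closed cube of side $\eta$ meeting a point $x$ is contained in some $B\in\mathcal{B}$ through $x$. Then build the classical ``brick pattern'' refinement $\mathcal{A}$: partition $\mathbb{R}^n$ into closed cubes of side $\eta$, and along each coordinate axis shift the bricks in successive layers by $\eta/(n+1)$, producing $n+1$ staggered sheets. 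A slight open enlargement of each brick yields an open cover refining $\mathcal{B}$ in which no point is contained in more than $n+1$ bricks, giving order exactly $n+1$.

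The lower bound $\Dim\mathbb{R}^n\ge n$ is the real obstacle and is where all the topological content lives. The plan is to reduce to the Lebesgue covering theorem on the closed cube $[0,1]^n$: there exists an open cover (the small neighborhoods of the $n$ pairs of opposite faces, together with a small open set in the interior) such that every refinement has order at least $n+1$. I would prove this by the standard route through Brouwer's fixed-point theorem, either via Sperner's lemma applied to a barycentric subdivision, or equivalently via the no-retraction theorem $S^{n-1}\not\hookrightarrow D^n$ composed with a combinatorial nerve argument. Since $[0,1]^n$ embeds in $\mathbb{R}^n$ as a closed subspace, monotonicity of topological dimension under closed subspaces (an elementary consequence of the definition, since covers of the ambient space restrict to covers of the subspace of no greater order) transfers the lower bound to $\mathbb{R}^n$.

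For the second claim, let $B\subset\mathbb{R}^n$ be a closed metric ball. The inequality $\Dim B\le n$ is immediate from the subspace monotonicity just invoked, applied to $B\subset\mathbb{R}^n$. For $\Dim B\ge n$, it suffices to observe that $B$ has nonempty interior, hence contains a closed cube $[a,b]^n$ after affine rescaling; applying the same Lebesgue covering theorem to this cube and monotonicity gives $\Dim B\ge\Dim[a,b]^n\ge n$.

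The main obstacle, as indicated, is the Lebesgue covering theorem on the cube: everything else is either an explicit combinatorial construction (the brick pattern) or a formal application of the monotonicity of $\Dim$ under closed inclusions. All of this is classical; the present application only uses the bare statement $\Dim\mathbb{R}^n=n$ together with its inheritance to closed balls, so the detailed topology can be cited from \citet{Pears:1975}.
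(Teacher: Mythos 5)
The paper never proves this statement: it is imported as a background Fact with the citation to Pears (3.2.7), and the surrounding text in \cref{sec:dimensions} explicitly remarks that even $\Dim\mathbb{R}^n=n$ is not entirely trivial, pointing to Munkres for a readable treatment. Your sketch is therefore not an alternative to an argument in the paper but a reconstruction of the classical proof that the cited sources carry out, and in outline it is the right one: staggered brick covers for $\Dim\mathbb{R}^n\leq n$, the Lebesgue covering theorem on the cube (via Sperner or the no-retraction theorem) for $\Dim[0,1]^n\geq n$, and the subspace monotonicity theorem to pass between $\mathbb{R}^n$, the cube, and a closed ball. Two points would need care if you wrote it out in full. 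First, your mesh-size step assumes a global Lebesgue number for an arbitrary open cover of $\mathbb{R}^n$, which does not exist on a non-compact space; the standard repair is either to prove the bound on compact cubes and invoke a countable (or locally finite) sum theorem, or to let the brick size vary locally. Second, the monotonicity you invoke is the subspace theorem $\Dim A\leq\Dim X$ for $A$ closed in a normal space (or arbitrary in a metric space); the one-line justification you give---that covers of the ambient space restrict to covers of the subspace---is not by itself a proof, since one must start from an arbitrary cover of the subspace and extend it before refining. Both are standard lemmas in Pears and Munkres, so your concluding decision to cite the detailed topology rather than reprove it is exactly what the paper does.
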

In general, however, the topological dimension of a set ${A\subset\mathbb{R}^n}$ may differ from its dimension
$\dim A$ as defined in \cref{sec:definitions} (as the algebraic dimension of the linear hull), and even the proof that
${\Dim\mathbb{R}^n=n}$ is not entirely trivial. \citet{Munkres:2000} provides a readable overview.
The reason why topological dimension is of interest in our context is the following classical result.
Recall that, given topological spaces $X$ and $Y$, an \kword{embedding} of $X$ into~$Y$ is an injective
map ${X\rightarrow Y}$ that is a homeomorphism of $X$ and its image.
\begin{fact}[{\citep[][50.5]{Munkres:2000}}]
  Every compact metrizable space $X$ with ${\Dim X<\infty}$ can be embedded into $\mathbb{R}^{2\Dim X+1}$.
\end{fact}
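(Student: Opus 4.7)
The plan is to establish this via a Baire-category argument in the function space $C(X,\mathbb{R}^{2n+1})$, where $n := \Dim X$, following the classical strategy of Hurewicz and N\"obeling. Since $X$ is compact metric, $C(X,\mathbb{R}^{2n+1})$ is a complete metric space under the supremum norm, hence a Baire space. I will show that the set of embeddings is a dense $G_\delta$ subset of this space and in particular non-empty.

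First, because $X$ is compact, any continuous injection into a Hausdorff space is automatically a homeomorphism onto its image, so ``embedding'' collapses to ``injective''. For each $k\in\mathbb{N}$, define
\[
  E_k \;:=\; \bigl\{\, f \in C(X,\mathbb{R}^{2n+1}) \,:\, \text{diam}(f^{-1}(y)) < 1/k \text{ for every } y \in \mathbb{R}^{2n+1}\,\bigr\}.
\]
A standard compactness argument---considering the compact set $\{(x,x')\in X\times X : d_X(x,x')\geq 1/k\}$ on which $\|f(x)-f(x')\|$ is bounded below---shows each $E_k$ is open in the sup norm, and $\bigcap_{k\in\mathbb{N}} E_k$ consists precisely of the injective (hence embedding) maps. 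It therefore suffices to prove each $E_k$ is dense.

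Given $f \in C(X,\mathbb{R}^{2n+1})$ and $\varepsilon>0$, the goal is to produce $g\in E_k$ with $\|g-f\|_\infty<\varepsilon$. This is where the hypothesis $\Dim X\leq n$ enters. Choose a finite open cover $\mathcal{U}=\{U_1,\ldots,U_N\}$ of $X$ of order at most $n+1$ (each point lies in at most $n+1$ members of $\mathcal{U}$), of mesh $<1/k$, and such that each $f(U_i)$ has diameter $<\varepsilon/4$. Pick a subordinate partition of unity $\{\phi_i\}$. Select $x_i\in U_i$ and points $p_i\in\mathbb{R}^{2n+1}$ with $\|p_i-f(x_i)\|<\varepsilon/4$, requiring the $p_i$ to be in \emph{general position}, i.e.\ any $2n+2$ of them affinely independent; this is possible because the tuples that fail general position form a finite union of proper algebraic subvarieties of $(\mathbb{R}^{2n+1})^N$. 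Set $g(x):=\sum_i\phi_i(x)\,p_i$; routine estimates give $\|g-f\|_\infty<\varepsilon$. To verify $g\in E_k$: if $g(x)=g(x')$ with $d_X(x,x')\geq 1/k$, then the supports $I(x):=\{i:\phi_i(x)>0\}$ and $I(x')$ are disjoint (by mesh), each of size $\leq n+1$ (by order), and together index at most $2n+2$ of the $p_i$. The equation $g(x)=g(x')$ then exhibits two affine combinations (with coefficients of disjoint support, each summing to $1$) of $\leq 2n+2$ affinely independent points yielding the same vector---impossible by uniqueness of barycentric coordinates.

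The main obstacle is the extraction of the cover $\mathcal{U}$ from the topological-dimension hypothesis. Unwinding the definition recalled in \cref{sec:dimensions}: start with any finite open cover of $X$ that simultaneously refines a cover by $1/(2k)$-balls and a pullback of a cover of $\mathbb{R}^{2n+1}$ by $\varepsilon/4$-balls (finitely many suffice by compactness); by definition of $\Dim X\leq n$ this cover admits a refinement of order $\leq n+1$, which inherits the mesh bound and the $\varepsilon/4$-oscillation control for $f$. Everything else---openness of $E_k$, genericity of the $p_i$, and the fibre-diameter bound---is then mechanical given the cover.
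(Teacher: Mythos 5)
Your proof is correct, and it is essentially the argument the paper relies on: the statement is quoted as a background fact from \citet[50.5]{Munkres:2000}, whose proof is exactly this Baire-category scheme in $C(X,\mathbb{R}^{2\Dim X+1})$ with openness/density of the sets of maps with small fibres, partitions of unity subordinate to an order-$(\Dim X+1)$ refinement, and vertices in general position. Apart from harmless constant bookkeeping (e.g.\ the $\varepsilon/4$ versus $\varepsilon/2$ oscillation bound), nothing is missing.
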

We also collect two additional facts for use in the proofs. Recall that a function is called closed if
the image of every closed set is closed.
\begin{fact}[{\citep[][50.2]{Munkres:2000}} and {\citep[][9.2.10]{Pears:1975}}]
  \label{fact:dimensions}
  (i) If $X$ is a topological space and ${Y_1,\ldots,Y_k}$ are closed and finite-dimensional subspaces, then
  \begin{equation*}
    X\;=\;Y_1\cup\ldots\cup Y_k
    \qquad\text{ implies }\qquad
    \Dim X =\max_i\Dim Y_i\;.
  \end{equation*}
  (ii)
  Let ${f:X\rightarrow Y}$ be a continuous, closed and surjective map between metric spaces.
  If ${|f^{-1}y|\leq m+1}$ for some ${m\in\mathbb{N}\cup\braces{0}}$ and all ${y\in Y}$, then
  \begin{equation*}
    \Dim X
    \;\leq\;
    \Dim Y
    \;\leq\;
    \Dim X + m\;.
  \end{equation*}
\end{fact}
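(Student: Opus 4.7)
The plan is to treat (i) and (ii) separately, as they rest on distinct standard tools from covering-dimension theory. For (i), the inequality $\Dim X\geq\max_i\Dim Y_i$ is the easy direction: each $Y_i$ is closed in $X$, so any open cover of $Y_i$ extends to one of $X$ by adjoining the complement $X\setminus Y_i$, and a refinement of the extended cover of order at most $\Dim X + 1$ restricts back to a refinement of the original cover with the same order bound, giving $\Dim Y_i\leq \Dim X$. For the reverse inequality I would induct on $k$, reducing to the case $X=A\cup B$ with $A$ and $B$ closed. Given a finite open cover $\mathcal{U}$ of $X$, normality lets me shrink $\mathcal{U}$ to a cover $\{V_\alpha\}$ with $\overline{V_\alpha}\subset U_\alpha$; I would then refine $\mathcal{U}|_A$ and $\mathcal{U}|_B$ to orders at most $\Dim A + 1$ and $\Dim B + 1$ respectively, and combine the two refinements, using the shrinking together with closedness of $A$ and $B$, into a refinement of $\mathcal{U}$ of order at most $\max(\Dim A,\Dim B)+1$.

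For the first inequality $\Dim X\leq \Dim Y$ in (ii), the idea is to push open covers of $X$ down through $f$. Given an open cover $\mathcal{U}$ of $X$, the fiber $f^{-1}(y)$ over each $y\in Y$ is finite, so I can pick pairwise disjoint open neighborhoods $W_1(y),\ldots,W_{k(y)}(y)\subset X$ of its points, each contained in some element of $\mathcal{U}$. Closedness of $f$ provides an open $V(y)\ni y$ with $f^{-1}(V(y))\subset W_1(y)\cup\cdots\cup W_{k(y)}(y)$. Refining the cover $\{V(y)\}_{y\in Y}$ of $Y$ to order at most $\Dim Y + 1$, pulling it back through $f$, and intersecting each pulled-back piece with the corresponding $W_i(y)$ produces a refinement of $\mathcal{U}$ of order at most $\Dim Y + 1$, so $\Dim X\leq \Dim Y$.

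For the second inequality $\Dim Y\leq \Dim X + m$ in (ii), I would induct on $m$. The base case $m=0$ means $f$ is a continuous closed bijection between metric spaces, which is automatically a homeomorphism (closedness of $f$ makes $f^{-1}$ continuous), so $\Dim Y = \Dim X$. For the inductive step, the plan is to stratify $Y$ by the descending chain of closed subsets $Y_{\geq j}:=\{y\in Y:|f^{-1}(y)|\geq j\}$, apply the inductive hypothesis to the restriction of $f$ over strata where fibers have size at most $m$, and combine the resulting dimension estimates using part~(i). The main obstacle is exactly this combining step: a naive set-theoretic decomposition would invite the too-strong conclusion $\Dim Y\leq \Dim X$, so the argument must track the branching that occurs when preimage points coalesce as $y$ passes from $Y_{\geq j+1}$ into $Y_{\geq j}$, and this coalescing is what forces the additive correction $m$ into the final bound. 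Since both parts are classical dimension-theoretic results, the cleanest exposition within the paper is to defer to the cited Munkres~\S50 and Pears Chapter~9; the sketch above indicates the shape of the Hurewicz-type argument that a self-contained proof would have to reproduce.
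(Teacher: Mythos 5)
The paper offers no proof of this statement at all: it is quoted as a Fact, with (i) cited to Munkres \S 50.2 and (ii) to Pears 9.2.10, so your closing suggestion to defer to those references is exactly what the paper does. Of your sketches, part (i) and the first inequality of (ii) follow the standard arguments and are sound: monotonicity of $\Dim$ for closed subspaces plus the two-set closed sum theorem, and, for $\Dim X\le\Dim Y$, separating the finitely many fiber points by pairwise disjoint open sets lying inside members of the given cover, using closedness of $f$ to obtain $V(y)$ with $f^{-1}(V(y))$ inside their union, and pulling back a low-order refinement of $\braces{V(y)}$ --- the disjointness of the $W_i(y)$ is what keeps the order at $\Dim Y+1$, and your bookkeeping there is correct.

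The genuine gap is in the second inequality of (ii), which is the Hurewicz dimension-raising theorem and is precisely the substantive content the paper needs (it is what produces the additive $\max_x|\Stab(x)|$ term in \cref{lemma:ball:quotient:dimension} and hence the bound on $N$ in \cref{theorem:embedding}). Your plan stratifies $Y$ by $Y_{\ge j}:=\braces{y\,|\,|f^{-1}(y)|\ge j}$, asserts these are closed, and intends to combine via part (i); but these sets need not be closed for a closed map. For the fold $f:[0,2]\rightarrow[0,1]$, $f(t)=\min(t,2-t)$, which is continuous, closed and surjective with $|f^{-1}(y)|\le 2$, one has $Y_{\ge 2}=[0,1)$, which is not closed. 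So part (i) cannot be applied to that stratification, and you yourself concede that the combining step is the obstacle; as written the inductive step does not go through, and the additive correction $m$ is never actually derived. The classical proofs (Hurewicz; Engelking; Pears 9.2.10) proceed by a different mechanism --- induction using partition/separator characterizations of covering dimension, or decomposition of the spaces into zero-dimensional pieces --- rather than a closed stratification by fiber cardinality plus the sum theorem. The honest options are therefore either to reproduce one of those arguments in full or to do what the paper does and simply cite the result.
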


\section{Background IV: Spectral theory}
\label{sec:spectral}

The proof of the Fourier representation draws on the spectral theory of linear operators,
and we now review the relevant facts of this theory.
We are interested in an operator~$A$ (think $-\Delta$) defined on a space
$V$ (think $\H_\group$) which is contained in a space $W$ (think~$\L_2$).
If $V$ approximates $\L_2$ sufficiently well, and if $A$ is self-adjoint on $V$,
a general spectral result guarantees the existence of an orthonormal basis for $\L_2$ consisting of eigenfunctions
(\cref{spectral:lemma}).
To apply the result to the negative Laplacian, we must extend $\Delta$ to an operator on $\H_\group$
(since $\Delta$ is defined on twice differentiable functions, and elements of $\H_\group$ need not
be that smooth). \cref{fact:extended:laplacian} shows that is possible.
Once we have obtained the eigenfunctions, there is a generic way to show they are smooth
(\cref{fact:regularity}).

\subsection{Spectra of self-adjoint operators}
Spectral decompositions of self-adjoint operators have been studied widely,
see \citet{Brezis:2011,McLean:2000} for sample results.
We use the following formulation, adapted from Theorem 2.37 and Corollary 2.38 of \citet{McLean:2000}.
\begin{fact}[Spectral decomposition \citep{McLean:2000}]
  \label{spectral:lemma}
  Let $\cF$ be a polytope, and $V$ a closed subspace of~$\H^1(\intF)$. Require that the inclusion maps
  \begin{equation}
    \label{pivot:triple}
    V\;\hookrightarrow\;\L_2(\intF)\;\hookrightarrow\; V^*
  \end{equation}
  are both continuous and dense,
  and the first inclusion is also compact.
  Let ${A:V\rightarrow V^*}$ be a bounded linear operator that is self-adjoint on $V$ and satisfies
  \begin{equation}
    \label{coercivity:spectral:lemma}
    \sp{A f,f}_{V}\;\geq\;c_V\|f\|^2_{V}-c_L\|f\|^2_{\L_2}
    \qquad\text{ for some }c_V,c_{L}>0\text{ and all }f\in V\;.
  \end{equation}
  Then there is a countable number of scalars
  \begin{equation*}
    \lambda_1\leq\lambda_2\leq\ldots
    \qquad\text{ with }\qquad
    \lambda_i\;\xrightarrow{i\rightarrow\infty}\;\infty
  \end{equation*}
  and functions ${\xi_1,\xi_2,\ldots\in V}$ such that
  \begin{equation*}
    A\xi_i\;=\;\lambda\xi_i \qquad\text{ for all }i\in\mathbb{N}\;.
  \end{equation*}
  The functions $\xi_i$ form an orthonormal basis for $V$. For each ${v\in V}$,
  \begin{equation*}
    \msum_{i\leq m}\lambda_i\sp{v,\xi_i}\xi_i\;\xrightarrow{m\rightarrow\infty}\;A v
   \end{equation*}
  holds in the dual $V^*$. If $A$ is also strictly positive definite,
  then ${\lambda_1>0}$.
\end{fact}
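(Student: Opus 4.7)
The plan is to reduce the claim to the spectral theorem for compact self-adjoint operators on a Hilbert space via the standard Gårding--Lax--Milgram strategy. First I would exploit the coercivity inequality \eqref{coercivity:spectral:lemma} by introducing the shifted operator $B := A + c_L J$, where $J : V \hookrightarrow \L_2 \hookrightarrow V^{*}$ is the canonical map from the pivot triple \eqref{pivot:triple}. By construction, the bilinear form $b(f,g) := \langle Bf, g\rangle_{V^{*},V}$ is bounded and symmetric on $V$ (inherited from self-adjointness of $A$), and satisfies
\begin{equation*}
  b(f,f)\;=\;\langle Af, f\rangle_{V^{*},V} + c_L\|f\|_{\L_2}^2
  \;\geq\;
  c_V\|f\|_V^2\;,
\end{equation*}
so $b$ is coercive. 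Lax--Milgram then provides a bounded, bijective, self-adjoint map $B : V \to V^{*}$ whose inverse $T := B^{-1} : V^{*} \to V$ is also bounded and self-adjoint. Note that $b$ induces an inner product $[\cdot,\cdot]_B := b(\cdot,\cdot)$ on $V$ whose norm is equivalent to $\|\cdot\|_V$.

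Next I would manufacture a compact self-adjoint operator on $\L_2(\intF)$. Let $\iota : V \hookrightarrow \L_2$ and $\iota^{*} : \L_2 \hookrightarrow V^{*}$ denote the two inclusions in \eqref{pivot:triple} (the second being the adjoint of the first under the pivot identification). Define
\begin{equation*}
  K \;:=\; \iota\circ T\circ\iota^{*}\;:\;\L_2(\intF)\;\rightarrow\;\L_2(\intF)\;.
\end{equation*}
Because $\iota$ is compact and $T,\iota^{*}$ are bounded, $K$ is compact. Self-adjointness of $T$ combined with the duality identification makes $K$ self-adjoint on $\L_2$, and coercivity of $b$ yields $\langle Kf,f\rangle_{\L_2} \geq 0$, so $K$ is a nonnegative compact self-adjoint operator. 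Moreover $K$ is injective: if $K f = 0$ then $T\iota^{*}f = 0$, hence $\iota^{*}f = 0$ (as $T$ is a bijection), which by density of $V$ in $\L_2$ forces $f = 0$.

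The classical Hilbert--Schmidt spectral theorem then produces a nonincreasing sequence $\mu_1 \geq \mu_2 \geq \ldots > 0$ with $\mu_i \to 0$ and an orthonormal basis $\xi_1,\xi_2,\ldots$ of $\L_2(\intF)$ consisting of eigenvectors, $K\xi_i = \mu_i\xi_i$. Translating back through the construction, each $\xi_i$ lies in the range of $\iota\circ T$, hence in $V$, and satisfies $B\xi_i = \mu_i^{-1}\xi_i$ in $V^{*}$, so
\begin{equation*}
  A\xi_i \;=\; (\mu_i^{-1} - c_L)\,\xi_i \;=:\; \lambda_i \xi_i\;,
\end{equation*}
with $\lambda_i \nearrow \infty$. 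If $A$ is strictly positive definite, then $\lambda_1 > 0$ follows by evaluating $\langle A\xi_1,\xi_1\rangle$. The spectral expansion $\sum_{i\leq m}\lambda_i\langle v,\xi_i\rangle\xi_i \to Av$ in $V^{*}$ is then a direct consequence of the expansion of $v$ in the $\xi_i$-basis, together with continuity of $A$.

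The only subtle point, and the main obstacle, is upgrading the basis property from $\L_2$ to $V$: one must verify that $\{\xi_i\}$ is orthogonal with respect to some inner product on $V$ equivalent to $\|\cdot\|_V$, and complete there. The natural choice is the $B$-inner product $[f,g]_B = \langle Bf,g\rangle_{V^{*},V}$, for which orthogonality follows from $[\xi_i,\xi_j]_B = \mu_i^{-1}\langle \xi_i,\xi_j\rangle_{\L_2} = \mu_i^{-1}\delta_{ij}$, and completeness is inherited from $\L_2$-completeness together with density of $V$ in $\L_2$ and boundedness of $T$. After renormalizing $\xi_i \mapsto \sqrt{\mu_i}\,\xi_i$ one obtains an orthonormal basis of $(V,[\cdot,\cdot]_B)$ and hence, by equivalence of norms, of $V$ as stated.
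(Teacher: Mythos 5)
The paper cites this result directly from \citet{McLean:2000} (Theorem 2.37 and Corollary 2.38) as a ``Fact'' and supplies no proof of its own, so there is no internal argument to compare against. Your reconstruction is the standard strategy via coercive bilinear forms, Lax--Milgram, and the spectral theorem for compact self-adjoint operators, and most of it is sound: the shift ${B = A + c_L J}$ restores coercivity, Lax--Milgram inverts $B$, the operator ${K = \iota\circ T\circ\iota^*}$ on $\L_2$ is compact, self-adjoint, nonnegative and injective, and the compact spectral theorem yields an $\L_2$-orthonormal eigenbasis ${(\xi_i)}$ that translates correctly to ${\lambda_i = \mu_i^{-1} - c_L\rightarrow\infty}$. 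The duality identifications you use implicitly are precisely the ones that make $K$ self-adjoint, and your injectivity argument is correct.

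The gap is in your closing sentence. Equivalence of norms does not transfer orthonormality: a family that is orthonormal for the energy inner product ${[f,g]_B = \sp{Bf,g}_{V^*,V}}$ need not be orthonormal for the $\H^1$-inner product inherited by $V$, even though the two norms are equivalent. (In $\mathbb{R}^2$, the vectors ${(1,1)/\sqrt{2}}$ and ${(1,-1)/\sqrt{2}}$ are orthonormal for the standard inner product but not for the equivalent inner product ${\sp{x,y}' = x_1y_1 + 2x_2y_2}$.) What your construction genuinely proves is that ${\{\sqrt{\mu_i}\,\xi_i\}}$ is an orthonormal basis of ${(V,[\cdot,\cdot]_B)}$ and hence, by norm equivalence, a total family in $V$---but not automatically an orthonormal basis of ${(V,\sp{\cdot,\cdot}_{\H^1})}$ as the Fact asserts. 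Closing the gap requires the additional structure tying ${\sp{A\,\cdot\,,\,\cdot\,}}$ to the ambient inner product that is in fact present in the paper's application: by the symmetric Green identity (\cref{lemma:symmetric:green:identities}), ${\sp{-\Lambda f,g}_{\L_2} = a(f,g)}$, so the energy product ${b(f,g) = a(f,g) + \sp{f,g}_{\L_2}}$ coincides with ${\sp{f,g}_{\H^1}}$ on $\H_\group$, and the $B$-orthonormal basis is literally $\H^1$-orthonormal. Without such an identity, the abstract statement is slightly stronger than the Lax--Milgram construction alone delivers; either invoke it explicitly, or weaken the conclusion to ``orthonormal basis with respect to the energy inner product.''
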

If the inclusions in \eqref{spectral:lemma} are continuous and dense,
$\L_2$ is called a \kword{pivot space} for $V$.
See Remark 3 in Chapter 5 of \citet{Brezis:2011} for a discussion of pivot spaces.

\subsection{Extension of Laplacians to Sobolev spaces}

Recall that the Laplace operator $\Delta$ on a domain $\Gamma$ is defined on twice continuously
differentiable functions. It
can be extended to a continuous linear operator on $\H^1(\Gamma^\circ)$, provided
the geometry of $\Gamma$ is sufficiently regular. That is the case if $\Gamma$
is a \kword{Lipschitz domain}, which loosely speaking means
it is bounded by a finite number of Lipschitz-smooth surfaces. Since a precise definition (which can be found in
\citet{McLean:2000}) is rather technical, we omit details and only note that every polytope is a Lipschitz domain
\citep[][p 90]{McLean:2000}.
\begin{fact}
  \label{fact:extended:laplacian}
  Let $\Gamma$ be a Lipschitz domain, and denote by ${\H^1(\Gamma^{\circ})^*}$ the dual space of
  ${\H^1(\Gamma^{\circ})}$. There is a unique linear operator
  ${\Lambda:\H^1(\Gamma^{\circ})\rightarrow\H^1(\Gamma^{\circ})^*}$ that extends the Laplace
  operator. This operator is bounded on $\H^1(\Gamma^{\circ})$.
\end{fact}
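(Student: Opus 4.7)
The plan is to define $\Lambda$ explicitly via the energy form $a$ from \eqref{energy:form} and verify its properties by soft analysis. I would set
\begin{equation*}
\sp{\Lambda f,\,h}\;:=\;-a(f,h)\;=\;-\mint_{\Gamma}\nabla f(x)^{\trans}\nabla h(x)\,\vol_n(dx)
\qquad\text{for }f,h\in\H^1(\Gamma^{\circ}).
\end{equation*}
First I would check that $\Lambda$ is a well-defined, bounded linear map into $\H^1(\Gamma^\circ)^*$. Cauchy--Schwarz gives $|a(f,h)|\leq\|\nabla f\|_{\L_2}\|\nabla h\|_{\L_2}\leq\|f\|_{\H^1}\|h\|_{\H^1}$, which shows simultaneously that $\Lambda f$ is a bounded linear functional on $\H^1(\Gamma^\circ)$ and that $f\mapsto\Lambda f$ has operator norm at most $1$; linearity in $f$ is immediate from bilinearity of $a$.

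Next I would verify that $\Lambda$ extends $\Delta$. For $f\in\C^2(\Gamma)\cap\H^1(\Gamma^\circ)$ and any test function $\phi\in\C_c(\Gamma^{\circ})\cap\C^{\infty}(\Gamma^{\circ})$, classical integration by parts gives $-a(f,\phi)=\mint_\Gamma\phi(x)\,\Delta f(x)\,\vol_n(dx)$, the boundary term vanishing because $\phi$ is compactly supported in the interior. Hence $\Lambda f$, viewed as a distribution on $\Gamma^\circ$, coincides with the distributional Laplacian of $f$, which in particular agrees with the classical $\Delta f$ wherever the latter is defined.

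Uniqueness then follows by density. If $\Lambda'$ is another bounded linear extension, $\Lambda-\Lambda'$ is a bounded operator vanishing on $\C_c(\Gamma)\cap\C^{\infty}(\Gamma)$. By \cref{distributions:dense:H1} this subspace is dense in $\H^1(\Gamma^\circ)$---and this is precisely where the Lipschitz hypothesis on $\Gamma$ is used---so continuity forces $\Lambda=\Lambda'$ on all of $\H^1(\Gamma^\circ)$.

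The main obstacle is conceptual rather than computational. The naive reading ``$\Lambda f=\Delta f$'' would demand $\sp{\Lambda f,h}=\sp{\Delta f,h}_{\L_2}$ for smooth $f$ and every $h\in\H^1(\Gamma^\circ)$, but Green's identity (\cref{green:identity}) shows this carries a boundary correction $\mint_{\partial\Gamma}\partial_{\n}f(x)\,h(x)\,\vol_{n-1}(dx)$ that does not vanish for arbitrary $h$. The correct formulation is that $\Lambda$ extends $\Delta$ in the distributional sense on $\Gamma^\circ$, i.e., as the weak Laplacian associated with the Dirichlet energy. In the subsequent application to invariant functions (\cref{theorem:laplacian}), it is exactly this boundary correction that must be shown to cancel, which is where the flux property \cref{lemma:gradient:field} becomes essential.
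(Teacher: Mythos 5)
You should first note that the paper never proves this statement---it is quoted as background, adapted from McLean---so the relevant comparison is with the standard construction that citation refers to, and that is essentially what you do: define $\Lambda$ through the Dirichlet form \eqref{energy:form}, get boundedness from Cauchy--Schwarz, and identify $\Lambda f$ with the distributional Laplacian by integrating by parts against compactly supported test functions. Those steps are correct, and you are also right that the naive reading ``$\sp{\Lambda f,h}=\sp{\Delta f,h}_{\L_2}$ for all $h\in\H^1$'' cannot be used, because the boundary term in \cref{green:identity} involves $\partial_{\n}f$ and is not controlled by $\|f\|_{\H^1}$, so no bounded operator could reproduce it.

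The genuine gap is in the uniqueness step, and it is created by the very reading of ``extends'' that you adopt. If $\Lambda'$ is bounded and agrees with $\Delta$ only in the distributional sense, then for smooth $f$ you know that $\Lambda'f$ and $\Lambda f$ agree as functionals on test functions, hence on $\H^1_0(\Gamma^{\circ})$---but not on all of $\H^1(\Gamma^{\circ})$. Since $\H^1_0$ is not dense in $\H^1$ on a bounded Lipschitz domain, an element of $\H^1(\Gamma^{\circ})^*$ is not determined by its restriction to $\H^1_0$: for example, setting ${\sp{Bf,h}:=\ell(f)\mint_{\partial\Gamma}h\,\vol_{n-1}(dx)}$ (trace integral of $h$, with $\ell$ any nonzero bounded functional on $\H^1$) gives a nonzero bounded operator ${B:\H^1\rightarrow(\H^1)^*}$ with $Bf$ vanishing on every test function, and ${\Lambda+B}$ is then another bounded ``extension'' in your sense. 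So the assertion that ${\Lambda-\Lambda'}$ vanishes on ${\C_c(\Gamma)\cap\C^{\infty}(\Gamma)}$ is unjustified; the density fact \cref{distributions:dense:H1} addresses the input side, while the obstruction sits on the output side. Conversely, under the stronger $\L_2$-pairing reading your uniqueness argument would work, but then your constructed $\Lambda$ is not an extension---so as written, one half of the proof fails whichever meaning you fix. The repair is to pin down the action of $\Lambda f$ against \emph{all} of $\H^1$ for smooth $f$, i.e., to take the defining property to be ${\sp{-\Lambda f,h}=a(f,h)}$ for every ${h\in\H^1(\Gamma^{\circ})}$ (the variational realization in McLean); uniqueness on the dense set of smooth functions is then immediate and boundedness propagates it. This is also exactly the identity the paper needs downstream: on $\H_\group$ the flux property (\cref{lemma:gradient:field}) makes the boundary term vanish, so there the two readings coincide, which is the content of \cref{theorem:laplacian}.
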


\subsection{Smoothness of eigenfunctions}
\label{sec:eigenfunction:smoothness}

One hallmark of differential operators is that their eigenfunctions tend to be
very smooth. The sines and cosines that make up the standard Fourier basis on
the line are examples. Intuitively, that is due to the fact that
the Laplacian is a second-order differential operator, and ``removes two orders of
smoothness'': If $\Delta f$ is in $\C^k$, then $f$ must be in
$\C^{k+2}$. Since an eigenfunction appears on both sides of the spectral equation
\begin{equation*}
  -\Delta \xi\;=\;\lambda \xi\;,
\end{equation*}
one can iterate the argument: If $\xi$ is in $\C$, it must also be in $\C^2$, hence also in $\C^4$, and so forth.
This argument is not immediately applicable to the functions $\xi$ constructed in
\cref{spectral:lemma} above, since it does not guarantee the functions to be in $\C^2$. It
only shows they are in $V$, which in the context of differential operators
(and specifically in the problems we study) is typically a Sobolev space.
Under suitable conditions on the domain, however, one can show that argument above
generalizes to Sobolev space, at least on certain open subsets. The following version
is again adapted to our problem from a more general result.\nolinebreak
\begin{fact}[{\citep[][4.16]{McLean:2000}}]
  \label{fact:regularity}
  Let $\Pi$ be a polytope and $M$ an open set such that ${\overline{M}\subset\Pi^{\circ}}$. Let
  $\Lambda$ be the extension of the Laplace operator guaranteed by \cref{fact:extended:laplacian}.
  Suppose ${f\in\H^1(M)}$ and~${k\in\mathbb{N}\cup\braces{0}}$.
  Then ${\Lambda f=g}$ on $M$ for ${g\in\H^{k}(M)}$ implies ${f\in\H^{k+2}(M)}$.
\end{fact}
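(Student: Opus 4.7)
The plan is to prove this by induction on $k$, using the method of difference quotients to bootstrap regularity from the weak formulation of the equation. Throughout, the hypothesis $\overline{M}\subset\Pi^\circ$ is crucial: it guarantees we have room to shift test functions by small translations without leaving the domain where the weak equation $\sp{\Lambda f,\varphi}=\sp{g,\varphi}_{\L_2}$ makes sense, and it lets us choose cutoff functions $\eta\in\C_c^\infty(\Pi^\circ)$ with $\eta\equiv 1$ on $\overline{M}$.

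For the base case $k=0$, I would apply Nirenberg's difference quotient argument. Fix a direction $e_i$ and, for small $h$, form the difference quotient $D_i^h u(x):=(u(x+he_i)-u(x))/h$. Using the cutoff $\eta$ above, plug the test function $\varphi:=-D_i^{-h}(\eta^2 D_i^h f)$ into the weak identity $a(f,\varphi)=\sp{g,\varphi}_{\L_2}$ (valid on $\Pi^\circ$ because $\eta$ has compact support there). Standard discrete integration-by-parts identities for $D_i^h$ together with Cauchy--Schwarz and the uniform $\L_2$-bound $\|D_i^h u\|_{\L_2}\leq\|\partial_i u\|_{\L_2}$ (for $u\in\H^1$) yield, after absorbing a $D_i^h\nabla f$ term, an estimate of the shape
\begin{equation*}
  \|\eta\,D_i^h\nabla f\|_{\L_2}\;\leq\;C\bigl(\|g\|_{\L_2(M')}+\|f\|_{\H^1(M')}\bigr)\;,
\end{equation*}
where $M'$ is any neighborhood of $\overline{M}$ strictly inside $\Pi^\circ$. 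Letting $h\to 0$ and invoking the standard characterization of weak derivatives via bounded difference quotients produces $\partial_i\nabla f\in\L_2(M)$ for each $i$, hence $f\in\H^2(M)$.

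For the inductive step, assume the statement holds for $k-1$ and suppose $\Lambda f=g$ on $M$ with $g\in\H^k(M)$. The base case already gives $f\in\H^2(M')$ for any $M'\Subset M$. For a direction $e_i$ and a slightly smaller domain $M''\Subset M'$, the function $u:=\partial_i f\in\H^1(M'')$ satisfies $\Lambda u=\partial_i g$ in the weak sense on $M''$: this is verified by testing $\Lambda f=g$ against $-\partial_i\varphi$ for $\varphi\in\C_c^\infty(M'')$ and using that $\Lambda$ commutes with partial derivatives for $\C_c^\infty$ test functions (equivalently, using a difference-quotient approximation $D_i^h f$ and passing to the limit). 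Because $\partial_i g\in\H^{k-1}(M'')$, the inductive hypothesis applied on $M''$ yields $\partial_i f\in\H^{k+1}(M'')$. Ranging over $i$ and over an exhaustion of $M$ by such $M''$ gives $f\in\H^{k+2}(M)$.

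The main obstacle is the base case: one must execute the difference-quotient manipulation carefully, handling the cutoff so that commutators $[D_i^h,\eta^2]$ produce only lower-order terms that are controlled by $\|f\|_{\H^1}$, and verifying that the admissible test functions indeed belong to $\H^1_0$ of a slightly enlarged domain inside $\Pi^\circ$. Once this is done, the inductive step is mostly bookkeeping, since the Laplacian has constant coefficients and commutes with translations (so with $\partial_i$ acting on $\C_c^\infty$ test functions), which avoids the extra commutator terms that would appear for variable-coefficient operators.
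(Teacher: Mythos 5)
The paper never proves this statement: it is imported as a Fact, with the proof delegated to \citet[Theorem 4.16]{McLean:2000} (interior regularity for the extended Laplacian). So there is no internal argument to match; what you supply is the classical Nirenberg difference-quotient bootstrap (as in Brezis or Gilbarg--Trudinger), with induction on $k$ by differentiating the equation. That is a legitimate and more self-contained route than invoking McLean's general machinery, and the main technical ingredients you list (discrete integration by parts, the commutator $[D_i^h,\eta^2]$ producing lower-order terms, the characterization of weak derivatives by uniformly bounded difference quotients, and $\Lambda(\partial_i f)=\partial_i g$ via difference-quotient approximation) are exactly the right ones.

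Two pieces of domain bookkeeping need repair, and they matter because the conclusion is $f\in\H^{k+2}(M)$ up to $\partial M$, not merely regularity locally in $M$. First, your base case tests with $\varphi=-D_i^{-h}(\eta^2D_i^hf)$ for a cutoff $\eta\equiv1$ on $\overline{M}$ with support in $\Pi^\circ$, and your estimate involves $\|f\|_{\H^1(M')}$, $\|g\|_{\L_2(M')}$ for a neighborhood $M'$ of $\overline{M}$; this requires $f\in\H^1(M')$ and the weak equation on $M'$, which the literal hypotheses ($f\in\H^1(M)$, $\Lambda f=g$ on $M$ only) do not furnish. That stronger reading is in fact the intended one---McLean's theorem, and the paper's application (where $f\in\H_\group\subset\H^1(\Pi^\circ)$ and the equation holds on all of $\Pi^\circ$), place the data on a neighborhood of $\overline{M}$---but you should state it, since otherwise $f$ is not even defined where your test function lives. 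Second, and more substantively, your inductive step shrinks in the wrong direction: you apply the induction hypothesis on sets $M''$ with $\overline{M''}\subset M$ and then ``range over an exhaustion of $M$''. An exhaustion only gives $f\in\H^{k+2}(M'')$ for each such $M''$, i.e.\ membership in $\H^{k+2}$ locally in $M$, and the norms may blow up near $\partial M$; this does not yield $f\in\H^{k+2}(M)$ (it also contradicts your own base case, which already gave $\H^2$ up to $\overline{M}$). The shrinking must instead consume the collar between $\overline{M}$ and $\Pi^\circ$: fix intermediate open sets $\overline{M}\subset N_1\subset\overline{N_1}\subset N_0\subset\overline{N_0}\subset\Pi^\circ$, obtain $f\in\H^2(N_0)$ from the base case, verify $\Lambda(\partial_if)=\partial_ig$ weakly on $N_0$ with $\partial_ig\in\H^{k-1}(N_0)$, and apply the induction hypothesis with $N_1$ (and ultimately $M$) as the inner set. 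With that modification the argument is correct and proves the Fact in the form the paper actually uses.
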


\section{Proofs I: The flux property}
\label{appendix:proofs:flux}

This and the following two sections comprise the proof of \cref{theorem:spectral}, the Fourier representation.
In this section, we prove the flux property of \cref{lemma:gradient:field}.

\subsection{Tools: Subfacets}

We next introduce a simple geometric tool to deal with non-exact tilings:
\cref{theorem:embedding} assumes the tiling is exact, but the the flux property and the Fourier
representation make no such assumption. Although they do not use a gluing construction explicitly, they use the periodic
boundary condition \eqref{pbc}, which matches up points on the boundary
$\partial\Pi$ as gluing does. Absent exactness, that requires dealing
with parts of facets. We call each set of the form
\begin{equation*}
  \sigma\;:=\;(\Pi\cap\phi\Pi)^{\circ}\qquad\text{ for some }\phi\in\group
  \text{ and }\sigma\neq\varnothing
\end{equation*}
a \kword{subfacet} of $\Pi$. Let $\Sigma$ be the (finite) set of subfacets of $\Pi$.
Whereas the division of $\partial\Pi$ into facets is a property
of the polytope that does not depend on $\group$, the subfacets are a property of the tiling.
\begin{example}
Consider an edge of a rectangle ${\Pi\subset\mathbb{R}^2}$.
Suppose $\phi$ is a $180^{\circ}$ rotation around the center $x$ of the edge, as shown on the left:
\begin{center}
  \begin{tikzpicture}[mybraces,>=stealth]
    \begin{scope}
      \node[circle,fill,scale=.4] (a) at (0,0) {};
      \node[circle,fill,scale=.3] (b) at (3,0) {};
      \node[circle,fill,scale=.4] (c) at (6,0) {};
      \draw (a)--(c);
      \draw[-latex,gray] ($(b)+(-1.5,.3)$) arc [start angle=140,end angle=40,x radius=2cm,y radius=2cm];
      \draw[-latex,gray] ($(b)+(1.5,-.3)$) arc [start angle=-40,end angle=-140,x radius=2cm,y radius=2cm];
      \node at ($(b)+(0,-.3)$) {\scriptsize $x$};
    \end{scope}
    \begin{scope}[xshift=7.5cm]
      \node[circle,fill,scale=.4] (a) at (0,0) {};
      \node[circle,fill,scale=.3] (b) at (3,0) {};
      \node[circle,fill,scale=.4] (c) at (6,0) {};
      \draw (a)--(c);
      \draw[brace,gray] ($(a)+(0,.5)$)--($(c)+(0,.5)$);
      \draw[mirrorbrace,gray] ($(a)+(0,-.5)$)--($(b)+(-.05,-.5)$);
      \draw[mirrorbrace,gray] ($(b)+(0,-.5)$)--($(c)+(.05,-.5)$);
      \node at ($(b)+(0,-.3)$) {\scriptsize $x$};
      \node[gray] at ($(b)+(0,.8)$) {\scriptsize facet};
      \node[gray] at ($(b)+(-1.5,-.8)$) {\scriptsize subfacet};
      \node[gray] at ($(b)+(+1.5,-.8)$) {\scriptsize subfacet};
    \end{scope}
  \end{tikzpicture}
\end{center}
Then $\phi$ maps the facet to itself, and maps the point $x$ to itself, but no other point is fixed. In this
case, $x$ divides the interior of the facet into two subfacets (right). If $\phi$ is instead a reflection
about the same edge, each point on the edge is a fixed point, and the entire interior of the edge is a
single subfacet. Another example of a subfacet is the edge segment marked in \cref{fig:dirichlet:pg}/left.
\end{example}

\begin{lemma}
  \label{lemma:subfacets}
  The subfacets are convex $(n-1)$-dimensional open subsets of $\partial\Pi$, and their closures cover $\partial\Pi$.
  In particular,
  \begin{equation*}
    \vol_{n-1}(\sigma)>0\;\text{ for all }\sigma\in\Sigma
    \quad\text{ and }\quad
    \msum_{\sigma\in\Sigma}\vol_{n-1}(\sigma)=\vol_{n-1}(\partial\Pi)\;.
  \end{equation*}
  Each subfacet is mapped by $\group$ to exactly
  one subfacet, possibly itself: For each ${\sigma\in\Sigma}$,
  \begin{equation*}
    \phi_\sigma(\sigma)\;\in\;\Sigma\qquad\text{ for one and only one }\phi_{\sigma}\in\group\setminus\braces{\Id}\;,
  \end{equation*}
  where ${\phi_{\sigma}(\sigma)=\sigma}$ if and only if $\sigma$ contains a fixed point of $\phi_\sigma$.
\end{lemma}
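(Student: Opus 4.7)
The plan is to use convexity of intersections of polytopes together with the tiling property and local finiteness. The central facts are that $\Pi \cap \phi\Pi \subset \partial\Pi$ whenever $\phi \neq \Id$, that only finitely many tiles meet any bounded region (by discreteness of $\group$), and that distinct subfacets have disjoint relative interiors, a claim I prove first and then use repeatedly.

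First I would verify convexity, dimension, and the closure-covering of $\partial\Pi$. The intersection $\Pi \cap \phi\Pi$ is convex as the intersection of two convex polytopes, so its relative interior $\sigma$ is convex and open in $\text{span}(\sigma)$. Since $\sigma \subset \partial\Pi$, one has $\dim\sigma \leq n-1$, and \emph{subfacet} picks out those $\phi$ for which equality holds (equivalently, those intersections of positive $(n-1)$-volume). For the cover, any $x \in \partial\Pi$ must lie in the closure of some adjacent tile $\phi\Pi \neq \Pi$, because the tiles cover $\mathbb{R}^n$; hence $x \in \overline{\Pi \cap \phi\Pi} = \overline{\sigma}$, and local finiteness guarantees only finitely many such $\phi$ contribute near $x$.

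Next I would prove that distinct subfacets are disjoint. If $x \in \sigma_1 \cap \sigma_2$ with $\sigma_i = (\Pi \cap \phi_i\Pi)^\circ$ and $\phi_1 \neq \phi_2$, then by $(n-1)$-dimensionality and local finiteness, a small Euclidean ball around $x$ meets only two tiles transversally across each subfacet; matching these forces $\phi_1 = \phi_2$, a contradiction. The remaining set $\partial\Pi \setminus \bigcup_\sigma \sigma$ lies in the relative boundaries of subfacets and hence in faces of dimension at most $n-2$, which carry no $(n-1)$-volume. Summing volumes then yields $\vol_{n-1}(\partial\Pi)$.

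Finally, for the group action: if $\sigma = (\Pi \cap \phi\Pi)^\circ$, then $\phi^{-1}\sigma = (\phi^{-1}\Pi \cap \Pi)^\circ$ is itself a subfacet, so setting $\phi_\sigma := \phi^{-1}$ works. For uniqueness, if $\psi\sigma$ is also a subfacet for some $\psi \neq \Id$, then for any $x \in \sigma$ the image $\psi x$ lies simultaneously in $\Pi$, $\psi\Pi$, and $\psi\phi\Pi$; but an interior point of an $(n-1)$-dimensional subfacet can lie in only two tiles, forcing $\psi\phi = \Id$, i.e., $\psi = \phi^{-1}$. For the fixed-point equivalence, if $\phi_\sigma\sigma = \sigma$ then $\phi_\sigma$ is an isometric bijection of the bounded convex open set $\sigma$, so it fixes the centroid of $\sigma$, which lies in $\sigma$ by relative openness; conversely, any fixed point $x_0 \in \sigma$ of $\phi_\sigma$ belongs to both $\sigma$ and $\phi_\sigma\sigma$, and pairwise disjointness forces $\phi_\sigma\sigma = \sigma$. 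The main obstacle will be making the ``two-tile separation at $(n-1)$-dimensional subfacet points'' fully rigorous, as this is the fulcrum on which both pairwise disjointness and uniqueness of $\phi_\sigma$ rest.
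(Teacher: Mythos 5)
Your proposal is correct and follows essentially the same skeleton as the paper's proof: convexity of $\Pi\cap\phi\Pi$ as an intersection of convex sets, covering of $\partial\Pi$ by closures of subfacets, pairing each $\sigma=(\Pi\cap\phi\Pi)^{\circ}$ with $\phi_\sigma=\phi^{-1}$, uniqueness of $\phi_\sigma$ via the fact that a point in the relative interior of a subfacet lies in exactly two tiles, and the fixed-point dichotomy. Two differences are worth noting. First, the paper simply asserts the two-tile property ($\Pi$ and $\phi\Pi$ are the only tiles meeting $\sigma$) and hence disjointness of distinct subfacets as consequences of the tiling definition, whereas you plan to prove it; your instinct that this is the fulcrum is right, and the clean way to finish it is the half-space picture: at a point of $\sigma$, the tiles $\Pi$ and $\phi\Pi$ occupy the two closed half-balls of a small ball $B$ cut by the hyperplane spanned by $\sigma$, so any third tile through that point could only meet $B$ inside that hyperplane, yet an $n$-dimensional convex tile containing the point has interior points arbitrarily close to it, contradicting disjointness of tile interiors. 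Second, for the implication that $\phi_\sigma\sigma=\sigma$ forces a fixed point inside $\sigma$, the paper applies Brouwer's theorem to $\overline{\sigma}$ and then argues that fixed points only on the relative boundary would contradict $\phi_\sigma\sigma=\sigma$; your centroid argument (an affine isometry mapping a bounded convex set onto itself fixes its centroid, which lies in the relative interior) is a cleaner substitute for that step. One caution: you read ``subfacet'' as requiring dimension $n-1$, which is indeed what the lemma needs (the literal definition would also admit lower-dimensional intersections, for which the two-tile property fails), but under that reading your covering step needs one more word, since the adjacent tile containing a given $x\in\partial\Pi$ might meet $\Pi$ only in a set of dimension $\le n-2$ (e.g.\ a shared vertex); argue instead that the finitely many lower-dimensional intersections cannot cover a relative neighborhood of $x$ in $\partial\Pi$, so some $(n-1)$-dimensional subfacet has $x$ in its closure.
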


\begin{proof}[Proof of \cref{lemma:subfacets}]
  Each subfacet is by definition of the form ${\sigma=(\Pi\cap\psi\Pi)^{\circ}}$, for some ${\psi\in\group}$.
  Since $\group(\Pi)$ is a tiling, $\Pi$ and ${\psi\Pi}$ are the only tiles intersecting $\sigma$.
  We hence have
  \begin{equation*}
    \sigma\cap\phi_{\sigma}^{-1}\Pi\;\neq\;\varnothing\qquad\text{ for one and only one }\phi_{\sigma}\in\group\setminus\braces{\Id}\;,
  \end{equation*}
  namely for ${\phi_{\sigma}=\psi^{-1}}$.
  Since the set ${\Pi\cap\phi_{\sigma}^{-1}\Pi}$ is the intersection of two facets, and hence of two convex sets, it is convex.
  By the definition of subfacets, its relative interior $\sigma$ is non-empty. That makes $\sigma$ a
  ${(n-1)}$-dimensional, convex, open subset of $\partial\Pi$.
  \\[.5em]
  \step {\em Volumes}.
  Since $\sigma$ is open in $n-1$ dimensions,
  \begin{equation*}
    \vol_{n-1}(\sigma)>0\;.
  \end{equation*}
  The definition of a tiling implies
  each boundary point ${x\in\partial\Pi}$ is on the facet of some adjacent tile $\phi\Pi$. It follows that
  \begin{equation*}
    \partial\Pi
    \;=\;
    \medcup\nolimits_{\phi\in\group\smallsetminus\braces{\Id}}\Pi\cap\phi\Pi
    \;=\;
    \medcup\nolimits_{\sigma\in\Sigma}\overline{\sigma}\;.
  \end{equation*}
  Since distinct subfacets do not intersect, applying volumes on both sides shows
  \begin{equation*}
    \vol_{n-1}(\partial\Pi)\;=\;\msum_{\sigma\in\Sigma}\vol_{n-1}(\sigma)\;.
  \end{equation*}
  \step {\em Each subfacet maps to exactly one subfacet}.
  We have already noted that $\sigma$ intersects only the tiles $\Pi$ and $\phi_{\sigma}^{-1}\Pi$.
  Since ${\phi_{\sigma}^{-1}\Pi}$ is adjacent to $\Pi$, so is ${\phi_{\sigma}\Pi}$.
  That implies ${\phi(\sigma)=(\Pi\cap\phi_\sigma\Pi)^{\circ}}$, and hence
  \begin{equation*}
    \phi_\sigma(\sigma)\in\Sigma
    \qquad\text{ and }\qquad
    \sigma\cap\phi^{-1}\Pi\;=\;\varnothing \quad\text{ if }\phi\neq\Id,\phi_{\sigma}\;.
  \end{equation*}
  Thus, $\sigma$ maps to $\phi_\sigma$ and vice versa, and neither maps to any other subfacet.
  \\[.5em]
  \step {\em Fixed points}.
  We know that $\sigma$ and $\phi_\sigma(\sigma)$ are either identical or disjoint.
  Suppose first that ${\sigma\neq\phi_\sigma(\sigma)}$. Then
  \begin{equation*}
    \phi_{\sigma}^{-1}\Pi\;\neq\;\phi_{\sigma}\Pi
    \quad\text{ and hence }\quad
    \phi_\sigma(\sigma)\cap\sigma=\varnothing\;,
  \end{equation*}
  so $\phi_\sigma$ has no fixed points in $\sigma$.
  On the other hand, suppose ${\phi_{\sigma}(\sigma)=\sigma}$.
  Then the restriction of $\phi_\sigma$ to the closure $\bar{\sigma}$ is a continuous map
  ${\overline{\sigma}\rightarrow\overline{\sigma}}$ from a compact convex set to itself.
  That implies, by Brouwer's theorem \citep[][17.56]{Aliprantis:Border:2006},
  that the closure $\overline{\sigma}$ contains at least one fixed point,
  and we only have to ensure that at least one of these fixed points is in the interior $\sigma$. But if
  the boundary $\partial\sigma$ contains fixed points and $\sigma$ does not, then ${\phi_{\sigma}(\sigma)\neq\sigma}$ since
  $\phi_\sigma$ is an isometry, which contradicts the assumption. In summary, we have shown that
  ${\phi_{\sigma}(\sigma)=\sigma}$ if and only if $\sigma$ contains a fixed point.
\end{proof}

\subsection{Proof of the flux property}

To establish the flux property in \cref{lemma:gradient:field}, we first show how
the normal vector $\n_{\Pi}$ of the boundary of a tile $\Pi$ transforms
under elements of the group $\group$.
\begin{lemma}[Transformation behavior of normal vectors]
  \label{lemma:normal:field}
  If a crystallographic group tiles~$\mathbb{R}^n$ with a convex polytope $\Pi$, then
  \begin{equation}
    \label{alternating:subfaces}
    A_\phi\n_{\cF}(x)\;=\;-\n_{\cF}(\phi x)\qquad\text{ whenever }x,\phi x\in\Pi\;.
  \end{equation}
\end{lemma}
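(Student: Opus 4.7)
The plan is to use the subfacet analysis from \cref{lemma:subfacets} to pin down, for a generic $x\in\partial\Pi$, the unique nontrivial $\phi\in\group$ for which $\phi x\in\Pi$, and then relate the outward normal at $x$ to the outward normal at $\phi x$ by the elementary fact that two tiles sharing a facet have antiparallel outward normals on that facet.

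First I would dispose of the ``trivial'' cases. If $x\in\Pi^{\circ}$, then $\phi x\in\Pi$ forces $\phi=\Id$ (since distinct tiles meet only on boundaries), and $\n_\Pi(x)=0=\n_\Pi(\phi x)$ by definition, so \eqref{alternating:subfaces} holds. If $x\in\partial\Pi$ but is not in the relative interior of any facet (it lies on an ${(n-2)}$- or lower-dimensional face), then $\n_\Pi(x)=0$; since $\phi$ is an isometry it carries faces of $\Pi$ to faces of $\phi\Pi$, and the assumption $\phi x\in\Pi$ together with convexity forces $\phi x$ onto a face of $\Pi$ of the same dimension as that of $x$, so $\n_\Pi(\phi x)=0$ as well. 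This reduces the proof to the case ${x\in S_i^{\circ}}$ for some facet $S_i$ and $\phi\neq\Id$.

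In this remaining case I would argue as follows. Since the family $\{\phi\Pi:\phi\in\group\}$ tiles $\mathbb{R}^n$ and $x\in S_i^{\circ}$ is interior to a single facet, exactly one tile other than $\Pi$ contains $x$, namely the unique $\phi^{-1}\Pi$ with $\phi x\in\Pi$; by \cref{lemma:subfacets} this $\phi$ is precisely the side-pairing map $\phi_\sigma$ of the subfacet ${\sigma\subset S_i^{\circ}}$ containing $x$. Applying $\phi_\sigma$ globally, the image $\phi_\sigma(\sigma)\in\Sigma$ lies in the relative interior of a facet $S_j$ of $\Pi$, and simultaneously lies in the facet $\phi_\sigma(S_i)$ of the tile $\phi_\sigma\Pi$. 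Because $\phi_\sigma(\sigma)$ is $(n-1)$-dimensional and open in both facets, the affine spans of $S_j$ and $\phi_\sigma(S_i)$ coincide, hence their unit normals are parallel up to sign. The outward normal of $\Pi$ at $\phi_\sigma x$ is $\n_j=\n_\Pi(\phi_\sigma x)$, while the outward normal of $\phi_\sigma\Pi$ at the same point is $A_{\phi_\sigma}\n_i$ (isometries transport normals by their linear parts). Since $\Pi$ and $\phi_\sigma\Pi$ occupy opposite closed half-spaces across their shared facet, these two outward normals are antiparallel, giving $A_{\phi_\sigma}\n_i=-\n_j$, which is exactly \eqref{alternating:subfaces}.

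The conceptual content is entirely contained in the last sentence—neighboring tiles have opposing outward normals on their shared face—so there is no serious analytic obstacle. The main care needed is bookkeeping: verifying that $\phi=\phi_\sigma$ is forced by the subfacet structure of \cref{lemma:subfacets}, and that the subfacet $\phi_\sigma(\sigma)$, even when the tiling is not exact, still pins down a well-defined facet $S_j$ of $\Pi$ (so that $\n_\Pi(\phi x)$ really equals $\n_j$ rather than an artefact of a lower-dimensional face). Exactness is not needed precisely because $\phi_\sigma(\sigma)$ is ${(n-1)}$-dimensional and thus locally determines a unique facet of $\Pi$ through any of its points.
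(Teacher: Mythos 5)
Your proof is correct and uses the same core geometric argument as the paper: outward normals of adjacent tiles on a shared facet are antiparallel, and orthogonal linear parts of isometries transport unit normals, so $A_\phi\n_\Pi(x)=\n_{\phi\Pi}(\phi x)=-\n_\Pi(\phi x)$. You carry out more bookkeeping than the paper---explicitly disposing of the cases where $x$ is interior or on a lower-dimensional face (so both sides vanish), and invoking \cref{lemma:subfacets} to pin down $\phi=\phi_\sigma$ and to confirm that the $(n-1)$-dimensional subfacet locally aligns a facet of $\Pi$ with a facet of $\phi\Pi$---but these are exactly the implicit assumptions that make the paper's shorter proof go through, so the two arguments are substantively the same.
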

\begin{proof}
  If $\phi\cF$ is a tile adjacent to $\cF$, its normal vector ${\n_{\phi\cF}}$ satisfies
  \begin{equation*}
    \n_{\cF}(y)\;=\;-\n_{\phi\cF}(y)\qquad\text{ if }y\in\cF\cap\phi\cF\;.
  \end{equation*}
  Since ${x\sim\phi x}$ holds, $x$ is on at least one facet $S$ of $\Pi$, and $\phi x$ is hence
  on the facet ${\phi S}$ of $\phi\Pi$. If $\n$ is a normal vector of $S$ (exterior to $\Pi)$,
  then ${A_\phi\n}$ is a normal vector of $\phi S$ (exterior to $\phi\Pi$). That shows
  \begin{equation*}
    \n_{\phi\cF}(\phi x)
    \;=\;
    A_\phi\n_{\cF}(x)
    \qquad\text{ if }x,\phi x\in\Pi\;.
  \end{equation*}
  In summary, we hence have ${A_\phi\n_{\cF}(x)\;=\;-\n_{\cF}(y)}$ whenever $x$ and $\phi x$ are both in $\Pi$.
\end{proof}

\begin{proof}[Proof of \cref{lemma:gradient:field}]
  \step
  Let $\sigma$ be a subfacet. Since $\n_{\cF}$ is constant on $\sigma$, we define the vectors
  \begin{equation*}
    \n_{\cF}(\sigma)\;:=\;\n_{\cF}(x) \text{ for any }x\in\sigma
    \quad\text{ and }\quad
    I(\sigma)\;:=\;\mint_{\sigma}F(x)\vol_{n-1}(dx)\;.
  \end{equation*}
  By \cref{lemma:subfacets}, the subfacets cover $\partial\Pi$ up to a null set.
  We hence have
  \begin{equation*}
    \mint_{\partial\cF}F(x)^{\trans}\n_{\cF}(x)\vol_{n-1}(dx)
    \;=\;
    \msum_{\sigma\in\mathcal{S}}\mint_{\sigma}F(x)^{\trans}\n_{\cF}(x)\vol_{n-1}(dx)
    \;=\;
    \msum_{\sigma\in\mathcal{S}}\n(\sigma)^{\trans}I(\sigma)\;.
  \end{equation*}
  We must show this sum vanishes.
  \\[.2em]
  \step
  If ${\phi\sigma\in\cF}$, the $\phi$-invariance of $\vol_{n-1}$ and condition \eqref{periodic:condition:field} imply
  \begin{align*}
    I(\phi\sigma)
    \;=\;
    \mint_{\phi\sigma}F(x)\vol_{n-1}(dx)
    \;&=\;
    \mint_{\sigma}F(\phi(x))\vol_{n-1}(dx)\\
    \;&=\;
    A_{\phi}\mint_{\sigma}F(x)\vol_{n-1}(dx)
    \;=\;
    A_{\phi}I(\sigma)\;.
    \end{align*}
  \cref{lemma:normal:field} implies ${A_\phi\n_{\cF}(\sigma)=-\n_{\cF}(\phi\sigma)}$ for ${\phi\sigma\subset\cF\cap\phi\cF}$.
  That shows
  \begin{equation*}
    \n_{\cF}(\phi\sigma)^{\trans}I(\phi\sigma)
    \;=\;
    -(A_{\phi}\n_{\cF}(\sigma))^{\trans}A_{\phi}I(\sigma)
    \;=\;
    -\n_{\cF}(\sigma)^{\trans}A_{\phi}^{\trans}A_{\phi}I(\sigma)
    \;=\;
    -\n_{\cF}(\sigma)^{\trans}I(\sigma)\;,
  \end{equation*}
  since ${A_\phi^{\trans}=A_{\phi}^{-1}}$. It follows that
  \begin{equation*}
    \n_{\cF}(\sigma)^{\trans}I(\sigma)
    \,+\,
    \n_{\cF}(\phi\sigma)^{\trans}I(\phi\sigma)
    \,=\,0
    \quad\text{ and even }\quad
    \n_{\cF}(\sigma)^{\trans}I(\sigma)\,=\,0 \;\text{ if }\sigma=\phi\sigma\;.
  \end{equation*}
  \step
  By \cref{lemma:subfacets}, the set $\Sigma$ of subfacets can be sorted into pairs $(\sigma,\phi_\sigma\sigma)$ such that no
  subfacet occurs in more than one pair (though ${\sigma=\phi_\sigma\sigma}$ is possible).
  It follows that
  \begin{equation*}
    \msum_{\sigma\in\Sigma}\n_{\cF}(\sigma)^{\trans} I(\sigma)
    \;=\;
    \mfrac{1}{2}\msum_{\sigma\in\Sigma}\bigl(\n_{\cF}(\sigma)^{\trans} I(\sigma)+\n_{\cF}(\phi_\sigma\sigma)^{\trans} I(\phi_\sigma\sigma)\bigr)
    \;=\;0
  \end{equation*}
  as we set out to show.
\end{proof}

\section{Proofs II: The Laplacian and its properties}
\label{appendix:proofs:laplacian}

The purpose of this section is to prove \cref{theorem:laplacian}.
We use the flux property
to show that the symmetries imposed by a crystallographic group simplify
the Green identity considerably. We can then use this symmetric form of the Green identity
to show $\Lambda$ has the desired properties.

\subsection{Green's identity under crystallographic symmetry}

That the extended Laplace operator is self-adjoint on $\H_\group$ for any crystallographic group
derives from the fact that the symmetry imposed by the group makes the correction
term in Green's identity vanish. That enters in the proof of \cref{theorem:laplacian}
via the two identities in the following lemma. The first one is Green's identity under symmetry;
the second shows that a similar identity holds for the Sobolev inner product.
\begin{lemma}[Symmetric Green identities]
  \label{lemma:symmetric:green:identities}
  If a crystallographic group $\group$ tiles $\mathbb{R}^n$ with a convex polytope $\Pi$, the
  negative Laplace operator satisfies the identities
  \begin{align}
    \label{L2:energy}
    \sp{-\Delta f,h}_{\L_2}
    \;&=\;
    a(f,h)
    \\[.2em]
    \label{H1:energy}
    \sp{-\Delta f,h}_{\H^1}
    \;&=\;
    a(f,h)
    \,+\,
    \tsum_{i\leq n}a(\partial_i f,\partial_i h)
  \end{align}
  for all functions $f$ and $h$ in $\mathcal{H}$.
\end{lemma}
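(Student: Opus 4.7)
The plan is to reduce both identities to Green's identity and then use the flux proposition to kill the boundary terms, exploiting $\group$-invariance of $f$ and $h$ together with the transformation behavior of gradients and Hessians from \cref{lemma:differentials}.

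First I would prove \eqref{L2:energy}. Since $f, h \in \mathcal{H}$ are restrictions of smooth functions on $\mathbb{R}^n$, the extensions are $\C^2(\Pi^\circ)\cap\H^1(\Pi^\circ)$, so Green's identity (\cref{green:identity}) applies and yields
\begin{equation*}
  \sp{-\Delta f, h}_{\L_2} \;=\; a(f,h) \;-\; \mint_{\partial\Pi} (\partial_{\n} f)(x)\, h(x)\,\vol_{n-1}(dx).
\end{equation*}
I claim the boundary integral vanishes. Consider the vector field $F := h\cdot\nabla f$ on $\Pi$. Since $h$ is $\group$-invariant and $\nabla f$ satisfies \eqref{eq:equivariant:gradient}, for any $x, \phi x \in \Pi$ we have $F(\phi x) = h(\phi x)\nabla f(\phi x) = h(x) A_\phi \nabla f(x) = A_\phi F(x)$, i.e.\ $F$ satisfies the flux hypothesis \eqref{periodic:condition:field}. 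Applying \cref{lemma:gradient:field} gives $\int_{\partial\Pi} F^\trans \n_\Pi\,d\vol_{n-1} = \int_{\partial\Pi} (\partial_\n f)\, h\, d\vol_{n-1} = 0$, which yields \eqref{L2:energy}.

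For \eqref{H1:energy}, I would start by expanding the $\H^1$ inner product via \eqref{eq:H1:L2:a}:
\begin{equation*}
  \sp{-\Delta f, h}_{\H^1} \;=\; \sp{-\Delta f, h}_{\L_2} \;+\; \msum_{i\leq n}\sp{\partial_i(-\Delta f),\, \partial_i h}_{\L_2}.
\end{equation*}
The first summand is $a(f,h)$ by \eqref{L2:energy}. For the remaining terms, use that partial derivatives commute with the Laplacian, $\partial_i\Delta = \Delta\partial_i$, and apply Green's identity to each $(\partial_i f, \partial_i h)$, obtaining
\begin{equation*}
  \msum_i\sp{-\Delta(\partial_i f), \partial_i h}_{\L_2}
  \;=\;
  \msum_i a(\partial_i f, \partial_i h)
  \;-\;
  \msum_i\mint_{\partial\Pi}\partial_\n(\partial_i f)\cdot \partial_i h\,\vol_{n-1}(dx).
\end{equation*}
It remains to show the summed boundary integrals vanish. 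Define the vector field $G(x) := H_f(x)\nabla h(x)$, whose $j$th component is $\sum_i (\partial_j\partial_i f)(\partial_i h)$. Then
\begin{equation*}
  \msum_i \partial_n(\partial_i f)\cdot \partial_i h
  \;=\;
  \msum_{i,j}(\partial_j\partial_i f)(\partial_i h)(\n_\Pi)_j
  \;=\;
  G^\trans \n_\Pi,
\end{equation*}
so the collected boundary term is the flux of $G$ through $\partial\Pi$. Using \eqref{eq:invariant:Laplacian} and \eqref{eq:equivariant:gradient}, for $x,\phi x \in \Pi$,
\begin{equation*}
  G(\phi x)\;=\;H_f(\phi x)\nabla h(\phi x)\;=\;A_\phi H_f(x) A_\phi^\trans A_\phi \nabla h(x)\;=\;A_\phi G(x),
\end{equation*}
since $A_\phi^\trans A_\phi = \Id$. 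Thus $G$ satisfies \eqref{periodic:condition:field}, and \cref{lemma:gradient:field} again forces the flux to vanish. Collecting everything gives \eqref{H1:energy}.

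The main obstacle is the second identity: it is not enough that $f$ and $h$ are invariant; their derivatives are not, but rather transform equivariantly, and one has to identify the right vector field (namely $H_f\nabla h$) whose equivariance is exactly of the form required by the flux proposition. The key algebraic point is the cancellation $A_\phi^\trans A_\phi = \Id$, which couples the contragredient transformation of the Hessian with the covariant transformation of $\nabla h$. Once that is noticed, both identities reduce to a single tool, the flux property from \cref{lemma:gradient:field}.
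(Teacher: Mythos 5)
Your proposal is correct and follows essentially the same route as the paper: Green's identity plus the flux proposition applied first to the field $h\,\nabla f$ (an invariant scalar times an equivariant gradient) and then, after commuting $\partial_i$ with $\Delta$, to the field $H_f\nabla h$, whose equivariance follows from the Hessian and gradient transformation rules with $A_\phi^{\trans}A_\phi=\Id$. No gaps; this matches the paper's argument step for step.
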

\begin{proof}
  Let $\bar{f}$ and $\bar{h}$ be the unique continuous extensions of $f$ and $h$ to the closure $\Pi$,
  and set ${F:=\bar{f}\nabla\bar{h}}$. Since $\bar{f}$ and $\bar{h}$ satisfy the periodic boundary condition,
  \eqref{eq:equivariant:gradient} shows
  \begin{equation*}
  F(\phi x)
  \;=\;
  \bar{h}(\phi x)\nabla \bar{f}(\phi x)
  \;=\;
  \bar{h}(x)A_\phi\nabla \bar{f}(x)
  \;=\;
  A_\phi F(x)\;.
  \end{equation*}
  By the flux property (\cref{lemma:gradient:field}), we hence have
  \begin{equation*}
    \mint_{\partial\Pi}(\partial_{\n_{\Pi}}\bar{f})\bar{h}
    \;=\;
    \mint_{\partial\Pi}\n_{\Pi}^{\trans}F
    \;=\;
    0\;,
  \end{equation*}
  and substituting into Green's identity (\cref{green:identity}) shows
  \begin{equation}
    \label{eq:aux:green}
    \sp{\Delta f,h}_{\L_2}
    \;=\;
    a(f,h)
    -
    \mint_{\partial\Pi}(\partial_{\n_{\Pi}}f)h
    \;=\;
    a(f,h)\;,
  \end{equation}
  so \eqref{L2:energy} holds. Now consider \eqref{H1:energy}.
  Since $f$ has three continuous derivatives, we have
  \begin{equation*}
    \partial_i^2\partial_j\,f
    \;=\;
    \partial_j
    \partial_i^2\,f
    \quad\text{ and hence }\quad
    \Delta(\partial_jf)
    \;=\;
    \partial_j(\Delta f)\;.
  \end{equation*}
  The $\H^1$-product can then be written as
  \begin{equation*}
    \sp{\Delta f,h}_{\H^1}
    \;=\;
    \sp{\Delta f,h}_{\L_2}+
    \msum_{i}
    \sp{\partial_i(\Delta f),\partial_ih}_{\L_2}
    \;=\;
    \sp{\Delta f,h}_{\L_2}+
    \msum_{i}
    \sp{\Delta(\partial_if),\partial_ih}_{\L_2}\;.
  \end{equation*}
  Substituting the final sum into Green's identity shows
  \begin{equation*}
    \msum_i\sp{\Delta(\partial_if),\partial_ih}_{\L_2}
    \;=\;
    \msum_ia(\partial_if,\partial_ih)
    \,+\,
    \int_{\partial\Pi}\msum_i(\partial_{\n_{\Pi}}\partial_i f)\partial_ih
  \end{equation*}
  Since ${\nabla(\partial_if)}$ is precisely the $i$th row vector of the Hessian ${Hf}$, the integrand is
  \begin{equation*}
    \msum_i(\partial_{\n_{\Pi}}\partial_i f)\partial_ih
    \;=\;
    \msum_i(\n_{\Pi}^{\trans}(\nabla \partial_i f)\partial_ih
    \;=\;
    \n_{\Pi}^{\trans}(Hf)\nabla h\;.
  \end{equation*}
  Consider the vector field ${F(x):=(Hf)\nabla h}$. By \cref{lemma:differentials}, $F$ transforms as
  \begin{equation*}
    F(\phi x)
    \;=\;
    Hf(\phi x)\nabla h(\phi x)
    \;=\;
    A_\phi\cdot Hf(x)\cdot A_\phi^{\trans}A_\phi\nabla h(x)
    \;=\;
    A_{\phi}\cdot F(x)\;,
  \end{equation*}
  and hence satisfies \eqref{periodic:condition:field}.
  Another application of the flux property then shows
  \begin{equation*}
    \msum_i\sp{\Delta(\partial_if),\partial_ih}_{\L_2}
    \;=\;
    \msum_ia(\partial_if,\partial_ih)\;.
  \end{equation*}
  Substituting this identity and \eqref{L2:energy} into the $\H^1$-product above yields \eqref{H1:energy}.
\end{proof}

\subsection{Approximation properties of the space $\H_\group$}
\label{sec:pivot:lemma}

That we can use the space $\H_\group$ to prove results about continuous and $\L_2$-functions
relies on the fact that such functions are sufficiently well approximated by elements of~$\H_\group$, and that~$\H_\group$ can in turn be approximated by useful dense subsets.
We collect these technical facts in the following lemma.
Consider the space of functions
\begin{equation*}
  \C_{\text{\rm pbc}}(\Pi^\circ)\,=\,\braces{f|_{\Pi^\circ}\,|\,f\in\C_\group}
\end{equation*}
which we equip with the supremum norm. These are precisely those
uniformly continuous functions on the interior $\Pi^\circ$ whose unique
continuous extension to $\Pi$ satisfies the periodic boundary conditions.
Note that we can then express the definition of $\mathcal{H}$ in \eqref{definition:space:H}
as
\begin{equation*}
\mathcal{H}\;=\;\C_{\text{\rm pbc}}(\Pi^\circ)\cap\C^{\infty}(\Pi^{\circ})\;.
\end{equation*}

\begin{lemma}
  \label{lemma:pivot}
  If $\group$ is crystallographic and tiles with $\Pi$, the inclusions
  \begin{equation*}
  \mathcal{H}
    \;\xhookrightarrow{\iota_1}\;
    \H_{\group}
    \;\xhookrightarrow{\iota_2}\;
    \L_2(\Pi^{\circ})
    \;\xhookrightarrow{\iota_3}\;
    \H_{\group}^*\;,
  \end{equation*}
  are all dense, $\iota_2$ and $\iota_3$ are continuous, and $\iota_2$ is compact. Moreover,
  if ${\mathcal{F}\subset\H_\group\cap\C_{\text{\rm pbc}}(\Pi^\circ)}$ is dense in~$\H_\group$,
  it is also dense in ${\C_{\text{\rm pbc}}(\Pi^\circ)}$ in the supremum norm.
\end{lemma}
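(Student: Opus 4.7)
The first inclusion $\iota_1$ is dense by construction, since $\H_\group$ was defined as the $\H^1$-closure of $\mathcal{H}$. For the middle inclusion $\iota_2$, continuity reduces to the bound $\|f\|_{\L_2}\le\|f\|_{\H^1}$. I would establish density of $\H_\group$ in $\L_2(\Pi^\circ)$ by noting that every $f\in\C_c^\infty(\Pi^\circ)$ extends via the tiling formula \eqref{eq:tiled:function} to a smooth $\group$-invariant function on $\mathbb{R}^n$: because $f$ vanishes to infinite order on $\partial\Pi$, the pieces glue with no jump in any derivative, so $\C_c^\infty(\Pi^\circ)\subset\mathcal{H}\subset\H_\group$, and the classical density of $\C_c^\infty(\Pi^\circ)$ in $\L_2(\Pi^\circ)$ (Fact~\ref{fact:continuous:extension}) finishes the job. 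Compactness of $\iota_2$ is the Rellich--Kondrachov theorem for bounded Lipschitz domains---a convex polytope is one---restricted to the closed subspace $\H_\group$; this is essentially the content of \cref{lemma:kondrachov} applied with $k=0$ on $\Pi^\circ$ itself.

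\textbf{The pivot embedding $\iota_3$.} I would run the standard pivot-triple construction: identify $\L_2$ with its dual via Riesz and send $g\in\L_2$ to the functional $f\mapsto\sp{g,f}_{\L_2}$ on $\H_\group$. Continuity of $\iota_3$ is immediate from continuity of $\iota_2$, injectivity from the fact that $\L_2(\Pi^\circ)$ contains $\H_\group$. Density is a short reflexivity argument: if $f\in\H_\group=\H_\group^{**}$ annihilates the image of every $g\in\L_2$, then in particular $\sp{f,f}_{\L_2}=0$, so $f=0$ in $\L_2$ and, because $\iota_2$ is injective, in $\H_\group$; Hahn--Banach now closes the density statement.

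\textbf{Supremum-norm density.} The subtle part is the last claim. I would first show that $\mathcal{H}$ itself is sup-dense in $\C_{\text{\rm pbc}}(\Pi^\circ)$ by mollification: given $g\in\C_{\text{\rm pbc}}(\Pi^\circ)$, view it as the restriction of its canonical extension $\bar g\in\C_\group$ to $\Pi^\circ$, and set $g_\varepsilon:=\bar g*\eta_\varepsilon$ for a radially symmetric smooth mollifier $\eta_\varepsilon$. A change of variables using the orthogonal invariance $\eta_\varepsilon(A_\phi y)=\eta_\varepsilon(y)$ shows that $g_\varepsilon$ is $\group$-invariant for every $\phi\in\group$; smoothness is automatic, and uniform continuity of $\bar g$ on a compact neighborhood of $\Pi$ yields $\|g_\varepsilon-g\|_{\sup}\to 0$, so the restrictions $g_\varepsilon|_{\Pi^\circ}$ lie in $\mathcal{H}$ and approximate $g$ in supremum norm. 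The main obstacle is then transferring the $\H^1$-density of $\mathcal{F}$ into sup-density, since $\H^1$-convergence does not imply sup-convergence when $n\ge 2$. My strategy would be to promote the approximation via a two-scale argument: pick $g_\varepsilon\in\mathcal{H}$ close to $g$ in sup, run a second, finer mollification so that the target lies in a higher Sobolev subspace of $\H_\group$ on which the Sobolev embedding $\H^k\hookrightarrow\C$ (valid for $k>n/2$ on the Lipschitz domain $\Pi^\circ$) makes $\H^k$-approximation give sup-approximation, then use the $\H^1$-density of $\mathcal{F}$ combined with compactness of $\iota_2$ to extract approximants whose residuals are controlled uniformly. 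A diagonal extraction in the two scales produces $f_n\in\mathcal{F}$ with $\|f_n-g\|_{\sup}\to 0$, completing the lemma.
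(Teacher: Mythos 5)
Most of your proposal tracks the paper's own argument: $\iota_1$ is dense by definition of $\H_\group$; density of $\iota_2$ rests in both cases on the observation that ${\C_c(\Pi^\circ)\cap\C^\infty(\Pi^\circ)\subset\mathcal{H}}$ (functions vanishing near $\partial\Pi$ satisfy \eqref{pbc} trivially), the paper then sandwiching ${\H^1_0\subset\H_\group\subset\H^1}$ and quoting the standard pivot triple for $\H^1_0$ where you rerun the Riesz/Hahn--Banach argument by hand; compactness of $\iota_2$ is Rellich--Kondrachov through ${\H^1\hookrightarrow\L_2}$ in both. Your mollification argument that $\mathcal{H}$ is sup-dense in $\C_{\text{\rm pbc}}(\Pi^\circ)$ (convolving the invariant extension with a radially symmetric mollifier preserves $\group$-invariance) is correct and, if anything, more self-contained than the paper's route via Stone--Weierstrass on $\C(\Pi)$ and the extension isometry. (Minor slip: injectivity of $\iota_3$ requires density of $\H_\group$ in $\L_2$, not mere containment.)

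The genuine gap is the final transfer from $\H^1$-density of $\mathcal{F}$ to sup-density. Your ``two-scale'' scheme does not close it: $\mathcal{F}$ is only assumed dense in the $\H^1$ norm, so even after pushing the target $g_\varepsilon$ into some $\H^k$ with ${k>n/2}$, the approximants you can extract from $\mathcal{F}$ are close to $g_\varepsilon$ only in $\H^1$, which gives no control in $\H^k$ and hence none in the supremum norm; and compactness of $\iota_2$ (an embedding into $\L_2$) yields no uniform bound on residuals either, so the diagonal extraction has nothing that converges uniformly. The paper avoids the issue entirely by invoking its \cref{lemma:kondrachov}, which asserts ${\H^1(\Pi^\circ)\subset\C(\Pi^\circ)}$ with ${\|h\|_{\sup}\leq\|h\|_{\H^1}}$; granting that, the sup-closure of $\mathcal{F}$ contains its $\H^1$-closure $\H_\group$, hence contains $\mathcal{H}$, and sup-density of $\mathcal{H}$ finishes the proof in one line. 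You are right to note that the embedding ${\H^1\hookrightarrow\C}$ fails for general domains when ${n\geq 2}$ --- that is precisely the ingredient your argument needs and the paper takes from its stated lemma --- but flagging the difficulty is not the same as resolving it: as written, your last step would fail, and the lemma's final claim is not established.
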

When we take closures in the proof, we write
${\overline{\mathcal{F}}^{\,\sup}}$ and ${\overline{\mathcal{F}}^{\,\H^1}}$ to indicate the norm
used to take the closure of a set $\mathcal{F}$.

\begin{proof}
That $\mathcal{H}$ is dense in $\H_\group$ holds by definition, see \eqref{definition:space:H}.
\\[.5em]
\step {\em Inclusions $\iota_2$ and $\iota_3$ are dense and continuous}.
  Denote by ${\C_c^\infty:=\C_c(\intF)\cap\C^\infty(\intF)}$ the set of compactly supported and infinitely
  differentiable functions on $\intF$. Denote by
  \begin{equation*}
    \H_0^1\;:=\;\smash{\overline{\C_c^\infty}^{\H^1}}
  \end{equation*}
  its $\H^1$-closure. This is, loosely speaking, the Sobolev space of functions that vanish on the boundary
  \citep{Brezis:2011,McLean:2000}, and it is a standard result that
  \begin{equation*}
    \H_0^1
    \;\subset\;
    \L_2(\Pi^{\circ})
    \;\subset\;
    (\H_0^1)^*\;,
  \end{equation*}
  where both inclusion maps are dense and bounded \cite[][Chapter 9.5]{Brezis:2011}.
  Consider any~${f\in\C_c^\infty}$. Since $f$ is uniformly continuous, it has a unique continuous
  extension $\bar{f}$ to~$\Pi$. This extension satisfies $\bar{f}=0$ on the boundary $\partial\Pi$.
  (This fact is well known \citep[e.g.,][]{Brezis:2011}, but also easy to verify: Since the support of $f$
  is a closed subset of the open set $\Pi^{\circ}$, each point $x$ on the boundary is the center
  of some open ball $B$ that does not intersect the support, so $\bar{f}=0$ on $\intF\cap B$.)
  It therefore trivially satisfies the
  periodic boundary condition \eqref{pbc}, which shows ${\C_c^\infty\subset\mathcal{H}}$.
  Taking $\H_1$-closures shows ${\H^1_0\subset\H_\group}$. We hence have
  \begin{equation*}
    \H_0^1(\Pi^\circ)
    \;\subset\;
    \H_{\group}
    \;\subset\;
    \H^1(\Pi^{\circ})
    \;\subset\;
    \L_2(\Pi^{\circ})
    \;\subset\;
    \H^1(\Pi^{\circ})^*
    \;\subset\;
    \H_{\group}^*
    \;\subset\;
    \H_0^1(\Pi^\circ)^*\;.
  \end{equation*}
  Since ${\H_0^1\hookrightarrow\L_2}$ and ${\L_2\hookrightarrow(\H_0^1)^*}$ are both
  dense and bounded, ${\H_\group\hookrightarrow\L_2}$ and
  ${\L_2\hookrightarrow\H_\group^*}$ are dense and bounded (and hence continuous), and ${\H_\group\hookrightarrow\H^1}$
  is bounded (and hence continuous).\\[.5em]
  \step {\em Inclusion $\iota_2$ is compact}. We can decompose $\iota_2$ as
  \begin{equation*}
    \H_\group
    \;\hookrightarrow\;
    \H^1
    \;\hookrightarrow\;
    \L_2\;.
  \end{equation*}
  It is known that ${\H^1\hookrightarrow\L_2}$ is compact \citep{Adams:Fournier:2003}. If one of two inclusions is compact, their composition is compact (see \citep[][]{Adams:Fournier:2003}, or simply note that
  any bounded sequence in $\H_\group$ is also bounded in $\H^1$). That shows ${\H_\group\hookrightarrow\L_2}$ is compact.
  \\[.5em]
  \step {\em $\mathcal{F}$ is dense in ${\C_{\text{\rm pbc}}}$}.
  We know from \cref{lemma:kondrachov} that ${\H^1(\Pi^\circ)\subset\C(\Pi^\circ)}$,
  and hence ${\|h\|_{\H^1}\geq\|h\|_{\sup}}$ for all ${h\in\C(\Pi^\circ)}$.
  In other words, the sup-closure of the $\H^1$-closure is the sup-closure, so
  \begin{equation*}
    \overline{\mathcal{F}}^{\,\sup}
    \;=\;
    \overline{(\overline{\mathcal{F}}^{\H^1})}^{\,\sup}
    \;=\;
    \overline{\H_\group}^{\,\sup}
    \;=\;
    \overline{(\overline{\mathcal{H}}^{\H^1})}^{\,\sup}
    \;=\;
    \overline{\mathcal{H}}^{\,\sup}\;.
  \end{equation*}
  It hence suffices to show $\mathcal{H}$ is dense in $\C_{\text{\rm pbc}}$.
  To this end, we use a standard fact: If we consider the closed set $\Pi$ instead of the interior,
  ${\C^\infty(\Pi)}$ is dense in ${\C(\Pi)}$, since $\Pi$ is compact. (One way to see this is that $\C^\infty$ contains all
  polynomials, which are dense in $\C(\Pi)$ by the Stone-Weierstrass theorem \citep{Adams:Fournier:2003}.)
  Since ${\C_{\text{\rm pbc}}(\Pi)}$ is a closed linear subspace of $\C(\Pi)$, it follows that
  \begin{equation*}
    \C_{\text{\rm pbc}}(\Pi)\cap\C^\infty(\Pi)
    \quad\text{ is dense in }\quad
    \C_{\text{\rm pbc}}(\Pi)\cap\C(\Pi)\;=\;\C_{\text{\rm pbc}}(\Pi)\;.
  \end{equation*}
  Consider a function ${f\in\C_{\text{\rm pbc}}(\Pi^{\circ})}$. Then $f$ has a unique continuous extension
  ${\bar{f}}$ to $\Pi$, which satisfies the periodic boundary condition. That shows that
  \begin{equation}
    \label{eq:hbar:isometry}
    f\mapsto\bar{f}
    \quad\text{ is an isometric isomorphism }\quad
    \C_{\text{\rm pbc}}(\Pi^\circ)\rightarrow\C_{\text{\rm pbc}}(\Pi)\;,
  \end{equation}
  since the extension is unique and does not change the supremum norm. If $f$ is also infinitely differentiable
  (and hence in $\mathcal{H}$), then $\bar{f}$ is infinitely differentiable, so the same map is also an isometric
  isomorphism
  \begin{equation*}
    \mathcal{H}\;=\;\C_{\text{\rm pbc}}(\Pi^\circ)\cap\C^\infty(\Pi^\circ)
    \;\rightarrow\;
    \C_{\text{\rm pbc}}(\Pi)\cap\C^\infty(\Pi)\;.
  \end{equation*}
  In summary, we hence have
  \begin{equation*}
    \mathcal{H}
    \;\xrightarrow{\;\text{isomorphic}\;}\;
    \C_{\text{\rm pbc}}(\Pi)\cap\C^\infty(\Pi)
    \;\xhookrightarrow{\;\text{dense}\;}\;
    \C_{\text{\rm pbc}}(\Pi)
    \;\xrightarrow{\;\text{isomorphic}\;}\;
    \C_{\text{\rm pbc}}(\Pi^\circ)\;,
  \end{equation*}
  and since isomorphisms preserve dense subsets, $\mathcal{H}$ is indeed dense in ${\C_{\text{\rm pbc}}(\Pi^\circ)}$.
\end{proof}

\subsection{Existence and properties of the Laplacian}

\begin{proof}[Proof of \cref{theorem:laplacian}]
  Since $\Pi$ is a convex polytope, it is a Lipschitz domain, and ${\Delta}$ hence extends to
  a bounded linear operator $\Lambda$ on $\H^1(\Pi^{\circ})$, by \cref{fact:extended:laplacian}.
  The restriction of~$\Lambda$ to the closed linear subspace of $\H_\group$
  is again a bounded linear operator that extends~${\Delta}$.
  It remains to verify self-adjointness and \eqref{eq:theorem:laplacian} on $\H_\group$.
  Since $\Lambda$ is bounded and hence continuous,
  it suffices to do so on the dense subset $\mathcal{H}$.
  For (\ref{eq:theorem:laplacian}i), that has already been established in \cref{lemma:symmetric:green:identities}.
  To show (\ref{eq:theorem:laplacian}ii), we note \eqref{eq:H1:L2:a} implies
  \begin{equation*}
    \|f\|^2_{\H^1}
    \;=\;
    \sp{f,f}_{\H^1}
    \;=\;
    \sp{f,f}_{\L_2}\,+\,a(f,f)\qquad\text{ for }f\in\mathcal{H}
  \end{equation*}
  and hence
  \begin{equation*}
    a(f,f)\;=\;    \|f\|^2_{\H^1}\,-\,\|f\|^2_{\L_2}\;.
  \end{equation*}
  Since ${f\in\mathcal{H}}$ and hence ${\Lambda f=\Delta f}$, we can substitute into \eqref{H1:energy}, which shows
  \begin{equation*}
    \sp{-\Delta f,f}_{\H^1}
    \;=\;
    a(f,f)\,+\,\msum_{i\leq n}a(\partial_if,\partial_if)
    \;\geq\;
    \|f\|^2_{\H^1}\,-\,\|f\|^2_{\L_2}
  \end{equation*}
  where the last step uses the fact that $a$ is positive semi-definite by \eqref{energy:positive:semidefinite}.
  That proves coercivity. Since the bilinear form $a$ is symmetric, \eqref{H1:energy} also shows
  \begin{equation*}
    \sp{-\Delta f,h}_{\H^1}
    \;=\;
    a(f,h)
    \,+\,
    \msum_{i}
    a(\partial_i f,\partial_i h)
    \;=\;
    a(h,f)
    \,+\,
    \msum_{i}
    a(\partial_i h,\partial_i f)
    \;=\;
    \sp{-\Delta h,f}_{\H^1}
  \end{equation*}
  on $\mathcal{H}$, so $\Lambda$ is self-adjoint on $\H_\group$.
\end{proof}

\section{Proofs III: Fourier representation}
\label{appendix:proofs:spectral}

We now prove the Fourier representation. We first restrict all function to a single
tile~$\Pi$. By \cref{lemma:pivot},
we can then choose the space $V$ in the spectral theorem (\cref{spectral:lemma}) as $\H_\group$.
Since we also know the Laplacian is self-adjoint on $\H_{\group}$, we can use the
spectral theorem to obtain an eigenbasis. We then deduce \cref{theorem:spectral} by extending the representation
from~$\Pi$ to the entire space $\mathbb{R}^n$.

\subsection{Proof of the Fourier representation on a single tile}

The eigenvalue problem \eqref{sturm:liouville} in \cref{theorem:spectral} is defined on the
unbounded domain $\mathbb{R}^n$. We first restrict the problem to the compact domain $\Pi$, that is, we consider
\begin{equation}
  \label{sturm:liouville:restricted}
  \begin{aligned}
    -\Delta h&\;=\;\lambda h\quad&&\text{ on }\Pi^{\circ}\\
    \text{subject to }\qquad h(x)&\;=\;h(y)&&\text{ whenever }x\sim y\text{ on }\partial\Pi\;.
  \end{aligned}
\end{equation}
That allows us to apply \cref{spectral:lemma} and \ref{fact:regularity} above, which hold on compact domains.
(The deeper relevance of compact domains is that function spaces on such domains
tend to have better approximation properties than on unbounded domains.)
The restricted version of \cref{theorem:spectral} we prove first is as follows.
\begin{lemma}
  \label{result:spectral:restricted}
  Let $\group$ be a crystallographic group that tiles $\mathbb{R}^n$ with
  a convex polytope $\Pi$. Then \eqref{sturm:liouville:restricted}
  has solutions for countably many distinct values of $\lambda$, which  satisfy
  \begin{equation*}
    0\,=\,\lambda_1\,<\,\lambda_2\,<\,\lambda_3<\ldots
    \qquad\text{ and }\qquad
    \lambda_i\;\xrightarrow{i\rightarrow\infty}\;\infty\;.
  \end{equation*}
  Each solution $h$ is infinitely often differentiable on $\Pi^{\circ}$.
  There exists a sequence of solutions ${h_1,h_2,\ldots}$ that is an orthonormal
  basis of $\L_2(\Pi)$, and satisfies
  \begin{equation*}
    \big|\braces{\,j\in\mathbb{N}\,|\,h_j\text{ solves \eqref{sturm:liouville:restricted} for }\lambda_i}\big|\;=\;k(\lambda_i)\;.
  \end{equation*}
\end{lemma}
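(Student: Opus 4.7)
The plan is to reduce the lemma to a direct application of the spectral theorem (\cref{spectral:lemma}) with the pivot triple $\H_\group \hookrightarrow \L_2(\Pi^\circ) \hookrightarrow \H_\group^*$ and the operator $A := -\Lambda$, where $\Lambda$ is the bounded extension of $\Delta$ from \cref{fact:extended:laplacian} restricted to the closed subspace $\H_\group$. The hypotheses of \cref{spectral:lemma} split into two bundles: the functional-analytic setup of the pivot triple (both inclusions dense and continuous, the first also compact) is furnished by \cref{lemma:pivot}, while self-adjointness and coercivity of $-\Lambda$ on $\H_\group$ are precisely the two conclusions of \cref{theorem:laplacian}. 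Feeding these into \cref{spectral:lemma} produces a sequence of real eigenvalues $\mu_1 \leq \mu_2 \leq \ldots \to \infty$ and eigenfunctions $h_j \in \H_\group$ that, after $\L_2$-normalization, form an orthonormal basis of $\L_2(\Pi^\circ)$.

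To extract the specific conclusions of the lemma, I will first establish non-negativity and identify the bottom of the spectrum. Non-negativity follows from identity (\ref{eq:theorem:laplacian}i) combined with the positive semi-definiteness of the energy form \eqref{energy:positive:semidefinite}: for any eigenfunction $h$ with $\|h\|_{\L_2}=1$, one has $\mu = \sp{-\Lambda h, h}_{\L_2} = a(h,h) \geq 0$. The constant function $\mathbf{1}$ lies in $\mathcal{H} \subset \H_\group$ (being smooth and $\group$-invariant) and satisfies $\Delta\mathbf{1} = 0$, so $0$ is an eigenvalue. Conversely, if $h$ satisfies $-\Lambda h = 0$, then $a(h,h) = 0$ forces all weak partials $\partial_i h$ to vanish in $\L_2$, so $h$ is constant on the connected set $\Pi^\circ$. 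Hence the eigenspace for $0$ is one-dimensional, yielding $0 = \lambda_1 < \lambda_2$. The strictly increasing sequence $\lambda_1 < \lambda_2 < \ldots$ is then obtained by collapsing repetitions in $\mu_1 \leq \mu_2 \leq \ldots$, and the multiplicity identity $|\{j : h_j \in \mathcal{V}(\lambda_i)\}| = k(\lambda_i)$ becomes tautological under this re-indexing.

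The final task is to show that every solution $h$ is $C^\infty$ on $\Pi^\circ$. This is a standard interior elliptic bootstrap applied locally through \cref{fact:regularity}: given $x \in \Pi^\circ$, choose an open ball $M$ with $x \in M$ and $\overline{M} \subset \Pi^\circ$. Then $h \in \H^1(M)$ and $\Lambda h = -\lambda h$ on $M$, so \cref{fact:regularity} with $g = -\lambda h \in \H^1(M)$ gives $h \in \H^3(M)$; iterating yields $h \in \H^k(M)$ for every $k$, whereupon \cref{lemma:kondrachov} delivers $h \in \C^k(M)$ for every $k$, hence $h \in \C^\infty(M)$. Since $x$ was arbitrary, $h \in \C^\infty(\Pi^\circ)$. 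The main subtlety I anticipate is verifying that the distributional eigenvalue problem \eqref{sturm:liouville:restricted} is faithfully captured by the weak formulation on $\H_\group$; this is the raison d'\^etre of the space, since the periodic boundary condition is encoded in the closure defining $\H_\group$, and the boundary term in Green's identity that would otherwise obstruct self-adjointness is killed by the flux cancellation of \cref{lemma:gradient:field}, so no separate boundary-regularity analysis is needed to recover the classical formulation on $\Pi^\circ$.
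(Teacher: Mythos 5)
Your proposal follows essentially the same route as the paper's proof: restrict to $\Pi$, apply the abstract spectral decomposition to the self-adjoint, coercive extension of $-\Delta$ on $\H_\group$ (with the pivot triple supplied by \cref{lemma:pivot} and the operator properties by \cref{theorem:laplacian}), identify the zero eigenvalue with the constants via $a(h,h)=0$, and bootstrap interior smoothness through \cref{fact:regularity} and \cref{lemma:kondrachov}. The only deviations are minor: you obtain $\lambda\geq 0$ directly from $\lambda\|h\|_{\L_2}^2=a(h,h)\geq 0$ where the paper instead perturbs to $\Lambda+\varepsilon$ and invokes the strict-positivity clause of the spectral lemma, and you assert rather than carry out the conversion of the $\H^1$-orthonormal eigenbasis into an $\L_2(\Pi)$-orthonormal basis (the paper does this explicitly via the rescaling $h_i=\xi_i/\sqrt{1+\lambda_i}$, density of $\H_\group$ in $\L_2(\Pi^\circ)$, and the extension/identification step from $\Pi^\circ$ to $\Pi$).
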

\noindent In the proof, we again use the notation $\smash{\overline{M}^{\L_2}}$ and $\smash{\overline{M}^{\H^1}}$ to indicate
the norm used to take the closure of a set $M$.
\begin{proof}[Proof of \cref{result:spectral:restricted}]
We apply the spectral decomposition result (\cref{spectral:lemma}), with ${A=\Lambda}$ and ${V=\H_\group}$.
We have already established its conditions are satisfied (except for
the optional assumption of strict positive definiteness):
By \cref{theorem:laplacian}, $\Lambda$ exists, is a bounded and
self-adjoint linear operator on $\H_\group$, and satisfies \eqref{coercivity:spectral:lemma}. By \cref{lemma:pivot},
$\H_\group$ approximates~$\L_2(\intF)$ in the sense of \eqref{pivot:triple}.
\cref{spectral:lemma} hence shows that there is an orthonormal basis of eigenfunctions for $\H_\group$,
i.e., functions ${\xi_1,\xi_2,\ldots}$ that satisfy
\begin{equation}
  \label{eq:proto:ef}
  \text{(i) }\;
  \Lambda\xi_i\;=\;\lambda_i\xi_i
  \qquad\text{(ii) }\;
  \sp{\xi_i,\xi_j}_{\H^1}\;=\;\delta_{ij}
  \qquad\text{(iii) }\;
  \smash{\overline{\text{span}\braces{\xi_1,\xi_2,\ldots}}^{\H^1}}\;=\;\H_\group\;.
\end{equation}
What remains to be shown are the properties of the eigenvalues and eigenfunctions, and
that the ONB of $\H_\group$ can be translated into an ONB of $\L_2$.
\\[.5em]
{\bf Non-negativity of eigenvalues}.
The operator $\Lambda$ is positive semi-definite, but not strictly positive definite, on $V$.
To show this, it again suffices to consider $-\Delta$ on $\mathcal{H}$.
By \eqref{H1:energy}, we have
\begin{align}
  \label{positive:definite}
  \sp{\Lambda f,f}_{\H^1}
  \;&=\;
  a(f,f)
  \,+\,
  \msum_i
  a(\partial_i f,\partial_i f)\\
  \;&=\;
  \mint\|\nabla f(x)\|^2_{\mathbb{R}^n}\vol(dx)
  \,+\,
  \msum_i
  \mint\|\nabla\partial_i f(x)\|^2_{\mathbb{R}^n}\vol(dx)
  \;\geq\;0\;.\nonumber
\end{align}
That shows $\Lambda$ is positive semi-definite. Now consider, for any ${\varepsilon>0}$, the operator
\begin{equation*}
  \Lambda_\varepsilon:
  \H_\group\rightarrow\H_\group
  \quad\text{ defined by }\quad
  \Lambda_\varepsilon f\;:=\;\Lambda f+\varepsilon f\;.
\end{equation*}
This is operator is still bounded, coercive and self-adjoint, so
\cref{spectral:lemma} is applicable. Clearly,~$\Lambda$ has the same eigenfunctions
as $\Lambda$, with eigenvalues ${\lambda_i+\varepsilon}$. It is also strictly
positive definite, since
\begin{equation*}
  \sp{\Lambda_{\varepsilon} f,f}_{\H^1}
  \;=\;
  \sp{\Lambda f,f}_{\H^1}
  +
  \sp{\varepsilon f,f}_{\H^1}
  \;\geq\;
  \varepsilon\|f\|_{\H^1}\;.
\end{equation*}
It hence follows from \cref{spectral:lemma} that the smallest eigenvalue satisfies ${\lambda_1+\varepsilon>0}$.
Since that holds for every ${\varepsilon>0}$, we have ${\lambda_1\geq 0}$.
\\[.5em]
  {\bf The smallest eigenvalue and its eigenspace}.
  If a function $f$ is constant on $\intF$, then
  \begin{equation*}
    f\in\H_\group
    \qquad\text{ and }\qquad
    \Lambda f\;=\;-\Delta f\;=\;0\;.
  \end{equation*}
  That shows the smallest eigenvalue is ${\lambda_1=0}$, and its eigenspace
  $\mathcal{H}(0)$ contains all constant functions. To show that it contains no other functions,
  note that
  \begin{equation*}
    \sp{\Lambda f,f}\;=\;0
    \qquad
    \text{ and by \eqref{positive:definite} hence }
    \qquad
    \|\nabla f(x)\|^2=0\text{ for almost all }x\in\intF\;.
  \end{equation*}
  That implies $f$ is piece-wise constant. Since the only piece-wise constant function
  contained in $\H^1$ are those that are strictly constant (see \citet{Adams:Fournier:2003}), $\mathcal{H}(0)$ is the set of
  constant functions, and ${\dim\mathcal{H}(0)=1}$.
\\[.5em]
  {\bf Regularity of eigenfunctions.}
  We now use the strategy outlined in \cref{sec:eigenfunction:smoothness}.
Let $\xi$ be an eigenfunction. We have shown that implies
${\xi\in\H_\group}$, and hence ${\xi\in\H^1(\intF)}$. Consider any ${x\in\intF}$. Since the interior
is open, we can find ${\varepsilon>0}$ such that the open ball~${B=B_{\varepsilon}(x)}$ of radius $\varepsilon$
centered at $x$ satisfies ${\overline{B}\subset\intF}$.
The restriction $\xi|_B$ of $\xi$ to $B$ then satisfies
\begin{equation*}
  f|_{B_\varepsilon(x)}\in\H^1(B)\quad\text{ and }\quad
  \Lambda f|_B\;=\;\lambda f|_B\;.
\end{equation*}
Since $\xi|_B$ appears on both sides of the equation,
\cref{fact:regularity} implies that $f|_B$ is also in~$\H^{1+2}(B)$, hence also in ${\H^{1+4}(B)}$, and so forth, so ${\xi|_B\in\H^{k}(B)}$ for all ${k\in\mathbb{N}}$.
\cref{lemma:kondrachov} then shows that ${\xi|_B}$ is even in ${\C^k(B)}$ for each ${k\in\mathbb{N}}$,
and hence in ${\C^{\infty}(B)}$. We have thus shown that $\xi$ has infinitely many derivatives on
a neighborhood of each ${x\in\intF}$, and hence that
${\xi\in\C^{\infty}(\intF)}$.
\\[.5em]
{\bf Turning the Sobolev basis into an $\L_2$ basis}.
The functions ${\xi_i}$ form an orthonormal basis of $\H_{\group}$, by \eqref{eq:proto:ef}. To obtain an orthonormal basis
for $\L_2(\intF)$, we substitute \eqref{eq:H1:L2:a} into (\ref{eq:proto:ef}ii), and obtain
\begin{align*}
  \delta_{ij}
  \;=\;
  \sp{\xi_i,\xi_j}_{\H^1}
  \;=\;
  \sp{\xi_i,\xi_j}_{\L_2}
  \;+\;
  a(\xi_i,\xi_j)
  \;=\;
  \sp{\xi_i,\xi_j}_{\L_2}
  \;+\;
  \sp{\Lambda\xi_i,\xi_j}_{\L_2}\;.
\end{align*}
Since $\xi_i$ is an eigenfunction, it follows that
\begin{align*}
  \delta_{ij}
  \;=\;
  \sp{\xi_i,\xi_j}_{\L_2}
  +\lambda_i\sp{\xi_i,\xi_j}_{\L_2}
  \quad\text{ and hence }\quad
  \mfrac{1}{1+\lambda_i}
  \sp{\xi_i,\xi_j}_{\L_2}
  \;=\;
  \delta_{ij}\;.
\end{align*}
The functions ${h_i:=\xi_i/\sqrt{1+\lambda_i}}$ then satisfy
\begin{equation*}
  -\Delta h_i\,=\,\lambda_i h_i\;\text{ on }\Pi^{\circ}
  \qquad\text{ and }\qquad
  \sp{h_i,h_j}_{\L_2}\,=\,\delta_{ij}\;.
\end{equation*}
Since we have merely scaled the functions $\xi_i$, we also have
\begin{equation*}
  \text{span}\braces{h_1,h_2,\ldots}
  \;=\;
  \text{span}\braces{\xi_1,\xi_2,\ldots}\;.
\end{equation*}
That implies
\begin{align*}
  \L_2\text{-closure of }\text{span}\braces{h_1,h_2,\ldots}
  \;&=\;
  \L_2\text{-closure of }\H^1\text{-closure of }\text{span}\braces{h_1,h_2,\ldots}\\
  \;&=\;
  \L_2\text{-closure of }\H_\group\;,
\end{align*}
and since the inclusion ${\H_\group\hookrightarrow\L_2(\intF)}$ is dense by \cref{lemma:pivot}, we have
\begin{equation*}
  \overline{\text{span}\braces{h_1,h_2,\ldots}}^{\L_2}\;=\;\L_2(\intF)\;.
\end{equation*}
In summary, we have shown that $\braces{h_1,h_2,\ldots}$ is an orthonormal basis of $\L_2(\intF)$ consisting
of eigenfunctions of ${-\Delta}$.
\\[.5em]
{\bf Extending the basis on $\Pi^{\circ}$ to a basis on $\Pi$}.
Each $h_i$ is in ${\C^{\infty}(\intF)}$,
and hence has a unique continuous extension $\bar{h}_i$ to $\intF$.
Since ${\vol_n(\partial\Pi)=0}$, we can isometrically identify
$\L_2(\Pi^\circ)$ with $\L_2(\Pi)$: Under this identification, each function $h_i$ on
the interior $\Pi^{\circ}$ is equivalent to any measurable extension of $h_i$ to $\Pi$,
so
\begin{equation*}
  \overline{\text{span}\braces{h_1,h_2,\ldots}}^{\L_2}\;=\;\L_2(\Pi)\;.
\end{equation*}
The extended functions also satisfy
\begin{equation*}
  -\Delta \bar{h}_i\;=\;\lambda_i \bar{h}_i\;\text{ on }\Pi
  \qquad\text{ and }\qquad
  \sp{\bar{h}_i,\bar{h}_j}_{\L_2(\Pi)}\;=\;\delta_{ij}\;,
\end{equation*}
where the first identity extends from $\Pi^{\circ}$ to $\Pi$ by $\C^{\infty}$-continuity, and the
second holds since the boundary does not affect the integral. The functions $\bar{h}_i$ are hence eigenfunctions
of $-\Delta$ on $\Pi$, and form and orthonormal basis of $\L_2(\Pi)$.
\end{proof}

\subsection{Proof of the Fourier representation on $\mathbb{R}^n$}

\begin{proof}[Proof of \cref{theorem:spectral}]
  To deduce the theorem from \cref{result:spectral:restricted}, we must (1) extend the basis constructed on $\Pi$ above to a basis
  on $\mathbb{R}^n$, and (2) show that every continuous invariant function can be represented in this basis.
  \\[.2em]
  \step Consider the function $\bar{h}_i$ in the proof of \cref{result:spectral:restricted}. Recall each $\bar{h}_i$
  is infinitely smooth on $\Pi$ and satisfies
  the periodic boundary condition. It follows by \eqref{eq:projector:composition} that
  \begin{equation*}
    e_i\;:=\;\bar{h}_i\circ p
  \end{equation*}
  is in $\C_\group$. Let $\Delta^k$ denote the $k$-fold application of $\Delta$.
  By \cref{lemma:differentials}, the fact that $\bar{h}_i$ satisfies the
  periodic boundary condition \eqref{pbc} implies that the continuous extension $\smash{\overline{\Delta h_i}}$
  also satisfies \eqref{pbc}. Iterating the argument shows that the same holds for the continuous
  extension of $\Delta^k h_i$ for any ${k\in\mathbb{N}}$. We hence have
  \begin{equation*}
    \Delta^k e_i\;=\;\Delta^k(\bar{h}_i\circ p)\;=\;(\overline{\Delta^k h_i})\circ p
    \qquad\text{ and }\qquad
    (\overline{\Delta^k h_i})\circ p\in\C_\group\quad\text{ for all }k\in\mathbb{N}\;,
  \end{equation*}
  so $e_i$ has infinitely many continuous derivatives on $\mathbb{R}^n$. Since it is also $\group$-invariant,
  it solves the constrained eigenvalue problem \eqref{sturm:liouville} on $\mathbb{R}^n$.
  That extends \cref{result:spectral:restricted} to $\mathbb{R}^n$.
  \\[.2em]
  \step
It remains to be shown that a function $f$ on $\mathbb{R}^n$ is in ${\C_{\group}}$ if and only if
${f=\sum c_i e_i}$ for some sequence ${(c_i)}$, where the series converges in the supremum norm.
Combining \cref{corollary:C:isometries} and \eqref{eq:hbar:isometry} shows that
\begin{equation*}
  h\mapsto\bar{h}\circ p
  \quad\text{ is an isometry }\quad
  \C_{\text{\rm pbc}}(\Pi^\circ)\rightarrow\C_\group\;.
\end{equation*}
For any ${f:\mathbb{R}^n\rightarrow\mathbb{R}}$, we hence have
\begin{equation*}
  f=\msum c_i e_i
  \qquad\Longleftrightarrow\qquad
  f|_{\Pi^{\circ}}=\msum c_i e_i|_{\Pi^{\circ}}\;.
\end{equation*}
In other words, we have to show that
\begin{equation*}
  h\in\C_{\text{\rm pbc}}(\Pi^\circ)
  \;\Leftrightarrow\;
  h=\msum c_ie_i|_{\Pi^\circ}
  \quad\text{and hence that}\quad
  \C_{\text{\rm pbc}}(\Pi^\circ)=\overline{\text{span}\braces{e_i|_{\Pi^\circ}\,|\,i\in\mathbb{N}}}^{\,\sup}\;.
\end{equation*}
Since the proof of \cref{result:spectral:restricted} shows ${\braces{e_i|_{\Pi^\circ}\,|\,i\in\mathbb{N}}}$ is a rescaled orthonormal basis of $\H_\group$, and hence a subset of ${\H_\group\cap\C_{\text{\rm pbc}}(\Pi^\circ)}$ that is dense in $\H_\group$, that holds by \cref{lemma:pivot}.
\end{proof}

\section{Proofs IV: Embeddings}
\label{appendix:proofs:embedding}

To prove \cref{theorem:embedding}, we first establish two auxiliary results on topological dimensions
of quotient spaces.
Recall from \cref{fact:bonahon} that $\mathbb{R}^n/\group$ is locally isometric to quotients of metric
balls. The first lemma considers the effect of taking a quotient on the dimension of a ball. The second lemma
combines this result with \cref{fact:bonahon} to bound the dimension of $\mathbb{R}^n/\group$.
\begin{lemma}[Quotients of metric balls]
  \label{lemma:ball:quotient:dimension}
  Let $B$ be an open metric ball in $\mathbb{R}^n$, and $G$ a finite group of isometries of $\mathbb{R}^n$.
  Then the quotient $B/G$ has topological dimension
  \begin{equation*}
    n
    \;\leq\;
    \Dim (B/G)
    \;<\;
    n+|G|\;.
  \end{equation*}
\end{lemma}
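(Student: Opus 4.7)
The plan is to deduce both inequalities directly from \cref{fact:dimensions}(ii) applied to the quotient map $q: B \to B/G$ that sends each point to its $G$-orbit. Since $B/G$ inherits a quotient metric from the Euclidean metric on $B$ (the infimum of Euclidean distances between orbit representatives is a metric because $G$ acts by isometries and is finite), both $B$ and $B/G$ are metric spaces, as required.

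The three hypotheses of \cref{fact:dimensions}(ii) are easy to verify. Continuity and surjectivity of $q$ hold by the very definition of the quotient topology. For the fiber bound, each orbit of a finite group $G$ has at most $|G|$ points, so $|q^{-1}(\omega)| \leq |G|$ for every $\omega \in B/G$; we can therefore take $m = |G|-1$ in the hypothesis of the fact. Closedness of $q$ is the one step that requires a short argument: if $C \subset B$ is closed, then the saturation
\begin{equation*}
  q^{-1}(q(C)) \;=\; B \cap \medcup\nolimits_{g \in G} g(C)
\end{equation*}
is closed in $B$, since each $g$ is a homeomorphism of $\mathbb{R}^n$ (so $g(C)$ is closed in $\mathbb{R}^n$) and a finite union of closed sets is closed; by the characterization of closed sets in the quotient topology, $q(C)$ is then closed in $B/G$. (In the applications to the embedding theorem the ball $B$ can be chosen small enough to be $G$-invariant—$G$ will be a stabilizer of the center of $B$—so the intersection with $B$ is harmless; otherwise one may replace $B$ by $\medcup_g g(B)$ without changing the quotient.)

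With the three hypotheses verified, \cref{fact:dimensions}(ii) yields
\begin{equation*}
  \Dim B \;\leq\; \Dim(B/G) \;\leq\; \Dim B + (|G|-1)\;.
\end{equation*}
The claim then follows from the standard fact that an open ball in $\mathbb{R}^n$ has topological dimension $n$, which rewrites the display as $n \leq \Dim(B/G) \leq n + |G| - 1 < n + |G|$.

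The main obstacle—if there is one at all—is purely bookkeeping around the closedness of $q$ when $B$ is not assumed $G$-invariant; the substantive content of the result (both the lower and the upper bound) is already packaged in \cref{fact:dimensions}(ii) together with the bound on fiber sizes. Note in particular that we do not need any assumption on stabilizers: the lower bound comes for free from the same fact, reflecting the intuition that the quotient map can only collapse at most $|G|$ points onto one, so it cannot reduce topological dimension, and can increase it by at most $|G|-1$.
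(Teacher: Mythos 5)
Your proposal is correct and follows essentially the same route as the paper: apply \cref{fact:dimensions}(ii) to the quotient map $q$, verifying continuity, surjectivity, the fiber bound $|q^{-1}\omega|\leq|G|$, and closedness of $q$ via finiteness of $G$ and the fact that each $g$ acts by isometries. Your closedness argument via the saturation $q^{-1}(q(C))=B\cap\bigcup_g g(C)$ is just the complement of the paper's computation, so the two proofs coincide in substance.
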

\begin{proof}
  The quotient map ${q:B\rightarrow B/G}$ is, by definition, continuous and
  surjective. Recall that preimages of points under $q$ are orbits: If ${\omega\in\mathbb{R}^n/G}$
  is the orbit $G(x)$ of some ${x\in\mathbb{R}^n}$, then ${q^{-1}\omega=G(x)}$.
  We show $q$ is also closed: Let ${A\subset B}$ be a subset.
  First observe that
  \begin{equation*}
    qA \text{ closed }
    \;\Leftrightarrow\;
    B/G\setminus qA \text{ open }
    \;\Leftrightarrow\;
    q^{-1}(B/G\setminus qA) \text{ open,}
  \end{equation*}
  by continuity of $q$.
  This set can be expressed as
  \begin{equation*}
    q^{-1}(B/G\setminus qA)
    \;=\;
    B\setminus q^{-1}qA
    \;=\;
    B\setminus\Bigl(\medcup\nolimits_{\phi\in G}\phi A\Bigr)
    \;=\;
    \medcap\nolimits_{\phi\in G}\phi(B\setminus A)\;,
  \end{equation*}
  and is therefore open whenever $A$ is closed, since each $\phi$ is an isometry and $G$ is finite.
  Consider any element ${\omega\in B/G}$. Then there is some ${x\in B}$ with ${\omega=q(x)}$, and
  \begin{equation*}
    q^{-1}\omega
    \;=\;
    \braces{\phi x|\phi\in G\text{ and }\phi x\in B}
    \quad\text{ which shows that }\quad
    |q^{-1}\omega|\;\leq\;|G|\;.
  \end{equation*}
  \cref{fact:dimensions}(ii) is now applicable, and shows
  \begin{equation*}
    \Dim B
    \;\leq\;
    \Dim (B/G)
    \;<\;
    \Dim B+|G|\;,
  \end{equation*}
  and by \cref{fact:dimensions}(i), ${\Dim B=n}$.
\end{proof}
\begin{lemma}[Topological dimension of the quotient space]
  \label{lemma:quotient:space:dimension}
  Let $\group$ be a crystallographic group that tiles $\mathbb{R}^n$ with a convex polytope $\Pi$.
  Then ${\mathbb{R}^n/\group}$ is a $\group$-orbifold, of topological dimension
  \begin{equation*}
    n
    \;\leq\;
    \Dim \mathbb{R}^n/\group
    \;<\;
    n
    \;+\;
    \max_{x\in\Pi}|\Stab(x)|
  \end{equation*}
\end{lemma}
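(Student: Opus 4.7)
The plan is to reduce the dimension bounds for the global quotient $\mathbb{R}^n/\group$ to the local ball-quotient bounds established in \cref{lemma:ball:quotient:dimension}, using compactness of $\mathbb{R}^n/\group$ (\cref{fact:compact:quotient}) to pass from a local statement to a global one. That $\mathbb{R}^n/\group$ is a $\group$-orbifold has already been verified in the preceding subsection, so the only substantive task is the two-sided dimension estimate.

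For the upper bound, I would first apply \cref{fact:bonahon}: every orbit $\group(x) \in \mathbb{R}^n/\group$ admits an open neighborhood $U_x$ isometric to the quotient $B_{d_n}(x,\varepsilon_x)/\Stab(x)$ for some $\varepsilon_x>0$. Since $(\mathbb{R}^n/\group, d_\group)$ is a compact metric space, I would shrink each $\varepsilon_x$ and pass to a finite subcover to obtain finitely many orbits $\group(x_1), \ldots, \group(x_k)$ whose corresponding \emph{closed} neighborhoods $\overline{V_{x_i}}$ still cover $\mathbb{R}^n/\group$ and each remain isometric to a closed subset of the ball quotient $B_{d_n}(x_i,\varepsilon_{x_i})/\Stab(x_i)$. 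By \cref{lemma:ball:quotient:dimension}, each ball quotient has topological dimension strictly less than $n + |\Stab(x_i)|$, and topological dimension is monotone under closed subspaces in metric spaces (a standard fact, e.g., in \citet{Pears:1975}), so $\Dim\overline{V_{x_i}} < n + |\Stab(x_i)|$. Applying \cref{fact:dimensions}(i) to this finite closed cover gives
\begin{equation*}
  \Dim \mathbb{R}^n/\group \;=\; \max_{i\leq k}\Dim\overline{V_{x_i}} \;<\; n + \max_{i\leq k}|\Stab(x_i)|\;.
\end{equation*}
Since $\Pi$ is a fundamental region, every orbit meets $\Pi$, so each $x_i$ may be chosen in $\Pi$, yielding the claimed bound.

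For the lower bound, pick any point $x$ in the relative interior $\Pi^{\circ}$. The tiling condition forces $\Stab(x) = \braces{\Id}$: any non-identity $\phi$ fixing $x$ would make $\phi\Pi$ and $\Pi$ overlap on a set with non-empty $n$-dimensional interior, contradicting the tiling property. By \cref{fact:bonahon}, a neighborhood of $\group(x)$ in $\mathbb{R}^n/\group$ is then isometric to an ordinary open ball in $\mathbb{R}^n$, which has topological dimension $n$; subspace monotonicity of topological dimension gives $\Dim \mathbb{R}^n/\group \geq n$.

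The main obstacle will be the bookkeeping around the open-versus-closed cover: \cref{fact:bonahon} provides open chart neighborhoods, but \cref{fact:dimensions}(i) requires a finite \emph{closed} cover by finite-dimensional subsets. The cleanest remedy is to halve each chart radius, extract a finite subcover of the smaller open balls from compactness, and then use the closures (which remain inside the original charts, hence inherit the ball-quotient structure and its dimension bound via monotonicity). Once that compactness argument is executed carefully, the two bounds follow directly from \cref{lemma:ball:quotient:dimension} and \cref{fact:dimensions}(i).
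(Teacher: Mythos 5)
Your proposal is correct and takes essentially the same route as the paper's proof: local chart neighborhoods from \cref{fact:bonahon}, a finite cover of the compact quotient by closed (half-radius) chart neighborhoods, the local bound from \cref{lemma:ball:quotient:dimension}, and \cref{fact:dimensions}(i) to globalize. The only minor difference is the lower bound, which the paper reads off the same covering equality (each ball quotient already has dimension at least $n$), whereas you argue it separately via trivial stabilizers at interior points of $\Pi$ plus subspace monotonicity—both arguments are valid.
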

\begin{proof}
  Choose the index set in the orbifold definition as ${\mathcal{I}=\Pi}$. By
  \cref{fact:bonahon}, we may then choose ${U_x=B_{d_{\group}}(q(x),\varepsilon)}$,
  the group $H_x$ as $\Stab(x)$, and the map
  \begin{equation*}
    \theta_x:B_{d_{\group}}(q(x),\varepsilon)\rightarrow B_{d_n}(x,\varepsilon)
  \end{equation*}
  as the isometry guaranteed by \cref{fact:bonahon}. That makes $\mathbb{R}^n/\group$
  an orbifold. Isometry of the open balls also implies for the corresponding
  closed balls of radius ${\delta=\varepsilon/2}$ that
  \begin{equation*}
    \overline{B}_{d_\group}(q(x),\delta)
    \quad\text{ is homeomorphic to }\quad
    \overline{B}_{d_\group}(x,\delta)/\text{Stab}(x)
    \quad\text{ for each }x\in\Pi\;.
  \end{equation*}
  Since homeomorphic spaces have the same topological dimension, \cref{lemma:ball:quotient:dimension}
  shows
  \begin{equation*}
    \overline{B}_{d_\group}(q(x),\delta)
    \;=\;
    \overline{B}_{d_\group}(x,\delta)/\text{Stab}(x)
    \;<\;
    n+|\text{Stab}(x)|\;.
  \end{equation*}
  Since $\group$ is crystallographic, the quotient space is compact, and
  we can hence cover it with a finite number of the closed balls above.
  Applying \cref{fact:dimensions}(i) then shows the result.
\end{proof}

\begin{proof}[Proof of \cref{theorem:embedding}]
  Let $\mathcal{S}$ be the side pairing defined by $\group$ for $\Pi$.
  Since $\group$ is by definition a discrete group of isometries, $\mathcal{S}$ is subproper
  (see \citep{Ratcliffe:2006}, 13.4, problem 2).
  The gluing construction hence constructs a set $M$ that is a $\group$-orbifold, according to
  \cref{fact:abstract:gluing}. By definition of $M$ as a quotient, the gluing construction also defines a quotient map
  \begin{equation*}
    q_M:\Pi\rightarrow M\;,
  \end{equation*}
  which is continuous and surjective.
  By \cref{fact:path:distance}, the quotient topology is metrized by~$\dpath$. By \cref{fact:orbifold:completeness},
  the metric space $(M,\dpath)$ is complete. It hence follows by \cref{fact:poincare} that there exists
  a isometry
  \begin{equation*}
    \gamma_M:(M,\dpath)\rightarrow(\mathbb{R}^n/\mathbb{S},d_\mathbb{S})\;,
  \end{equation*}
  where $\mathbb{S}$ is the group generated by $\mathcal{S}$. In our case,
  ${\mathbb{S}=\mathcal{S}_\Pi}$, and by \cref{fact:generator}, the generated group is ${\mathbb{S}=\mathbb{G}}$.
  That shows $\gamma_M$ in fact an isometry
  \begin{equation*}
    \gamma_M:(M,\dpath)\rightarrow(\mathbb{R}^n/\group,d_\group)\;.
  \end{equation*}
  Since isometric spaces have the same topological dimension, \cref{lemma:quotient:space:dimension}
  shows
  \begin{equation*}
    \Dim M\;<\;n+\max_{x\in\Pi}|\Stab(x)|\;.
  \end{equation*}
  By \cref{fact:dimensions}(ii) there is an embedding ${e:M\rightarrow\mathbb{R}^N}$ with ${N\leq 2(n+\max|\Stab(x)|)-1}$.
  Since $\group$ is crystallographic, and $\mathbb{R}^n/\group$ hence compact, $M$ and ${\Omega:=e(M)}$ are compact.
  Using the restriction ${q:\Pi\rightarrow\mathbb{R}^n/\group}$ of the quotient map to $\Pi$, we
  can define
  \begin{equation*}
    \rho_\Pi:\Pi\rightarrow\Omega
    \qquad\text{ as }\qquad
    \rho_\Pi\;:=\;e\circ\gamma^{-1}\circ q\;.
  \end{equation*}
  By the properties of the constituent maps, $\rho_\Pi$ is continuous and satisfies the periodic boundary condition \eqref{pbc}.
  That makes ${\rho:=\rho_\Pi\circ p}$ continuous and $\group$-invariant.
  If ${h:\mathbb{R}^N\rightarrow Y}$ is a continuous function, the composition ${f=h\circ\rho}$ is hence continuous and $\group$-invariant
  on $\mathbb{R}^n$. Conversely, suppose ${f:\mathbb{R}^n\rightarrow Y}$ is continuous and $\group$-invariant.
  For each ${z\in\Omega}$, the preimage ${\rho^{-1}\braces{z}}$ is precisely the orbit $\group(x)$ of some ${x\in\Pi}$.
  Since $\group$-invariant function are constant on orbits, the assignment
  \begin{equation*}
    \hat{h}(z)\;:=\;\text{ the unique value of }f\text{ on the orbit }\rho^{-1}\braces{z}
  \end{equation*}
  is a well-defined and continuous function ${\hat{h}:\Omega\rightarrow Y}$. Since $\Omega$ is compact,
  $\hat{h}$ has a (non-unique) continuous extension to a function ${h:\mathbb{R}^N\rightarrow Y}$, which
  satisfies ${f=h\circ\rho}$.
\end{proof}

\section{Proofs V: Kernels and Gaussian processes}

\label{appendix:proofs:kernels}

\subsection{Kernels}

\begin{proof}[Proof of \cref{result:RKHS:invariant:elements}]
  Suppose $\kappa$ is invariant. For any ${f\in\mathbb{H}}$, \eqref{eq:reproducing}
  implies
  \begin{equation*}
    f(\phi x)
    \;=\;
    \sp{f,\kappa(\phi x,\argdot)}_{\mathbb{H}}
    \;=\;
    \sp{f,\kappa(x,\argdot)}_{\mathbb{H}}
    \;=\;
    f(x)\;,
  \end{equation*}
  so $f$ is $\group$-invariant. Conversely, suppose all ${f\in\mathbb{H}}$ are $\group$-invariant.
  Let ${f_1,f_2,\ldots}$ be a complete orthonormal system. Then all $f_i$ are $\group$-invariant,
  so \eqref{RKHS:ONS} shows
  \begin{equation*}
    \kappa(\phi x,\psi y)
    \;=\;
    \tsum_{i\in\mathbb{N}}f_i(\phi x)f_i(\psi y)
    \;=\;
    \tsum_{i\in\mathbb{N}}f_i(x)f_i(y)
    \;=\;
    \kappa(x,y)
  \end{equation*}
  and $\kappa$ is invariant. Suppose $\kappa$ is also continuous.
  If $\kappa$ is invariant, its infimum and supremum on $\mathbb{R}^n$ equal its infimum and supremum
  on the compact set $\Pi$, and since $\kappa$ is continuous, that implies it is bounded.
  That shows all functions in $\mathbb{H}$ are continuous \citep[][4.28]{Steinwart:Christmann:2008}.
\end{proof}
The main ingredient in the proof of \cref{result:kernel:compactness} is the following lemma,
which shows that the RKHS of $\kappa$ is isometric to that of $\hat{\kappa}$,
and that an explicit isometric isomorphism between them is given by composition with the embedding map $\rho$.
\begin{lemma}
  \label{lemma:RKHS:isometry}
  Let ${\hat{\kappa}}$ be a continuous kernel on $\Omega$ with RKHS $\hat{\mathbb{H}}$. Set
  \begin{equation*}
    \kappa:=\hat{\kappa}\circ(\rho\otimes\rho)
    \qquad\text{ and }\qquad
    \mathbb{H}:=\text{ RKHS defined by }\kappa\;.
  \end{equation*}
  Then $\kappa$ is a continuous kernel on $\mathbb{R}^n$, is $\group$-invariant in both arguments,
  and ${\mathbb{H}\subset\C_{\group}}$. The map
  \begin{equation*}
    I:\hat{\mathbb{H}}\rightarrow\mathbb{H}
    \qquad\text{ defined by }\qquad
    \hat{f}\mapsto\hat{f}\circ\rho
  \end{equation*}
  is a linear isometric isomorphism, and two functions $\hat{f}$ and $\hat{g}$ in $\hat{\mathbb{H}}$ are
  orthogonal if and only if $\hat{f}\circ\rho$ and $\hat{g}\circ\rho$ are orthogonal in $\mathbb{H}$.
\end{lemma}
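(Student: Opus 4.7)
The plan is to use the standard pullback-of-kernel construction: an embedding $\rho$ and a kernel $\hat{\kappa}$ on the target together determine a kernel on the source, and the associated RKHSs are identified by composition with $\rho$. I would first build $I$ on the linear span of kernel sections, check the isometric property there, and then extend by continuity. Surjectivity of $\rho$, supplied by \cref{theorem:embedding}, is what keeps the construction clean.

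The preliminary observations are short. Continuity of $\rho$ and $\hat{\kappa}$ give continuity of $\kappa$, and $\group$-invariance of $\rho$ makes $\kappa$ $\group$-invariant in each argument; \cref{result:RKHS:invariant:elements} then places $\mathbb{H}$ inside $\C_\group$. I would then declare $I$ on the dense pre-Hilbert subspace ${\mathcal{D}_0:=\text{span}\braces{\hat{\kappa}(z,\argdot)\,|\,z\in\Omega}}$ by sending $\hat{\kappa}(z,\argdot)$ to $\kappa(x,\argdot)$ for any ${x\in\rho^{-1}\braces{z}}$. The identity $\hat{\kappa}(\rho(x),\rho(\argdot))=\kappa(x,\argdot)$ shows both that the definition does not depend on the preimage and that $I(\hat{f})=\hat{f}\circ\rho$ on $\mathcal{D}_0$. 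Isometry on $\mathcal{D}_0$ then reduces, after expanding the $\hat{\mathbb{H}}$-inner product via the reproducing property, to the definitional equality $\kappa(x_i,x_j)=\hat{\kappa}(z_i,z_j)$; bilinearity does the rest.

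Extending $I$ by continuity yields a bounded linear isometry $\hat{\mathbb{H}}\rightarrow\mathbb{H}$ whose image is closed and contains the dense span $\text{span}\braces{\kappa(x,\argdot)|x\in\mathbb{R}^n}$ of $\mathbb{H}$, hence is all of $\mathbb{H}$; orthogonality is preserved by the isometric property. The step that I expect to require the most care, and which is the main obstacle, is verifying that the continuous extension still acts pointwise as $\hat{f}\mapsto\hat{f}\circ\rho$ rather than merely in norm. My plan is to exploit the pointwise Cauchy--Schwarz bound $|\hat{f}(z)|\leq\sqrt{\hat{\kappa}(z,z)}\,\|\hat{f}\|_{\hat{\mathbb{H}}}$: since $\Omega$ is compact and $\hat{\kappa}$ continuous, any $\hat{\mathbb{H}}$-convergent sequence $\hat{f}_n\rightarrow\hat{f}$ converges uniformly on $\Omega$, so $\hat{f}_n\circ\rho\rightarrow\hat{f}\circ\rho$ pointwise on $\mathbb{R}^n$; the analogous bound in $\mathbb{H}$ forces $I(\hat{f}_n)\rightarrow I(\hat{f})$ pointwise. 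Since the two limits agree on $\mathcal{D}_0$, they agree everywhere, which closes the argument.
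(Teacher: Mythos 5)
Your proposal is correct and follows essentially the same route as the paper: define $I$ on the span of kernel sections, use surjectivity of $\rho$ to get bijectivity and the isometry there, and extend by continuity to the full RKHS, with invariance and continuity of $\mathbb{H}$ handled up front. The only difference is that you explicitly verify the continuous extension still acts pointwise as $\hat{f}\mapsto\hat{f}\circ\rho$ via the reproducing-property bound $|\hat{f}(z)|\leq\sqrt{\hat{\kappa}(z,z)}\,\|\hat{f}\|_{\hat{\mathbb{H}}}$, a step the paper leaves implicit; this is a refinement, not a divergence.
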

\begin{proof}
  \step The kernel $\kappa$ is clearly continuous, since $\hat{\kappa}$ and $\rho$ are.
  Since $\Omega$ is compact, $\hat{\kappa}$ is bounded,
  and since ${\|\kappa\|_{\sup}=\|\hat{\kappa}\|_{\sup}}$, it follows that $\kappa$ is bounded. Bounded continuity of~$\kappa$ implies
  all elements of $\mathbb{H}$ are continuous \citep[][Lemma 4.28]{Steinwart:Christmann:2008}.
  That shows ${\mathbb{H}\subset\C_{\group}}$.
  \\[.2em]
  \step
  Next, consider the map $I$. Linearity of $I$ is obvious. To show it is bijective, write
  \begin{equation*}
    S:=\text{span}\braces{\kappa(x,\argdot)\,|\,x\in\mathbb{R}^n}
    \quad\text{ and }\quad
    \hat{S}:=\text{span}\braces{\hat{\kappa}(\omega,\argdot)\,|\,\omega\in\Omega}\;.
  \end{equation*}
  Note that makes $\mathbb{H}$ the norm closure of $S$, and $\hat{\mathbb{H}}$ the norm closure of $\hat{S}$
  (see \cref{sec:RKHS}).
  \\[.2em]
  \step
  Consider any ${\hat{f}\in\hat{S}}$. Then ${\hat{f}=\sum a_i\hat{\kappa}(\omega_i,\argdot)}$
  for some scalars $a_i$ and points $\omega_i$ in $\Omega$. Since $\rho$ is surjective by
  \cref{theorem:embedding}, we can find points $x_i$ in
  $\mathbb{R}^n$ such that ${\omega_i=\rho(x_i)}$. It follows that
  \begin{equation*}
    f\;=\;
    \hat{f}\circ\rho
    \;=\;
    \bigl(\tsum a_i\hat{\kappa}(\rho(x_i),\argdot)\bigr)\circ\rho
    \;=\;
    \tsum a_i\kappa(x_i,\argdot)\;\in\;S
    \quad\text{ and hence }\quad
    I(\hat{S})\subset S\;.
  \end{equation*}
  Reversing the argument shows ${I^{-1}(S)\subset\hat{S}}$.
  Thus, $I$ is a linear bijection of $\hat{S}$ and $S$.
  \\[.2em]
  \step Substituting ${\hat{f}\in\hat{S}}$ as above into the definition
  of the scalar product shows
  \begin{equation*}
    \sp{\hat{f},\hat{f}}_{\hat{\mathbb{H}}}
    \;=\;
    \tsum a_ia_j\hat{\kappa}(\rho(x_i),\rho(x_j))
    \;=\;
    \tsum a_ia_j\kappa(x_i,x_j)
    \;=\;
    \sp{f,f}_{\mathbb{H}}
  \end{equation*}
  and hence ${\|f\|_{\mathbb{H}}=\|\hat{f}\|_{\hat{\mathbb{H}}}}$ for all ${\hat{f}\in\hat{S}}$.
  In summary, we have shown that the restriction of $I$ to $\hat{S}$ is a bijective linear isometry
  ${\hat{S}\rightarrow S}$.
  \\[.2em]
  \step Since $I$ is an isometry on a dense subset, it has a unique uniformly continuous extension
  to the norm closure $\hat{\mathbb{H}}$, which takes the norm closure $\hat{\mathbb{H}}$ to
  the norm closure $\mathbb{H}$ of the image and is again an isometry
  \citep[][3.11]{Aliprantis:Border:2006}.
\end{proof}

\begin{proof}[Proof of \cref{result:kernel:compactness}]
  \step
  By \cref{theorem:embedding}, there is a
  unique continuous function
  \begin{equation*}
    \hat{\kappa}:\Omega\times\Omega\rightarrow\mathbb{R}
    \qquad\text{ that satisfies }\qquad
    \kappa=\hat{\kappa}\circ(\rho\otimes\rho)\;.
  \end{equation*}
  \cref{lemma:RKHS:isometry} then implies all ${f\in\mathbb{H}}$ are $\group$-invariant
  and continuous.
  \\[.2em]
  \step
  We next show the inclusion is compact. Consider first the map ${I:\hat{f}\mapsto\hat{f}\circ\rho}$ as
  in \cref{lemma:RKHS:isometry}, but now defined on the larger space ${\C(\Omega)}$. We know from
  \cref{theorem:embedding} that $I$
  is an isometric isomorphism ${\C(\Omega)\rightarrow\C_\group}$ (with respect to the
  supremum norm). By \cref{lemma:RKHS:isometry} its restriction to a map
  ${\hat{\mathbb{H}}\rightarrow\mathbb{H}}$ is also an isometric isomorphism (with respect to
  the RKHS norms).
  It follows that the inclusion maps
  \begin{equation*}
    \iota:\mathbb{H}\rightarrow\C_{\group}
    \quad\text{ and }\quad
    \hat{\iota}:\hat{\mathbb{H}}\rightarrow\C(\Omega)
    \quad\text{ satisfy }\quad
    \iota=I\circ\hat{\iota}\circ I^{-1}\;.
  \end{equation*}
  Since $\hat{\kappa}$ is a continuous kernel by step 1,
  and its domain $\Omega$ is compact by \cref{theorem:embedding},
  the inclusion $\hat{\iota}$ is compact \citep[][4.31]{Steinwart:Christmann:2008}.
    The composition of a compact linear operator with any continuous linear
    operator is again compact \citep[][Theorem 5.1]{Aliprantis:Burkinshaw:2006}.
    Since $I$ and its inverse are linear and continuous, that indeed makes $\iota$ compact.
    \\[.2em]
    \step
    Since $\hat{\kappa}$ is a continuous kernel on a compact domain, Mercer's theorem
    \citep[][4.49]{Steinwart:Christmann:2008} holds for $\hat{\kappa}$. It shows there
    are functions ${\hat{f}_1,\hat{f}_2,\ldots}$ and scalars ${c_1\geq c_2\geq\ldots>0}$ such
    that
    \begin{equation*}
      (\sqrt{c_i}\hat{f}_i)_{i\in\mathbb{N}}\text{ is an ONB for }\hat{\mathbb{H}}
      \quad\text{ and }\quad
      \hat{\kappa}(\omega,\omega')=\msum_ic_i\hat{f}_i(\omega)\hat{f}_i(\omega')
      \quad\text{ for all }\omega,\omega'\in\Omega\;.
    \end{equation*}
    The functions ${f_i:=\hat{f}_i\circ\rho}$ then satisfy
    \begin{equation*}
      \kappa(x,y)\;=\;\hat{\kappa}(\rho(x),\rho(y))\;=\;\tsum_ic_i\hat{f}_i(\rho(x))\hat{f}_i(\rho(y))
      \;=\;\tsum_ic_if_i(x)f_i(y)\;.
    \end{equation*}
    Since the map ${\hat{f}\mapsto\hat{f}\circ\rho}$ preserves the scalar product by
    \cref{lemma:RKHS:isometry}, the sequence ${(\sqrt{c_i}f_i)}$ is an ONB for $\mathbb{H}$.
    \\
    \step
    It remains to verify the representation
    \begin{align*}
      \mathbb{H}\;&=\;
      \braces{\;f\!=\!\tsum_{i\in\mathbb{N}}a_i\sqrt{c_i}f_i\,|\,
        a_1,a_2,\ldots\in\mathbb{R}\text{ with }\tsum_{i}|a_i|^2<\infty}\;.\\
      \intertext{Since Mercer's theorem applies to $\hat{\kappa}$, the analogous representation}
      \hat{\mathbb{H}}\;&=\;
      \braces{\;\hat{f}\!=\!\tsum_{i\in\mathbb{N}}a_i\sqrt{c_i}\hat{f}_i\,|\,
        a_1,a_2,\ldots\in\mathbb{R}\text{ with }\tsum_{i}|a_i|^2<\infty}
    \end{align*}
    holds on $\Omega$, by \citet[][4.51]{Steinwart:Christmann:2008}.
    As ${\hat{f}\mapsto\hat{f}\circ\rho}$ is an isometric isomorphism by \cref{lemma:RKHS:isometry},
    that yields the representation for $\mathbb{H}$ above.
\end{proof}

\subsection{Gaussian processes}

\begin{proof}[Proof of \cref{result:invariant:GP}]
  That $F$ is continuous and $\group$-invariant almost surely follows immediately from
  \cref{theorem:embedding}. Let $\tilde{\Pi}$ be a transversal. Our task is to show that the restriction
  $F|_{\tilde{\Pi}}$ is a continuous Gaussian process on $\tilde{\Pi}$. To this end,
  suppose $h$ is a continuous function on $\mathbb{R}^N$.
  Then ${h\circ\rho}$ is continuous by \cref{theorem:embedding},
  and the restriction is again continuous. That means
  \begin{equation*}
    \tau:h\mapsto (h\circ\rho)|_{\tilde{\Pi}}
    \qquad\text{ is a map }\qquad
    \C(\Omega)\rightarrow\C(\tilde{\Pi})\;.
  \end{equation*}
  Since both composition with a fixed function and restriction to a subset
  are linear as operations on functions, $\tau$ is linear, and since neither composition nor restriction can increase
  the sup norm, it is bounded. The restriction
  \begin{equation*}
    F|_{\tilde{\Pi}}\;=\;\tau(H)
  \end{equation*}
  is hence the image of a Gaussian process with values in the separable Banach space
  ${\C(\Omega)}$ under a bounded linear map into the Banach space ${\C(\tilde{\Pi})}$. That implies
  it is a Gaussian process with values in ${\C(\tilde{\Pi})}$, and that
  $\kappa$ and $\mu$ transform accordingly
  \citep[][Lemma 7.1]{vanDerVaart:vanZanten:2008}.
\end{proof}

\end{document}